\pdfoutput=1
\documentclass[11pt]{article}
\setcounter{secnumdepth}{3}
\usepackage[letterpaper,margin=1in]{geometry}

\usepackage{lmodern}

\parindent=18pt

% for footnote
\usepackage{lipsum}

\usepackage{caption}
\usepackage{comment}

\usepackage[utf8]{inputenc} % allow utf-8 input
\usepackage[T1]{fontenc}    % use 8-bit T1 fonts

\usepackage{url}            % simple URL typesetting
\usepackage{booktabs}       % professional-quality tables
\usepackage{nicefrac}       % compact symbols for 1/2, etc.
\usepackage{microtype}      % microtypography
\usepackage{enumitem}
\usepackage{dsfont}
\usepackage{graphicx}
\usepackage{multicol}
\usepackage{stmaryrd,eucal,verbatim,mathtools,xcolor,tikz}
\usetikzlibrary{arrows.meta, automata, positioning}
\usepackage{amsthm,amsfonts,amsmath,amssymb,amscd,epsfig,color,float,graphicx,verbatim, enumitem, subfigure}
\usepackage{wasysym}
\usepackage{algorithm,algorithmicx}
\usepackage[ruled, vlined, algo2e]{algorithm2e}
\usepackage[noend]{algpseudocode}

\usepackage{mathtools}
\usepackage{bbm}

\usepackage{thmtools,thm-restate}
\usepackage{turnstile}

\usepackage{changepage}

\usepackage[colorlinks,citecolor=blue,linkcolor=magenta,bookmarks=true,hypertexnames=false]{hyperref}
\usepackage[nameinlink]{cleveref}

\crefname{equation}{equation}{equations}
\crefname{lemma}{lemma}{lemmata}
\crefname{claim}{claim}{claims}
\crefname{theorem}{theorem}{theorems}
\crefname{proposition}{proposition}{propositions}
\crefname{corollary}{corollary}{corollaries}
\crefname{claim}{claim}{claims}
\crefname{remark}{remark}{remarks}
\crefname{definition}{definition}{definitions}
\crefname{fact}{fact}{facts}
\crefname{question}{question}{questions}
\crefname{condition}{condition}{conditions}
\crefname{algorithm}{algorithm}{algorithms}
\crefname{assumption}{assumption}{assumptions}
\crefname{problem}{problem}{problems}

\newtheorem{theorem}{Theorem}[section]
\newtheorem{lemma}[theorem]{Lemma}
\newtheorem{proposition}[theorem]{Proposition}
\newtheorem{corollary}[theorem]{Corollary}
\newtheorem{claim}[theorem]{Claim}

\newtheorem{definition}[theorem]{Definition}
\newtheorem{observation}{Observation}[section]
\newtheorem{example}{Example}
\newtheorem{fact}[theorem]{Fact}

\theoremstyle{definition}

\newtheorem{remark}[theorem]{Remark}
\newtheorem{problem}[theorem]{Problem}

\newcommand{\cons}{\text{\cons}}

\newcommand{\eps}{\varepsilon}
\newcommand{\huber}{\text{huber}}

\newcommand{\poly}{\mathrm{poly}}
\newcommand{\polylog}{\mathrm{polylog}}

\def\D{\mathcal D}

\def\R{\mathbb R}

\def\N{\mathbb N}

\def\Z{\mathbb Z}

\newcommand{\citep}{\cite}
\newcommand{\citet}{\cite}

% mathcal letters

\newcommand{\cB}{\mathcal{B}}
\newcommand{\cC}{\mathcal{C}}
\newcommand{\cD}{\mathcal{D}}

\newcommand{\cN}{\mathcal{N}}

\newcommand{\cS}{\mathcal{S}}

\newcommand*{\pE}{\widetilde{E}}
% bold letters

% various bracket-like commands
% round parentheses

\newcommand{\Paren}[1]{\left(#1\right)}

% square brackets

% absolute value
\newcommand{\abs}[1]{\left\vert#1\right\vert}

% cardinality

% set
% \newcommand{\set}[1]{\{#1\}}

% norm
\newcommand{\norm}[1]{\lVert#1\rVert}
\newcommand{\Norm}[1]{\left\lVert#1\right\rVert}

% 2-norm
\newcommand{\normt}[1]{\norm{#1}_2}

% 2-norm squared

%\newcommand{\Snormt}[1]{\Norm{#1}^2_2}

% norm squared

% 1-norm
\newcommand{\normo}[1]{\norm{#1}_1}

% infty-norm

\newcommand{\linfty}[1]{\left\vert{#1}\right\vert_\infty}
%max-norm

% inner product
\newcommand{\iprod}[1]{\langle#1\rangle}

\newcommand{\hide}[1]{}
\DeclareMathOperator{\supp}{supp}

\DeclareMathOperator*{\E}{E}
% \DeclareMathOperator*{\cov}{\mathbf{Cov}}

%Thanasis macros
%\newcommand\snorm[2]{\left\| #2 \right\|_{#1}}

%\DeclarePairedDelimiter\abs{\lvert}{\rvert}

\DeclarePairedDelimiter\ceil{\lceil}{\rceil}
\DeclarePairedDelimiter\floor{\lfloor}{\rfloor}

%v as SoS variable
\newcommand{\vsos}{\mathrm{v}}

%SoS degree and True and Planted Distributions
\newcommand{\dsos}{D}
\newcommand{\Dref}{\D_{ref}}
\newcommand{\Dpl}{\D_{pl}}

\newcommand{\totalsize}{\mathrm{total}}

%Global constants

%
\newcommand{\cu}{C_U}
\newcommand{\cl}{C_L}

%Isolated vertices
\newcommand{\Iso}{I_{s}}

%Set of intersection configurations
\newcommand{\intset}{\mathcal{P}_{\gamma_j,\ldots,\gamma_1,\tau,{\gamma'}^{\top}_1,\ldots,{\gamma'}^{\top}_j}}

\newcommand\intsetoflength[1]{\mathcal P_{\mathrm{length}={#1}}}

%Misc

\newcommand{\im}{\text{Im}}

\renewcommand{\circle}{\mathrm{\Circle}}
\newcommand{\circled}[1]{\textup{\textcircled{$#1$}}}
\makeatletter
\newcommand{\squared}[1]{\text{\fboxsep=.2em\fbox{\m@th$\displaystyle#1$}}}
\makeatother

\newcommand{\Id}{\mathrm{Id}}

\newcommand{\pl}{\mathrm{pl}}

\newcommand{\diag}{\mathrm{diag}}

\newcommand{\Sum}{\sum\limits}
\newcommand{\Prod}{\prod\limits}

%Separators

\newcommand{\Sminwt}{S_{min}}

%Modulo error
\newcommand{\almost}[1]{\stackrel{#1}{\approx}}

%Intersection configuration

%Subsubsection numbering 
% \def\thesubsubsection{\Alph{subsubsection}.}

\newcommand\restr[2]{{% we make the whole thing an ordinary symbol
  \left.\kern-\nulldelimiterspace % automatically resize the bar with \right
  #1 % the function
  \littletaller % pretend it's a little taller at normal size
  \right|_{#2} % this is the delimiter
  }}
  \newcommand{\littletaller}{\mathchoice{\vphantom{\big|}}{}{}{}}

%Simple Spiders related
\renewcommand{\ss}{\text{SS}}
\renewcommand{\SS}[1]{{#1}_{\ss}}
\newcommand{\ssd}{\text{SSD}}
\newcommand{\SSD}[1]{{#1}_{\ssd}}

%Canonical product
\newcommand{\can}{\mathrm{can}}

%truncation parameter
\newcommand{\truncation}{D_{trunc}}

%intersection
\newcommand{\extraintersect}{\mathrm{ext}}

%multisets
\newcommand{\mset}[1]{\{\!\!\{#1\}\!\!\}}

%parity
\newcommand{\parity}{\mathrm{parity}}

%Intersection analysis

%\newcommand{\result}{\widetilde\tau}
%\newcommand{\resultP}{\widetilde{\tau}(\mathcal P)}
\newcommand{\resultP}{{\tau_{\mathcal{P}}}}
\newcommand{\Anorm}{{A_{\mathrm{norm}}}}
\newcommand{\slack}{\mathrm{slack}}
\newcommand{\dside}{D_{\mathrm{side}}}
\newcommand{\wellbehaved}{{\mathrm{well-behaved}}}
\newcommand{\reduced}{{\mathrm{\hspace{1pt} red}}}
\newcommand{\defaultslack}{\mathrm{slack}^{\mathrm{default}}}
\newcommand{\extraslack}{\mathrm{slack}^{\mathrm{extra}}}
\newcommand{\assigned}{{\mathrm{assigned}}}

%Representation part
\newcommand{\SALD}{\mathrm{SA}_{\dsos}}
\newcommand{\SALDprod}{\star}
\newcommand{\reppre}[3]{\rhopre\left(#1,#2;#3\right)}

\newcommand{\sinv}{\star\text{-inv}}
\newcommand{\rhopre}{\rho_{\mathrm{pre}}}

%Recursive factorization

\newcommand{\target}{\mathrm{target}}
\newcommand{\hn}{h^{\text{normal}}}

%Well-behaved product
\newcommand{\wbp}{*_{\mathrm{wb}}}

%(Ilias) Comments
\def\colorful{1}
\ifnum\colorful=1

\else

\fi

%Universal constant
\newcommand{\Cuniv}{{C_{\mathrm{univ}}}}

%Heavy-side weight of a simple spider
\newcommand{\hw}{{\mathrm{hw}}}

%Command for hiding contents in the conference version.

%By setting this value to 1, "content A" will be changed to "content B" in the following: \expandafter\ifstrequal\expandafter{\conferenceversion}{0}{content A}{content B}

\allowdisplaybreaks

\begin{document}

\title{Sum-of-Squares Lower Bounds \\ 
for Non-Gaussian Component Analysis}

\date{}

\author{
Ilias Diakonikolas\thanks{University of Wisconsin-Madison. Email: ilias@cs.wisc.edu. Supported by NSF Media Award \# 2107547.} \hspace{0.5cm}  
Sushrut Karmalkar\thanks{University of Wisconsin-Madison. Email: s.sushrut@gmail.com. Supported by NSF under Grant \#2127309 to the Computing Research Association for the CIFellows 2021 Project.} \hspace{0.5cm} 
Shuo Pang\thanks{University of Copenhagen. Email: shpa@di.ku.dk. Funded by the European Union MSCA Postdoctoral Fellowships 2023 project 101146273 NoShortProof. Views expressed are the authors' and do not reflect the European Union or the Research Executive Agency.} \hspace{0.5cm} 
Aaron Potechin\thanks{University of Chicago. Email: potechin@uchicago.edu.  Supported by NSF grant CCF-2008920.}
}
\maketitle
	
\begin{abstract}%
Non-Gaussian Component Analysis (NGCA) is the statistical task of finding a non-Gaussian direction in a high-dimensional dataset. Specifically, given i.i.d.\ samples from a distribution $P^A_{v}$ on $\mathbb{R}^n$ that behaves like a known distribution $A$ in a hidden direction $v$ and like a standard Gaussian in the orthogonal complement, the goal is to 
approximate the hidden direction. The standard formulation posits that 
the first $k-1$ moments of $A$ match those of the standard Gaussian 
and the $k$-th moment differs. Under mild assumptions, this problem has sample complexity $O(n)$. On the other hand, all known efficient algorithms require $\Omega(n^{k/2})$ samples. Prior work developed sharp Statistical Query and low-degree testing lower bounds suggesting an information-computation tradeoff for this problem. 

Here we study the complexity of NGCA in the Sum-of-Squares (SoS) framework. Our main contribution is the first super-constant degree SoS lower bound for NGCA. Specifically, we show that if the non-Gaussian distribution $A$ matches the first $(k-1)$ moments of $\cN(0, 1)$ and satisfies other mild conditions, then with fewer than $n^{(1 - \eps)k/2}$ many samples from the normal distribution, with high probability, degree $(\log n)^{{1\over 2}-o_n(1)}$ SoS fails to refute the existence of such a direction $v$. Our result significantly strengthens prior work by establishing a super-polynomial information-computation tradeoff against a broader family of algorithms. As corollaries, we obtain 
SoS lower bounds for several problems in robust statistics and the learning of mixture models.

Our SoS lower bound proof introduces a novel technique, that we believe may be of broader interest, and a number of refinements over existing methods. As in previous work, we use the framework of [Barak et al. FOCS 2016], where we express the moment matrix $M$ as a sum of graph matrices, find a factorization $M\approx LQL^T$ using minimum vertex separators, and show that with high probability $Q$ is positive semidefinite (PSD) while the errors are small. Our technical innovations are as follows. First, instead of the minimum weight separator used in prior work, we crucially make use of the minimum square separator. Second, proving that $Q$ is PSD poses significant challenges due to an intrinsic reason. In all prior work, the major part of $Q$ was always a constant term, meaning a matrix whose entries are constant functions of the input. Here, however, even after removing a small error term, $Q$ remains a nontrivial linear combination of non-constant, equally dominating terms. We develop an algebraic method to address this difficulty, which may have wider applications. Specifically, we model the multiplications between the ``important'' graph matrices by an $\mathbb R$-algebra, construct a representation of this algebra, and use it to analyze $Q$. Via this approach, we show that the PSDness of $Q$ boils down to the multiplicative identities of Hermite polynomials.
\end{abstract}

\thispagestyle{empty}
\newpage

\thispagestyle{empty}
\tableofcontents
\thispagestyle{empty}
\newpage

\setcounter{page}{1}

	% !TeX root = arxiv.tex

\section{Introduction}

Non-Gaussian Component Analysis (NGCA) is a statistical estimation task first considered in the signal processing literature~\cite{JMLR:blanchard06a} and subsequently extensively studied (see, e.g.,~Chapter 8 of~\cite{DK23-book} and references therein). As the name suggests, the objective of this task is to find a non-Gaussian direction  (or, more generally, low-dimensional subspace) in a high-dimensional dataset. Since its introduction, the NGCA problem has been studied in a range of works from an algorithmic standpoint; see~\cite{theis2011uniqueness, sugiyama2008approximating,diederichs2010sparse, diederichs2013sparse,sasaki2016non,virta2016projection,TanV18,GoyalS19,dudeja2024statistical,cao2023contrastive}. Here we explore this problem from a hardness perspective with a focus on Sum-of-Squares algorithms. 

The standard formulation of NGCA is the following. Fix a univariate distribution $A$. For a unit vector direction $v$, let $P_v^{A}$ be the distribution on $\R^n$ defined as follows: The projection of $P_v^{A}$ in the $v$-direction is equal to $A$, and its projection in the orthogonal complement is the standard Gaussian distribution. Observe that $P_v^{A}$ is a product distribution with respect to a non-standard coordinate system. It is further assumed that, for some parameter $k$, the first $k-1$ moments of the univariate distribution $A$ match those of the standard Gaussian $\cN(0, 1)$ and the $k$-th moment differs by a non-trivial amount. Given i.i.d.\ samples from a distribution $P_v^{A}$, for an unknown $v$, the goal is to estimate the hidden direction $v$. It is known that, under mild assumptions on the distribution $A$, this problem has sample complexity $O(n)$. Unfortunately, all known methods to achieve this sample upper bound run in time exponential in $n$, by essentially using brute-force over a cover of the unit sphere to identify the hidden direction. On the other hand, if we have $\gg n^{k/2}$ samples, a simple spectral algorithm (on the $k$-th moment tensor) solves the problem in sample-polynomial time (see e.g., \cite{dudeja2024statistical}). A natural question is whether more sample-efficient polynomial-time algorithms exist or if the observed gap is inherent---i.e., the problem exhibits a statistical-computational tradeoff. As our main result, we show (roughly speaking) that the gap is inherent for SoS algorithms of degree $o(\sqrt{\frac{\log n}{\log\log n}})$.

In addition to being interesting on its own merits, further concrete motivation to understand the hardness of NGCA comes from its applications to various well-studied learning problems. Specifically, the NGCA problem captures interesting (hard) instances of several statistical estimation problems that superficially appear very different. The idea is simple: Let $\Pi$ be a statistical estimation task. It suffices to find a univariate distribution $A_{\Pi}$ such that for any direction $v$ the high-dimensional distribution $P_v^{A_{\Pi}}$ is a {\em valid instance} of problem $\Pi$. Solving $\Pi$ then requires solving NGCA on these instances. We provide two illustrative examples below.

\paragraph{Example 1: Robust Mean Estimation}
Consider the following task, known as (outlier-)robust mean estimation: Given i.i.d.\ samples from a distribution $D$ on $\R^n$ such that $\mathrm{d}_{\mathrm{TV}}(D, \cN(\mu, I)) \leq \eps$ for some small $\eps>0$, the goal is to approximate the mean vector $\mu$ in $\ell_2$-norm. Suppose that $A$ is an $\eps$-corrupted one-dimensional Gaussian distribution in total variation distance, namely a distribution that satisfies $\mathrm{d}_{\mathrm{TV}}(A, G) \leq \eps$ where $G \sim \cN(\delta, 1)$ for some $\delta \in \R$. For any unit vector $v$, the distribution $P^A_v$ is an $\eps$-corrupted Gaussian on $\R^n$, i.e., $\mathrm{d}_{\mathrm{TV}}(P^A_v, \cN(\delta v, \Id)) \leq \eps$. It is then easy to see that the NGCA task on this family of $P^A_v$'s is an instance of robust mean estimation. Namely, approximating $v$ is equivalent to approximating the target mean vector (once we know $v$, the one-dimensional problem of estimating $\delta$ robustly is easy).

\paragraph{Example 2: Learning Mixtures of Gaussians}
A $k$-mixture of Gaussians (GMM) on $\R^n$ is a convex combination of Gaussians, i.e., a distribution of the form $F(x) = \sum_{i=1}^k w_i \cN(\mu_i, \Sigma_i)$ where $\sum_{i=1}^k w_i = 1$. The prototypical learning problem for GMMs is the following: Given i.i.d.\ samples from an unknown GMM, the goal is to learn the underlying distribution in total variation distance (or, more ambitiously, approximate its parameters). Suppose that $A$ is a $k$-mixture of {\em one-dimensional} Gaussians $\sum_{i=1}^k w_i \cN(t_i, \delta^2)$ and the $t_i$ are chosen to be sufficiently far apart such that each pair of components has small overlap. For any unit vector $v$, the distribution $P^A_v$ is a mixture of $k$ Gaussians on $\R^n$ of the form $\sum_{i=1}^k w_i \cN(t_i v, \Id - (1-\delta^2) vv^T)$. For small $\delta$, this can be thought of as $k$ ``parallel pancakes'', in which the means lie in the direction $v$. 

All $n - 1$ orthogonal directions to $v$ will have an eigenvalue of $1$, which is much larger than the smallest eigenvalue in this direction (which is $\delta$). In other words, for each unit vector $v$, the $k$-GMM $P^A_{v}$ will consist of $k$ ``skinny'' Gaussians whose mean vectors all lie in the direction of $v$. Once again,  the NGCA task on this family of $P^A_v$'s is an instance of learning GMMs: once the direction $v$ is identified, the corresponding problem collapses to the problem of learning a one-dimensional mixture, which is easy to solve. 

By leveraging the aforementioned connection, hardness of NGCA can be used to obtain similar hardness for a number of well-studied learning problems that superficially appear very different. These include learning mixture models~\cite{diakonikolas2017statistical, DiakonikolasKPZ23, DKS23}, robust mean/covariance estimation~\cite{diakonikolas2017statistical}, robust linear regression~\cite{DKS19}, learning halfspaces and other natural concepts with adversarial or semi-random label noise~\cite{DKZ20, GoelGK20, DK20-Massart-hard, DiakonikolasKPZ21, DKKTZ21-benign, tiegel2024improved}, list-decodable mean estimation and linear regression~\cite{DKS18-list, DKPPS21}, learning simple neural networks~\cite{DiakonikolasKKZ20, GoelGJKK20}, and even learning simple generative models~\cite{Chen0L22}. Concretely, to achieve this it suffices to find a distribution $A$ {\em of the required form} that matches as many moments with the standard Gaussian as possible.

\paragraph{Hypothesis testing version of NGCA}

Since we are focusing on establishing hardness, we will consider the natural hypothesis testing version of NGCA, noticing that the learning version of the problem typically reduces to the testing problem. Specifically, our goal is to distinguish between a standard multivariate Gaussian and the product distribution that is equal to a pre-specified univariate distribution $A$ in a hidden direction $v$ and is the standard Gaussian in the orthogonal complement.

\begin{problem}[Non-Gaussian Component Analysis, Testing Version]\label{def:ngca_distinguishing}

Let $A$ be a one-dimensional distribution that matches the 1st to the $(k-1)$ moments with $\cN(0, 1)$. Given $m$ i.i.d. samples $ \{ x_1, \dots, x_m  \} \subseteq \R^n$ drawn from one of the following two distributions, the goal is to determine which one generated them.
\begin{enumerate}
    \item (Reference distribution, $\Dref$) The true $n$-dimensional multivariate normal distribution $\cN(0, \Id_n)$.
    \item (Planted distribution, $\Dpl$) Choose $v \in \{ \pm 1/\sqrt{n}\}^n$ uniformly at random (called the hidden/planted direction) and draw $x'\sim\cN(0, \Id_n)$, then we take $x = x' - \langle x', v\rangle v + {a}v$, where $a\sim A$, to be the result. In other words, the distribution is $\cN(0,\Id_{n-1})_{{ v}^\perp}\times A_{ v}$ where $ v$ is chosen uniformly at random from $\{\pm 1/\sqrt{n}\}^n$.
\end{enumerate}
\end{problem}

We use Boolean planted directions $v \in \{ \pm 1/\sqrt{n}\}^n$ in Definition \ref{def:ngca_distinguishing} for technical convenience (cf. the calculation in \Cref{lem:calib}). Lower bounds in this setting imply, in particular, that NGCA is hard w.r.t. an adversarial distribution of $v$.
\paragraph{Prior Evidence of Hardness}

Prior work~\cite{diakonikolas2017statistical} established hardness of NGCA in a restricted computational model, known as the Statistical Query (SQ) model (see also \cite{diakonikolas2023sq} for a recent refinement). SQ algorithms are a class of algorithms that are allowed to query expectations of bounded functions on the underlying distribution through an SQ oracle rather than directly access samples. The model was introduced by Kearns~\cite{Kearns:98} as a natural restriction of the PAC model~\cite{Valiant:84} in the context of learning Boolean functions. Since then, the SQ model has been extensively studied in a range of settings, including unsupervised learning~\cite{FeldmanGRVX17}.

\cite{diakonikolas2017statistical} gave an SQ lower bound for NGCA 
under the moment-matching assumption and an additional regularity 
assumption about the distribution $A$ (which was removed in~\cite{diakonikolas2023sq}). Intuitively, the desired 
hardness result amounts to the following statistical-computational trade-off: Suppose that $A$ matches its first $k-1$ moments with the standard Gaussian. Then any SQ algorithm for the hypothesis testing version of NGCA requires either $n^{\Omega(k)}$ samples, where $n$ is the ambient dimension, or super-polynomial time in $n$.

Lower bounds on NGCA have also been shown for the low-degree testing framework where we use a low-degree polynomial to distinguish\footnote{More precisely, we want to find a low-degree polynomial which has large expected value under the planted distribution but mean $0$ and variance $1$ under the reference distribution} between the reference and planted distributions. Ghosh et al. \cite{ghosh2020sum} implicitly showed a low-degree testing lower bound for NGCA\footnote{As noted in their Remark 5.9, Attempt 1 for the proof of Lemma 5.7 there shows that w.h.p. $\tilde{E}[1] = 1 \pm{o(1)}$, which is equivalent to a low-degree polynomial lower bound} and Mao and Wein \cite{mao2022optimal} directly showed a low-degree testing lower bound for an essentially equivalent problem. Such a lower bound can also be deduced from the result of Brennan et al. \cite{brennan2020statistical} that under certain conditions (which are satisfied by NGCA), SQ algorithms are essentially equivalent to low-degree polynomial tests.

While the SQ and low-degree testing models are quite broad, they do not in general capture the class of algorithms obtained via convex relaxations. With this motivation, in this work we focus on establishing lower bounds for NGCA for the Sum-of-Squares hierarchy (SoS). We remark that SoS lower bounds that are proved via pseudo-calibration subsume low-degree testing lower bounds~\cite{hopkins2018statistical} so since we prove our SoS lower bound via pseudo-calibration, our SoS lower bound subsumes both low-degree testing lower bounds and SQ lower bounds (via the connection of \cite{brennan2020statistical}).

\paragraph{Informal Main Result}
A necessary condition for this testing problem to be computationally hard is that the univariate distribution $A$ matches its low-degree moments with the standard Gaussian. At a high-level, our main contribution is to prove that this condition is also {\em sufficient} in the SoS framework (subject to mild additional conditions). Here, we state an informal version of our main theorem, \Cref{thm:main-formal}.

\begin{theorem}[Main Theorem, Informal]\label{thm:main-informal}
Given $n$, suppose $2\leq k\leq (\log n)^{O(1)}$ and $A$ is a distribution on $\R$ such that: 
\begin{enumerate}
    \item  (Moment matching) The first $k-1$ moments of $A$ match those of $\cN(0, 1)$. 
    \item  (Moment bounds) $|\E_A[h_t(x)]| \!\leq\! (\log n)^{O(t)}$ for all Hermite polynomials of degree up to $2k (\log n)^2$, and $\frac{\E_A [p^2(x)]}{\E_{x\sim\cN(0,1)}[p^2(x)]} \!\geq\! (\log n)^{-O\big(\deg(p)\big)}$ for all nonzero polynomial $p(x)$ of degree up to $\sqrt{\log n}$.
\end{enumerate}
If $n$ is sufficiently large, then with high probability, given fewer than $n^{(1-\eps)k/2}$ many samples, Sum-of-Squares of degree $o(\sqrt{\frac{\eps\log n}{ \log\log n}})$ fails to distinguish between the random and planted distributions for the corresponding NGCA \Cref{def:ngca_distinguishing}.\footnote{As is usual for SoS lower bounds for average-case problems, technically what we show is that if we apply pseudo-calibration, the moment matrix is PSD with high probability.}
\end{theorem}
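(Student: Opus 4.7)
The plan is to follow the pseudo-calibration framework of Barak et al.\ [FOCS 2016], as used in subsequent SoS lower bounds (planted clique, tensor PCA, Sherrington--Kirkpatrick, etc.). First I would define the candidate pseudo-expectation $\pE$ by projecting the planted density $\d\Dpl/\d\Dref$ onto low-degree polynomials in the sample vectors $x_1,\dots,x_m$; truncated at some degree $\dsos = o\!\big(\sqrt{\log n/\log\log n}\big)$. The moment-matching hypothesis forces every Fourier/Hermite coefficient of this projection to vanish unless the Hermite label along the hidden direction $v$ has weight at least $k$ per sample used, which is exactly the source of the $n^{k/2}$ sample threshold via the calibration identity $\pE[1] = 1\pm o(1)$ (cf.\ \Cref{lem:calib}). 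This reduces the theorem to proving that the associated moment matrix $M$ is PSD with high probability under $\Dref$.

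Next I would expand $M$ in the basis of \emph{graph matrices} $M_\alpha$ indexed by shapes $\alpha$ (bipartite-boundary multigraphs whose edges are labelled by Hermite polynomials of coordinates of $v$ and by sample indices), collecting $M = \sum_\alpha \lambda_\alpha M_\alpha$ with coefficients $\lambda_\alpha$ read off from the pseudo-calibrated Fourier expansion. To exhibit $M \succeq 0$ I would follow the Barak et al.\ paradigm and cut each shape along a canonical vertex separator to obtain a factorization $M \approx L\,Q\,L^T$. Here the first innovation is to use the \emph{minimum square separator} $\Sminsq$ in place of the usual minimum-weight separator: because the Hermite edge labels have variable degree, weight alone does not capture the dominant contributions, whereas the square separator aligns with the $\ell_2$ structure of Hermite inner products and splits each shape cleanly into a ``left piece'', a ``middle piece'', and the transpose of a left piece. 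Shapes for which this canonical factorization is lossy would be absorbed into an error term $E$ and bounded in spectral norm by the standard graph-matrix trace-power estimates (Ahn--Medarametla--Potechin style), using the moment bound $|\E_A h_t| \leq (\log n)^{O(t)}$ to control the resulting combinatorial sums.

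The main obstacle, and where the novel algebraic technique is indispensable, is proving $Q \succeq 0$. In earlier applications of this framework, after error removal $Q$ collapses to a scalar multiple of the identity, so PSDness is trivial. In the NGCA setting I instead expect $Q$ to be a genuine nontrivial linear combination $\sum_\beta c_\beta M_\beta$ of several non-constant middle shapes of comparable magnitude, with the $c_\beta$ given by products of Hermite moments $\E_A h_{t}$. My plan is to model the subalgebra generated by the relevant middle shapes under the graph-matrix product by an abstract associative $\R$-algebra $\cA$, whose structure constants are precisely the shape-composition coefficients. I would then construct a finite-dimensional representation $\pi\colon \cA \to \mathrm{End}(V)$ in which each basis shape $\beta$ acts by an operator built from evaluations of the corresponding Hermite polynomial against $A$, and write $Q = \pi(x)\pi(x)^T$ for an explicit symmetric element $x\in \cA$ extracted from the pseudo-calibrated coefficients. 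The crucial fact that makes both the representation well-defined and the Gram-factorization work is the multiplicative Hermite identity $h_a h_b = \sum_c g_{a,b,c}\, h_c$ with its classical nonnegative linearization constants: these constants are forced to match the structure constants of $\cA$, so Hermite algebra is what drives PSDness.

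Finally I would combine the pieces: the Gram-form argument yields $Q \succeq 0$ deterministically, while the error term $E$ is shown to satisfy $\|E\|_{\op} \leq o(1)\cdot \lambda_{\min}(LQL^T)$ with high probability by graph-matrix norm bounds, using the lower bound $\E_A[p^2]/\E_{\cN(0,1)}[p^2] \geq (\log n)^{-O(\deg p)}$ from hypothesis~(2) to keep $Q$ quantitatively non-degenerate, and using the upper bound in hypothesis~(2) for the norm estimates. A union bound over the $n^{O(\dsos)}$ relevant shapes, combined with $\dsos = o\!\big(\sqrt{\log n/\log\log n}\big)$, gives $M \succeq 0$ w.h.p.\ over $\Dref$, which by the Barak et al.\ reduction establishes the claimed SoS refutation lower bound. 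The combinatorial steps (choosing $\Sminsq$, bounding shapes) are a refinement of existing machinery; the genuinely hard and new step is the construction of $\cA$ and its Hermite representation, which is where essentially all the technical innovation is spent.
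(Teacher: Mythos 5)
Your proposal captures the overall architecture of the paper's proof: pseudo-calibration, graph-matrix expansion, factorization $M \approx LQL^\top$ via minimum square separators, and then an algebraic argument for the PSDness of $Q$ that ultimately rests on the multiplicative identities of Hermite polynomials. That much is essentially the paper's route.

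However, there is a genuine gap in the middle. You describe modeling ``the subalgebra generated by the relevant middle shapes'' by an abstract algebra $\cA$ and then Gram-factorizing $Q$ inside a representation of $\cA$. In the paper, that algebra (called $\SALD$) is the algebra of \emph{simple spiders} -- shapes with a single circle vertex -- and its representation theory (the concrete Artin--Wedderburn decomposition of $\SALD$ into matrix algebras) is indeed how one proves that the simple-spider part $\SS{Q}$ is a sum of squares inside $\SALD$. But the dominant terms of $Q$ are \emph{disjoint unions} of simple spiders (SSDs), not single simple spiders, and the algebra $\SALD$ does not contain SSDs. Your sketch does not say how to pass from PSDness of the simple-spider part to PSDness of the full dominant part $\SSD{Q}$. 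This step is not cosmetic: the paper has to introduce a separate combinatorial operation, the $\dsos$-combination $[\cdot]^{\dsos}$, prove that it commutes with well-behaved products (\Cref{lem:disj}), and then use it to build an explicit approximate square root $[\alpha_0]^{\dsos}$ of $\SSD{Q}$ whose smallest singular value is controlled (\Cref{lem:gammanorm}). Without some version of this construction, knowing $\SS{Q}$ is PSD in $\SALD$ would not give positive-definiteness of $\SSD{Q}$, and the overall argument would not close. Relatedly, your Gram form ``$Q = \pi(x)\pi(x)^\top$'' has a dimension mismatch -- $\pi(x)$ lives in a $\dsos\!\times\!\dsos$ matrix algebra while $Q$ is $n^{\Theta(\dsos)}\!\times\! n^{\Theta(\dsos)}$ -- and the correct statement is that one pulls a Cholesky factorization back through the isomorphism $\rho$ and then lifts it to SSDs via $\dsos$-combination. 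A smaller point: the error analysis is not ``standard'' trace-power norm bounds; because the dominant family of shapes is nontrivial, the paper needs a dedicated charging/slack argument (\Cref{sec:error_analysis}) to establish the dichotomy that any configuration is either well-behaved with normalized norm exactly $1$, or contributes $n^{-\Omega(\eps)}$. Your outline does not indicate how you would recover that dichotomy, which is what justifies discarding everything outside the SSD family.
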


The SoS algorithms we consider are semi-definite programs whose variables are all degree $\leq D$ monomials in $\vsos$ which represents the unknown planted direction. The constraints are ``$\vsos_i^2=1/n$'' and that for any low-degree polynomial $p(\cdot)$, the average value of $p$ evaluated on the inner product between $\vsos$ and the samples should be reasonably close to $\E_{x\sim A}[p(x)]$. As a corollary of Theorem \ref{thm:main-informal}, any such algorithm requires either a large number of samples ($\geq n^{(1-\eps)k/2}$ many) or super-polynomial time to solve the corresponding NGCA. Note that the bound here is sharp, since with $O(n^{k/2})$ samples it is possible to efficiently solve the problem (see, e.g.,~\cite{dudeja2024statistical}).

\begin{remark}
We highlight that NGCA can be viewed as a ``meta-problem'', parameterized by the ``structure'' of the one-dimensional distribution $A$, which captures hard instances of a wide variety of learning problems. Our main contribution is to establish SoS-hardness of NGCA for {\em any} moment-matching distribution $A$ (under mild conditions). This is a powerful result for showing SoS-hardness for other learning problems via reductions. For certain special cases of $A$---specifically when $A$ is (essentially) a mixture of Gaussians---there exists reduction-based hardness for the problem under cryptographic assumptions (namely, the sub-exponential hardness of LWE) \cite{bruna2021continuous, gupte2022continuous}. However, these reductions are tailored to that specific choice of $A$. For other choices of $A$, no such reduction-based hardness is known, and it appears that LWE may not be the right starting point. For all other applications in this work, with the exception of learning GMMs, the only prior evidence of hardness was from the aforementioned SQ or low-degree testing lower bounds.
\end{remark}

\begin{remark} \label{rem:LLL}
It is worth noting that for the special case where the distribution 
$A$ is discrete, recent works~\cite{DK22LLL, ZSWB22LLL} showed polynomial-time algorithms for this version of the problem with sample complexity $O(n)$, regardless of the number of matched moments. Such a result is not surprising, as these algorithms are based on the LLL-method for lattice basis reduction which is not captured by the SoS framework. Importantly, these algorithms are extremely fragile and dramatically fail if we add a small amount of ``noise'' to $A$.
\end{remark}

\paragraph{Applications to Robust Statistics and Mixture Models}

Our main result (\Cref{thm:main-informal}) implies information-computation tradeoffs in the SoS framework for a range of fundamental problems in learning theory and robust statistics. (See \Cref{tab:sample_complexity} for a description of the problems we consider and the guarantees we obtain.) For the problems we consider, SQ and low-degree testing lower bounds were previously known. 

At a high level, for all our problems, our SoS lower bounds follow using the same template: we show that for \emph{specific} choices of the distribution $A$, the problem NGCA is an instance of a hypothesis testing problem known to be efficiently reducible to the learning problem in question. As long as the one-dimensional moment-matching distribution $A$ in question satisfies the hypotheses required for our main theorem to hold, we directly obtain an SoS lower bound for the corresponding hypothesis testing problem. 

As an illustrative example, we explain how to reduce the hypothesis testing version of special NGCA instances to the problem of learning a mixture of $k$ Gaussians in $n$ dimensions. Consider the distribution $A = \sum_{k=1}^n \frac 1 k~ \cN(\mu_i, \sigma_i^2)$, which is a mixture of Gaussians, and let the planted distribution be given by $\cN(0, \Id_{n-1})_{v^\perp} \times A_v$ where the hidden direction $v$ is from $\{\pm \frac{1}{\sqrt n} \}^n$. Expanding the expression, we see that the hypothesis testing problem is exactly to distinguish a true Gaussian from the mixture $\sum_{i=1}^k \frac 1 k ~\cN(\mu_i v, \Id - (1-\sigma_i^2)vv^T)$.  

\Cref{tab:sample_complexity} gives a list of problems where this work shows SoS lower bounds. 
It compares the information-theoretic sample complexity (the minimum sample size achievable by any algorithm) with the ``computational'' sample complexity implied by our SoS lower bound. 

\begin{table}[t]
\centering
\textbf{Sample Complexity versus Computational Sample Complexity}\\[6pt] 
\begin{tabular}{@{}l|c|c@{}}
\toprule
Statistical Estimation Task                      & Information-Theoretic & Degree-$O(\sqrt{\frac{\eps\log n}{\log\log n}})$ SoS
\\ 
\midrule
RME ($\Sigma \preceq \Id$) to $\ell_2$-error $\Omega(\sqrt \tau)$      & $O(n)$             & $\Omega(n^{2(1-\eps)})$ \\ \hline
RME ($\Sigma = \Id$) to $\ell_2$-error $\Omega(\frac{\tau \log(1/\tau)^{1/2}}{k^2})$       & $O(n)$             & $\Omega(n^{k(1-\eps)/2})$ \\ \hline
List-decodable Mean Estimation to error $O(\tau^{-1/k})$  & $O(n)$         & $\Omega(n^{k(1-\eps)/2})$ \\ \hline
RCE (multiplicative) to constant error             & $O(n)$             & $\Omega(n^{2(1-\eps)})$ \\ \hline
RCE (additive) to spectral error $O(\frac{\tau \log(1/\tau)}{k^4})$                   & $O(n)$             & $\Omega(n^{k(1-\eps)/2})$ \\ \hline
\raisebox{0pt}[2.5ex][1.5ex]{Estimating $k$-GMM}                 &   $\widetilde{O}(kn)$         & $\Omega(n^{k(1-\eps)})$ \\ \hline
Estimating 2-GMM (common unknown covariance)                 & $O(n)$         & $\Omega(n^{2(1-\eps)})$ \\ \bottomrule
\end{tabular}

\caption{A contrast between the information-theoretic sample complexity and the sample complexity required by degree-$O(\sqrt{\eps\log n/\log\log n})$-SoS for a range of natural tasks in robust statistics and learning mixture models. This includes robust mean estimation (RME), robust covariance estimation (RCE), learning Gaussian mixture models, and list-decodable mean estimation. The parameter $\tau$, when it appears, is related to the 
proportion of contamination; see \Cref{sec:concrete_apps}.}\label{tab:sample_complexity}
\end{table}

\subsection{Technical Overview of the Lower Bound}
In this section, we provide a brief high-level overview of our lower bound proof.

\paragraph{Pseudo-calibration technique and graph matrices.} 
We employ the general technique of pseudo-calibration as introduced in \cite{BHKKMP16} to produce a suitable candidate SoS solution $\pE$. For a given degree $D$, this solution can be described by a moment matrix indexed by sets $I, J \subseteq [n]$ where $|I|,|J| \leq \dsos$. The matrix entries are defined as $M_{\pE}(I, J) := \pE(\vsos^{I+J})$, with $\vsos^{I+J}$ representing the monomial $\prod_{i=1}^n \vsos_i^{I(i)+J(i)}$. These entries are functions of the input $x_1, \dots, x_m \in \R^n$ and are expressed in terms of Hermite polynomials (see \Cref{def:pseudoexpectation} and \Cref{eq:calib-calc}). 

As in most SoS lower bounds, the most challenging part is to show that $M_{\pE}$ is PSD. We provide an overview of the new ideas required for the proof in the proceeding discussion. Similar to prior works such as \cite{BHKKMP16,ghosh2020sum,potechin2020machinery,jones2021sum, pang2021sos, jones2023sum}, we expand $M_{\pE}$ as a linear combination of special matrices called graph matrices \cite{ahn2021graphmatricesnormbounds}, whose spectral norm we can bound in terms of combinatorial properties of the underlying {\it shapes} (\Cref{thm:norm_control})\footnote{A shape is, roughly speaking, a graph plus two distinguished vertex subsets called the left and right sides. The two sides are used to identify rows and columns of the associated matrix.}. Using graph matrices, we carefully factorize $M$ as $M = LQL^{\top} + \text{(error terms)}$ where $M$ is $M_{\pE}$ rescaled for technical convenience, thus reducing the task to showing that $Q \succ 0$. Here, the construction of matrices $L,Q$ in the factorization uses a recursive procedure like in previous works, where we repeatedly use minimum vertex separators of a shape to decompose the shape, and hence the associated graph matrix, in a canonical way; see  \Cref{def:recursivefactorization}. 

\paragraph{Minimum square separators.} 
The first technical novelty in this work is the introduction of the {\it minimum square vertex separators} in the factorization of $M$. Rather than using the minimum weight vertex separator or the sparse minimum vertex separator as in previous works, we define this new concept for bipartite graphs with two types of vertices—--circles and squares--—which naturally arise in our analysis of the NGCA problem (\Cref{def:separators}). 

Choosing the correct notion of vertex separators is a crucial first step in our analysis. This is because the combinatoroial properties of minimum square separators and minimum weight separators are key to controlling the norms of all the error terms generated in the resulting factorization $M\approx LQL^\top$ (see \Cref{sec:error_analysis}), which we will use throughout our analysis. Importantly, the use of minimum square separators leads to a characterization of the dominant terms in the expansion of $Q$, which we describe now.

\paragraph{The dominant family in $Q$ and well-behaved products.}
Recall that our goal is to show that with high probability, the matrix $Q$ from factorization $M\approx LQL^{\top}$ is positive-definite. We view $Q$ as a linear combination, where each term is a graph matrix multiplied by its coefficient. In all prior works that utilize the factorization approach \cite{BHKKMP16,ghosh2020sum,potechin2020machinery,jones2021sum, pang2021sos, jones2023sum}, the dominant term in $Q$ was a constant term, i.e., a matrix whose entries are numbers independent of the input. 
Here, however, we encounter a new and intrinsic difficulty: $Q$ contains an entire family of non-constant terms that are almost equally dominant. 

Using the refined tools developed in our error analysis (\Cref{sec:error_analysis}), we are able to characterize the shapes of the dominant terms, which we refer to as {\it simple spider disjoint unions (SSD)} (\Cref{def:disj}). A related but less restrictive family of shapes, called ``spiders'', was introduced in \cite{ghosh2020sum} in the context of the Sherrington-Kirkpatrick problem, which can be seen as a special case of NGCA where the unknown distribution $A$ is the uniform distribution on $\{ \pm 1\}$. Their technique of using the null-space to annihilate all spiders relies on $A$ being a discrete distribution, which does not apply to our setting. Additionally, we note that their work establishes a sample complexity lower bound of $n^{3/2}$, 
in contrast to the $O(n^2)$ upper bound \cite{dudeja2024statistical}. 
To achieve an almost optimal lower bound of $n^{(1-\eps)k/2}$ (see the second paragraph of the introduction), we need to study of the `rigid' structure of these shapes and their linear combinations.

As discussed earlier, the dominant terms in $Q$ are simple spider disjoint union graph matrices. To prove that their sum in $Q$ is positive-definite with high probability, we begin by examining the recursive factorization procedure that generates $Q$. Roughly speaking, $Q$ is a sum of numerous matrix products derived during the factorization. Among these products, we identify those that significantly impact $Q$---referred as the well-behaved intersection configurations (\Cref{def:wellbehavedconfig})---and use the error analysis in \Cref{sec:error_analysis} to show that the remaining other terms altogether contribute minimally. This leads to an expression of a dominating part of $Q$ denoted by $[Q]_{\wellbehaved}$, or $\SSD{Q}$ for short (\Cref{def:Qwell-behaved}, \Cref{def:QSS}), along with a characterizing equation $\SSD{L}\wbp\SSD{Q}\wbp\SSD{L}^\top=\SSD{M}$ (\Cref{lem:Q_wellbehaved_characterization}). Here, $\SSD{L},\SSD{M}$ denotes the SSD part of $L,M$ respectively, and $\wbp$ denotes what we call the well-behaved product between graph matrices.

Before giving an overview of the proof of the positive-definiteness of $\SSD{Q}$, we make two important remarks. First, the coefficients of the graph matrices in $L$ and $M$ are delicate. For instance, in $L$, the coefficient of a simple spider is $n^{-|E|/2}\cdot\E_A[h_j]$, where $j$ denotes the degree of the unique circle vertex of the spider, and $h_j$ is a Probabilist's Hermite polynomial; for disjoint union shapes, the coefficient is the product of those of its components. When matrices multiply, the coefficients multiply as well, and at several places we need to handle them in an exact way rather than doing mere magnitude estimates. Second, and more subtly, we will not analyze the matrix $\SSD{L}$ or $\SSD{M}$ in the same way we analyze $\SSD{Q}$, as both of them contain terms with larger norms. Instead, we focus our analysis on $\SSD{Q}$. 

\paragraph{PSDness via representation.} 
To show that $\SSD{Q}$ is positive-definite, we start with simple spiders. We use an algebraic method to study their multiplicative structure. The multiplication of general graph matrices is very complicated, but for simple spiders, an algebraic study turns out to be feasible. It goes as follows. 

First, we show that $\SS{Q}$---a further restricted matrix that collects all simple spider terms in $\SSD{Q}$---is positive-definite in a non-standard sense. Specifically, we model the multiplications of simple spiders as an associative $\R$-algebra, which we call $\SALD$ ({\it simple-spider algebra of degree $\dsos$}), given by \Cref{def:SALD}. The multiplication in $\SALD$ is the well-behaved product restricted by taking only simple spiders in the result (\Cref{def:wbp}). This is a non-commutative algebra, and it approximates the major terms in the multiplication of simple spiders graph matrices in special cases, although not always. To understand the structure of $\SALD$, we construct essentially all its irreducible representations in \Cref{subsec:rep} and obtain a concrete Artin-Wedderburn decomposition as a direct sum of matrix algebras (\Cref{lem:directsum}). This decomposition greatly simplifies the objects under study: it maps elements of $\SALD$, which represent graph matrices of dimension $n^{\Theta(\dsos)}$, to real matrices of dimension at most $\dsos+1$, while preserving algebra operations and matrix transposes. Using this decomposition, we prove in \Cref{lem:QSSPSD} that the matrix $\SS{L}\SS{Q}\SS{L}^{\top}$ is ``positive-definite'', and hence so is $\SS{Q}$. Here, $\SS{L},\SS{M}$ are the simple spider part of $L,M$ respectively. The proof of this fact somewhat surprisingly boils down to the multiplicative identities of Hermite polynomials. 

The quotation mark around ``positive-definite'' means that we obtain a sum-of-squares expression of $\SS{L}\SS{Q}\SS{L}^{\top}$ in the approximation algebra, $\SALD$. In reality, since the well-behaved product only approximates real matrix products of certain simple spiders but not all, to extend the ``positive-definiteness in $\SALD$'' to the positive-definiteness of the matrix $\SS{Q}$, we need to make sure that $\SS{Q}$ contains only special simple spiders where this approximation works well. This requires additional analytic arguments which are given in \Cref{subsec:PDQhat}. 

Our second step is to extend the positive-definiteness from $\SS{Q}$ to $\SSD{Q}$. Recall that $\SSD{Q}$ is the dominant part of $Q$, and it is a linear combination of simple spider disjoint unions (SSD). This time, we do not have to model the multiplication of SSD shapes algebraically (as we did for simple spiders), but instead, given the sum-of-squares expression of $\SS{Q}$ obtained from the above, we can directly construct a square root of $\SSD{Q}$ via an operation which we call $D$-combination (\Cref{def:Dcombination}). The intuition is that given a linear combination $\alpha$ of simple spiders, its $\dsos$-combination linearly combines all possible disjoint union of shapes in $\alpha$ with their coefficients multiplied together.\footnote{Technically, we require $\alpha$ to satisfy a certain consistency condition (\Cref{def:consistent}).} This construction is combinatorial rather than algebraic, but it turns out that a useful algebraic property holds: $\dsos$-combination commutes, in a sense, with well-behaved products (\Cref{lem:disj}). Using this property, we prove that if $X\cdot X^{\top}\approx \SS{Q}$ then $[X]^{\dsos}\cdot([X]^{\dsos})^{\top}\approx\SSD{Q}$, where $[X]^{\dsos}$ means the $\dsos$-combination of $X$. The positive-definiteness of $\SSD{Q}$ follows as \Cref{lem:PDQSSD}. Again, additional analytic arguments are required in the actual proof, where we also need to show that $[X]^{\dsos}$ is not too close to being singular so that we can use $[X]^{\dsos}\cdot([X]^{\dsos})^{\top}$ to compensate for the error terms. This is done in \Cref{lem:gammanorm}. 
\bigskip

To summarize, from the error analysis we have $\norm{Q-\SSD{Q}}\leq n^{-\Omega(\eps)}$. 
By the above steps, we show that $\SSD{Q}\succ n^{-o(\eps)}\Id$ 
assuming that $A$ satisfies some mild conditions besides matching $(k-1)$ moments. Together, we get the positive-definiteness of $Q$. From here, it is not hard to show that $M\approx LQL^{\top}$ is PSD, as is done in \Cref{sec:puttogether}. 
We now turn to handling the error terms. 

\paragraph{Handling error terms via configurations.}
We handle the error terms in \Cref{sec:error_analysis}, which is largely independent of \Cref{sec:psdness-qSS} and \Cref{sec:psdness-qSSD}. As described more precisely in the proof overview in \Cref{subsec:overview}, there are two main sources of error terms:
\begin{enumerate}
\item The error $Q - \SSD{Q}$ in the approximation of $Q$ by the sum of well-behaved configurations. 
\item The truncation error $M - LQL^{\top}$.
\end{enumerate}
We also need to analyze the error in our PSD approximation $\SSD{Q} \approx [X]^{\dsos}\cdot([X]^{\dsos})^{\top}$ of $\SSD{Q}$.

Our framework for handling these error terms is as follows. We formalize the way an error term can be generated as a configuration (\Cref{def:configuration}). The goal is then to show that the following number is small: the product of the coefficients from all shapes in the configuration, multiplied with the norm of any graph matrix that results from the configuration. To estimate this number, we use a charging argument that assigns edge factors to vertices. 
The idea is that to calculate, for example, the exponent over $n$ in the expression $n^{-|E_\alpha|/2}$ times the norm bound on a graph matrix $M_\alpha$, we take $\log_{\sqrt{n}}(\cdot)$ of the expression. We imagine that each edge in shape $\alpha$ has an additive factor of 1, and we assign the edge factors to its endpoints so that each vertex receives a sufficient amount of factors. 

The main result we show is a dichotomy: either the configuration is a well-behaved SSD product and has approximate norm $1$, or the configuration has norm $o(1)$ (see \Cref{thm:erroranalysis}). This allows us to show that the errors $Q - \SSD{Q}$ and $\SSD{Q} - [X]^{\dsos}\cdot([X]^{\dsos})^{\top}$ have norm $n^{-\Omega(\eps)}$. 
The design and analysis of the edge factors assignment scheme relies on properties of the minimum square separators and the minimum weight separators, which might be of independent interest.

To handle the truncation error, we observe that the truncation error only contains configurations which are very large and all such configurations have norm $n^{-\Omega({\eps}\truncation)}$, where $\truncation$ is a ``total size'' threshold on shapes that we set in pseudo-calibration.

\subsection{Organization of the Paper}
The paper is organized as follows. 
In \Cref{sec:prelim}, after preparing some general preliminaries, we state the NGCA problem in the context of sum-of-squares algorithms in  \Cref{sec:prob_statement}, and we derive the pseudo-calibration expression in \Cref{sec:pseudocalib_prelims}. We state our main result formally as \Cref{thm:main-formal}. 
In \Cref{lem:lowdegtest}, we show that pseudo-calibration passes a natural family of low-degree tests. In particular, with high probability, for every low-degree Hermite polynomial, its empirical average value on the inner product of the samples with the SoS solution (i.e., the hidden direction $\vsos$) is close to its expectation value under $A$.

In \Cref{sec:graph}, after recalling the definitions and properties about graph matrices, we give the factorization of the moment matrix based on a new kind of minimum vertex separators, namely minimum square separators. Here, we introduce the family of simple spiders and their disjoint unions, which are the important family of graph matrices in our analysis. In \Cref{sec:well-behaved} and \Cref{sec:spider_product}, we introduce the notion of well-behaved intersection configurations and well-behaved products, which are the major products that affect the PSDness of the matrix $Q$ in $M\approx LQL^{\top}$. In \Cref{sec:psdness-qSS}, we prove the positive-definiteness of the (approximate) simple spider part of the matrix $Q$, and we extend the result to the (approximate) simple spider disjoint union part of $Q$ in \Cref{sec:psdness-qSSD}. Here, we use the idea of algebra representations to study the multiplicative structure of simple spiders, and we use the $D$-combination construction to study their disjoint unions. In \Cref{sec:error_analysis}, we analyze the error terms in our analysis using a careful charging argument, which might be of independent interest. In \Cref{sec:puttogether}, we combine all of these pieces to prove \Cref{thm:main-formal}. Finally, in \Cref{sec:applications}, we give the applications of our SoS lower bounds to various classical problems in statistics and learning theory. This includes robust mean estimation (RME), list-decodable mean estimation, robust covariance estimation (RCE), and learning Gaussian mixtures, which are summarized in \Cref{tab:sample_complexity}.
	% !TeX root = arxiv.tex

\section{Technical Preliminaries and Formal Statement of Main Result}\label{sec:prelim}

In this section, we formally define the problem statement and set up notation that we will use throughout the paper. In \Cref{prelim:notation}, we set up basic notation. In \Cref{sec:prob_statement}, we recall the notions of pseudo-expectation values and the moment matrix, formally define the NGCA problem, and formulate it in the sum-of-squares framework. In \Cref{sec:pseudocalib_prelims}, we recall the technique of pseudo-calibration introduced by \cite{BHKKMP16} and identify the pseudo-calibrated moment matrix whose PSDness we want to prove. In \Cref{subsec:pEproperties} we show that our pseudo-expectation values satisfy the desired constraints except for PSDness of the moment matrix. 
Finally, in \Cref{sec:algebra}, we recall some notions from abstract algebra.
 
\subsection{Notation} \label{prelim:notation}
\paragraph{Basic Notation:}
$\N,\Z,\R$ denotes the set of natural numbers, integers and real numbers, respectively. For $t \in \Z_+$, $[t] := \{1,\ldots,t\}$. We will use $n$ to denote the dimension of the input data, and $m$ for the number of samples. The given $m$ samples are denoted by $x_1, \dots, x_m$, where each $x_u=(x_{u1},\ldots,x_{un})\in\R^n$. We will use index symbols $u\in[m]$, $i\in[n]$. The SoS degree is $\dsos$. 

For an integer vector $a\in\N^n$, $\normo{a}=\Sum_{i=1}^n a(i)$, $a!:=\Prod_{i=1}^n a(i)!$. For a sequence of $m$ such vectors, accordingly, $a=(a_1,\ldots,a_m)\in(\N^n)^m$, $\normo{a}=\Sum_u\normo{a_u}=\Sum_{u,i}a_u(i)$ and $a!:=\Prod_u a_u!=\Prod_{u,i}a_u(i)!$. It might be helpful to think of $a\in(\N^n)^m$ as an edge-weighted and undirected bipartite graph on vertex sets $[n],[m]$ where $a_u(i)$ is the weight of the edge $\{i,u\}$. 

For matrices $M,N$ and a number $C>0$, we use $M\almost{C}N$ to denote that $\norm{M-N}\leq C$ where $\norm{\cdot}$ on matrices always means the operator norm. If $M,N$ are square matrices, $M \succcurlyeq N$ denotes that $M-N$ is positive-semidefinite (PSD). We use $\poly(\cdot)$ to indicate a quantity that is polynomially upper-bounded in its arguments. Similarly, $\polylog(\cdot)$ denotes a quantity that is polynomially upper-bounded in the logarithm of its arguments. By writing $\log(\cdot)$, we mean $\log_2(\cdot)$.

\paragraph{Probability Notation:} 
For a random variable $X$, we write $\E[X]$ for its expectation. $\cN(\mu,\sigma^2)$ denotes the 1-dimensional Gaussian distribution with mean $\mu$ and variance $\sigma^2$. When $\D$ is a distribution, we use $X \sim \D$ to denote that the random variable $X$ is distributed according to $\D$. When $S$ is a set, we let $\E_{X \sim S}[\cdot]$ denote the expectation under the uniform distribution over $S$. 

\paragraph{Hermite Polynomials:} 
The probabilist's Hermite polynomial $He_i(x) = \sum_{j=0}^{\lfloor{i}\rfloor}{(-1)^{j}\binom{i}{2j}\frac{(2j)!}{(2^j)j!}x^{i-2j}}$ will be denoted by $h_{i}(x)$. 
The $n$-dimensional Hermite polynomials are $h_{a}=\Prod_{i=1}^nh_{a(i)}$ for $a\in\N^n$. 
Recall that $h_a/\sqrt{a!}$ ($a\in\N^n$) form an orthonormal basis of polynomials under the inner product 
\[\iprod{f,g} := \E_{ x\sim \cN(0,\Id_n)}\big[f( x)g( x)\big]\]
where $\cN(0,\Id_n)$ is the $n$-dimensional multivariate normal distribution with mean $0$ and covariance matrix $\Id_n$. For $a\in(\N^n)^m$, we let $h_{a}=\Prod_{u=1}^mh_{a_u}$. 

\subsection{Problem Statement and Sum-of-Squares Solutions}
\label{sec:prob_statement} 

We now define the SoS formulation for NGCA that we use. The inputs to our SoS program are i.i.d. samples $x_1,\dots,x_m$ drawn from the reference distribution. We denote the SoS variables by $\vsos=(\vsos_1,\dots,\vsos_n)$. 

A true solution to the NGCA problem would assign a real value to each $\vsos_i$ such that the following constraints are satisfied:
\begin{enumerate}
    \item (Booleanity) For all $i \in [n]$, $\vsos_i^2-{1\over n}=0$.
    \item (Soft NGCA constraints) 
    $\left|\frac{1}{m}\Sum_{i=1}^{m}\left(h_j(x \cdot \vsos)\right) - \E\limits_{a \sim A}[h_j(a)]\right|=\widetilde{O}(\frac{1}{\sqrt{m}})$ where $x_1,\dots,x_m$ are the input samples.
\end{enumerate}

Degree-$\dsos$ SoS gives a relaxation of the problem where instead of assigning a real value to each $\vsos_i$, we have an $\R$-linear map $\widetilde{E}$ called {\it pseudo-expectation values} which assigns a real value $\widetilde{E}[p]$ to each polynomial $p(\vsos_1,\dots,\vsos_n)$ of degree at most $\dsos$. We can think of $\widetilde{E}[p]$ as the estimate given by degree-$\dsos$ SoS for the expected value of $p$ over a (possibly fictitious) distribution of solutions.

\begin{definition}
We define $\R^{\leq d}(\vsos)$ to be the set of all polynomials of degree at most $d$ in the variables $\vsos_1,\dots,\vsos_n$.
\end{definition}

\begin{definition}[Pseudo-expectation Operator for NGCA]
\label{def:pseudoexpectation}
Given input samples $x_1,\dots,x_m$ and a target distribution $A$, we say that an $\R$-linear map $\pE: \R^{\leq D}(\vsos_1, \dots, \vsos_n) \rightarrow \R$ is a degree $D$ pseudo-expectation operator for NGCA if it satisfies the following conditions.
	
\begin{enumerate}
    \item (Oneness) $\pE(1) = 1$;
    \item (Booleanity) $\pE\Big(f(\vsos)\cdot(\vsos_i^2-{1\over n})\Big)=0$ for all $i\in[n]$ and all $f\in\R^{\leq D-2}(\vsos)$;
    \item (Soft NGCA constraints) 
    $\left|\frac{1}{m}\Sum_{u=1}^{m}\pE\left(h_j(x_u \cdot \vsos)\right) - \E\limits_{a \sim A}[h_j(a)]\right|=\widetilde{O}(\frac{1}{\sqrt{m}})$ for all $j\leq \dsos$;
    \label{item:softaxiom}
    \item (Positivity) $\pE[p^2] \geq 0$ for all $p \in \R^{\leq \dsos}(\vsos)$. 
    \label{item:psdness} 
\end{enumerate}
\end{definition}

If this relaxation is infeasible then degree-$\dsos$ SoS can prove that there is no vector $\vsos=(\vsos_1,\dots,\vsos_n)$ such that the input has distribution $A$ in the direction $\vsos$. For our degree-$\dsos$ SoS lower bounds, we show that w.h.p. (with high probability) this does not happen. To do this, we design a candidate pseudo-expectation operator $\widetilde{E}$ and show that w.h.p. it is a degree $\dsos$ pseudo-expectation operator for NGCA.
\begin{remark}
We use Roman text font for the SoS variables $\vsos_1,\dots,\vsos_n$ to distinguish them other expressions such as the input variables which take fixed real values once the input is given.
\end{remark}

\begin{remark}[Soft NGCA constraints]
\label{rmk:sosprogram}

The ``soft'' NGCA constraints (\Cref{item:softaxiom}) indicate that we study generic SoS lower bounds, i.e., there is no strict polynomial identity constraint other than booleanity of the variables $\vsos$. This is more or less an inevitable feature of the algorithms dealing with NGCA for general distributions $A$ in Definition \ref{def:ngca_distinguishing}, as opposed to cases where $A$ is more restricted such as being discrete. 
\end{remark}

The positivity condition on $\widetilde{E}$ can be expressed using the following (pseudo-)moment matrix.

\begin{definition}
We define $\vsos^I$ to be the monomial $\Prod_{i=1}^n \vsos_i^{I(i)}$ where $I\in\N^n$. We define the degree of $\vsos^I$ to be $\normo{I}=\Sum_i I(i)$. 
\end{definition}

\begin{definition}[Moment matrix]
\label{def:moment_matrix}
Given a linear map $\pE:\R^{\leq 2\dsos}[\vsos]\rightarrow \R$, its degree $\dsos$ pseudo-moment matrix, or moment matrix for short, is an $\binom{[n]}{\leq \dsos} \times \binom{[n]}{\leq \dsos}$ matrix $M_{\pE}$ whose rows and columns indexed by subsects of $I\subseteq [n]$ of size at most $\dsos$, and the entries are: 
\[M_{\pE}(I, J) := \pE(\vsos^{I+J}),\] 
where $I,J$ are viewed as the indicator functions from $[n]$ to $\{0,1\}$. 
\end{definition}

The verification of the following fact is straightforward.
\begin{fact}
    Suppose $M_{\pE}$ satisfies the Booleanity condition. 
    Then the positivity condition, \Cref{item:psdness} in \Cref{def:pseudoexpectation}, is equivalent to the condition that $M_{\pE}\succcurlyeq 0$. 
\end{fact}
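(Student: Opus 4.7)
The plan is to prove both implications by identifying polynomials of degree at most $\dsos$ with vectors indexed by subsets $I \subseteq [n]$ of size at most $\dsos$, where the identification uses the Booleanity axiom to reduce to multilinear representatives. The main technical point is that Booleanity lets us pass freely between a general polynomial in $\R^{\leq \dsos}(\vsos)$ and its multilinear reduction without changing the value of $\pE$.

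First I would observe the following consequence of Booleanity: for any polynomial $p \in \R^{\leq \dsos}(\vsos)$, there is a unique multilinear polynomial $q = \sum_{I \subseteq [n],\, |I| \leq \dsos} c_I \vsos^I$ such that $p - q$ lies in the ideal generated by $\{\vsos_i^2 - 1/n : i \in [n]\}$ (using iterative reduction $\vsos_i^2 \mapsto 1/n$, which does not increase degree). By the Booleanity axiom applied to the reduction, $\pE[p] = \pE[q]$, and similarly $\pE[p^2] = \pE[q^2]$, provided the witnesses lie in $\R^{\leq 2\dsos - 2}(\vsos)$, which they do by construction.

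For the forward direction ($M_{\pE} \succcurlyeq 0 \Rightarrow$ positivity), I take an arbitrary $p \in \R^{\leq \dsos}(\vsos)$, reduce $p$ to a multilinear $q = \sum_I c_I \vsos^I$ of degree $\leq \dsos$ as above, and compute
\[
\pE[p^2] \;=\; \pE[q^2] \;=\; \sum_{I,J} c_I c_J \, \pE[\vsos^{I+J}] \;=\; \sum_{I,J} c_I c_J \, M_{\pE}(I,J) \;=\; c^\top M_{\pE}\, c \;\geq\; 0,
\]
where the last inequality uses $M_{\pE} \succcurlyeq 0$. For the reverse direction, given any vector $c \in \R^{\binom{[n]}{\leq \dsos}}$, I set $q = \sum_I c_I \vsos^I$, which is a multilinear polynomial of degree $\leq \dsos$, and then $c^\top M_{\pE}\, c = \pE[q^2] \geq 0$ by the positivity axiom, so $M_{\pE} \succcurlyeq 0$.

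I do not expect any serious obstacle; the only point requiring a bit of care is verifying that the reduction $p \mapsto q$ and the expansion of $q^2$ stay within the degree range on which $\pE$ is defined (at most $2\dsos$), and that the linearity of $\pE$ combined with the Booleanity axiom legitimately identifies $\pE[p^2]$ with $\pE[q^2]$. Both are routine degree bookkeeping once the reduction procedure is fixed.
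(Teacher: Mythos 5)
Your proof is correct, and since the paper states the fact without proof (``The verification of the following fact is straightforward''), yours is the argument the authors presumably intended. Two points are worth making explicit. First, the direction ``positivity $\Rightarrow M_{\pE}\succcurlyeq 0$'' needs no Booleanity: for multilinear $q=\sum_I c_I\vsos^I$ of degree $\leq\dsos$, the identity $\pE[q^2]=c^{\top}M_{\pE}c$ follows directly from $\vsos^I\vsos^J=\vsos^{I+J}$ and linearity. Second, and more substantively, your use of Booleanity for $\pE[p^2]=\pE[q^2]$ requires $\pE\big(f\cdot(\vsos_i^2-\tfrac{1}{n})\big)=0$ for witnesses $f$ of degree up to $2\dsos-2$ (since $p^2-q^2=(p-q)(p+q)$), whereas \Cref{def:pseudoexpectation} nominally states Booleanity only for $f\in\R^{\leq\dsos-2}(\vsos)$ and declares $\pE$ on $\R^{\leq\dsos}$. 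This is a looseness in the paper's statement---\Cref{def:moment_matrix} and the positivity axiom already force $\pE$ to be defined on $\R^{\leq 2\dsos}$, and the pseudo-calibrated $\pE$ satisfies Booleanity at all degrees up to $2\dsos$---but you are correctly using the stronger form, and the equivalence would in fact fail with only the nominally stated version (for instance, $\pE[\vsos_i^4]$ would then be unconstrained by $M_{\pE}$, so positivity could fail on $p=\vsos_i^2$ even when $M_{\pE}\succeq 0$).
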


In the next section, we describe the standard pseudo-calibration technique used to prove SoS lower bounds.

\subsection{Pseudo-calibration Technique for NGCA and Main Result}
\label{sec:pseudocalib_prelims} 
Pseudo-calibration, introduced in \cite{BHKKMP16}, is a method to construct a candidate pseudo-expectation operator $\pE$ for an input $x$ drawn from the problem distribution (reference distribution, $\Dref$). The idea is to show that there is another distribution (planted distribution, $\Dpl$) supported on feasible instances and solutions $(x,v)$, such that it cannot be distinguished from the problem distribution via any low-degree test. Once we have this planted distribution, we will choose the candidate pseudo-expectation values $\pE(\vsos^I)$ so that 
\begin{align}\label{eqn:pseudocalib1}
\mathop{E}\limits_{x\sim \Dref} \left[t( x) \pE(\vsos^I)\right] = \mathop{E}\limits_{(x,v)\sim\Dpl} \Big[t( x) { v}^I\Big]
\end{align}
for all low-degree polynomials $t(x)$, where $v^I$ is the monomial $\Prod_{i=1}^n v_i^{I(i)}$, $I\in\N^n$. Moreover, we impose the condition:
\begin{equation}\label{eqn:pseudocalib2}
 \text{Each $\pE(\vsos^I)$ itself is a low-degree polynomial in $x$.}
\end{equation}

For our problem, the input data $(x_1, \dots, x_m)\in (\R^n)^m$ are i.i.d. samples drawn from the standard Gaussian distribution, and $\vsos = (\vsos_1, \dots, \vsos_n)$ are the SoS variables representing the unknown direction whose solution existence SoS wants to refute. 

In light of the NGCA problem in \Cref{def:ngca_distinguishing}, our planted distribution $D_{pl}$ is the following: first choose a planted vector $v\sim \{\pm{1\over\sqrt{n}}\}^n$ uniformly at random, then choose i.i.d. samples $x_1,\dots,x_m$ so that
\[x_i=\big((x_i)_{v^\perp},\ (x_i)_v\big)\sim \cN(0,\Id_{n-1})_{v^\perp}\times A_v,\]
where $A_v$ is the one-dimensional distribution of interest in direction $v$ matching $k-1$ moments with $\cN(0, 1)$.
Concretely, conditions \Cref{eqn:pseudocalib1} and \Cref{eqn:pseudocalib2} enforce the pseudo-calibration to have the following form: 
\begin{equation}\label{eq:calib}
	\pE(\vsos^I) :=
 \Sum_{\substack
{a\in(\N^n)^m:\ \text{``total size'' of $a$ is upper bounded by $\truncation$}}}
 \ \mathop{E}_{(x,v)\sim\Dpl} \left[ v^I {h_{a}\over\sqrt{a!}} \right]\cdot {h_{a}\over\sqrt{a!}},
\end{equation}
where $\truncation$ is a parameter deciding the meaning of ``low-degree'' in \Cref{eqn:pseudocalib1} and \Cref{eqn:pseudocalib2}. 
We will choose $\truncation$ based on the technical analysis. It turns out that we can choose $\truncation$ to be any value between $C_1\log n$ and $n^{C_2}$ for some constants $C_1, C_2$ depending on $\dsos$ and $\eps$. 

For any fixed $I\in \N^{n}$, we let the \emph{$I$-total size} of $a=(a_1,\dots,a_u)\in(\N^n)^m$ be $\totalsize^I(a):=$ 
\[\normo{a}+\left\vert \{i\in[n]:\ I(i)>0\text{ or }(\exists u\in[m])\ a_u(i)>0\}\right\vert + \left\vert\{u\in[m]:\ (\exists i\in[n])\ a_u(i)>0\}\right\vert.\] 
We use this to measure the ``total size'' of $a$ in the above equation (its meaning will become clear in \Cref{sec:graph}). The calculation of \eqref{eq:calib} is similar to the one in \cite{ghosh2020sum} and is reproduced in~\Cref{app:pseud-calib}, giving the following expression.
\begin{lemma}[Pseudo-calibration]\label{lem:calib1}
	For any $I\in\N^n$, 
	\begin{equation}\label{eq:calib-calc}
		\begin{aligned}		    \pE(\vsos^I)=\Sum_{\substack
            {a\in(\N^n)^m:\ \totalsize^I(a)\leq \truncation,\vspace{2pt}\\ 
            \text{and }(\forall i\in[n])\ I(i)+\sum_{u}a_u(i)\text{ is even}}}
            n^{-{\normo{I}+\normo{a}\over2}}\Prod_{u=1}^m \E_A \Big[h_{|a_u|}\Big] {h_{a_u}\over a_u!}
		\end{aligned}
    \end{equation}
\end{lemma}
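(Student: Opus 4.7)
The plan is to compute the coefficient $\E_{(x,v)\sim\Dpl}\!\left[v^I h_a(x)/\sqrt{a!}\right]$ in the pseudo-calibration formula and show it equals $n^{-(\normo{I}+\normo{a})/2}\prod_u \E_A[h_{|a_u|}]\cdot \sqrt{a!}/a!$ (producing the parity constraint as a byproduct). I would first condition on $v$ and exploit the fact that the samples are i.i.d. so that $\E_{x\mid v}[h_a(x)] = \prod_{u=1}^m \E_{x_u\mid v}[h_{a_u}(x_u)]$. The heart of the computation is the per-sample identity
\[
\E_{x_u\mid v}[h_{a_u}(x_u)] \;=\; \E_A\bigl[h_{|a_u|}\bigr]\cdot v^{a_u},
\]
which, once established, combines with the uniform distribution of $v$ on $\{\pm 1/\sqrt n\}^n$ to produce the claim.

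To prove the per-sample identity, I would use the Hermite generating function, since this converts the messy multi-index product $\prod_i h_{a_u(i)}(x_{u,i})$ into a single exponential. Recall $\sum_{a_u\in\N^n} h_{a_u}(x_u)\, t^{a_u}/a_u! = \exp(\langle t,x_u\rangle - \normt{t}^2/2)$ for $t\in\R^n$. Under $\Dpl$ we have the decomposition $x_u = z_u + \alpha_u v$ where $z_u\sim\cN(0,\Id - vv^\top)$ lies in $v^\perp$ and $\alpha_u\sim A$ is independent of $z_u$. Splitting $t = (t-\iprod{t,v}v) + \iprod{t,v}v$, the Gaussian part contributes $\exp\bigl((\normt{t}^2 - \iprod{t,v}^2)/2\bigr)$, and the $A$-part contributes $\E_A[\exp(\alpha\iprod{t,v})]$. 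Multiplying and combining with the $-\normt{t}^2/2$ prefactor cancels everything but $\exp(-\iprod{t,v}^2/2)\E_A[\exp(\alpha\iprod{t,v})] = \E_A\!\bigl[\sum_{j\ge 0} h_j(\alpha)\iprod{t,v}^j/j!\bigr]$. Expanding $\iprod{t,v}^j$ multinomially as $\sum_{|a_u|=j}\binom{j}{a_u} t^{a_u} v^{a_u}$ and matching coefficients of $t^{a_u}/a_u!$ yields the identity.

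Once this is in hand, the product over $u$ gives $\E_{x\mid v}[h_a(x)] = \bigl(\prod_u \E_A[h_{|a_u|}]\bigr)\cdot v^{\sum_u a_u}$. Multiplying by $v^I$ and averaging over $v\sim\{\pm 1/\sqrt n\}^n$ uses the elementary moment $\E_v[v^b] = n^{-\normo{b}/2}$ if every coordinate of $b\in\N^n$ is even and $0$ otherwise; applied to $b = I + \sum_u a_u$ this produces exactly the parity condition $I(i)+\sum_u a_u(i)\in 2\N$ and the scalar $n^{-(\normo{I}+\normo{a})/2}$. Plugging back into \eqref{eq:calib} and using $h_a/a! = \prod_u h_{a_u}/a_u!$ delivers \eqref{eq:calib-calc}.

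There is no serious obstacle; the only thing requiring care is keeping track of the $\sqrt{a!}$ normalizations (which appear twice in \eqref{eq:calib} and combine to $1/a!$) and making sure that the covariance $\Id - vv^\top$ of the orthogonal component is handled correctly when evaluating the Gaussian moment generating function (specifically, that $t-\iprod{t,v}v$ is orthogonal to $v$ so the relevant quadratic form is just $\normt{t-\iprod{t,v}v}^2 = \normt{t}^2 - \iprod{t,v}^2$). The truncation constraint $\totalsize^I(a)\le\truncation$ is inherited from the outer sum in \eqref{eq:calib} and plays no role in this per-term calculation.
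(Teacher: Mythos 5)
Your proposal is correct and follows essentially the same route as the paper's own derivation in Appendix A: both reduce to the per-sample identity $\E_{x_u\mid v}[h_{a_u}(x_u)]=\E_A[h_{|a_u|}]\,v^{a_u}$ via the $n$-dimensional Hermite generating function (the paper splits the exponential into $v$ and $v^\perp$ pieces; you instead invoke the MGF of $\cN(0,\Id-vv^\top)$ and cancel — these are the same computation written two ways), and then average over $v\in\{\pm 1/\sqrt n\}^n$ to produce the parity constraint and the $n^{-(\normo{I}+\normo{a})/2}$ factor.
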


\begin{remark}[Only moments matter]\label{rmk:moments}
By \Cref{eq:calib-calc}, the pseudo-expectation values are determined
by the moments of $A$ up to the truncation threshold $\truncation$. 
In particular, if $\truncation$ is smaller than the number of matched moments (i.e., $k-1$), then \Cref{eq:calib-calc} will be a constant function and the resulting matrix will be diagonal and trivially PSD.
\end{remark}

Our results depend on the following measures of the distribution $A$.
\begin{definition}[$U_A$, $L_A$]\label{def:UALA}
Given a distribution $A$ and integer $t\geq 0$, we let 
\begin{align}
U_A(t)&:=\max\limits_{0\leq i\leq t}\left\vert\E\limits_A\ [h_i(x)]\right\vert \label{cond:A<}\\
L_A(t)&:=\min\limits_{\substack{p(x):\ \text{polynomial of degree $\leq t$}\\ \text{where }\E_{\cN(0,1)}[p^2(x)]=1}}\ \E\limits_A\ [p^2(x)] \label{cond:A>}
\end{align}
\noindent Note that this minimum exists\footnote{The reason is that $\E_A[\cdot]$ is a continuous function on the compact set $\{p(x):\ \deg(p)\leq t, \E_{\cN(0,1)} [p^2(x)]=1\}$. This set is compact because the map $p(x)\mapsto\E_{\cN(0,1)}[p(x)^2]$ is a non-degenerate quadratic form on $\{p(x):\ \deg(p)\leq t\}$.} and $U_A(0)=L_A(0)=1$. 
\end{definition}

In the lower bound proof, however, a slightly different pair of measures are more handy to use, as they can simplify many expressions of estimates.

\begin{definition}[$\cu,\cl$]\label{def:cucl}
Given $n$, $\dsos$, $\truncation$ and distribution $A$, we let $\cu, \cl$ be the minimum values such that $\cu,\cl\geq 1$ and 
\begin{enumerate}
\item For all $t \leq 3\truncation$, ${\abs{\E\limits_A\ \left[h_{t}(x)\right]}} \leq \cu^t$. 
\item For all polynomials $p(x)$ of degree at most $\dsos$ such that $\int_{\cN(0,1)}{p^2(x)}=1$, $\E\limits_A\ [p^2(x)] \geq \cl^{-\deg(p)}$.
\end{enumerate}
\end{definition}

We can now state the main theorem formally. 

\begin{restatable}[Main Theorem]{theorem}{mainformal}\label{thm:main-formal}
   There is a universal constant $\Cuniv\geq 1$ such that for any $\delta\in(0,1)$, if $n$ is sufficiently large then the following holds. 
   
    Suppose $\eps\in(0,1)$, $k\geq 2$, $A$ is a 1-dimensional distribution, and $\dsos$ and $\truncation$ are integer parameters (where $\eps$, $k$, $A$, $\dsos$, and $\truncation$ may all depend on $n$) such that: 
    \begin{align}
    &\text{$A$ matches the first $k-1$ moments with $\cN(0, 1)$.}\label{eq:main_moment_match}\\ 
    &\truncation\geq\max\{50{\dsos}^2, \frac{500}{\eps}\dsos, 2k\log n\},\text{ and } (5\cu)^{20\dsos^2} \cl^{2\dsos} (10\truncation)^{256\Cuniv\dsos^2} < n^{{\eps \over 30}}.\label{eq:main_condition} 
    \end{align}
    Then if we draw $m < n^{(1-\eps)k/2}$ i.i.d. samples from $\cN(0,\Id_n)$, with probability greater than $1-\delta$, 
    the moment matrix of the degree-$\dsos$ pseudo-calibration with truncation threshold $\truncation$ is positive-definite. 
\end{restatable}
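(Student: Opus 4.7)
The plan is to follow the pseudo-calibration approach laid out in the excerpt: given the explicit formula for $\pE(\vsos^I)$ in \Cref{lem:calib1}, one rewrites the moment matrix $M := M_{\pE}$ as a linear combination of graph matrices indexed by shapes, whose spectral norms are controlled by combinatorial properties of the shape. The constraints of \Cref{def:pseudoexpectation} other than positivity can be verified directly from \Cref{eq:calib-calc} (this is the role of \Cref{subsec:pEproperties}), so the entire theorem reduces to showing that with probability at least $1-\delta$ the moment matrix $M$ is positive-definite.

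To attack positivity, I would construct a factorization $M \approx L Q L^\top$ via a recursive decomposition where each shape is cut at its minimum square vertex separator (\Cref{def:separators}) into a left piece, a middle piece, and a right piece, whose associated graph matrices assemble into $L$, $Q$, and $L^\top$ respectively. The switch from minimum weight to minimum square separators is critical, because only square separators cleanly isolate the dominant family of shapes that appear in $Q$, namely the simple spider disjoint unions (SSD, \Cref{def:disj}); all non-dominant contributions are routed to error terms that the charging framework of \Cref{sec:error_analysis} will absorb. After this step the problem splits into (i) proving $Q \succ n^{-o(\eps)} \Id$ with high probability, and (ii) proving that the total error $M - LQL^\top$ has spectral norm $n^{-\Omega(\eps)}$.

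The main obstacle, and the core technical content, is the PSD lower bound on $Q$: unlike prior pseudo-calibration proofs where the dominant part of $Q$ is a constant matrix, here $\SSD{Q}$ is a genuine linear combination of comparably-sized non-constant graph matrices. I would handle $\SSD{Q}$ in two algebraic steps. First, restricted to the simple spider part $\SS{Q}$, I would model the well-behaved product $\wbp$ of simple spiders by the associative $\R$-algebra $\SALD$, construct its irreducible representations, and obtain an Artin--Wedderburn direct sum decomposition (\Cref{lem:directsum}) as matrix algebras of dimension at most $\dsos+1$. Under this decomposition the defining identity $\SS{L} \wbp \SS{Q} \wbp \SS{L}^\top \approx \SS{M}$ becomes a finite-dimensional matrix equation whose positivity reduces to multiplicative identities of Hermite polynomials together with the quantitative lower bound coming from $L_A(\dsos) \geq \cl^{-\dsos}$; this yields a sum-of-squares witness $X X^\top \approx \SS{Q}$ in $\SALD$, which \Cref{subsec:PDQhat} promotes to a true matrix inequality. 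Second, I would extend from $\SS{Q}$ to $\SSD{Q}$ via the $D$-combination construction (\Cref{def:Dcombination}): using that $D$-combination commutes with $\wbp$ (\Cref{lem:disj}), the witness lifts to $[X]^{\dsos}$ satisfying $[X]^{\dsos}([X]^{\dsos})^\top \approx \SSD{Q}$, and the quantitative non-degeneracy of $[X]^{\dsos}$ needed to absorb the approximation error will come from \Cref{lem:gammanorm}.

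The remaining work is the error analysis and the final assembly. For each of the three error sources---the approximation $Q - \SSD{Q}$, the lift approximation $\SSD{Q} - [X]^{\dsos}([X]^{\dsos})^\top$, and the truncation error $M - LQL^\top$---I would encode each offending term as an intersection configuration and bound its contribution through a charging argument that assigns edge factors to vertices of the underlying shape, following \Cref{sec:error_analysis}. The key dichotomy to prove is \Cref{thm:erroranalysis}: every configuration is either a well-behaved SSD product with approximate norm $1$, or its contribution has norm $n^{-\Omega(\eps)}$. The growth bounds on $A$ through $\cu$ and $\cl$, together with the slack in \Cref{eq:main_condition}, provide exactly the headroom needed to make this dichotomy quantitatively tight; the truncation error is negligible because any surviving configuration must be large, contributing $n^{-\Omega(\eps \truncation)}$. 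Combining $\SSD{Q} \succ n^{-o(\eps)}\Id$ with $\norm{Q - \SSD{Q}} \leq n^{-\Omega(\eps)}$ gives $Q \succ 0$, and one final step (\Cref{sec:puttogether}) pieces this together with the truncation bound to conclude $M \succ 0$ with the required probability.
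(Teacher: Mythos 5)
Your proposal is correct and follows essentially the same route as the paper's own proof: pseudo-calibration, factorization $M\approx LQL^\top$ via minimum square separators, the representation-theoretic analysis of $\SS{Q}$ in $\SALD$, the $D$-combination lift to $\SSD{Q}$ backed by \Cref{lem:disj} and \Cref{lem:gammanorm}, and the charging-based error dichotomy of \Cref{thm:erroranalysis} followed by the assembly in \Cref{sec:puttogether}. The one small imprecision is that the identities governing $\SS{Q}$ and $\SSD{Q}$ inside the respective algebras (\Cref{lem:LQLSSbehavior}, \Cref{cor:Q_wellbehaved_characterization}) are exact equalities, not approximations; the ``$\approx$'' only enters when passing from the algebraic product $\wbp$ to genuine matrix multiplication.
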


As an illustration, we can set $k\leq(\log n)^{O(1)}$, $\dsos=o(\sqrt{\frac{\eps\log n}{\log\log n}})$, and $\truncation=2k \dsos \log n$. If the distribution $A$ matches $k-1$ moments with $\cN(0,1)$ and satisfies $\cu,\cl \leq (\log n)^{O(1)}$, then \Cref{thm:main-formal} provides an almost optimal $n^{\frac{(1-\eps)k}{2}}$ sample lower bound for the corresponding NGCA problem in degree-$\dsos$ SoS.

\subsection{Properties of Pseudo-calibration}\label{subsec:pEproperties}
Before proving \Cref{thm:main-formal}, let us first show that the pseudo-calibration ``solution'' of $v$ from \Cref{eq:calib-calc} satisfies the Booleanity constraints and the soft NGCA constraints.

It is not hard to check directly that $\widetilde{E}$ satisfies the Booleanity constraints as multiplying $\vsos^{I}$ by $v_i^2$ increases the number $I(i)$ by $2$. This is also a special case of the following more general fact (Cf. \cite{BHKKMP16,ghosh2020sum}), whose proof is a simple expansion of the definition \Cref{eq:calib}.

\begin{lemma}[Pseudo-expectation preserves zero]\label{lem:pseudo-calib}
If $f(\vsos)$ is a polynomial only in the SoS variables $\vsos$ such that $\deg(f)\leq \dsos$ and $f(v)=0$ for all $v$ in the planted distribution, then $\pE(f)=0$ independent of the input $x$.
\end{lemma}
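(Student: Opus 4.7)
The plan is a direct expansion from the pseudo-calibration formula \eqref{eq:calib} combined with $\R$-linearity of $\widetilde{E}$. Writing $f(\vsos) = \sum_I c_I\,\vsos^I$, linearity gives
\[
\widetilde{E}(f)\;=\;\sum_I c_I \sum_{a:\,\totalsize^I(a)\le \truncation}\E_{(x,v)\sim \Dpl}\!\left[v^I\tfrac{h_a(x)}{\sqrt{a!}}\right]\tfrac{h_a(x)}{\sqrt{a!}}.
\]
After interchanging the two sums so that $h_a(x)/\sqrt{a!}$ sits outside, the inner $I$-sum collapses via linearity of expectation to $\E_{\Dpl}[f(v)\,h_a/\sqrt{a!}] = \E_v[\,f(v)\,\E_{x\mid v}[h_a/\sqrt{a!}]\,]$. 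Because the $v$-marginal of $\Dpl$ is supported on $\{\pm 1/\sqrt{n}\}^n$ and $f$ vanishes on this set by hypothesis, the factor $f(v)$ is identically $0$, so the outer expectation is $0$ for every $a$ and hence $\widetilde{E}(f)=0$ identically in $x$.

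The one delicate bookkeeping point is that the truncation $\totalsize^I(a)\le \truncation$ depends on $I$ (through $\supp(I)$), so the interchange above is not completely unconditional: one must argue that the restricted inner $I$-sum still equals $\E_{\Dpl}[f(v)\,h_a/\sqrt{a!}]$. Two observations make this go through. First, the expectation $\E_{\Dpl}[v^I h_a/\sqrt{a!}]$ vanishes unless the parity condition $I(i)+\sum_u a_u(i) \equiv 0 \pmod{2}$ holds for every $i$, and on that event $\totalsize^I(a)$ depends on $I$ only through $\supp(I)\cup T(a)$, where $T(a) = \{i:\sum_u a_u(i)>0\}$. Second, the hypothesis $f(v)=0$ on $\{\pm 1/\sqrt{n}\}^n$ unpacks, via Fourier expansion on the Boolean cube, into a family of linear relations among the $c_I$ indexed by parity patterns $R\subseteq [n]$; these are exactly the relations needed to produce the required cancellation after rewriting the truncation in terms of $\supp(I)$.

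The route I would actually write up is the equivalent but cleaner reduction to Booleanity. Since $f$ vanishes on the Boolean cube $\{\pm 1/\sqrt{n}\}^n$ and $\deg(f)\le \dsos$, a standard reduction algorithm (iteratively replacing $\vsos_i^{I(i)}$ with $\vsos_i^{I(i)-2}(\vsos_i^2-\tfrac1n)+\vsos_i^{I(i)-2}/n$ and using that the multilinear remainder must vanish) produces a decomposition $f = \sum_{i=1}^n g_i(\vsos)(\vsos_i^2 - \tfrac{1}{n})$ with $\deg(g_i)\le \dsos-2$. By $\R$-linearity, it then suffices to verify $\widetilde{E}(\vsos^J(\vsos_i^2 - \tfrac{1}{n})) = 0$ for every $J$ with $\|J\|_1 \le \dsos - 2$ and every $i\in[n]$, i.e., the Booleanity axiom. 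This is immediate from \eqref{eq:calib-calc}, because sending $\vsos^J \mapsto \vsos^J\,\vsos_i^2$ increments $J(i)$ by $2$, which leaves every parity condition and the relevant truncation structure unchanged while multiplying the $n^{-(\|J\|_1+\|a\|_1)/2}$ prefactor by exactly $n^{-1}$, matching the $1/n$ subtraction. The main (and only minor) obstacle is this bookkeeping around the $I$-dependent truncation and the parity constraints; the calculation closely parallels the analogous expansions in \cite{BHKKMP16, ghosh2020sum}, which is why the paper describes the proof as ``a simple expansion of the definition.''
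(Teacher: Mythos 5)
Your overall approach --- reduce the lemma to the Booleanity constraint and then check Booleanity directly from \eqref{eq:calib-calc} (or, equivalently, expand and interchange the $I$- and $a$-sums) --- is the route the paper's ``simple expansion of the definition'' is alluding to, and you are right to flag the $I$-dependence of the truncation $\totalsize^I(a)\leq\truncation$ as the one non-automatic point. The gap is that your resolution of that point does not hold up. Write $J'$ for the multi-index with $J'(i)=J(i)+2$ and $J'(j)=J(j)$ for $j\neq i$. Your claim that incrementing $J(i)$ by $2$ ``leaves \dots the relevant truncation structure unchanged'' fails precisely when $J(i)=0$: then $\supp(J')=\supp(J)\cup\{i\}\supsetneq\supp(J)$, so for any $a$ with $a_u(i)=0$ for every $u$ the count $\left\vert\{j: J'(j)>0\text{ or }\exists u\,a_u(j)>0\}\right\vert$ is one larger than the corresponding count for $J$, and hence $\totalsize^{J'}(a)=\totalsize^{J}(a)+1$. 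The parity condition and the $n^{-(\normo{J}+\normo{a})/2}$ prefactor do match, but the two $a$-sums in $\pE(\vsos^{J'})$ and $\tfrac1n\pE(\vsos^{J})$ then range over different index sets: the latter retains boundary terms with $\totalsize^J(a)=\truncation$ and coordinate $i$ isolated that the former drops. These boundary terms are not generically zero (take any $a$ with $\normo{a_u}\geq k$ for every nonzero $a_u$ and with $\totalsize^J(a)$ exactly $\truncation$), so with the truncation read literally as stated, Booleanity holds only up to a truncation-boundary error, not exactly.

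Your ``approach 1'' hits the same wall in disguise: the Boolean-cube Fourier relations you invoke give $\sum_{I:\,I\bmod 2=R}c_I\,n^{-\normo{I}/2}=0$ over \emph{all} $I$, and restricting that sum to $\{I:\totalsize^I(a)\leq\truncation\}$ --- which removes different $I$'s for different $a$ --- is precisely what spoils the forced cancellation for boundary $a$, so the Fourier relations alone are not ``exactly the relations needed.'' To make the argument exact one needs the truncation to depend on $I$ only through $I\bmod 2$ (for instance, replace ``$I(j)>0$'' in the square-vertex count of $\totalsize^I(a)$ by ``$I(j)$ is odd''), since that quantity really is invariant under $J\mapsto J'$; alternatively, one can quantify the boundary discrepancy and absorb it into the paper's truncation-error budget. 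This subtlety is arguably latent in the paper's own very terse treatment, but since your write-up makes the false invariance claim the load-bearing step, it is a genuine gap in the proposal.
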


As for the soft NGCA constraints, we will show that for any low-degree Hermite polynomial evaluated on the inner product $x\cdot v$, with high probability the pseudo-expectation value is close to the expectation under $A$ (\Cref{lem:lowdegtest}). Below, recall that $h_j(x)$ denotes $He_j(x)$. We will show the following lemmas about the pseudo-expectation of $h_j(x \cdot \vsos)$. We first prove a lemma giving us the expansion of $h_j(x \cdot \vsos)$. 

\begin{fact}
For all vectors $v \in \{-\frac{1}{\sqrt{n}},\frac{1}{\sqrt{n}}\}^{n}$, all vectors $x \in \mathbb{R}^n$ and all $j \in \mathbb{N}$, 
\[
h_j(x \cdot v) = j!\sum_{J\in\N^n:\ \normo{J} = j}\ 
{\Prod_{i\in[n]}{\left(\frac{h_{J(i)}(x_i)}{{J(i)}!n^{\lfloor {J(i)\over 2}\rfloor}}{v_i}^{J(i)\text{ mod 2}}\right)}}.
\]
\end{fact}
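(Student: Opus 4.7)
The plan is to prove this identity via the exponential generating function for the probabilist's Hermite polynomials, namely $\sum_{j\geq 0}h_j(y)\,t^j/j! = \exp(ty-t^2/2)$. Setting $y=x\cdot v$, the left-hand side becomes
\[
\exp\!\Bigl(t(x\cdot v) - \tfrac{t^2}{2}\Bigr).
\]
The crucial structural observation is that $v\in\{\pm 1/\sqrt{n}\}^n$ has unit Euclidean norm, so $\sum_i v_i^2 = 1$. This lets me split the quadratic term coordinatewise and factor the exponential:
\[
\exp\!\Bigl(t(x\cdot v) - \tfrac{t^2}{2}\Bigr) = \prod_{i=1}^n \exp\!\Bigl((tv_i)x_i - \tfrac{(tv_i)^2}{2}\Bigr).
\]

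Next, I apply the univariate Hermite generating function to each factor with the substitution $u_i = tv_i$, obtaining
\[
\prod_{i=1}^n \sum_{J(i)\geq 0} h_{J(i)}(x_i)\,\frac{(tv_i)^{J(i)}}{J(i)!}
= \sum_{J\in\mathbb{N}^n} t^{\|J\|_1}\prod_{i=1}^n \frac{h_{J(i)}(x_i)\,v_i^{J(i)}}{J(i)!}.
\]
Equating the coefficient of $t^j$ on both sides gives
\[
\frac{h_j(x\cdot v)}{j!} = \sum_{J\in\mathbb{N}^n:\,\|J\|_1=j}\ \prod_{i=1}^n \frac{h_{J(i)}(x_i)\,v_i^{J(i)}}{J(i)!}.
\]

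The final step is to rewrite $v_i^{J(i)}$ in the form appearing in the statement. Since $v_i^2 = 1/n$ for every $i$, I have the identity $v_i^{J(i)} = (v_i^2)^{\lfloor J(i)/2\rfloor}\cdot v_i^{J(i)\bmod 2} = n^{-\lfloor J(i)/2\rfloor}\,v_i^{J(i)\bmod 2}$. Substituting this in and multiplying both sides by $j!$ yields exactly the claimed expansion. There is no real obstacle here: the argument is a short generating-function computation, with the only non-trivial ingredient being the use of $\|v\|_2 = 1$ to decouple the quadratic correction across coordinates, and of $v_i^2 = 1/n$ to separate the parity and magnitude of each factor $v_i^{J(i)}$.
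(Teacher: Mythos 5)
Your proof is correct and takes essentially the same route as the paper: expand the Hermite generating function at $y = x\cdot v$, use $\|v\|_2^2=1$ to factor the exponential coordinatewise, extract the coefficient of $t^j$, and finally apply $v_i^2 = 1/n$ to separate each $v_i^{J(i)}$ into a power of $1/n$ and a parity factor. No gaps.
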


\begin{proof}
The Hermite generating function is $\exp\left(xt-{t^2\over 2}\right)=\Sum_{j}h_j(x){t^j\over j!}$. 
We have the identity $(x\cdot v)t-{t^2\over 2}=\Sum_{i=1}^n\left(x_i(v_it)-{(v_it)^2\over 2}\right)$ if $v_1^2+\ldots+v_n^2=1$, so by taking the exponential, 
\[\Sum_{j=0}^\infty h_j\left(x\cdot v\right){t^j\over j!}=\Prod_{i=1}^n\left(\Sum_{j=0}^\infty h_j(x_i){(v_i t)^j\over j!}\right).\]
By comparing the $j$th power of $t$ we see that $h_j(x\cdot v)=j!\Sum_{J\in\N^n:\ J(1)+\dots+J(n)=j}\ \Prod_{i=1}^n\left({h_{J(i)}(x_i)\over J(i)!}v_i^{J(i)}\right)$. Finally, notice that $v_i^2={1\over n}$ for each $i$.
\end{proof}

As a corollary to this, we obtain the expansion of $\frac 1 m \sum_{u=1}^m \pE \left(h_j(x_u \cdot \vsos) \right)$. 

\begin{corollary}\label{cor:hermitepseudoexpectation}
For $J\in\N^n$, let $J_{\text{mod 2}}\in\{0,1\}^n$ be such that $J_{\text{mod 2}}(i)=J(i)\text{ mod 2}$. Then 
\[
\frac{1}{m}\sum_{u=1}^{m}\pE\Big(h_j(x_u \cdot \vsos)\Big) 
=\frac{1}{m}\sum_{u=1}^{m}{j!\sum_{J:\ J\in\N^n,\ \normo{J} = j}{\left(\prod_{i=1}^n {\frac{h_{J(i)}\big((x_u)_i\big)}{J(i)!n^{\lfloor{J(i)\over 2}\rfloor}}}\cdot\pE\left( \vsos^{J_{\text{mod 2}}}\right)\right)}}, \ \ \forall x_1,\dots,x_m\in\R^n.
\]
\end{corollary}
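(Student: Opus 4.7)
The plan is to reduce this to an essentially routine application of the preceding Fact together with $\R$-linearity of $\pE$ and \Cref{lem:pseudo-calib}. For each fixed sample $x_u$, view both sides of the identity in the Fact as polynomials in the indeterminates $\vsos_1,\ldots,\vsos_n$ whose coefficients are (fixed) real numbers determined by $x_u$, namely
\[
h_j(x_u \cdot \vsos) \;=\; j!\!\!\!\sum_{J\in\N^n:\,\normo{J}=j}\Bigg(\prod_{i=1}^n \frac{h_{J(i)}((x_u)_i)}{J(i)!\,n^{\lfloor J(i)/2\rfloor}}\Bigg)\,\vsos^{J_{\text{mod 2}}}.
\]
The Fact guarantees that this identity holds at every point $v\in\{\pm 1/\sqrt n\}^n$, so its two sides differ by a polynomial in $\vsos$ of $\vsos$-degree at most $j$ that vanishes identically on the support of the planted distribution.

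Next, assuming $j\leq \dsos$ (which is the setting in which the soft NGCA axiom is invoked), apply \Cref{lem:pseudo-calib} to this difference polynomial to conclude that its pseudo-expectation is zero. Using $\R$-linearity of $\pE$ to pull out the coefficients (which depend only on $x_u$ and $J$, not on $\vsos$), and noting that the single number $\pE(\vsos^{J_{\text{mod 2}}})$ does not depend on the product index $i$ and can therefore be moved outside $\prod_i$, this yields
\[
\pE\bigl(h_j(x_u\cdot\vsos)\bigr)\;=\;j!\!\!\!\sum_{J\in\N^n:\,\normo{J}=j}\Bigg(\prod_{i=1}^n\frac{h_{J(i)}((x_u)_i)}{J(i)!\,n^{\lfloor J(i)/2\rfloor}}\Bigg)\cdot \pE\bigl(\vsos^{J_{\text{mod 2}}}\bigr).
\]
Averaging over $u\in[m]$ produces the stated identity.

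I do not anticipate any genuine obstacle: the only piece of bookkeeping is to verify that the difference polynomial has $\vsos$-degree at most $\dsos$ so that \Cref{lem:pseudo-calib} is applicable, which is immediate from the degree bound on $j$ in the standing context (the corollary is only used for $j\leq \dsos$). Everything else is linearity and relabeling.
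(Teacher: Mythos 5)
Your proof is correct and is precisely the argument the paper leaves implicit (no explicit proof is given for the corollary; it is simply asserted to follow from the Fact). You identify the one genuinely necessary observation — that the Fact's polynomial identity holds pointwise on $\{\pm 1/\sqrt n\}^n$, so the difference of the two sides is a $\vsos$-polynomial of degree at most $j\leq\dsos$ that vanishes on the planted support, hence is annihilated by $\pE$ via \Cref{lem:pseudo-calib} — and the rest is linearity and averaging, exactly as intended.
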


We can now sketch the proof of the fact that the pseudo-expectation values ``fool'' Hermite polynomial tests up to degree $D$. 

\begin{lemma}[Hermite Tests in the Hidden Direction]\label{lem:lowdegtest}
For all Hermite polynomials $h_j$ of degree at most $\dsos$,
with high probability, 
$\left|\frac{1}{m}\Sum_{i=1}^{m}\pE\left(h_j(x \cdot \vsos)\right) - \E\limits_{a \sim A}[h_j(a)]\right|$ is $o(1)$. More precisely,
\[
\left|\frac{1}{m}\sum_{i=1}^{m}\pE\Big(h_j(x_i \cdot \vsos)\Big) - \E\limits_{a \sim A}[h_j(a)]\cdot\pE(1)\right| \leq \frac{\polylog(m)}{\sqrt{m}}.
\]
\end{lemma}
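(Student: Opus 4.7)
The plan is a second-moment argument. Define
\[Z := \tfrac{1}{m}\sum_{u=1}^m \pE(h_j(x_u\cdot \vsos)) - \E_A[h_j]\cdot \pE(1),\]
and aim to show $\E_{\Dref}[Z]=0$ exactly and $\E_{\Dref}[Z^2]=O(1/m)$, then apply Chebyshev's inequality. For a stronger ``high probability'' guarantee one could upgrade via Gaussian hypercontractivity, since $Z$ is a bounded-degree polynomial in the Gaussian entries of the input.

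For the mean, I would use the defining identity of pseudo-calibration, $\E_{\Dref}[t(x)\pE(\vsos^I)] = \E_{\Dpl}[t(x)v^I]$, which holds for every test $t$ whose Hermite support lies below the truncation threshold $\truncation$. Setting $t \equiv 1$ and applying this term-by-term to the expansion in \Cref{cor:hermitepseudoexpectation} gives $\E_{\Dref}[\pE(h_j(x_u\cdot \vsos))] = \E_{\Dpl}[h_j(x_u\cdot v)]=\E_A[h_j]$ (because $x_u \cdot v \sim A$ under $\Dpl$) and $\E_{\Dref}[\pE(1)]=1$, so $\E_{\Dref}[Z]=0$.

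For the variance, write $W_u:=\pE(h_j(x_u\cdot \vsos))$ and expand
\[
\E_{\Dref}[Z^2] = \tfrac{1}{m^2}\sum_{u,u'}\E[W_uW_{u'}] - \tfrac{2\E_A[h_j]}{m}\sum_u \E[W_u\pE(1)] + \E_A[h_j]^2\E[\pE(1)^2].
\]
Each quadratic expectation is evaluated by a second application of pseudo-calibration, with one factor playing the role of the test function. Modulo truncation error one obtains $\E[W_uW_{u'}] = \E_{\Dpl}[h_j(x_u\cdot v)h_j(x_{u'}\cdot v)]$, $\E[W_u\pE(1)]=\E_A[h_j]$, and $\E[\pE(1)^2]=1$. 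Under $\Dpl$, conditional on $v$ the samples are independent and each has projection on $v$ distributed as $A$, so the first quantity equals $\E_A[h_j^2]$ when $u=u'$ and $\E_A[h_j]^2$ otherwise. After substitution the leading terms cancel and the remainder is $\E_{\Dref}[Z^2] \le \Var_A[h_j]/m + (\text{truncation error}) = O(\cu^{2j}/m)$ by \Cref{def:cucl}. Chebyshev then delivers $|Z|\le \polylog(m)/\sqrt{m}$ with high probability.

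The main obstacle I expect is controlling the truncation error in the quadratic identities. Pseudo-calibration is exact only for test functions with Hermite support below $\truncation$, but a product $W_uW_{u'}$ can produce Hermite terms of total size approaching $2\truncation$, exceeding the exact range of validity. I plan to handle this by exploiting that the Hermite coefficients of $\pE(\vsos^I)$ are given by planted moments $\E_{\Dpl}[v^I h_a/\sqrt{a!}]$, which decay fast enough in $\normo{a}$ under the moment-growth hypothesis $\cu,\cl \le \polylog(n)$ implicit in \Cref{thm:main-formal}, so that the dropped tail contribution is $o(1/m)$ overall.
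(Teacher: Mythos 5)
Your strategy is genuinely different from the paper's: the paper bounds $Z$ directly by expanding into ribbons and invoking the graph-matrix norm bounds of \Cref{thm:norm_control}, whereas you propose a second-moment argument. The mean computation is fine: applying $\E_{\Dref}[t(x)\pE(\vsos^I)] = \E_{\Dpl}[t(x)v^I]$ with $t$ the explicit test polynomials from \Cref{cor:hermitepseudoexpectation} gives $\E_{\Dref}[\pE(h_j(x_u\cdot\vsos))]=\E_A[h_j]$ exactly (within truncation, which is comfortably satisfied since $\truncation\geq 50\dsos^2$), so $\E_{\Dref}[Z]=0$.

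The variance step, however, contains a real gap. The claim that ``a second application of pseudo-calibration, with one factor playing the role of the test function'' yields $\E[W_uW_{u'}]=\E_{\Dpl}[h_j(x_u\cdot v)h_j(x_{u'}\cdot v)]$, $\E[W_u\pE(1)]=\E_A[h_j]$, and $\E[\pE(1)^2]=1$ modulo truncation is not correct, and the discrepancy is not a tail/truncation issue. Pseudo-calibration is a \emph{first}-moment identity: it tells you $\E_{\Dref}[t(x)\cdot\pE(\vsos^I)]$ for a fixed polynomial test $t$, but it does not tell you $\E_{\Dref}[\pE(\vsos^I)\,\pE(\vsos^J)]$. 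If one carries out the legitimate move of writing $W_{u'}$ via \Cref{cor:hermitepseudoexpectation} and applying the identity, one lands at $\E_{\Dref}[W_uW_{u'}]=\E_{\Dpl}[W_u(x)\,h_j(x_{u'}\cdot v)]$, with $W_u$ still appearing as a polynomial of all the samples on the right-hand side; this is not $\E_{\Dpl}[h_j(x_u\cdot v)h_j(x_{u'}\cdot v)]$, because $W_u(x)$ is (the low-degree $\Dref$-projection of) the posterior mean $\E_{\Dpl}[h_j(x_u\cdot v)\mid x]$, not the planted quantity $h_j(x_u\cdot v)$ itself. Concretely, in the untruncated case $\pE(\vsos^I)(x)=\E_v[v^I L_v(x)]$ where $L_v=d\Dpl(\cdot\mid v)/d\Dref$, and so $\E_{\Dref}[\pE(\vsos^I)\pE(\vsos^J)]=\E_{v,v'}[v^I (v')^J\!\int\! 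L_v L_{v'}\,d\Dref]$ is a $\chi^2$-overlap quantity involving two independent hidden directions, which is structurally different from $\E_{\Dpl}[v^{I+J}]$. Already for $I=J=\emptyset$, $\E_{\Dref}[\pE(1)^2]-1$ equals the low-degree $\chi^2$-divergence, which is $o(1)$ under the sample bound but is not $O(1/m)$ in general.

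The upshot is that establishing $\E_{\Dref}[Z^2]=O(\polylog(m)/m)$ is not a corollary of the defining identity; it requires showing that, in the Hermite expansion of $Z=\frac{1}{m}\sum_u W_u - \E_A[h_j]\pE(1)$, the coefficients on each Hermite character nearly cancel. That cancellation is precisely the combinatorial content of the paper's proof (Case 2 of the ribbon comparison, where matched edges produce the $\E_A[h_j]\cdot\frac{m-|V_\circ|}{m}$ factor, and the uncancelled remainders are bounded by edge-factor assignments). So the second-moment route is viable in principle, but as written it hides the entire difficulty inside the assertion that the quadratic moments equal planted quantities, and that assertion does not follow from pseudo-calibration. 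You would need to redo the ribbon-level cancellation argument inside the $\E[Z^2]$ computation, at which point you are doing essentially the same work as the paper with the extra overhead of Chebyshev or hypercontractivity on top.
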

\begin{proof}[Proof sketch] 
In this proof we assume the definitions and properties of graph matrices from \Cref{sec:graph}. 
We view the polynomials as ribbons with $U=V=\emptyset$. 
We will compare the ribbons, with coefficients, that appear in $\frac{1}{m}\sum_{i=1}^{m}\pE\left(h_j(x \cdot v)\right)$ and $\E_{a \sim A}[h_j(a)]\pE(1)$.

By Corollary \ref{cor:hermitepseudoexpectation}, we can obtain the (perhaps improper) ribbons appearing in $\pE\Big(h_j(x_i \cdot \vsos)\Big)$ by starting with the ribbons appearing in $\pE\left(\vsos^{J_{mod\ 2}}\right)$ and adding a circle with label $u$ and edges with label $J(i)$ from this circle to the square vertices with label $i$. We multiply some additional factors to the ribbon coefficient too: a factor of $\frac{1}{m}$ which we think of as associated with the circle vertex with label $u$, a factor of $\frac{1}{n^{\floor{J(i)/2}}}$ associated with each of the added edges, and a factor of $\frac{j!}{\prod_{i=1}^n J(i)!}$.

Consider the set of edges incident to the circle vertex with label $u$ in a ribbon for  $\pE\left(\vsos^{J_{mod\ 2}}\right)$. There are a few cases:
\begin{enumerate}[leftmargin=*]
    \item These edges do not match the added edges. In this case, we can use an edge factor assignment scheme to show that the resulting term has negligible norm upper bound from \Cref{thm:norm_control}. For the edges incident to the circle vertex with label $u$, we assign all weight to the square vertex. For other edges, we split the weight evenly between the two endpoints. It is not hard to check that in this case, the resulting norm is $\widetilde{O}(\frac{1}{\sqrt{m}})$ because the circle vertex with label $u$ is not isolated and because $U=V=\emptyset$, i.e., each square vertex has degree at least $2$ in the starting ribbon in $\pE\left(\vsos^{J_{mod\ 2}}\right)$.

    \item These edges exactly match the added edges. Note that for each matched edge $e$, if its label is $l$, then right after adding edges there is a factor $h_{l}(x_e)\cdot h_{l}(x_e)$ (one newly added and one from the ribbon in $\pE\left(\vsos^{J_{mod\ 2}}\right)$). When we expand this multi-edge into single edges using the Hermite expansion of $h_l\cdot h_l$, the coefficient of $h_0$ is precisely $l!$. Other $\{h_j(x)\mid j>0\}$ from the expansion make the shape norm $\widetilde{O}({1\over \sqrt{m}})$ like in case 1, and they go to the error terms. If we look at the resulting shape $R'$ from this process that removes a fixed circle vertex $u$ and the matched edges, its additional factor besides the coefficient from $\pE(1)$ is: 
    \[\Sum_{J\in\N^n:\ \normo{J}=j}\left(\frac{j!}{\prod_{i\in[n]}\Big(J(i)!\cdot n^{\floor{J(i)/2}}\Big)}\cdot\frac{\E_{x\sim A}[h_j(x)]}{\sqrt{n}^{|J_{mod\ 2}|+\normo{J}}\prod_{i\in[n]}{J(i)!}}\cdot\ {\prod_{i\in[n]}{J(i)!}}\right).\] 
    Here, notice that from different ribbons in $\pE(\vsos^{J_{\text{mod 2}}})$ we can get the same $R'$ in the described process, by using different $J$ to add the edges. Since $n^j=(1+\dots+1)^j=\Sum_{J\in\N^n:\ \normo{J}=j}\frac{j!}{J(1)!\cdots J(n)!}$, the expression above simplifies to $\E_{A}[h_j]$. Summing over $u\in [m]$ and dividing by $m$, this gives $\E_A[h_j]\cdot\frac{m-|V_{\circle}(R)|}{m}$ times the coefficient in $\pE(1)$. By a similar argument to case 1 we can show that the difference part, collected from all $R'$, has norm at most $\widetilde{O}({1\over\sqrt{m}})$. 
\end{enumerate} 
This shows that for when we sum over the ribbons for $\pE\left(\vsos^{J_{mod\ 2}}\right)$, the contributions from the edges incident to the circle vertex labeled $u$ gives $\E_{a \sim A}[h_j(a)]$ plus an error of magnitude $\widetilde{O}(\frac{1}{\sqrt{m}})$. Note that these ribbons may also have edges incident to other circle vertices but such terms are matched by the corresponding terms in $\E_{a \sim A}[h_j(a)]\pE(1)$. More precisely, if we consider the ribbons for $\pE\left(\vsos^{J_{mod\ 2}}\right)$ which contain a given set $E$ of edges which are incident to other circle vertices, this matches (up to error $\widetilde{O}(\frac{1}{\sqrt{m}})$) the term in $\E_{a \sim A}[h_j(a)]\pE(1)$ where the ribbon for $\pE(1)$ has edges $E$ and has no edges incident to the circle vertex with label $u$. 
Finally, we note that the sum of the terms in $\pE(1)$ where there is an edge incident to the circle vertex with label $u$ is $\widetilde{O}(\frac{1}{\sqrt{m}})$.
\end{proof}

\subsection{Algebra Preliminaries}
\label{sec:algebra}

We close this section with some algebraic notions that will be used in \Cref{sec:psdness-qSS}.

\begin{definition}[Algebra Definitions]
\label{def:algebra} 
A vector space $A$ over a field $F$ together with a binary operation ``$\cdot$'':  $A \times A \to A$ is a \emph{unital associative algebra over $F$}, or an \emph{$F$-algebra} in short, if the following conditions are satisfied for all $x, y, z \in A$, all $a, b \in F$, and some special element $1_A\in A$:
\begin{enumerate}[wide, labelindent=18pt]
    \item[(Unit)] $1_A\cdot x=x=x\cdot 1_A$;
    \item[(Linearity)] $(ax) \cdot (by) = (ab) (x \cdot y)$;
    \item[(Distributivity)] $(x+y)\cdot z= x\cdot z + y\cdot z$, and $z\cdot (x+y)=z\cdot x + z\cdot y$;
    \item[(Associativity)] $(x\cdot y)\cdot z = x\cdot(y\cdot z)$.
\end{enumerate}
We call the binary operation ``$\cdot$'' the \emph{multiplication} in $A$ and ``$1_A$'' the \emph{unit} in $A$.

A \emph{homomorphism} between two $F$-algebras $A$ and $B$ is an $F$-linear map $\rho:A\to B$ that preserves unit and multiplication, i.e., $\rho(1_A)=1_B$ and $\rho\left(x\cdot_A y\right)=\rho(x)\cdot_B \rho(y)$ for all $x,y\in A$. 
An \emph{isomorphism} is a bijective homomorphism. 
We say $A,B$ are isomorphic, denoted by $A\simeq B$, if there is an isomorphism between them.

The \emph{direct sum} of two $F$-algebras $A$ and $B$, denoted by $A\oplus B$, is the direct sum of vector spaces with unit $(1_A,1_B)$ and multiplication $(x,y)\cdot(x',y'):=(x\cdot_A x', y\cdot_B y')$.

\end{definition} 

\begin{definition}[Representation]\label{def:representation}
A representation of an $F$-algebra $A$ is a homomorphism $\rho: A \rightarrow End_F(V)$, where $V$ is an $F$-vector space, $End_F(V)$ is the $F$-algebra consisting of all $F$-linear maps from $V$ to $V$ where the unit is the identity map and the multiplication is defined by composition. The dimension of $\rho$ is the dimension of $V$. Note that if $\dim(V)=n$ then $End_F(V)\simeq \mathbb{M}_n(F)$ where the latter is the algebra of all $n$-by-$n$ matrices over $F$. 

A representation $\rho$ is \emph{irreducible} if there is no proper nonzero subspace $W$ of $V$ such that $\rho(A)(w)\in W$ for all $w\in W$. 
Two representations $(\rho_1,V_1)$, $(\rho_2,V_2)$ are isomorphic if there is a bijective map $f:V_1\to V_2$ such that $\big(\rho_2(x)\circ f\big) (v)=\big(f\circ\rho_1(x)\big) (v)$ for all $x\in A$ and all $v\in V_1$. 
\end{definition}

The rest of the paper is devoted to proving \Cref{thm:main-formal}.

        \section{Graph Matrices and Factorization of the Moment Matrix}
\label{sec:graph}
Graph matrices, defined in \cite{ahn2021graphmatricesnormbounds}, are large-size random matrices whose entries are symmetric functions of the input variables. Their utility is due to the fact that the spectral norm of these matrices can be upper bounded in terms of properties of the graph that indexes them. 

In this section, we start by recalling standard definitions about graph matrices and introducing some new notions. We then introduce simple spiders and simple spider disjoint unions, which will be the main objects of our analysis. Then, in \Cref{sec:tools}, we recall some standard tools for the PSDness analysis and introduce a three-way decomposition of shapes (and graph matrices) based on minimum square separators. In \Cref{sec:recursive}, we use this decomposition recursively to obtain a matrix factorization $M\approx LQL^{\top}$ for some $L$ and $Q$ where  $M$ is $M_{\pE}$ rescaled for technical convenience. 

\begin{remark}
This section is similar to previous sum of squares analyses for average-case problems, except for \Cref{def:SS}, \Cref{def:disj}, 
and the use of minimum square separators (\Cref{def:separators}).
\end{remark}

\begin{definition}[Graph Matrix Definitions]
\label{def:graph_matrix}
The matrices we will consider are indexed by bipartite graphs between two fixed vertex sets: $\cC_m$ and $\cS_n$, where $\cC_m$ denotes \emph{circle vertices} whose elements are denoted by $\circled{u}$ where $u\in[m]$ (representing a sample), and $\cS_n$ denotes \emph{square vertices} whose elements are denoted by $\squared{i}$ where $i\in [n]$ (representing a coordinate). 
\begin{enumerate}
    \item (Matrix indices) 
    A matrix index is a subset of $\cC_m \cup \cS_n$, representing a row or a column of the matrices we consider. In this paper, we will only use matrix indices that are subsets of $\cS_n$.
        
    \item (Ribbon) A ribbon is an undirected, edge-labeled graph $R = (V(R), E(R), U_R, V_R)$ where $V(R) \subseteq \cC_m \cup \cS_n$, each edge $e \in E(R)$ goes between a square and a circle vertex and has a weight label $l(e)\in\N$. $U_R,V_R\subseteq V(R)$ are viewed as two matrix indices\footnote{A matrix index here is a subset of vertices. In general, a matrix index can contain additional information; see e.g., \Cref{sec:parity}.}, we call them the left- and right- index of the ribbon respectively, which may or may not be disjoint.

    \item (Ribbon matrix) Given a ribbon $R = (V(R), E(R), U_R, V_R)$, the ribbon matrix $M_R$ has rows and columns indexed by matrix indices, and it has a single nonzero entry: 
	\begin{equation}\label{eq:ribbonmatrix}
            M_R(U_R, V_R) = \begin{cases*}
			\displaystyle\prod_{\substack{e = \left\{\squared{i}, \circled{u}\right\} \in E(R)}} h_{l(e)}(x_{u,i}), &  if $I = U_R$ and $J = V_R$;\\
			0, & \text{otherwise.}
		\end{cases*}
        \end{equation}
        
    \item (Shape vertices)
    Fix a set of fresh symbols of two sorts, one sort ``circle'' and the other ``square'', both having infinite supplies. A shape vertex is such a symbol, not to be confused with $\cC_m\cup\cS_n$.
        
    \item (Shapes) A shape is an undirected, edge-labeled graph $\alpha = (V(\alpha), E(\alpha), U_\alpha, V_\alpha)$ where $V(\alpha)$ is a set of shape vertices and $U_\alpha, V_\alpha \subseteq V(\alpha)$. We call $U_\alpha$, $V_\alpha$ the left- and right- index of $\alpha$ respectively. Note that $U_\alpha$ and $V_\alpha$ may have elements in common. We denote the set of square and circle vertices of $\alpha$ by $V_\square(\alpha)$ and $V_{\bigcirc}(\alpha)$, respectively.
  
    In this paper, we will only use shapes $\alpha$ where $U_\alpha \cup V_\alpha$ contains only square vertices and each edge $e \in E(\alpha)$ is between a square and a circle vertex in $V(\alpha)$ and has an edge-weight $l(e) \in \N$. 
    \item (Shape of a ribbon) We say that a ribbon $R = (V(R), E(R), U_R, V_R)$ has shape $\alpha$ if there is a bijection $\pi:V(\alpha)\to V(R)$ such that $\pi$ maps square shape vertices to $\cS_n$ and circle shape vertices to $\cC_m$, $\pi(I) = U_R$, $\pi(J) = V_R$, $\pi(E(\alpha)) = E(R)$ where all edge-weights match.
		
    \item (Middle vertices, isolated vertices, and being improper) 
    We define the middle vertices of a ribbon $R$ to be $W_R:=V(R)\backslash(U_R\cup V_R)$. We denote by $I_s(R)\subseteq W_R$ the set of middle vertices that are isolated in the underlying graph of $R$. We say $R$ is \emph{proper} if $I_s(R) =\emptyset$ and $R$ has no multi-edges.

    We make analgous definitions for shapes.
         
    \item (Trivial shapes) We say that a shape $\alpha = (V(\alpha), E(\alpha), U_\alpha, V_\alpha)$ is trivial if $V(\alpha) = U_\alpha = V_\alpha$ and $E(\alpha) = \emptyset$.
    
    \item (Transpose) The transpose of a shape $\alpha = (V(\alpha), E(\alpha), U_\alpha, V_\alpha)$ is $\alpha^{\top} = (V(\alpha), E(\alpha), V_\alpha, U_\alpha)$. 
        
    \item (Graph matrices) Given a shape $\alpha = (V(\alpha), E(\alpha), U_\alpha, V_\alpha)$, the graph matrix $M_\alpha$ is 
	\begin{equation}\label{eq:graphmatrix}
        M_\alpha = \sum_{R:\text{ ribbon of shape $\alpha$}} M_R 
    \end{equation}
    \end{enumerate}
\end{definition}

\begin{definition}[Parameters of a shape]\label{def:shape_param} 
We define the following parameters for a shape. 
\begin{enumerate}
    \item (Size) The size $|E|$ of an edge set $E$ is the set cardinality. The size $|V|$ of a vertex set $V$ is the set cardinality.
    
    \item (Weight) The weight $w(E)$ is the sum of the edge-labels. The weight of a square vertex is 1. The weight of a circle vertex is $\log_n m$. The weight $w(V)$ of a vertex set $V$ is the sum of the weights of its members.
    
    \item (Total size) The total size of a ribbon $R$ is $\totalsize(R):=|V(R)|+w(E(R))$. The total size of a shape $\alpha$ is  $|V(\alpha)|+w(E(\alpha))$. 
    Note that each vertex contributes $1$ to the total size regardless of whether it is a square vertex or a circle vertex.
    
    \item (Degree) The degree $\deg_R(v)$ of a vertex $v$ in a ribbon $R$ is the sum of all weights of the edges incident to $v$. The total degree of $v$ is $\deg(v)+1_{v\text{\emph{ is in }} U_R}+1_{v\text{\emph{ is in }}V_R}$, where ``$1_{P}$'' is the indicator function of predicate $P$, which is 1 if $P$ is true and 0 otherwise. The definitions for shapes are similar.
\end{enumerate}
\end{definition}

Note that if we view the edge set of a ribbon $R$ as a vector $a\in (\N^n)^m$ via $a_u(i):=l\left(\{\circled{u},\squared{i}\}\right)$ where $u\in[m]$, $i\in[n]$, then the weight $w(E(R))$ is precisely $\normo{a}$.

Recall that the moment matrix $M_{\pE}$ is obtained by letting its $(I,J)$th entry be $\pE(\vsos^{I+J})$, whose expression in the following corollary follows from inspecting \Cref{lem:calib1} using the Booleanity property in \Cref{lem:pseudo-calib}.

\begin{definition}[Scaling coefficients, Hermite coefficients]\label{def:coeff}
Given a shape $\alpha$, we define the scaling coefficient $\lambda_{\alpha} = {n^{-{w(E(\alpha))\over2}}}$ and the Hermite coefficient $\eta_\alpha = \left(\Prod_{v \in V_{\bigcirc}(\alpha)
}\E_A[h_{\deg_\alpha(v)}]\right)/\left(\Prod_{e\in E(\alpha)}l(e)!\right)$.

To each shape $\alpha$, we associate the scaled graph matrix $\lambda_{\alpha}M_{\alpha}$. Instead of always writing $\lambda_{\alpha}M_{\alpha}$ in a sum or product of matrices, we will sometimes just write $\alpha$ when it is clear that we mean the scaled graph matrix (see \Cref{remark:shorthand}).
\end{definition}

\begin{remark}
We will often have large linear combinations of graph matrices. When we refer to a term in such a linear combination, we mean a graph matrix times its coefficient.
\end{remark}

\begin{corollary}[Moment matrix $M_{\pE}$]\label{cor:graph_matrix_expansion}
The moment matrix from pseudo-calibration \eqref{eq:calib} is 
\begin{equation}\label{eq:Mscale}
M_{\pE}=\diag(n^{-{|I|\over 2}})\cdot M\cdot \diag(n^{-{|I|\over 2}}),
\end{equation}
where 
\begin{equation}\label{eq:ME}
M = \Sum_{\substack{
\text{$\alpha$: proper shape where all square vertices have even total degree}\\
|U_\alpha|\leq \dsos,\ |V_\alpha|\leq\dsos,\ {\totalsize(\alpha)\leq \truncation}}}
{\eta_{\alpha}(\lambda_\alpha M_\alpha)}.
\end{equation}
\end{corollary}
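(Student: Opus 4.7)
The plan is to start from the pseudo-calibration formula in \Cref{lem:calib1}, apply it directly with the exponent vector $I+J \in \{0,1,2\}^n$, and then reorganize the resulting sum by setting up a bijection between admissible $a \in (\N^n)^m$ and proper ribbons with left index $I$ and right index $J$. Grouping ribbons by their shape then collects the contributions into graph matrices $M_\alpha$, and the scalar factors $\lambda_\alpha, \eta_\alpha$, together with the overall diagonal rescaling in \eqref{eq:Mscale}, fall out as coefficient bookkeeping.

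To set up the bijection, to each $a = (a_1, \ldots, a_m)$ I would associate the ribbon $R = R(I, J, a)$ whose left and right indices are $U_R = \{\squared{i} : i \in I\}$ and $V_R = \{\squared{j} : j \in J\}$, whose remaining vertices are the square vertices $\squared{i}$ having some $a_u(i) > 0$ and the circle vertices $\circled{u}$ having some $a_u(i) > 0$, and which has a single edge of label $a_u(i)$ between $\squared{i}$ and $\circled{u}$ whenever $a_u(i) > 0$. This is a bijection onto proper ribbons (no multi-edges, no isolated middle vertices) with the prescribed $U_R, V_R$. Under it, the summand of \Cref{lem:calib1} factors as $n^{-|I|/2} \cdot n^{-|J|/2} \cdot n^{-w(E(R))/2} \cdot \big(\prod_{v \in V_{\bigcirc}(R)} \E_A[h_{\deg_R(v)}] / \prod_{e \in E(R)} l(e)!\big) \cdot M_R(U_R, V_R)$, using the identities $\normo{a} = w(E(R))$, $|a_u| = \deg_R(\circled{u})$, and $\prod_u h_{a_u}(x_u) = \prod_e h_{l(e)}(x_{u,i}) = M_R(U_R, V_R)$ (where $h_0 \equiv 1$ absorbs indices $i$ with $a_u(i) = 0$). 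The middle three factors are exactly $\lambda_\alpha \eta_\alpha$ for the shape $\alpha$ of $R$, so summing ribbons with shape $\alpha$ yields $\eta_\alpha \lambda_\alpha M_\alpha(I, J)$, and peeling off the leading diagonal factors gives \eqref{eq:Mscale}.

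Finally, I would verify that the three constraints on shapes in \eqref{eq:ME} correspond exactly to the conditions on $a$: $\totalsize^{I+J}(a) \leq \truncation$ matches $\totalsize(\alpha) = |V(\alpha)| + w(E(\alpha)) \leq \truncation$ by inspection of \Cref{def:shape_param}; the parity requirement that $(I(i) + J(i)) + \sum_u a_u(i)$ is even translates to ``every square vertex has even total degree'' since $I(i) + J(i) = 1_{\squared{i} \in U_R} + 1_{\squared{i} \in V_R}$ and $\sum_u a_u(i) = \deg_R(\squared{i})$; and the moment matrix size bounds $|I|, |J| \leq \dsos$ become $|U_\alpha|, |V_\alpha| \leq \dsos$. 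The role of the Booleanity property (\Cref{lem:pseudo-calib}) is to license invoking \Cref{lem:calib1} at an exponent vector containing entries equal to $2$: reducing $\vsos_i^2$ to $1/n$ via Booleanity first and then applying the formula at $I \triangle J$ must agree with applying the formula directly at $I + J$, which is consistent because $n^{-\normo{I+J}/2} = n^{-|I \cap J|} \cdot n^{-\normo{I \triangle J}/2}$ and the parity/sum constraints on $a$ are unchanged. No step here is a genuine obstacle; the only care needed is in tracking the combinatorial factors in $\eta_R$ and matching the parity/total-size conditions with those defining proper shapes of bounded total size.
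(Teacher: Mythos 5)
Your proposal is correct and takes essentially the same route the paper intends: apply \Cref{lem:calib1} at the exponent vector $I+J$, identify admissible $a\in(\N^n)^m$ with proper ribbons having $(U_R,V_R)=(I,J)$, and group by shape so that the scalars assemble to $n^{-|I|/2}\lambda_\alpha\eta_\alpha\,n^{-|J|/2}$, with the parity and total-size constraints on $a$ translating exactly to ``all square vertices have even total degree'' and $\totalsize(\alpha)\le\truncation$. One small imprecision worth flagging in your Booleanity side remark: the truncation threshold $\totalsize^I(a)$ depends on the support of $I$ itself --- an index $i\in I\cap J$ with no incident edge is counted in $\totalsize^{I+J}(a)$ but not in $\totalsize^{I\triangle J}(a)$ --- so the asserted consistency between applying \Cref{lem:calib1} at $I+J$ versus at $I\triangle J$ is not literally exact at the truncation boundary; this does not affect the corollary, since your main argument applies \Cref{lem:calib1} directly at $I+J$ without invoking the reduction.
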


Our main task is to prove the PSDness of the matrix $M$, a rescaled version of the moment matrix which is slightly more convenient to work with. 
A special family of shapes will play a key role in the analysis of $M$, which we call \textit{simple spiders} and their disjoint unions.

\begin{definition}[Simple spiders]\label{def:SS}
A \emph{simple spider} (SS) is either a trivial shape, or a shape $\alpha=(V(\alpha),E(\alpha),U_\alpha, V_\alpha)$ that contains exactly one circle vertex, $V_\square(\alpha) = U_\alpha \cup V_\alpha$, every square vertex in $(U_\alpha\cup V_\alpha)\backslash (U_\alpha\cap V_\alpha)$ has a label-1 edge to the circle vertex, and there are no other edges.

We use $S(i,j;u)$ to denote the simple spider shape $\alpha$ multiplied by its scaling coefficient $\lambda_{\alpha}$, where there are $i$ left-legs, $j$ right-legs, and $u$ intersections. That is, $i = |U_\alpha\setminus V_\alpha|, j = |V_{\alpha}\setminus U_\alpha|, u = |U_\alpha \cap V_\alpha|$. 
\end{definition}

\begin{definition}[Simple spider disjoint union]\label{def:disj}
A {\it simple spider disjoint union} (SSD) is a disjoint set of simple spiders, whose left and right index is the union of the left indices and the right indices of the simple spiders, respectively. 

We use a multiset of simple spiders $\mset{S(a_i,b_i;u_i)\mid i\in[t]}$ to represent an SSD shape. Two multisets $\mset{S(a_i,b_i;u_i)\mid i\in[t]}$ and $\mset{S(a'_i,b'_i;u'_i)\mid i\in[t']}$ give the same SSD if $\Sum_{i=1}^t u_i=\Sum_{i=1}^{t'} u'_i$ and $\mset{(a_i,b_i)\mid a_i+b_i>0}=\mset{(a'_i,b'_i)\mid a'_i+b'_i>0}$.
%In particular, every SSD can be represented as a multi-set of simple spiders that have no non-trivial intersections and vertex-size 1 trivial simple spiders.

%The left- and right- size of an SSD $\mset{S_1,\dots,S_t}$ is $l(x)=\sum_{i=1}^t l(S_i)$, $r(x)=\sum_{i=1}^t r(S_i)$, respectively.
\end{definition}

Note that a simple spider disjoint union (including a simple spider) is always proper. 
Also, note that the scaling coefficient of an SSD shape is the product of those of its components. 

\begin{definition}[Left and right spiders]
A simple spider $S(i,j;u)$ is {\bf left} if $i\geq j$, and {\bf right} if $i\leq j$. An SSD is left or right if all its components are left or right, respectively. An SS-combination is left or right if all the summands are left or right, respectively, and similarly for SSD-combinations.
\end{definition}

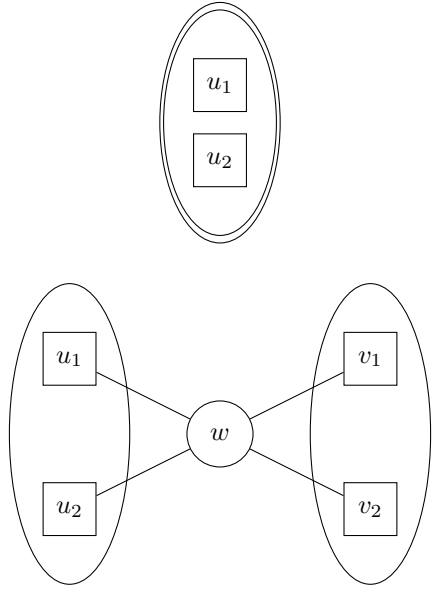
\begin{figure}
    \centering
        \begin{tikzpicture}[
      mycircle/.style={
         circle,
         draw=black,
         fill=white,
         fill opacity = 1,
         text opacity=1,
         inner sep=0pt,
         minimum size=25pt,
         font=\small},
      mysquare/.style={
         rectangle,
         draw=black,
         fill=white,
         fill opacity = 1,
         text opacity=1,
         inner sep=0pt,
         minimum height=20pt, 
         minimum width=20pt,
         font=\small},
      myarrow/.style={-Stealth},
      node distance=0.6cm and 1.2cm
      ]
      \draw (-2,0) ellipse (.8cm and 1.6cm);
      \draw (-2,0) ellipse (.75cm and 1.5cm);
      \node[mysquare]  at (-2, 0.5) (u1) {$u_1$};
      \node[mysquare]  at (-2, -0.5) (u2) {$u_2$};
      \end{tikzpicture}
      
\vspace{14pt}

    \begin{tikzpicture}[
      mycircle/.style={
         circle,
         draw=black,
         fill=white,
         fill opacity = 1,
         text opacity=1,
         inner sep=0pt,
         minimum size=25pt,
         font=\small},
      mysquare/.style={
         rectangle,
         draw=black,
         fill=white,
         fill opacity = 1,
         text opacity=1,
         inner sep=0pt,
         minimum height=20pt, 
         minimum width=20pt,
         font=\small},
      myarrow/.style={-Stealth},
      node distance=0.6cm and 1.2cm
      ]
      \draw (-2,0) ellipse (.8cm and 2cm);
      \draw (2,0) ellipse (.8cm and 2cm);

      \node[mysquare]  at (-2, 1) (u1) {$u_1$};
      \node[mysquare]  at (-2, -1) (u2) {$u_2$};
      \node[mycircle]  at (0, 0) (w) {$w$};
      \node[mysquare]  at (2, 1) (v1) {$v_1$};
      \node[mysquare]  at (2, -1) (v2) {$v_2$};
      \draw[-] (u1) -- (w);
      \draw[-] (u2) -- (w);
      \draw[-] (w) -- (v1);
      \draw[-] (w) -- (v2);
      \end{tikzpicture}
    \caption{Examples of simple spiders where the left and right indices have size two. If $k\geq 3$, the simple spider with one circle vertex and one label-1 edge to each side has coefficient 0 in $M$, so it is not drawn.}
    \label{fig:SS,1}
\end{figure}

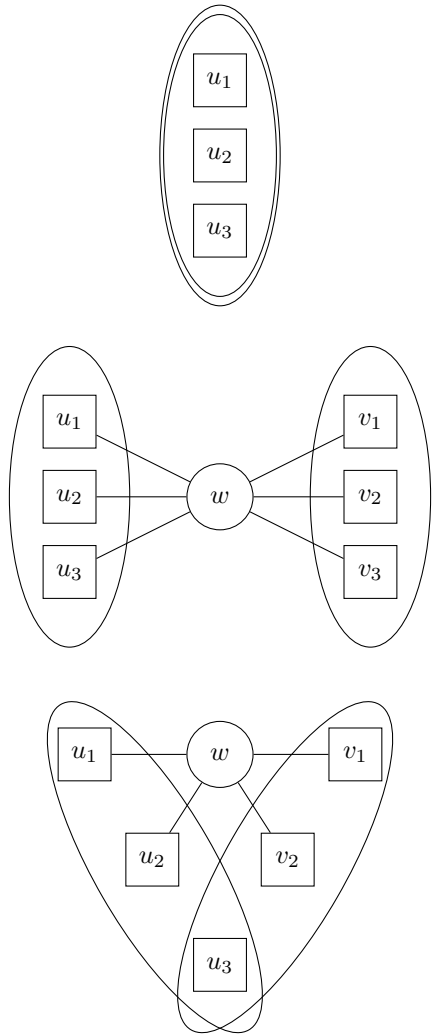
\begin{figure}
    \centering
        \begin{tikzpicture}[
      mycircle/.style={
         circle,
         draw=black,
         fill=white,
         fill opacity = 1,
         text opacity=1,
         inner sep=0pt,
         minimum size=25pt,
         font=\small},
      mysquare/.style={
         rectangle,
         draw=black,
         fill=white,
         fill opacity = 1,
         text opacity=1,
         inner sep=0pt,
         minimum height=20pt, 
         minimum width=20pt,
         font=\small},
      myarrow/.style={-Stealth},
      node distance=0.6cm and 1.2cm
      ]
      \draw (-2,0) ellipse (.8cm and 2cm);
      \draw (-2,0) ellipse (.75cm and 1.875cm);
      \node[mysquare]  at (-2, 1) (u1) {$u_1$};
      \node[mysquare]  at (-2, 0) (u2) {$u_2$};
      \node[mysquare]  at (-2, -1) (u3) {$u_3$};
      \end{tikzpicture}

\vspace{14pt}

    \begin{tikzpicture}[
      mycircle/.style={
         circle,
         draw=black,
         fill=white,
         fill opacity = 1,
         text opacity=1,
         inner sep=0pt,
         minimum size=25pt,
         font=\small},
      mysquare/.style={
         rectangle,
         draw=black,
         fill=white,
         fill opacity = 1,
         text opacity=1,
         inner sep=0pt,
         minimum height=20pt, 
         minimum width=20pt,
         font=\small},
      myarrow/.style={-Stealth},
      node distance=0.6cm and 1.2cm
      ]
      \draw (-2,0) ellipse (.8cm and 2cm);
      \draw (2,0) ellipse (.8cm and 2cm);
      \node[mysquare]  at (-2, 1) (u1) {$u_1$};
      \node[mysquare]  at (-2, 0) (u2) {$u_2$};
      \node[mysquare]  at (-2, -1) (u3) {$u_3$};
      \node[mycircle]  at (0, 0) (w) {$w$};
      \node[mysquare]  at (2, 1) (v1) {$v_1$};
      \node[mysquare]  at (2, 0) (v2) {$v_2$};
      \node[mysquare]  at (2, -1) (v3) {$v_3$};
      \draw[-] (u1) -- (w);
      \draw[-] (u2) -- (w);
      \draw[-] (u3) -- (w);
      \draw[-] (w) -- (v1);
      \draw[-] (w) -- (v2);
      \draw[-] (w) -- (v3);
    \end{tikzpicture}

\vspace{14pt}

    \begin{tikzpicture}[
      mycircle/.style={
         circle,
         draw=black,
         fill=white,
         fill opacity = 1,
         text opacity=1,
         inner sep=0pt,
         minimum size=25pt,
         font=\small},
      mysquare/.style={
         rectangle,
         draw=black,
         fill=white,
         fill opacity = 1,
         text opacity=1,
         inner sep=0pt,
         minimum height=20pt, 
         minimum width=20pt,
         font=\small},
      myarrow/.style={-Stealth},
      node distance=0.6cm and 1.2cm
      ]
      \draw[rotate = 30] (-1,0) ellipse (.8cm and 2.5cm);
      \draw[rotate = -30] (1,0) ellipse (.8cm and 2.5cm);
      %\draw[rounded corners=7] (-2.5,-1.5)--(-2.5,1.5)--(-1.5,1.5)--(-1.5,-1.5)--(.5,-3.5)--(.5,-4.5)--(-.5,-4.5)--(-.5,-3.5)--cycle;
      %\draw[orange] (-2.5,-6) .. controls (-6.5,-4) and (-6.5,-2) .. (-2.5,0);
      %\draw[orange] (-4, -1.5) -- (-4, -4.5);
      %\draw[orange] (-1, -1.5) -- (-1, -4.5);
      \node[mysquare]  at (-1.8, 1) (u1) {$u_1$};
      \node[mysquare]  at (-.9, -.4) (u2) {$u_2$};
      \node[mysquare]  at (0, -1.8) (u3) {$u_3$};
      \node[mycircle]  at (0, 1) (w) {$w$};
      \node[mysquare]  at (1.8, 1) (v1) {$v_1$};
      \node[mysquare]  at (.9, -.4) (v2) {$v_2$};
      \draw[-] (u1) -- (w);
      \draw[-] (u2) -- (w);
      \draw[-] (w) -- (v1);
      \draw[-] (w) -- (v2);
    \end{tikzpicture}
    \caption{Examples of simple spiders where the left and right indices have size three.}
    \label{fig:SS,2}
\end{figure}

\subsection{Tools for Analyzing Graph Matrices}
\label{sec:tools}
In this section, we will describe some tools that we use to analyze graph matrices. 

\subsubsection{Separators and Norm Bounds}
Unless otherwise specified, paths in this paper include length-0 (degenerate) ones.
\begin{definition}[Vertex separators]
Let $\alpha = (V(\alpha), E(\alpha), U_\alpha, V_\alpha)$ be a shape. We say that $S \subseteq V(\alpha)$ is a vertex separator of $\alpha$ if every path from $I$ to $J$ contains at least one vertex in $S$.
\end{definition}

As observed in previous work~\cite{ahn2021graphmatricesnormbounds,ghosh2020sum}, the norms of graph matrices are determined by their minimum weight vertex separator. That said, to give an approximate PSD decomposition $M \approx LQL^{\top}$ of our moment matrix, we will need a new kind of minimum vertex separator, namely minimum square separators.

\begin{definition}[Minimum weight and minimum square separators]\label{def:separators}
Given a shape $\alpha$, we say that a vertex separator $S$ is a \emph{minimum weight vertex separator} if $w(S)$ is minimized, i.e., there is no vertex separator $S'$ such that $w(S') < w(S)$.

Given a shape $\alpha$ such that $I \cup J$ only contains square vertices, we say that a vertex separator $S$ is a \emph{square separator} if it contains only square vertices, and it is a \emph{minimum square separator} if furthermore $|S|$ is minimized, i.e., there is no square separator $S'$ such that $|S'| < |S|$.
\end{definition}

By a standard argument, both the set of minimum weight separators and the set of  minimum square separators form a lattice. The proof of the following two lemmas is in \Cref{sec:error_analysis}.

\begin{lemma}[Menger's theorem for bipartite graphs]\label{lem:menger-sec3}
For any shape $\alpha$, the maximum number of square disjoint paths is equal to the size of the minimum square separator between $U_{\alpha}$ and $V_{\alpha}$. 
\end{lemma}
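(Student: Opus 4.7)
The plan is to reduce the statement to the classical integer max-flow min-cut theorem applied to an auxiliary capacitated network $G_\alpha$ that encodes the two side conditions---``separators consist only of square vertices'' and ``paths are disjoint on square vertices''---through finite capacities placed on square internal arcs.

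Concretely, I would construct $G_\alpha$ as follows. Add a super-source $s$ and super-sink $t$. Split every vertex $w \in V(\alpha)$ into two copies $w^-, w^+$ connected by a directed internal arc $w^- \to w^+$; give this arc capacity $1$ if $w$ is a square vertex and capacity $+\infty$ if $w$ is a circle vertex. Replace each undirected edge $\{a,b\} \in E(\alpha)$ by two directed arcs $a^+ \to b^-$ and $b^+ \to a^-$, each with capacity $+\infty$. Finally, add an arc $s \to u^-$ of capacity $+\infty$ for every $u \in U_\alpha$ and an arc $v^+ \to t$ of capacity $+\infty$ for every $v \in V_\alpha$ (if $w \in U_\alpha \cap V_\alpha$, both arcs are added). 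Apply the integer max-flow min-cut theorem to $G_\alpha$.

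On the min-cut side, any finite $s$--$t$ cut contains only finite-capacity arcs, i.e., internal arcs of square vertices; the corresponding set $S$ of square vertices must meet every $s$--$t$ path in $G_\alpha$, which by construction is in bijection with paths from $U_\alpha$ to $V_\alpha$ in $\alpha$. Hence $S$ is a square separator of size equal to the cut. Conversely every square separator lifts to a cut of the same size, so the minimum cut equals the minimum square separator. On the max-flow side, integer flow decomposition turns any integer flow of value $k$ into $k$ arc-disjoint $s$--$t$ paths in $G_\alpha$; the capacity-$1$ constraint on square internal arcs forces no two such paths to share any square vertex, so they project to $k$ square-disjoint paths in $\alpha$ between $U_\alpha$ and $V_\alpha$. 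Conversely any family of $k$ square-disjoint paths lifts to a unit flow of value $k$. Combining the two correspondences with max-flow min-cut yields the equality.

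The main obstacle: there is essentially none, since this is a routine vertex-capacitated Menger reduction. The only point requiring care is the treatment of length-$0$ paths arising when $w \in U_\alpha \cap V_\alpha$: in $G_\alpha$ such a vertex contributes the length-$3$ path $s \to w^- \to w^+ \to t$, which is still throttled by the capacity-$1$ internal arc and is therefore handled uniformly by the argument above. No other case analysis is needed.
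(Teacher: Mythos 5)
Your proof is correct, and it takes a genuinely different route from the paper's. The paper (proof given as Lemma 6.1) reduces to the classical \emph{vertex}-Menger theorem on an auxiliary unweighted graph $\beta$: it replaces each circle vertex with a clique of $|V(\alpha)|+1$ fresh vertices (and each incident edge with $|V(\alpha)|+1$ parallel edges), so that a minimum vertex separator of $\beta$ can never afford to include a clique vertex and vertex-disjoint paths in $\beta$ may freely reuse circle vertices of $\alpha$ via different clique copies; it then checks two correspondences, separator-to-separator and path-family-to-path-family. You instead reduce to integer max-flow min-cut on a vertex-split network where circle internal arcs have capacity $\infty$ and square internal arcs have capacity $1$, and read off the two sides of the equality from cut-finiteness and flow decomposition respectively. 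Both constructions encode the same essential idea (circle vertices must be ``free'' for both separation and disjointness), and both handle degenerate length-$0$ paths at $U_\alpha \cap V_\alpha$ correctly. The paper's reduction stays inside unweighted graph theory at the cost of the clique-blowup bookkeeping; yours is the standard vertex-capacity gadget and is arguably shorter, at the cost of invoking the max-flow min-cut and integral-flow-decomposition machinery. Either is a complete proof.
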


\begin{definition}
Given a shape $\alpha$ and vertex separators $S,T$, we say that $S$ is to the left of $T$ if $S$ separates $U_\alpha$ and $T$ and we say that $S$ is to the right of $T$ if $S$ separates $V_\alpha$ and $T$.
\end{definition}
\begin{lemma}[Existence of leftmost and rightmost minimum vertex separators]\label{lem:leftmostseparator-sec3}
For all shapes $\alpha$, there exist unique leftmost and rightmost minimum weight separators of $\alpha$. Similarly, for all shapes $\alpha$, there exist unique leftmost and rightmost minimum square separators of $\alpha$.
\end{lemma}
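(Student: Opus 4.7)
The plan is a standard lattice argument on vertex separators, specialised to both the weight and the square-count measures. For a fixed shape $\alpha$, define a relation on the set of vertex separators by $S \preceq T$ iff $S$ separates $U_\alpha$ from $T$ (so the ``to the left of'' relation of Definition before the lemma). One checks that this is reflexive, antisymmetric (two separators that lie to the left of each other must coincide: every path from $U_\alpha$ to $V_\alpha$ through a vertex of $S\setminus T$ would have to cross $T$ strictly to the right of $S$, contradicting $T\preceq S$), and transitive. Hence $\preceq$ is a partial order.

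The main step is to show that the set of minimum weight (resp.\ minimum square) separators is closed under a meet and a join operation. Given two minimum weight separators $S$ and $T$, let $L_S$ be the vertex set of the connected component of $U_\alpha$ in $\alpha\setminus S$, and define $L_T$ analogously. Build two candidate separators
\[
 S\wedge T \;:=\; (S\cap L_T)\cup(T\cap L_S)\cup(S\cap T), \qquad
 S\vee T \;:=\; (S\setminus L_T)\cup(T\setminus L_S)\cup(S\cap T).
\]
A direct path-tracing argument shows that $S\wedge T$ and $S\vee T$ are both vertex separators of $\alpha$ and that $S\wedge T\preceq S,T\preceq S\vee T$. Moreover, the two constructions partition $S\sqcup T$ as multisets, so
\[
 w(S\wedge T)+w(S\vee T)\;=\;w(S)+w(T), \qquad |S\wedge T|+|S\vee T|\;=\;|S|+|T|.
\]
If $S,T$ are both minimum weight separators, then both $w(S\wedge T)$ and $w(S\vee T)$ must also equal $w(S)=w(T)$, since otherwise one of them would be a separator of strictly smaller weight. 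Hence the minimum weight separators form a lattice under $\preceq$, and taking the meet (resp.\ join) of all of them yields the unique leftmost (resp.\ rightmost) minimum weight separator. Since there are finitely many minimum weight separators, the iterated meet and join are well-defined.

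For the minimum square case, the construction above preserves the square property: $S\wedge T$ and $S\vee T$ are subsets of $S\cup T$, which contains only square vertices. The size identity $|S\wedge T|+|S\vee T|=|S|+|T|$ then plays the role of the weight identity, forcing $S\wedge T$ and $S\vee T$ to be minimum square separators whenever $S$ and $T$ are. Taking meet and join over all minimum square separators yields the leftmost and rightmost ones.

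The main obstacle I anticipate is the verification that $S\wedge T$ and $S\vee T$ are genuine vertex separators and that the size/weight identity holds exactly. This is the classical ``posimodular'' argument for $s$--$t$ cuts, but in the bipartite, edge-weighted, two-sided-index setting of shapes one must be careful that paths from $U_\alpha$ to $V_\alpha$ that alternate between $L_S$ and $L_T$ regions are always intercepted. The cleanest way to handle this is to pick an arbitrary path $P$ from $U_\alpha$ to $V_\alpha$, look at its first vertex in $S\cup T$, and show by a short case analysis (depending on whether that vertex lies in $L_S\cap L_T$, $L_S\setminus L_T$, $L_T\setminus L_S$, or neither) that $P$ hits both $S\wedge T$ and $S\vee T$. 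Once this lemma is in hand, the existence and uniqueness of leftmost and rightmost minimum separators follow immediately from the lattice structure.
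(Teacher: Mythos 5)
Your proof is correct and follows the same strategy as the paper: establish a lattice-type closure property for minimum-weight (resp.\ minimum-square) separators, then take the meet/join over all of them. The only substantive difference is in the choice of meet/join. You build $S\wedge T$ and $S\vee T$ via a partition of $S\sqcup T$ using $L_S,L_T$, giving the exact identity $w(S\wedge T)+w(S\vee T)=w(S)+w(T)$. The paper instead takes $L$ to be the set of vertices $v\in S\cup T$ reachable from $U_\alpha$ by a path avoiding $(S\cup T)\setminus\{v\}$ (and $R$ symmetrically from $V_\alpha$), and argues via the inequality $w(L\cup R)\le w(S\cup T)$ together with $L\cap R\subseteq S\cap T$; this avoids having to prove an exact partition identity and is slightly more economical. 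Both yield the same conclusion. Two small points worth tightening in your write-up: $L_T$ should be defined as the set of vertices reachable from $U_\alpha\setminus T$ in $\alpha\setminus T$ (not ``the connected component of $U_\alpha$'', since $U_\alpha$ may be disconnected after deleting $T$ and may even intersect $T$); and the antisymmetry of $\preceq$ on general separators, while not needed once you have the lattice closure for minimum-weight separators, is asserted a bit glibly — the cleanest route, as the paper implicitly takes, is to only argue uniqueness among minimum-weight separators, where the meet/join argument directly forces any two $\preceq$-minimal ones to coincide.
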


\begin{theorem}[Norm bounds; see e.g. Lemma A.3 of \cite{ghosh2020sum}]\label{thm:norm_control}
There is a universal constant $C$ such that with probability $1-o_n(1)$, 
the following holds simultaneously for all shapes $\alpha$ which have no multi-edges and have total size at most $n$. 
\begin{equation}\label{eq:norm_control}
\norm{M_\alpha} \leq\Big[\Big(1 + w(E(\alpha))\Big)\cdot|V_\alpha|\cdot\log(mn)\Big]^{C\big(\big\vert V(\alpha) \setminus (U_\alpha \cap V_\alpha)\big\vert + w(E(\alpha))\big)} n^{\frac{w(V_\alpha) + w(\Iso(\alpha)) - w(\Sminwt)}{2}},
\end{equation}
where $\Sminwt$ is any choice of the minimum weight vertex separator of $\alpha$.
\end{theorem}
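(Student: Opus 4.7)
\textbf{Proof proposal for \Cref{thm:norm_control}.} The plan is the standard trace moment method. For an even integer $2q$,
$$\norm{M_\alpha}^{2q}\;\leq\;\tr\!\big((M_\alpha M_\alpha^\top)^q\big),$$
so by Markov's inequality it suffices to bound $\E\big[\tr((M_\alpha M_\alpha^\top)^q)\big]$ from above by $B(\alpha)^{2q}$ for $B(\alpha)$ equal to the right-hand side of \eqref{eq:norm_control}, and then to take $q=\Theta\!\big(\log(mn)\cdot(|V(\alpha)\setminus(U_\alpha\cap V_\alpha)|+w(E(\alpha)))\big)$ so the $q$-th root absorbs only a $\polylog(mn)$ factor into the base $B(\alpha)$. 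Taking $q$ of this order, the failure probability becomes at most $(mn)^{-c\cdot q}$ for a small absolute constant $c$, which is enough for the final union bound.

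Expanding the trace yields a sum over closed alternating walks of $2q$ ribbons $R_1,R_1^\top,R_2,R_2^\top,\ldots,R_q,R_q^\top$ of shapes $\alpha,\alpha^\top,\alpha,\alpha^\top,\ldots$ where consecutive ribbons share the appropriate matrix index. Taking the expectation edge-by-edge, Hermite orthogonality
$$\E_{g\sim\cN(0,1)}\big[h_{l_1}(g)\cdots h_{l_r}(g)\big]$$
forces every edge $\{\squared{i},\circled{u}\}$ occurring in any $R_t$ to be paired with at least one other equal-labeled edge on the same pair of vertices; the only walks that survive are those in which all edges appear an even number of times, and the contribution of each such walk is bounded by an explicit product of Hermite-moment factors. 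This reduces the problem to a purely combinatorial counting over the isomorphism types of the ``glued'' multigraph~$W$ obtained by identifying vertices across the $2q$ ribbon copies.

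The heart of the proof is the $n$-exponent bound. One parameterizes each valid $W$ by (i) a partition of the $2q$ copies of $V(\alpha)$ witnessing vertex identifications and (ii) the identifications of edges into matched pairs. For each isomorphism type, the number of labelings of square vertices by $[n]$ and circle vertices by $[m]$ consistent with the walk factors as $n^{f_{\square}(W)}m^{f_{\bigcirc}(W)}$, and matching the $\log_n m$ weight convention on circles this equals $n^{w(V(W))}$. Crucially, because the walk must traverse from a copy of $U_\alpha$ to a copy of $V_\alpha$ and back inside each $\alpha,\alpha^\top$ pair, and because merging two vertices across copies ``pays'' a factor of $1/n$ relative to choosing them independently, a short argument using an arbitrary minimum weight vertex separator $\Sminwt$ as a bottleneck shows that the net exponent of $n$ per $(\alpha,\alpha^\top)$ block is at most $w(V_\alpha)+w(\Iso(\alpha))-w(\Sminwt)$ (with isolated middle vertices contributing freely, hence the $w(\Iso(\alpha))$ term). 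Multiplying through $q$ blocks and taking $2q$-th roots gives the claimed $n$-power in \eqref{eq:norm_control}.

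The main obstacle is the careful bookkeeping of the combinatorial prefactor: one must bound (a) the number of isomorphism types of $W$, (b) for each type the number of matched-edge-pair assignments, and (c) the Hermite moments arising when more than two copies of an edge coincide. All three can be controlled by the number of ``choice points'' in constructing the walk, each of which costs at most $(1+w(E(\alpha)))\cdot|V_\alpha|\cdot\log(mn)$, and the number of such points is at most $C\cdot q\cdot(|V(\alpha)\setminus(U_\alpha\cap V_\alpha)|+w(E(\alpha)))$ for an absolute constant $C$; this produces the prefactor in \eqref{eq:norm_control} after extracting the $2q$-th root. Finally, a union bound over all shapes of total size at most $n$ (of which there are at most $n^{O(n)}$ many, comfortably beaten by the per-shape failure probability once $q$ is chosen as above) converts the per-shape bound into the ``simultaneously for all shapes'' statement with probability $1-o_n(1)$.
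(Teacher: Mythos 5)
Your proposal follows the same standard trace-power approach that underlies the cited proof (Lemma A.3 of \cite{ghosh2020sum}, which is itself a variant of the graph-matrix norm bound of \cite{ahn2021graphmatricesnormbounds}); the paper does not reprove the theorem but simply invokes that result. The ingredients you list --- expanding $\tr\big((M_\alpha M_\alpha^\top)^q\big)$ into closed walks of ribbons, using Hermite orthogonality to force edge repetitions, counting glued multigraphs $W$ by $n^{w(V(W))}$ under the $\log_n m$ weight convention, the minimum-weight-separator bottleneck per $(\alpha,\alpha^\top)$ block, and the union bound over shapes via an appropriate choice of $q$ --- are exactly the right ones, and in the right order.

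Two points are worth flagging so you do not get tripped up when filling in details. First, the orthogonality step as you state it is slightly off: it is not true that surviving walks are exactly those where every edge appears an even number of times. For Hermite polynomials $\E[h_{l_1}(g)\cdots h_{l_r}(g)]$ can be nonzero even when $r$ is odd (e.g.\ $\E[h_1 h_1 h_2]\neq 0$); the correct statement, and the one the argument actually uses, is that each $(\squared{i},\circled{u})$ pair carrying a positive label must occur in at least two of the $2q$ ribbons, since $\E[h_l(g)]=0$ for $l\geq 1$. The resulting Hermite-moment factors when more than two copies coincide are then bounded along with the other combinatorial prefactors, exactly as you indicate. Second, the per-block exponent bound that you describe as ``a short argument using $\Sminwt$ as a bottleneck'' is in fact the technical heart of the proof; it requires an encoding/charging scheme (tracking first visits to each vertex, left/right-reachable vertices relative to the separator, and accounting separately for the isolated middle vertices) rather than a one-line observation. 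Your sketch identifies the correct target quantity and the correct intuition, but be aware that this is where most of the real work lies. Finally, note that the exponent as printed in the theorem reads $w(V_\alpha)$, but comparison with the definition of $\Anorm$ (which uses $w(V(\alpha))$) and a sanity check on $S(1,1;0)$ show that $w(V(\alpha))$ is intended; your intermediate expression $n^{w(V(W))}$ is consistent with the corrected version.
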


\subsubsection{Linearizing Ribbons and Shapes with Multi-Edges}
For our analysis, we will need to consider ribbons and shapes with multi-edges. To handle ribbons with multi-edges, we reduce each multi-edge to a linear combination of labeled edges.
\begin{definition}\label{def:ribbonexpansion}
Given a ribbon $R$ with multi-edges, we obtain its linearization as follows. For a multi-edge $e$ with labels $l_1,\ldots,l_m$, we replace it with a linear combination of edges with labels $0,\ldots,\sum_{j=1}^{m}{l_m}$ where the coefficient of the edge with label $j$ is the coefficient of $h_j$ in the product $\prod_{j=1}^{m}{h_{l_m}}$. We treat an edge with label 0 as a non-edge.
\end{definition}
\begin{example}
If we have a double edge then this becomes a linear combination of an edge with label $2$ plus an edge with label $0$ (which vanishes) as $x^2 = (x^2 - 1) + 1 = h_2(x) + h_0$.
\end{example}

\begin{example}
If we have a multi-edge consisting of edges with labels $1$, $2$, and $3$ then we would replace this multi-edge with a linear combination of four edges with labels $6$, $4$, $2$, and $0$ (the edge with label $0$ vanishes) and coefficients $1$, $11$, $24$, and $6$ respectively as 
\begin{align*}
(x^3 - 3x)(x^2 - 1)x &= x^6 - 4x^4 + 3x^2 \\
&= (x^6 - 15x^4 + 45x^2 - 15) + 11(x^4 - 6x^2 + 3) + 24(x^2 - 1) + 6
\end{align*}
\end{example}

To linearize the multi-edges of a graph matrix $M_{\alpha}$ where $\alpha$ has multi-edges, we use \Cref{def:ribbonexpansion} repeatedly to replace each ribbon with multi-edges by its linearizations. For a ribbon $R$ of shape $\alpha$, this creates a DAG-like process where $R$ is the top node, and each non-leaf node is a ribbon $R'$ which still has multi-edges and continues to expand to its children. The edge from node $R'$ to a child records the coefficient of that child in this step of expanding $R'$, and the leafs are ribbons with no multi-edges. The ribbon $R$ will be replaced by the sum over all sinks, each with a coefficient equal to the sum over all paths from the root to this sink where each path contributes the product of all coefficients on it. 

Note that the final expansion $\sum_i {c_i}R_i$ depends only on $R$ but not the order in the expansion process. Also, if we sum over all $R$ of shape $\alpha$, the result is invariant under vertex permutations. Thus, each shape $\alpha$ uniquely expands to a sum of shapes having no multi-edges. 

The expansion of $\alpha$ can also be described step-wise on the shape level. This requires a bit more care in adjusting the coefficient of the resulting shapes, as the number of ways of realizing a shape can change after a multi-edge-replacement step, due to the change in the size of the automorphism group for the shape. In any case, we have the following.

\begin{proposition}[Expanding multi-edges, cf. Prop. 5.29--5.31 of \cite{ghosh2020sum}]\label{prop:expansion}
Let $\alpha$ be an improper shape, and let $P$ be the set of proper shapes that can be obtained by expanding $\alpha$. Then there are coefficients $|c_{\gamma}| \leq C_{\text{Fourier}} \cdot C_{\text{Aut}}$ such that
\begin{equation} \label{eq:multiedge_expansion}
M_{\alpha} = \sum_{\gamma \in P} c_{\gamma}M_{\gamma} 
\end{equation}
where $C_{\text{Fourier}}$ (``Hermite'' in our case) is an upper bound on the largest absolute value of the coefficients of Hermite polynomials in the expansion of a product of any Hermite polynomials in the process described above, and $C_{\text{Aut}} = \max_{\gamma \in P} \frac{|Aut(\gamma)|}{|Aut(\alpha)|}$.
\end{proposition}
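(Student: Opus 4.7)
The plan is to lift the ribbon-level linearization of \Cref{def:ribbonexpansion} to the shape level and then exploit the permutation symmetry of $M_\alpha = \sum_R M_R$. First I would rewrite $M_\alpha = \sum_{R: \text{shape }\alpha} M_R$ and apply \Cref{def:ribbonexpansion} to each ribbon individually. For each $R$, the linearization produces a linear combination $M_R = \sum_{R'} c_{R,R'} M_{R'}$ where every $R'$ has no multi-edges, and each coefficient $c_{R,R'}$ is a product of at most one Hermite-coefficient per multi-edge of $\alpha$. By the definition of $C_{\text{Fourier}}$, we have $|c_{R,R'}| \leq C_{\text{Fourier}}$. (When an expansion produces a label-$0$ edge we drop it and treat any resulting isolated vertex as part of the new shape, so that the final shape $R'$ indeed lies in $P$.)

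Next I would collect the terms by the underlying proper shape $\gamma$ of $R'$. Because $M_\alpha$ is invariant under any permutation of $[n] \cup [m]$ that respects vertex types and fixes the left/right indices, and because the linearization rule in \Cref{def:ribbonexpansion} is itself invariant under such permutations, every proper ribbon $R'$ of a fixed shape $\gamma$ receives the same total coefficient. Call this common value $c_\gamma$; then
\[
M_\alpha \;=\; \sum_{\gamma \in P} c_\gamma \sum_{R':\,\text{shape }\gamma} M_{R'} \;=\; \sum_{\gamma \in P} c_\gamma M_\gamma.
\]

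To bound $c_\gamma$, I would fix one representative ribbon $R'_0$ of shape $\gamma$ and count pairs $(R, R'_0)$ such that $R$ has shape $\alpha$ and $R'_0$ appears in the linearization of $R$. Each such $R$ is obtained by choosing a labeling of the shape-vertices of $\alpha$ onto the vertex set of $R'_0$ (extended by isolated vertices if needed) that is consistent with the edge-label constraints imposed by the linearization map. A standard orbit-counting argument shows that the number of such labelings is exactly $|\mathrm{Aut}(\gamma)|/|\mathrm{Aut}(\alpha)|$ times the number of orbit-classes of realizations, which is $\leq C_{\text{Aut}}$. Since each contributing coefficient is bounded by $C_{\text{Fourier}}$, we get $|c_\gamma|\leq C_{\text{Fourier}}\cdot C_{\text{Aut}}$, as claimed.

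The main obstacle I anticipate is the bookkeeping in the counting step when isolated vertices are produced by label-$0$ edges: these may alter the vertex set underlying $\gamma$ and therefore change $|\mathrm{Aut}(\gamma)|$ relative to $|\mathrm{Aut}(\alpha)|$. To handle this I would work with an ``enriched'' version of $\gamma$ that remembers the positions of the original multi-edges in $\alpha$, carry out the orbit counting on this enrichment, and then forget the decoration at the end. The argument then reduces to the observation that expanding a multi-edge step-by-step as in the proof of Propositions~5.29--5.31 of \cite{ghosh2020sum} only introduces at each step a factor bounded by $C_{\text{Fourier}}$ and an automorphism-ratio bounded by $C_{\text{Aut}}$, so the final bound follows by induction on the number of multi-edges of $\alpha$.
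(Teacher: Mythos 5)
The paper gives no proof of this proposition --- it is cited directly from Propositions~5.29--5.31 of \cite{ghosh2020sum} --- so there is no in-paper argument to compare against. Your overall route (linearize at the ribbon level via \Cref{def:ribbonexpansion}, collect terms by shape using permutation invariance, then count the ribbons $R$ of shape $\alpha$ that produce a fixed ribbon $R'_0$) is the correct one and matches what one would expect from the cited source.

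However, neither of your two bounding steps is actually carried out. In the middle paragraph you assert the count of qualifying ribbons $R$ equals ``$|Aut(\gamma)|/|Aut(\alpha)|$ times the number of orbit-classes of realizations, which is $\leq C_{\text{Aut}}$,'' but ``orbit-classes of realizations'' is never defined and no argument is given for why the product is $\leq C_{\text{Aut}}$. What one can actually show is that the count equals $\frac{|Aut(\gamma)|}{|Aut(\alpha)|}$ multiplied by the number of distinct linearization-label assignments $(l_e)_e$ producing the shape $\gamma$; this second factor can exceed $1$, so the step needs a real argument. In the closing paragraph the per-multi-edge induction is also incomplete: read literally, ``each step introduces a factor bounded by $C_{\text{Fourier}}$ and an automorphism-ratio bounded by $C_{\text{Aut}}$'' gives $(C_{\text{Fourier}}\cdot C_{\text{Aut}})^{r}$ for a shape with $r$ multi-edges, not $C_{\text{Fourier}}\cdot C_{\text{Aut}}$. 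The induction only closes because the per-step automorphism ratios telescope to $\frac{|Aut(\gamma)|}{|Aut(\alpha)|}\leq C_{\text{Aut}}$ --- which is precisely why $C_{\text{Aut}}$ is defined via a maximum over the \emph{final} shapes $\gamma\in P$ rather than over intermediate shapes --- and because $C_{\text{Fourier}}$ must be read as a bound on the total product of Hermite-expansion coefficients across all multi-edges, not the per-edge coefficient. You should make both points explicit; as written, your last sentence contradicts the bound you are trying to prove.
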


The magnitudes of $C_{\text{Aut}}$ and $C_{\text{Fourier}}$ can be bounded as follows.

\begin{lemma}[Bounds on $C_{\text{Aut}}$ and $C_{\text{Fourier}}$, cf. Propositions. 5.30 and 5.31 of \cite{ghosh2020sum}]\label{lem:expansionbounds} \ 
\begin{enumerate}
    \item If $l_1 \leq \cdots \leq l_k \in \mathbb{N}$ and $l: = l_1 + \cdots + l_k > 1$, then in the Hermite expansion of $h_{l_1}(z) \cdots h_{l_k}(z)$, the maximum magnitude of a coefficient is no more than $(2l)^{l-l_k}$.
    
    \item If $\alpha'$ is obtained from $\alpha$ by adding an edge, deleting an edge, or changing the label of an edge then $\frac{|Aut(\alpha')|}{|Aut(\alpha)|} \leq {|V(\alpha)|}^{2}$.
\end{enumerate}
\end{lemma}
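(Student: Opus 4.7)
The plan is to proceed by induction on $k$ for Part (1) and a direct group-theoretic argument for Part (2). For Part (1), I repeatedly apply the classical Hermite product formula
\[
h_a(z)h_b(z) = \sum_{c=0}^{\min(a,b)} \binom{a}{c}\binom{b}{c}c!\, h_{a+b-2c}(z).
\]
Let $P_{k-1}(z) := h_{l_1}(z)\cdots h_{l_{k-1}}(z)$. The inductive hypothesis, applied with total degree $l - l_k$ and largest index $l_{k-1}$, bounds the Hermite coefficients of $P_{k-1}$ by $\bigl(2(l-l_k)\bigr)^{(l-l_k)-l_{k-1}}$. When I multiply by $h_{l_k}$, the coefficient of $h_r$ in the full product is a sum over pairs $(m,c)$ with $m + l_k - 2c = r$ of the $h_m$-coefficient of $P_{k-1}$ times $\binom{m}{c}\binom{l_k}{c}c!$. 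I bound the number of such pairs by $O(l)$ using that $c$ is determined by $m$ and $r$, the binomial factor by $(2l)^{2c}$, and then exploit that for each fixed $r$ the admissible range of $c$ is constrained by $0 \le m \le l - l_k$ and $c \le \min(m, l_k)$, together with parity, so that passing from the bound for $P_{k-1}$ to that for $P_k$ costs at most $l_{k-1}$ additional in the exponent and a base-factor absorbed into $2l$. An alternative, in some ways cleaner, route is via Isserlis/Wick: the sought coefficient equals a count of admissible stub-matchings on a multigraph with $l_1,\ldots,l_k,m$ stubs divided by $m!$, which admits a direct combinatorial estimate.

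For Part (2), let $V := V(\alpha) = V(\alpha')$ (the vertex set is unchanged under an edge modification) and set $H := Aut(\alpha) \cap Aut(\alpha')$, viewed as a subgroup of the permutation group on $V$. Let $e$ be the vertex pair where $\alpha$ and $\alpha'$ differ---either $e \in E(\alpha) \triangle E(\alpha')$ (for edge addition or deletion) or $e$ carries distinct edge-labels in the two shapes. A permutation $\sigma \in Aut(\alpha')$ lies in $H$ iff it also preserves $\alpha$, which, since $\sigma$ already preserves $\alpha'$ and the two shapes agree on every other slot, is equivalent to $\sigma$ stabilising the slot at $e$. Hence $H$ is precisely the stabiliser of $e$ in $Aut(\alpha')$, and orbit-stabiliser gives
\[
[Aut(\alpha') : H] \;=\; \bigl|Aut(\alpha') \cdot e\bigr| \;\le\; |V|^2,
\]
since the orbit sits inside the set of vertex pairs. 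Combining with the trivial lower bound $[Aut(\alpha) : H] \ge 1$,
\[
\frac{|Aut(\alpha')|}{|Aut(\alpha)|} \;=\; \frac{[Aut(\alpha') : H]}{[Aut(\alpha) : H]} \;\le\; |V|^2 \;=\; |V(\alpha)|^2.
\]

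The main obstacle is the arithmetic bookkeeping in Part (1): the target exponent $l - l_k$ is essentially tight (it matches what the $h_l$ leading-term contribution alone forces on the Hermite expansion of $z^l$), so the induction requires careful control of both the number of contributing $(m,c)$ pairs at each $r$ and the admissible range of $c$; naive magnitude estimates lose a constant factor in the exponent and yield only $(2l)^{O(l-l_k)}$ rather than $(2l)^{l-l_k}$. Part (2), by contrast, is purely group-theoretic and reduces to a single application of orbit-stabiliser.
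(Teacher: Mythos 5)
Your Part (2) is correct and is the standard argument: $H := Aut(\alpha)\cap Aut(\alpha')$ is exactly the stabiliser of the modified slot $e$ inside $Aut(\alpha')$ (a permutation preserving $\alpha'$ and stabilising $e$ then also preserves all of $\alpha$'s labelled edges, and conversely a permutation preserving both shapes must map the one-element symmetric difference $\{e\}$ to itself), so orbit--stabiliser gives $[Aut(\alpha'):H]=|Aut(\alpha')\cdot e|\le\binom{|V|}{2}<|V(\alpha)|^2$ while $[Aut(\alpha):H]\ge 1$.

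Part (1), as written, has a genuine gap that you only partially acknowledge. Peeling off the largest factor and writing the $h_r$-coefficient of $P_k$ as $\sum_{(m,c)}c_m\binom{m}{c}\binom{l_k}{c}c!$, you bound the binomial factor by $(2l)^{2c}$ and multiply by the $O(l)$ count of terms and the inductive bound on $|c_m|$. But $c\le\min(m,l_k)$ can reach $l-l_k$, so the per-term binomial factor alone can be of size $(2l)^{2(l-l_k)}$, already beyond the target $(2l)^{l-l_k}$ before the other factors enter; the assertion that the increment ``costs at most $l_{k-1}$ in the exponent'' does not follow from the bounds you state.

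The missing idea is to arrange the counting so that exactly $l-l_k$ multiplicative choices are made, and term-by-term estimates in your peeling direction cannot achieve this. One fix is to peel off the \emph{smallest} factor and sum the combinatorial weight before maximising: with $P_k=h_{l_1}P'_{k-1}$ and $m=r-l_1+2c$, bound
\[
\binom{l_1}{c}\binom{m}{c}c! = \binom{l_1}{c}\frac{m!}{(m-c)!} \le \binom{l_1}{c}(l-l_1)^c,
\]
and then $\sum_{c=0}^{l_1}\binom{l_1}{c}(l-l_1)^c=(1+l-l_1)^{l_1}\le l^{l_1}$, so the coefficient is at most $(2(l-l_1))^{(l-l_1)-l_k}\cdot l^{l_1}$; the ratio of this to $(2l)^{l-l_k}$ is $2^{-l_1}\bigl((l-l_1)/l\bigr)^{(l-l_1)-l_k}\le 1$, closing the induction. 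Alternatively, the Wick route you gesture at does work once developed: the $h_r$-coefficient is $\frac{1}{r!}$ times the number of admissible stub-matchings, and processing the $l-l_k$ stubs at vertices $1,\dots,k-1$ one at a time, each unpaired stub either choosing one of $<l$ partners among the other left stubs or being marked for the sink vertex, yields at most $(l+1)^{l-l_k}\le(2l)^{l-l_k}$ after the $r!$ cancels against the assignment of sink stubs to marked stubs. As currently written, neither path is carried through, and the stated inductive step would fail.
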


Using \Cref{lem:expansionbounds} and \Cref{prop:expansion}, we get the following corollary of \Cref{thm:norm_control}.

\begin{corollary}\label{cor:expansionbounds}
If the constant $C$ in \Cref{thm:norm_control} is chosen to be sufficiently large, then for any improper shape $\alpha$ with multi-edges, the same upper bound there also upper bounds $\Sum_{\gamma\in P} \norm{c_\gamma M_{\gamma}}$ from the expansion \eqref{eq:multiedge_expansion}. In particular, the norm bound in \Cref{thm:norm_control} also holds for improper shapes with multi-edges.
\end{corollary}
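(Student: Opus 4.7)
The plan is to combine Proposition 3.8 (multi-edge expansion into proper shapes), Lemma 3.9 (coefficient bounds), and Theorem 3.7 (norm bound for proper shapes), and to check that the resulting bound on $\sum_{\gamma \in P} \|c_\gamma M_\gamma\|$ fits under the Theorem 3.7 formula applied to $\alpha$'s parameters once the universal constant $C$ is chosen large enough.

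First, I would invoke Proposition 3.8 to write $M_\alpha = \sum_{\gamma \in P} c_\gamma M_\gamma$ and use Lemma 3.9 to bound $|c_\gamma| \leq C_{\text{Fourier}}\cdot C_{\text{Aut}}$. For each multi-edge of total weight $l$, the Hermite expansion gives $C_{\text{Fourier}} \leq (2l)^{l-l_{\max}}$, so multiplying over all multi-edges yields a bound of the form $(2w(E(\alpha)))^{w(E(\alpha))}$. Similarly, $C_{\text{Aut}}$ accumulates a factor $|V(\alpha)|^{2}$ per edge modification, giving at most $|V(\alpha)|^{2w(E(\alpha))}$. Next, I would bound the number of terms $|P|$ by $\prod_e(l(e)+1) \leq (1+w(E(\alpha)))^{|E(\alpha)|}$.

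The second main step is a parameter comparison between each $\gamma \in P$ and $\alpha$. Since expansion only modifies edges without creating or removing vertices, we have $V(\gamma) = V(\alpha)$, $U_\gamma = U_\alpha$, $V_\gamma = V_\alpha$, $|V(\gamma)\setminus(U_\gamma\cap V_\gamma)| = |V(\alpha)\setminus(U_\alpha\cap V_\alpha)|$, and $w(E(\gamma)) \leq w(E(\alpha))$. Consequently the polylog base and the part of the exponent coming from $|V|$ in Theorem 3.7 for $\gamma$ are no larger than for $\alpha$. What remains is to control the $n$-factor $n^{(w(V_\gamma)+w(\Iso(\gamma))-w(S_{\min}(\gamma)))/2}$ in terms of $\alpha$'s: when edges vanish in $\gamma$, some middle vertices become isolated (adding to $w(\Iso)$) while the separator can shrink; but the simultaneous drop in $w(E(\gamma))$ creates slack in the polylog exponent $C(|V(\alpha)\setminus (U_\alpha\cap V_\alpha)| + w(E(\alpha)))$.

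The main obstacle is precisely this exponent-comparison step: showing that after taking $C$ sufficiently large but universal, the $\log(mn)^{C\cdot w(E(\alpha))}$ overhead absorbs (i) the polynomial-in-$|V(\alpha)|$ bound on $C_{\text{Aut}}$, (ii) the combinatorial factor $|P|$, and (iii) any multiplicative increase in the $n$-exponent as we pass from $\alpha$ to $\gamma$. Once this is verified for all $\gamma \in P$ by a term-by-term accounting, the triangle inequality $\|M_\alpha\|\leq \sum_{\gamma\in P}|c_\gamma|\,\|M_\gamma\|$ combined with the bounds above yields both claims of the corollary: the Theorem 3.7 formula applied to $\alpha$ upper bounds $\sum_\gamma\|c_\gamma M_\gamma\|$, and, in particular, it also upper bounds $\|M_\alpha\|$ itself for improper shapes with multi-edges.
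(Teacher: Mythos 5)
Your overall framework — expand via Proposition~3.8, bound the coefficients via Lemma~3.9, compare the Theorem~3.7 bound for each proper $\gamma$ against the formula with $\alpha$'s parameters, and absorb combinatorial factors into the universal constant $C$ — is the right shape for the argument. And you correctly observe that linearization can create isolated middle vertices and can shrink the minimum weight separator, so that the exponent over $n$ for some $\gamma \in P$ may strictly exceed the one computed from $\alpha$'s own $\Iso$ and $S_{\min}$.

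Where the proposal breaks is in the very next sentence: you claim the ``simultaneous drop in $w(E(\gamma))$ creates slack in the polylog exponent'' and that the $\log(mn)^{Cw(E(\alpha))}$ overhead absorbs this increase. That cannot work. When a multi-edge on a middle square vertex $v$ vanishes, $v$ joins $\Iso(\gamma)$ and the exponent over $n$ increases by $w(v)/2 = 1/2$, i.e.\ by a genuine $\sqrt{n}$ factor (and if the vanishing multi-edge was on a $U$--$V$ path, the separator can also shrink, adding more). No amount of enlarging a universal constant $C$ that multiplies $\log(mn)$ in the exponent can compensate a $n^{\Omega(1)}$ factor — that step is simply false, and the ``term-by-term accounting'' you promise would not close. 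Indeed, already $\|M_\alpha\|$ itself can exceed the Theorem~3.7 formula evaluated with $\alpha$'s $\Iso$ and $S_{\min}$ by $n^{1/2}$, so the ``In particular'' part of the corollary would also fail under your reading.

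The missing idea is that the exponent over $n$ in ``the same upper bound'' must be taken for the shape obtained from $\alpha$ by replacing \emph{every} multi-edge by the \emph{minimal} label appearing in its Hermite linearization — the shape with the largest $\Iso$ and the smallest separator. Call this shape $\alpha_\ast$; then for every $\gamma\in P$ one has $\Iso(\gamma)\subseteq\Iso(\alpha_\ast)$ and $w(S_{\min}(\gamma))\geq w(S_{\min}(\alpha_\ast))$, so $\alpha_\ast$'s $n$-exponent dominates all $\gamma$'s. This is exactly the convention the paper uses implicitly when it later compares $\|M_{\tau_\mathcal{P}}\|$ against $\Anorm(\resultP)$ in \Cref{sec:error_analysis}, since $\resultP$ (\Cref{def:configuration}) is defined by taking the minimal label for each multi-edge. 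Once the $n$-exponent is fixed this way, the rest of your accounting does go through: $V(\gamma)=V(\alpha)$, $U_\gamma=U_\alpha$, $V_\gamma=V_\alpha$, $w(E(\gamma))\leq w(E(\alpha))$, so the polylog base and exponent for each $\gamma$ are dominated by those for $\alpha$; the factors $C_{\mathrm{Fourier}}\leq (2w(E(\alpha)))^{w(E(\alpha))}$, $C_{\mathrm{Aut}}\leq |V(\alpha)|^{2w(E(\alpha))}$ (Lemma~3.9), and $|P|\leq(1+w(E(\alpha)))^{|E(\alpha)|}$ are all at most exponential in $|V(\alpha)|+w(E(\alpha))$, which is precisely what enlarging the universal constant $C$ in the $\bigl[(1+w(E(\alpha)))|V_\alpha|\log(mn)\bigr]^{C(\cdot)}$ prefactor absorbs. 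The triangle inequality then gives the stated bound on $\sum_{\gamma\in P}\|c_\gamma M_\gamma\|$ and hence on $\|M_\alpha\|$.
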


For convenience, we choose the universal constant $C$ to be large enough so that \Cref{cor:expansionbounds} applies. We denote this constant by $\Cuniv\geq 1$ in the rest of the paper.

\subsubsection{Decomposition of Shapes}\label{sec:canonical_decomposition}
In our analysis, we will decompose each shape into three parts based on the leftmost and rightmost minimum square separators.

\begin{definition}[Left and right shapes]
We say that $\sigma = (V(\sigma),E(\sigma),U_\sigma, V_\sigma)$ is a left shape if $V_\sigma$ is the unique minimum square separator of $\sigma$ and every vertex in $V(\sigma) \setminus V_{\sigma}$ is reachable by a path from $U_\sigma$ without passing through any vertices in $V_\sigma$. We say that $\sigma$ is a right shape if $\sigma^{\top}$ is a left shape. 
\end{definition}
\begin{definition}[Middle shapes]
We say that $\tau = (V(\tau),E(\tau),U_\tau, V_\tau)$ is a middle shape if $U_\tau$ and $V_\tau$ are  minimum square separators of $\tau$. A \emph{proper middle shape} is a middle shape that is also proper. 
\end{definition}
We define left, right, and middle ribbons according to their shapes.

\begin{definition}[Square canonical decomposition]\label{def:canonicaldecomposition}
Given a shape $\alpha = (V(\alpha),E(\alpha),U_\alpha, V_\alpha)$ we decompose it into a triple of shapes $(\alpha_l,\alpha_m,\alpha_r)$ as follows. Let $S_l$, $S_r$ be the leftmost and rightmost minimum square separators of $\alpha$ respectively.
\begin{enumerate}
    \item The left shape of $\alpha$, $\alpha_l$, is the part of $\alpha$ between $U_\alpha$ and $S_l$. That is, $\alpha_l = (V(\alpha_l),E(\alpha_l),U_\alpha,S_l)$ where $V(\alpha_l)$ and $E(\alpha_l)$ are the vertices and edges of $\alpha$ which are reachable from $U_{\alpha}$ without passing through a vertex in $S_l$. Note that $V(\alpha_l)$ includes the vertices in $S_l$.
    \item The right shape of $\alpha$, $\alpha_r$, is the part of $\alpha$ between $S_r$ and $V_\alpha$. That is, $\alpha_l = (V(\alpha_r),E(\alpha_r),S_r,V_\alpha)$ where $V(\alpha_r)$ and $E(\alpha_r)$ are the vertices and edges of $\alpha$ which are reachable from $V_\alpha$ without passing through a vertex in $S_r$. Note that $V(\alpha_r)$ includes the vertices in $S_r$.
    \item The middle shape of $\alpha$, $\alpha_m$, is the remaining part of $\alpha$ together with the vertices in $S_l$ and $S_r$. That is, $\alpha_m = (V(\alpha_m),E(\alpha_m),S_l,S_r)$ where $V(\alpha_m)$ and $E(\alpha_m)$ are the vertices and edges of $\alpha$ which cannot be reached from $U_\alpha$ without touching a vertex in $S_l$ and which cannot be reached from $V_\alpha$ without touching a vertex in $S_r$.
\end{enumerate}
We call $(\alpha_l,\alpha_m,\alpha_r)$ the square canonical decomposition of $\alpha$.
\end{definition}
\begin{proposition}
If $\alpha$ is a proper shape then $\alpha_l$ is a proper left shape, $\alpha_m$ is a proper middle shape, and $\alpha_r$ is a proper right shape.
\end{proposition}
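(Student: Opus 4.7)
The proof splits into three independent arguments, one for each of $\alpha_l,\alpha_m,\alpha_r$. In each case, I would separately verify (i) properness (no multi-edges, no isolated middle vertices) and (ii) the defining structural property of a left / middle / right shape. ``No multi-edges'' is inherited from $\alpha$ in all three pieces; the substantive part of properness is always the non-isolation of middle vertices.

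For $\alpha_l$: the reachability condition in the definition of a left shape is immediate from the construction, and this same reachability witness furnishes an incident edge in $\alpha_l$, showing no middle vertex is isolated. The key point is that $S_l$ is the \emph{unique} minimum square separator of $\alpha_l$. Given any square separator $S'$ of $\alpha_l$, I would note that every $U_\alpha$-to-$V_\alpha$ path in $\alpha$ has an initial sub-path in $V(\alpha_l)$ reaching its first hit of $S_l$, and this sub-path must cross $S'$; hence $S'$ is a square separator of $\alpha$ too. Minimality of $|S_l|$ in $\alpha$ then forces $|S'|\geq|S_l|$, so any minimum square separator of $\alpha_l$ is also a minimum square separator of $\alpha$. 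The leftmost property of $S_l$ from \Cref{lem:leftmostseparator-sec3} gives $S_l\leq S'$ in the square-separator lattice, while $S'$ separating $U_\alpha$ from $S_l$ in $\alpha$ (as it does in $\alpha_l$, and the relevant paths all lie in $V(\alpha_l)$) gives $S'\leq S_l$, so $S'=S_l$. The analysis of $\alpha_r$ is symmetric.

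For $\alpha_m$: both $U_{\alpha_m}=S_l$ and $V_{\alpha_m}=S_r$ are trivially square separators of $\alpha_m$ since each is one of the index sides. To show they are minimum, I would apply Menger's theorem (\Cref{lem:menger-sec3}) to $\alpha$ to obtain $|S_l|$ square-disjoint $U_\alpha$-to-$V_\alpha$ paths $P_1,\dots,P_k$, chosen by the standard argument so that each $P_i$ meets $S_l$ at exactly one vertex $a_i$ and $S_r$ at exactly one vertex $b_i$ (each minimum separator being traversed exactly once per path in a maximum packing). I then claim the sub-path $P_i[a_i..b_i]$ lies entirely in $V(\alpha_m)$: any internal vertex $v$ cannot be in $V(\alpha_l)\setminus S_l$, since otherwise concatenating a path from $U_\alpha$ to $v$ avoiding $S_l$ with the $v$-to-$V_\alpha$ tail of $P_i$ (which avoids $S_l$ after $a_i$) would yield a $U_\alpha$-to-$V_\alpha$ path entirely avoiding $S_l$, contradicting that $S_l$ separates; by symmetry $v\notin V(\alpha_r)\setminus S_r$, so $v\in V(\alpha_m)$. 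This produces $|S_l|$ square-disjoint $S_l$-to-$S_r$ paths in $\alpha_m$, forcing the minimum square separator of $\alpha_m$ to have size at least $|S_l|$, matching the size of $S_l$ and $S_r$. Properness of $\alpha_m$ uses the same style of argument: for a middle vertex $v\in V(\alpha_m)\setminus(S_l\cup S_r)$ and any neighbor $w$ of $v$ in $\alpha$ (which exists because $\alpha$ is proper and $v\notin U_\alpha\cup V_\alpha$), one has $w\notin V(\alpha_l)\setminus S_l$ (else $v$ would be reachable from $U_\alpha$ avoiding $S_l$) and symmetrically $w\notin V(\alpha_r)\setminus S_r$, so $w\in V(\alpha_m)$ and the edge $\{v,w\}$ lies in $E(\alpha_m)$.

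The main obstacle is the Menger-based argument for the middle shape: it requires simultaneously guaranteeing that a maximum square-disjoint path family can be taken to meet $S_l$ and $S_r$ in exactly one vertex per path, and that each $a_i$-to-$b_i$ sub-path has all of its internal vertices in $V(\alpha_m)$. The latter is the ``no shortcut'' step, and is precisely where the leftmost/rightmost property of $S_l,S_r$ is essential; elsewhere the bookkeeping is routine.
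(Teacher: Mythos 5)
The paper states this proposition without proof; it is the square-separator analogue of the standard left/middle/right decomposition facts from prior graph-matrix work and is treated there as routine, so there is no ``paper proof'' to compare against. Your argument is correct and is the natural one. Two small points of wording to tighten, neither of which affects the conclusion. In the uniqueness step for $\alpha_l$ you write that ``the relevant paths all lie in $V(\alpha_l)$''; that is not literally true, since a $U_\alpha$-to-$S_l$ path in $\alpha$ can exit through $S_l$ and re-enter. What is true --- and what you in fact already invoked one sentence earlier to show that $S'$ separates $U_\alpha$ from $V_\alpha$ in $\alpha$ --- is that the \emph{initial segment} of any such path up to its first hit of $S_l$ lies entirely in $\alpha_l$; this segment must meet $S'$, which is what gives $S' \le S_l$. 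Second, in the ``no shortcut'' step for $\alpha_m$, the concatenation of the $U_\alpha$-to-$v$ reachability witness with the tail of $P_i$ is a walk rather than a path, but a $U_\alpha$-to-$V_\alpha$ walk avoiding $S_l$ contains a $U_\alpha$-to-$V_\alpha$ path avoiding $S_l$, which is all you need for the contradiction. The rest --- the automatic one-vertex-per-separator property of a maximum family of square-disjoint paths, the restriction of those paths to $\alpha_m$ to witness that $|S_l|=|S_r|$ is the minimum square separator size of $\alpha_m$, and the incidence arguments for properness (a middle vertex of $\alpha_l$ cannot be in $U_\alpha$ so the reachability witness supplies an edge in $E(\alpha_l)$; every neighbor of a middle vertex of $\alpha_m$ is itself forced into $V(\alpha_m)$) --- is exactly right.
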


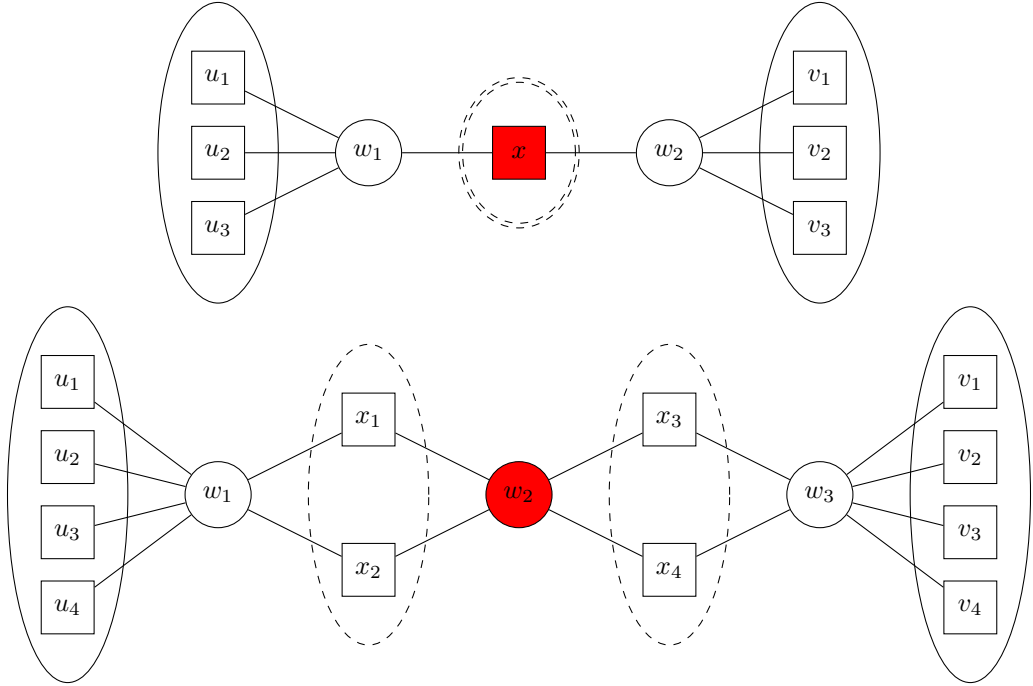
\begin{figure}
    \centering
    \begin{tikzpicture}
    [
      mycircle/.style={
         circle,
         draw=black,
         fill=white,
         fill opacity = 1,
         text opacity=1,
         inner sep=0pt,
         minimum size=25pt,
         font=\small},
      mysquare/.style={
         rectangle,
         draw=black,
         fill=white,
         fill opacity = 1,
         text opacity=1,
         inner sep=0pt,
         minimum height=20pt, 
         minimum width=20pt,
         font=\small},
      myarrow/.style={-Stealth},
      node distance=0.6cm and 1.2cm
      ]
      \draw (-4,0) ellipse (.8cm and 2cm);
      \draw (4,0) ellipse (.8cm and 2cm);
      \draw[dashed] (0,0) ellipse (.8cm and 1cm);
      \draw[dashed] (0,0) ellipse (.75cm and .9375cm);
      \node[mysquare]  at (-4, 1) (u1) {$u_1$};
      \node[mysquare]  at (-4, 0) (u2) {$u_2$};
      \node[mysquare]  at (-4, -1) (u3) {$u_3$};
      \node[mycircle]  at (-2, 0) (w1) {$w_1$};
      \node[mysquare, fill=red]  at (0, 0) (x) {$x$};
      \node[mycircle]  at (2, 0) (w2) {$w_2$};
      \node[mysquare]  at (4, 1) (v1) {$v_1$};
      \node[mysquare]  at (4, 0) (v2) {$v_2$};
      \node[mysquare]  at (4, -1) (v3) {$v_3$};
      \draw[-] (u1) -- (w1);
      \draw[-] (u2) -- (w1);
      \draw[-] (u3) -- (w1);
      \draw[-] (w1) -- (x);
      \draw[-] (x) -- (w2);
      \draw[-] (w2) -- (v1);
      \draw[-] (w2) -- (v2);
      \draw[-] (w2) -- (v3);
    \end{tikzpicture}
    \begin{tikzpicture}
    [
      mycircle/.style={
         circle,
         draw=black,
         fill=white,
         fill opacity = 1,
         text opacity=1,
         inner sep=0pt,
         minimum size=25pt,
         font=\small},
      mysquare/.style={
         rectangle,
         draw=black,
         fill=white,
         fill opacity = 1,
         text opacity=1,
         inner sep=0pt,
         minimum height=20pt, 
         minimum width=20pt,
         font=\small},
      myarrow/.style={-Stealth},
      node distance=0.6cm and 1.2cm
      ]
              \draw (-6,0) ellipse (.8cm and 2.5cm);
      \draw (6,0) ellipse (.8cm and 2.5cm);
      \draw[dashed] (-2,0) ellipse (.8cm and 2cm);
      \draw[dashed] (2,0) ellipse (.8cm and 2cm);
      \node[mysquare]  at (-6, 1.5) (u1) {$u_1$};
      \node[mysquare]  at (-6, 0.5) (u2) {$u_2$};
      \node[mysquare]  at (-6, -0.5) (u3) {$u_3$};
      \node[mysquare]  at (-6, -1.5) (u4) {$u_4$};
      \node[mycircle]  at (-4, 0) (w1) {$w_1$};
      \node[mysquare]  at (-2, 1) (x1) {$x_1$};
      \node[mysquare]  at (-2, -1) (x2) {$x_2$};
      \node[mycircle, fill = red]  at (0, 0) (w2) {$w_2$};
      \node[mysquare]  at (2, 1) (x3) {$x_3$};
      \node[mysquare]  at (2, -1) (x4) {$x_4$};
      \node[mycircle]  at (4, 0) (w3) {$w_3$};
      \node[mysquare]  at (6, 1.5) (v1) {$v_1$};
      \node[mysquare]  at (6, 0.5) (v2) {$v_2$};
      \node[mysquare]  at (6, -0.5) (v3) {$v_3$};
      \node[mysquare]  at (6, -1.5) (v4) {$v_4$};
      \draw[-] (u1) -- (w1);
      \draw[-] (u2) -- (w1);
      \draw[-] (u3) -- (w1);
      \draw[-] (u4) -- (w1);
      \draw[-] (w1) -- (x1);
      \draw[-] (w1) -- (x2);
      \draw[-] (x1) -- (w2);
      \draw[-] (x2) -- (w2);
      \draw[-] (w2) -- (x3);
      \draw[-] (w2) -- (x4);
      \draw[-] (x3) -- (w3);
      \draw[-] (x4) -- (w3);
      \draw[-] (w3) -- (v1);
      \draw[-] (w3) -- (v2);
      \draw[-] (w3) -- (v3);
      \draw[-] (w3) -- (v4);
    \end{tikzpicture}
    \caption{Examples of the decomposition into left, middle, and right parts. The sets $S_l$ and $S_r$ are shown with dotted ovals, and assuming $n \leq m \leq n^2$, a minimum weight vertex separator is shown in red.}
    \label{fig:canonical1}
\end{figure}
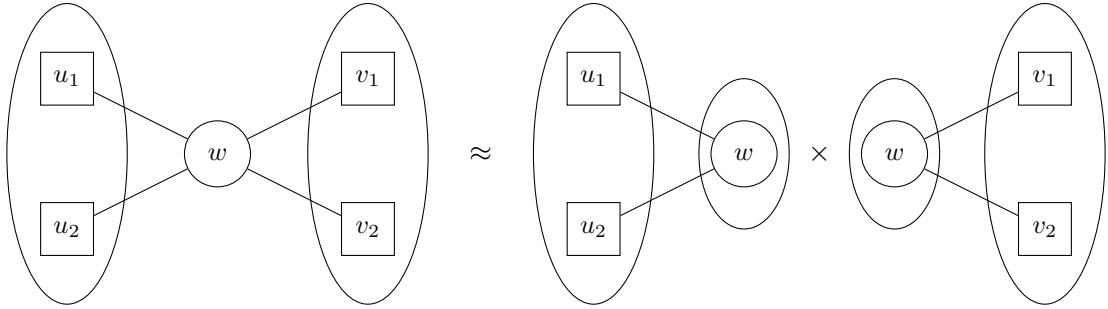
\begin{figure}[!hbt]
    \centering
    \begin{tikzpicture}[
      mycircle/.style={
         circle,
         draw=black,
         fill=white,
         fill opacity = 1,
         text opacity=1,
         inner sep=0pt,
         minimum size=25pt,
         font=\small},
      mysquare/.style={
         rectangle,
         draw=black,
         fill=white,
         fill opacity = 1,
         text opacity=1,
         inner sep=0pt,
         minimum height=20pt, 
         minimum width=20pt,
         font=\small},
      myarrow/.style={-Stealth},
      node distance=0.6cm and 1.2cm
      ]
      \draw (-6,0) ellipse (.8cm and 2cm);
      \draw (-2,0) ellipse (.8cm and 2cm);

      \node[mysquare]  at (-6, 1) (u1) {$u_1$};
      \node[mysquare]  at (-6, -1) (u2) {$u_2$};
      \node[mycircle]  at (-4, 0) (w) {$w$};
      \node[mysquare]  at (-2, 1) (v1) {$v_1$};
      \node[mysquare]  at (-2, -1) (v2) {$v_2$};
      \draw[-] (u1) -- (w);
      \draw[-] (u2) -- (w);
      \draw[-] (w) -- (v1);
      \draw[-] (w) -- (v2);
      \node at (-0.5,0) {$\approx$};
      \draw (1,0) ellipse (.8cm and 2cm);
      \draw (3,0) ellipse (.6cm and 1cm);
      \node[mysquare]  at (1, 1) (up1) {$u_1$};
      \node[mysquare]  at (1, -1) (up2){$u_2$};
      \node[mycircle]  at (3, 0) (wp) {$w$};
      \draw[-] (up1) -- (wp);
      \draw[-] (up2) -- (wp);
      \node at (4,0) {$\times$};
      \draw (5,0) ellipse (.6cm and 1cm);
      \draw (7,0) ellipse (.8cm and 2cm);
      \node[mycircle]  at (5, 0) (wpp) {$w$};
      \node[mysquare]  at (7, 1) (vp1) {$v_1$};
      \node[mysquare]  at (7, -1) (vp2){$v_2$};
      \draw[-] (wpp) -- (vp1);
      \draw[-] (wpp) -- (vp2);
      \end{tikzpicture}
    \caption{This figure shows the approximate decomposition of $S(2,2;0)$ using minimum weight vertex separators (assuming that $m \leq n^2$). While natural, this decomposition {\color{red}does not work} for our analysis.}
    \label{fig:failureofweightseparator}
\end{figure}

\begin{remark}
A natural attempt is to use minimum weight vertex separators for the decomposition of shapes into left, middle, and right parts. However, this decomposition is not the right one for our PSDness analysis. In particular, one property we want for our PSDness analysis is that square terms (i.e., terms which are composed of a left shape and its transpose) have a positive coefficient. However, the coefficient of $S(2,2;0)$ (which is a square term with this decomposition) is $l_4$ which can be negative. For an illustration, see \Cref{fig:failureofweightseparator}.
\end{remark}

\subsubsection{Indices With Parity Labels}
\label{sec:parity}
One complication for the matrix $M$ is the parity condition on the total degree of a square vertex in \eqref{eq:ME}.  
This means that if we factor $M$ as $XY$ where $X$, $Y$ are linear combinations of graph matrices, then in the multiplication $M_\alpha M_\beta$ where $M_A\in X$ and $M_B\in Y$, we had better match the parities of $\deg_{\alpha}(v)$ and $\deg_{\beta}(v)$ for all $v\in V_\alpha=U_\beta$. For this reason, we refine Definition \ref{def:graph_matrix} with a parity labeling as follows.

\begin{definition}[Modified graph matrix definitions with parities] \label{def:graphmatrixparity} \ 
\begin{enumerate}
    \item\label{item:pindices} (Matrix indices) A matrix index is a pair $(I,a)$ where $I\subseteq \cS$ and $a: I\to\{odd,even\}$ is a function we call the parity labeling on $I$.
    \item\label{item:pribbonsshapes} (Ribbons, shapes) A ribbon $R$ is defined as before except the two indices are now $(U_R,a_R),(V_R,b_R)$ where $a_R:U_R\to\{odd,even\}$ and $b_R:V_R\to\{odd,even\}$ are parity labels such that 
    \begin{enumerate}
        \item For each vertex $u \in U_R \setminus V_R$, $a_R$ maps $u$ to the parity of its {\bf total degree} in $R$. Similarly, each vertex $v \in V_R \setminus U_R$, $b_R$ maps $v$ to the parity of its {\bf total degree} in $R$.
        \item For each vertex $u \in U_R \cap V_R$, if the {\bf total degree} of $u$ is even then $a_R$ and $b_R$ map $u$ to the same parity and if the {\bf total degree} of $u$ is odd then $a_R$ and $b_R$ map $u$ to opposite parities.
    \end{enumerate}
    In other words, for each vertex $u \in U_R \cup V_R$, the sum of the parity label(s) for $u$ is the parity of the {\bf total degree} of $u$.

    We modify the definition of a shape similarly. 
    \item\label{item:pshapeofribbon} (Shape of a ribbon) We say a ribbon $R$ has shape $\alpha$ if there is a bijection $\pi:V(\alpha)\to V(R)$ that satisfies the previous conditions plus $a_\alpha=a_R\circ\pi$ and $b_\alpha=b_R\circ\pi$. 

    \item\label{item:pgraphmatrix} (Ribbon matrices, graph matrices) A ribbon matrix $M_R$ is defined by \eqref{eq:ribbonmatrix} except now the only nonzero element is at position $\big((I,a),(J,b)\big)=\big((U_R,a_R),(V_R,b_R)\big)$. 
    The graph matrix $M_\alpha$ is defined to be $\Sum_{R:\text{ has shape }\alpha} M_R$. 
    For each shape $\alpha$, we associate the scaled graph matrix $n^{-{w(E(\alpha))\over 2}}M_\alpha$ to it.
    \item\label{item:pcanonicaldecomp}(Canonical decomposition) When decomposing a shape $\alpha$ into $(\alpha_l,\alpha_m,\alpha_r)$, the parity labelings on indices of $\alpha_l$, $\alpha_m$, $\alpha_r$ are decided by the shapes themselves via \Cref{item:pribbonsshapes}. Note the parity labelings on $U_\alpha$ are the same in $\alpha_l$ and $\alpha$, and similarly for $V_\alpha$ in $\alpha_r$ and $\alpha$. 

    \item\label{item:ptranspose}(Transpose) For $\alpha = \big(V(\alpha), E(\alpha), (U_\alpha,a_\alpha), (V_\alpha,b_\alpha)\big)$, $\alpha^{\top}:= \big(V(\alpha), E(\alpha), (V_\alpha,b_\alpha), (U_\alpha,a_\alpha)\big)$. 
\end{enumerate}
Other concepts about ribbons and shapes (size, weight, middle vertices, isolated vertices, total size, trivial shapes, left/right/middle shapes) are defined in the same way as before. 
\end{definition}

While adding the parities is important for describing the correct factorization, its impact on the analysis turns out to be superficial. For the remainder of the paper, we use indices with parities but sweep the parities under the rug. That said, we give explanations at necessary places. The first one is the following.

\begin{remark}[Norm bounds for shapes with parities]
The norm bounds in \Cref{thm:norm_control} and \Cref{cor:expansionbounds} also hold in the setting with parities. 
For clarity, in this remark, we distinguish the previous and current objects by the superscript ``parity''. 
For fixed $\alpha$, \Cref{item:pribbonsshapes} implies that ribbons $R$ of shape $\alpha$ are in 1-1 correspondence to ribbons $R^{\parity}$ of shape $\alpha^{\parity}$. The nonzero Fourier characters in $M_\alpha$ are also in 1-1 correspondence to those in $M_{\alpha}^{\parity}$, although their relative positions can be different: it can happen that some $R_1,R_2$ have the same matrix indices while $R_1^{\parity},R_2^{\parity}$ have different ones. The trace power estimate for proving \Cref{thm:norm_control} applies to $M_\alpha^{\parity}$, since for all $t$, any product $R^{\parity}_1\cdots R^{\parity}_{2t}$ in $\left( M_{\alpha}^{\parity}\cdot\big( M_{\alpha}^{\parity}\big)^{\top}\right)^t$ is a product in $\left( M_{\alpha}\cdot M_{\alpha}^{\top}\right)^t$ by ``forgetting'' the parity, with the same expression in the resulting entry. Note the converse is not true: some products in $\left( M_{\alpha}\cdot M_{\alpha}^{\top}\right)^t$ may be non-multiplicable in $\left( M_{\alpha}^{\parity}\cdot\big( M_{\alpha}^{\parity}\big)^{\top}\right)^t$ because of unmatched matrix indices. Given \Cref{thm:norm_control}, \Cref{cor:expansionbounds} follows by applying \Cref{prop:expansion} and \Cref{lem:expansionbounds} as before.
\end{remark}

\subsubsection{Approximate Decomposition of $M$}
\begin{definition}[Composable]
We say that ribbons $R_1$ and $R_2$ are composable if $U_{R_2} = V_{R_1}$. More generally, we say that a ribbon sequence $R_1,\ldots,R_j$ is composable if $U_{R_{i+1}} = V_{R_i}$ for all $i\in [j-1]$. We make an analogous definition for shapes. Note that being composable is not a commutative property. 
\end{definition}

\begin{definition}[Ribbon composition]\label{def:composition}
    We call a composable ribbon sequence $R_1,\ldots,R_j$ a \emph{ribbon composition} or \emph{ribbon product}. The \emph{result} $R$ of the ribbon composition is the graph union, which is viewed as a ribbon that can have multi-edges and has $U_{R}=U_{R_1}$, $V_{R}=V_{R_j}$. 
\end{definition}

\begin{definition}[Properly composable; extra intersections]\label{def:extraintersections}
We say that ribbons $R_1$ and $R_2$ are properly composable, if they are composable and $V(R_1) \cap V(R_2) = V_{R_1} = U_{R_2}$. 
More generally, we say that ribbons $R_1,\ldots,R_j$ are properly composable, if they are composable and the \emph{pairwise extra intersections}, which we sometimes also refer to as the \emph{pairwise non-trivial intersections}, $V_{\extraintersect}(R_{i_1},R_{i_2}):=\Big(V(R_{i_1}) \cap V(R_{i_2})\Big) \backslash\Big(\mathop{\bigcap}\limits_{i' \in [i_1+1,i_2]}{U_{R_{i'}}}\Big)$ are empty for all $i_1 < i_2$. 
\end{definition}
\begin{definition}[Canonical product]  
Given shapes $\alpha_1,\alpha_2,\ldots,\alpha_j$, we define the canonical product 
\[
[M_{\alpha_1},M_{\alpha_2},\ldots, M_{\alpha_j}]_{\can} = \sum_{\substack{(R_1,\ldots,R_j):\\ R_1,\ldots,R_j \text{ are properly composable and have shapes }\alpha_1,\ldots,\alpha_j}}{M_{R_1}M_{R_2}\ldots{M_{R_j}}}
\]
which is $0$ if the shapes are not composable. We extend $[\ldots]_{\can}$ to linear combinations of graph matrices by making it a multi-linear operator.
\end{definition}

\begin{definition}[Matrix $L$]\label{def:L}
We define 
\[
L = \mathop{\sum}\limits_{\substack{\sigma: \\
\sigma \text{ is a proper left shape, all square vertices in $V(\sigma) \setminus V_{\sigma}$ have even total degree;}\\
\vert U_\sigma\vert\leq \dsos,\ \vert V_\sigma\vert\leq \dsos,\ \vert \totalsize(\sigma)\vert\leq\truncation}}
{\eta_\sigma(\lambda_{\sigma}M_{\sigma})}.
\]
\end{definition}
\begin{definition}[Matrix $Q_0$]\label{def:Q0}
We define 
\[Q_0 = \mathop{\sum}\limits_{\substack{\tau: 
\\ \tau \text{ is a proper middle shape}, \text{ all square vertices in $V(\tau) \setminus (U_{\tau} \cup V_{\tau})$ have even total degree};
\\ \vert U_\tau\vert\leq \dsos,\ \vert V_\tau\vert\leq \dsos,\ \vert \totalsize(\sigma)\vert\leq\truncation}} 
{\eta_\tau(\lambda_{\tau}M_{\tau})}.\]
\end{definition}

\begin{remark}[Parity labels for $L$ and $Q_0$]
For left shapes $\sigma$ in $L$, we define the parity labeling $a_{\sigma}$ on $U_{\sigma}$ to be all even; note that this automatically determines $b_{\sigma}$ on $V_{\sigma}$. 
Similarly, for right shapes $\sigma^\top$ in $L^{\top}$, we define the parity labeling $b_{\sigma^\top}$ on $V_{\sigma^\top}$ to be all even. 

For middle shapes $\tau\in Q_0$, there are two possibilities for the parity labels of each vertex $u \in U_{\sigma} \cap V_{\sigma}$ and we take both of these possibilities. Technically, this gives $2^{|U_{\tau} \cap V_{\tau}|}$ different terms for $\tau$ but we group (sum) these terms together and think of them as one term. 
The matrix norm bounds \Cref{thm:norm_control} and \Cref{cor:expansionbounds} apply to this sum as well. To see this, note that both the matrix row set and matrix column set can be partitioned into $2^{|U_{\tau} \cap V_{\tau}|}$ subsets such that each term is supported on a distinct row subset and a distinct column subset, so the norm of the sum is at most the maximum norm of the individual terms.
\end{remark}

We observe that with the parity labels, $M = [L,Q_0,L^{\top}]_{\can}$ up to a truncation error. 
In particular, the only terms appearing in $[L,Q_0,L^{\top}]_{\can} - M$ are terms where $|\totalsize(\alpha_l)| \leq \truncation$, $|\totalsize(\alpha_m)| \leq \truncation$, and $|\totalsize(\alpha_r)| \leq \truncation$ but $|\totalsize(\alpha)| > \truncation$. In \Cref{sec:error_analysis}, we show that $\norm{M - [L,Q_0,L^{\top}]_{\can}}\leq n^{-\Omega(\eps\truncation)}$ (\Cref{lem:Mcan}).

\subsection{Intersection Configurations and Finding $Q$}\label{sec:recursive}
To find a $Q$ such that $M = LQL^{\top}$, we use a recursive procedure as in \cite{BHKKMP16,pang2021sos,jones2021sum, jones2023sum, KPW24}. 
We start with $M_{target} = M$ and $i = 0$, then do the following:
\begin{definition}[Recursive factorization]\label{def:recursivefactorization}
The recursive factorization of the moment matrix $M$ (see \Cref{def:moment_matrix}) goes as follows. 
In round $i=0,1,\dots,2\dsos$, 
\begin{enumerate} 
\item\label{recursive1} We choose $Q_i$ so that $[L,Q_i,L^{\top}]_{\can} = M_{\target}$ up to a truncation error;
\item\label{recursive2} 
We update $M_{\target}\leftarrow M_{\target}-L{Q_i}L^{\top}$, $i\leftarrow i+1$.
\end{enumerate}

We repeat the procedure and show that it is guaranteed to terminate in $2\dsos$ steps i.e., $M_{\target}$ becomes 0. Finally, we take $Q = \Sum_{i=0}^{2\dsos}{Q_i}$. 
\end{definition}

Each $Q_i$ can be understood as follows. For simplicity, we will ignore truncation errors for now by writing ``$\approx$'' until \Cref{def:L<=t}. First, note that $[L,Q_0,L^\top] \approx M$. Then after round $i$, $M_{\target} \approx [L,Q_{i},L^{\top}]_{\can} - LQ_{i}L^{\top}$. That is, $M_{\target}$ is minus one times the sum of all non-canonical products (with coefficients) in $L{Q_{i}}L^{\top}$ up to a truncation error.

To find a $Q_i$ that fulfills item 1 in \Cref{def:recursivefactorization}, for each non-canonical product $(R_1,R_2,R_3)$ in $L{Q_{i-1}}L^{\top}$, we construct a tuple $(R'_1,R'_2,R'_3)$ such that $M_{R_1}M_{R_2}M_{R_3} = M_{R'_1}M_{R'_2}M_{R'_3}$, the shape of $R'_1$ is a left shape, the shape of $R'_3$ is a right shape, and $R'_1,R'_2,R'_3$ are properly composable. We find the tuple $(R'_1,R'_2,R'_3)$ as follows:

\begin{definition}[Separating decomposition]\label{def:find_Qi}\ 
\begin{enumerate}
\item Let $A'$ be the leftmost minimum {\bf square} separator of $A_{R_1}$ and $B_{R_1} \cup V_{\extraintersect}(R_1,R_2)\cup V_{\extraintersect}(R_1,R_3)$, where $V_{\extraintersect}(R_i,R_j)$ is the set of vertices involved in extra intersections between $R_i$ and $R_j$. Similarly, let $B'$ be the rightmost minimum {\bf square} separator of $A_{R_3} \cup V_{\extraintersect}(R_1,R_3) \cup V_{\extraintersect}(R_2,R_3)$ and $B_{R_3}$ in $R_3$ 

\item Let $R'_1$ be the part of $R_1$ between $A_{R_1}$ and $A'$ (cf. \Cref{def:canonicaldecomposition}), and let $R''_1$ be the part of $R_1$ between $A'$ and $B_{R_1}$. Similarly, let $R'_3$ be the part of $R_3$ between $B'$ and $B_{R_3}$, and let $R''_3$ be the part of $R_3$ between $A_{R_3}$ and $B'$.

\item We define $R'_2$ to be the composition of $R''_1$, $R_2$, and $R''_3$. Note that $R'_2$ might be improper.

\item We choose the parities for vertices in $A' \cap B'$ in $R'_2$ to match the parities of their total degrees in $R'_1$ and $R'_3$.
\end{enumerate}
\end{definition}

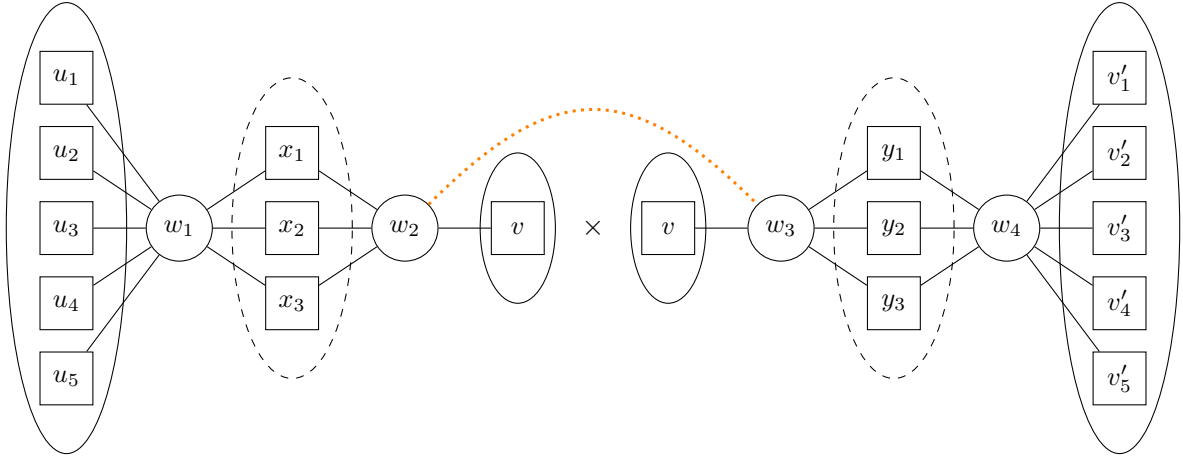
\begin{figure}
    \centering
    \begin{tikzpicture}[
      mycircle/.style={
         circle,
         draw=black,
         fill=white,
         fill opacity = 1,
         text opacity=1,
         inner sep=0pt,
         minimum size=25pt,
         font=\small},
      mysquare/.style={
         rectangle,
         draw=black,
         fill=white,
         fill opacity = 1,
         text opacity=1,
         inner sep=0pt,
         minimum height=20pt, 
         minimum width=20pt,
         font=\small},
      myarrow/.style={-Stealth},
      node distance=0.6cm and 1.2cm
      ]
      \draw (-7,0) ellipse (.8cm and 3cm);
      \draw[dashed] (-4,0) ellipse (.8cm and 2cm);
      \draw (-1,0) ellipse (.5cm and 1cm);
      \node at (0,0) {$\times$};
      \draw (1,0) ellipse (.5cm and 1cm);
      \draw[dashed] (4,0) ellipse (.8cm and 2cm);
      \draw (7,0) ellipse (.8cm and 3cm);
      \node[mysquare]  at (-7, 2) (u1) {$u_1$};
      \node[mysquare]  at (-7, 1) (u2) {$u_2$};
      \node[mysquare]  at (-7, 0) (u3) {$u_3$};
      \node[mysquare]  at (-7, -1) (u4) {$u_4$};
      \node[mysquare]  at (-7, -2) (u5) {$u_5$};
      \node[mycircle]  at (-5.5, 0) (w1) {$w_1$};
      \node[mysquare]  at (-4, 1) (x1) {$x_1$};
      \node[mysquare]  at (-4, 0) (x2) {$x_2$};
      \node[mysquare]  at (-4, -1) (x3) {$x_3$};
      \node[mycircle]  at (-2.5, 0) (w2) {$w_2$};
      \node[mysquare]  at (-1, 0) (v) {$v$};
      \node[mysquare]  at (1, 0) (up) {$v$};
      \node[mycircle]  at (2.5, 0) (w3) {$w_3$};
      \node[mysquare]  at (4, 1) (y1) {$y_1$};
      \node[mysquare]  at (4, 0) (y2) {$y_2$};
      \node[mysquare]  at (4, -1) (y3) {$y_3$};
      \node[mycircle]  at (5.5, 0) (w4) {$w_4$};
      \node[mysquare]  at (7, 2) (vp1) {$v'_1$};
      \node[mysquare]  at (7, 1) (vp2) {$v'_2$};
      \node[mysquare]  at (7, 0) (vp3) {$v'_3$};
      \node[mysquare]  at (7, -1) (vp4) {$v'_4$};
      \node[mysquare]  at (7, -2) (vp5) {$v'_5$};
      \draw[-] (u1) -- (w1);
      \draw[-] (u2) -- (w1);
      \draw[-] (u3) -- (w1);
      \draw[-] (u4) -- (w1);
      \draw[-] (u5) -- (w1);
      \draw[-] (w1) -- (x1);
      \draw[-] (w1) -- (x2);
      \draw[-] (w1) -- (x3);
      \draw[-] (x1) -- (w2);
      \draw[-] (x2) -- (w2);
      \draw[-] (x3) -- (w2);
      \draw[-] (w2) -- (v);
      \draw[-] (up) -- (w3);
      \draw[-] (w3) -- (y1);
      \draw[-] (w3) -- (y2);
      \draw[-] (w3) -- (y3);
      \draw[-] (y1) -- (w4);
      \draw[-] (y2) -- (w4);
      \draw[-] (y3) -- (w4);
      \draw[-] (w4) -- (vp1);
      \draw[-] (w4) -- (vp2);
      \draw[-] (w4) -- (vp3);
      \draw[-] (w4) -- (vp4);
      \draw[-] (w4) -- (vp5);
      \draw[color=orange, dotted, line width=0.4mm] (w2) .. controls (-0.5,2) and (0.5,2) .. (w3);
      \end{tikzpicture}
    \caption{An intersection configuration for the product $M_{\sigma}M_{\sigma^{\top}}$ where $\sigma$ is the left shape shown on the left. The orange arch between shape vertices $w_2$ and $w_3$ means that the two vertices coincide in the ribbon products being represented. The dotted set of vertices in $\sigma$ is the leftmost minimum square separator between $U_{\sigma}$ and the union of the set of intersected vertices and $V_{\sigma}$. Similarly the dotted set of vertices in $\sigma^{\top}$ is the rightmost minimum vertex separator between the union of $U_{\sigma^{\top}}$ and the set of the intersected vertices and $V_{\sigma^{\top}}$.}
    \label{fig:intersection_term}
\end{figure}

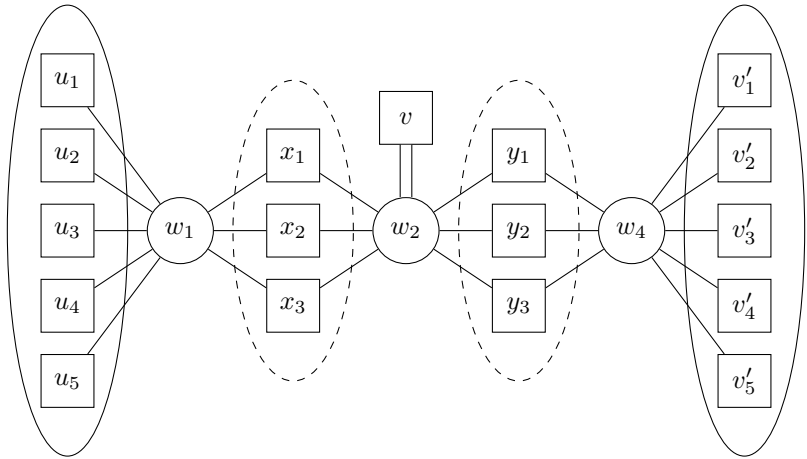
\begin{figure}
    \centering
    \begin{tikzpicture}[
      mycircle/.style={
         circle,
         draw=black,
         fill=white,
         fill opacity = 1,
         text opacity=1,
         inner sep=0pt,
         minimum size=25pt,
         font=\small},
      mysquare/.style={
         rectangle,
         draw=black,
         fill=white,
         fill opacity = 1,
         text opacity=1,
         inner sep=0pt,
         minimum height=20pt, 
         minimum width=20pt,
         font=\small},
      myarrow/.style={-Stealth},
      node distance=0.6cm and 1.2cm
      ]
      \draw (-4.5,0) ellipse (.8cm and 3cm);
      \draw[dashed] (-1.5,0) ellipse (.8cm and 2cm);
      \draw[dashed] (1.5,0) ellipse (.8cm and 2cm);
      \draw (4.5,0) ellipse (.8cm and 3cm);
      \node[mysquare]  at (-4.5, 2) (u1) {$u_1$};
      \node[mysquare]  at (-4.5, 1) (u2) {$u_2$};
      \node[mysquare]  at (-4.5, 0) (u3) {$u_3$};
      \node[mysquare]  at (-4.5, -1) (u4) {$u_4$};
      \node[mysquare]  at (-4.5, -2) (u5) {$u_5$};
      \node[mycircle]  at (-3, 0) (w1) {$w_1$};
      \node[mysquare]  at (-1.5, 1) (x1) {$x_1$};
      \node[mysquare]  at (-1.5, 0) (x2) {$x_2$};
      \node[mysquare]  at (-1.5, -1) (x3) {$x_3$};
      \node[mycircle]  at (0, 0) (w2) {$w_2$};
      \node[mysquare]  at (0, 1.5) (v) {$v$};
      \node[mysquare]  at (1.5, 1) (y1) {$y_1$};
      \node[mysquare]  at (1.5, 0) (y2) {$y_2$};
      \node[mysquare]  at (1.5, -1) (y3) {$y_3$};
      \node[mycircle]  at (3, 0) (w4) {$w_4$};
      \node[mysquare]  at (4.5, 2) (vp1) {$v'_1$};
      \node[mysquare]  at (4.5, 1) (vp2) {$v'_2$};
      \node[mysquare]  at (4.5, 0) (vp3) {$v'_3$};
      \node[mysquare]  at (4.5, -1) (vp4) {$v'_4$};
      \node[mysquare]  at (4.5, -2) (vp5) {$v'_5$};
      \draw[-] (u1) -- (w1);
      \draw[-] (u2) -- (w1);
      \draw[-] (u3) -- (w1);
      \draw[-] (u4) -- (w1);
      \draw[-] (u5) -- (w1);
      \draw[-] (w1) -- (x1);
      \draw[-] (w1) -- (x2);
      \draw[-] (w1) -- (x3);
      \draw[-] (x1) -- (w2);
      \draw[-] (x2) -- (w2);
      \draw[-] (x3) -- (w2);
      \draw[double, double distance = 4.0] (w2) -- (v);
      \draw[-] (w2) -- (y1);
      \draw[-] (w2) -- (y2);
      \draw[-] (w2) -- (y3);
      \draw[-] (y1) -- (w4);
      \draw[-] (y2) -- (w4);
      \draw[-] (y3) -- (w4);
      \draw[-] (w4) -- (vp1);
      \draw[-] (w4) -- (vp2);
      \draw[-] (w4) -- (vp3);
      \draw[-] (w4) -- (vp4);
      \draw[-] (w4) -- (vp5);
      \end{tikzpicture}
    \caption{Result of  \Cref{fig:intersection_term} represented by a multi-edged shape. All edges have label 1 and there are two edges between $v,w_2$.}
    \label{fig:intersection_term_result}
\end{figure}

We take $Q_i$ to be the sum of the ribbons $R'_2$ with the appropriate coefficients.
\begin{remark}
Note that the ribbon $R'_2$ and its coefficient is independent of $R'_1$ and $R'_3$, i.e., it only depends on $R''_1$, $R_2$, and $R''_3$. As in previous SoS lower bounds for average case problems, this property is crucial for the factorization of $M$.
\end{remark}

We now write down an explicit expression for $Q_j$ using intersection configurations. In particular, for all $j\geq 1$ we will have that 
\[
Q_j \approx (-1)^{j}\Sum_{\substack{\mathcal{P}:\ \text{an intersection configuration on shapes}\\ \gamma_j,\ldots,\gamma_1,\tau,{\gamma'}^{\top}_1,\ldots,{\gamma'}^{\top}_j}} 
{\eta_{\mathcal{P}}{\lambda_{\mathcal{P}}}N(\mathcal{P})M_{\tau_{\mathcal{P}}}}
\]
where recall that $\approx$ means ignoring a truncation error, and: 
\begin{enumerate}
\item $\gamma_j,\ldots,\gamma_1,\tau,{\gamma'}^{\top}_1,\ldots,{\gamma'}^{\top}_j$ are the shapes of composable ribbons that non-trivially intersect. 
Here, each shape has total size at most $\truncation$, each index set has size at most $\dsos$, each square vertex has even total degree, each $\gamma_i$ is a proper left shape, each $\gamma_i'^{\top}$ is a proper right shape, and $\tau$ is a proper middle shape. 
We will sometimes abbreviate these conditions as ``each $\gamma_i$ is in $L$, $\tau$ is in $Q_0$, and each $\gamma'^{\top}_i$ is in $L^{\top}$.''
\item The {\it intersection configuration} $\mathcal{P}$ (\Cref{def:intersconfig}) describes the intersections between the shapes and what happens to the resulting multi-edges (if any). $\tau_{\mathcal{P}}$ is the shape of the composition.
\item $\lambda_{\mathcal{P}}$ is the product of the scaling coefficients of all shapes in $\mathcal{P}$.
\item $\eta_{\mathcal{P}}$ is the part of coefficient of $\tau_{\mathcal{P}}$ coming from the Hermite coefficients of $\gamma_j,\ldots,\gamma_1,\tau,{\gamma'}^{\top}_1,\ldots,{\gamma'}^{\top}_j$ and the linearization of the multi-edges.
\item $N(\mathcal{P})$ is a combinatorial factor that counts the number of different ways to obtain a ribbon $R$ of shape $\alpha_{P,\reduced}$ from all possible ribbon realizations of an intersection configuration $\mathcal{P}$. 
\end{enumerate}

The following definitions can be quite technical, please see \Cref{fig:intersection_term,fig:intersection_term_result} for an example. 

\begin{definition}[Intersection configurations]\label{def:intersconfig}
Given $j \in \mathbb{N}$ together with proper left shapes $\gamma_j,\ldots,\gamma_1$, 
a proper middle shape $\tau$, and proper right shapes ${\gamma'}^{\top}_1,\ldots,{\gamma'}^{\top}_j$ such that $\gamma_j,\ldots,\gamma_1,\tau,{\gamma'}^{\top}_1,\ldots,{\gamma'}^{\top}_j$ are composable and for each $i \in [j]$, $\gamma_i$ and ${\gamma'}^{\top}_i$ are not both trivial, we define an intersection configuration $\mathcal{P}$ on $\gamma_j,\ldots,\gamma_1,\tau,{\gamma'}^{\top}_1,\ldots,{\gamma'}^{\top}_j$ to consist of the following data. 

First, $\mathcal{P}$ specifies intersection configurations $(\mathcal{P}_1,\ldots,\mathcal{P}_j)$ such that: 

\begin{enumerate}
    \item $\mathcal{P}_1$ specifies the matching between the indices in $V_{\gamma_1}$ and $U_{\tau}$ and the matching between the indices in $V_{\tau}$ and $U_{{\gamma'}^{\top}_1}$. $\mathcal{P}_1$ also specifies the non-trivial intersections between $\gamma_1$, $\tau$, and ${\gamma'}^{\top}_1$. 

    \item For each $i \in [2,j]$, $\mathcal{P}_i$ specifies the matching between the indices in $V_{\gamma_i}$ and $U_{\gamma_{i-1}}$ and the matching between the indices in $V_{{\gamma'}^{\top}_{i-1}}$ and $U_{{\gamma'}^{\top}_i}$. $\mathcal{P}_i$ also specifies the non-trivial intersections between $\gamma_i$, $\tau_{\mathcal{P}_{i-1}}$, and ${\gamma'}^{\top}_i$ where $\tau_{\mathcal{P}_{i-1}}$ is the result of applying the intersection configurations $\mathcal{P}_1,\ldots,\mathcal{P}_{i-1}$ to $\gamma_{i-1},\ldots,\gamma_1,\tau,{\gamma'}^{\top}_1,\ldots,{\gamma'}^{\top}_{i-1}$. In other words, $\tau_{\mathcal{P}_{i-1}}$ is the shape which results from taking the composition of $\gamma_{i-1},\ldots,\gamma_1,\tau,{\gamma'}^{\top}_1,\ldots,{\gamma'}^{\top}_{i-1}$ given by the matchings in $\mathcal{P}_1,\ldots,\mathcal{P}_{i-1}$ and merging all pairs of vertices which are intersected by $\mathcal{P}_1,\ldots,\mathcal{P}_{i-1}$. Note that $\tau_{\mathcal{P}_{i-1}}$ may be an improper shape as it may have multi-edges.

    \item (Intersection-separating condition) \label{cond:inters-sep}
    For all $i \in [j]$, $U_{\gamma_i}$ is the minimum square vertex separator in $\gamma_i$ between $U_{\gamma_i}$ and $\{\text{non-trivial intersections specified by } \mathcal{P}_i\} \cup V_{\gamma_i}$. Similarly, $V_{\gamma_i}$ is the minimum square vertex separator in $\gamma'^{\top}_i$ between $U_{\gamma_i} \cup \{\text{non-trivial intersections specified by } \mathcal{P}_i\}$ and $V_{\gamma_i}$.
\end{enumerate}
Second, letting $\tau_{\mathcal{P}_j}$ be the shape which results from applying the intersection configurations $\mathcal{P}_1,\ldots,\mathcal{P}_{j}$ to $\gamma_{j},\ldots,\gamma_1,\tau,{\gamma'}^{\top}_1,\ldots,{\gamma'}^{\top}_{j}$, for each multi-edge $e \in E(\tau_{\mathcal{P}_j})$, $\mathcal{P}$ specifies a label $l_e \in \mathbb{N} \cup \{0\}$ describing a term which results from linearizing this multi-edge. 

We define the shape $\tau_{\mathcal{P}}$ to be the result when we replace each multi-edge $e \in E(\tau_{\mathcal{P}_j})$ with a single edge with label $l_e$. Note that $\tau_{\mathcal{P}}$ may be an improper shape as some of the labels $l_e$ may be $0$ so $\tau_{\mathcal{P}}$ may contain isolated vertices.

We call $j$ the length of $\mathcal{P}$, and $\gamma_j,\ldots,\tau,\ldots,\gamma'^{\top}_j$ the shapes in $\mathcal{P}$. The vertex set of $\mathcal{P}$ is $V(P):=V(\gamma_j)\cup\dots\cup V(\tau)\cup\ldots\cup V({\gamma'}^{\top}_j)$, where vertices in indexed sets that are matched by $\mathcal{P}$ will be treated as the same but there is no other vertex identifications regardless of the non-trivial intersections specified by $\mathcal{P}$. Note that $V(\tau_{\mathcal{P}})$ is different from $V(P)$. We view two intersection configurations $\mathcal{P}$ and $\mathcal{P}'$ as the same if there are bijections between their vertices that maps $\mathcal{P}$ to $\mathcal{P}'$, including the edge labels.
\end{definition}

For the remainder of the paper, whenever we write $\gamma_j,\ldots,\gamma_1,\tau,{\gamma'}^{\top}_1,\ldots,{\gamma'}^{\top}_j$, we assume that 
\begin{enumerate}
\item $\gamma_j,\ldots,\gamma_1$ are proper left shapes that are in $L$.
\item $\tau$, is a proper middle shape that is in $Q_0$.
\item ${\gamma'}^{\top}_1,\ldots,{\gamma'}^{\top}_j$ are proper right shapes that are in $L^\top$.
\item $\gamma_j,\ldots,\gamma_1,\tau,{\gamma'}^{\top}_1,\ldots,{\gamma'}^{\top}_j$ are composable.
\item For each $i \in [j]$, $\gamma_i$ and ${\gamma'}^{\top}_i$ are not both trivial.
\end{enumerate}

\begin{definition}[Configuration coefficients]\label{def:configcoeff}
Given $\gamma_j,\ldots,\gamma_1,\tau,{\gamma'}^{\top}_1,\ldots,{\gamma'}^{\top}_j$, we let $\intset$ denote the set of all different intersection configurations on $\gamma_j,\ldots,\gamma_1,\tau,{\gamma'}^{\top}_1,\ldots,{\gamma'}^{\top}_j$. For $\mathcal{P} \in \intset$, 

\begin{enumerate}
\item We define the scaling coefficient $\lambda_{\mathcal{P}}$ to be 
$
\lambda_{\mathcal{P}} = \left(\prod_{i=1}^{j}{\lambda_{\gamma_i}}\right)\lambda_{\tau}\left(\prod_{i=1}^{j}{\lambda_{{\gamma'}^{\top}_i}}\right)$. 
\item We define the Hermite coefficient $\eta_{\mathcal{P}}$ for $\tau_{\mathcal{P}}$ to be 
\[
\eta_{\mathcal{P}} = \left(\prod_{i=1}^{j}{\eta_{\gamma_i}}\right)\eta_{\tau}\left(\prod_{i=1}^{j}{\eta_{{\gamma'}^{\top}_i}}\right)\prod_{\text{multi-edges } e \in E(\tau_{\mathcal{P}_j})}{(\text{coefficient of } l_e \text{ when linearizing } e)}.
\] 
\item There may be many different ways to obtain a ribbon realization of an intersection configuration given the resulting ribbon. To handle this, given a ribbon $R$ of shape $\tau_{\mathcal{P}}$, we define $N(\mathcal{P})$ to be the number of ways to specify the following data so that the induced intersection configuration is equivalent to $\mathcal{P}$ and the resulting ribbon is $R$.
\begin{enumerate}
\item[1.] We have a tuple $(R_{-j},\ldots,R_{-1},R_0,R_1,\ldots,R_j)$ having shapes $\gamma_j,\ldots,\gamma_1,\tau,{\gamma'}^{\top}_1,\ldots,{\gamma'}^{\top}_j$.
\item[2.] Letting $R'$ be the ribbon of shape $\tau_{\mathcal{P}_j}$ 
such that $\prod_{i=-j}^{j}{M_{R_i}} = M_{R'}$, we have a map from the multi-edges $e \in E(R')$ to $\mathbb{N} \cup \{0\}$ specifying which term to take when we linearize $e$.
\end{enumerate}
\end{enumerate}
\end{definition}
\begin{remark}
Computing $N(\mathcal{P})$ can be tricky. 
We only describe $N(\mathcal{P})$ for well-behaved intersection configurations involving simple spiders in \Cref{sec:well-behaved}. For an upper bound on $N(\mathcal{P})$ in general, see \Cref{lem:NP}.
\end{remark}

In order to avoid repeatedly writing out $\gamma_j,\ldots,\gamma_1,\tau,{\gamma'}^{\top}_1,\ldots,{\gamma'}^{\top}_j$ in our sums, we make the following definition.

\begin{definition}
We define $\intsetoflength{j}$ to be the set of all intersection configurations of length $j$. Note that when we sum over $\mathcal{P} \in \intsetoflength{j}$, we are implicitly summing over the possible $\gamma_j,\ldots,\gamma_1,\tau,{\gamma'}^{\top}_1,\ldots,{\gamma'}^{\top}_j$.
\end{definition}

With these definitions, up to truncation error, we can describe $Q_j$ $(j\geq 1)$ by:
\[
Q_j \approx (-1)^{j}\sum_{\mathcal{P} \in \intsetoflength{j}} 
{\eta_{\mathcal{P}}{\lambda_{\mathcal{P}}}N(\mathcal{P})M_{\tau_{\mathcal{P}}}}.
\]

We now describe and handle the truncation errors in a precise manner. 
The next two definitions are technical and are used to make sure that \eqref{eq:round_j} holds. 

\begin{definition}[$L_{\leq t}$]\label{def:L<=t}
We define $L_{\leq t} = 
\Sum_{\sigma\text{ in }L:\ \totalsize(\sigma)\leq t}{\eta_{\sigma}\lambda_{\sigma}M_{\sigma}}$. 
Note that $L = L_{\leq \truncation}$.
\end{definition}

\begin{definition}[$l_{\mathcal{P}}$, $r_{\mathcal{P}}$]\label{def:lPrP}
Given an intersection configuration $\mathcal{P} \in \intset$, we let  
$l_{\mathcal{P}}:= \truncation - \Sum_{i=1}^{j}{(\totalsize(\gamma_i) - |U_{\gamma_i}|)}$. Similarly, we let $r_{\mathcal{P}} := \truncation - \Sum_{i=1}^{j}{(\totalsize({\gamma'}^{\top}_i) - |V_{{\gamma'}^{\top}_i}|)}$.
\end{definition}

With these definitions, we have the following equation.

\begin{equation}\label{eq:round_j}
[L,Q_0,L^{\top}]_{\can} = L{Q_0}L^{\top} + 
\sum_{j=1}^{2D}(-1)^{j}
\sum_{\mathcal{P} \in \intsetoflength{j}} {\eta_{\mathcal{P}}{\lambda_{\mathcal{P}}}N(\mathcal{P})L_{\leq l_{\mathcal{P}}}M_{\tau_{\mathcal{P}}}L^{\top}_{\leq r_{\mathcal{P}}}}
\end{equation}
Based on this, we choose:

\begin{align}
Q_j &= (-1)^{j}\sum_{\substack{\mathcal{P} \in \intsetoflength{j},\ l_{\mathcal{P}} \geq |U_{\tau_{\mathcal{P}}}|,\ r_{\mathcal{P}} \geq |V_{\tau_{\mathcal{P}}}|}}
{\lambda_{\mathcal{P}}}N(\mathcal{P})M_{\tau_{\mathcal{P}}}, \label{eq:Qj}\\
Q &= \sum_{j=0}^{2D}{Q_j}.\label{eq:Q}
\end{align}
We then have the following expression of the truncation error:
\begin{equation}\label{eq:truncationerror}
\begin{aligned}
 &[L,Q_0,L^{\top}]_{\can} - LQL^{\top} \\
=&\Sum_{j=1}^{2D}(-1)^{j}
\Sum_{\substack{{\mathcal{P}} \in \intsetoflength{j},\ l_{\mathcal{P}} \geq |U_{\tau_{\mathcal{P}}}|,\ r_{\mathcal{P}} \geq |V_{\tau_{\mathcal{P}}}|}}
{\eta_{\mathcal{P}}{\lambda_{\mathcal{P}}}N(\mathcal{P})\left(L_{\leq l_{\mathcal{P}}}M_{\tau_{\mathcal{P}}}L^{\top}_{\leq r_{\mathcal{P}}} - {L}M_{\tau_{\mathcal{P}}}L^{\top}\right)}.
\end{aligned}
\end{equation}
\begin{remark}
The condition $l_{\mathcal{P}} \geq |U_{\tau_{\mathcal{P}}}|$ is equivalent to $\totalsize(\gamma_j \circ \ldots \circ \gamma_1) \leq \truncation$, 
where $\gamma_j \circ \ldots \circ \gamma_1$ denotes the shape of the proper composition of ribbons $(R_j,\ldots,R_1)$ where $R_i$ has shape $\gamma_i$ for all $i\in\{1,\ldots,j\}$. Similarly, the condition $r_{\mathcal{P}} \geq |V_{\tau_{\mathcal{P}}}|$ is equivalent to $\totalsize({\gamma'}^{\top}_1 \circ \ldots \circ {\gamma'}^{\top}_j) \leq \truncation$.
\end{remark}
In \Cref{sec:error_analysis}, we show that the truncation error \eqref{eq:truncationerror} has norm at most $n^{-\Omega(\eps\truncation)}$; see \Cref{lem:LQLT_truncation}.

        \section{Well-Behaved Intersection Configurations and Spider Products}
\label{sec:wellbehaved-and-spider}

In this section, we analyze the terms in the matrix $Q$ from the recursive factorization. In \Cref{sec:well-behaved}, we describe which terms of $Q$ are the most important for our analysis, namely disjoint unions of simple spiders coming from well-behaved intersection configurations. In \Cref{sec:spider_product}, we analyze products of simple spiders and disjoint unions of simple spiders. In \Cref{sec:QSS_formula}, we derive a formula determining $\SS{Q}$. Finally, in \Cref{sec:degree46}, we give direct analyses for degree 4 and degree 6 SoS which are interesting in their own right and illustrate the challenges which we overcome.

\subsection{Well-Behaved Intersection Configurations}
\label{sec:well-behaved}
It turns out that the dominant terms of $Q$ in \eqref{eq:Q} are the terms coming from intersection configurations satisfying certain properties which we call well-behaved intersection configurations.

\begin{definition}[Well-behaved intersection configuration]\label{def:wellbehavedconfig}
Given $\gamma_j,\ldots,\gamma_1,\tau,{\gamma'}^{\top}_1,\ldots,{\gamma'}^{\top}_j$ and ${\mathcal{P}} \in \intset$, we say that $\mathcal{P}$ is well-behaved if the following conditions are satisfied:
\begin{enumerate}
\item\label{well-behaved-cond1:SSDshapes} $\gamma_j,\ldots,\gamma_1,\tau,{\gamma'}^{\top}_1,\ldots,{\gamma'}^{\top}_j$ are all simple spider disjoint unions (SSD).
\item\label{well-behaved-cond2:no-square-intersection} $\mathcal{P}$ has no non-trivial intersections between square vertices.
\item \label{well-behaved-cond3:middle-square-isolated} Whenever there is a square vertex $v$ which is not in $U_{\tau_{\mathcal{P}}} \cup V_{\tau_{\mathcal{P}}}$, there is an intersection between the two neighbors of $v$ which results in a double edge in $E(\tau_{\mathcal{P}_j})$. For each such double edge $e \in E(\tau_{\mathcal{P}_j})$, we have that $l_e = 0$ (i.e., the double edge vanishes).
\end{enumerate}
\end{definition}

\begin{remark}[Results of well-behaved configurations]\label{rmk:wellbehaved_and_SSD}
Using the definition it can be checked that every well-behaved configuration results in an SSD shape together with some additional isolated vertices. For a formal proof, see item \ref{item:well-behaved-result-1} in the proof of \Cref{thm:erroranalysis}.
\end{remark}

In \Cref{sec:error_analysis}, we show that the contribution to $Q$ from intersection configurations which are not well-behaved is small. In particular, we show that

\begin{equation}\label{eq:nonwell-behavedissmall}
\sum_{j=1}^{2D}(-1)^{j}\sum_{{\mathcal{P}} \in \intsetoflength{j}:\ {\mathcal{P}} \text{ is not well-behaved, $l_{\mathcal{P}} \geq |U_{\tau_{\mathcal{P}}}|$, $r_{\mathcal{P}} \geq |V_{\tau_{\mathcal{P}}}|$}}{\eta_{\mathcal{P}}{\lambda_{\mathcal{P}}}N(\mathcal{P})M_{\tau_{\mathcal{P}}}}
\end{equation}
is small (\Cref{lem:error_QSSD}).

For each well-behaved intersection configuration $\mathcal{P}$, we obtain a cleaner expression by deleting the isolated vertices from $\tau_{\mathcal{P}}$ and taking the leading order term of the resulting coefficient.

\begin{definition}[Reduced coefficient]\label{def:lambdatauPreduced}
Given $\gamma_j,\ldots,\gamma_1,\tau,{\gamma'}^{\top}_1,\ldots,{\gamma'}^{\top}_j$ and well-behaved intersection configuration ${\mathcal{P}} \in \intset$, we define the reduced shape $\tau_{{\mathcal{P}},\reduced}$ to be the shape obtained by deleting all isolated middle vertices $\Iso(\tau_{\mathcal{P}})$ from $\tau_{\mathcal{P}}$. We set $\lambda_{\mathcal P, \reduced} = n^{-\frac{|E(\tau_{{\mathcal{P}},\reduced})|}{2}} = n^{|\Iso(\tau_{\mathcal{P}})|}\lambda_{\mathcal{P}}$. 
\end{definition} 

We observe that $\eta_{\mathcal{P}}{\lambda_{\mathcal{P}}}N(\mathcal{P})M_{\tau_{\mathcal{P}}}=(1\pm \O(n^{-1}))\eta_{\mathcal{P}}{\lambda_{\mathcal P, \reduced}}N(\mathcal{P})\frac{M_{\tau_{{\mathcal{P}},\reduced}}}{|\Iso(\tau_{\mathcal{P}})|!}$. That is,  
\begin{proposition}\label{prop:reduction}
Given $\gamma_j,\ldots,\gamma_1,\tau,{\gamma'}^{\top}_1,\ldots,{\gamma'}^{\top}_j$ and a well-behaved ${\mathcal{P}} \in \intset$, 
\[
\eta_{\mathcal{P}}{\lambda_{\mathcal{P}}}N(\mathcal{P})M_{\tau_{\mathcal{P}}} = \eta_{\mathcal{P}}{\lambda_{\mathcal P, \reduced}}N(\mathcal{P})\left(\frac{\Prod_{i=0}^{|\Iso(\tau_{\mathcal{P}})|-1}{(n - |V_{\square}(\tau_{{\mathcal{P}},\reduced})| - i)}}{n^{|\Iso(\tau_{\mathcal{P}})|}}\right)\frac{M_{\tau_{{\mathcal{P}},\reduced}}}{|\Iso(\tau_{\mathcal{P}})|!}.
\]
\end{proposition}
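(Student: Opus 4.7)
This is an exact bookkeeping identity refining the $(1\pm O(n^{-1}))$ approximation noted just above. Since $\eta_{\mathcal{P}}$ and $N(\mathcal{P})$ appear identically on both sides, the plan is to cancel them and prove two separate equalities: a scalar identity relating $\lambda_{\mathcal{P}}$ and $\lambda_{\mathcal{P},\reduced}$, and a matrix identity relating $M_{\tau_{\mathcal{P}}}$ and $M_{\tau_{\mathcal{P},\reduced}}$.

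For the scalar part, I would verify the assertion $\lambda_{\mathcal{P},\reduced}=n^{|\Iso(\tau_{\mathcal{P}})|}\lambda_{\mathcal{P}}$ built into \Cref{def:lambdatauPreduced}. The key point is extracted from the well-behaved conditions: since $\mathcal{P}$ is well-behaved, condition~\ref{well-behaved-cond1:SSDshapes} of \Cref{def:wellbehavedconfig} forces every edge in every component shape to carry label~$1$; condition~\ref{well-behaved-cond2:no-square-intersection} forbids non-trivial square intersections, so no original edges can merge in an unanticipated way; and condition~\ref{well-behaved-cond3:middle-square-isolated} presents every isolated middle square vertex $v\in\Iso(\tau_{\mathcal{P}})$ as the endpoint of exactly one vanishing double edge. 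Writing $W=\sum_{i=1}^{j}w(E(\gamma_i))+w(E(\tau))+\sum_{i=1}^{j}w(E(\gamma'^{\top}_i))$ for the total weight of the component shapes, each element of $\Iso(\tau_{\mathcal{P}})$ therefore accounts for exactly two of these $W$ label-$1$ edges, giving $|E(\tau_{\mathcal{P},\reduced})|=W-2|\Iso(\tau_{\mathcal{P}})|$. Taking $n^{-(\cdot)/2}$ yields $\lambda_{\mathcal{P},\reduced}=n^{|\Iso(\tau_{\mathcal{P}})|}\lambda_{\mathcal{P}}$.

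For the matrix part, I would invoke the fact that an isolated square vertex contributes no Hermite factor in \eqref{eq:ribbonmatrix} and lies outside $U_R\cup V_R$. Hence every ribbon $R$ of shape $\tau_{\mathcal{P}}$ decomposes uniquely as a ribbon $R'$ of shape $\tau_{\mathcal{P},\reduced}$ together with an unordered choice of $|\Iso(\tau_{\mathcal{P}})|$ elements of $\cS_n\setminus V_{\square}(R')$ playing the roles of the isolated shape vertices, and the matrices agree: $M_R=M_{R'}$. Summing over all such $R'$ and placements of the isolated indices gives
\[
M_{\tau_{\mathcal{P}}} \;=\; \binom{n-|V_{\square}(\tau_{\mathcal{P},\reduced})|}{|\Iso(\tau_{\mathcal{P}})|}\, M_{\tau_{\mathcal{P},\reduced}}.
\]

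Combining the two identities, restoring the common factor $\eta_{\mathcal{P}}N(\mathcal{P})$, and rewriting the binomial coefficient as the falling factorial $\prod_{i=0}^{|\Iso(\tau_{\mathcal{P}})|-1}(n-|V_{\square}(\tau_{\mathcal{P},\reduced})|-i)$ divided by $|\Iso(\tau_{\mathcal{P}})|!$ reproduces the claimed formula. I do not expect a real obstacle in this argument; the only subtle step is extracting the precise characterization of $\Iso(\tau_{\mathcal{P}})$ from the three well-behaved conditions (that each isolated middle square accounts for exactly two original label-$1$ edges), and once that is in hand both equalities are immediate from the definitions of $\lambda_{\alpha}$ and $M_{\alpha}$.
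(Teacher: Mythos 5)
Your proof is correct and follows the paper's own argument: you cancel $\eta_{\mathcal{P}}N(\mathcal{P})$, verify the scaling identity $\lambda_{\mathcal{P},\reduced}=n^{|\Iso(\tau_{\mathcal{P}})|}\lambda_{\mathcal{P}}$ from Definition~\ref{def:lambdatauPreduced}, and establish the ribbon-counting identity $M_{\tau_{\mathcal{P}}}=\binom{n-|V_\square(\tau_{\mathcal{P},\reduced})|}{|\Iso(\tau_{\mathcal{P}})|}M_{\tau_{\mathcal{P},\reduced}}$, exactly as the paper does. You spell out why each isolated middle vertex accounts for two label-$1$ edges, which the paper leaves implicit; the only thing you leave tacit (as the paper merely notes in passing) is that $\Iso(\tau_{\mathcal{P}})$ contains no circle vertices, which is what justifies both the factor of exactly $2$ per isolated vertex and the choice of placements from $\cS_n$ rather than $\cC_m$.
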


\begin{proof}
Notice that $\Iso(\tau_{\mathcal{P}})$ consists of square vertices since $\mathcal{P}$ is well-behaved. 
The proposition then follows immediately from the facts that $\lambda_{\mathcal P, \reduced}  = n^{|\Iso(\tau_{\mathcal{P}})|}\lambda_{\mathcal{P}}$ and $M_{\tau_{\mathcal{P}}} = \binom{n - |V_{\square}(\tau_{{\mathcal{P}},\reduced})|}{|\Iso(\tau_{\mathcal{P}})|}M_{\tau_{{\mathcal{P}},\reduced}}$.
\end{proof}

Based on this, we define $[Q]_{\wellbehaved}$ to consist of all the terms in $Q$ coming from well-behaved intersection configurations where $\eta_{\mathcal{P}}{\lambda_{\mathcal{P}}}N(\mathcal{P})M_{\tau_{\mathcal{P}}}$ is replaced by $\eta_{\mathcal{P}}{\lambda_{\mathcal P, \reduced}}N(\mathcal{P})\frac{M_{\tau_{{\mathcal{P}},\reduced}}}{|\Iso(\tau_{\mathcal{P}})|!}$.

\begin{definition}[{$[Q]_{\wellbehaved}$}]\label{def:Qwell-behaved}
We define 
\begin{equation}\label{eq:Q_wellbehaved}
[Q]_{\wellbehaved} = \sum_{j=1}^{2D}{(-1)^{j}\sum_{{\mathcal{P}} \in \intsetoflength{j}:\textrm{ $\mathcal{P}$ is well-behaved, $l_{\mathcal{P}} \geq |U_{\tau_{\mathcal{P}}}|$, $r_{\mathcal{P}} \geq |V_{\tau_{\mathcal{P}}}|$}}{\eta_{\mathcal{P}}{\lambda_{\mathcal P, \reduced}}N(\mathcal{P})\frac{M_{\tau_{{\mathcal{P}},\reduced}}}{|\Iso(\tau_{\mathcal{P}})|!}}}
\end{equation}
\end{definition}
In \Cref{sec:error_analysis}, we confirm that 
\begin{equation}\label{eq:differencewithQwellbehavedissmall}
\norm{\sum_{j=1}^{2D}{(-1)^{j}
\sum_{{\mathcal{P}} \in \intsetoflength{j}:\text{ $\mathcal{P}$ is well-behaved, $l_{\mathcal{P}} \geq |U_{\tau_{\mathcal{P}}}|$, $r_{\mathcal{P}} \geq |V_{\tau_{\mathcal{P}}}|$}}{\eta_{\mathcal{P}}N(\mathcal{P})\left({\lambda_{\mathcal{P}}}M_{\tau_{\mathcal{P}}} - {\lambda_{\mathcal P, \reduced}}\frac{M_{\tau_{{\mathcal{P}},\reduced}}}{|\Iso(\tau_{\mathcal{P}})|!}\right)}}}
\end{equation}
is small (see Lemma \ref{lem:error_QSSD}). Combining the two statements \eqref{eq:nonwell-behavedissmall}, \eqref{eq:differencewithQwellbehavedissmall}, we have that $\norm{Q - [Q]_{\wellbehaved}}$ is small.

Finally, we can use the following remark to get rid of the condition on $(l_{\mathcal{P}}, r_{\mathcal{P}})$ in $[Q]_{\wellbehaved}$.

\begin{remark}[{No truncation error from well-behaved configurations in $Q$}]\label{rmk:wellbehaved_no_truncation_err}
For any well-behaved $\mathcal{P}$, each shape in $\mathcal{P}$ is an SSD and thus has total size at most $6\dsos$. Since at most $2\dsos$ of $\gamma_j,\ldots,\gamma_1,{\gamma'}^{\top}_1,\ldots,{\gamma'}^{\top}_j$ can be non-trivial (as each non-trivial $\gamma_i$ increases the size of $U_{\tau_{\mathcal{P}}}$ by at least $1$ and each non-trivial ${\gamma'}^{\top}_i$ increases the size of $V_{\tau_{\mathcal{P}}}$ by at least $1$), $\totalsize(\mathcal P)\leq (2\dsos+1)6\dsos$. Therefore, as long as $\truncation\geq 20\dsos^2$, the conditions $l_{\mathcal{P}} \geq |U_{\tau_{\mathcal{P}}}|,\ r_{\mathcal{P}} \geq |V_{\tau_{\mathcal{P}}}|$ in the summation \eqref{eq:Q_wellbehaved} can be dropped since they automatically hold for all well-behaved $P$ there. 
\end{remark}

\begin{corollary}[{Simplified expression of $[Q]_{\wellbehaved}$}] \label{cor:Q_wellbehaved}
Let
\begin{equation}\label{eq:Qj_wellbehaved}
[Q_j]_{\wellbehaved}:=(-1)^{j}\sum_{{\mathcal{P}} \in \intsetoflength{j}:\textrm{ $\mathcal{P}$ is well-behaved}}{\eta_{\mathcal{P}}{\lambda_{\mathcal P, \reduced}}N(\mathcal{P})\frac{M_{\tau_{{\mathcal{P}},\reduced}}}{|\Iso(\tau_{\mathcal{P}})|!}}\ \ \text{ for }j=0,\ldots,2\dsos,
\end{equation}
then we have $[Q]_{\wellbehaved} = \Sum_{j=1}^{2D}{[Q_j]_{\wellbehaved}}$.
\end{corollary}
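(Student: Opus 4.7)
The plan is to show that the constraints $l_{\mathcal{P}} \geq |U_{\tau_{\mathcal{P}}}|$ and $r_{\mathcal{P}} \geq |V_{\tau_{\mathcal{P}}}|$ appearing in the definition of $[Q]_{\wellbehaved}$ are automatically satisfied for every well-behaved $\mathcal{P}$ (given the hypothesis $\truncation \geq 20\dsos^2$ from \Cref{thm:main-formal}). Once this is established, the sum over well-behaved $\mathcal{P}$ with these side conditions agrees term-by-term with the unrestricted sum, and after grouping by length $j$ we recover the stated equality $[Q]_{\wellbehaved} = \sum_{j=1}^{2\dsos}[Q_j]_{\wellbehaved}$. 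The core of the argument is therefore a uniform bound on the total size of an intersection configuration whose constituent shapes are all simple spider disjoint unions, which is exactly the content of \Cref{rmk:wellbehaved_no_truncation_err}.

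The main steps are as follows. First, I will use \Cref{well-behaved-cond1:SSDshapes} of \Cref{def:wellbehavedconfig} to note that each of $\gamma_j,\ldots,\gamma_1,\tau,{\gamma'}^{\top}_1,\ldots,{\gamma'}^{\top}_j$ is an SSD. For any SSD $\alpha$, the underlying simple-spider components each contribute at most one circle vertex and edges whose total weight is at most $|U_\alpha|+|V_\alpha|\leq 2\dsos$, so $\totalsize(\alpha)\leq 6\dsos$ (square vertices plus one circle vertex per component plus the single-label edges). Second, I will bound $j$ itself: because every non-trivial $\gamma_i$ strictly increases the number of square vertices that have appeared on the left side during the recursive construction of $\tau_{\mathcal{P}}$, and symmetrically for $\gamma'^{\top}_i$, and because $|U_{\tau_{\mathcal{P}}}|,|V_{\tau_{\mathcal{P}}}|\leq \dsos$, there can be at most $2\dsos$ non-trivial pairs, so at most $2\dsos$ of the shapes are non-trivial. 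Combining these two observations gives
\[
\sum_{i=1}^{j}\bigl(\totalsize(\gamma_i)-|U_{\gamma_i}|\bigr)\ \leq\ 2\dsos\cdot 6\dsos\ =\ 12\dsos^{2},
\]
and the same bound for the $\gamma'^{\top}_i$ side.

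Third, using the hypothesis $\truncation\geq 20\dsos^2$ (in particular $\truncation\geq 50\dsos^2$ from \eqref{eq:main_condition}), the definitions of $l_{\mathcal{P}},r_{\mathcal{P}}$ in \Cref{def:lPrP} give
\[
l_{\mathcal{P}}\ =\ \truncation-\sum_{i=1}^{j}\bigl(\totalsize(\gamma_i)-|U_{\gamma_i}|\bigr)\ \geq\ \truncation - 12\dsos^{2}\ \geq\ \dsos\ \geq\ |U_{\tau_{\mathcal{P}}}|,
\]
and analogously $r_{\mathcal{P}}\geq |V_{\tau_{\mathcal{P}}}|$. Hence the conditions $l_{\mathcal{P}}\geq |U_{\tau_{\mathcal{P}}}|$ and $r_{\mathcal{P}}\geq |V_{\tau_{\mathcal{P}}}|$ may be dropped without removing any well-behaved $\mathcal{P}$. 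Fourth, I will split the resulting unrestricted sum by length $j$; this gives exactly $\sum_{j=1}^{2\dsos}[Q_j]_{\wellbehaved}$ as defined in \eqref{eq:Qj_wellbehaved}, establishing the equality.

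No step of this plan is a genuine obstacle; the only thing to be careful about is the size bookkeeping in step one, namely verifying that the contributions from circle vertices, isolated square vertices (if any), and edge weights in an SSD all stay within the $6\dsos$ envelope. Once that is in place, the rest is arithmetic using the quantitative assumption on $\truncation$ from \eqref{eq:main_condition}.
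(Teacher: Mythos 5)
Your proposal is correct and follows essentially the same route as the paper, which handles this via \Cref{rmk:wellbehaved_no_truncation_err}: bound each SSD shape's total size by $6\dsos$, bound the number of non-trivial side shapes by $2\dsos$ (at most $\dsos$ non-trivial $\gamma_i$'s and at most $\dsos$ non-trivial $\gamma'^\top_i$'s, since each forces a strict increase in $|U_{\tau_{\mathcal{P}}}|$ resp.\ $|V_{\tau_{\mathcal{P}}}|$), and conclude that $l_{\mathcal{P}},r_{\mathcal{P}}\geq\truncation-O(\dsos^2)\geq\dsos$ under the standing assumption on $\truncation$, so the side constraints in \eqref{eq:Q_wellbehaved} are vacuous for well-behaved configurations. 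One minor note: the phrase ``at most $2\dsos$ non-trivial pairs'' is imprecise (by definition every pair $(\gamma_i,\gamma'^\top_i)$ has at least one non-trivial member, so there are no trivial pairs — the correct statement is that there are at most $2\dsos$ non-trivial \emph{shapes} among the $2j$ side shapes, and hence $j\leq 2\dsos$), but your bookkeeping and final bound of $12\dsos^2$ still go through and suffice.
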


\subsection{Spider Products}\label{sec:spider_product}
In order to analyze $[Q]_{\wellbehaved}$, it is crucial to understand products of spiders and disjoint unions of simple spiders.

We start by considering the products of simple spiders, where we only take the products in which the circle vertices intersect and there are no non-trivial intersections for square vertices.

\begin{lemma}
\label{lem:Smultiplication}
For $u\leq\min\{k_1,k_2,k_3\}$, if we take the terms from the scaled graph matrix product $S(k_1-u,k_2-u;u)\cdot S(k_2-v,k_3-v;v)$ where the circle vertices intersect and there are no non-trivial intersections between the square vertices, the result is approximately
\[
    \sum_{i=\max\{0,u+v-k_2\}}^{\min\{u,v\}}\binom{k_1-i}{k_1-u}\binom{k_3-i}{k_3-v}/(k_2+i-u-v)!\cdot S(k_1-i,k_3-i;i)
\]
\end{lemma}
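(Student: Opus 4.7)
The strategy is to enumerate explicitly the combinatorial types of compositions satisfying the two restrictions (circle vertices identified, no extra square intersections), identify which ones reduce to simple spider graph matrices, and collect the resulting scalar factors.

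First I set up the composition. After identifying $V_{R_1}$ with $U_{R_2}$ (a middle index of size $k_2$), each middle-index vertex is classified by whether it lies in $U_{R_1}$ and whether it lies in $V_{R_2}$. Write $i := |U_{R_1} \cap V_{R_2}|$; since $|U_{R_1} \cap V_{R_1}| = u$ and $|U_{R_2} \cap V_{R_2}| = v$, the other three class sizes are forced to be $s = u - i$ (in $U_{R_1}$ only), $t = v - i$ (in $V_{R_2}$ only), and $r = k_2 - u - v + i$ (in neither). The non-negativity of $s,t,r$ yields the range $\max(0,u+v-k_2) \le i \le \min(u,v)$. In the merged graph, class-$i$ vertices carry no edge to the shared circle (hence become the intersections of the output), classes $s$ and $t$ each carry a single label-$1$ edge (legs of the output), and each of the $r$ class-$r$ vertices carries a double edge to the circle (one label-$1$ edge from each of $R_1$ and $R_2$).

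Next I pick out the simple-spider contribution. Each double edge linearizes via $h_1(x)\cdot h_1(x) = h_2(x) + h_0(x)$ (\Cref{def:ribbonexpansion}). The $h_2$ branch leaves a non-index middle square with a label-$2$ edge, which does not appear in any simple spider; only the $h_0$ branch survives if we want the output to be an SSD. Choosing $h_0$ on every one of the $r$ double edges kills them, so the $r$ middle squares become isolated, and after deleting these isolated vertices we obtain exactly $S(k_1-i,k_3-i;i)$.

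The combinatorial bookkeeping then proceeds as follows. For a fixed output simple spider $S(k_1-i,k_3-i;i)$, the $i$ intersections are forced, but there are $\binom{k_1-i}{u-i} = \binom{k_1-i}{k_1-u}$ ways to choose which $s = u - i$ of the $k_1-i$ pure-left legs of the output came from the middle index (versus being pure-left legs of $R_1$), and symmetrically $\binom{k_3-i}{k_3-v}$ choices on the right. The $r$ isolated middle squares can be placed on any of the remaining square coordinates; applying \Cref{prop:reduction} (equivalently, using $M_\alpha = \binom{n - |V_\square(\alpha_{\reduced})|}{r} M_{\alpha_{\reduced}}$) contributes a factor of $(1 + O(1/n)) \cdot n^r/r!$. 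Finally, the product of the two input scaling coefficients is $n^{-(k_1 + 2k_2 + k_3 - 2u - 2v)/2}$, and multiplying by $n^r$ collapses to $n^{-(k_1 + k_3 - 2i)/2}$, which is exactly the scaling coefficient of $S(k_1-i,k_3-i;i)$. Combining these factors yields the summand $\binom{k_1-i}{k_1-u}\binom{k_3-i}{k_3-v}/(k_2+i-u-v)!$ times the scaled $S(k_1-i,k_3-i;i)$, and summing over admissible $i$ gives the lemma. The only analytic approximation (hence the ``$\approx$'') is the standard estimate $\binom{n-\ast}{r} \approx n^r/r!$, valid for $r$ polynomial in $\dsos$; the main point of care is to verify that the parity labelings on the identified middle vertices are automatically consistent across the two sides, which holds because in a simple spider every square vertex has even total degree.
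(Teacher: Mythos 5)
Your proof is correct and follows essentially the same route as the paper's: parameterize the middle-index overlap by $i$, classify the middle squares into the four types, count the $\binom{k_1-i}{k_1-u}\binom{k_3-i}{k_3-v}$ choices of which legs originated from the middle index, and extract the $1/r!$ factor from removing the $r$ isolated squares. You are actually slightly more careful than the paper in two spots—explicitly isolating the $h_0$ branch of the double-edge linearization and verifying that the scaling coefficients collapse to $n^{-(k_1+k_3-2i)/2}$—while your one-line remark on parity consistency is no more handwavy than the paper's own parenthetical (``It is not hard to check that this does not affect the argument'').
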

\begin{proof}
The terms of the product $S(k_1-u,k_2-u;u)\cdot S(k_2-v,k_3-v;v)$ where the circle vertices intersect and there are no non-trivial intersections between the square vertices are given by pairs of ribbons $(R_1,R_2)$ of shapes $S(k_1-u,k_2-u;u)$ and $S(k_2-v,k_3-v;v)$ such that $R_1$ and $R_2$ have the same circle vertex $w$ and the only square vertices which $R_1$ and $R_2$ have in common are $B_{R_1} = A_{R_2}$. For each such pair of ribbons $(R_1,R_2)$, letting $i = |A_{R_1} \cap B_{R_1} \cap B_{R_2}|$,
\begin{enumerate}
\item There are $u-i$ square vertices in $(A_{R_1} \cap B_{R_1}) \setminus B_{R_2}$ and $k_1 - u$ vertices in $A_{R_1} \setminus B_{R_1}$. These vertices are adjacent to $w$ in $R_2$ but not $R_1$.
\item There are $v-i$ vertices in $(A_{R_2} \cap B_{R_2}) \setminus A_{R_1}$ and $k_3 - v$ vertices in $B_{R_2} \setminus A_{R_2}$. These vertices are adjacent to $w$ in $R_1$ but not $R_2$.
\item There are $k_2 - (u-i) - (v-i) = k_2 + i - u - v$ vertices in $B_{R_1} \setminus (A_{R_1} \cup B_{R_2})$. These vertices are adjacent to $w$ in both $R_1$ and $R_2$.
\end{enumerate}
Observe that $M_{R_1}M_{R_2} = M_R$ where $R$ is the ribbon such that
\begin{enumerate}
\item $R$ has the circle vertex $w$.
\item $A_R \cap B_R = A_{R_1} \cap B_{R_1} \cap B_{R_2}$. Note that $|A_{R_1} \cap B_{R_1} \cap B_{R_2}| = i$ and none of these vertices are adjacent to $w$.
\item $A_R \setminus B_R = (A_{R_1} \setminus B_{R_1}) \cup (A_{R_1} \cap B_{R_1} \setminus B_{R_2})$. Note that $|A_R \setminus B_R| = k_1 - i$ and all vertices in $A_R \setminus B_R$ are adjacent to $w$.
\item $B_R \setminus A_R = (B_{R_2} \setminus A_{R_2}) \cup (A_{R_2} \cap B_{R_2} \setminus A_{R_1})$. Note that $|B_R \setminus A_R| = k_3 - i$ and all vertices in $A_R \setminus B_R$ are adjacent to $w_3$.
\item $V(R) \setminus (A_R \cup B_R) = B_{R_1} \setminus (A_{R_1} \cup B_{R_2})$. Note that $|V(R) \setminus (A_R \cup B_R)| = k_2 + i - u - v$ and each of the square vertices in $V(R) \setminus (A_R \cup B_R)$ has a double edge to $w$.
\end{enumerate}
For each such ribbon $R$, we observe that there are $\binom{k_1-i}{k_1-u}\binom{k_3-i}{k_3-v}$ pairs of ribbons $(R_1,R_2)$ which result in the ribbon $R$ as there are $\binom{k_1-i}{k_1-u}$ ways to choose which of the $k_1 - i$ indices in $A_R \setminus B_R$ are in $A_{R_1} \setminus B_{R_1}$ and there are $\binom{k_3-i}{k_3-u}$ ways to choose which of the $k_3 - i$ indices in $B_R \setminus A_R$ are in $B_{R_2} \setminus A_{R_2}$.

Finally, we note that following similar logic as the proof of Proposition \ref{prop:reduction}, deleting the $k_2 + i - u - v$ vertices in $V(R) \setminus (A_R \cup B_R)$ together with the double edges between these vertices and $w$ and then shifting to the corresponding shape $S(k_1-i,k_3-i;i)$ gives a factor of roughly $\frac{1}{|V(R) \setminus (A_R \cup B_R)|!} = \frac{1}{(k_2 + i - u - v)!}$.
\end{proof}

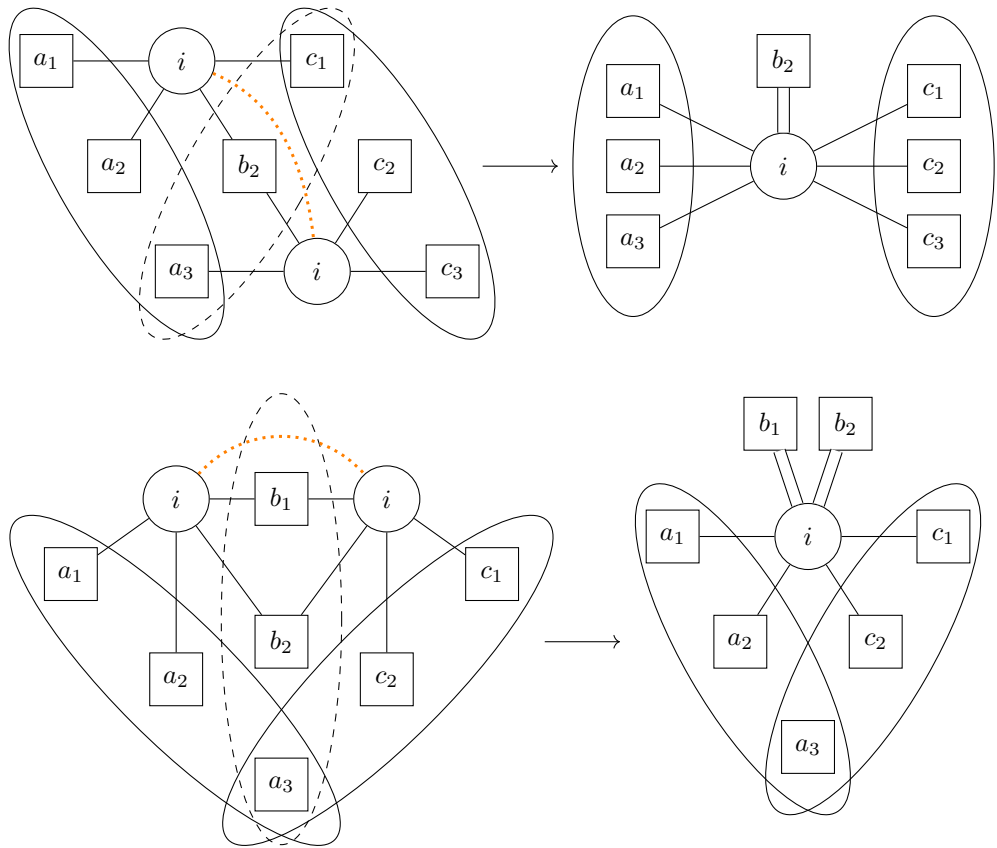
\begin{figure}
    \centering
    \begin{tikzpicture}[
      mycircle/.style={
         circle,
         draw=black,
         fill=white,
         fill opacity = 1,
         text opacity=1,
         inner sep=0pt,
         minimum size=25pt,
         font=\small},
      mysquare/.style={
         rectangle,
         draw=black,
         fill=white,
         fill opacity = 1,
         text opacity=1,
         inner sep=0pt,
         minimum height=20pt, 
         minimum width=20pt,
         font=\small},
      myarrow/.style={-Stealth},
      node distance=0.6cm and 1.2cm
      ]
      \draw[rotate = 30] (-1,0) ellipse (.8cm and 2.5cm);
      \draw[dashed, rotate = -30] (1,0) ellipse (.8cm and 2.5cm);
      \draw[rotate = 30] (2.1177,-1.8) ellipse (.8cm and 2.5cm);
      \node[mysquare]  at (-1.8, 1) (a1) {$a_1$};
      \node[mysquare]  at (-.9, -.4) (a2) {$a_2$};
      \node[mysquare]  at (0, -1.8) (a3) {$a_3$};
      \node[mycircle]  at (0, 1) (i1) {$i$};
      \node[mysquare]  at (1.8, 1) (c1) {$c_1$};
      \node[mysquare]  at (.9, -.4) (b2) {$b_2$};
      \node[mycircle]  at (1.8, -1.8) (i2) {$i$};
      \node[mysquare]  at (2.7, -.4) (c2) {$c_2$};
      \node[mysquare]  at (3.6, -1.8) (c3) {$c_3$};
      \draw[-] (a1) -- (i1);
      \draw[-] (a2) -- (i1);
      \draw[-] (i1) -- (c1);
      \draw[-] (i1) -- (b2);
      \draw[-] (b2) -- (i2);
      \draw[-] (a3) -- (i2);
      \draw[-] (i2) -- (c2);
      \draw[-] (i2) -- (c3);
      \draw[color=orange, dotted, line width=0.4mm] (i1) .. controls (1,.6) and (1.6,0) .. (i2);
      
      \draw (6,-0.4) ellipse (.8cm and 2cm);
      \draw (10,-0.4) ellipse (.8cm and 2cm);
      \node[mysquare]  at (6, .6) (a4) {$a_1$};
      \node[mysquare]  at (6, -0.4) (a5) {$a_2$};
      \node[mysquare]  at (6, -1.4) (a6) {$a_3$};
      \node[mycircle]  at (8, -0.4) (i3) {$i$};
      \node[mysquare]  at (8, 1) (b4) {$b_2$};
      \node[mysquare]  at (10, .6) (c4) {$c_1$};
      \node[mysquare]  at (10, -0.4) (c5) {$c_2$};
      \node[mysquare]  at (10, -1.4) (c6) {$c_3$};
      \draw[-] (a4) -- (i3);
      \draw[-] (a5) -- (i3);
      \draw[-] (a6) -- (i3);
      \draw[double, double distance = 4.0] (i3) -- (b4);
      \draw[-] (i3) -- (c4);
      \draw[-] (i3) -- (c5);
      \draw[-] (i3) -- (c6);
      
      \draw[->] (4,-0.4) -- (5,-0.4);
      \end{tikzpicture}

\vspace{14pt}

    \begin{tikzpicture}[
      mycircle/.style={
         circle,
         draw=black,
         fill=white,
         fill opacity = 1,
         text opacity=1,
         inner sep=0pt,
         minimum size=25pt,
         font=\small},
      mysquare/.style={
         rectangle,
         draw=black,
         fill=white,
         fill opacity = 1,
         text opacity=1,
         inner sep=0pt,
         minimum height=20pt, 
         minimum width=20pt,
         font=\small},
      myarrow/.style={-Stealth},
      node distance=0.6cm and 1.2cm
      ]
      \draw[rotate = 45] (-2,0) ellipse (.8cm and 3cm);
      \draw[dashed] (0,-0.6) ellipse (.8cm and 3cm);
      \draw[rotate = -45] (2,0) ellipse (.8cm and 3cm);
      \node[mysquare]  at (-2.8, 0) (a1) {$a_1$};
      \node[mysquare]  at (-1.4, -1.4) (a2) {$a_2$};
      \node[mysquare]  at (0, -2.8) (a3) {$a_3$};
      \node[mycircle]  at (-1.4, 1) (i1) {$i$};
      \node[mysquare]  at (0, 1) (b1) {$b_1$};
      \node[mysquare]  at (0, -0.9) (b2) {$b_2$};
      \node[mycircle]  at (1.4, 1) (i2) {$i$};
      \node[mysquare]  at (2.8, 0) (c1) {$c_1$};
      \node[mysquare]  at (1.4, -1.4) (c2) {$c_2$};
      \draw[-] (a1) -- (i1);
      \draw[-] (a2) -- (i1);
      \draw[-] (i1) -- (b1);
      \draw[-] (i1) -- (b2);
      \draw[-] (b1) -- (i2);
      \draw[-] (b2) -- (i2);
      \draw[-] (i2) -- (c1);
      \draw[-] (i2) -- (c2);
      \draw[color=orange, dotted, line width=0.4mm] (i1) .. controls (-0.5,2) and (0.5,2) .. (i2);
      
      \draw[rotate = 30] (4.812,-3.933) ellipse (.8cm and 2.5cm);
      \draw[rotate = -30] (7.312,3.067) ellipse (.8cm and 2.5cm);
      \node[mysquare]  at (5.2, .5) (a4) {$a_1$};
      \node[mysquare]  at (6.1, -.9) (a5) {$a_2$};
      \node[mysquare]  at (7, -2.3) (a6) {$a_3$};
      \node[mycircle]  at (7, .5) (i3) {$i$};
      \node[mysquare]  at (6.5, 2) (b4) {$b_1$};
      \node[mysquare]  at (7.5, 2) (b5) {$b_2$};      
      \node[mysquare]  at (8.8, .5) (c4) {$c_1$};
      \node[mysquare]  at (7.9, -.9) (c5) {$c_2$};
      \draw[-] (a4) -- (i3);
      \draw[-] (a5) -- (i3);
      \draw[-] (i3) -- (c4);
      \draw[-] (i3) -- (c5);
      \draw[double, double distance = 4.0] (i3) -- (b4);
      \draw[double, double distance = 4.0] (i3) -- (b5);
      
      \draw[->] (3.5,-0.9) -- (4.5,-0.9);
      \end{tikzpicture}
    \caption{Dominant terms of the product $S(2,2;1)\cdot S(2,2;1)$.}
    \label{fig:SS_product}
\end{figure}
Based on this, we define the following algebra.

\begin{definition}[Simple Spider Algebra of degree $D$, $\SALD$]\label{def:SALD} 
For fixed $D$, the $\R$-algebra $\SALD$ has its underlying $\R$-vector space spanned by a symbol set $\{S(i,j;u)\mid \max\{i+u, j+u\}\leq D\}$, and its product, denoted by $\SALDprod$, is defined as 
\begin{equation}\label{eq:SALDprod}
\begin{aligned}
&S(k_1-u,k_2-u;u)\SALDprod S(k_2-v,k_3-v;v)\\
:=&\sum_{i=\max\{0,u+v-k_2\}}^{\min\{u,v\}}\binom{k_1-i}{k_1-u}\binom{k_3-i}{k_3-v}/(k_2+i-u-v)!\cdot S(k_1-i,k_3-i;i),
\end{aligned}
\end{equation}
extended linearly to all elements. 
\end{definition}
\begin{remark}[Parity labels for simple spiders]
For the simple spider $\alpha$ with $i$ legs to the left, $j$ legs to the right, and $u$ vertices in $U_{\alpha} \cap V_{\alpha}$, there are $2^{u}$ possible parity labels for the vertices in $U_{\alpha} \cap V_{\alpha}$. Technically, this gives $2^{u}$ separate terms (where each term also incorporates the scaling coefficient $\lambda_{\alpha}$) but we group these terms together and use $S(i,j,u)$ to denote the sum of these terms. It is not hard to check that this does not affect the argument in \Cref{lem:Smultiplication}.
\end{remark}
We now define a product $\wbp$ (``well-behaved products'') on disjoint unions of simple spiders. This product generalizes the $\SALDprod$-product, and is analogous to $[Q]_{\wellbehaved}$ in the sense that it only takes well-behaved product configurations (\Cref{def:wb-product-config}) and then cleans up the coefficients.

In the following, we will abbreviate simple spider disjoint unions as SSD at places for spatial reasons.

\begin{definition}[Product configuration]\label{def:SSD-product-config}
Given composable shapes $\alpha_1$ and $\alpha_2$ which are disjoint unions of simple spiders, a product configuration $\mathcal{P}$ on $\alpha_1,\alpha_2$ consists of the following data. 
\begin{enumerate}
\item $\mathcal{P}$ specifies the matching between the indices in $V_{\alpha_1}$ and $U_{\alpha_2}$.
\item $\mathcal{P}$ specifies the non-trivial intersections between $\alpha_1$ and $\alpha_2$.
\item Letting $\alpha'$ be the shape which results from composing $\alpha_1$ and $\alpha_2$ according to this matching and then merging the vertices which are intersected, for each multi-edge $e \in E(\tau_{{\mathcal{P}_j}})$, $\mathcal{P}$ specifies a label $l_e \in \mathbb{N} \cup \{0\}$ describing a term which results from linearizing this multi-edge.
\end{enumerate}
We define the shape $\alpha_{\mathcal{P}}$ to be the result when we replace each multi-edge $e \in E(\alpha')$ with a single edge with label $l_e$.

We say that two product configurations $\mathcal{P}$ and $\mathcal{P}'$ on $\alpha_1$ and $\alpha_2$ are equivalent if there are permutations of $V(\alpha_1)$ and $V(\alpha_2)$ which map $\mathcal{P}$ to $\mathcal{P}'$.
\end{definition}

\begin{definition}[Product configuration coefficients]\label{def:SSD-product-config-coeff}
Given composable $\alpha_1,\alpha_2$ which are disjoint unions of simple spiders, we define $\mathcal{P}_{\alpha_1,\alpha_2}$ to be the set of non-equivalent possible product configurations on $\alpha_1$ and $\alpha_2$. 

Given ${\mathcal{P}} \in \mathcal{P}_{\alpha_1,\alpha_2}$, we define coefficients for $\alpha_{\mathcal{P}}$ as follows:
\begin{enumerate}
\item The scaling coefficient $\lambda_{\mathcal{P}}$ for $\alpha_{\mathcal{P}}$ is $\lambda_{\mathcal{P}} = \lambda_{\alpha_1}\lambda_{\alpha_2}$.
\item We define the Hermite coefficient $\eta_{\mathcal{P}}$ for $\alpha_{\mathcal{P}}$ to be 
\[
\eta_{\mathcal{P}} = \prod_{\text{multi-edges } e \in E(\alpha_{\mathcal{P}})}{(\text{coefficient of } l_e \text{ when linearizing } e)}.
\]
\item There can be many different ways to obtain a ribbon $R$ of shape $\alpha_{\mathcal{P}}$ from a product configuration $\mathcal{P}$. 
For this reason, we define $N(\mathcal{P})$ to be the number of ways to specify the following data so that the induced product configuration is equivalent to $\mathcal{P}$ and the resulting ribbon is $R$.
\begin{enumerate}[label=(\arabic*)]
\item We have a tuple of $(R_1,R_2)$ of ribbons with shapes $\alpha_1,\alpha_2$.
\item Letting $R'$ be the ribbon of shape $\alpha'$ such that $M_{R_1}M_{R_2} = M_{R'}$, we have a map from the multi-edges $e \in E(R')$ to $\mathbb{N} \cup \{0\}$ specifying which term to take when we linearize $e$.
\end{enumerate}
\end{enumerate}
\end{definition}

Now we define well-behaved product configurations. Note that they refer to products of simple spider disjoint unions only. 

\begin{definition}[Well-behaved product configurations]\label{def:wb-product-config}
Given composable shapes $\alpha_1,\alpha_2$ which are disjoint unions of simple spiders and a product configuration $\mathcal{P} \in \mathcal{P}_{\alpha_1,\alpha_2}$, we say that $\mathcal{P}$ is well-behaved if the following conditions are satisfied:
\begin{enumerate}
\item $\mathcal{P}$ has no non-trivial intersections between square vertices.
\item Whenever there is a square vertex $v$ outside of $U_{\alpha_{\mathcal{P}}} \cup V_{\alpha_{\mathcal{P}}}$, there is an intersection between the two neighbors of $v$ which results in a double edge in $E(\alpha')$. For each such double edge $e \in E(\alpha)$, we have that $l_e = 0$ (i.e., the double edge vanishes).
\end{enumerate}
\end{definition}

Different vertices in a shape never get identified in a product, so the conditions in \Cref{def:wb-product-config} imply that when two circles $u_1\in\alpha_1, u_2\in\alpha_2$ nontrivially intersect, all edges from $u_1$ to $V_{\alpha_1}\backslash (U_{\alpha_1} \cup V_{\alpha_2})= U_{\alpha_2}\backslash (U_{\alpha_1} \cup V_{\alpha_2})$ match all edges from $u_2$ to the same set. We define the reduced shape $\alpha_{P,\reduced}$ to be $\alpha_{\mathcal{P}}$ after removing these edges and the isolated vertices, with a coefficient $\lambda_{\mathcal{P},\reduced}$ that is the leading order term, as follows.

\begin{definition}[Reduced shapes and coefficients]
\label{def:reduced}
For each well-behaved product configuration $\mathcal{P} \in \mathcal{P}_{\alpha_1,\alpha_2}$, we define the reduced shape $\alpha_{P,\reduced}$ to be the shape obtained by deleting all isolated vertices from $\alpha_{\mathcal{P}}$. We set $\lambda_{\alpha_{P,\reduced}} = n^{-\frac{|E(\alpha_{P,\reduced})|}{2}} = n^{|\Iso(\alpha_{\mathcal{P}})|}\lambda_{\alpha_{\mathcal{P}}}$.
\end{definition}
As before, we observe that the dominant term of ${\lambda_{\alpha_{\mathcal{P}}}}N(\mathcal{P})M_{\alpha_{\mathcal{P}}}$ is ${\lambda_{\alpha_{P,\reduced}}}N(\mathcal{P})\frac{M_{\alpha_{P,\reduced}}}{|\Iso(\alpha_{\mathcal{P}})|!}$. Based on this, we make the following definition.

\begin{definition}[Well-behaved products]\label{def:wbp}
Given composable $\alpha_1,\alpha_2$ which are disjoint unions of simple spiders, we define
\[
(\lambda_{\alpha_1}M_{\alpha_1}) \wbp (\lambda_{\alpha_2}M_{\alpha_2}) 
:= 
\sum_{\mathcal{P} \in \mathcal{P}_{\alpha_1,\alpha_2}:\ \mathcal{P} \text{ is well-behaved}}{{\lambda_{\alpha_{P,\reduced}}}N(\mathcal{P})\frac{M_{\alpha_{P,\reduced}}}{|\Iso(\alpha_{\mathcal{P}})|!}}
\]
We extend this product multi-linearly to all linear combinations of disjoint unions of simple spiders.
\end{definition}

\begin{remark}\label{remark:shorthand}
As noted earlier, instead of always writing $\lambda_{\alpha}M_{\alpha}$ for well-behaved products, we will sometimes just write $\alpha$ for convenience.
\end{remark}

In \Cref{sec:error_analysis}, we show that if all non-trivial shaped simple spiders in $\alpha_1$ and $\alpha_2$ are good, i.e., they have at least $\frac{k}{2}$ legs to the left and right (see \Cref{def:good}), 
then 
\[\norm{\lambda_{\alpha_1}\lambda_{\alpha_2}M_{\alpha_1}M_{\alpha_2} - (\lambda_{\alpha_1}M_{\alpha_1}) \wbp (\lambda_{\alpha_2}M_{\alpha_2})}\] 
is small.

\begin{remark}[Range of approximation]\label{rmk:approx}
Note that in general, $\wbp$ does not always give the dominant terms. For example, $S(3,1,0) \wbp S(1,3,0) = S(3,3,0)$ but there are other shapes in $S(3,1,0)S(1,3,0)$ with norm $\widetilde{O}(\frac{m}{n})\gg 1$, where we assume $k=4$, $\eps$ is small, and $m=n^{(1-\eps)k/2}$. 

Also, it is possible that the well-behaved product of two simple spiders $\alpha\wbp\beta$ contains more shapes than $\alpha\star\beta$. For example, $S(4,0;0)\wbp S(0,4;4)$ contains their canonical product which is an SSD but not a simple spider.
\end{remark}

As a sanity check, we observe that when we restrict our attention to simple spiders, $\wbp$ reduces to the product $\SALDprod$ for the simple spider algebra.

\begin{lemma}
	\begin{align*}
		&\SS{[S(k_1-u,k_2-u;u) \wbp S(k_2-v,k_3-v;v)]} = S(k_1-u,k_2-u;u) \SALDprod S(k_2-v,k_3-v;v)\\
		=&\sum_{i=\max\{0,u+v-k_2\}}^{\min\{u,v\}}\binom{k_1-i}{k_1-u}\binom{k_3-i}{k_3-v}/(k_2+i-u-v)!\cdot S(k_1-i,k_3-i;i)
	\end{align*}
\end{lemma}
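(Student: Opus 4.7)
The plan is to split the lemma into its two equalities: the second one is immediate from the definition of $\SALDprod$ in \Cref{def:SALD} combined with the explicit computation in \Cref{lem:Smultiplication}, so the real work is the first equality, i.e., that the simple-spider part of $S(k_1-u,k_2-u;u) \wbp S(k_2-v,k_3-v;v)$ exactly reproduces the $\SALDprod$-formula. I would prove this by unfolding \Cref{def:wbp} and enumerating the well-behaved product configurations $\mathcal{P} \in \mathcal{P}_{\alpha_1,\alpha_2}$, where $\alpha_1 = S(k_1-u,k_2-u;u)$ and $\alpha_2 = S(k_2-v,k_3-v;v)$, whose reduced shape $\alpha_{P,\reduced}$ is a simple spider.

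Since a simple spider has at most one circle vertex (or is trivial), any such $\mathcal{P}$ must intersect the two circle vertices of $\alpha_1$ and $\alpha_2$. Having fixed this circle-intersection, the $k_2$ squares in $V_{\alpha_1}$ (which are identified with $U_{\alpha_2}$ via the matching of $\mathcal{P}$) split into four types according to whether each lies in $U_{\alpha_1}\cap V_{\alpha_1}$ and/or in $U_{\alpha_2}\cap V_{\alpha_2}$. Writing $i$ for the count of ``type-A'' squares (in both intersections), the remaining three counts are $u-i$, $v-i$, and $k_2+i-u-v$, with the range $\max\{0,u+v-k_2\} \leq i \leq \min\{u,v\}$ forced by non-negativity. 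The middle ``type-D'' squares carry double edges to the combined circle (two label-$1$ edges); the well-behaved condition forces $l_e = 0$ on each such double edge, and since $h_1 \cdot h_1 = h_2 + h_0$ the Hermite coefficient of the $h_0$ term is $1$, consistent with the absence of $\eta_{\mathcal{P}}$ in \Cref{def:wbp}.

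A direct count of the surviving edges and vertices then shows that $\alpha_{P,\reduced} = S(k_1-i,\, k_3-i;\, i)$ and $|\Iso(\alpha_{\mathcal{P}})| = k_2+i-u-v$: the $i$ type-A squares form $U \cap V$, the $k_1-u$ exclusive-left legs of $\alpha_1$ together with the $u-i$ type-B squares form $U \setminus V$, symmetrically for $V \setminus U$, and the combined circle has exactly one label-$1$ edge to each square of the symmetric difference. To compute $N(\mathcal{P})$ I would fix a ribbon $R$ of shape $\alpha_{\mathcal{P}}$ (so both its non-isolated squares and the $k_2+i-u-v$ isolated squares are pinned), and count the decompositions $R = R_1 \cdot R_2$: once the circle, type-A, and type-D assignments are forced by $R$, the remaining freedom is to select which $u-i$ of the $k_1-i$ squares in $U_R \setminus V_R$ play the role of type-B and which $v-i$ of the $k_3-i$ squares in $V_R \setminus U_R$ play the role of type-C, giving
\[
N(\mathcal{P}) \;=\; \binom{k_1-i}{u-i}\binom{k_3-i}{v-i} \;=\; \binom{k_1-i}{k_1-u}\binom{k_3-i}{k_3-v}.
\]
Substituting into $\frac{\lambda_{\alpha_{P,\reduced}} N(\mathcal{P})}{|\Iso(\alpha_{\mathcal{P}})|!}\, M_{\alpha_{P,\reduced}}$ from \Cref{def:wbp} yields exactly the asserted coefficient $\binom{k_1-i}{k_1-u}\binom{k_3-i}{k_3-v}/(k_2+i-u-v)!$ in front of $S(k_1-i,k_3-i;i)$, and summing over $i$ in the stated range gives the first equality.

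The main delicacy is purely bookkeeping: the shape $\alpha_{\mathcal{P}}$ does not distinguish ``type-B'' squares from the ``exclusive-left-of-$\alpha_1$'' squares (both contribute a single label-$1$ edge from a square in $U \setminus V$ to the combined circle), and symmetrically on the right. Because of this, different choices of which squares play which role yield \emph{equivalent} configurations under permutations of $V(\alpha_1)$ and $V(\alpha_2)$, so there is a single equivalence class of well-behaved $\mathcal{P}$ per index $i$, and the binomial multiplicity is carried entirely by $N(\mathcal{P})$ rather than by the enumeration of non-equivalent configurations. Once this point is handled cleanly the rest of the proof is routine.
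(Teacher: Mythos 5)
Your proposal is correct and follows essentially the same route as the paper's proof sketch: fix the circle intersection, parameterize by the overlap count $i$, observe that $\alpha_{\mathcal{P},\reduced} = S(k_1-i,k_3-i;i)$ with $k_2+i-u-v$ isolated squares, and count $N(\mathcal{P}) = \binom{k_1-i}{k_1-u}\binom{k_3-i}{k_3-v}$ by the same decomposition logic used in Lemma~\ref{lem:Smultiplication}. The extra care you take on why each $i$ yields exactly one equivalence class of configurations (so the binomial weight lives in $N(\mathcal{P})$, not in the enumeration) is a fine clarification of something the paper leaves implicit.
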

\begin{proof}[Proof sketch]
Assume $\mathcal{P}$ is a product configuration on $S(k_1-u,k_2-u;u)$ and $S(k_2-v,k_3-v;v)$ where there is an intersection between the circle vertices, $i$ of the vertices in $U_{S(k_1-u,k_2-u;u)} \cap V_{S(k_1-u,k_2-u;u)}$ are matched to vertices in $U_{S(k_2-v,k_3-v;v)} \cap V_{S(k_2-v,k_3-v;v)}$, and all double edges vanish. 
We have that $\alpha_{\mathcal{P}}$ is $S(k_1-i,k_3-i;i)$ plus $k_2+i-u-v$ isolated vertices, and $\alpha_{P,\reduced} = S(k_1-i,k_3-i;i)$. By the same logic as in \Cref{lem:Smultiplication}, we have that $N(\mathcal{P}) = \binom{k_1-i}{k_1-u}\binom{k_3-i}{k_3-v}$ so 
$
\frac{N(\mathcal{P})}{|\Iso(\alpha_{\mathcal{P}})|!} = \frac{\binom{k_1-i}{k_1-u}\binom{k_3-i}{k_3-v}}{(k_2+i-u-v)!}$.
\end{proof}

In fact, $\wbp$ is also associative. 
\begin{lemma}\label{lem:wbp-associative}
The product $\wbp$ is associative.
\end{lemma}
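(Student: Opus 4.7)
The plan is to reduce associativity of $\wbp$ to a bijective combinatorial identity between two-step and three-step well-behaved configurations on simple spider disjoint unions. By multilinearity we may assume $\alpha_1, \alpha_2, \alpha_3$ are individual SSD shapes (i.e., scaled graph matrices $\lambda_{\alpha_i} M_{\alpha_i}$). By the analog of Remark~\ref{rmk:wellbehaved_and_SSD} for product configurations, the reduced shape $\alpha_{\mathcal{P}, \reduced}$ of any well-behaved product configuration between SSDs is again an SSD, so both sides of the desired identity $(\alpha_1 \wbp \alpha_2) \wbp \alpha_3 = \alpha_1 \wbp (\alpha_2 \wbp \alpha_3)$ are well-defined linear combinations of SSDs.

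The key step is to introduce a notion of a \emph{three-fold well-behaved product configuration} $\mathcal{P}$ on $(\alpha_1, \alpha_2, \alpha_3)$, which specifies: (i) the matchings between $V_{\alpha_i}$ and $U_{\alpha_{i+1}}$ for $i=1,2$; (ii) a set of non-trivial intersections that involves only circle vertices (possibly identifying a circle of $\alpha_1$ with a circle of $\alpha_2$ with a circle of $\alpha_3$ through transitive chains); and (iii) the condition that every square vertex not in $U_{\alpha_{\mathcal{P}}} \cup V_{\alpha_{\mathcal{P}}}$ has its pair of incident legs matched up into a double edge that linearizes to label $0$. Define $\alpha_{\mathcal{P}, \reduced}$, $\lambda_{\mathcal{P},\reduced}$, $\eta_{\mathcal{P}}$, and $N(\mathcal{P})$ in the obvious analogous way. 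Associativity will follow once we establish that both parenthesizations equal
\[
\sum_{\mathcal{P} \text{ 3-fold well-behaved on }(\alpha_1,\alpha_2,\alpha_3)}
\eta_{\mathcal{P}} \, \lambda_{\mathcal{P},\reduced} \, N(\mathcal{P}) \,\frac{M_{\alpha_{\mathcal{P},\reduced}}}{|\Iso(\alpha_{\mathcal{P}})|!}.
\]

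To match $(\alpha_1 \wbp \alpha_2)\wbp \alpha_3$ with the above sum, I would exhibit a bijection between three-fold well-behaved configurations $\mathcal{P}$ on $(\alpha_1,\alpha_2,\alpha_3)$ and pairs $(\mathcal{P}_{12}, \mathcal{P}_{(12),3})$, where $\mathcal{P}_{12}$ is a well-behaved configuration on $(\alpha_1,\alpha_2)$ and $\mathcal{P}_{(12),3}$ is a well-behaved configuration on $(\alpha_{\mathcal{P}_{12},\reduced},\alpha_3)$. In one direction, from $\mathcal{P}$ one obtains $\mathcal{P}_{12}$ by restricting all circle-circle identifications and matchings to those supported on $\alpha_1 \cup \alpha_2$; the remaining identifications then become $\mathcal{P}_{(12),3}$. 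In the other direction, one reads off the three-fold configuration from the pair. Crucially, the locality of the well-behaved conditions (intersections only between circles, middle squares forced to have vanishing double edges, no multi-edges surviving after reduction) guarantees that this assignment is a well-defined bijection, because circle vertices in $\alpha_{\mathcal{P}_{12},\reduced}$ correspond unambiguously to circles or circle-orbits of $\alpha_1 \cup \alpha_2$. A symmetric argument handles $\alpha_1 \wbp (\alpha_2 \wbp \alpha_3)$.

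Having established the bijection, one must check that the coefficients $\eta_{\mathcal{P}}\, \lambda_{\mathcal{P},\reduced}\, N(\mathcal{P})/|\Iso(\alpha_{\mathcal{P}})|!$ factor correctly across it. The scaling coefficients multiply trivially since $\lambda_{\alpha_{\mathcal{P}_{12},\reduced}} \lambda_{\alpha_3} = \lambda_{\alpha_1}\lambda_{\alpha_2}\lambda_{\alpha_3}$ up to the powers of $n$ absorbed into the reduction steps, and these powers precisely account for the isolated vertices removed at each stage. The Hermite coefficients $\eta$ are trivially $1$ on well-behaved configurations (all surviving matched edges contribute label-$0$ double edges, and leaf legs remain single edges). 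The main technical obstacle is verifying that the combinatorial counts match: the product $N(\mathcal{P}_{12}) \cdot N(\mathcal{P}_{(12),3})$, divided by $|\Iso(\alpha_{\mathcal{P}_{12}})|! \cdot |\Iso(\alpha_{\mathcal{P}})|!$, must equal $N(\mathcal{P})/|\Iso(\alpha_{\mathcal{P}})|!$ in the three-fold sense. This will follow from the same counting identity that drove Lemma~\ref{lem:Smultiplication}: the choices in the two-step process (which legs of $\alpha_1$ and $\alpha_2$ are pulled into the intersection of circles, then which of the resulting legs match with $\alpha_3$) correspond one-to-one with a single choice of leg-subsets in the three-fold configuration, with the multinomial identity $\binom{a}{b}\binom{b}{c} = \binom{a}{c}\binom{a-c}{b-c}$ reconciling the two factorizations. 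The isolated-vertex factorials similarly combine via the standard identity $\frac{1}{(p+q)!} = \frac{1}{p!\,q!}\binom{p+q}{p}^{-1}$, with the binomial coefficient absorbed into the multiplicity of the corresponding isolated vertex configuration. Once this verification is done, associativity follows, and the bijection provides an explicit formula for $k$-fold well-behaved products.
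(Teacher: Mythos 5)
Your proof is correct in approach and reaches the same conclusion as the paper's, but the framing is genuinely different. The paper works at the ribbon level: it identifies three atomic operations (composing ribbons and merging intersected vertices, deleting vanishing double edges, deleting isolated vertices and renormalizing by $1/|\Iso|!$), observes that the order of double-edge deletions is immaterial, and then proves Lemma~\ref{lem:wbp_iso} to show that isolated-vertex deletions can be deferred to the end, so both parenthesizations reduce to the same canonical sequence of operations. You instead work at the shape/configuration level: you formalize a notion of three-fold well-behaved configuration (which the paper leaves implicit), exhibit a bijection between these and pairs of two-fold configurations, and verify that the coefficient products factor across it. The combinatorial heart of both arguments is the same -- namely, that the $N(\mathcal{P})/|\Iso(\alpha_{\mathcal{P}})|!$ normalizations compose correctly, which the paper isolates as Lemma~\ref{lem:wbp_iso} and which your sketch invokes via the factorial identity $\frac{1}{(p+q)!}=\frac{1}{p!\,q!}\binom{p+q}{p}^{-1}$. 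What the paper's framing buys is economy: it never needs to define $k$-fold configurations or establish a bijection. What your framing buys is a more explicit and constructive description, which, as you observe, would yield a closed formula for $k$-fold products. You flag the verification that $N(\mathcal{P}_{12})\cdot N(\mathcal{P}_{(12),3})$ matches $N(\mathcal{P})$ with the appropriate factorials as the main technical obstacle without carrying it out; this is acceptable for a sketch, but it is precisely the content of Lemma~\ref{lem:wbp_iso} and should be worked through to complete the argument.
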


\begin{proof}[Proof sketch] 
When we compute $(\alpha_1 \wbp \alpha_2) \wbp \alpha_3$, we consider the products of ribbons $R_1,R_2,R_3$ whose product configurations are well-behaved and then take the leading order terms. Since ribbon products are associative, one would expect $\wbp$ to be associative as well. 

That said, we need to check that restricting to well-behaved product configurations and taking the leading order terms does not affect the associativity of ribbon products. To do this, we observe that when we have a sequence of ribbons whose product configuration is well-behaved, we perform the following types of operations in some order.
\begin{enumerate}
\item Multiply ribbons together by composing them and merging intersected vertices.
\item Delete a doubled edge incident to a square vertex, making this square vertex isolated. 
\item Delete isolated vertices and adjust the coefficients accordingly.
\end{enumerate}
It is not hard to see that for a given sequence of ribbons, the order in which we delete doubled edges does not matter. The following lemma implies that we can delay deleting isolated vertices until the end and the result will stay the same.

\begin{lemma}\label{lem:wbp_iso}
Let $\alpha_1, \alpha_2$ be composable shapes which are disjoint unions of simple spiders. Let $\alpha'_1$ be $\alpha_1$ together with $x$ isolated vertices and let $\alpha'_2$ be $\alpha_2$ together with $y$ isolated vertices. We take $\lambda_{\alpha'_1} = \frac{1}{n^x}\lambda_{\alpha_1}$ and $\lambda_{\alpha'_2} = \frac{1}{n^y}\lambda_{\alpha_2}$

If we extend the definition of $\wbp$ to simple spider disjoint unions together with isolated square vertices (with no non-trivial intersection among the isolated vertices), then we have that 
\[
x!{\alpha}'_{1} \wbp y!{\alpha}'_{2} = \alpha_1 \wbp \alpha_2
\]
\end{lemma}

\begin{proof}
Let $\mathcal{P}$ be a well-behaved product configuration for $\alpha_1$ and $\alpha_2$ and let $\mathcal{P}'$ be the corresponding well-behaved product configuration for $\alpha'_1$ and $\alpha'_2$. 

We claim that $N(\mathcal{P}') = \frac{(x+y+|\Iso(\alpha_{\mathcal{P}})|)!}{x!y!|\Iso(\alpha_{\mathcal{P}})|!}N(\mathcal{P})$. To see this, observe that given a ribbon $R'$ of shape $\alpha_{\mathcal{P}'}$, we can find ribbons $R'_1$ and $R'_2$ which have shapes $\alpha'_1$ and $\alpha'_2$, have product configuration $\mathcal{P}'$, and result in $R'$ as follows.
\begin{enumerate}
\item Choose $x$ out of the $(x+y+|\Iso(\alpha_{\mathcal{P}})|)!$ isolated vertices in $R'$ to be in $R'_1$.
\item Choose $y$ out of the remaining $(y+|\Iso(\alpha_{\mathcal{P}})|)!$ vertices 
\item After removing these isolated vertices, we are left with a ribbon $R$ of shape $\alpha_{\mathcal{P}}$. There are now $N(\mathcal{P})$ ways to choose $R_1$ and $R_2$ to have product configuration $\mathcal{P}$ and result in $R$. Adding the previously chosen isolated vertices to $R_1$ and $R_2$ gives $R'_1$ and $R'_2$.
\end{enumerate}
We further observe that $\lambda_{\alpha_{\mathcal{P}',\reduced}} = n^{|\Iso(\mathcal{P}')|}\lambda_{\alpha_{\mathcal{P}'}} = \frac{n^{|\Iso(\mathcal{P})| + x + y}}{n^{x}n^{y}}\lambda_{\alpha_{\mathcal{P}}} = \lambda_{\alpha_{\mathcal{P},\reduced}}$. Thus, we have that 
\[
x!y!\lambda_{\alpha_{\mathcal{P}',\reduced}}N(\mathcal{P}')\frac{M_{\alpha_{\mathcal{P}',\reduced}}}{|\Iso(\mathcal{P}')|!} = \lambda_{\alpha_{\mathcal{P},\reduced}}N(\mathcal{P})\frac{M_{\alpha_{P,\reduced}}}{|\Iso(P)|!}. \qedhere
\]
\end{proof}
Since we can delay deleting isolated vertices until the end and the result will stay the same, we can compute the terms of $(\alpha_1 \wbp \alpha_2) \wbp \alpha_3$ and $\alpha_1 \wbp (\alpha_2 \wbp \alpha_3)$ as follows:
\begin{enumerate}
\item Multiply the ribbons together by composing them and merging intersected vertices.
\item For each square vertex incident to a doubled edge, delete this doubled edge, making this square vertex isolated. 
\item Delete the isolated vertices and adjust the coefficients accordingly.
\end{enumerate}
Here, the operations are done in the given order. The first step gives the same results for $(\alpha_1 \wbp \alpha_2) \wbp \alpha_3$ and $\alpha_1 \wbp (\alpha_2 \wbp \alpha_3)$ so we have that $(\alpha_1 \wbp \alpha_2) \wbp \alpha_3 = \alpha_1 \wbp (\alpha_2 \wbp \alpha_3)$, as needed.
\end{proof}

From now on, by ``well-behaved products'' we always refer to well-behaved products, which includes well-behaved SS products by definition.

To conclude this sub-section, we summarize some relations of the definitions as follows. The shapes below are all simple spider disjoint unions.
\begin{enumerate}
\item $\alpha_1 \wbp \alpha_2$ is the sum of the resulting term of all well-behaved product configurations in $(\lambda_{\alpha_1}M_{\alpha_1})(\lambda_{\alpha_2}M_{\alpha_2})$. 
\item When we restrict our attention to simple spiders, $\wbp$ reduces to $\SALDprod$.
\item To obtain the results of the well-behaved intersection configurations for  $(\lambda_{\gamma}M_{\gamma})(\lambda_{\tau}M_{\tau})(\lambda_{\gamma'^{\top}}M_{\gamma'^{\top}})$, we take the terms from  $\gamma \wbp \tau \wbp {\gamma'}^{\top}$ where every non-trivial simple spider in $\gamma$ and ${\gamma'}^{\top}$ has a non-trivial intersection.
\end{enumerate}

In \Cref{sec:error_analysis}, we will show that the dominant terms in $Q_i$ are disjoint unions of simple spiders. For now and the upcoming sections, we focus on these shapes in $Q$. We end this subsection with a general notation. 
\begin{definition}[Restrictions]\label{def:restriction}
Except when $X = Q$ or $X = Q_i$, $i=0,\ldots,2\dsos$ (see equations \eqref{eq:Qj} and \eqref{eq:Q}), if $X = \sum_\alpha \lambda_\alpha M_\alpha$ is a linear combination of graph matrices and $G$ is a set of shapes, we use $X_G$ to denote $\sum_{\alpha \in G}\lambda_\alpha X_\alpha$. In particular, except for $X = Q$ and $X = Q_i$, $\SS{X}$ will denote the sum of the simple spider terms from $X$ and $\SSD{X}$ will denote the sum of the simple spider disjoint unions terms from $X$.

For $X=Q$ or $X=Q_i$, we first take the well-behaved part of $X$---i.e., the terms from well-behaved intersection configurations in \eqref{eq:Qj}, \eqref{eq:Q}---and then apply this restriction. See also \Cref{def:QSS}.
\end{definition}

\subsection{A Closed Formula for $[Q]_{\wellbehaved}$}\label{sec:QSS_formula}
In this section, we derive closed-form formulas that characterize $[Q]_{\wellbehaved}$ and its simple spider part.

We first set up some notation. By \Cref{cor:Q_wellbehaved}, 
\begin{align*}
    [Q_j]_{\wellbehaved} &= 
{(-1)^{j}\sum_{{\mathcal{P}} \in \intsetoflength{j}:\textrm{ $\mathcal{P}$ is well-behaved}}{\eta_{\mathcal{P}}{\lambda_{\mathcal P, \reduced}}N(\mathcal{P})\frac{M_{\tau_{{\mathcal{P}},\reduced}}}{|\Iso(\tau_{\mathcal{P}})|!}}},\\
    [Q]_{\wellbehaved} &= \Sum_{j=0}^{2\dsos} [Q_j]_{\wellbehaved}.
\end{align*}
Recall that each $[Q]_{\wellbehaved}$ contains only simple spider disjoint union shapes since we have removed all isolated vertices from each $\tau_{\mathcal{P},\reduced}$. 
Based on this, we make the following definition.

\begin{definition}[$\SS{Q}$ and $\SSD{Q}$]\label{def:QSS} 
Viewing $[Q]_{\wellbehaved}$ as a linear combination of graph matrices, we define $\SS{Q}$ to be the simple spider part of $[Q]_{\wellbehaved}$ and $\SSD{Q}$ to be the simple spider disjoint union part of $[Q]_{\wellbehaved}$. Note that $\SSD{Q}=[Q]_{\wellbehaved}$.

Similarly, for $j=0,\ldots,\dsos$, we define $\SS{(Q_j)}$ and $\SSD{(Q_j)}$ to be the simple spider part and the simple spider disjoint union part of $[Q_j]_{\wellbehaved}$, respectively. Note that $\SSD{(Q_j)}=[Q_j]_{\wellbehaved}$.
\end{definition}

\begin{lemma}[Closed formula for $\SS{Q}$]\label{lem:LQLSSbehavior}
$L_{SS} \SALDprod Q_{SS} \SALDprod L_{SS}^\top = L_{SS} + {(Q_0)}_{SS} + L^\top_{SS} - 2\Id.$
\end{lemma}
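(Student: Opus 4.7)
My plan is in two steps. First, I show the right-hand side equals the simple-spider restriction $M_{SS}$ of $M$: by analyzing minimum square separators of a simple spider $\sigma=S(a,b;u)$, $\sigma$ lies in $L$ iff $a>b$ (so $V_\sigma$ is the unique minimum square separator), in $Q_0$ iff $a=b$ (both $U_\sigma$ and $V_\sigma$ are minimum square separators), and in $L^\top$ iff $a<b$, while the trivial shape belongs to all three sets. Hence $L_{SS}+(Q_0)_{SS}+L_{SS}^\top$ counts every non-trivial simple-spider term of $M$ exactly once and counts the identity three times, so subtracting $2\Id$ recovers $M_{SS}$.

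Second, I prove the algebraic identity $L_{SS}\SALDprod Q_{SS}\SALDprod L_{SS}^\top = M_{SS}$ in $\SALD$ by lifting the matrix-level recursive factorization $M\approx LQL^\top = [L,Q_0,L^\top]_\can$ from \Cref{sec:recursive} into the algebra. This lift works cleanly because, for simple spiders $\alpha$ and $\beta$, the product $\alpha\SALDprod\beta$ equals the SS-restriction of the associative well-behaved product $\alpha\wbp\beta$ (as stated before \Cref{lem:wbp-associative}), and the truncation error vanishes at the purely algebraic level. Expanding $Q_{SS}=\sum_{j\geq 1}\SS{(Q_j)}$ via \eqref{eq:Qj_wellbehaved} and the $\SALDprod$-formula \eqref{eq:SALDprod}, each term $\gamma\SALDprod \tau_{\mathcal{P},\reduced}\SALDprod\gamma'^\top$ with $\gamma\in L_{SS}$, $\mathcal{P}\in\intsetoflength{j}$ well-behaved of SS output, and $\gamma'^\top\in L_{SS}^\top$ decomposes into ``canonical'' contributions (where outer factors act trivially so no new circle is introduced) and ``non-canonical'' contributions (where outer circles intersect the inner circle, producing length-$(j+1)$ well-behaved intersection configurations). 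The alternating signs $(-1)^j$ in \eqref{eq:Qj_wellbehaved} are designed so that non-canonical extensions from $\SS{(Q_j)}$ cancel the canonical contributions from $\SS{(Q_{j+1})}$, telescoping in the same way as the matrix recursion in \Cref{def:find_Qi}.

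After telescoping, the surviving terms are precisely the canonical SS outputs of $[L,Q_0,L^\top]_\can$: triples $(\gamma,\tau,\gamma'^\top)$ with at most one non-trivial factor, since properly composing two non-trivial simple spiders yields a two-circle SSD which vanishes under the SS-restriction. Enumerating these cases recovers $L_{SS}+(Q_0)_{SS}+L_{SS}^\top-2\Id = M_{SS}$. The main obstacle is the coefficient bookkeeping: the weights $\eta_\mathcal{P},\lambda_{\mathcal{P},\reduced},N(\mathcal{P})$ from \Cref{def:configcoeff} must match exactly between length-$(j+1)$ contributions from $\SS{(Q_{j+1})}$ and those obtained by extending $\SS{(Q_j)}$ with an outer layer, including the subtle case where an inner length-$j$ configuration has multi-component output that is collapsed to a single simple spider by the outer intersection. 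Matching follows from the recursive structure of \Cref{def:intersconfig}—each length-$(j+1)$ configuration decomposes uniquely as an outer layer over a length-$j$ inner configuration—together with the multiplicativity of scaling, Hermite, and combinatorial weights across layers.
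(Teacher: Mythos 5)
Your proof is correct and takes essentially the same telescoping approach as the paper's. The paper records the two recursion identities $L_{SS}\SALDprod(Q_0)_{SS}\SALDprod L^\top_{SS}=(L_{SS}+(Q_0)_{SS}+L^\top_{SS}-2\Id)-(Q_1)_{SS}$ and $L_{SS}\SALDprod(Q_j)_{SS}\SALDprod L^\top_{SS}=(Q_j)_{SS}-(Q_{j+1})_{SS}$ for $j\geq1$, then sums and telescopes; your Step~1 observation that the right-hand side equals $M_{SS}$ is not spelled out in the paper's proof of this lemma but is used implicitly just after \Cref{lem:Q_wellbehaved_characterization}.
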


\begin{proof}
We make the following observations:
\begin{enumerate}
\item $(Q_1)_{SS} = (L_{SS} + (Q_0)_{SS} + L^\top_{SS} - 2\Id) -L_{SS} \SALDprod {(Q_0)}_{SS} \SALDprod L^\top_{SS}$ as the terms in $L_{SS} \SALDprod {(Q_0)}_{SS} \SALDprod L^\top_{SS}$ which do not give a well-behaved intersection configuration are $(L_{SS} + (Q_0)_{SS} + L^\top_{SS} - 2\Id)$.
\item For all $i > 1$, $(Q_i)_{SS} = (Q_{i-1})_{SS} - L_{SS} \SALDprod {(Q_{i-1})}_{SS} \SALDprod L^\top_{SS}$ as the terms in $L_{SS} \SALDprod {(Q_{i-1})}_{SS} \SALDprod L^\top_{SS}$ which do not give a well-behaved intersection configuration are $(Q_{i-1})_{SS}$.
\end{enumerate}
Using these observations, we can write
\begin{align*}
L_{SS} \SALDprod Q_{SS} \SALDprod L_{SS}^\top &= \sum_{j=0}^{2\dsos}{L_{SS} \SALDprod (Q_j)_{SS} \SALDprod L_{SS}^\top} \\
&= (L_{SS} + (Q_0)_{SS} + L^\top_{SS} - 2\Id) - (Q_1)_{SS} + \sum_{j=1}^{2\dsos}{((Q_j)_{SS} - (Q_{j+1})_{SS})} \\
&= L_{SS} + (Q_0)_{SS} + L^\top_{SS} - 2\Id. \qedhere
\end{align*}
\end{proof}

There is a similar equation for $[Q]_{\wellbehaved}$ but it is more intricate.

\begin{definition} \ 
\begin{enumerate}
\item We define $[L_{SSD} \wbp \SSD{(Q_i)} \wbp L^{\top}_{SSD}]_{\text{no intersections}}$ to be the part of $L_{SSD} \wbp \SSD{(Q_i)} \wbp L^{\top}_{SSD}$ where there are no intersections between $L_{SSD}$, $\SSD{(Q_i)}$, and $L^{\top}_{SSD}$.
\item We define $[L_{SSD} \wbp \SSD{(Q_i)} \wbp L^{\top}_{SSD}]_{\geq 1\text{ intersections}}$ to be the part of $L_{SSD} \wbp \SSD{(Q_i)} \wbp L^{\top}_{SSD}$ where there is at least one intersection between $L_{SSD}$, $\SSD{(Q_i)}$, and $L^{\top}_{SSD}$.
\end{enumerate}
\end{definition}

\begin{lemma}
\label{lem:Q_wellbehaved_characterization}
\begin{equation}\label{eq:Q_wellbehaved_characterization}
L_{SSD} \wbp \SSD{Q} \wbp L^{\top}_{SSD} = [L_{SSD} \wbp \SSD{(Q_0)} \wbp L^{\top}_{SSD}]_{\text{no intersections}}.
\end{equation}
\end{lemma}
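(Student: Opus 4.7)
The proof is a telescoping argument that mirrors, inside the well-behaved SSD world, the recursive construction of the $Q_j$'s from \Cref{def:recursivefactorization}. For each $j \geq 0$, decompose
\[
L_{SSD} \wbp \SSD{(Q_j)} \wbp L^\top_{SSD} \;=\; A_j + B_j,
\]
where $A_j := [L_{SSD} \wbp \SSD{(Q_j)} \wbp L^\top_{SSD}]_{\text{no intersections}}$ and $B_j := [L_{SSD} \wbp \SSD{(Q_j)} \wbp L^\top_{SSD}]_{\geq 1\text{ intersections}}$. The heart of the argument is the \emph{recursive identity}
\[
A_{j+1} \;=\; -B_j \qquad (0 \leq j \leq 2D-1),
\]
which is the well-behaved SSD counterpart of the defining equation $[L, Q_{j+1}, L^\top]_{\can} = -[L Q_j L^\top]_{\text{non-canonical}}$ of the recursive factorization.

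To prove the recursive identity, I would set up a sign-preserving bijection between the two sides at the level of the ribbon products that realize each term. Concretely, given a well-behaved product configuration on $L \wbp \SSD{(Q_j)} \wbp L^\top$ with at least one outer intersection, its outer $\sigma \in L_{SSD}$ and $\sigma'^\top \in L^\top_{SSD}$ together with the inner length-$j$ configuration $\mathcal{P}'$ (underlying the chosen term of $\SSD{(Q_j)}$) assemble into a length-$(j{+}1)$ well-behaved intersection configuration $\mathcal{P}$ with $\mathcal{P}_j = \mathcal{P}'$, and conversely every length-$(j{+}1)$ configuration decomposes uniquely in this way (using \Cref{rmk:wellbehaved_and_SSD} to see that intermediate shapes stay SSD and \Cref{lem:wbp-associative} to see that the $\wbp$-product is unambiguous). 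Under this bijection, the shape $\tau_{\mathcal{P},\reduced}$ that is produced is identical on both sides, the coefficients $\eta_\mathcal{P}$, $\lambda_{\mathcal{P},\reduced}$, and $N(\mathcal{P})$ factor multiplicatively across the layers (by \Cref{def:configcoeff}, \Cref{def:lambdatauPreduced}, and \Cref{lem:wbp_iso} for the reduction of isolated vertices), and the sign flips because $\SSD{(Q_{j+1})}$ carries $(-1)^{j+1}$ while $\SSD{(Q_j)}$ carries $(-1)^j$.

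Granting the recursive identity, the sum telescopes:
\[
L_{SSD} \wbp \SSD{Q} \wbp L^\top_{SSD}
= \sum_{j=0}^{2D} A_j + \sum_{j=0}^{2D} B_j
= \sum_{j=0}^{2D} A_j - \sum_{j=0}^{2D} A_{j+1}
= A_0 - A_{2D+1}.
\]
Since the recursive factorization terminates in $2D$ rounds, $\SSD{(Q_{2D+1})} = 0$, so $A_{2D+1} = 0$, and hence
\[
L_{SSD} \wbp \SSD{Q} \wbp L^\top_{SSD} = A_0 = [L_{SSD} \wbp \SSD{(Q_0)} \wbp L^\top_{SSD}]_{\text{no intersections}},
\]
which is the claim.

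The main obstacle will be verifying the recursive identity rigorously: one must check that the well-behavedness constraints (no square--square intersections and vanishing double edges over middle squares, \Cref{def:wellbehavedconfig}) are preserved by the layer-gluing bijection, that the combinatorial multiplicity $N(\mathcal{P})$ splits correctly as a product of the corresponding multiplicities for $\mathcal{P}_j$ and the outer layer (accounting for how isolated vertices get distributed, via \Cref{lem:wbp_iso}), and that the reduced scaling coefficients $\lambda_{\mathcal{P},\reduced}$ are truly multiplicative. Once these bookkeeping matters are settled, the telescope is immediate.
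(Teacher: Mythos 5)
Your proposal matches the paper's proof: the same three-part decomposition into no-intersections and $\geq 1$-intersections parts, the same recursive identity $B_j = -A_{j+1}$ proved via a sign-flipping bijection that peels off the outer layer of a length-$(j{+}1)$ configuration, and the same telescoping sum (the paper phrases the termination as $B_{2\dsos}=0$ rather than $A_{2\dsos+1}=0$, but these are equivalent given $Q_{2\dsos+1}=0$). The paper's bijection is spelled out more explicitly—splitting each of $R_1$ and $R_3$ into the part touched by intersections and the rest, then absorbing the touched parts into $R_2$—but your layer-gluing description is the same construction.
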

\begin{proof}
We make the following observations:
\begin{enumerate}
\item\label{statement1} For all $i \in [2\dsos] \cup \{0\}$, 
\begin{align*}
L_{SSD} \wbp \SSD{(Q_i)} \wbp L^{\top}_{SSD} &= [L_{SSD} \wbp \SSD{(Q_i)} \wbp L^{\top}_{SSD}]_{\text{no intersections}} \\
&+ [L_{SSD} \wbp \SSD{(Q_i)} \wbp L^{\top}_{SSD}]_{\geq 1\text{ intersections}}
\end{align*}
\item\label{statement2} For all $i \in [2\dsos]$, 
\[
[L_{SSD} \wbp \SSD{(Q_i)} \wbp L^{\top}_{SSD}]_{\geq 1\text{ intersections}} = -[L_{SSD} \wbp \SSD{(Q_{i+1})} \wbp L^{\top}_{SSD}]_{\text{no intersections}}
\]
\item\label{statement3} $[L_{SSD} \wbp \SSD{(Q_{2\dsos})} \wbp L^{\top}_{SSD}]_{\geq 1\text{ intersections}} = 0$
\end{enumerate}
The first statement follows from the definitions. The third statement follows from the second statement and the fact that $Q_{2\dsos + 1} = 0$ (i.e., the recursive factorization terminates after $2\dsos$ steps). For the second statement, we observe that the terms of $L_{SSD} \wbp [Q_i]_{\wellbehaved} \wbp L^\top_{SSD}$ which involve at least one intersection exactly cancel out the terms of $L_{SSD} \wbp [Q_{i+1}]_{\wellbehaved} \wbp L^\top_{SSD}$ which do not involve any intersections. More precisely, we prove statement \ref{statement2} by the following bijection between ribbons in $[L_{SSD} \wbp \SSD{(Q_i)} \wbp L^{\top}_{SSD}]_{\geq 1\text{ intersections}}$ and ribbons in $[L_{SSD} \wbp \SSD{(Q_{i+1})} \wbp L^{\top}_{SSD}]_{\text{no intersections}}$. 

Given a triple of ribbons $R_1$, $R_2$, and $R_3$ contributing to $[L_{SSD} \wbp \SSD{(Q_i)} \wbp L^{\top}_{SSD}]_{\geq 1\text{ intersections}}$, we obtain a triple of ribbons $R'_1$, $R'_2$, and $R'_3$ contributing to $[L_{SSD} \wbp \SSD{(Q_{i+1})} \wbp L^{\top}_{SSD}]_{\text{no intersections}}$ as follows. For this argument, we do not delete double edges from the ribbons so $R_2$ and $R'_2$ may have double edges.

We split the ribbon $R_1$ into ribbons $R'_1$ and $R''_1$ as follows. 
\begin{enumerate}[label=(\arabic*)]
\item Let $C_{1,\text{int}}$ be the set of circle vertices of $R_1$ which also appear in $R_2$ or $R_3$ (i.e., the set of circle vertices in $R_1$ which are intersected with another circle vertex). Let $A$ be the set of vertices in $V(R_1)$ which are either in $C_{1,\text{int}}$ or are adjacent to a vertex in $C_{1,\text{int}}$. Let $B = V(R_1) \setminus A$.
\item We take $R''_1$ to be the ribbon such that $V(R''_1) = A \cup (B \cap V_{R_1})$, $U_{R''_1} = (A \cap U_{R_1}) \cup (B \cap V_{R_1})$, $V_{R''_1} = (A \cap V_{R_1}) \cup (B \cap V_{R_1})$, and $E(R''_1) = \{\{u,v\} \in E(R_1): u,v \in A\}$. Intuitively, $R''_1$ is the part of $R_1$ which is involved in the intersection(s).
\item Similarly, we take $R'_1$ to be the ribbon such that $V(R'_1) = B \cup (A \cap U_{R_1})$, $U_{R'_1} = (B \cap U_{R_1}) \cup (A \cap U_{R_1})$, $V_{R'_1} = (B \cap V_{R_1}) \cup (A \cap U_{R_1})$, and $E(R'_1) = \{\{u,v\} \in E(R_1): u,v \in B\}$. Intuitively, $R'_1$ is the part of $R_1$ which is not involved in the intersection(s).
\end{enumerate}
Observe that $M_{R_1} = M_{R'_1}M_{R''_1}$. Similarly, we split $R_3$ into %ribbons 
$R''_3$ and $R'_3$ so that $M_{R_3} = M_{R''_3}M_{R'_3}$, and we let $R'_2$ be the improper ribbon such that $M_{R'_2} = M_{R''_1}M_{R_2}M_{R''_3}$. Then the triple $(R'_1, R'_2, R'_3)$ contributes to $[L_{SSD} \wbp \SSD{(Q_{i+1})} \wbp L^{\top}_{SSD}]_{\text{no intersections}}$, $M_{R_1}M_{R_2}M_{R_3} = M_{R'_1}M_{R'_2}M_{R'_3}$, and the product of  coefficients of $M_{R_1},M_{R_2},M_{R_3}$ in $[L_{SSD} \wbp \SSD{(Q_i)} \wbp L^{\top}_{SSD}]_{\geq 1\text{ intersections}}$ is $-1$ times the product of the coefficients of $M_{R'_1},M_{R'_2},M_{R'_3}$ in $[L_{SSD} \wbp \SSD{(Q_{i+1})} \wbp L^{\top}_{SSD}]_{\text{no intersections}}$. Thus, these terms cancel.

It is not hard to check that the map $(R_1,R_2,R_3) \to (R'_1,R'_2,R'_3)$ is invertible, and consequently, it induces a bijection between ribbons in $[L_{SSD} \wbp \SSD{(Q_i)} \wbp L^{\top}_{SSD}]_{\geq 1\text{ intersections}}$ and ribbons in $[L_{SSD} \wbp \SSD{(Q_{i+1})} \wbp L^{\top}_{SSD}]_{\text{no intersections}}$. Using these observations, we have that $L_{SSD} \wbp \SSD{Q} \wbp L^{\top}_{SSD}$ is equal to 
\begin{align*}
&\sum_{i=0}^{2\dsos}{[L_{SSD} \wbp \SSD{(Q_i)} \wbp L^{\top}_{SSD}]_{\geq 1\text{ intersections}}} + \sum_{i=0}^{2\dsos}{[L_{SSD} \wbp \SSD{(Q_i)} \wbp L^{\top}_{SSD}]_{\text{no intersections}}}\\
=&\sum_{i=0}^{2\dsos-1}{\left([L_{SSD} \wbp \SSD{(Q_i)} \wbp L^{\top}_{SSD}]_{\geq 1\text{ intersections}} + [L_{SSD} \wbp \SSD{(Q_{i+1})} \wbp L^{\top}_{SSD}]_{\text{no intersections}}\right)}\\
 \hspace{8pt} &+[L_{SSD} \wbp \SSD{(Q_{2\dsos})} \wbp L^{\top}_{SSD}]_{\geq 1\text{ intersections}}+[L_{SSD} \wbp \SSD{(Q_0)} \wbp L^{\top}_{SSD}]_{\text{no intersections}}\\
=&[L_{SSD} \wbp \SSD{(Q_0)} \wbp L^{\top}_{SSD}]_{\text{no intersections}}. \qedhere
\end{align*}
\end{proof}

It is not hard to see that $[L_{SSD} \wbp \SSD{(Q_0)} \wbp L^{\top}_{SSD}]_{\text{no intersections}}$ is equal to the SSD part of their canonical product, i.e., $\SSD{\left([\SSD{L},\SSD{(Q_0)},\SSD{L}^\top]_{\can}\right)}$. The latter is equal to $\SSD{\left([L,Q_0,L^\top]_{\can}\right)}$, which is further equal to $\SSD{M}$ as no well-behaved configuration of length 1 contributes to $[L,Q_0,L^\top]_{\can} - M$ when $\truncation\geq 18\dsos$ (Cf. \Cref{rmk:wellbehaved_no_truncation_err}).
Combining this with \Cref{lem:Q_wellbehaved_characterization}, we get:
\begin{corollary}[{Closed formula for $[Q]_{\wellbehaved}$}]\label{cor:Q_wellbehaved_characterization} If $\truncation \geq 20\dsos^2$, then 
\[L_{SSD} \wbp \SSD{Q} \wbp L^{\top}_{SSD} = \SSD{M}.\]
\end{corollary}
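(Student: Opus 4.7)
The plan is to chain \Cref{lem:Q_wellbehaved_characterization} with three successive identifications, each of which preserves the SSD content of the sum while expanding the class of terms under consideration. Starting from
$$L_{SSD} \wbp \SSD{Q} \wbp L^{\top}_{SSD} = [L_{SSD} \wbp \SSD{(Q_0)} \wbp L^{\top}_{SSD}]_{\text{no intersections}},$$
I would first identify the right-hand side with $\SSD{\big([\SSD{L}, \SSD{(Q_0)}, \SSD{L}^\top]_{\can}\big)}$. The key observation is that a well-behaved product configuration (\Cref{def:wb-product-config}) with zero intersections survives if and only if the resulting canonical composition is itself an SSD: otherwise, some matched boundary square becomes a middle square with two distinct circle neighbors, and well-behavedness would force an intersection of those circles, contradicting the zero-intersection assumption. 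Conversely, every canonical composition of SSDs that happens to land in the SSD family corresponds to a valid no-intersection well-behaved configuration, and in this situation the combinatorial factors $N(\mathcal{P})=1$ and $1/|\Iso|!=1$ of \Cref{def:wbp} match the trivial multiplicities in $[\cdot,\cdot,\cdot]_{\can}$.

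Second, I would lift the SSD restriction from the three factors, establishing
$$\SSD{\big([\SSD{L}, \SSD{(Q_0)}, \SSD{L}^\top]_{\can}\big)} = \SSD{\big([L, Q_0, L^\top]_{\can}\big)}.$$
This holds because proper canonical composition preserves every edge (including those of label $>1$, which cannot appear in an SSD) and every circle vertex of each factor; hence if any of $\sigma\in L$, $\tau \in Q_0$, $\sigma'^{\top}\in L^{\top}$ fails to be an SSD, the canonical composition inherits that defect and cannot contribute to the SSD part. Only the SSD parts of $L, Q_0, L^{\top}$ matter.

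Third, I would match $[L, Q_0, L^\top]_{\can}$ with $M$ on the SSD part. By \Cref{cor:graph_matrix_expansion}, $M$ collects all proper shapes (with even square total degrees and the prescribed index-size constraints) satisfying $\totalsize(\alpha)\leq \truncation$, whereas $[L, Q_0, L^\top]_{\can}$ only imposes $\totalsize \leq \truncation$ on each individual factor. Consequently $[L, Q_0, L^\top]_{\can} - M$ consists exclusively of shapes $\alpha$ with $\totalsize(\alpha) > \truncation$. A direct computation (cf. \Cref{rmk:wellbehaved_no_truncation_err}) shows that any SSD $\alpha$ with $|U_\alpha|, |V_\alpha|\leq \dsos$ has $\totalsize(\alpha)\leq 6\dsos$, which under the hypothesis $\truncation\geq 20\dsos^2$ is well within the truncation; hence no SSD sits in the truncation error, giving $\SSD{\big([L, Q_0, L^\top]_{\can}\big)} = \SSD{M}$ and completing the chain.

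The main obstacle is the first identification: carefully verifying the combinatorial equivalence between no-intersection well-behaved configurations of SSDs and the canonical compositions of SSDs whose result happens to be SSD. This requires a case analysis of how matched boundary squares behave (leg-to-leg versus leg-to-pass-through versus pass-through-to-pass-through) and of when condition (2) of \Cref{def:wb-product-config} is vacuous versus when it forces an intersection; the subsequent two steps are essentially bookkeeping by comparison.
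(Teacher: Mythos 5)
Your three-step chain — converting the no-intersection well-behaved products to the SSD part of the canonical product of SSD parts, lifting the SSD restriction off the three factors, and matching with $\SSD{M}$ via the truncation bound — is exactly the argument the paper uses, right down to appealing to \Cref{rmk:wellbehaved_no_truncation_err} for the size bound on SSD shapes. You have simply spelled out the three identifications that the paper compresses into a short "it is not hard to see" paragraph before stating the corollary.
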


\subsection{Alternative Analyses for Degree 4 and Degree 6 SoS}\label{sec:degree46}
Before settling on our final approach for analyzing $Q$, we found two alternative approaches which work well for low degrees. We present these approaches because they are interesting in their own right and because they demonstrate the technical and conceptual challenges which we overcome. For simplicity, we assume that $A$ is an even distribution (i.e., $m_A(I)=m_A(-I)$ for all intervals $I\subseteq \R$). These approaches are as follows: 
\begin{enumerate}
\item Write $[Q]_{well-behaved} = Z \wbp Z^{\top}$ for some matrix $Z$ which is a linear combination of disjoint unions of simple spiders, find equations for the coefficients of $Z$, and show that these equations are solvable for all distributions $A$.
\item Show that we can write $[Q]_{well-behaved} = \E_{a \sim A}[Z(a) \wbp Z(a)^{\top}]$.
\end{enumerate}

We demonstrate how the two approaches can be used to obtain degree $4$ and degree $6$ SoS lower bounds when the distribution $A$ is even. The dominating part of $Q$ in these small degrees consists of simple spiders only. For simplicity, we take this for granted and ignore other terms. Thus, we can write $Q_{SS}$ rather than $[Q]_{well-behaved}$.
\smallskip

For degree 4, the final $Q_{SS}$ we obtain is $Q_{SS} = \Id + {l_4}S(2,2;0)$. 

For the first approach, taking $Z = \Id + xS(2,2;0)$ and recalling that $S(2,2;0) \SALDprod S(2,2;0) = \frac{1}{2}S(2,2;0)$, we have
\[
Z \SALDprod Z^{\top} = \Id + (2x + \frac{x^2}{2})S(2,2;0).
\]
To solve this, we need $l_4 = \E_{a \sim A}[h_4(a)] = 2x + \frac{x^2}{2}$. Since $\frac{x^2}{2} + 2x = \frac{1}{2}(x + 2)^2 - 2$, it can take values in the range $[2,\infty)$, so there is a solution as long as $l_4 \geq -2$. But this is true for any distribution $A$ with $\E_{a \sim A}[a^2] = 1$, as by Cauchy-Schwarz, $1 = \left(\E_{a \sim A}[a^2 \cdot 1]\right)^2 \leq \E_{a \sim A}[(a^2)^2] \E_{a \sim A}[1] = \E_{a \sim A}[a^4]$ so
\[
l_4 = \E_{a \sim A}[h_4(a)] = \E_{a \sim A}[a^4 - 6a^2 + 3] \geq -2.
\]

For the second approach, taking $Z = \Id + x(a)S(2,2;0)$ where $x$ is an undetermined function in the sample $a$ of $A$, we want: 
\[
l_4 = \E_{a \sim A}[h_4(a)] = \E_{a \sim A}\left[2x(a) + \frac{x(a)^2}{2}\right].
\]
We let $x(a) := {b_2}(a^2-1) + {b_1}a + b_0$ for constants $b_0,b_1,b_2$ and observe that since $\E_{a \sim A}[a^2] = 1$,
\[
E_{a \sim A}\left[(a^2 - 1)^2\right] = E\left[a^4 - 2a^2 + 1\right] = E\left[(a^4 - 6a^2 + 3) + 4(a^2 - 1) + 2\right] = l_4 + 2.
\]
This means
\[
E_{a \sim A}\left[\frac{x(a)^2}{2} + 2x(a)\right] = 
\frac{b_2^2}{2}(l_4 + 2) + \frac{b_1^2}{2} + \frac{b_0^2}{2} + 2b_0.
\]
In order for this to equal $l_4$, we need that $\frac{b_2^2}{2} = 1$ and $b_2^2 + \frac{b_1^2}{2} + \frac{b_0^2}{2} + 2b_0 = 0$. The real solutions are $b_2 = \pm{\sqrt{2}}$, $b_1 = 0$, and $b_0 = -2$. Thus, we can take $Z = \Id + (\sqrt{2}(a^2 - 1) - 2)S(2,2;0)$.
\smallskip

For degree $6$, the final $Q_{SS}$ is 
\[
Q_{SS} = \Id + {l_4}S(2,2;1) + (l_6 - l_4^2)S(3,3;0).
\]
For the first approach, we take $Z = \Id + xS(2,2;1) + yS(3,3;0)$ and expand $Z \SALDprod Z^{\top}$ using the following facts which are not hard to check:
\begin{enumerate}
    \item $S(3,3;0) \SALDprod S(3,3;0) = \frac{1}{6}S(3,3;0)$
    \item $S(3,3;0) \SALDprod S(2,2;1) = S(2,2;1) \SALDprod S(3,3;0) = \frac{3}{2}S(3,3;0)$
    \item $S(2,2;1) \SALDprod S(2,2;1) = \frac{1}{2}S(2,2;1) + 9S(3,3;0)$
\end{enumerate}
Letting $Z \SALDprod Z^{\top} = Q_{SS}$, we need to determine the values of $x,y$ so that two equations hold:
\begin{enumerate}
    \item $\frac{x^2}{2} + 2x = l_4$
    \item $\frac{y^2}{6} + 9x^2 + 3xy + 2y = l_6 - l_4$
\end{enumerate}
For a given $x$, the minimum value of $\frac{y^2}{6} + 9x^2 + 3xy + 2y$ is $-\frac{9}{2}x^2 - 18x - 6$ since  
\[
\frac{y^2}{6} + 9x^2 + 3xy + 2y = \frac{1}{6}(y + 9x + 6)^2 - \frac{9}{2}x^2 - 18x - 6.
\]
By the first equation, $-\frac{9}{2}x^2 - 18x - 36 = -9l_4 - 6$ so the two equations are feasible as long as $l_4 \geq -2$ and $l_6 - l_4^2 \geq -9l_4 - 6$. To show that these two inequalities hold for any even distribution $A$ with $\E_{a \sim A}[a^2] = 1$, we make the following observations:
\begin{enumerate}
\item $l_4 = \E_{a \sim A}[a^4 - 6a^2 + 3] = \E_{a \sim A}[a^4] - 3$
\item $l_6 = \E_{a \sim A}[a^6 - 15a^4 + 45a^2 - 15] = \E_{a \sim A}[a^6] - 15E_{a \sim A}[a^4] + 30$
\item $l_4^2 = \left(\E_{a \sim A}[a^4 - 6a^2 + 3]\right)^2 = (\E_{a \sim A}[a^4] - 3)^2 = (\E_{a \sim A}[a^4])^2 - 6E_{a \sim A}[a^4] + 9$
\item By Cauchy-Schwarz, $\left(\E_{a \sim A}[a^4]\right)^2 = \left(\E_{a \sim A}[a^3 \cdot a]\right)^2 \leq \E_{a \sim A}[a^6]E_{a \sim A}[a^2] = \E_{a \sim A}[a^6]$.
\end{enumerate}
Putting these observations together,
\[
l_6 - l_4^2 \geq -9E_{a \sim A}[a^4] + 21 = -9(l_4 + 3) + 21 = -9l_4 - 6,
\]
so we can find $x,y$ in the first approach so that $Z \SALDprod Z^{\top} = Q_{SS}$. 
\smallskip

For the second approach, we take $Z = \Id + x(a)S(2,2;1) + y(a)S(3,3;0)$. Similar to before,  we need to have that
\begin{enumerate}
\item $\E_{a \sim A}\left[\frac{x(a)^2}{2} + 2x(a)\right] = l_4$,
\item $\E_{a \sim A}\left[\frac{y(a)^2}{2} + 9x(a)^2 + 3x(a)y(a) + 2y(a)\right] = l_6 - l_4^2$.
\end{enumerate}
As before, we will take $x(a) = \sqrt{2}(a^2 - 1) - 2$. As for $y(a)$, we let  
\[
y(a) = {b_3}(a^3 - 3a) + {b_2}(a^2 - 1) + ({b'_1}l_4 + b_1)a + ({b'_0}l_4 + b_0)
\]
and determine the values of $b_3,b_2,b_1,b'_0,b_0$. The following computations are useful:
\begin{enumerate}
\item As we computed before, $\E_{a \sim A}[(a^2 - 1)(a^2 - 1)] = l_4 + 2$.
\item $\E_{a \sim A}[(a^3 - a)a] = \E_{a \sim A}[a^4 - 3a^2] = \E_{a \sim A}[(a^4 - 6a^2 + 3) + 3(a^2 - 1)] = l_4$
\item 
$\E_{a \sim A}[(a^3 - 3a)^2] = \E_{a \sim A}[a^6 - 6a^4 + 9a^2]= l_6 + 9l_4 + 6$
\end{enumerate}
We now compute the expected values of the products involving $x$ and/or $y$:
\begin{enumerate}
\item $\E_{a \sim A}[y(a)^2] = {b_3^2}(l_6 + 9l_4 + 6) + {b_2^2}(l_4 + 2) + 2b_3({b'_1}l_4 + b_1)l_4 + ({b'_1}l_4 + b_1)^2 + ({b'_0}l_4 + b_0)^2$
\item $\E_{a \sim A}[x(a)y(a)] = \sqrt{2}b_2(l_4 + 2) - 2({b'_0}l_4 + b_0)$
\item $\E_{a \sim A}[x(a)^2] = 2(l_4 + 2) + 4 = 2l_4 + 8$
\item $\E_{a \sim A}[y(a)] = ({b'_0}l_4 + b_0)$
\end{enumerate}
Recall that we want $\E_{a \sim A}\left[\frac{y(a)^2}{6} + 9x(a)^2 + 3x(a)y(a) + 2y(a)\right] = l_6 - l_4^2$. For this we observe: 
\begin{enumerate}
\item The coefficient of $l_6$ in $\E_{a \sim A}\left[\frac{y(a)^2}{6} + 9x(a)^2 + 3x(a)y(a) + 2y(a)\right]$ is $\frac{b_3^2}{6}$, so we let $b_3 = \pm{\sqrt{6}}$;
\item The coefficient of $l_4^2$ in $\E_{a \sim A}\left[\frac{y(a)^2}{6} + 9x(a)^2 + 3x(a)y(a) + 2y(a)\right]$ is $\frac{{b'_1}^2}{6} + \frac{{b_3}{b'_1}}{3} + \frac{{b'_0}^2}{6} = \frac{1}{6}(b'_1 + b_3)^2 + \frac{{b'_0}^2}{6} - 1$, which can only be $-1$ if $b'_1 = -\sqrt{6}$ and $b'_0 = 0$;
\item The constant coefficient in $\E_{a \sim A}\left[\frac{y(a)^2}{6} + 9x(a)^2 + 3x(a)y(a) + 2y(a)\right]$ is
\begin{align*}
b_3^2 + \frac{b_2^2}{3} + \frac{b_1^2}{6} + \frac{b_0^2}{6} + 72 + 6\sqrt{2}b_2 - 6b_0 + 2b_0 &= 6 + \frac{1}{3}(b_2 + 9\sqrt{2})^2 - 54 + \frac{1}{6}(b_0 - 12)^2 - 24 + 72 + \frac{b_1^2}{6}\\
&= \frac{1}{3}(b_2 + 9\sqrt{2})^2 + \frac{1}{6}(b_0 - 12)^2 + \frac{b_1^2}{6},
\end{align*}
which is only $0$ if $b_2 = -9\sqrt{2}$, $b_0 = 12$, and $b_1 = 0$.
\end{enumerate}
Thus, we let 
\[
y(a) = {\pm}\sqrt{6}(a^3 - 3a) -9\sqrt{2}(a^2 - 1) -\sqrt{6}{l_4}a + 12.
\]
With this $x(a)$ and $y(a)$, the coefficient of $l_4$ in $\E_{a \sim A}\left[\frac{y(a)^2}{6} + 9x(a)^2 + 3x(a)y(a) + 2y(a)\right]$ is
\[
9 + 27 - 54 + 18 = 0,
\]
as needed.

\begin{remark}\label{rmk:commutative}
In both approaches mentioned above, the ``solvability'' of the target object relies on the commutativity of products such as $S(3,3;0) \SALDprod S(2,2;1) =  S(2,2;1) \SALDprod S(3,3;0)$, which holds for simple spiders having an equal number of left and right legs. 
For general simple spiders, multiplication is non-commutative.
\end{remark}

\noindent{\bf Important reminder.} In the rest of the lower bound proof, namely from \Cref{sec:psdness-qSS} to \Cref{sec:error_analysis}, we always assume that the samples $x_1,\dots,x_m$ from $\R^n$ satisfies the event in \Cref{thm:norm_control}, which happens with probability $1-o_n(1)$. 
The remaining arguments involve no more probability.
	\newcommand{\rep}[3]{\rho\left(#1,#2;#3\right)}

\section{PSDness via Representation}
\label{sec:psdness-qSS}

The main result of this section is the following lemma. 
Recall the matrix $\SS{Q}$ from \Cref{def:QSS} and the quantities $\cu,\cl$ from \Cref{def:cucl}.

\begin{lemma}[Positive-definiteness of $\SS{Q}$ in $\SALD$]\label{lem:QSSPSD_informal}
Within the algebra $\SALD$, 
\begin{equation}\label{eq:QSSPSD}
  \SS{Q} \succeq (6\cu)^{-2\dsos}\cl^{-\dsos} \Id.
\end{equation} 
\end{lemma}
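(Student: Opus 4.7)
The plan starts from the closed-form identity $\SS{L} \SALDprod \SS{Q} \SALDprod \SS{L}^{\top} = \SS{L} + \SS{(Q_0)} + \SS{L}^{\top} - 2\,\Id$ established in \Cref{lem:LQLSSbehavior}. Since $\SALD$ is non-commutative, I would interpret ``positive-definiteness in $\SALD$'' via the Artin--Wedderburn decomposition $\SALD \simeq \bigoplus_{i} \mathbb{M}_{d_i}(\mathbb{R})$ (with each $d_i \leq \dsos+1$) foreshadowed in the overview, and require that each representation $\rho_i$ satisfy $\rho_i(\alpha^{\top}) = \rho_i(\alpha)^{\top}$ so that transposition in $\SALD$ matches matrix transposition in every block. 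The proof then reduces to (a) showing that the image of the right-hand side of the identity is PSD in every block with an explicit eigenvalue lower bound, and (b) dividing through by $\SS{L}$ and $\SS{L}^{\top}$, whose operator norms in each $\rho_i$ are controlled by $\cu$.

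For step (a), I would compute $\rho_i(\SS{L})$ and $\rho_i(\SS{(Q_0)})$ explicitly. The coefficient of a simple spider $S(a,b;u)$ in $\SS{L}$ is a scalar multiple of its Hermite coefficient $\eta_{S(a,b;u)} = \E_A[h_{a+b}]$ (and similarly for $\SS{(Q_0)}$). Summing over simple spiders, combined with the Hermite generating function identity $\sum_{k} h_k(a)\,t^k/k! = \exp(at - t^2/2)$, should allow me to rewrite $\rho_i(\SS{L})$ in the form $\mathbb{E}_{a\sim A}\bigl[v_i(a)\, c_i^{\top}\bigr]$ for an explicit polynomial-valued vector $v_i(a)$ of degree $\leq \dsos$ and a fixed vector $c_i$, and $\rho_i(\SS{(Q_0)}) = \mathbb{E}_{a\sim A}\bigl[v_i(a)\, v_i(a)^{\top}\bigr]$. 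Under this identification, $\rho_i(\SS{L} + \SS{(Q_0)} + \SS{L}^{\top} - 2\,\Id)$ collapses to the manifestly PSD expression
\[
\mathbb{E}_{a\sim A}\bigl[(v_i(a)-c_i)(v_i(a)-c_i)^{\top}\bigr],
\]
whose minimum eigenvalue is bounded below by $L_A(\dsos) \geq \cl^{-\dsos}$ via \Cref{def:cucl}, applied to the polynomial entries of $v_i(a) - c_i$ after checking that they span a $\dim(\rho_i)$-dimensional space of polynomials of degree $\leq \dsos$.

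For step (b), the trivial left shape contributes the identity to $\SS{L}$, so $\rho_i(\SS{L}) = \Id + Y$ where $\norm{Y} \leq (\poly(\dsos)\,\cu)^{\dsos}$ by the bound $|\E_A[h_t]|\leq \cu^t$. Absorbing constants, $\norm{\rho_i(\SS{L})} \leq (6\cu)^{\dsos}$, and inverting the identity gives $\rho_i(\SS{Q}) \succeq (6\cu)^{-2\dsos}\,\cl^{-\dsos}\,\Id$ in every block, hence $\SS{Q} \succeq (6\cu)^{-2\dsos}\,\cl^{-\dsos}\,\Id$ in $\SALD$ as desired. The main obstacle is step (a): verifying that the non-commutative combinatorics of simple-spider coefficients in $\SS{L}$, $\SS{(Q_0)}$, and $\SS{L}^{\top}$ reorganize under each $\rho_i$ into the clean rank-one-times-$a$-average form above. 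This is exactly the point where ``PSDness boils down to the multiplicative identities of Hermite polynomials'': the expansions $h_p(a) h_q(a) = \sum_r c_{pqr} h_r(a)$ have to match the combinatorial factors appearing in the $\SALDprod$-product of \Cref{def:SALD}. Additional care is needed because, as flagged in \Cref{rmk:commutative}, simple-spider multiplication is genuinely non-commutative, so the representations $\rho_i$ must be non-trivially multi-dimensional, and their explicit construction (the content of the referenced representations subsection) is a prerequisite for making the above calculation go through.
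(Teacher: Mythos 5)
Your high-level plan — apply the Artin--Wedderburn representation $\rho=\bigoplus_i\rho_i$ (which must be transpose-preserving), use the Hermite product formula to show $\rho(\hat{Q})$ is PSD with an eigenvalue lower bound coming from $L_A(\dsos)\geq \cl^{-\dsos}$, then invert $\rho(\SS{L})$ with a $(6\cu)^{\dsos}$ norm bound — is exactly what the paper does, and you correctly identify the two central mechanisms (the Hermite multiplicative identity, and the need for $\rho(\alpha^\top)=\rho(\alpha)^\top$, which the paper achieves by conjugating a preliminary representation by $\sqrt{C}$).

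However, the intermediate decomposition you propose in step (a) cannot hold as stated, and you are right to flag it as the main obstacle. You want $\rho_i(\SS{L})=\E_{a\sim A}[v_i(a)c_i^\top]$ for a \emph{fixed} $c_i$, but this is $(\E_A v_i)\,c_i^\top$, a rank-one matrix. By contrast, the actual $\rho_0(\SS{L})$ is lower-triangular with all diagonal entries equal to $1$ (only the trivial shapes $S(0,0;u)$, all with coefficient $1$, contribute to the diagonal), hence invertible and full rank. Similarly you want $\rho_i(\SS{(Q_0)})=\E_A[v_i v_i^\top]$, but every non-trivial simple spider in $\SS{(Q_0)}$ is a middle shape and therefore has equal left and right leg counts, so $\rho_0(\SS{(Q_0)})$ is a \emph{diagonal} matrix; $\E_A[v_iv_i^\top]$ has off-diagonal entries $\E_A[h_ih_j]/\sqrt{i!j!}$ which vanish for all $i\neq j$ only when $A=\cN(0,1)$. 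There is also an algebraic obstruction to the recombination: starting from your two posited forms, $\rho_i(\SS{L}+\SS{(Q_0)}+\SS{L}^\top-2\Id)$ does not equal $\E_A[(v_i-c_i)(v_i-c_i)^\top]$ unless $2\E_A[v_i]c_i^\top+2c_i\E_A[v_i]^\top - c_ic_i^\top=2\Id$, which is impossible since the left side has rank at most $2$ while the right side has rank $\dim(\rho_i)$.

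The fix is to abandon the separate factorizations and note that the four summands partition the simple spiders in such a way that $\hat{Q}=\SS{L}+\SS{(Q_0)}+\SS{L}^\top-2\Id$ has coefficient exactly $\E_A[h_{a+b}]$ on $S(a,b;u)$ regardless of which summand contributes. Combined with the Hermite product formula $H_i H_j=\sum_u \binom{i}{u}\binom{j}{u}u!2^u H_{i+j-2u}$ (equivalently the probabilist's version), this gives $\rho_0(\hat{Q})(i,j)=\E_A\!\left[\frac{h_i(x)h_j(x)}{\sqrt{i!j!}}\right]$, hence $\rho_0(\hat{Q})=\E_A[v_0v_0^\top]$ with $v_0(x)=(h_0(x)/\sqrt{0!},\dots,h_D(x)/\sqrt{D!})^\top$ and $c_i=0$ throughout. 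This is the ``manifestly PSD'' expression you were after, and it also delivers the $\cl^{-\dsos}$ eigenvalue lower bound directly: $v^\top\rho_0(\hat{Q})v=\E_A[p_v(x)^2]$ where $p_v$ is the degree-$\leq\dsos$ polynomial with Hermite coefficients $v_j/\sqrt{j!}$. The other blocks $\rho_i(\hat{Q})$ are leading principal submatrices of $\rho_0(\hat{Q})$ by consistency of $\hat Q$, so they inherit the bound. Your step (b) is then fine (the paper bounds $\norm{\rho_0(\SS{L})}_{Fr}\leq(6\cu)^\dsos$ and uses a Schur-complement / Frobenius-norm argument to get $\sigma_{\min}(\rho(\SS{L})^{-1})\geq(6\cu)^{-\dsos}$), and the two bounds multiply to give $(6\cu)^{-2\dsos}\cl^{-\dsos}$.
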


\Cref{lem:QSSPSD} is a more precise statement containing more technical conclusions needed later. 
\smallskip

The rest of this section is devoted to the proof of \Cref{lem:QSSPSD}, i.e., under mild conditions on the distribution $A$, the matrix $\SS{Q}$ is positive-definite in the simple spider algebra. In \Cref{subsec:overview} we give the high-level plan of the proof. In \Cref{subsec:rep} through \Cref{sec:rhoQhatPSD}, we define an algebra representation $\rho$ for the simple spider algebra and show the PSDness of $\rho(\SS{Q})$. Finally in \Cref{subsec:PDQhat}, we prove the positive-definiteness of $\rho(\SS{Q})$.

\subsection{Proof Overview for showing $M \succeq 0$}\label{subsec:overview}

The overall analysis of the PSDness of $M$ proceeds as follows. We approximate $M$ by $L{[Q]_{\wellbehaved}}L^T$ and show that the error terms are small. More precisely, we have the following equality
\begin{align*}
M &= \left(M - [L,Q_0,L^{\top}]_{\can}\right) + \left([L,Q_0,L^{\top}]_{\can} - LQL^{\top}\right) \\
&+L\left(Q - [Q]_{\wellbehaved}\right)L^{\top} + L[Q]_{\wellbehaved}L^{\top}
\end{align*}
\noindent To prove our lower bound, we need to show the following:

\begin{enumerate}
\item $[Q]_{\wellbehaved}$ is positive-definite. More precisely,  
$[Q]_{\wellbehaved} \succeq {\delta}\Id$ for some $\delta > 0$ (which will depend on $A$ and $D$ but not on $n$).
\item $\norm{Q - [Q]_{\wellbehaved}}$ is $n^{-\Omega(\eps/\dsos)}$, 
\item $\norm{M - [L,Q_0,L^{\top}]_{\can}}$ and $\norm{[L,Q_0,L^{\top}]_{\can} - LQL^{\top}}$ are both $n^{-\Omega(\eps\truncation)}$. 
\item $L$ is somewhat well-conditioned. More precisely, the smallest singular value of $L$ is at least $(5n)^{-D} \Id$ (\Cref{lem:LLT}).
\end{enumerate}

\noindent In this section we show that in the simple spider algebra, $Q_{SS}$ is positive-definite. We show this as follows:

\begin{enumerate}
\item We use the fact that $L_{SS} \SALDprod Q_{SS} \SALDprod L^{\top}_{SS} = L_{SS} + (Q_{0})_{SS} + L^{\top}_{SS} - 2\Id$ (\Cref{lem:LQLSSbehavior}). \label{step:explicitQSS}
\item Using a representation of the simple spider algebra, we show that $L_{SS} + (Q_{0})_{SS} + L^{\top}_{SS} - 2\Id$ is positive-definite in the simple spider algebra. \label{step:rep}
\end{enumerate}

In the next section, we use additional ideas to deal with disjoint unions of simple spiders in order to show that $[Q]_{\wellbehaved}$ is positive-definite.

\subsection{Useful Notions in the Study of Simple Spiders}

Recall that $S(i,j;u)$ denotes the scaled simple spider shape $\alpha$ with $i$ weight-1 edges to the left, $j$ weight-1 edges to the right, and $u$ intersections. 

\begin{definition}[Default expansion]
    Given $\alpha\in\SALD$, we use $c_{\alpha}(i,j;u)$ to denote the coefficient of $S(i,j;u)$ in $\alpha$. We will sometimes write $\alpha\!=\!\Sum_S c_\alpha(S)S$ or simply $\Sum_{i}c_iS_i$ when there is no confusion. 
\end{definition}

\begin{definition}[Good simple spiders and disjoint unions]\label{def:good}
    We call $S(i,j;u)$ {\bf good} if it either has a trivial shape, i.e., $i=j=0$, or its unique circle vertex has at least $\ceil{k/2}$ edges to both sides, i.e., $i,j \geq \ceil{k/2}$. 
    We call a simple spider disjoint union good if all of its components are good simple spiders. An SS- or SSD-linear combination is good if all nonzero summands are good.
\end{definition}

\begin{definition}[Consistent simple spiders]\label{def:consistent}
A simple spider linear combination $\alpha=\sum_{i} c_i S_i$ in $\SALD$ is {\bf consistent} if $c_i=c_j$ whenever $S_i,S_j$ differ only by their intersection sizes and the coefficient of $S(0,0;0)$ is 1.
\end{definition}

Note that if $\alpha\in\SALD$ is consistent, then every trivial shape with at most $\dsos$ vertices has coefficient 1 in $\alpha$.

\begin{definition}[$\linfty{\cdot}$-norm on $\SALD$ and SSD combinations]
\label{def:linfty}
For $\beta=\sum_{S} c_\beta(S) S\in\SALD$, we let $\linfty{\beta}:=\max_i\{|c_i|\}$. Similarly, if $\beta=\sum_{S} c_\beta(S) S$ where each $S$ is a scaled SSD shape (i.e., $S=\lambda_\alpha M_{\alpha}$ for some SSD shape $\alpha$), we let $\linfty{\beta}:=\max_i\{|c_i|\}$.
\end{definition}

When viewing an element of $\SALD$ as a linear combination of scaled SS graph matrices, the $\linfty{\cdot}$ norm corresponds to the maximal coefficient of the scaled SS shapes.

\begin{remark}[Algebra $\SALD$ and its dimension]\label{rmk:SALD}
Recall that the $\SALDprod$-product is associative by \Cref{lem:wbp-associative}, and there is a unit element $\sum_{i=0}^D S(i,i;0)$ in $\SALD$, so $\SALD$ is indeed an $\R$-algebra. (Another way to see this is via the representation \Cref{lem:iso} below.) The dimension $\dim_{\R}(\SALD)$ is the number of different simple spiders with indices size at most $D$. To count them, note that for fixed $|U_\alpha\cap V_\alpha|=i\in[0,D]$, both the left and right degree of the circle vertex in a simple spider can range in $[0,D-i]$, where if both degrees are $0$ it means that the shape is trivial. Thus, there are $\sum_{i=0}^D (D-i+1)^2=\sum_{i=1}^{D+1}i^2$ many simple spiders.
\end{remark}

\begin{remark}(Good simple spiders)\label{rmk:good}
    The algebra $\SALD$ approximates the simple spider part in the result of matrix multiplication of \emph{good} simple spiders. This follows from \Cref{lem:Smultiplication} together with \Cref{lemma:reductioneffect} and \Cref{lem:alphazeroerror} which we will show later.
\end{remark}

Observe that in the canonical decomposition of $M$, both $\SS{(Q_0)}$ and $\SS{L}$ are consistent. Moreover, $\SS{(Q_0)}$ is left, right, and good. The goodness comes from the assumptions that $\E_A[h_i]=0$ for $i=1,\dots,(k-1)$ and that all shapes in $Q_0$ are middle shapes. Note that $\SS{L}$ is left and $\SS{L}^{\top}$ is right, but they are not good.

\subsection{Representation of the Simple Spider Algebra}\label{subsec:rep}
In this subsection, we construct our representation $\rho$ of the simple spider algebra. First, we define a linear map $\rhopre$ from $S(k_1-u,k_2-u;u)$ to $\left(\sum_{i=0}^D(i+1)\right)$-by-$\left(\sum_{i=0}^D(i+1)\right)$ real matrices and show that it gives an algebra isomorphism. 
\begin{equation}\label{eq:rep}
\text{\bf Notation: we write }\reppre{k_1}{k_2}{u}\text{\bf for }\rhopre\big( S(k_1-u,k_2-u;u) \big).
\end{equation}
This notation is convenient for us since the image of $\rhopre(-)$ are block matrices and the block $(k_1,k_2)$ is where $\rhopre(S(k_1-u,k_2-u;u))$ is supported. The subscript means `preparatory', for reasons that will be clear soon.

\begin{definition}[Representation $\rhopre$]\label{def:rhopre}
For $k_1,k_2\in[0,D]$ and $0\leq u\leq\min\{k_1,k_2\}$, we define $\reppre{k_1}{k_2}{u}=\rho(S(k_1-u,k_2-u;u))$ to be a matrix on blocks $\{B(a,b)|0\leq a,b\leq D\}$ where $B(a,b)$ has dimension $[0,a]\times [0,b]$. The matrix $\reppre{k_1}{k_2}{u}$ is 0 outside $B(k_1,k_2)$, and within $B(k_1,k_2)$ it is as follows. For each $(i,j)\in[0,k_1]\times[0,k_2]$,
\begin{equation}\label{eq:rhopre}
\reppre{k_1}{k_2}{u}\upharpoonright_{B(k_1,k_2)}(i,j):=
\begin{cases*}
0 &if $j-i\neq k_2-k_1$ or $i<k_1-u$;\\
\binom{i}{k_1-u}/(k_2-u)!& o.w.
\end{cases*}
\end{equation}
In other words, $\reppre{k_1}{k_2}{u}$ is a matrix supported on block $B(k_1,k_2)$ with all the nonzero entries placed `diagonally up from the bottom-right'. We let $\rhopre$ be defined over $\SALD$ by linear extension.
\end{definition}

The key point is that $\rhopre$ gives an $\R$-algebra isomorphism, as shown in \Cref{app:psdness-qSS}.

\begin{lemma}\label{lem:iso}
For any fixed $D$, $\rhopre:\ \SALD\to\rhopre(\SALD)$ is an $\R$-algebra isomorphism.
\end{lemma}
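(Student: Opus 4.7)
The plan is to verify the three algebra-homomorphism axioms—linearity, unit preservation, and multiplicativity—together with bijectivity onto $\rhopre(\SALD)$. Linearity is immediate since $\rhopre$ is defined by linear extension from the basis $\{S(k_1-u,k_2-u;u)\}$. For injectivity I would use triangularity: matrices $\reppre{k_1}{k_2}{u}$ attached to distinct pairs $(k_1,k_2)$ are supported on distinct blocks $B(k_1,k_2)$ and cannot interfere, while inside a fixed block the top-leftmost nonzero entry of $\reppre{k_1}{k_2}{u}$ sits at position $(k_1-u,k_2-u)$ and vanishes in every $\reppre{k_1}{k_2}{u'}$ with $u' < u$, forcing linear independence. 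For unit preservation, a direct computation from the $\SALDprod$-formula shows that $\mathbf{1}_{\SALD} = \sum_{u=0}^{\dsos} S(0,0;u)$, and $\rhopre$ sends this element to the identity on each diagonal block $B(u,u)$, which is the unit of $\rhopre(\SALD)$.

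The heart of the argument is multiplicativity:
\[
\reppre{k_1}{k_2}{u}\cdot\reppre{k_2'}{k_3}{v} \;=\; \rhopre\!\Bigl(S(k_1-u,k_2-u;u) \SALDprod S(k_2'-v,k_3-v;v)\Bigr).
\]
If $k_2 \neq k_2'$ the block structure forces the left-hand side to vanish while non-composability in \Cref{def:SALD} forces the right-hand side to vanish, so the content is the case $k_2 = k_2'$. Then both sides live in block $B(k_1,k_3)$ and are supported on the diagonal $j - i = k_3 - k_1$. The sparsity of $\reppre{k_1}{k_2}{u}$ (one nonzero per row and column of its block) collapses the matrix product on the left to a single term, namely $\binom{i}{k_1-u}\binom{i+k_2-k_1}{k_2-v}\big/\bigl((k_2-u)!(k_3-v)!\bigr)$, valid whenever $i \geq k_1 - \min\{u,v\}$. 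Expanding the right-hand side using \Cref{def:SALD} and \eqref{eq:rhopre} produces a sum over the spider-product index $s$.

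After cancelling factorials and substituting $r := k_1 - i$, $a := u-r$, $b := v-r$, $c := k_2+r-u-v$ (so that $a+b+c = k_2-r$, $a+c = k_2-v$, $b+c = k_2-u$), the equation to verify reduces to
\[
\sum_{t\geq 0} \binom{a+b+c}{a-t,\;b-t,\;c+t,\;t} \;=\; \binom{a+b+c}{a}\binom{a+b+c}{b},
\]
which is the classical Vandermonde--Chu identity: both sides count pairs of subsets $(A,B)$ of an $(a+b+c)$-element universe with $|A|=a$ and $|B|=b$, stratified on the right by the intersection size $t = |A\cap B|$ (so $|A\setminus B| = a-t$, $|B\setminus A| = b-t$, and $|(A\cup B)^c| = c+t$). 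The main obstacle is not deep combinatorics but careful bookkeeping: the spider product formula in \Cref{def:SALD} and the definition of $\rhopre$ together carry six indices, and finding the right change of variable to expose the Vandermonde--Chu structure is the only nontrivial step; the remainder of the verification is mechanical.
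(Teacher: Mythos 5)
Your proof is correct and follows essentially the same route as the paper: verifying the homomorphism property by collapsing the block-matrix product to a single term and reducing multiplicativity to a Vandermonde-type binomial identity proved by double counting, and establishing injectivity via a triangularity argument on the positions of the first nonzero diagonal entries within each block. The only surface differences are that you also note unit preservation explicitly and write the Vandermonde identity in the multinomial-stratified-by-intersection form rather than the paper's two-binomial convolution form (your change of variable $t := s - r$, with $s$ the spider-product index and $r := k_1 - i$, does correctly expose this structure); the two reductions are equivalent.
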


Note that in general, $\rhopre(S^{\top})\neq\rhopre(S)^{\top}$. They in fact differ by a conjugation by a diagonal matrix, which we denote by $C$:

\begin{definition} [Rescaling matrix] 
Let the diagonal matrix $C$ be    
\begin{equation}\label{eq:rescaleC}
C\upharpoonright_{B(i,i)}(a,a)=a!\ \text{if $0\leq a\leq i\leq D$ and $0$ otherwise.}
\end{equation}
Let $\sqrt{C}$ be the diagonal matrix that takes the positive square roots of $C$'s entries. 
\end{definition}

\begin{proposition}\label{prop:transpose}
\[\rhopre(S)^{\top}=C^{-1}\rhopre(S^{\top})C\]
\end{proposition}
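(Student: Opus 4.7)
Since both sides of the claimed identity are $\R$-linear in $S$, the plan is to verify the identity on the basis elements $S = S(k_1-u, k_2-u; u)$ and extend by linearity. Setting $S^{\top} = S(k_2-u, k_1-u; u)$, one sees that $\rhopre(S^{\top})$ is supported on block $B(k_2,k_1)$, while $\rhopre(S)$ is supported on $B(k_1,k_2)$, so $\rhopre(S)^{\top}$ is also supported on $B(k_2,k_1)$. Thus both sides of the claimed equation live in the same block, and the only content of the proposition is a comparison of their entries.

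First I would compute the two sides within $B(k_2,k_1)$ directly from \eqref{eq:rhopre}. For $\rhopre(S)^{\top}$, transposition gives the entry at position $(i',j')$ as $\rhopre(S)(j',i')$, which by \eqref{eq:rhopre} equals $\binom{j'}{k_1-u}/(k_2-u)!$ when $i' - j' = k_2 - k_1$ and $j' \geq k_1 - u$, and is zero otherwise. For $\rhopre(S^{\top}) = \rhopre(S(k_2-u, k_1-u; u))$, applying \eqref{eq:rhopre} with the roles of $k_1$ and $k_2$ swapped gives $\binom{i'}{k_2-u}/(k_1-u)!$ when $j' - i' = k_1 - k_2$ and $i' \geq k_2 - u$, and zero otherwise. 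The two support conditions agree because $i' - j' = k_2 - k_1$ combined with $j' \geq k_1 - u$ is equivalent to $i' \geq k_2 - u$.

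Next I would incorporate the conjugation. Since $C$ is diagonal with entry $a!$ at row/column index $(k,a)$ (for any $k$), for any matrix $A$ one has $(C^{-1} A C)((k,a), (k',a')) = (a'!/a!) \cdot A((k,a), (k',a'))$. Applied to $A = \rhopre(S^{\top})$ at position $(i',j')$ in block $B(k_2,k_1)$, this yields the scalar factor $j'!/i'!$ in front of $\binom{i'}{k_2-u}/(k_1-u)!$. The remaining step is the elementary identity
\[
\frac{j'!}{i'!}\cdot\frac{\binom{i'}{k_2-u}}{(k_1-u)!} \;=\; \frac{j'!}{(k_2-u)!\,(k_1-u)!\,(i' - k_2 + u)!} \;=\; \frac{\binom{j'}{k_1-u}}{(k_2-u)!},
\]
where in the last equality I use $i' - (k_2-u) = j' - (k_1-u)$, which holds by $i'-j' = k_2-k_1$. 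This is exactly the entry of $\rhopre(S)^{\top}$, completing the verification.

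This proof is essentially a bookkeeping exercise, so I do not anticipate a real obstacle; the only place to be careful is to track that the support conditions on the two sides truly coincide and to match up the combinatorial identity using the relation $i' - k_2 = j' - k_1$ coming from the diagonal support pattern.
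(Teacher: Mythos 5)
Your proposal is correct and takes essentially the same route as the paper's proof: both verify the identity entrywise on the basis elements $S(k_1-u,k_2-u;u)$ using \eqref{eq:rhopre}, unwind the diagonal conjugation by $C$ into the scalar factor $j'!/i'!$, and close via the same binomial manipulation (your $i'-(k_2-u)=j'-(k_1-u)$ is the paper's $(a+j-i)-j=a-i$ in its slightly different parametrization). Your write-up is a bit more careful than the paper's in explicitly noting that the support conditions on the two sides coincide, which is a small but genuine improvement in clarity.
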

\begin{proof}
Suppose $S=S(i,j;u)$. By definition \eqref{eq:rhopre}, $\rhopre(S)\upharpoonright_{B(i,j)}(a,a+j-i)=\binom{a}{i}/j!$ so 
\[\rhopre(S)^{\top}\upharpoonright_{B(j,i)}(a+j-i,a)=\binom{a}{i}/j!.\] 
On the other hand, $\rhopre(S^{\top})\upharpoonright_{B(j,i)}(a+j-i,a)=\binom{a+j-i}{j}/i!$ and $C$ is diagonal, so 
\[C^{-1}\rhopre(S^{\top})C\upharpoonright_{B(j,i)}(a+j-i,a)=1/(a+j-i)!\cdot\binom{a+j-i}{j}/i!\cdot a!=\binom{a}{i}j!,\]
assuming the binomials are defined (otherwise they are zero). This proves the proposition.
\end{proof}

We now give our actual representation of the simple spider algebra, which is $\rhopre$ composed with a conjugation of $\sqrt{C}$.

\begin{definition}[Representation $\rho$]\label{def:rho} 
The $\R$-algebra isomorphism $\rho: \SALD\to\im(\rho)\subseteq \mathbb M_{D(D+1)/2}(\R)$ is the composition of $\text{$\sqrt{C}$-conjugation}$ and $\rhopre$, i.e., $\rho(X)=\sqrt{C}^{-1}\rhopre(X)\sqrt{C}$. Entry-wise, we again denote $\rho\left(k_1,k_2;u\right):=\rho\left(S(k_1-u,k_2-u;u)\right)$, then
\begin{equation}\label{eq:rho}
\rho\Big(k_1,k_2;u\Big)\upharpoonright_{B(k_1,k_2)}(i,j):=
\begin{cases*}
0 &if $j-i\neq k_2-k_1$ or $i<k_1-u$;\\
{\sqrt{i!j!}\over (k_1-u)!(k_2-u)!\big(u-(k_1-i)\big)!}& o.w.
\end{cases*}
\end{equation}
\end{definition}

By \Cref{prop:transpose} we have that $\rho$ preserves the transpose:
\begin{equation}\label{eq:Cstar}
\rho(S)^{\top}=\rho(S^{\top})\quad\forall S\in\SALD.
\end{equation}
In other words, we have:
\begin{lemma}[$\rho$ preserves sum-of-squares]\label{lem:transfer}
If $\rho(Y)=\sum_i Y_iY_i^{\top}$ such that $Y_i\in\im(\rho)$ for each $i$, then we have $Y=\sum_i Z_i\star Z_i^{\top}$ where $Z_i:=\rho^{-1}(Y_i)$.
\end{lemma}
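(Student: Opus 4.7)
The plan is to chain together three already-established properties of $\rho$: it is an $\R$-algebra isomorphism onto its image (\Cref{lem:iso} for $\rhopre$, transported along the $\sqrt{C}$-conjugation), it preserves the transpose in the sense of \Cref{eq:Cstar}, i.e.\ $\rho(S^\top)=\rho(S)^\top$, and (being an isomorphism) it is injective and linear. Given these, the statement reduces to a purely formal manipulation, with no combinatorial or analytic content.

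Concretely, I would first set $Z_i:=\rho^{-1}(Y_i)$, which makes sense because the hypothesis guarantees $Y_i\in\im(\rho)$. Then $Y_i=\rho(Z_i)$ and, by \Cref{eq:Cstar}, $Y_i^\top=\rho(Z_i)^\top=\rho(Z_i^\top)$. Applying the multiplicativity of $\rho$ termwise yields
\[
\rho(Y) \;=\; \sum_i Y_iY_i^\top \;=\; \sum_i \rho(Z_i)\,\rho(Z_i^\top) \;=\; \sum_i \rho\!\left(Z_i\star Z_i^\top\right).
\]
Using $\R$-linearity of $\rho$ on the right-hand side and then invoking injectivity of $\rho$ (since it is an isomorphism onto its image) gives $Y=\sum_i Z_i\star Z_i^\top$, which is exactly what we want.

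There is no real obstacle here: the lemma is essentially the observation that an algebra isomorphism which intertwines the two transpose operations automatically intertwines sum-of-squares decompositions. The only thing to be a bit careful about is the distinction between $\rhopre$ and $\rho$: the identity $\rho(S^\top)=\rho(S)^\top$ fails for $\rhopre$ (by \Cref{prop:transpose} the two differ by conjugation by $C$), which is precisely why $\rho$ was defined as the $\sqrt{C}$-conjugate of $\rhopre$. Once this is recalled, the proof is a two-line calculation.
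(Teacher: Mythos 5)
Your argument is correct and matches the paper's intent: the paper states this lemma as an immediate consequence of $\rho$ being an algebra isomorphism (\Cref{lem:iso} plus conjugation by $\sqrt{C}$) together with the transpose-compatibility \eqref{eq:Cstar}, which is exactly the chain you spell out. Your remark distinguishing $\rho$ from $\rhopre$ is also on point and reflects why the paper introduces $\rho$ in the first place.
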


Therefore, we can translate sums-of-squares expressions in $\im(\rho)$ to ones in $\SALD$. 
For this reason, we will be working with $\rho$ and only call $\rhopre$ to assist with computations.  

\subsection{Structure of the Representation $\rho$}
\label{subsec:repproperties}
In this subsection, we give an explicit decomposition of the algebra $\SALD\simeq\rho(\SALD)$ into matrix algebras which will be very useful. Recall that $\mathbb M_i(\R)$ denotes the algebra of all $i$-by-$i$ real matrices. 

\begin{lemma}[Structure of $\im(\rho)$]\label{lem:directsum}
We have an algebra isomorphism
\begin{equation}\label{eq:directsumabstract}
\rho(\SALD)\simeq\bigoplus_{i=1}^{D+1} \mathbb M_{i}(\R)
\end{equation}
constructed as follows. 
Recall that $\rho(\SALD)$ consists of block matrices on blocks $\{B(i,j)\mid 0\leq i,j\leq D\}$ where $B(i,j)$ has dimension $(i+1)\times(j+1)$. Let 
\begin{equation}\label{eq:Ti}
e_i\ (i=0,1,\dots,D) 
\end{equation}
be the diagonal $\{0,1\}$-matrix with 1s on $\{(b-i,b-i)\mid b\geq i\}$ in $B(b,b)$ for all $b$ in $[0,D]$. Then 
\begin{equation}\label{eq:projector}
e_i(-)e_i:\ \im(\rho)\to\mathbb M_{D-i+1}(\R)
\end{equation}
is the projector to the subalgebra $\mathbb M_{D-i+1}(\R)$ in \eqref{eq:directsumabstract}.
\end{lemma}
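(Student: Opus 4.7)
The plan is to realize $\rho(\SALD)$ as an algebra of operators on an explicit module, exhibit an invariant decomposition of that module, and then conclude by a dimension count. Concretely, let $V=\bigoplus_{b=0}^D\mathbb{R}^{b+1}$ with basis $\{|j;b\rangle : 0\le j\le b\le D\}$, where $|j;b\rangle$ denotes the $j$-th basis vector in the $b$-th summand. Every matrix in $\rho(\SALD)$ acts naturally on $V$: an element supported on block $B(k_1,k_2)$ sends the $k_2$-th summand of $V$ into the $k_1$-th summand. For each $\ell\in\{0,1,\dots,D\}$, define
\[
V_\ell := \mathrm{span}\bigl\{\,|b-\ell;b\rangle \;:\; \ell\le b\le D\,\bigr\},
\]
so that $\dim V_\ell = D-\ell+1$ and $V=\bigoplus_{\ell=0}^D V_\ell$ is an orthogonal direct-sum decomposition. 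The idempotent $e_\ell$ of \eqref{eq:Ti} is then precisely the orthogonal projector $V\to V_\ell$.

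The key step is to check invariance: I would read off from formula \eqref{eq:rho} that $\rho(k_1,k_2;u)$ sends $|j;k_2\rangle$ either to zero or to a scalar multiple of $|j+k_1-k_2;k_1\rangle$ (the constraint $j-i=k_2-k_1$ on the nonzero entries becomes the relation $i=j+k_1-k_2$ in the basis). Now $|j;k_2\rangle \in V_\ell$ iff $\ell = k_2-j$, while the output basis vector lies in $V_{k_1 - (j+k_1-k_2)} = V_{k_2-j} = V_\ell$. Hence every generator $\rho(k_1,k_2;u)$ preserves each $V_\ell$, and so
\[
\rho(\SALD)\;\subseteq\;\bigoplus_{\ell=0}^D \mathrm{End}_{\mathbb R}(V_\ell)\;\cong\;\bigoplus_{\ell=0}^D \mathbb M_{D-\ell+1}(\mathbb R).
\]

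To conclude, I would compare dimensions. By \Cref{rmk:SALD} and \Cref{lem:iso}, $\dim_{\mathbb R}\rho(\SALD)=\dim_{\mathbb R}\SALD=\sum_{i=1}^{D+1} i^2=\sum_{\ell=0}^{D}(D-\ell+1)^2$, which exactly equals the dimension of the right-hand side. Therefore the inclusion above is an equality of $\mathbb R$-algebras, giving the isomorphism \eqref{eq:directsumabstract}. As a consequence, each $e_\ell$ lies in $\rho(\SALD)=\im(\rho)$, and the map $X\mapsto e_\ell X e_\ell$ is exactly the projection onto the $\mathbb M_{D-\ell+1}(\mathbb R)$ summand, as claimed in \eqref{eq:projector}. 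The only nontrivial ingredient is the invariance verification, which is a routine but index-heavy manipulation of \eqref{eq:rho}; everything else is either bookkeeping or the dimension count borrowed from \Cref{rmk:SALD} and \Cref{lem:iso}.
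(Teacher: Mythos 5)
Your proof is correct and is essentially the same argument as the paper's, merely phrased in terms of invariant subspaces $V_\ell$ rather than matrix-position supports $\supp_\ell$: your invariance check that $\rho(k_1,k_2;u)$ preserves $V_\ell$ is the same observation as the paper's statement that $\supp(X) \subseteq \coprod_i \supp_i$ and that $e_i(-)e_i$ is a multiplicative projection, and both arguments conclude with the identical dimension count against $\sum_{i=1}^{D+1}i^2$.
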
 
\begin{proof}
Given $X\in\im(\rho)$, note that the operator $e_i(-)e_i$ takes the submatrix supported on 
\begin{equation}\label{eq:suppi}
\supp_i:=\{(a-i,b-i)\text{ of }B(a,b)\mid a,b\in[i,D]\}
\end{equation}
and pads 0s elsewhere. Thus $e_i(-)e_i$ is idempotent and linear, and it follows by inspection that $e_i(X\cdot Y)e_i=e_i(X)e_i\cdot e_i(Y)e_i$ for all $X,Y\in\im(\rho)$. In other words, $e_i(-)e_i$ an algebra projection. For $i\neq j$, $\supp_i$ and $\supp_j$ are disjoint so $e_je_i(-)e_ie_j=0$, so these projectors are orthogonal to each other. Now if $X\in\im(\rho)$, by \eqref{eq:rho}, 
\begin{equation}\label{eq:rhosupport}
\begin{aligned}
\supp(X)&\subseteq\ \{(i,j)\text{ of block }B(a,b)\mid 0\leq i\leq a\leq D, 0\leq j\leq b\leq D, j-i=b-a\}\\
&=\coprod_{i=0}^D \{(a-i,b-i)\text{ of }B(a,b)\mid a,b\in[i,D]\}=\coprod_{i=0}^D \supp_i,
\end{aligned}
\end{equation}
so $X=e_0Xe_0+\dots + e_DXe_D$, i.e., $\Sum_{i=0}^D e_i(-)e_i$ is the identity map. 
Together, this means that 
\begin{equation}\label{eq:directsum}
\im(\rho)=\bigoplus_{i=0}^De_i(\im(\rho))e_i\ \text{as $\R$-algebras.}
\end{equation}
Finally, counting the dimension, 
\[\Sum_{i=0}^D\dim_{\R}\left(e_i(\im(\rho))e_i^{\top}\right)\stackrel{\text{\Cref{lem:iso}}}{=}\dim_{\R}(\SALD)=1^2+\dots+(D+1)^2\] 
which equals $\Sum_{i=0}^D\dim_{\R}\mathbb M_i(\R)$. Therefore, each $e_i(\im(\rho))e_i$ is the full matrix algebra $\mathbb M_{D-i+1}(\R)$.
\end{proof}

\begin{remark}[Why call it a representation]\label{rmk:rep}
The decomposition \eqref{eq:directsum} can be read as ``$\rho$ contains the information of all irreducible representations of the algebra'', for the following reason. First, each $\pi_i(-):=e_i\rho(-)e_i$ is an irreducible representation of $\SALD$ as the image is $\mathbb M_i(\R)$. Second, given \eqref{eq:directsum}, it is not hard to show that every finite-dimensional representation $\psi$ of $\SALD$ can be decomposed into $\psi_1\oplus\ldots\oplus\psi_t$, where each $\psi_i:=\psi\circ\left(\rho^{-1}(e_i)\star(-)\star\rho^{-1}(e_i)\right)$ is either the 0 map or a direct sum of representations that are isomorphic to $\pi_i$.
\end{remark}

\begin{remark}\label{rmk:W-A}
By the Wedderburn-Artin theorem, every finite-dimensional semisimple algebra is isomorphic to a direct sum of matrix algebras.\footnote{Strictly speaking, the semisimplicity of $\SALD$ is not immediately clear a priori, and the theorem does not completely specify the dimensions of the matrix subalgebras nor their underlying division algebras ($\R$, $\mathbb C$, or the quaternions).} Here, however, we emphasize that we need more than an abstract or `purely computational' existence of such a map. 
We need a concrete construction that helps analyze whether special elements in the algebra like $\hat{Q} := L_{SS} + L^{\top}_{SS} + (Q_0)_{SS} - 2\Id$ are sums of squares.   
\end{remark}

\begin{definition}[Components of $\rho$]\label{def:Mi}
For $\alpha\in\SALD$, we use $\rho_i(\alpha)$ to denote the $e_i(\rho(\alpha))e_i$ component in the direct sum \eqref{eq:directsum}, and we express the direct-sum decomposition of $\rho$ as 
\begin{equation}\label{eq:rho_i}
    \rho=\bigoplus_{i=0}^{\dsos}\rho_i.
\end{equation}
In other words, $\rho_i(\alpha)(a,b)$ is entry $\rho(\alpha)(a,b)$ in block $(a+i,b+i)$. We will view each $\rho_i(\alpha)$ as a $(D-i+1)$-by-$(D-i+1)$ matrix. Similarly, we view the rescaling matrix $C$ as a $(D-i+1)\times(D-i+1)$ matrix with entry $a!$ at position $(a,a)$ ($0\leq a\leq D-i$) when we multiply it with matrices in $\im(\rho_i)$.
\end{definition}

We record some more properties of the representation $\rho$. 

\begin{proposition}\label{prop:rhoproperties}
Suppose $\alpha\in\SALD$ has expression $\alpha=\sum_{S(a,b;u)} c_\alpha(a,b;u)S(a,b;u)$.
\begin{enumerate}
    \item (Expression of $\rho_i$)
    \label{item:rhoialpha}
    For any $i\in[0,\dsos]$, if we view $\rho_i(\alpha)$ as a $(D-i+1)\times(D-i+1)$ matrix on $\supp_i$ \eqref{eq:suppi}, then 
    \begin{equation}\label{eq:rhoialpha}
        \rho_i(\alpha)(a,b):=\sum_{u=0}^{\min\{a,b\}}\binom{a}{u}\binom{b}{u}\frac{u!}{\sqrt{a!b!}} c_\alpha(a-u,b-u;u+i).
    \end{equation}

    \item \label{item:consistent}
    We say that $\rho(\alpha)$ satisfies the \emph{leading principal submatrix condition (lpc)} if 
    its top-left element is 1 and for all $i$, $\rho_{i+1}(\alpha)$ is a leading principal submatrix of
    $\rho_i(\alpha)$. 
    Then $\rho(\alpha)$ satisfies lpc if and only if $\alpha$ is consistent (\Cref{def:consistent}).

    \item (Inverse of $\rho$)\label{item:inverse}
    Suppose $\alpha\in\SALD$ is consistent. 
    For each $t\in[0,\dsos]$, if we denote 
    \[v_t(i):=\rho_0(\alpha)(i+t,i),\ \  w_t(i):=c_{\alpha}(i+t,i;0)\] 
    which are vectors in $\R^{D-t+1}$, then $v_t=H_tw_t$ where $H_t$ is $[0,\dsos-t]\times[0,\dsos-t]$ lower-triangular:  
    \begin{equation}\label{eq:Ht}
    H_t(i,j)=\begin{cases*}
    0, & if $i<j$;\\
    \frac{\sqrt{i!(i+t)!}}{(i-j)!j!(j+t)!} & otherwise.
    \end{cases*}
    \end{equation}
    Inversely, $w_t=H_t^{-1}v_t$, where $\abs{H_t^{-1}(i,j)}\leq 2^{i-j}$.
\end{enumerate}
\end{proposition}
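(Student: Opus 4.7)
The three parts admit fairly independent treatments, all driven by Definitions~\ref{def:rhopre} and \ref{def:rho}. I will handle them in order, with Part 3's inverse bound being the delicate point.

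For Part~\ref{item:rhoialpha}, my plan is a direct unfolding of the representation. By Definition~\ref{def:rho}, $\rho(S(k_1-u,k_2-u;u))$ is supported in block $B(k_1,k_2)$ with entry at position $(i,j)$ given by \eqref{eq:rho}. Extracting the $e_i(-)e_i$ component as in \Cref{def:Mi} identifies $\rho_i(\alpha)(a,b)$ with the entry at position $(a,b)$ of block $B(a+i,b+i)$. Only the simple spiders $S(k_1-u,k_2-u;u)$ with $k_1=a+i$, $k_2=b+i$ contribute, and the nonvanishing condition $i\geq k_1-u$ becomes $u\geq i$. Writing $u=u'+i$ with $0\le u'\le\min(a,b)$, the entry from \eqref{eq:rho} becomes $\sqrt{a!b!}/\bigl((a-u')!(b-u')!u'!\bigr)$; rewriting this as $\binom{a}{u'}\binom{b}{u'}\,u'!/\sqrt{a!b!}$ yields precisely \eqref{eq:rhoialpha} after summing over the coefficients $c_\alpha(a-u',b-u';u'+i)$.

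For Part~\ref{item:consistent}, I plug in the formula from Part~\ref{item:rhoialpha}. The top-left entry $\rho_0(\alpha)(0,0)=c_\alpha(0,0;0)$, so this equals $1$ iff the ``trivial shape has coefficient $1$'' clause of consistency holds. For the nested principal-submatrix condition, I would form the difference
\[
\rho_i(\alpha)(a,b)-\rho_{i+1}(\alpha)(a,b) \;=\; \sum_{u=0}^{\min(a,b)}\binom{a}{u}\binom{b}{u}\frac{u!}{\sqrt{a!b!}}\,\bigl[c_\alpha(a-u,b-u;u+i)-c_\alpha(a-u,b-u;u+i+1)\bigr],
\]
and observe that, viewed in the basis of $c_\alpha(\cdot,\cdot;\cdot)$, this is an invertible (lower-triangular, by induction on $\min(a,b)$) linear system. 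Hence vanishing for all $a,b$ is equivalent to $c_\alpha(a',b';u)=c_\alpha(a',b';u+1)$ whenever both sides are defined, i.e. intersection-independence, which together with $c_\alpha(0,0;0)=1$ is exactly consistency.

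For Part~\ref{item:inverse}, consistency lets me replace $c_\alpha(i+t-u,i-u;u)$ by $w_t(i-u)$ in \eqref{eq:rhoialpha} taken at $i=0$, $a=i+t$, $b=i$; changing variables $j=i-u$ and rearranging factorials gives exactly $v_t=H_t w_t$ with $H_t$ as stated. The main work is the inverse bound. My plan is to factor $H_t$ through its binomial skeleton: write $H_t=D_L\,B\,D_R$ where $D_L, D_R$ are diagonal factorial-weight matrices and $B(i,j)=1/(i-j)!$ for $i\ge j$ is the standard lower-triangular ``exponential'' matrix. The inverse of $B$ is classical, $B^{-1}(i,j)=(-1)^{i-j}/(i-j)!$, so $H_t^{-1}=D_R^{-1}B^{-1}D_L^{-1}$ is immediately explicit. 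From there the entrywise bound reduces to estimating a ratio of factorials times $1/(i-j)!$, and the claimed $2^{i-j}$ bound would follow from the standard inequality $1/(i-j)!\le 2^{i-j}$ together with cancellation between the diagonal factors (any remaining factorial ratios should be absorbable by the normalization built into the definition of $\rho$). I expect this last step to be the main obstacle: depending on exactly how the factorials distribute between $D_L$ and $D_R$, the ``clean'' bound $2^{i-j}$ may require a careful accounting that uses the specific structure of the $\sqrt{i!(i+t)!}$ factor in $H_t$ rather than a crude absolute-value estimate. If needed, I would verify the bound by induction on $i-j$: both a direct recursion for $H_t^{-1}$ (from $H_t H_t^{-1}=\Id$) and comparison with the identity $\sum_{k=0}^n (-1)^k\binom{n}{k}=0$ give controlled cancellation on each row, which should reproduce the geometric bound $2^{i-j}$.
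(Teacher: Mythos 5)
Parts 1 and 2 are essentially the paper's route: a direct unwinding of the definition of $\rho$ via $\rho_{\text{pre}}$ and the $\sqrt{C}$-conjugation for part 1, and for part 2 your ``invertible lower-triangular linear system in the $c_\alpha$-differences'' is the same observation the paper packages as an induction on $\min(a,b)$ (the property $P_t$).

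Part 3 is where your route is genuinely cleaner and where, following it to the end, a real problem surfaces. The factorization $H_t=D_L B D_R$ with $D_L(i,i)=\sqrt{i!(i+t)!}$, $B(i,j)=1/(i-j)!$, $D_R(j,j)=1/(j!(j+t)!)$ gives the closed form
\[
H_t^{-1}(i,j)=i!(i+t)!\cdot\frac{(-1)^{i-j}}{(i-j)!}\cdot\frac{1}{\sqrt{j!(j+t)!}},\qquad\text{so}\qquad
\abs{H_t^{-1}(i,j)}=\frac{i!(i+t)!}{(i-j)!\sqrt{j!(j+t)!}}.
\]
You were right to be suspicious of the closing inequality, but more is true: it is not a gap you can close, because the stated bound $\abs{H_t^{-1}(i,j)}\leq 2^{i-j}$ is false. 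Already the diagonal has $\abs{H_t^{-1}(i,i)}=\sqrt{i!(i+t)!}$, and for $t=0$ one computes $\abs{H_0^{-1}(2,1)}=4>2^{1}$ and $\abs{H_0^{-1}(i,0)}=i!$, which outruns $2^{i}$ at $i=4$. No amount of sign-cancellation helps, since your closed form shows each entry is a single signed term, not an alternating sum. The paper's own derivation via the generic lower-triangular inverse formula \eqref{eq:Hinverse} contains an arithmetic slip: the telescoping of $\frac{1}{H_t(a,a)}\prod_{l}\frac{H_t(i_l,i_{l+1})}{H_t(i_{l+1},i_{l+1})}$ actually produces $\frac{a!(a+t)!}{(a-i_2)!\cdots(i_k-b)!\sqrt{b!(b+t)!}}$, and the factor $a!(a+t)!$ is dropped; your closed form is exactly the paper's claimed per-path expression multiplied by that missing factor. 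The damage appears contained: a corrected coarse bound such as $\abs{H_t^{-1}(i,j)}\leq (2\dsos)!^{2}\leq(2\dsos)^{4\dsos}$ still serves everywhere the bound is used (in the proof of \Cref{lem:QSSPSD}, items 3 and 4), because there $H_t^{-1}$ only appears inside sums already controlled by quantities of the form $2^{\Theta(\dsos^2)}\cu^{\Theta(\dsos^2)}$, so an extra $(2\dsos)^{O(\dsos)}$ is absorbed by adjusting the constants in \eqref{eq:alpha0upperbound}--\eqref{eq:alpha0sinvupperbound}, then in the choices of $A_i,B_i$ in \Cref{lem:gammanorm}, and finally in the threshold condition \eqref{eq:main_condition}.
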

\begin{proof}
By definition $\rho(-)=\sqrt{C}^{-1}\rhopre(-)\sqrt{C}$, 
where for every $k_1,k_2\in[0,D]$ and $i$ in an appropriate range, we have: 
\begin{align*}
\rhopre(\alpha)\upharpoonright_{B(k_1,k_2)}(i,i+k_2-k_1){=}&\sum_{u=0}^{\min\{k_1,k_2\}}c_\alpha(k_1-u,k_2-u;u)\cdot\rhopre(k_1,k_2;u)\upharpoonright_{B(k_1,k_2)}(i,i+k_2-k_1)\\
\stackrel{\eqref{eq:rhopre}}{=}&\sum_{u=k_1-i}^{\min\{k_1,k_2\}}c_\alpha(k_1-u,k_2-u;u)\cdot\binom{i}{k_1-u}/(k_2-u)!.
\end{align*} 
So by the definition of $C$ \eqref{eq:rescaleC},
\begin{equation}\label{eq:rhocalc}
\rho(\alpha)\upharpoonright_{B(k_1,k_2)}(i,i+k_2-k_1)=
\sum_{u=k_1-i}^{\min\{k_1,k_2\}}c_\alpha(k_1-u,k_2-u;u)\cdot \binom{i}{k_1-u}\frac{\sqrt{(i+k_2-k_1)!}}{\sqrt{i!}(k_2-u)!}.
\end{equation} 
Now $\rho_i(\alpha)(a,b)$ is by definition the $(a,b)$th entry of block $B(a+i,b+i)$ in $\rho(\alpha)$, so 
\begin{equation}\label{eq:rhoialpha,2}
\begin{aligned}
\rho_i(\alpha)(a,b)\stackrel{\eqref{eq:rhocalc}}{=}&\sum_{u=i}^{\min\{a+i,b+i\}}c_\alpha(a+i-u,b+i-u;u)\cdot \binom{a}{a+i-u}\frac{\sqrt{b!}}{\sqrt{a!}(b+i-u)!}\\
\stackrel{(u':=u-i)}{=}&\sum_{u'=0}^{\min\{a,b\}}c_\alpha(a-u',b-u';u'+i)\binom{a}{u'}\binom{b}{u'}\frac{u'!}{\sqrt{a!b!}}=\eqref{eq:rhoialpha}.
\end{aligned}
\end{equation}
This proves \Cref{item:rhoialpha}.

For \Cref{item:consistent}, assume $\alpha$ is consistent. Then the coefficient of $S(0,0;0)$ in $\alpha$ is 1, so by \eqref{eq:rho}, the $(0,0)$th element in the $(0,0)$th block of $\rho(\alpha)$ is 1. Also, since $c_\alpha(a,b;x)$ depends only on $(a,b)$,  each $\rho_i(\alpha)$ in \eqref{eq:rhoialpha} is a leading principal submatrix of $\rho_0(\alpha)$. This shows that $\rho(\alpha)$ satisfies lpc. 

For the reverse direction, suppose $\rho(\alpha)$ satisfies lpc. Then the top-left entry is 1, so by \eqref{eq:rho}, the coefficient of $S(0,0;0)$ in $\alpha$ has to be 1. Next, for the entries $(a,0)$ and $(0,b)$ where $0\leq a,b\leq D$, by comparing $\rho_0(\alpha)$ and $\rho_i(\alpha)$ on these entries for all possible $i$, we see that for indices in $\{(a,b;x)\mid \text{one of $a$, $b$ is 0}\}$, $c_\alpha(a,b;x)$ depends only on $(a,b)$. This allows us to use induction to show the following property $P_t$: 
\[(P_{t}):\ \text{restricted to index set $\{(a,b;x)\mid\text{$a\leq t$ or $b\leq t$}\}$, $c_\alpha(a,b;x)$ depends only on $(a,b)$.}\] 
We've shown $P_0$ holds. Assuming $P_{t-1}$, using the lpc on $\rho(\alpha)$ we can get that on $(a,b;x)$, where $a=t$ or $b=t$, $c_\alpha(-)$ is independent of $x$. Thus, $P_t$ holds. This means $P_t$ holds for all $t\in[0,D]$, i.e., $c_\alpha(a,b;x)$ depends only on $(a,b)$. 
Together, we get that $\alpha$ is consistent. 

For \Cref{item:inverse}, the identity $v_t=H_tw_t$ follows by expanding \Cref{item:rhoialpha}, where we substitute all $c_{\alpha}(i-u,i+t-u;u)$ by $c_\alpha(i-u,i+t-u;0)$ (we can do so because $\alpha$ is consistent) and use the running variable $u':=i-u$ in the sum. We next estimate the entries in $H_t^{-1}$. There is a general formula for the inverse of a lower-triangular invertible matrix $H$ of dimension $d$:\footnote{To see this, write $H=(\Id+H') H_{\diag}$ where $H'$ is nilpotent, then $H^{-1}=H_{\diag}^{-1}(\Id + H' + \ldots + (H')^{d-1})$.}
\begin{equation}\label{eq:Hinverse}
    H^{-1}(a,b)=\Sum_{k=0}^{d-1}{(-1)^k \over H(a,a)}\cdot \Sum_{\substack{(i_1,\dots,i_{k+1}):\\a=i_1>\ldots>i_{k+1}=b}} \Prod_{l=1}^k\frac{H(i_l,i_{l+1})}{H(i_{l+1},i_{l+1})}
\end{equation}
where we let the empty product (for $k=0$) be 1 if $a=b$ and 0 otherwise. Applying this formula to $H_t$, since $\frac{1}{H_t(a,a)}\Prod_{l=1}^k\frac{H_t(i_l,i_{l+1})}{H_t(i_{l+1},i_{l+1})}=\frac{1}{(a-i_2)!(i_2-i_3)!\ldots(i_k-b)!\sqrt{b!(b+t)!}}\leq {1\over b!}$ by \eqref{eq:Ht}, 
\[\abs{H_t^{-1}(a,b)}\leq {1\over b!}\sum_{k=0}^{D-t+1}\#\big(\text{strictly decreasing sequence $(i_1=a,\dots, i_{k+1}=b)$}\big)= {1\over b!}\binom{a-b-1}{k-1}\leq 2^{a-b}.\]
Here, we let $\binom{n}{-1}$ be $1$ if $n=-1$ and $0$ otherwise.
\end{proof}

\subsection{PSDness of $\SS{(Q_0+L+L^{\top}-2\Id)}$ Under $\rho$}\label{sec:rhoQhatPSD}
In this subsection, we show that $\rho\left(\SS{(Q_0+L+L^{\top}-2\Id)}\right)$ is PSD. To simplify notation, in the rest of this section we denote 
\[l(t):=\E_A[h_t(x)],\quad \forall t\in\N.\]

\begin{definition}[$\hat{Q}$]\label{def:Qhat}
We denote $\SS{(Q_0 + L + L^{\top} - 2\Id)}$ by $\hat{Q}$, i.e., 
\begin{equation}\label{eq:QhatIJ}
\hat{Q}=\Sum_{(k_1,k_2)\in[0,D]\times[0,D]}\Sum_{u=0}^{\min\{k_1,k_2\}} {l(k_1+k_2-2u)\cdot S(k_1-u,k_2-u; u)}.
\end{equation}
\end{definition} 
Recall that in the notation of \Cref{prop:rhoproperties},  
\begin{equation}\label{eq:Qdecomp}
\rho(\hat{Q}) = \rho_0(\hat{Q}) + \dots + \rho_D(\hat{Q}).
\end{equation} 
By \Cref{item:rhoialpha} of \Cref{prop:rhoproperties} applied to $\hat{Q}$, we immediately get the following. 
\begin{proposition}\label{prop:Qhatdecomp}
For each $i=0,\dots,D$, $\rho_i(\hat{Q})$ is a symmetric matrix supported on $\supp_i$ \eqref{eq:suppi}, each $\rho_{i+1}(\hat{Q})$ is a leading principal submatrix of $\rho_i(\hat{Q})$, and 
\begin{equation}
\label{eq:rho0Qhat,abstract}
\rho_0(\hat{Q})(a,b)=\sum_{u=0}^{\min\{a,b\}}\binom{a}{u}\binom{b}{u} \frac{u!}{\sqrt{a!b!}}\cdot l(a+b-2u).
\end{equation}
where we recall that $l(n)=\E\limits_A[h_n(x)]$.
\end{proposition}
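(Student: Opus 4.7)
The plan is to derive this proposition as a direct consequence of \Cref{prop:rhoproperties} applied to the explicit expression for $\hat{Q}$ given in \Cref{def:Qhat}. The key observation is that, reading off coefficients from $\hat{Q} = \sum_{k_1,k_2,u} l(k_1+k_2-2u)\, S(k_1-u, k_2-u; u)$ and reindexing via $a = k_1 - u$, $b = k_2 - u$, one finds that $c_{\hat{Q}}(a,b;u) = l(a+b)$, which is \emph{independent of the intersection count $u$}. Moreover, since $l(0) = \E_A[h_0] = 1$, the coefficient of the trivial spider $S(0,0;0)$ is $1$, so $\hat{Q}$ is consistent in the sense of \Cref{def:consistent}.

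Given this consistency, I would first invoke \Cref{item:consistent} of \Cref{prop:rhoproperties} to conclude that $\rho(\hat{Q})$ satisfies the leading principal submatrix condition, i.e., $\rho_{i+1}(\hat{Q})$ is a leading principal submatrix of $\rho_i(\hat{Q})$ for every $i$. Then I would apply \Cref{item:rhoialpha} of \Cref{prop:rhoproperties} with $\alpha = \hat{Q}$, substituting $c_{\hat{Q}}(a-u, b-u; u+i) = l(a+b-2u)$, to obtain the closed-form expression for $\rho_0(\hat{Q})(a,b)$ stated in the proposition. Since this expression depends only on $(a,b)$ and not on $i$, the leading-principal-submatrix structure is also visible directly from the formula, independently of the consistency route.

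Symmetry of $\rho_i(\hat{Q})$ is immediate from the formula, since swapping $a$ and $b$ leaves every summand $\binom{a}{u}\binom{b}{u}\tfrac{u!}{\sqrt{a!b!}}\, l(a+b-2u)$ invariant. The support claim (that $\rho_i(\hat{Q})$ lives on $\supp_i$) follows from the general structure of the representation established in \Cref{lem:directsum}, specifically from the definition of the projectors $e_i(-)e_i$ in \eqref{eq:projector} and the support identity \eqref{eq:rhosupport}; it does not require any further input specific to $\hat{Q}$.

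There is no real obstacle here: the proposition is essentially a bookkeeping corollary of \Cref{prop:rhoproperties} that unpacks how the ``only moments of $A$ enter'' structure of $\hat{Q}$ translates through the representation $\rho$. The genuine difficulty is deferred to the next subsection, where this explicit formula for $\rho_i(\hat{Q})$ must be used to produce an actual sum-of-squares decomposition and thereby establish the positive-definiteness of $\hat{Q}$, ultimately via the multiplicative identities of Hermite polynomials.
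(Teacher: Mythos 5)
Your proof is correct and takes essentially the same approach as the paper, which simply cites Item 1 of \Cref{prop:rhoproperties} applied to $\hat{Q}$ after observing that $c_{\hat{Q}}(a,b;u)=l(a+b)$ depends only on $a+b$. Your detour through consistency and Item 2 to get the leading-principal-submatrix property is equivalent to the paper's direct reading of the formula in \eqref{eq:rhoialpha}, as you yourself note.
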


In particular, to find a sum-of-squares expression of $\rho(\hat{Q})$, we only need to do so for $\rho_0(\hat{Q})$. 

\begin{lemma}\label{lem:rho0PSD}
$\rho_0(\hat{Q})$ in \eqref{eq:rho0Qhat} is a sum-of-squares---more precisely, an expectation of squares. 
\end{lemma}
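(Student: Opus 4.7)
The statement cries out for using the Hermite multiplication identity to recognize $\rho_0(\hat{Q})$ as a Gram matrix of Hermite polynomials under the measure $A$. Recall the classical product formula for probabilist's Hermite polynomials:
\begin{equation}\label{eq:hermiteproduct}
h_a(x)\,h_b(x) \;=\; \sum_{u=0}^{\min\{a,b\}} \binom{a}{u}\binom{b}{u}\, u!\; h_{a+b-2u}(x).
\end{equation}
Taking expectations under $A$ on both sides gives
\[
\E_A[h_a(x) h_b(x)] \;=\; \sum_{u=0}^{\min\{a,b\}} \binom{a}{u}\binom{b}{u}\, u!\; l(a+b-2u).
\]
Comparing with the formula \eqref{eq:rho0Qhat,abstract} for $\rho_0(\hat{Q})(a,b)$, we see that
\[
\rho_0(\hat{Q})(a,b) \;=\; \E_A\!\left[\,\frac{h_a(x)}{\sqrt{a!}}\cdot\frac{h_b(x)}{\sqrt{b!}}\,\right].
\]

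\textbf{Carrying this out.} First I would verify \eqref{eq:hermiteproduct} (a standard identity which follows immediately from the generating function $\exp(xt - t^2/2)$). Then I would define the column vector $v(x) \in \R^{D+1}$ with entries $v(x)_a := h_a(x)/\sqrt{a!}$ for $a \in [0,D]$, and observe that the identity just derived reads
\[
\rho_0(\hat{Q}) \;=\; \E_{x\sim A}\!\left[\,v(x)\,v(x)^{\top}\,\right].
\]
Since $v(x)v(x)^{\top}$ is a rank-one positive semidefinite matrix for every $x$, this exhibits $\rho_0(\hat{Q})$ explicitly as an expectation of squares (of rank-one matrices), which is the desired conclusion and is manifestly PSD.

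\textbf{Anticipated obstacle.} There is no serious obstacle here---the whole statement is really a repackaging of the Hermite product identity once one recognizes the $\binom{a}{u}\binom{b}{u}u!$ pattern in the coefficients, and the $(a!b!)^{-1/2}$ factor is precisely what matches the $\sqrt{a!}$, $\sqrt{b!}$ normalizations that come from $\sqrt{C}$-conjugation in the definition of $\rho$ (see \Cref{def:rho}). The only thing to be careful about is the indexing convention for $\rho_0$ versus blocks of $\rho$; this is already pinned down by \Cref{prop:Qhatdecomp}. The conceptual payoff, to be used downstream, is that this sum-of-squares representation has a concrete probabilistic meaning, and via \Cref{lem:transfer} it pulls back through $\rho$ to yield a sum-of-squares expression for $\hat{Q}$ itself in the algebra $\SALD$.
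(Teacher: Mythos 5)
Your proof is correct and takes essentially the same route as the paper: both recognize $\rho_0(\hat{Q})$ as the Gram matrix $\E_{x\sim A}[v(x)v(x)^{\top}]$ with $v(x)_a = h_a(x)/\sqrt{a!}$ via the Hermite linearization formula. The only cosmetic difference is that you invoke the probabilist's product identity directly, while the paper states the physicist's version $H_iH_j=\sum_u\binom{i}{u}\binom{j}{u}u!2^uH_{i+j-2u}$ and converts via $H_n(x)=2^{n/2}h_n(\sqrt{2}x)$; the two are equivalent and yield the same conclusion.
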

\begin{proof}
Recall that $l(t):=\E_A[h_t(x)]$. By \eqref{eq:rho0Qhat},
\begin{equation}\label{eq:rho0Qhatx}
\rho_0(\hat{Q})=\E_{x\sim A}[\rho_0(\hat{Q})(x)]
\end{equation}
where $\rho_0(\hat{Q})(x)$ is the matrix
\begin{equation}\label{eq:rho0x}
\rho_0(\hat{Q})(x)(i,j):=\sum_{u=0}^{\min\{i,j\}}\binom{i}{u}\binom{j}{u} \frac{u!}{\sqrt{i!j!}}\cdot h_{i+j-2u}(x).
\end{equation} 
The product formula of the physicists' Hermite polynomials $\{H_i(-)\}$ says (see e.g. \cite{carlitz1957formula})
\begin{equation}\label{eq:Hermiteproduct}
    H_i(x)H_j(x)=\sum_{u=0}^{\min\{i,j\}}\binom{i}{u}\binom{j}{u}u!2^uH_{i+j-2u}(x),
\end{equation}
and there is a relation $H_n(x)=h_n(\sqrt{2}x)2^{n/2}$ between $H_n$ and $h_n$, which together imply that
\[\rho_0(\hat{Q})(x)(i,j)=\frac{h_i(x)}{\sqrt{i!}}\frac{h_j(x)}{\sqrt{j!}}.\] 
Viewing $\rho_0(\hat{Q})$ as having row and column indices $[0,D]$, we have for any $x\in\R$, 
\begin{equation}
\label{eq:rho0Qhatvv}
\rho_0(\hat{Q})(x)=v_0(x)v_0(x)^{\top}
\end{equation}
where 
\begin{equation}\label{eq:v0x}
    v_0(x)=(\frac{h_0(x)}{\sqrt{0!}},\dots,\frac{h_D(x)}{\sqrt{D!}})^{\top}.
\end{equation} 
So $\rho_0(\hat{Q})=\E_A[v_0v_0^{\top}]$ which is an expectation of a PSD matrix.
\end{proof}

\subsection{Positive-Definiteness of $\SS{Q}$ as an Element in the Simple Spider Algebra}\label{subsec:PDQhat}

In this subsection, we prove the positive-definiteness of $\SS{Q}$ in the algebra $\SALD$. 
More specifically, we show that $\SS{Q}$ consists of only good simple spiders, and it is a sum-of-squares of good simple spiders under the $\SALDprod$-product. This is \Cref{lem:QSSPSD} below. 

To provide motivation, note that $\rho(\hat{Q})$ is PSD and we have the equation $L_{SS} \SALDprod Q_{SS} \SALDprod L^{\top}_{SS} = \hat{Q}$.  Given this, it is natural to attempt to transfer the PSDness of $\hat{Q}$ to $\SS{Q}$ by multiplying $\SS{L}^{-1}$, $(\SS{L}^{-1})^\top$ from both sides. A subtlety is that we need to ensure $\SS{Q}$ contains only good simple spiders for the approximation algebra to work well (cf. \Cref{rmk:approx}), and to control the coefficients in its square root (important for \Cref{sec:psdness-qSSD}). For this reason, we prove the following quantitative result.

\begin{lemma}[Positive-definiteness of $\SS{Q}$ as an element in $\SALD$]\label{lem:QSSPSD} 
Suppose $X\in\SALD$ satisfies: 
\begin{equation}\label{eq:X_defining_equation}
\SS{L}\star X\star\SS{L}^{\top}=\hat{Q}.
\end{equation}
Then:
\begin{enumerate}
\item\label{item:Xbasic} $X$ is consistent and good;
\item\label{item:XPSD} $\rho(X)\succeq (6\cu)^{-2\dsos}\cl^{-\dsos} \Id$, and $X = \alpha_0\star\alpha_0^{\top}$ for some $\alpha_0\in\SALD$ that is left, good and consistent;

\item\label{item:alpha0bounds} For all $0\leq j\leq i\leq\dsos$, we have bounds
\begin{align}
\abs{c_{\alpha_0}(i,j;0)}&\leq 2^{i(i+4)}\cdot {\cu}^{2i^2 + i},
\label{eq:alpha0upperbound}\\
\abs{c_{\alpha_0^{\sinv}}(i,j;0)}&\leq \left(8\cu^2\sqrt{\cl}\right)^{i}. \label{eq:alpha0sinvupperbound}
\end{align}
\end{enumerate}
\end{lemma}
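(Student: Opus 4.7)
The plan is to work in the representation $\rho$, where the defining equation becomes the matrix identity $\rho(\SS{L}) \rho(X) \rho(\SS{L})^\top = \rho(\hat{Q})$ by \Cref{eq:Cstar}. Applying the decomposition $\rho = \bigoplus_{i=0}^{\dsos} \rho_i$ from \Cref{lem:directsum}, this splits into $\rho_i(\SS{L}) \rho_i(X) \rho_i(\SS{L})^\top = \rho_i(\hat{Q})$ for each $i$, and by \Cref{prop:Qhatdecomp} each $\rho_i(\hat{Q})$ is a leading principal submatrix of $\rho_0(\hat{Q})$. Hence it suffices to analyze the $\rho_0$-level equation and recover the rest by consistency.

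For positive-definiteness, \Cref{lem:rho0PSD} gives $\rho_0(\hat{Q}) = \E_A[v_0 v_0^\top]$ with $v_0(x) = (h_i(x)/\sqrt{i!})_{i=0}^{\dsos}$. Since $\{h_i/\sqrt{i!}\}$ is orthonormal under $\cN(0,1)$, any unit vector $u$ yields a polynomial $p(x) := u^\top v_0(x)$ of degree $\leq \dsos$ with $\E_{\cN(0,1)}[p^2] = 1$, so by the definition of $\cl$ one has $u^\top \rho_0(\hat{Q}) u = \E_A[p^2] \geq \cl^{-\dsos}$, giving $\rho_0(\hat{Q}) \succeq \cl^{-\dsos} \Id$. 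Meanwhile $\rho_0(\SS{L})$ is lower-triangular and invertible with $\|\rho_0(\SS{L})\| \leq (6\cu)^{\dsos}$ (each nontrivial term in $\SS{L}$ contributes a factor of $\E_A[h_{i+j}]$ bounded by $\cu^{i+j}$), so $\rho_0(X) = \rho_0(\SS{L})^{-1} \rho_0(\hat{Q}) \rho_0(\SS{L})^{-\top} \succeq (6\cu)^{-2\dsos} \cl^{-\dsos} \Id$. Consistency then lifts the bound to all $\rho_i(X)$.

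The key structural observation enabling goodness is that the top-left $k^* \times k^*$ block of $\rho_0(X)$ equals $I_{k^*}$, where $k^* := \lceil k/2 \rceil$. Indeed, for $a, b < k^*$ the sum defining $\rho_0(\SS{L})(a,b)$ in \Cref{eq:rhoialpha} collapses to the trivial $u = a = b$ term, since any nontrivial simple spider in $\SS{L}$ requires $(a-u)+(b-u) \geq k > 2k^* - 2 \geq a+b$; similarly $\rho_0(\hat{Q})(a,b) = \E_A[h_a h_b]/\sqrt{a!b!} = \delta_{a,b}$ because $a+b \leq 2k^* - 2 \leq k-1$ and $A$ matches the first $k-1$ moments. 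Writing $\rho_0(X)$ in block form with top-left $I_{k^*}$, bottom-left $M$, and bottom-right $N$, we define $\rho_0(\alpha_0)$ with first $k^*$ columns being the standard basis vectors stacked above $M$, and remaining columns being the Cholesky factor $B$ of the Schur complement $N - MM^\top$; this Schur complement is positive-definite, inheriting from $\rho_0(X)$, so $B$ is invertible. Extending $\rho_i(\alpha_0)$ as leading principal submatrices of $\rho_0(\alpha_0)$ ensures consistency, and we set $\alpha_0 := \rho^{-1}(\rho(\alpha_0))$, well-defined in $\SALD$ by \Cref{lem:directsum}. Lower-triangularity of $\rho_0(\alpha_0)$ makes $\alpha_0$ left (via $v_t = H_t w_t$ from \Cref{item:inverse}, the vanishing of the upper-triangular part of $\rho_0(\alpha_0)$ translates to $c_{\alpha_0}(i,j;0) = 0$ for $i < j$); the first $k^*$ columns being standard basis vectors further translates into $c_{\alpha_0}(i,j;0) = 0$ for all nontrivial $(i,j)$ with $j < k^*$, so $\alpha_0$ is good. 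Direct inspection of \Cref{eq:SALDprod} shows that the $\star$-product of good or trivial simple spiders is good (the output leg counts $k_1 - i, k_3 - i$ inherit $\min \geq k^*$ from the inputs), so $X = \alpha_0 \star \alpha_0^\top$ is good, establishing \Cref{item:Xbasic}.

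The coefficient bounds in \Cref{item:alpha0bounds} follow by tracking operators. For $\alpha_0$: using $c_{\alpha_0}(i,j;0) = (H_{i-j}^{-1} v_{i-j})(j)$, the bound $|H_t^{-1}(a,b)| \leq 2^{a-b}$ from \Cref{item:inverse}, and entry-wise bounds on $\rho_0(\alpha_0)$ via $\|\rho_0(\alpha_0)\|^2 = \|\rho_0(X)\| \leq \|\rho_0(\SS{L})^{-1}\|^2 \|\rho_0(\hat{Q})\|$ with each factor bounded by $\cu^{O(\dsos)}$, yield \Cref{eq:alpha0upperbound}. For $\alpha_0^{\sinv}$: the block form of $\rho_0(\alpha_0)^{-1}$ together with $\|B^{-1}\|^2 \leq \lambda_{\min}(N - MM^\top)^{-1} \leq (6\cu)^{2\dsos}\cl^{\dsos}$ (a standard Schur-complement lower bound inherited from $\rho_0(X) \succeq (6\cu)^{-2\dsos}\cl^{-\dsos}\Id$) gives \Cref{eq:alpha0sinvupperbound}. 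The main obstacle is the careful bookkeeping: simultaneously tracking the combinatorial factors $\binom{a}{u}\binom{b}{u}u!/\sqrt{a!b!}$ from \Cref{eq:rhoialpha}, the Hermite-expansion constants in $\rho_0(\SS{L})$ and $\rho_0(\hat{Q})$, and the exponential growth in $\cu, \cl$, so as to attain the stated exponents rather than looser bounds.
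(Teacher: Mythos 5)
Your high-level strategy matches the paper's: pass to the representation $\rho$, exploit the block structure at the $\rho_0$ level, and construct $\alpha_0$ as a Cholesky factor. But there are two genuine gaps.

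\paragraph{Gap 1: you never establish that $\rho_0(X)$ is block-diagonal.} You prove that the top-left $k^* \times k^*$ block of $\rho_0(X)$ is $I_{k^*}$, but you never show the off-diagonal block $M$ vanishes. You then write $\rho_0(\alpha_0) = \bigl(\begin{smallmatrix} I_{k^*} & 0 \\ M & B \end{smallmatrix}\bigr)$ with $B$ the Cholesky factor of $N - MM^{\top}$ and in the very next sentence assert ``the first $k^*$ columns being standard basis vectors further translates into $c_{\alpha_0}(i,j;0) = 0$ for $j < k^*$.'' These two statements are incompatible unless $M = 0$: if $M \neq 0$, then some $\rho_0(\alpha_0)(i,j) \neq 0$ for $i \geq k^*$, $j < k^*$, and (passing through $v_t = H_t w_t$ with $H_t$ lower-triangular) this forces $c_{\alpha_0}(i,j;0) \neq 0$ for some bad simple spider $S(i,j;0)$ with $j < k^*$, destroying goodness of $\alpha_0$ and hence of $X = \alpha_0 \star \alpha_0^{\top}$. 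The paper gets $M = 0$ for a specific reason that your proof misses: the strict lower triangle of $\rho_0(\SS{L})$ equals the strict lower triangle of $\rho_0(\hat{Q})$ (both are the matrix $B$ in the paper's notation, because nontrivial left simple spiders appear in $\SS{L}$ and $\hat{Q}$ with identical Hermite coefficients), and the explicit computation
\[
\rho_0(X) = \rho_0(\SS{L})^{-1}\,\rho_0(\hat{Q})\,\rho_0(\SS{L})^{-\top} = \begin{pmatrix} \Id & 0 \\ 0 & (\Id + F_{\text{low}})^{-1}(F - BB^{\top})(\Id + F_{\text{low}}^{\top})^{-1} \end{pmatrix}
\]
produces exact cancelation in the off-diagonal blocks. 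You need this, both for goodness of $X$ (\Cref{item:Xbasic}) and for goodness of $\alpha_0$ (\Cref{item:XPSD}).

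\paragraph{Gap 2: the item-3 bounds.} As sketched, your argument produces bounds on $c_{\alpha_0}(i,j;0)$ and $c_{\alpha_0^{\sinv}}(i,j;0)$ that depend on $\dsos$, not on $i$, because you bound $\|\rho_0(\alpha_0)\|$ and $\|B^{-1}\|$ globally. The stated bounds \eqref{eq:alpha0upperbound}, \eqref{eq:alpha0sinvupperbound} are $i$-dependent, and this matters: they are fed into \Cref{lem:gammanorm}, whose hypotheses require $A_0 = B_0 = 1$ and the submultiplicativity $A_iA_j \le A_{i+j}$, $B_iB_j \le B_{i+j}$. A bound of the form $|c_{\alpha_0^{\sinv}}(i,j;0)| \le C^{\dsos}$ is useless here (it forces $B_0 = C^{\dsos} = 1$). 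The paper extracts the $i$-dependence by exploiting lower-triangularity and the lpc: for \eqref{eq:alpha0upperbound} it bounds $|\rho_0(X)(i,j)|$ entry-wise in a way that depends on $(i,j)$ and then passes only through the $[0,i]\times[0,i]$ submatrix; for \eqref{eq:alpha0sinvupperbound} it restricts to $W_{t'} = W_0 \upharpoonright_{[0,t']\times[0,t']}$ and re-runs the $\sigma_{\min}$ argument at degree $t'$ in place of $\dsos$. Your global Schur-complement bound does not replicate this restriction step.

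A minor note on \Cref{item:XPSD}: you claim $\|\rho_0(\SS{L})\| \le (6\cu)^{\dsos}$ directly, whereas the paper only needs the Frobenius norm of the submatrix $\Id + F_{\text{low}}$ and then applies \Cref{claim:elem} to $\rho_0(\hat{Q})$; the paper's route uses the block-diagonal form of $\rho_0(X)$ (again requiring $M=0$) to reduce to bounding only the bottom-right block, which is why the constant works out. Your derivation depends on the same block structure implicitly even though you do not justify it.
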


\begin{proof} 
The equation that $X$ satisfies can be rewritten as 
\begin{equation}\label{eq:QSSabstract}
X=\SS{L}^{\sinv}\star\hat{Q}\star\left(\SS{L}^{\sinv}\right)^{\top}\end{equation}
where $(-)^{\sinv}$ means the inverse in algebra $\SALD$. We analyze the RHS of \eqref{eq:QSSabstract} in the following steps, where we recall that $\rho_i(\hat{Q})$ denotes a component of $\rho(\hat{Q})$ as in \Cref{lem:rho0PSD}. 

{\bf 1. Proof of \Cref{item:Xbasic}.} 
Recall that $A$ matches the first $k-1$ moments of $\cN(0,1)$, so 
\begin{equation}
\label{eq:rho0Qhat}
\rho_0(\hat{Q})=
\begin{pmatrix}
\Id & B^{\top}\\
B & F
\end{pmatrix}
\end{equation}
for some real matrices $B$ and $F$. Here, the ``$\Id$'' is supported on $[0,\floor{\frac{k-1}{2}}]\times[0,\floor{\frac{k-1}{2}}]$. As for $\rho(\SS{L})$, it is lower-triangular, its off-diagonal part equals the lower half of $\rho(\hat{Q})$ and moreover, all its diagonal entries are all 1 by the computation \eqref{eq:rhoialpha} where we recall that the only simple spiders in $\SS{L}$ having equally many left and right legs are the trivial shapes $\{S(0,0;u)\mid 0\leq u\leq \dsos\}$. Therefore, 
\begin{equation}\label{eq:rho0LSS}
\rho_0(\SS{L})=
\begin{pmatrix}
\Id &  0\\
B & \Id+F_{\text{low}}
\end{pmatrix}\ \text{where $F_{\text{low}}$ means the strict lower-triangular part of $F$.}
\end{equation}
We then have
\begin{align}
\rho(X)&=\rho(\SS{L})^{-1}\rho(\hat{Q})\left(\rho(\SS{L})^{-1}\right)^{\top}
\label{eq:rhoQSSsimple}\\
&=\sum_{i=0}^\dsos 
\underbrace{\rho_i(\SS{L})^{-1}\cdot \rho_i(\hat{Q})\cdot \left(\rho_i(\SS{L})^{-1}\right)^{\top}}_{=\rho_i(X)}
\label{eq:rhoQSS}
\end{align}
where 
\begin{align}
\rho_0(\SS{L})^{-1}&=
\begin{pmatrix}
\Id &  0\\
-(\Id+F_{\text{low}})^{-1}B & (\Id+F_{\text{low}})^{-1}
\end{pmatrix}, \label{eq:rho0LSSinv}\\
\rho_0(X)&=
\begin{pmatrix}
\Id &  0\\
0 & (\Id+F_{\text{low}})^{-1}\cdot\left(F-BB^{\top}\right)\cdot(\Id+F_{\text{low}}^{\top})^{-1}
\end{pmatrix}.
\label{eq:rho0X}
\end{align}

To see that $X$ is consistent and good: since $\hat{Q}$ and $\SS{L}$ are consistent, by item 2 of \Cref{prop:rhoproperties}, both $\rho(\hat{Q})$ and  $\rho(\SS{L})$ satisfy the leading principal submatrix condition (lpc). Then $\rho(X)$ satisfies it too by the following easy fact applied to \eqref{eq:rhoQSS} twice, so $X$ is consistent.
\begin{fact}\label{fct:consistent}
    If $A$, $B$ are square matrices such that $A$ is lower-triangular or $B$ is upper-triangular, 
    then the leading principal submatrices of $AB$ are products of the leading principal ones of $A$ and $B$.
\end{fact}

By \eqref{eq:rho0X} and consistency, $\rho(X)$ is supported on  the images of good simple spiders, so $X$ is good. 

{\bf 2. Proof of \Cref{item:XPSD}.} 
We start by lower bounding the eigenvalues of the symmetric matrix in \eqref{eq:rho0X}, 
\begin{equation}\label{eq:rho0X,2}
(\Id+F_{\text{low}})^{-1}\cdot\left(F-BB^{\top}\right)\cdot(\Id+F_{\text{low}}^{\top})^{-1}.
\end{equation} 
Here, $(\Id+F_{\text{low}})$ is a $(\dsos+1)$-by-$(\dsos+1)$ submatrix of $\rho(\SS{L})$, so by the definition of $\rho$ \eqref{eq:rhoialpha}, its entries can be upper bounded in absolute value by $(3\cu)^\dsos$. Thus, its Frobenius norm is at most $(3\cu)^\dsos \cdot (\dsos + 1) \leq (6\cu)^\dsos$. We now use the following fact. 
\begin{fact}
For any invertible real matrix $Y$, $\sigma_{\min}(Y) \geq (\norm{Y^{-1}}_{Fr})^{-1}$, where $\norm{\cdot}_{Fr}$ 
is the Frobenius norm and $\sigma_{\min}(Y)$ is the smallest singular value of $Y$.
\end{fact}
\begin{proof}[Proof of Fact]
Letting $\sigma_1,\ldots,\sigma_{n'}$ be the singular values of $Y$, the singular values of $Y^{-1}$ are $\sigma_1^{-1},\ldots,\sigma_{n'}^{-1}$ so $\norm{Y^{-1}}_{Fr}^2 = \sum_{i = 1}^{n'}{\sigma_{i}^{-2}} \geq \sigma_{\min}(Y)^{-2}$ and thus $\sigma_{min}(Y) \geq \frac{1}{\norm{Y^{-1}}_{Fr}}$.
\end{proof}
Using this fact, we get $\sigma_{\min}((\Id+F_{\text{low}})^{-1}) \geq \norm{(\Id+F_{\text{low}})}_{Fr}^{-1} \geq  (6\cu)^{-\dsos}$. As for the $(F-BB^{\top})$ factor in \eqref{eq:rho0X,2}, we bound its smallest singular value by the following simple claim.

\begin{claim}\label{claim:elem}
For any real symmetric matrix 
$Y=\begin{pmatrix}
\Id & U^{\top}\\
U & V
\end{pmatrix}$, $\sigma_{\min}(V-UU^{\top})\geq\sigma_{\min}(Y)$.
\end{claim}
\begin{proof}
Let $Z:=V-UU^{\top}$ and let $z$ be a unit vector such that $\left\vert z^{\top}Zz\right\vert = \sigma_{\min}(Z)$. Taking $y=-U^{\top}z$, $\sigma_{\min}(Z)=\left\vert z^{\top}Zz\right\vert=
\left\vert(y^{\top},z^{\top})Y\begin{pmatrix} y\\z\end{pmatrix}\right\vert\geq\sigma_{\min}(Y)\cdot\norm{\begin{pmatrix} y\\z\end{pmatrix}}\geq\sigma_{\min}(Y)\left\Vert z \right\Vert_2=\sigma_{\min}(Y)$.
\end{proof}
\noindent 
Thus we have:  
\begin{equation}\label{eq:sigmaminrho0X}
\sigma_{\min}\Big((\Id+F_{\text{low}})^{-1}\left(F-BB^{\top}\right)(\Id+F_{\text{low}}^{\top})^{-1}\Big)
\geq {\sigma_{\min}(F-BB^{\top})\over 
{(6\cu)^{2\dsos}}
} 
\stackrel{\text{\Cref{claim:elem}}}{\geq}
\frac{\sigma_{\min}(\rho_0(\hat{Q}))}
{(6\cu)^{2\dsos}}.
\end{equation}
Continuing, by \eqref{eq:rho0Qhatvv} and \eqref{eq:v0x}, $v^{\top}\rho_0(\hat{Q})v = 
\E_A\ [p_v(x)^2]$ where $p_v(x):=\Sum_{i=0}^\dsos v_i{h_i(x)\over \sqrt{i!}}$ has $l_2$-norm $\normt{v}$ under $\cN(0,1)$, so $v^{\top}\rho_0(\hat{Q})v \geq \cl^{-\dsos}\normt{v}^2$ by the definition of $\cl$.   
This implies that 
\begin{equation}\label{eq:sigmaminrho0Qhat}
\sigma_{\min}(\rho_0(\hat{Q})) \geq \cl^{-\dsos}.
\end{equation}
So by \eqref{eq:rho0X}, \eqref{eq:sigmaminrho0X}, the consistency of $X$, and the fact that $\rho_0(X)$ is PSD (this follows from the PSDness of $\rho_0(\hat{Q})$ (\Cref{lem:rho0PSD}), \eqref{eq:rho0Qhat}, and \eqref{eq:rho0X}), we have:
\begin{equation}\label{eq:underrhokey}
\rho(X) \succeq \frac{\Id}{(6\cu)^{2\dsos}\cl^{\dsos}},
\ \text{in particular,}\ \rho(X) = WW^\top\text{ for some $W\in\im(\rho)$.}
\end{equation}
\noindent Applying $\rho^{-1}(-)$ to \eqref{eq:underrhokey}, we get $X = \alpha_0\star\alpha_0^{\top}\ \text{for $\alpha_0:=\rho^{-1}(W)\in \SALD$}$.

To show that we can choose $\alpha_0$ to be left, good and consistent, it suffices to construct a $W$ satisfying \eqref{eq:underrhokey} that is lower-triangular, supported on the image of good simple spiders, and which satisfies the leading principal submatrix condition (lpc) (\Cref{prop:rhoproperties}, \Cref{item:consistent}). Since $X$ is good, $\rho(X)$ is supported on the images of good simple spiders, i.e., the entry positions are in  $\left([0,D]\backslash[0,\floor{\frac{k-1}{2}}]\right)\times\left([0,D]\backslash[0,\floor{\frac{k-1}{2}}]\right)$ and the diagonal, so we can let $W$ be supported on them too. For every PSD matrix, the Cholesky decomposition gives a factorization of the form $ZZ^{\top}$ where $Z$ is lower-triangular so we can also assume that the $\rho_0$ component of $W$ is lower-triangular and we can choose its top-left entry to be $1$ (as opposed to $-1$) in this decomposition. Using \Cref{fct:consistent}, we can then take the other components of $W$ to be the leading principal submatrices of the $\rho_0$ component of $W$ and this will satisfy $WW^{\top} = \rho(X)$. The matrix $W$ now satisfies all the three conditions.

{\bf 3. Proof of \Cref{item:alpha0bounds}, Inequality \eqref{eq:alpha0upperbound}.}
We first bound the magnitudes of entries in $\rho_0(\hat{Q})$, $\rho_0(\SS{L})$, $\rho_0(\SS{L})^{-1}$ by
\begin{align}
\left\vert{\vphantom{\rho_0(\hat{Q})} 
\rho_0(\SS{L}) (i,j)}\right\vert,\ \left\vert\rho_0(\hat{Q}) (i,j)\right\vert 
& \leq 2^{i}\cdot \cu^{i+j},
\label{eq:rho0Qhatentrybound}\\
\abs{\rho_0(\SS{L})^{-1}(i,j)}&\leq 2^{i(i+1)}\cdot 
\cu^{2i^2},
\ \ \forall j\leq i\leq\dsos.
\label{eq:rho0LSSinventrybound}
\end{align}

To see \eqref{eq:rho0Qhatentrybound}, observe that the coefficients in $\hat{Q}$ satisfy $\left\vert c_{\hat{Q}}(a,b;u)\right\vert=\left\vert\E\limits_A[h_{a+b}(x)]\right\vert\leq \cu^{a+b}$, so by \eqref{eq:rhoialpha}, $\abs{\rho_0(\hat{Q}) (i,j)} \leq 
\sum_{u=0}^{j}{\binom{i}{u}\cu^{i+j-2u}} \leq 2^{i}\cu^{i+j}$. The same is true for $\abs{\rho_0(\SS{L}) (i,j)}$.

To see \eqref{eq:rho0LSSinventrybound}, observe that since $\rho_0(\SS{L})$ is lower-triangular, all of its diagonal entries are $1$ \eqref{eq:rho0LSS}, and $\abs{\rho_0(\SS{L})(i,j)}\leq 2^i\cu^{i+j}$, applying \eqref{eq:Hinverse} to $\rho_0(\SS{L})$ gives the bound
\[
\abs{\rho_0(\SS{L})^{-1}(i,j)} \leq 2^{i-j}\left(2^i\cu^{2i}\right)^{i-j} \leq 2^{i(i+1)}\cu^{2i^2}.
\]

Next, we prove \eqref{eq:alpha0upperbound} using \eqref{eq:rho0Qhatentrybound} and \eqref{eq:rho0LSSinventrybound}. Since $\rho_0(X) = \rho_0(\SS{L})^{-1} \cdot \rho_0(\hat{Q}) \cdot (\rho_0(\SS{L})^{-1})^\top$, it holds that 
\begin{equation}\label{eq:rho0Xijbound}
\abs{\rho_0(X)(i,j)}
\stackrel{\eqref{eq:rho0Qhatentrybound}\text{ and }\eqref{eq:rho0LSSinventrybound}}{\leq}
\sum_{\substack{k_1,k_2:\ k_1\leq i,\ k_2\leq j}} {2^{i(i+1)+i+j(j+1)}{\cu}^{2i^2 + 2i + 2j^2}}
\leq 
\underbrace{
2^{2i^2+5i}{\cu}^{4i^2+2i}
}_{:=r(i,j)}.
\end{equation}
In other words, on the support of $\im(\rho_0)$, $\abs{(WW^\top)(i,j)} \leq r(i,j)$ for all $i, j$. This implies that $\norm{(WW^\top)\upharpoonright_{[0,i]\times[0,i]}} \leq (i+1)r(i,i)$ so $\norm{W\upharpoonright_{[0,i]\times[0,i]}} \leq \sqrt{(i+1)r(i,i)} \leq 2^{i(i+3)}{\cu}^{2i^2 + i}$ by the lower-triangularity of $W$. It follows that entry-wise, $\abs{W(i,j)}\leq 2^{i(i+3)}{\cu}^{2i^2 + i}$. We now use the matrix $W$ to analyze $\alpha_0$: for each $t\leq \dsos$ we apply the matrix $H_t^{-1}$ in \Cref{item:inverse} of \Cref{prop:rhoproperties} to the vector $v_t(i):= W(i+t,i)$, $i\in [0, D-t]$, where again we focus only on the part of $W$ supported on $\im(\rho_0)$. As a result,    
\begin{equation}\label{eq:alpha0upperbound,2}
\abs{c_{\alpha_0}(i+t,i;0)} \leq 
\sum_{j=0}^i 2^{i-j} 2^{(i+t)(i+t+3)}\cdot {\cu}^{2(i+t)^2+(i+t)} \leq 2^{(i+t)(i+t+4)}\cdot {\cu}^{2(i+t)^2+(i+t)}.
\end{equation}
This proves the inequality \eqref{eq:alpha0upperbound}. 

{\bf 4. Proof of \Cref{item:alpha0bounds}, Inequality \eqref{eq:alpha0sinvupperbound}.} 
For $0\leq {t'}\leq\dsos$, denote by $W_{t'}$ the part of $W$ on the support of $\im(\rho_{D-t'})$. Then $W_{t'}=W_0\upharpoonright_{[0,t']\times[0,t']}$ since $W$ satisfies lpc. Also, $W_{t'}W_{t'}^\top=\rho_0(X)\upharpoonright_{[0,{\dsos-t'}]\times[0,{\dsos-t'}]}$ since $\rho_0(X)=W_0W_0^\top$ and $W_0$ is lower-triangular. Using this and \eqref{eq:rhoQSS}, we get:
\begin{equation}\label{eq:Wt',1}
    W_{t'} W_{t'}^\top = \left( \rho_0(\SS{L})^{-1} \cdot 
 \rho_0(\hat{Q}) \cdot (\rho_0(\SS{L})^{-1})^\top \right) \upharpoonright_{[0,{t'}]\times[0,{t'}]}
\end{equation}
Since $\rho_0(\SS{L})^{-1}$ is lower-triangular and $(\rho_0(\SS{L})^{-1})^\top$ is upper-triangular, the operation of taking leading principal submatrices on the RHS of \eqref{eq:Wt',1} can be applied to each term in the product. This gives: 
\begin{equation}\label{eq:Wt',2}
    W_{t'} W_{t'}^\top = \rho_{\dsos-{t'}}(\SS{L})^{-1} \cdot \rho_{\dsos-{t'}}(\hat{Q})\cdot (\rho_{\dsos-{t'}}(\SS{L})^{-1})^\top,
\end{equation}
where we have used that $\rho(\SS{L})^{-1}$ and $\rho(\hat{Q})$ both satisfy lpc. Taking the matrix inverse, we get:
\begin{equation}\label{eq:Wt'inv}
    (W_{t'}^\top)^{-1} (W_{t'}^{-1}) = \rho_{\dsos-{t'}}(\SS{L})^\top \cdot \rho_{\dsos-{t'}}(\hat{Q}) \cdot \rho_{\dsos-{t'}}(\SS{L}).
\end{equation}
We now upper bound the norms of $\rho_{\dsos-t'}(\SS{L})$ and $\rho_{\dsos-{t'}}(\hat{Q})$. They are the $[0,t']$-by-$[0,t']$ submatrices of $\rho_0(\SS{L})$ and $\rho_0(\hat{Q})$, respectively. The intuition for the upper bounds is to let the SoS degree be $t'$ and then use the corresponding version of \eqref{eq:rho0Qhatentrybound}, \eqref{eq:sigmaminrho0Qhat}. Formally, since $\abs{\rho_0(\SS{L}) (i,j)} \leq 2^{i}\cu^{i+j}$ for all $0 \leq j \leq i \leq \dsos$ by \eqref{eq:rho0Qhatentrybound}, we have $\Norm{\rho_{\dsos-t'}(\SS{L})}\leq (t'+1)2^{t'}\cu^{2t'}\leq (2\cu)^{2t'}$. 
As for $\rho_{\dsos-{t'}}(\hat{Q})$, by setting $D\leftarrow t'$ in the derivation of \eqref{eq:sigmaminrho0Qhat}, we get $\sigma_{\min}(\rho(\hat{Q}\upharpoonright_{[0,t']\times[0,t']}))\geq \cl^{-t'}$ and so $\Norm{\rho_{\dsos-{t'}}(\hat{Q})}\leq \cl^{t'}$. 

Combining these bounds and \eqref{eq:Wt'inv}, we get $\norm{W_0^{-1}\upharpoonright_{[0,t']\times[0,t']}} \leq (4\cu^{2}\sqrt{\cl})^{t'}$, which implies that for all $0 \leq j \leq i \leq \dsos$, $\abs{W_0^{-1}(i,j)} \leq (4\cu^{2}\sqrt{\cl})^{i}$. 

By \Cref{item:inverse} of \Cref{prop:rhoproperties}, if we take $w_t$ to be the vector with coordinates $w_t(i) = {c_{\alpha_0^{\sinv}}(i+t,i;0)}$, then $w_t = H_t^{-1}v_t$ where $v_t$ is the vector with coordinates $v_t(i) = W^{-1}(i+t,i)$, $H_t^{-1}$ is lower triangular, and $\abs{H_t^{-1}(i,j)} \leq 2^{i-j}$. Thus,
\[
\abs{c_{\alpha_0^{\sinv}}(i+t,i;0)} = \abs{w_t(i)} = \abs{\sum_{j=0}^{i}{H_t^{-1}(i,j)W^{-1}(j+t,j)}} \leq \sum_{j=0}^{i}{2^{i-j}(4\cu^{2}\sqrt{\cl})^{i+t}} \leq (8\cu^{2}\sqrt{\cl})^{i+t}.
\]
The inequality \eqref{eq:alpha0sinvupperbound} follows. 
\end{proof}
        \section{PSDness of $[Q]_{\wellbehaved}$ via Parallel Multiplication}
\label{sec:psdness-qSSD} 

In this section, we show that $\SSD{Q}$---recall that it denotes $[Q]_{\wellbehaved}$---is positive-definite. The main result of this section is the following.

\begin{restatable}[PSDness of $\SSD{Q}$]{lemma}{PDQSSD}\label{lem:PDQSSD}
Assume $\truncation\geq\max\{100\dsos, 20\dsos^2, 2k\log n\}$ and 
\begin{equation}\label{eq:condition_QSSD}
(4\cu)^{4\dsos^2} \cl^{2\dsos} \truncation^{32\Cuniv\dsos} < n^{\frac{\eps}{30}}.
\end{equation}
Then 
\begin{equation}\label{eq:QSSD_final}
[Q]_{\wellbehaved} \succeq n^{-\frac{\eps}{30}} \Id.
\end{equation}
\end{restatable}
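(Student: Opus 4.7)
The plan is to mimic at the SSD level what Section~\ref{sec:psdness-qSS} achieved at the SS level: I will produce an explicit approximate square-root $\gamma$ of $\SSD{Q}$ by lifting the SS-level factorization of Lemma~\ref{lem:QSSPSD} through the $\dsos$-combination construction, and then convert algebraic positive-definiteness into a matrix PSD bound by combining a lower bound on $\sigma_{\min}(\gamma)$ with the error analysis of Section~\ref{sec:error_analysis}.

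First, invoke Lemma~\ref{lem:QSSPSD} to obtain a consistent, good, left $\alpha_0\in\SALD$ with $\alpha_0\star\alpha_0^\top=\SS{Q}$ together with the coefficient bounds \eqref{eq:alpha0upperbound} and \eqref{eq:alpha0sinvupperbound}. Form $\gamma:=[\alpha_0]^{\dsos}$ using the $\dsos$-combination (Definition~\ref{def:Dcombination}), so that $\gamma$ is an SSD-combination whose coefficient on a disjoint union is the product of the coefficients that $\alpha_0$ assigns to its components. The commutation property of $\dsos$-combination with the well-behaved product (Lemma~\ref{lem:disj}) then yields, schematically,
\[
\gamma\wbp\gamma^\top \;=\; [\alpha_0]^{\dsos}\wbp[\alpha_0^\top]^{\dsos} \;=\; [\alpha_0\star\alpha_0^\top]^{\dsos} \;=\; [\SS{Q}]^{\dsos},
\]
and combined with Corollary~\ref{cor:Q_wellbehaved_characterization} (the identity $\SS{L}\wbp\SSD{Q}\wbp\SS{L}^\top=\SSD{M}$) together with its SS-analogue Lemma~\ref{lem:LQLSSbehavior}, this matches $\gamma\wbp\gamma^\top$ with $\SSD{Q}$ on the algebraic side.

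On the actual matrix side I write $\SSD{Q}=\gamma\wbp\gamma^\top+E$, where $E$ collects the discrepancy between true graph-matrix products and the $\wbp$ product and the collapse to reduced shapes. The dichotomy of Section~\ref{sec:error_analysis} (Theorem~\ref{thm:erroranalysis}) bounds $\norm{E}\le n^{-\Omega(\eps)}\,\poly(\truncation)$, which under \eqref{eq:condition_QSSD} is at most $n^{-\eps/20}$. To lift positive-definiteness from $\gamma\wbp\gamma^\top$ I need $\sigma_{\min}(\gamma)$; Lemma~\ref{lem:gammanorm} supplies $\norm{\gamma^{-1}}\le (8\cu^2\sqrt{\cl})^{O(\dsos)}\poly(\truncation)$, obtained by taking the $\dsos$-combination of $\alpha_0^{\sinv}$ as an SSD candidate inverse and bounding its graph-matrix norm via~\eqref{eq:alpha0sinvupperbound} together with the norm-control theorem. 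Consequently $\gamma\wbp\gamma^\top\succeq (\cu^2\cl)^{-O(\dsos)}\poly(\truncation)^{-1}\,\Id$, which under \eqref{eq:condition_QSSD} comfortably dominates $\norm{E}$ and yields $\SSD{Q}\succeq n^{-\eps/30}\Id$ after tallying exponents.

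The main obstacle is Lemma~\ref{lem:gammanorm}. Entrywise information about $\rho(\alpha_0^{\sinv})$ is already in hand from \eqref{eq:alpha0sinvupperbound}, but $\gamma$ is a matrix of dimension $\binom{n}{\leq \dsos}$ whose smallest singular value must be controlled without losing a factor growing in the exponentially many SSD shapes contributing to it. Avoiding this blow-up requires exploiting the multiplicative structure of $\dsos$-combination so that disjoint-union components contribute multiplicatively rather than additively to the operator norm—effectively a tensorization argument paralleling the algebra-to-SSD transfer of Section~\ref{sec:spider_product}, and requiring the coefficient estimates of \Cref{item:alpha0bounds} of Lemma~\ref{lem:QSSPSD} to be of the right quantitative form. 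Once this inversion is in place, the remainder of the proof is assembly of already-proven pieces.
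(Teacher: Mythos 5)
Your proposal follows essentially the same route as the paper: lift $\alpha_0$ from Lemma~\ref{lem:QSSPSD} to $\gamma=[\alpha_0]^{\dsos}$ via $\dsos$-combination, use Lemma~\ref{lem:disj} to identify $\gamma\wbp\gamma^\top$ with $\SSD{Q}$, control $\sigma_{\min}(\gamma)$ via Lemma~\ref{lem:gammanorm}, and bound $\gamma\wbp\gamma^\top-\gamma\cdot\gamma^\top$ via the error analysis. One small slip: the displayed decomposition should read $\SSD{Q}=\gamma\cdot\gamma^\top+E$ with $E=\gamma\wbp\gamma^\top-\gamma\cdot\gamma^\top$ (not $\SSD{Q}=\gamma\wbp\gamma^\top+E$), since $\SSD{Q}=\gamma\wbp\gamma^\top$ holds exactly; also the error term is controlled by the product-configuration dichotomy (Lemma~\ref{lem:alphazeroerror}, via Theorem~\ref{thm:producterroranalysis}) rather than by Theorem~\ref{thm:erroranalysis} directly — but your prose makes the intended meaning clear and the argument is correct.
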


The intuition is that roughly speaking, the multiplication of two good simple spider disjoint unions (SSD) amounts to component-wise simple spider multiplications. A formal statement and proof require caution and we need the following definition. 

Recall that by \Cref{def:disj}, an SSD $\alpha$ can be viewed as a multi-set of simple spiders, where different ways of splitting the vertices in $U_{\alpha} \cap V_{\alpha}$ result in the same SSD. 

\begin{definition}[$\dsos$-combination]\label{def:Dcombination}
For a consistent simple spider linear combination $\alpha$, we define its {\bf $\dsos$-combination}, denoted as $[\alpha]^{\dsos}$, to be a linear combination of SSDs with the following coefficients: if an SSD $\mset{S_1,\dots,S_t}$ has both left and right index size no more than ${\dsos}$, then the coefficient of this scaled SSD shape is $\Prod_{i=1}^t c_\alpha(S_i)$, otherwise the coefficient is 0. Here, $c_\alpha(\cdot)$ means the coefficient of the scaled shapes in $\alpha$.
\end{definition}
It is clear from the definition that if $\alpha\in\SALD$ is consistent, then $[\alpha^{\top}]^{\dsos}=\left([\alpha]^{\dsos}\right)^{\top}$.

Our plan is as follows. We show that $\SSD{Q} = [\SS{Q}]^{\dsos}$, and that if $\SS{Q}\approx AA^{\top}$ then $[\SSD{Q}]^{\dsos}\approx [A]^{\dsos}\left([A]^{\dsos}\right)^{\top}$. We then show that $[A]^{\dsos}\left([A]^{\dsos}\right)^{\top}$ is sufficiently positive-definite using \Cref{lem:gammanorm}. \Cref{lem:PDQSSD} then follows.

We now establish some properties of the $\dsos$-combination operator. We will then use them to show that $\SSD{Q} =  [\SS{Q}]^{\dsos}$ and is positive-definite.

{\bf Notation.} 
We denote the left- and right-index size of a simple spider $S(a,b;u)$ by $l(S(a,b;u)):=a+u$, $r(S(a,b;u)):=b+u$, respectively, and the left- and right-index size of an SSD $x=\mset{S_1,\dots,S_t}$ by $l(x):=\sum_{i=1}^t l(S_i)$, $r(x):=\sum_{i=1}^t r(S_i)$, respectively.

We extend the $|\cdot|_\infty$ norm on $\SALD$ to SSD linear combinations in the natural way. 
Namely, if $Y=\sum_{i}c_i S_i$ is a formal linear combination of scaled SSD shapes, we let $|Y|_\infty:=\max_i{|c_i|}$.

\begin{observation}\label{obs:leftright}
For any $\alpha\in\SALD$, $\alpha$ is left (or right) iff $\rho(\alpha)$ is lower- (or upper-) triangular.
\end{observation}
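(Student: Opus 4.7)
The plan is to exploit the explicit formula for $\rho_i(\alpha)$ from \Cref{item:rhoialpha} of \Cref{prop:rhoproperties},
\[
\rho_i(\alpha)(a,b) \;=\; \sum_{u=0}^{\min\{a,b\}} \binom{a}{u}\binom{b}{u}\frac{u!}{\sqrt{a!b!}}\, c_\alpha(a-u,\, b-u;\, u+i),
\]
together with the direct-sum decomposition $\rho(\alpha)=\bigoplus_{i=0}^{D}\rho_i(\alpha)$ from \Cref{lem:directsum}. Since each component $\rho_i(\alpha)$ lives on $[0,D-i]\times[0,D-i]$ with its natural order, $\rho(\alpha)$ is lower-triangular precisely when every $\rho_i(\alpha)$ is lower-triangular, and similarly for upper-triangular.

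The forward direction is immediate: if $\alpha$ is left, then $c_\alpha(i',j';u')=0$ whenever $i'<j'$, so for $a<b$ every summand in the display above satisfies $a-u<b-u$ and hence vanishes, giving $\rho_i(\alpha)(a,b)=0$.

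For the reverse direction I will prove by induction on $a$ that $c_\alpha(a,\,a+t;\,i)=0$ whenever $t\geq 1$ and $a+t+i\leq D$. The base case $a=0$ holds because $\rho_i(\alpha)(0,t)=\frac{1}{\sqrt{t!}}\,c_\alpha(0,t;i)=0$. For the inductive step, in the equation $\rho_i(\alpha)(a,a+t)=0$ the $u=0$ term isolates $c_\alpha(a,a+t;i)$ with a nonzero coefficient, while every term with $u\geq 1$ involves $c_\alpha(a-u,\,a+t-u;\,u+i)$ whose first two arguments still differ by $t>0$ and whose first argument $a-u$ is strictly less than $a$; the inductive hypothesis (applied with $(a-u,\,u+i)$ replacing $(a,i)$) kills these terms, forcing $c_\alpha(a,a+t;i)=0$. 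Ranging over all admissible $a,t,i$ shows that $\alpha$ is left. The ``right $\Longleftrightarrow$ upper-triangular'' half then follows by transposing and invoking $\rho(\alpha^\top)=\rho(\alpha)^\top$ from \eqref{eq:Cstar}.

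No serious obstacle arises: the map $c_\alpha\mapsto\rho(\alpha)$ has a naturally triangular structure in which the $u=0$ term plays the role of the diagonal, so the induction runs without difficulty. The only thing to double-check is that all indices stay in the admissible range, which is immediate from $a+t+i\leq D$ and $(a-u)+t+(u+i)=a+t+i$.
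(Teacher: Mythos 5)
Your argument is correct, and it does take a somewhat different route from the paper's. The paper's proof is a two-line observation built on the block structure of $\rhopre$ from \Cref{def:rhopre}: each simple spider $S(k_1-u,k_2-u;u)$ maps to a matrix supported on block $B(k_1,k_2)$, so $\alpha$ is left iff $\rho(\alpha)$ is \emph{block}-lower-triangular (using that $\rho$ is injective on the span of the simple spiders supported on any fixed block), and since all matrices in $\im(\rho)$ lie on a single diagonal within each block (cf.\ \eqref{eq:rhosupport}), block-lower-triangularity is the same as lower-triangularity. You instead pass to the component decomposition $\rho=\bigoplus_i\rho_i$ and the explicit formula \eqref{eq:rhoialpha}, and recover the converse direction by an induction on $a$ that exploits the fact that the $u=0$ term isolates $c_\alpha(a,a+t;i)$ while the $u\geq 1$ terms only refer to coefficients with smaller first argument. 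Your induction and its index bookkeeping check out (in particular $(a-u)+t+(u+i)=a+t+i\leq \dsos$ keeps the inductive hypothesis in range), and the reduction of the right case to the left case via $\rho(\alpha^\top)=\rho(\alpha)^\top$ is fine. What the paper's approach buys is brevity, piggybacking on the block-support picture already established in proving \Cref{lem:iso} and \Cref{lem:directsum}; what your approach buys is a self-contained, fully unfolded inversion that doesn't require the reader to revisit that structure. Either would serve.
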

\begin{proof}
The definition of ${\rho}$ \eqref{def:rho} implies that $\alpha$ is left if and only if $\rho(\alpha)$ is block-lower-triangular. Since all matrices in $\im(\rho)$ are diagonal on the diagonal blocks, this holds if and only if $\rho(\alpha)$ is lower-triangular. The case for $\alpha$ being right is similar.
\end{proof}

\begin{proposition}\label{prop:consistent}
Suppose $\alpha,\beta\in\SALD$ are consistent and either $\alpha$ is left or $\beta$ is right. Then $\alpha\star\beta$ is consistent.
\end{proposition}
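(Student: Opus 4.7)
The plan is to prove consistency of $\alpha\star\beta$ by passing to the representation $\rho$ and checking the leading principal submatrix condition (lpc), which by \Cref{prop:rhoproperties}\Cref{item:consistent} is equivalent to consistency. Since $\rho$ is an algebra homomorphism (\Cref{lem:iso}), we have $\rho(\alpha\star\beta)=\rho(\alpha)\rho(\beta)$, and componentwise $\rho_i(\alpha\star\beta)=\rho_i(\alpha)\rho_i(\beta)$ for each $i\in[0,\dsos]$ under the decomposition \eqref{eq:rho_i}. Also, by \Cref{obs:leftright}, the hypothesis that $\alpha$ is left (resp.\ $\beta$ is right) translates into $\rho(\alpha)$ being lower-triangular (resp.\ $\rho(\beta)$ being upper-triangular); since each $\rho_i$ is the restriction to the direct summand $\supp_i$, the corresponding $\rho_i(\alpha)$ (resp.\ $\rho_i(\beta)$) is lower- (resp.\ upper-) triangular as well.

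First I would verify the top-left condition. The top-left entry of $\rho(\alpha\star\beta)$ is $\rho_0(\alpha\star\beta)(0,0)=\sum_k \rho_0(\alpha)(0,k)\rho_0(\beta)(k,0)$. Whichever triangularity hypothesis holds kills every term except $k=0$, and consistency of $\alpha,\beta$ gives $\rho_0(\alpha)(0,0)=\rho_0(\beta)(0,0)=1$ (this is the top-left part of the lpc for $\alpha$ and $\beta$). So the top-left entry of the product is $1$.

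Next I would verify that $\rho_{i+1}(\alpha\star\beta)$ is a leading principal submatrix of $\rho_i(\alpha\star\beta)$ for every $i$. By consistency of $\alpha$ and $\beta$, $\rho_{i+1}(\alpha)$ and $\rho_{i+1}(\beta)$ are leading principal submatrices of $\rho_i(\alpha)$ and $\rho_i(\beta)$, respectively. Now apply \Cref{fct:consistent} to the matrices $A=\rho_i(\alpha)$ and $B=\rho_i(\beta)$: one of them is triangular (lower if $\alpha$ is left, upper if $\beta$ is right), so leading principal submatrices of $AB$ are products of the corresponding leading principal submatrices of $A$ and $B$. Therefore the top-left $(\dsos-i)\times(\dsos-i)$ block of $\rho_i(\alpha)\rho_i(\beta)$ equals $\rho_{i+1}(\alpha)\rho_{i+1}(\beta)=\rho_{i+1}(\alpha\star\beta)$, as required.

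Combining the two checks, $\rho(\alpha\star\beta)$ satisfies lpc, and invoking the converse direction of \Cref{prop:rhoproperties}\Cref{item:consistent} yields that $\alpha\star\beta$ is consistent. No obstacle is anticipated: the proof essentially assembles the already-proved structural facts (the algebra-isomorphism property of $\rho$, the triangularity characterization of left/right elements, the characterization of consistency by lpc, and \Cref{fct:consistent}) and the argument is a short transport-of-structure.
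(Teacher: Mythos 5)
Your proof is correct and takes essentially the same approach as the paper: pass through the representation $\rho$, use the equivalence between consistency and the leading principal submatrix condition from \Cref{prop:rhoproperties}\ref{item:consistent}, note that each $\rho_i(\alpha)$ (resp.\ $\rho_i(\beta)$) is triangular by \Cref{obs:leftright}, and apply \Cref{fct:consistent}. Your write-up is slightly more explicit than the paper's, in that you separately verify the top-left-entry-equals-one part of the lpc, which the paper's brief proof leaves implicit.
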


\begin{proof}
We assume $\alpha$ is left; the other case is similar. 
One way to see the proposition is by using the representation $\rho$. Namely, \Cref{item:consistent} of \Cref{prop:rhoproperties} says $\alpha$ is consistent if and only if $\{\rho_i(\alpha)\mid i=0,\dots,{\dsos}\}$ satisfies the leading principal submatrix condition (lpc). Recall $\rho(\alpha)\rho(\beta)=\Sum_{i=0}^{\dsos} \rho_i(\alpha) \rho_i(\beta)$. If in addition $\alpha$ is left, then every $\rho_i(\alpha)$ is lower-triangular, so by \Cref{fct:consistent} we have that $\{\rho_i(\alpha)\rho_i(\beta)\}$ satisfies lpc too. 
\end{proof}

The next lemma shows that the $[\cdot]^{\dsos}$ operator interacts nicely with $\wbp$ and $\star$.

\begin{lemma}[Commutative diagram]\label{lem:disj}
For all consistent $\alpha,\beta \in \SALD$ where either $\alpha$ is left or $\beta$ is right,
\begin{equation}\label{eq:disjD}
[\alpha]^{\dsos} \wbp [\beta]^{\dsos} = [\alpha \star \beta]^{\dsos}.
\end{equation}
\end{lemma}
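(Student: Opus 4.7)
The plan is to establish the identity coefficient-by-coefficient: for every SSD $z = \mset{Z_1, \dots, Z_r}$ with both index sizes at most $\dsos$, I will show that the coefficient of $z$ on both sides equals $\prod_{k=1}^r c_{\alpha \star \beta}(Z_k)$. For the RHS, since one of $\alpha,\beta$ is left or right and both are consistent, \Cref{prop:consistent} gives that $\alpha \star \beta$ is consistent, so $[\alpha \star \beta]^{\dsos}$ is well-defined and its coefficient of $z$ is literally $\prod_k c_{\alpha\star\beta}(Z_k)$ by \Cref{def:Dcombination}. So the real work is to extract the same product from the LHS.

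The main combinatorial step is to decompose $x \wbp y$ for SSDs $x = \mset{X_1,\dots,X_t}$ and $y = \mset{Y_1,\dots,Y_s}$. Writing $\alpha = \mathrm{Id} + \alpha'$ and $\beta = \mathrm{Id} + \beta'$, where $\mathrm{Id} = \sum_{u=0}^\dsos S(0,0;u)$ is the unit of $\SALD$ and $\alpha',\beta'$ are supported on non-trivial simple spiders, I will argue that a well-behaved product configuration on $(x,y)$ is equivalent to choosing (i) a partial matching between circles of the non-trivial $X_i$'s and circles of the non-trivial $Y_j$'s, and (ii) for each matched pair, a circle-intersecting well-behaved configuration on the two constituent simple spiders (which, after reduction, contributes a simple spider in the $\SALD$-product $X_i \star Y_j$ by the definition of $\star$ and the SS-case of \Cref{def:wbp}). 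Unpaired non-trivial spiders in $x$ or $y$ appear as disjoint components of the resulting SSD. This decomposition follows from \Cref{def:wb-product-config}: well-behaved configurations only allow circle-to-circle intersections, each simple spider contains exactly one circle (so it participates in at most one such intersection), and any middle square vertex must be killed by a vanishing double edge forcing its two incident circles from $x$ and $y$ to coincide. The factor $N(\mathcal{P})/|\Iso(\alpha_{\mathcal P})|!$ in \Cref{def:wbp}, together with \Cref{lem:wbp_iso} and the reduction to $\alpha_{\mathcal P,\reduced}$, ensures the combinatorial coefficients factor cleanly across components.

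Given this decomposition, the coefficient of $z = \mset{Z_1,\dots,Z_r}$ in $[\alpha]^{\dsos} \wbp [\beta]^{\dsos}$ is obtained by summing, over every way to label each non-trivial $Z_k$ as ``type (a)'' (coming unpaired from $x$), ``type (b)'' (coming unpaired from $y$), or ``type (c)'' (coming from a paired circle-intersection). Because the coefficient of an SSD in $[\alpha]^{\dsos}$ factors as the product of $c_\alpha$ on its simple-spider components (and trivial components contribute $1$ by consistency), the contribution for a fixed labeling factors as
\[
\prod_{k \text{ type (a)}} c_\alpha(Z_k) \cdot \prod_{k \text{ type (b)}} c_\beta(Z_k) \cdot \prod_{k \text{ type (c)}} c_{\alpha' \star \beta'}(Z_k),
\]
since for type (c) we sum over all pairs of non-trivial simple spiders $(A,B)$ with $c_\alpha(A) c_\beta(B) c_{A \star B}(Z_k)$. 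Summing over the three choices for each $Z_k$ independently gives a factor of $c_\alpha(Z_k) + c_\beta(Z_k) + c_{\alpha' \star \beta'}(Z_k)$. Using the expansion $\alpha \star \beta = (\mathrm{Id} + \alpha') \star (\mathrm{Id} + \beta') = \mathrm{Id} + \alpha' + \beta' + \alpha'\star\beta'$ together with the fact that $\mathrm{Id}$ acts as identity in $\SALD$, this per-component sum is exactly $c_{\alpha \star \beta}(Z_k)$ for non-trivial $Z_k$. For trivial $Z_k$, consistency forces both sides to give $1$. Multiplying over $k$ produces $\prod_k c_{\alpha\star\beta}(Z_k)$, as desired.

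The main obstacle is verifying step (ii) of the decomposition rigorously: namely, that the contribution of a single circle-matched pair $(X_i, Y_j)$ produces exactly the coefficient $c_{X_i \star Y_j}(Z_k)$ read from \Cref{def:SALD}, with all multiset counting of SSDs correctly absorbed by $N(\mathcal{P})/|\Iso(\alpha_{\mathcal P})|!$. This is precisely the matching between the well-behaved SS product and $\star$ already established in the lemma preceding \Cref{lem:wbp-associative}, so the bookkeeping reduces to extending that pairwise statement to partial matchings of disjoint simple spider components, which is combinatorially straightforward once the decomposition is set up; \Cref{lem:wbp_iso} handles the interaction with isolated vertices. The left/right hypothesis on $(\alpha,\beta)$ is only used through \Cref{prop:consistent} to keep $[\alpha \star \beta]^{\dsos}$ well-defined.
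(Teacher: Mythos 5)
Your proposal is correct and takes essentially the same approach as the paper's proof: both reduce to comparing coefficients of each SSD $z=\mset{Z_1,\dots,Z_r}$, both decompose a well-behaved product configuration on $(x,y)$ into independent per-component pieces (the paper's set $A=\{(R_i,R_i')\}_{i=0}^t$ at the ribbon level is exactly your type-(a)/(b)/(c) classification of the $Z_k$'s), and both close the argument with the per-pair identity between $\wbp$ and $\star$ on simple spiders together with consistency to handle trivial components. Your $\alpha=\Id+\alpha'$, $\beta=\Id+\beta'$ expansion is a slightly more algebraic packaging of the same case split, but the combinatorial content and the key lemmas invoked are identical.
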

\begin{proof}
Let $\gamma:=\alpha\star\beta$. By \Cref{prop:consistent}, $\gamma$ is also consistent so $[\gamma]^{\dsos}$ is defined. We will use the following claim:
\begin{claim}\label{claim:z}
Suppose $z=\mset{S_1,\dots,S_t}$ is a simple spider disjoint union shape where $l(z),r(z)\leq {\dsos}$. Then the coefficient of the scaled graph matrix $z$ in $[\alpha]^{\dsos} \wbp [\beta]^{\dsos}$ is equal to the product of $S_i$'s coefficient in $\alpha\star\beta$ (over $i=1,\dots,t$). 
\end{claim}
If we have this claim, then we can prove \Cref{lem:disj} as follows. The product of any two graph matrices can be expressed as a sum of graph matrices, i.e., the coefficients of every ribbon realizing the same shape is the same, so we only need to analyze one ribbon for each shape. For scaled SSD shapes, \Cref{claim:z} says their coefficients in $[\gamma]^{\dsos}$ are the same as in $[\alpha]^{\dsos}\star[\beta]^{\dsos}$. \Cref{lem:disj} follows.
\end{proof}
\begin{proof}[Proof of \Cref{claim:z}]
Fix a scaled ribbon $R$ that realizes $z$. Then $R$ has $t$ disjoint components. The well-behaved product configurations for $[\alpha]^{\dsos} \wbp [\beta]^{\dsos}$ that result in $R$ can be characterized by all sets $A=\{(R_0,R'_0), \ (R_1,R_1'),\dots,\ (R_t,R_t')\}$ that satisfy the following: 
\begin{enumerate}
    \item $R_0 = R'_{0}$ is the ribbon such that $V(R_0) = U_{R_0} = V_{R_0} = U_{R} \cap V_{R}$ and $E(R_0) = \emptyset$.
    \item For all $i \in [t]$, $(R_i,R'_i)$ is a composable pair of scaled simple spider ribbons such that 
    \begin{enumerate}
        \item Either $R_i$ and $R'_i$ have the same circle vertex or exactly one of $R_i$ and $R'_i$ is trivial. 
        \item $U_{R_i} \cap V_{R_i} \cap U_{R_i} \cap V_{R_i} = \emptyset$.
        \item If $\alpha$ is left, then $|V_{R_i}| = |U_{R'_i}| \leq |U_{R_i}|$. If $\beta$ is right, then $|V_{R_i}| = |U_{R'_i}| \leq |V_{R'_i}|$.
    \end{enumerate}
    \item There is no ``cross-pair'' vertex intersections, i.e., $\big(V(R_i)\cup V(R_i')\big)\cap\big(V(R_j)\cup V(R_j')\big)=\emptyset$ if $i\neq j$. 
\end{enumerate}
Observe that the contribution of each pair $(R_i, R'_i)$ to the $\wbp$-product is equal to its contribution to the $\star$-product. In particular, the difference between $\star$ and $\wbp$ mentioned in \Cref{rmk:approx} will not cause any issues. This holds by the definitions of $\star$ and $\wbp$ when both $R_i$ and $R'_i$ have circle vertices. In the special case where $R_i$ or $R'_i$ has a trivial shape, it follows from the fact that all trivial shapes have coefficient 1 in $\alpha,\beta$, due to their consistency. 

If we count all sets of the form $\{(R_1,R_1'),\dots,(R_t,R_t')\}$ satisfying the above conditions with coefficient $\prod_{i=1}^t c_\alpha(R_i)c_\beta(R_i')$, then we get the coefficient of $R$ in $[\gamma]^{\dsos}$. This follows by simply expanding the definition of $[\gamma]^{\dsos}$.
\end{proof}

\begin{lemma}\label{cor:QSSD} 
Assuming $\truncation\geq 20\dsos^2$, $Q_{SSD} = [Q_{SS}]^{D}$.
\end{lemma}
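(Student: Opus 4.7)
The strategy is to combine the algebraic identity for $\SS{Q}$ from \Cref{lem:LQLSSbehavior} with the combinatorial identity for $\SSD{Q}$ from \Cref{cor:Q_wellbehaved_characterization}, using the $\dsos$-combination as a bridge. The key tool is \Cref{lem:disj}, which lets us commute $[\cdot]^{\dsos}$ with $\star$ (becoming $\wbp$), so long as the operands are consistent and one side is left or right.

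First I would verify three coefficient-matching identities. (i) $[\SS{L}]^{\dsos} = \SSD{L}$ and $[\SS{L}^{\top}]^{\dsos} = \SSD{L}^{\top}$: for an SSD $\alpha = \mset{S_1,\dots,S_t}$ with $l(\alpha), r(\alpha) \leq \dsos$, the coefficient in $\SSD{L}$ is $\eta_{\alpha}\lambda_{\alpha} = \prod_i \eta_{S_i}\lambda_{S_i}$, matching the $\dsos$-combination formula $\prod_i c_{\SS{L}}(S_i)$; the total-size constraint in $L$ is harmless since any such SSD has size $\leq 6\dsos^2 \leq \truncation$. (ii) Identically, $[\SS{M}]^{\dsos} = \SSD{M}$. (iii) $\hat{Q} := \SS{L} + \SS{(Q_0)} + \SS{L}^{\top} - 2\Id$ equals $\SS{M}$: a case analysis on each simple spider $S(i,j;u)$ shows it contributes $\eta_S\lambda_S$ to exactly one of $\SS{L}$, $\SS{L}^{\top}$, $\SS{(Q_0)}$ according to whether $i>j$, $i<j$, or $0<i=j$, while trivial spiders contribute $1+1+1-2=1$ after subtracting $\Id$; each case matches the coefficient in $\SS{M}$.

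Next, I would apply $[\cdot]^{\dsos}$ to the identity $\SS{L}\star \SS{Q}\star \SS{L}^{\top} = \hat{Q}$ from \Cref{lem:LQLSSbehavior}. Since $\SS{L}$ is consistent and left, $\SS{Q}$ is consistent (by \Cref{lem:QSSPSD}, item \ref{item:Xbasic}), and $\SS{L}^{\top}$ is consistent and right, two applications of \Cref{lem:disj} give
\[
[\SS{L}]^{\dsos} \wbp [\SS{Q}]^{\dsos} \wbp [\SS{L}^{\top}]^{\dsos} \;=\; [\hat{Q}]^{\dsos} \;=\; [\SS{M}]^{\dsos} \;=\; \SSD{M}.
\]
Combining with \Cref{cor:Q_wellbehaved_characterization}, which states $\SSD{L}\wbp \SSD{Q}\wbp \SSD{L}^{\top} = \SSD{M}$, yields
\[
\SSD{L}\wbp \SSD{Q}\wbp \SSD{L}^{\top} \;=\; \SSD{L}\wbp [\SS{Q}]^{\dsos}\wbp \SSD{L}^{\top}.
\]

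The remaining step, and the place I expect the main subtlety, is cancelling $\SSD{L}$ and $\SSD{L}^{\top}$ on both sides. For this I would pass through the $\SALD$-inverse $\SS{L}^{-1}$ (which exists because $\rho(\SS{L})$ is lower-triangular with $1$'s on the diagonal by \eqref{eq:rho0LSS}), and show that $\SS{L}^{-1}$ is itself consistent and left: lower-triangularity is preserved under inversion (so \Cref{obs:leftright} gives left), and a standard block-inverse argument shows that the leading-principal-submatrix condition for $\rho(\SS{L})$ passes to $\rho(\SS{L}^{-1})$, giving consistency via \Cref{item:consistent} of \Cref{prop:rhoproperties}. Then \Cref{lem:disj} produces $[\SS{L}^{-1}]^{\dsos}\wbp \SSD{L} = [\SS{L}^{-1}\star \SS{L}]^{\dsos} = [\Id]^{\dsos}$, and one checks directly from \Cref{def:Dcombination} that $[\Id]^{\dsos} = \sum_{u=0}^{\dsos} \mset{S(0,0;u)}$ acts as the identity under $\wbp$ on every SSD with index sizes at most $\dsos$ (the trivial SSD of the matching size selects $\gamma$, all other summands vanish by dimension mismatch). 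Using associativity of $\wbp$ (\Cref{lem:wbp-associative}) to sandwich both sides of the displayed equation between $[\SS{L}^{-1}]^{\dsos}$ and $[(\SS{L}^{-1})^{\top}]^{\dsos}$ then collapses $\SSD{L}$ and $\SSD{L}^{\top}$ to identities, leaving $\SSD{Q} = [\SS{Q}]^{\dsos}$, as required.
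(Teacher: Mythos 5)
Your proof is correct, and the first two-thirds of it (deriving the two identities $\SSD{L}\wbp[\SS{Q}]^{\dsos}\wbp\SSD{L}^{\top}=\SSD{M}$ and $\SSD{L}\wbp\SSD{Q}\wbp\SSD{L}^{\top}=\SSD{M}$ from \Cref{lem:LQLSSbehavior}, \Cref{lem:disj}, and \Cref{cor:Q_wellbehaved_characterization}, plus the coefficient checks (i)--(iii)) is essentially the same as the paper's. Where you diverge is the final cancellation step, and the divergence is genuine. The paper concludes by observing that the $\R$-linear map $Y\mapsto\SSD{L}\wbp Y\wbp\SSD{L}^{\top}$ on the space of SSD combinations has trivial kernel: pick the smallest nonzero SSD shape $\alpha$ in $Y$ under a total order respecting edge count, and note that only $\Id\wbp\alpha\wbp\Id$ can reproduce $\alpha$, so the coefficient of $\alpha$ survives. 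That is a two-sentence combinatorial argument. You instead construct the explicit $\star$-inverse $\SS{L}^{\sinv}\in\SALD$ (existence from lower-triangularity of $\rho(\SS{L})$ with unit diagonal), verify it is left and consistent (the block-lower-triangular inverse identity shows that taking inverses commutes with taking leading principal submatrices, so the lpc condition from \Cref{item:consistent} of \Cref{prop:rhoproperties} is preserved), push it through $[\cdot]^{\dsos}$ by \Cref{lem:disj} to obtain $[\SS{L}^{\sinv}]^{\dsos}\wbp\SSD{L}=[\Id]^{\dsos}$, and verify $[\Id]^{\dsos}$ is the $\wbp$-identity on SSDs of index size at most $\dsos$. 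Both establish that the solution of $\SSD{L}\wbp X\wbp\SSD{L}^{\top}=\SSD{M}$ is unique. Your route is more constructive and mirrors the algebraic machinery already set up in \Cref{lem:QSSPSD} and \Cref{lem:gammanorm} (where $\gamma^{\sinv}$ is analyzed anyway), so nothing new is really needed; the paper's minimum-edge observation is shorter and bypasses the need to know that $\SS{Q}$ itself is consistent at this point, which your argument uses via \Cref{item:Xbasic} of \Cref{lem:QSSPSD} in order to apply \Cref{lem:disj} to the triple $\SS{L}\star\SS{Q}\star\SS{L}^{\top}$. That reliance is fine since \Cref{lem:QSSPSD} precedes this lemma, but it does make your proof slightly less self-contained. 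No gaps.
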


\begin{proof}
Since $L_{SSD} = [L_{SS}]^{D}$, $L^{\top}_{SSD} = [L^{\top}_{SS}]^{D}$, and $L_{SS} \SALDprod Q_{SS} \SALDprod L^{\top}_{SS} = (L + Q_0 + L^{\top} - 2\Id)_{SS}$ (\Cref{lem:LQLSSbehavior}), by applying Lemma \ref{lem:disj} we have:
\begin{equation}\label{eq:QSS^D_satisfies_equation}
L_{SSD} \wbp [Q_{SS}]^{D} \wbp L^{\top}_{SSD} = [(L + Q_0 + L^{\top} - 2\Id)_{SS}]^{D}.
\end{equation}
From the definition of $L,Q_0$ and $M$, $[(L + Q_0 + L^{\top} - 2\Id)_{SS}]^{D}=\SSD{M}$. Putting these pieces together, we have that $L_{SSD} \wbp [Q_{SS}]^{D} \wbp L^{\top}_{SSD}=\SSD{M}$.

Now consider the following linear equation in $X$ in the $\R$-algebra of simple spider disjoint unions with multiplication $\wbp$:
\begin{equation}\label{eq:eq_for_QSSD}
    L_{SSD} \wbp X \wbp L^{\top}_{SSD} = \SSD{M}.
\end{equation} 
By the paragraph above, $[\SS{Q}]^\dsos$ satisfies this equation. By \Cref{lem:Q_wellbehaved_characterization}, $\SSD{Q}$ also satisfies it when $\truncation \geq 20\dsos^2$. However, the kernel of this linear form is zero. To see this, assume $Y\neq 0$ and consider the smallest nonzero SSD shape $\alpha$ in $Y$ (under any total order on SSD shapes that respects the number of edges). The only term in $L_{SSD} \wbp Y \wbp L^{\top}_{SSD}$ that results in $\alpha$ is $\Id \wbp \alpha \wbp \Id$, so the coefficient of $\alpha$ in the result is the same as in $Y$, and $Y$ cannot be in the kernel. Thus, equation \eqref{eq:eq_for_QSSD} uniquely determines $X$ so $\SSD{Q}=[\SS{Q}]^\dsos$. 
\end{proof}

Recall that by Lemma \ref{lem:QSSPSD}, $Q_{SS} = \alpha_0\star\alpha_0^{\top}$ for some $\alpha_0\in\SALD$ that is left, good and consistent. By Lemma \ref{lem:disj} and Lemma \ref{cor:QSSD}, we have that 
\begin{equation}\label{eq:sqrtQSSD}
Q_{SSD} = [Q_{SS}]^{D} = [\alpha_0]^{\dsos} \wbp ([\alpha_0]^{\dsos})^{\top}.
\end{equation}
Using this, we can write 
\begin{equation}\label{eq:QSSD_and_alpha0}
\begin{aligned}
Q_{SSD} &= [\alpha_0]^{\dsos} \cdot ([\alpha_0]^{\dsos})^{\top} + \left([\alpha_0]^{\dsos} \wbp ([\alpha_0]^{\dsos})^{\top} - [\alpha_0]^{\dsos} \cdot ([\alpha_0]^{\dsos})^{\top}\right).
\end{aligned}
\end{equation}
To prove that the expression \eqref{eq:QSSD_and_alpha0} is positive-definite, we show: 
\begin{enumerate}
    \item \label{item:QSSD_and_alpha0,1} 
    $[\alpha_0]^{\dsos}$ is not too close to being singular; and 
    \item \label{item:QSSD_and_alpha0,2}
    $\norm{([\alpha_0]^{\dsos} \wbp ([\alpha_0]^{\dsos})^{\top} - [\alpha_0]^{\dsos} \cdot ([\alpha_0]^{\dsos})^{\top}}$ is small.
\end{enumerate}
We prove item \ref{item:QSSD_and_alpha0,2} in Section \Cref{sec:error_analysis} (\Cref{lem:alphazeroerror}). Here, we prove item \ref{item:QSSD_and_alpha0,1} by the following lemma where we recall that $c_\alpha(i,j;0)$ denotes the coefficient of $S(i,j;0)$ in $\alpha\in\SALD$.

\begin{lemma}[Preserving well-conditionedness]\label{lem:gammanorm}
Suppose $\gamma\in\SALD$ satisfies two conditions: 
\begin{enumerate}[label=(\roman*)]
    \item\label{item:lgc}$\gamma$ is left, good, and consistent. 
    \item\label{item:sigmamingamma} $\sigma_{\min}\left(\rho(\gamma)\right)>0$.
\end{enumerate} 
Then $\gamma^{\sinv}$ exists, and it is left, good, and consistent. 
If additionally there are positive numbers 
\begin{equation}\label{eq:Cup}
A_0,\ldots,A_{\dsos}, \ 
B_0,\ldots,B_{\dsos}
\end{equation}
such that $A_0=B_0=1$, $A_i\geq \max\limits_{j\leq i}\big\{\abs{c_\gamma(i,j;0)}\}$, $B_i\geq \max\limits_{j\leq i}\big\{\abs{c_{\gamma^{\sinv}}(i,j;0)}\}$, and 
\begin{equation}\label{eq:ABcondition}
A_i A_j\leq A_{i+j},\ \ 
B_i B_j\leq B_{i+j}\ \ \text{for all $i,j$ where $i+j\leq \dsos$,}
\end{equation} 
then assuming $\truncation\geq\max\{100\dsos,\ 2k\log n\}$, 
the following hold for $[\gamma]^\dsos$, $[\gamma^{\sinv}]^\dsos$ as real matrices:
\begin{enumerate}
    \item\label{item:Did} 
    $[\gamma]^\dsos\cdot[\gamma^{\sinv}]^\dsos=\Id + E$ where $\norm{E} \leq A_{\dsos}B_{\dsos} \truncation^{20 \Cuniv \dsos} n^{-\frac{\eps}{12}}$. 
    \item\label{item:Dnorm} 
    $\sigma_{\min}([\gamma]^\dsos) \geq \left(1- A_{\dsos}B_{\dsos} \truncation^{20 \Cuniv \dsos} n^{-\frac{\eps}{12}} \right) B_\dsos^{-1} \truncation^{-32 \Cuniv \dsos}$.
\end{enumerate}
\end{lemma}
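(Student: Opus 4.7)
The plan is to build on Lemma \ref{lem:disj} as the bridge between inversion inside $\SALD$ and the matrix inverse of $[\gamma]^\dsos$: first I will verify that $\gamma^{\sinv}$ inherits the left/good/consistent properties via the representation $\rho$, then I will use Lemma \ref{lem:disj} to turn $\gamma \star \gamma^{\sinv} = 1_{\SALD}$ into an approximate matrix identity $[\gamma]^\dsos \cdot [\gamma^{\sinv}]^\dsos \approx \Id$, and finally I will read off $\sigma_{\min}([\gamma]^\dsos)$ from this.

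For the structural properties, Observation \ref{obs:leftright} says $\gamma$ left iff $\rho(\gamma)$ is lower-triangular, so its inverse is lower-triangular as well (existence being guaranteed by $\sigma_{\min}(\rho(\gamma))>0$), making $\gamma^{\sinv}$ left. Consistency of $\gamma^{\sinv}$ follows from Proposition \ref{prop:rhoproperties}\ref{item:consistent}: $\gamma$ consistent means $\rho(\gamma)$ satisfies the leading-principal-submatrix condition; by \Cref{fct:consistent} the inverse of a lower-triangular matrix satisfying lpc satisfies lpc. Goodness follows because $\rho(\gamma)$ takes the $2\times2$ block-lower-triangular form $\left(\begin{smallmatrix}\Id & 0\\ B & T\end{smallmatrix}\right)$ on the partition at size $\lceil k/2\rceil$ (as in \eqref{eq:rho0LSS}), a shape preserved under inversion.

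Next I will apply Lemma \ref{lem:disj} to the identity $\gamma \star \gamma^{\sinv} = 1_{\SALD}$. The unit $1_{\SALD}$ is consistent with its $\dsos$-combination equal to the matrix identity $\Id$ (being the sum over trivial SSD shapes with coefficient $1$), and by Proposition \ref{prop:consistent} $\gamma \star \gamma^{\sinv}$ is consistent, so the lemma gives $[\gamma]^\dsos \wbp [\gamma^{\sinv}]^\dsos = \Id$. The error $E := [\gamma]^\dsos \cdot [\gamma^{\sinv}]^\dsos - \Id$ then equals the sum of all non-well-behaved product-configuration terms between the SSDs in $[\gamma]^\dsos$ and $[\gamma^{\sinv}]^\dsos$. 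Invoking the error analysis of \Cref{sec:error_analysis} (specifically Lemma \ref{lem:alphazeroerror}, applied to the pair $([\gamma]^\dsos, [\gamma^{\sinv}]^\dsos)$ with coefficient envelopes $A_i$ and $B_j$), every such configuration contributes at most $\truncation^{O(\Cuniv\dsos)} n^{-\eps/12}$ times the relevant coefficient product; the submultiplicativity \eqref{eq:ABcondition} lets the coefficient products be absorbed into $A_\dsos B_\dsos$, yielding $\|E\| \leq A_\dsos B_\dsos \truncation^{20\Cuniv\dsos} n^{-\eps/12}$, i.e.\ Item \ref{item:Did}. For Item \ref{item:Dnorm}, I then use $\sigma_{\min}([\gamma]^\dsos) \cdot \|[\gamma^{\sinv}]^\dsos\| \geq 1-\|E\|$ and upper bound $\|[\gamma^{\sinv}]^\dsos\|$ by summing norm bounds from Theorem \ref{thm:norm_control} for each scaled SSD shape (all of total size $O(\dsos)$) weighted by coefficients bounded via submultiplicativity by $B_\dsos$, which gives $\|[\gamma^{\sinv}]^\dsos\| \leq B_\dsos\truncation^{32\Cuniv\dsos}$.

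The main obstacle is Step 3, the error bound. The difficulty is that the pairs $([\gamma]^\dsos, [\gamma^{\sinv}]^\dsos)$ are general SSD-combinations (not tied to the specific $L, Q_0, L^\top$ setting), so I must verify that the charging scheme of \Cref{sec:error_analysis} applies uniformly and that the dichotomy of Theorem \ref{thm:erroranalysis} (well-behaved $\sim 1$ versus $n^{-\Omega(\eps)}$) is robust enough to absorb both the combinatorial count of non-well-behaved configurations and the coefficient products $A_\dsos B_\dsos$ without swamping the $n^{-\eps/12}$ gap. Once this is in place, Items \ref{item:Did} and \ref{item:Dnorm} fall out as summarized above.
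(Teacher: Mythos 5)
Your proposal follows essentially the same route as the paper: establish the structural properties of $\gamma^{\sinv}$ through $\rho$, push the identity $\gamma\star\gamma^{\sinv}=1_{\SALD}$ through the commutative diagram (\Cref{lem:disj}) to obtain $[\gamma]^\dsos\wbp[\gamma^{\sinv}]^\dsos=\Id$, control the gap between $\wbp$ and the real matrix product via \Cref{lem:alphazeroerror} using the submultiplicativity \eqref{eq:ABcondition} to absorb coefficient products into $A_\dsos B_\dsos$, then bound $\norm{[\gamma^{\sinv}]^\dsos}$ (via a count of SSD shapes and \Cref{thm:norm_control}) and read off $\sigma_{\min}([\gamma]^\dsos)$. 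This matches the paper's proof step by step.

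One slip worth fixing: your justification of goodness of $\gamma^{\sinv}$ appeals to the block-lower-triangular form $\left(\begin{smallmatrix}\Id & 0\\ B & T\end{smallmatrix}\right)$ ``as in \eqref{eq:rho0LSS}'' and asserts that this shape is preserved under inversion. That form belongs to $\rho_0(\SS{L})$, which is \emph{not} good; and if $B\neq 0$, $\left(\begin{smallmatrix}\Id & 0\\ B & T\end{smallmatrix}\right)^{-1}=\left(\begin{smallmatrix}\Id & 0\\ -T^{-1}B & T^{-1}\end{smallmatrix}\right)$ still has a nonzero lower-left block, which does not by itself confine the inverse to the support of good simple spiders. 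The right observation is that for $\gamma$ good \emph{and} left, $\rho_0(\gamma)$ is block-\emph{diagonal} at the $\lceil k/2\rceil$ cut --- i.e.\ $B=0$ --- because off-diagonal nonzero entries of $\rho_0(\gamma)$ can only occur in $\big[\lceil k/2\rceil, D\big]^2$ when all nonzero coefficients come from good simple spiders. It is this block-diagonal structure (together with the lower-triangularity of $T$) that the inverse preserves, which is exactly the ``inspection as in the proof of \Cref{lem:QSSPSD}, \Cref{item:XPSD}'' the paper invokes. With that correction, the rest of your plan, including the $\sigma_{\min}([\gamma]^\dsos)\cdot\norm{[\gamma^{\sinv}]^\dsos}\geq 1-\norm{E}$ step, is the correct route and matches the paper.
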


\begin{proof}
We denote the direct-sum decomposition of $\gamma$ by $\gamma=\Sum_{i=0}^\dsos\gamma_i$, that is, $\gamma_i:=\rho^{-1}\left(\rho_i(\gamma)\right)$. 
    
Note that condition \ref{item:sigmamingamma} implies $\rho_0(\gamma)$ is invertible. Since $\gamma$ is consistent, each $\rho_i(\gamma)$ is a leading principal submatrix of $\rho_0(\gamma)$ and so is also invertible, hence $\rho(\gamma)^{-1}$ exists, and so does $\gamma^{\sinv}$ by applying $\rho^{-1}$. The lower-triangularity of $\rho(\gamma)^{-1}$ holds since it is the inverse of a lower-triangular matrix. Consequently, $\gamma^{\sinv}$ is left. Moreover, $\gamma^{\sinv}$ is consistent and good by a similar inspection of $\rho(\gamma)$ as in the proof for $\alpha_0$ in \Cref{lem:QSSPSD}, \Cref{item:XPSD}. 
    
Below, we prove items \ref{item:Did} and \ref{item:Dnorm} of \Cref{lem:gammanorm} in three steps.

{\bf 1. We upper bound the coefficient magnitudes in $[\gamma]^\dsos$ and $[\gamma^{\sinv}]^\dsos$.}
    
\noindent For a simple spider $S(a,b;u)$, define its heavy-side weight to be $\hw(S):=\max\{a+u,b+u\}$. For an SSD shape $\alpha$ in $[\gamma^{\sinv}]^\dsos$, the number $l$ of its simple spider components is at most $\dsos$, and the sum of the heavy-side weights of its components $\sum_{i=1}^l\hw_i$ is at most $\dsos$. So by the monotonicity of these numbers, we have $|c_{[\gamma]^\dsos}(\alpha)| \leq \prod_{i=1}^l A_{\hw_i}$. By property \eqref{eq:ABcondition} of $\{A_i\}$, the RHS as a function on $(\hw_1,\ldots,\hw_l)$ subject to $\sum_{i=1}^l \hw_i\leq\dsos$ is maximized at $(0,0,\ldots,\dsos)$, thus 
\begin{equation}\label{eq:gammaDcombbound}
\linfty{[\gamma]^\dsos} \leq A_{\dsos}.
\end{equation}
The same argument works for $c_{[\gamma^{\sinv}]^\dsos}$ using numbers $\{B_i\}$, giving the bound
\begin{equation}\label{eq:gammasinvDcombbound}
\linfty{[\gamma^{\sinv}]^\dsos}\leq B_{\dsos}.
\end{equation}
    
{\bf 2. Proof of \Cref{item:Did}.} 
Since $[\gamma]^\dsos$ and $[\gamma^{\sinv}]^\dsos$ are linear combinations of disjoint unions of good simple spiders, we have 
\begin{equation}
\begin{aligned}
\norm{[\gamma]^\dsos \cdot [\gamma^{\sinv}]^\dsos - \Id}
&\stackrel{\Cref{lem:disj}}{=}\norm{[\gamma]^\dsos \cdot [\gamma^{\sinv}]^\dsos - [\gamma]^\dsos \wbp [\gamma^{\sinv}]^\dsos}\\
&\stackrel{\Cref{lem:alphazeroerror}}{\leq}\linfty{[\gamma]^\dsos}\cdot \linfty{[\gamma^{\sinv}]^{\dsos}}\cdot 
\truncation^{20 \Cuniv \dsos} n^{-\frac{\eps}{12}}.
\end{aligned}
\end{equation}
Plugging \eqref{eq:gammaDcombbound} and \eqref{eq:gammasinvDcombbound} into the above, we get $\norm{[\gamma]^\dsos \cdot [\gamma^{\sinv}]^\dsos - \Id} \leq A_D B_D  \truncation^{20 \Cuniv \dsos} n^{-\frac{\eps}{12}}$. \Cref{item:Did} follows.  

{\bf 3. Proof of \Cref{item:Dnorm}.} We first show a claim:
\begin{claim}\label{claim:numberofSSD}
There are at most $5^{2\dsos}$ many non-equivalent SSDs having left and right index size $\leq \dsos$. 
\end{claim}
\begin{proof}
If $\dsos \leq 3$, it is not hard to check this statement directly. If $\dsos \geq 4$, We can bound the number of possible SSDs as follows. Each SSD can be represented by a set of nontrivial simple spiders of the form $S(i,j;0)$ where $i+j > 0$ together with a trivial simple spider $S(0,0;u)$ for some $u \geq 0$. We first specify the number of non-trivial simple spiders in the SSD. There are at most $D+1$ choices for this as there are at most $D$ non-trivial simple spiders in the SSD. For each non-trivial simple spider, we go through the square vertices one by one and specify the following data.
\begin{enumerate}
    \item Is the square vertex a left leg or a right leg?
    \item Is the square vertex the last square vertex of the current simple spider?
\end{enumerate}
After specifying all of the non-trivial simple spiders, if there are fewer than $2\dsos$ square vertices then we specify whether there is another square vertex or not. If so, we increase $u$ by $1$ and then repeat this process if we still have fewer than $2\dsos$ square vertices. If not, we stop.

Since there are at most $2\dsos$ square vertices and there are at most $4$ possibilities for each square vertex, the total number of possibilities is at most $(\dsos+1)4^{2\dsos} \leq 5^{2\dsos}$.
\end{proof}
    
We now upper bound $\norm{[\gamma^{\sinv}]^\dsos}$. Since each good SSD in $[\gamma^{\sinv}]^\dsos$ has total size at most $6\dsos$, the norm of each (scaled) good SSD $\alpha$ is at most $((2\dsos)^2\cdot 2k\log n)^{6\Cuniv\dsos}$ by \Cref{thm:norm_control} and the fact that $\lambda_\alpha\Anorm(\alpha)=1$. Combining this, \Cref{claim:numberofSSD}, and the coefficient bound \eqref{eq:gammasinvDcombbound}, we have:
\begin{equation}\label{eq:gammasinvDcombnorm}
\norm{[\gamma^{\sinv}]^\dsos}\leq B_\dsos 5^{2\dsos} (8k\dsos^2\log n)^{6\Cuniv \dsos} \leq B_D\truncation^{32\Cuniv\dsos}
\end{equation}
where the last step used $\Cuniv\geq 1$ and $\truncation\geq\{2k\log n,\ 100\dsos\}$. 
    
Finally, recall that $[\gamma]^{\dsos}=(\Id+E)\left([\gamma^{\sinv}]^{\dsos}\right)^{-1}$ by \Cref{item:Did}, so by taking $\sigma_{\min}([\gamma]^\dsos)$ and applying the basic properties of $\sigma_{\min}$ that $\sigma_{\min}(X^{-1})=1/\norm{X}$, $\sigma_{\min}(XY)\geq\sigma_{\min}(X)\sigma_{\min}(Y)$ and $\sigma_{\min}(X+Y)\geq \sigma_{\min}(X)-\norm{Y}$, we have:
\begin{align}
\sigma_{\min}\left([\gamma]^{\dsos}\right)&\geq (1-\norm{E})\cdot\sigma_{\min}\left(\left([\gamma^{\sinv}]^{\dsos}\right)^{-1}\right) \nonumber\\
&=(1-\norm{E})\cdot\norm{[\gamma^{\sinv}]^\dsos}^{-1}.\label{eq:sigmamingammaDcomb}
\end{align}
By plugging into \eqref{eq:sigmamingammaDcomb} the bounds on $\norm{E}$, $\norm{[\gamma^{\sinv}]^\dsos}$ from \Cref{item:Did} and \eqref{eq:gammasinvDcombnorm}, we get 
\[
\sigma([\gamma]^{\dsos})\geq \left(1- A_{\dsos}B_{\dsos} \truncation^{20 \Cuniv \dsos} n^{-\frac{\eps}{12}} \right) B_\dsos^{-1} \truncation^{-32 \Cuniv \dsos}. \qedhere
\]
\end{proof}

We now prove \Cref{lem:PDQSSD} using \Cref{lem:gammanorm}. 

\PDQSSD*

\begin{proof}
Recall that $[Q]_{\wellbehaved} = [Q_{SS}]^{D} = [\alpha_0]^{\dsos} \wbp ([\alpha_0]^{\dsos})^{\top}$, where $\alpha_0\in\SALD$ is as in \eqref{eq:sqrtQSSD}. 
We want to apply \Cref{lem:gammanorm} with $\gamma \leftarrow \alpha_0$ and 
\[
A_i\leftarrow 2^{i(i+4)}\cdot \cu^{2i^2+2i},\ \ 
B_i\leftarrow 8^i\cdot \cu^{2i}\cl^{i/2}.
\]
For this, we note that $A_0=B_0=1$, condition \eqref{eq:ABcondition} holds, and $A_i\geq \max\limits_{j\leq i}\big\{\abs{c_\gamma(i,j;0)}\}$, $B_i\geq \max\limits_{j\leq i}\big\{\abs{c_{\gamma^{\sinv}}(i,j;0)}\}$ by \eqref{eq:alpha0upperbound} and \eqref{eq:alpha0sinvupperbound}. Thus, \Cref{lem:gammanorm} applies, whose \Cref{item:Dnorm} yields:
\begin{align}
[\alpha_0]^{\dsos} \cdot ([\alpha_0]^{\dsos})^{\top}
&\succ \left(1-(4\cu)^{4\dsos^2} \cl^{\dsos} \truncation^{20 \Cuniv \dsos} n^{-\frac{\eps}{12}}\right) (3\cu\cl)^{-2\dsos}\truncation^{-32\Cuniv\dsos}.\nonumber
\end{align}
We can simplify this bound using the assumption $(4\cu)^{4\dsos^2}\cl^\dsos\truncation^{20\Cuniv\dsos}<n^{\frac{\eps}{30}}$ \eqref{eq:condition_QSSD}. As a result, 
\begin{equation}\label{eq:alpha0_PD}
    [\alpha_0]^{\dsos} \cdot ([\alpha_0]^{\dsos})^{\top} \succ 
    0.99\cdot (3\cu\cl)^{-2\dsos} \truncation^{-32\Cuniv\dsos}.
\end{equation}
On the other hand, $[\alpha_0]^\dsos$ is good and $\linfty{[\alpha_0]^\dsos}\leq 2^{\dsos^2+4\dsos}\cu^{\dsos^2+2\dsos}$  (\Cref{lem:QSSPSD}), so by \Cref{lem:alphazeroerror},
\begin{align}
\Norm{[\alpha_0]^\dsos\cdot ([\alpha_0]^\dsos)^\top - [\alpha_0]^\dsos\wbp([\alpha_0]^\dsos)} 
&\leq (2\cu)^{2\dsos^2+4\dsos}\truncation^{20 \Cuniv \dsos} n^{-\frac{\eps}{12}}\nonumber\\
&\leq n^{\frac{\eps}{30} - \frac{\eps}{12}}
=n^{-\frac{\eps}{20}} \label{eq:alpha0D_wbp_approx}
\end{align}
where the second inequality used assumption \eqref{eq:condition_QSSD}. 

By \eqref{eq:alpha0_PD} and \eqref{eq:alpha0D_wbp_approx}, $[Q]_{\wellbehaved} = [\alpha_0]^\dsos\wbp([\alpha_0]^\dsos) \succ \left( 0.9(3\cu\cl)^{-2\dsos} \truncation^{-32\Cuniv\dsos} -  n^{-\frac{\eps}{20}}\right)\Id$. Again, we can simplify this using $n^{-\frac{\eps}{20}}< 0.4\cdot (3\cu\cl)^{-2\dsos} \truncation^{-32\Cuniv\dsos}$ by \eqref{eq:condition_QSSD}. As a result, 
\[
[Q]_{\wellbehaved}\succ \frac{1}{2}(3\cu\cl)^{-2\dsos} \truncation^{-32\Cuniv\dsos} \Id \succeq (5\cu\cl)^{-2\dsos} \truncation^{-32\Cuniv\dsos} \Id.
\]
Finally, given \eqref{eq:condition_QSSD}, we have $(5\cu\cl)^{-2\dsos} \truncation^{-32\Cuniv\dsos} \geq n^{-\frac{\eps}{30}}$, so \Cref{lem:PDQSSD} follows.
\end{proof}

	\section{Error Analysis}\label{sec:error_analysis}
In this section, we upper bound the terms that we have omitted as ``errors'' in the matrix multiplications so far. This analysis does not depend on the results in \Cref{sec:psdness-qSS} or \Cref{sec:psdness-qSSD}. The main conclusions of this section are that $Q$ is well-approximated by $\SSD{Q}$ up to an additive $n^{-\Omega(\eps)}$ error (\Cref{thm:erroranalysis}) and that the other error terms are small (\Cref{lem:Mcan}, \Cref{lem:LQLT_truncation}, and \Cref{lem:alphazeroerror}). We prove these statements using \Cref{thm:erroranalysis}, which is the main technical result of this section. 

We will need to show in various situations that the product of two or more scaled shapes has a small norm. Here, the key idea is to keep track of the exponent over $n$ from the product of the coefficients and from the norm bound in \Cref{thm:norm_control}. The full analysis is involved (see \Cref{sec:analyzemiddle} and \Cref{sec:intersection}). To carry out this analysis, we first set up some definitions and properties.

\subsection{Properties of Square Separators}\label{sec:sepprop}
We recall some familiar properties of vertex separators and vertex disjoint paths. Unless specified otherwise, all shapes have square-vertex indices and edges go between a circle and a square.

\begin{lemma}[Menger's theorem for bipartite graphs]\label{lem:menger}
For any shape $\alpha$, the maximum number of square-disjoint paths is equal to the size of the minimum square separator between $U_{\alpha}$ and $V_{\alpha}$. 
\end{lemma}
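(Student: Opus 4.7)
The plan is to reduce this statement to the classical max-flow / min-cut theorem by constructing an auxiliary capacitated directed flow network in which the only bottleneck is at the square vertices. One inequality is easy and standard: if $S$ is any square separator and $\mathcal{P}$ is a collection of square-disjoint $U_\alpha$-to-$V_\alpha$ paths, then each path in $\mathcal{P}$ must pass through at least one vertex of $S$, and since the paths share no square vertex, distinct paths hit distinct elements of $S$, giving $|\mathcal{P}| \leq |S|$. The substance of the lemma is the reverse inequality.

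For the reverse direction, I would build a flow network $N$ as follows. Add a source $s$ and sink $t$. For each $u \in U_\alpha$, add a directed edge $s \to u$, and for each $v \in V_\alpha$, add a directed edge $v \to t$. Each square vertex $x \notin U_\alpha \cup V_\alpha$ is split into $x_{\text{in}}, x_{\text{out}}$ joined by a directed edge of capacity $1$; square vertices in $U_\alpha \cup V_\alpha$ can be handled by the usual convention of giving them capacity $\infty$ (they are already fixed endpoints and cannot separate themselves). Each circle vertex $w$ is similarly split into $w_{\text{in}}, w_{\text{out}}$ but with capacity $\infty$, so circles impose no constraint. Finally, each undirected edge of $\alpha$ between a square $x$ and a circle $w$ is replaced by a pair of directed edges $x_{\text{out}} \to w_{\text{in}}$ and $w_{\text{out}} \to x_{\text{in}}$ of capacity $\infty$. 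Call this network $N$.

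Now I would apply the integral max-flow / min-cut theorem to $N$. Integrality of the max flow combined with flow decomposition yields a family of $s$-$t$ paths carrying one unit of flow each, and because every square vertex outside $U_\alpha \cup V_\alpha$ has capacity $1$, these paths are square-disjoint; projecting back to $\alpha$ produces a family of square-disjoint $U_\alpha$-to-$V_\alpha$ paths of size equal to the max flow. Conversely, a minimum $s$-$t$ cut of finite capacity can only cut edges of finite capacity, which are exactly the splitting edges $x_{\text{in}} \to x_{\text{out}}$ of square vertices; the set $S$ of such squares is a square separator of $\alpha$ of size equal to the min cut. By max-flow = min-cut, the maximum number of square-disjoint paths equals the size of the minimum square separator.

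The only delicate point I anticipate is bookkeeping around squares that lie in $U_\alpha \cap V_\alpha$ or are already endpoints: one must decide by convention whether such a vertex counts as a length-$0$ path on its own and whether it may appear in a separator. Following the convention stated in the paper (paths may be of length zero, and minimum square separators are defined to cover all paths between $U_\alpha$ and $V_\alpha$), such a shared endpoint contributes one unit to both sides and is handled by giving it infinite internal capacity and treating it as automatically separated on itself; this does not affect either the max-flow computation or the counting argument above.
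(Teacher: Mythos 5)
Your max-flow/min-cut approach is a genuinely different route from the paper's proof, which instead replaces every circle vertex with a fresh clique of $|V(\alpha)|+1$ new vertices and then invokes classical vertex-Menger for general graphs. Both routes are legitimate in principle. However, your construction has a real bug in exactly the place you flag as ``delicate'': giving the square vertices in $U_\alpha \cup V_\alpha$ infinite internal capacity is wrong.

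The ``usual convention'' of assigning the terminals infinite capacity comes from the two-vertex form of Menger's theorem, where one seeks internally vertex-disjoint $u$-$v$ paths and the separator is required to avoid $u$ and $v$. That is not the situation here. In the set version in question, the minimum square separator is allowed to --- and often must --- contain vertices of $U_\alpha$ or $V_\alpha$. For instance, if $U_\alpha=\{u\}$, $V_\alpha=\{v\}$ and a single circle $w$ is adjacent to both, the only square separators are $\{u\}$ and $\{v\}$. In your network this configuration has no finite $s$-$t$ cut at all, while the correct answer is $1$. Similarly, the paths counted by the lemma are required to be disjoint on \emph{all} square vertices including endpoints: if $U_\alpha=\{u\}$ and two circles $w_1,w_2$ join $u$ to $v_1,v_2\in V_\alpha$, the correct max is $1$, but your flow network carries a flow of $2$ through the uncapacitated $u$. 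So both directions of your argument break: the min cut is not guaranteed to be a square separator, and the flow decomposition is not guaranteed to give square-disjoint paths. The ad hoc patch at the end (``treating it as automatically separated on itself'') does not repair this, because the issue is not confined to $U_\alpha\cap V_\alpha$.

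The fix is clean: split \emph{every} square vertex (including those in $U_\alpha\cup V_\alpha$) into $x_{\text{in}}\to x_{\text{out}}$ with capacity $1$, attach $s\to u_{\text{in}}$ for $u\in U_\alpha$ and $v_{\text{out}}\to t$ for $v\in V_\alpha$ with infinite capacity, and keep circle vertices and graph edges uncapacitated. Then every finite $s$-$t$ cut really is a set of square-splitting arcs (hence a square separator), integral flow decomposition yields paths disjoint on all square vertices, and a length-$0$ path at $v\in U_\alpha\cap V_\alpha$ is correctly captured by the unit-capacity path $s\to v_{\text{in}}\to v_{\text{out}}\to t$. With this correction your argument is a valid alternative to the paper's clique-blowup proof; neither is shorter than the other, but the flow version does have the advantage of directly producing both the path system and the separator from one LP duality, rather than routing through a reduction lemma.
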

\begin{proof}
We will apply Menger's theorem to a shape $\beta$ which is created by replacing each circle vertex $x$ in $\alpha$ with a clique of \emph{new} vertices $x_1,...,x_{|V(\alpha)|+1}$ and replacing each edge to $x$ with $|V(\alpha)|+1$ edges, one for each $x_i$. 
The indices of $\beta$ are $U_\alpha$ and $V_\alpha$.
We observe the following two facts.
\begin{enumerate}
    \item If $P$ is a set of square-disjoint paths from $U_\alpha$ to $V_\alpha$ in $\alpha$ with maximum size and $Q$ is a set of vertex disjoint paths from $U_\alpha$ to $V_\alpha$ in $\beta$ with maximum size then $|P|=|Q|$. To see this, note that $|P|\leq |Q|$ since from each path $p\in P$, we can get a path in $\beta$ by following $p$ while using a different new vertex each time when needed; this is possible because $|P|\leq|V(\alpha)|$ and any path uses a vertex no more than once. The resulting paths are vertex-disjoint. Conversely, a vertex disjoint path set in $\beta$ is automatically a square-disjoint path set in $\alpha$ by collapsing the cliques to circle vertices, so $|Q|\leq |P|$. Thus, $|P|=|Q|$.
    \item If $S$ is a minimum square separator of $\alpha$ and $T$ is a minimum vertex separator for $\beta$ then $|S|=|T|$. To see this, note that $S$ is also a vertex separator for $\beta$, so $|T| \leq |S| \leq |V(\alpha)|$. Conversely, we claim that $T$ contains no \emph{new} vertex. To see this, assume $T$ contains some new vertex $z_i$. Since the clique of $z$ has size greater than $|V(\alpha)|$ and thus greater than $|T|$, there is some $z_j \notin T$ so $T\backslash\{z_i\}$ is already a vertex separator for $\beta$ since we can replace $z_i$ with $z_j$ on any path connecting $U_\alpha,V_\alpha$. This contradicts the minimality of $T$. Thus, the claim holds. In particular, $T$ is identified with a square-vertex set in $\alpha$. It follows that $T$ is a square separator in $\alpha$, so $|T|\geq |S|$. Putting everything together, $|S|=|T|$.
\end{enumerate}

Now Menger's theorem says that $|Q|=|T|$ in $\beta$ (where we view all vertices as one sort). 
Thus, $|P|=|Q|=|T|=|S|$ and the proposition follows.
\end{proof}
\begin{definition}
Given a shape $\alpha$ and vertex separators $S,T$, we say that $S$ is to the left of $T$ if $S$ separates $U_\alpha$ and $T$ and we say that $S$ is to the right of $T$ if $S$ separates $V_\alpha$ and $T$.
\end{definition}
\begin{lemma}[Existence of leftmost and rightmost minimum vertex separators]\label{lem:leftmostseparator}
For all shapes $\alpha$, there exist unique leftmost and rightmost minimum weight separators of $\alpha$. Similarly, for all shapes $\alpha$, there exist unique leftmost and rightmost minimum square separators of $\alpha$.
\end{lemma}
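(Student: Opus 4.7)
The plan is to establish a lattice structure on the collection of minimum vertex separators. For either the weight or the square cardinality variant, given any two minimum separators $S_1,S_2$, I will construct a ``left'' and a ``right'' combination $S_L,S_R$, each again a minimum separator, with $S_L$ to the left of both $S_i$ and $S_R$ to the right of both $S_i$. Uniqueness of leftmost and rightmost minimum separators then follows by iterating: take the ``left'' combination of any leftmost candidate with every other minimum separator until the construction stabilizes, which must happen since separators are finite sets in a finite graph.

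Concretely, for each vertex $v\in S_1\triangle S_2$ I classify $v$ as \emph{left-type} if there is a path from $U_\alpha$ to $v$ that avoids $(S_1\cup S_2)\setminus\{v\}$, and \emph{right-type} otherwise. I then set
\[
S_L := (S_1\cap S_2)\cup\{v\in S_1\triangle S_2 : v \text{ left-type}\},\qquad
S_R := (S_1\cap S_2)\cup\{v\in S_1\triangle S_2 : v \text{ right-type}\}.
\]
By construction, $S_L\cup S_R = S_1\cup S_2$ and $S_L\cap S_R = S_1\cap S_2$, so $w(S_L)+w(S_R) = w(S_1)+w(S_2)$ (and similarly for cardinality in the square case). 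If $S_1,S_2$ are square separators, then so are $S_L$ and $S_R$ since their elements come from $S_1\cup S_2$. I still need to check that $S_L$ and $S_R$ are actually separators.

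For $S_L$: consider any path $P$ from $U_\alpha$ to $V_\alpha$ and let $v$ be the first vertex of $P$ lying in $S_1\cup S_2$ ($v$ exists since both $S_i$ are separators). If $v\in S_1\cap S_2$ then $v\in S_L$; otherwise $v\in S_1\triangle S_2$ and the prefix of $P$ from $U_\alpha$ to $v$ witnesses that $v$ is left-type, hence $v\in S_L$. The argument that $S_R$ is a separator is symmetric, taking the last vertex of $P$ in $S_1\cup S_2$ and invoking a dual ``reachability from $V_\alpha$ implies right-type'' observation that I will verify by showing that no vertex in $S_1\triangle S_2$ is simultaneously left-type and right-type via a ``surgery'' argument on minimum separators (if $v$ were both types, splicing the two witnessing paths through $v$ would yield a path from $U_\alpha$ to $V_\alpha$ avoiding one of $S_1$ or $S_2$, contradiction after removing $v$ from the appropriate separator using minimality).

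Having established that $S_L,S_R$ are separators with $w(S_L)+w(S_R)=w(S_1)+w(S_2)=2w_{\min}$ (where $w_{\min}$ is the minimum weight), minimality forces $w(S_L)=w(S_R)=w_{\min}$, so both are minimum. The same counting works in the square case. The main obstacle will be the careful verification in the preceding paragraph that the classification into left-type and right-type is well-defined and mutually exclusive on $S_1\triangle S_2$, as this is what makes the decomposition into $S_L,S_R$ balanced; once this is in place, uniqueness of the leftmost (resp.\ rightmost) minimum separator is immediate by taking repeated left (resp.\ right) combinations over the finite collection of all minimum separators.
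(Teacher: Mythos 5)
Your proposal is correct and reaches the same conclusion as the paper, but via a slightly different construction of the ``left'' and ``right'' combinations. The paper defines $L$ (resp.\ $R$) to be the set of vertices $v \in S \cup T$ reachable from $U_\alpha$ (resp.\ $V_\alpha$) by a path avoiding $(S \cup T)\setminus\{v\}$, proves $L \cap R \subseteq S \cap T$, and then combines an inclusion--exclusion inequality with minimality to force $w(L)=w(R)=w_{\min}$. You instead classify only the vertices of $S_1 \triangle S_2$ into left-type and right-type, always keep all of $S_1 \cap S_2$ in both combinations, and thereby obtain $w(S_L)+w(S_R)=w(S_1)+w(S_2)$ as an exact identity from the definition of $S_L,S_R$ rather than as an inequality to be tightened. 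The price is that showing $S_R$ is a separator requires your extra mutual-exclusivity claim, whereas the paper's $R$ is a separator by the same ``first/last vertex in $S\cup T$'' argument as its $L$, because $R$ is defined directly via $V_\alpha$-reachability. Both arguments establish the same crux lemma (the family of minimum separators is closed under a left and a right combination of matching weight) and both leave the final finite-iteration step implicit, so your route is a legitimate and roughly equally concise alternative.

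One small correction to your mutual-exclusivity justification: you do not need minimality or any ``removal'' step. If $v \in S_1 \setminus S_2$ were simultaneously reachable from $U_\alpha$ and from $V_\alpha$ by paths avoiding $(S_1\cup S_2)\setminus\{v\}$, the spliced walk from $U_\alpha$ to $V_\alpha$ avoids $(S_1\cup S_2)\setminus\{v\}$ and, since $v\notin S_2$, avoids $S_2$ entirely --- directly contradicting that $S_2$ is a separator. That is the whole argument; invoking minimality there is a red herring.
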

\begin{proof}
We prove this statement for minimum weight separators; the proof for minimum square separators is similar. 
Given minimum weight separators $S$ and $T$, let $L$ be the set of vertices $v$ such that $v \in S \cup T$ and there is a path from $U_{\alpha}$ to $v$ which does not pass through any vertex in $(S \cup T) \setminus \{v\}$. Similarly, let $R$ be the set of vertices $v$ such that $v \in S \cup T$ and there is a path from $U_{\alpha}$ to $v$ which does not pass through any vertex in $(S \cup T) \setminus \{v\}$. Note that both $L$ and $R$ are vertex separators of $\alpha$.

We now make the following observations:
\begin{enumerate}
    \item $w(L) + w(R) - w(L \cap R) = w(L \cup R) \leq w(S \cup T) = w(S) + w(T) - w(S \cap T)$.
    \item $L \cap R \subseteq S \cap T$. To see this, observe that if $v \in L \cap R$ then there is a walk from $U_{\alpha}$ to $V_{\alpha}$ which passes through $v$ and does not pass through any other vertices of $S \cup T$. Since $S$ and $T$ are vertex separators between $U_{\alpha}$ and $V_{\alpha}$, we must have that $v \in S \cap T$.
    \item Since $S$ and $T$ have minimum weight, $w(L) \geq w(S) = w(T)$ and $w(R) \geq w(S) = w(T)$.
\end{enumerate}
Together, these observations imply that $L \cap R = S \cap T$ and $w(L) = w(R) = w(S) = w(T)$.
\end{proof}
\begin{corollary}[Expandability of vertex sets reachable by square-disjoint paths]\label{cor:findingpaths}
Suppose $\alpha$ is a shape with indices $(U_\alpha,V_\alpha)$. If $X,Y \subseteq V(\alpha)$ are square vertex sets such that $X \subseteq Y$ and 
\begin{enumerate}
    \item There exist $|X|$ square-disjoint paths from $U_{\alpha}$ to $X$;
    \item There exist $|U_{\alpha}|$ square-disjoint paths from $U_{\alpha}$ to $Y$.
\end{enumerate}
Then there is a subset $X' \subseteq Y$ such that $X \subseteq X'$, $|X'| = |U_{\alpha}|$ and there are $|U_{\alpha}|$ square-disjoint paths from $U_{\alpha}$ to $X'$.
\end{corollary}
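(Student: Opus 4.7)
The plan is to prove a matroid-style exchange property for square-disjoint path systems, building on the generalized Menger's theorem (\Cref{lem:menger,lem:menger-sec3}). First, I would reduce to the vertex-disjoint paths setting via the construction used in the proof of \Cref{lem:menger}: replace each circle vertex of $\alpha$ by a clique on $|V(\alpha)|+1$ fresh vertices, producing an auxiliary graph $\beta$ on which square-disjoint paths in $\alpha$ correspond to vertex-disjoint paths, and on which square separators in $\alpha$ correspond to vertex separators of the same size. Hypotheses (1) and (2) then assert the existence of $|X|$ vertex-disjoint $U_\alpha$-to-$X$ paths and $|U_\alpha|$ vertex-disjoint $U_\alpha$-to-$Y$ paths in $\beta$.

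Next, I would induct on $|U_\alpha| - |X|$. The base case $|X| = |U_\alpha|$ is immediate with $X' = X$. For the inductive step, it suffices to exhibit some $y \in Y \setminus X$ such that $X \cup \{y\}$ admits $|X|+1$ vertex-disjoint paths from $U_\alpha$ in $\beta$; the inductive hypothesis applied to $X\cup\{y\} \subseteq Y$ then produces $X'$. To find such a $y$, I would fix the path systems $\mathcal{P}$ (of size $|X|$, ending in $X$) and $\mathcal{Q}$ (of size $|U_\alpha|$, ending in $Y$) and consider the symmetric difference $\mathcal{P} \triangle \mathcal{Q}$ of their edge sets. As in the standard proof of the K\"onig/Berge augmenting-path theorem, $\mathcal{P}\triangle\mathcal{Q}$ decomposes into alternating paths and cycles. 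Counting endpoints in $U_\alpha$ and in $X \cup Y$, and using $|\mathcal{Q}| > |\mathcal{P}|$, at least one alternating component must be a path starting at some $u \in U_\alpha$ unused as a source of $\mathcal{P}$ and ending either at some $z \in Y \setminus X$ (an endpoint of $\mathcal{Q}$) or in a vertex whose exchange still yields a legitimate augmentation. Re-routing $\mathcal{P}$ along this alternating path (flipping $\mathcal{P}$-edges and $\mathcal{Q}$-edges) yields $|X|+1$ vertex-disjoint paths from $U_\alpha$ whose endpoints are $X \cup \{y\}$ for some $y \in Y \setminus X$, which pulls back to $|X|+1$ square-disjoint paths in $\alpha$, as required.

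The main obstacle is carrying out the augmenting-path argument cleanly in the mixed circle/square setting, where disjointness is enforced only at square vertices: without the reduction to $\beta$, the alternating-path decomposition of $\mathcal{P}\triangle\mathcal{Q}$ is subtle because two paths may legally share a circle vertex. Passing to $\beta$ uniformizes the disjointness requirement and lets the classical bipartite-matching augmentation apply verbatim. A secondary point to verify is that the endpoints added during the induction always lie in $Y$; this is automatic because the augmentation replaces $\mathcal{P}$-endpoints with endpoints of $\mathcal{Q}$, all of which lie in $Y$ by construction, so we obtain the chain $X \subseteq X' \subseteq X \cup (\text{endpoints of }\mathcal{Q}) \subseteq Y$ with $|X'| = |U_\alpha|$.
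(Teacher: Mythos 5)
Your proposal takes a genuinely different route from the paper. You use an augmenting-path / symmetric-difference argument (after reducing to the vertex-disjoint setting in $\beta$), whereas the paper's proof of \Cref{cor:findingpaths} never reasons about alternating paths at all: it picks the \emph{leftmost} minimum square separator $S$ between $U_\alpha$ and the current set $X'_j$, and observes that for each $v \in Y\setminus X'_j$, either $S$ already separates $U_\alpha$ from $X'_j\cup\{v\}$, or there are $|X'_j|+1$ square-disjoint paths to $X'_j\cup\{v\}$. If the first alternative held for every $v$, then $S$ would separate $U_\alpha$ from $Y$, which is impossible since $|S|\le |X'_j| < |U_\alpha|$ and hypothesis (2) forbids any separator of size $< |U_\alpha|$. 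So some $v$ must fall in the second alternative, and one augments by a single vertex. This reuses the leftmost-separator machinery of \Cref{lem:leftmostseparator-sec3} and is quite short.

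Your augmenting-path approach is sound in spirit but as written has two gaps that you partly acknowledge. First, the symmetric difference $\mathcal{P}\triangle\mathcal{Q}$ of two vertex-disjoint path systems does \emph{not} decompose into alternating paths and cycles the way a symmetric difference of matchings does: a vertex can have degree up to 2 in each of $\mathcal{P}$ and $\mathcal{Q}$, so degree up to 4 in the symmetric difference. To get a clean path/cycle decomposition you need to pass to the standard max-flow formulation (split each square vertex into $v_{\mathrm{in}}\to v_{\mathrm{out}}$ with capacity 1, add a super-source $s$ to $U_\alpha$ and a super-sink $t$ from $Y$), where the flow difference does decompose into $s$-$t$ augmenting paths and cycles in the residual network. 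Second, the claim that "the augmentation replaces $\mathcal{P}$-endpoints with endpoints of $\mathcal{Q}$" so that $X\subseteq X'$ is maintained is not automatic from the edge-level picture, and your phrase "or in a vertex whose exchange still yields a legitimate augmentation" is a placeholder rather than a proof. The correct reason that $X$ stays covered is a flow-theoretic one: a simple $s$-$t$ augmenting path in the residual network cannot traverse a residual arc $t\to x$ (that would mean visiting $t$ before the terminus), so it can never de-saturate an arc $x\to t$ for $x\in X$; hence after augmentation all of $X$ remains saturated and exactly one new vertex of $Y\setminus X$ is added. If you supply the vertex-splitting setup and this observation, your argument goes through, but it requires strictly more machinery than the paper's separator-based induction, which buys its brevity from \Cref{lem:leftmostseparator-sec3}.
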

\begin{proof}

We build up $X'$ by starting from $X$ and adding one vertex at a time:
\begin{lemma}
For all $j \leq |U_{\alpha}| - |X|$, there is a set of vertices $X'_j$ such that $X \subseteq X'_j \subseteq Y$, $|X'_j| = |X| + j$ and there are $|X'_j| = |X| + j$ square-disjoint paths from $U_{\alpha}$ to $X'_j$.
\end{lemma}
\begin{proof}
We can prove this by induction. For the base case $j = 0$, we take $X'_0 = X$. For the inductive step,  assume that $j \in [0,|U_{\alpha}| - |X| - 1]$ and we have a set of vertices $X'_j$ which satisfies the inductive assumption. Let $S$ be the leftmost minimum square vertex separator between $U_{\alpha}$ and $X'_j$.

Observe that for each vertex $v \in Y \setminus X'_j$, one of the following two cases must hold:
\begin{enumerate}
    \item $S$ is a square vertex separator between $U_{\alpha}$ and $X'_j \cup \{v\}$.
    \item There are $|X'_j| + 1$ square-disjoint paths between $U_{\alpha}$ and $X'_j \cup \{v\}$.
\end{enumerate}
To see this, assume that there are only $|X'_j|$ vertex disjoint paths between $U_{\alpha}$ and $X'_j \cup \{v\}$. If so, then by \Cref{lem:menger} there is a minimum square vertex separator $S'$ between $U_{\alpha}$ and $X'_j \cup \{v\}$ of size $|X'_j|$. Since $S'$ is also a square vertex separator between $U_{\alpha}$ and $X'_j$ and $S$ is the leftmost minimum square vertex separator between $U_{\alpha}$ and $X'_j$, $S$ must be a square vertex separator between $U_{\alpha}$ and $S'$ which implies that $S$ is a square vertex separator between $U_{\alpha}$ and $X'_j \cup \{v\}$.

We now observe that the first case cannot hold for all vertices $v \in Y \setminus X'_j$ as otherwise $S$ would be a square vertex separator between $U_{\alpha}$ and $Y$ but this is impossible as $|S| \leq |X'_j| < |U_{\alpha}|$. Thus, the second case must hold for some vertex $v \in X \setminus X'_j$ so we can take $X'_{j+1} = X'_j \cup \{v\}$.
\end{proof}
Corollary \ref{cor:findingpaths} follows from the lemma by taking $X' = X'_{|U_{\alpha}| - |X|}$.
\end{proof}
\begin{corollary}\label{cor:gammapaths}
Suppose $\gamma$ is a left shape and $V \subseteq V(\gamma)$ is a vertex set such that $U_{\gamma}$ is the leftmost minimum square vertex separator between $U_{\gamma}$ and $V$. Then we can find square-disjoint paths $P_1,\ldots,P_{|U_{\gamma}|}$ such that:
\begin{enumerate}
    \item For all $j \in [|V_{\gamma}|]$, $P_j$ starts at a square vertex in $U_{\gamma}$ and ends at a square vertex in $V_{\gamma}$. No other vertices in $P_j$ are in $U_{\gamma} \cup V_{\gamma}$.
    \item For all $j \in [|V_{\gamma}|+1,|U_{\gamma}|]$, $P_j$ starts at a square vertex in $U_{\gamma}$ and ends at a vertex $v_j \in V$. Moreover, $v_j$ is the only vertex on $P_j$ which is in $V$ and no vertices in $P_j$ except the starting vertex are in $U_{\gamma}$.
\end{enumerate}
\end{corollary}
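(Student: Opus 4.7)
The plan is to combine Menger's theorem (\Cref{lem:menger}) with the ``extension'' result of \Cref{cor:findingpaths} and then do a routine truncation to clean up the interior vertices. Fundamentally, there are two natural families of square-disjoint paths in $\gamma$ and I want to show one can be obtained from the other by extending: the $|V_\gamma|$ paths from $U_\gamma$ to $V_\gamma$ coming from $V_\gamma$ being a minimum square separator, and the $|U_\gamma|$ paths from $U_\gamma$ to $V$ coming from the hypothesis on $V$.

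First I would establish two Menger-style statements. Since $\gamma$ is a left shape, $V_\gamma$ is the minimum square separator between $U_\gamma$ and $V_\gamma$, so \Cref{lem:menger} produces $|V_\gamma|$ square-disjoint paths from $U_\gamma$ to $V_\gamma$. Next, I claim the minimum square separator between $U_\gamma$ and $V\cup V_\gamma$ also has size exactly $|U_\gamma|$: on the one hand, $U_\gamma$ itself separates $U_\gamma$ from anything (using length-$0$ paths), giving the upper bound; on the other hand, any square separator between $U_\gamma$ and $V\cup V_\gamma$ is, a fortiori, a square separator between $U_\gamma$ and $V$, so by the hypothesis (that $U_\gamma$ is the leftmost minimum square separator between $U_\gamma$ and $V$) it has size at least $|U_\gamma|$. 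Applying \Cref{lem:menger} again yields $|U_\gamma|$ square-disjoint paths from $U_\gamma$ to $V\cup V_\gamma$.

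Now I would invoke \Cref{cor:findingpaths} with $X=V_\gamma$ and $Y=V\cup V_\gamma$ (both hypotheses are exactly the two conclusions of the previous paragraph). This yields a set $X'\subseteq V\cup V_\gamma$ with $V_\gamma\subseteq X'$, $|X'|=|U_\gamma|$, and $|U_\gamma|$ square-disjoint paths $P_1,\ldots,P_{|U_\gamma|}$ from $U_\gamma$ to $X'$. Reindex so that $P_1,\ldots,P_{|V_\gamma|}$ end in $V_\gamma$ and $P_{|V_\gamma|+1},\ldots,P_{|U_\gamma|}$ end in $X'\setminus V_\gamma\subseteq V\setminus V_\gamma$. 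It remains only to enforce the ``no interior $U_\gamma$/$V_\gamma$/$V$ vertices'' conditions, which is a standard truncation: for each $P_j$, replace the starting vertex by its last visit to $U_\gamma$; for $j\le |V_\gamma|$, replace the ending vertex by its first visit to $V_\gamma$; for $j>|V_\gamma|$, replace the ending vertex by its first visit to $V$. Since truncation only removes vertices, the paths remain square-disjoint and retain the required endpoints.

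The only delicate step is the minimum-separator bound in the first paragraph above, because it is what converts the mild hypothesis ``$U_\gamma$ is the leftmost minimum square separator between $U_\gamma$ and $V$'' into the strong Menger-style statement needed to invoke \Cref{cor:findingpaths}; everything else is bookkeeping. The truncation step is completely routine once the paths exist, and no special structure beyond $V_\gamma\subseteq X'$ is required to route exactly $|V_\gamma|$ of them through $V_\gamma$.
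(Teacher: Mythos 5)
Your high‑level plan is the same as the paper's: establish the two Menger‑style path counts, invoke \Cref{cor:findingpaths} with $X=V_\gamma$, and truncate. The separator bound you derive (any square separator between $U_\gamma$ and $V\cup V_\gamma$ is also one between $U_\gamma$ and $V$, hence has size $\geq |U_\gamma|$) is correct, and the truncation step is indeed routine.

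However, there is a genuine gap. You apply \Cref{lem:menger} and \Cref{cor:findingpaths} with $Y = V\cup V_\gamma$, but nothing in the hypothesis forces $V$ to consist of square vertices — and in the intended application of this corollary (taking $V = Int(\gamma_i)$, the set of non‑trivially intersected vertices in \Cref{def:configuration}) $V$ very much \emph{does} contain circle vertices, which is also why the conclusion is phrased as ending at ``a vertex $v_j\in V$'' rather than a square vertex. Both \Cref{lem:menger} and \Cref{cor:findingpaths} are stated and proved only for \emph{square} vertex sets $X,Y$ (indeed the proof of \Cref{lem:menger} replaces circle vertices by cliques precisely because the endpoints are assumed to be square), so you cannot feed $V\cup V_\gamma$ directly into them. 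The paper avoids this by first replacing $V$ with the square set $Y$ of squares that are in $V$ or adjacent to a vertex of $V$, applying \Cref{cor:findingpaths} with that $Y$, and then, for any path that ends at a square $s\in Y\setminus V$, appending the single edge from $s$ to its neighboring circle in $V$. Your argument can be repaired by making the same substitution (and observing, as you essentially already did, that every path from $U_\gamma$ to $V$ must pass through this square set $Y$, so the minimum square separator between $U_\gamma$ and $Y\cup V_\gamma$ still has size $|U_\gamma|$), but as written the application of \Cref{cor:findingpaths} is not justified.
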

\begin{proof}
Let $Y$ be the set of square vertices which are either in $V$ or are adjacent to a vertex in $V$. We apply Corollary \ref{cor:findingpaths} with $X = V_{\gamma}$ and $Y$.

For each of the square-disjoint paths $P$ given by Corollary \ref{cor:findingpaths}, we take the segment of $P$ starting from the last vertex in $U$. For the paths ending at a vertex in $V_{\gamma}$, we take the entire path segment. For the other paths, we stop at the next square vertex $v$ which is in $V$ or adjacent to a vertex in $V$. If $v \notin V$ then we extend this path by an edge from $v$ to a vertex in $V$.
\end{proof}

\subsection{Strategy for Analyzing Error Terms}
Recall that each shape $\alpha$ has a scaling coefficient $\lambda_{\alpha} = {n^{-{w(E(\alpha))\over2}}}$ (\Cref{def:coeff}). Our strategy for analyzing error terms is to show that for each term $\tau'$ (which could either be a proper middle shape $\tau$ or a term $\resultP$ which results from an intersection configuration $\mathcal{P}$), we can redistribute the factors in $\lambda_{\tau'}$ times its approximate norm bound (see \Cref{def:Anorm} below) so that the factors are split between the vertices of $\tau'$ and the factor for each vertex $v \in V(\tau')$ is at most $n^{-\slack(v)}$ for some carefully chosen slack function $\slack(v)$. We will choose $\slack(v)$ so that it is always non-negative and whenever $\tau'$ is an error term, there is at least one vertex $v$ such that $\slack(v)$ is $-\Omega(\eps)$.

For our analysis, we need a few definitions.
\begin{definition}[$A$-norm]\label{def:Anorm}
For each shape $\alpha$, we define its \emph{approximate norm upper bound} to be $\Anorm(\alpha):=n^{\frac{w(V(\alpha))+w(I_s(\alpha))-w(S_{\alpha})}{2}}$, where $I_s(\alpha)$ is the set of isolated vertices in $\alpha$ and $S_{\alpha}$ is a minimum weight separator of $\alpha$.
\end{definition}
In order to redistribute the factors in $\lambda_{\tau'}\Anorm(\tau')$, it is very useful to consider whether the vertices in $\tau'$ are connected to $U_{\tau'}$ or $V_{\tau'}$. 
\begin{definition}[Left-connected and right-connected vertices]\label{def:connected-one-shape}
Given a (possibly improper) shape $\tau'$ and a minimum weight vertex separator $S$ for $\tau'$, we say that a vertex $v \in V(\tau')$ is left-connected if there is a path $P$ from $U_{\tau'}$ to $v$ which does not contain any vertices in $S \setminus \{v\}$. Similarly, we say that a vertex $v \in V(\tau')$ is right-connected if there is a path $P$ from $v$ to $V_{\tau'}$ which does not contain any vertices in $S \setminus \{v\}$.
\end{definition}
In our analysis, we will show that for each shape $\alpha$, we can automatically assign some slack to the square vertices of $\alpha$ which are not in $U_{\alpha} \cup V_{\alpha}$ or have a larger degree than expected. 
\begin{definition}[Default slack of a shape]\label{def:slack-one-shape}
Given a shape $\alpha$, we define the default slack of a square vertex $v$ to be:
\begin{enumerate}
    \item If $v \in U_{\alpha} \cap V_{\alpha}$ then $\defaultslack_{\alpha}(v) := \frac{\deg(v)}{4}$.
    \item If $v \in U_{\alpha} \setminus V_{\alpha}$ or $v \in V_{\alpha} \setminus U_{\alpha}$ then $\defaultslack_{\alpha}(v) := \frac{\deg(v) - 1}{4}$. 
    \item If $v \in V(\alpha) \setminus (U_{\alpha} \cup V_{\alpha})$ then $\defaultslack_{\alpha}(v) := \frac{(\deg(v) - 2)}{4} + \frac{{\eps}\deg(v)}{8}$.
\end{enumerate}
We define $\defaultslack(\alpha) = \sum_{v \in V_{\square}(\alpha)}{\defaultslack_{\alpha}(v)}$. 
We define the default slack for circle vertices to be $0$.
\end{definition}
\subsection{Analyzing Middle Shapes}\label{sec:analyzemiddle}
We first show how to use this strategy to analyze proper middle shapes $\tau$. 

\begin{definition}[Extra slack of proper middle shapes]
Given a proper middle shape $\tau$ and a minimum weight vertex separator $S$ for $\tau$, we define the extra slack of a circle vertex $v \in V(\tau)$ with respect to $S$ to be $\extraslack_{\tau,S}(v) = \frac{k\eps}{8}$ if $v \notin S$ and $0$ if $v \in S$. For square vertices $v \in V(\alpha)$, we take $\extraslack_{\tau,S}(v) = \frac{\eps}{4}$ if $v \in S \setminus (U_{\tau} \cap V_{\tau})$ and $\extraslack_{\tau,S}(v) = 0$ otherwise. We define $\extraslack(\tau,S)$ to be 
\[
\extraslack(\tau,S) = \sum_{v \in V(\tau)}{\extraslack_{\tau,S}(v)} = \frac{k\eps}{8}|V_{\bigcirc}(\alpha) \setminus S| + \frac{\eps}{4}|(V_{\square}(\tau) \cap S) \setminus (U_{\tau} \cap V_{\tau})|.
\]
We define $\extraslack(\tau)$ to be the maximum of $\extraslack(\tau,S)$ over all minimum weight vertex separators $S$ of $\tau$.
\end{definition}
\begin{definition}[Slack of proper middle shapes]
Given a proper middle shape $\tau$ and a minimum weight vertex separator $S$ for $\tau$, we define $\slack_{\tau,S}(v) = \defaultslack(v) + \extraslack_{\tau,S}(v)$.

We define the total slack for $\tau$ to be $\slack(\tau) = \defaultslack(\tau) + \extraslack(\tau)$. Equivalently, $\slack(\tau)$ is the maximum of $\sum_{v \in V(\tau)}{slack_{\tau,S}(v)}$ over all minimum weight vertex separators $S$ of $\tau$.
\end{definition}

\begin{lemma}[Error analysis of proper middle shapes]
\label{lem:error_analysis_mid}
For any proper middle shape $\tau$ in which all circle vertices have degree at least $k$ and all square vertices have total degree at least 2, $\lambda_{\tau}\Anorm(\tau) \leq n^{-\slack(\tau)}$.
\end{lemma}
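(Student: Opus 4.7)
Taking logarithms base $\sqrt n$ and using that $\tau$ is proper (hence $I_s(\tau) = \emptyset$), the desired bound $\lambda_\tau \Anorm(\tau) \leq n^{-\slack(\tau)}$ is equivalent to
\[
w(E(\tau)) \;\geq\; w(V(\tau)) - w(S) + 2\slack(\tau, S)
\]
for some minimum weight vertex separator $S$. My plan is to fix $S$ to be a minimum weight separator attaining $\extraslack(\tau) = \extraslack(\tau, S)$, and then prove the inequality by a charging argument: distribute each edge weight $l(e)$ between its two endpoints (a circle $c$ and a square $s$) so that every vertex $v$ receives at least its demand $\mathbf{1}_{v \notin S}\, w(v) + 2\slack_{\tau,S}(v)$. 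Summing these demands over $v$ recovers the right-hand side above, while the total supply equals $w(E(\tau))$.

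The natural assignment is, for each edge $e = \{c,s\}$, to give $c$ a fraction $\alpha_c := (\log_n m + k\eps/4)/\deg(c)$ of $l(e)$ whenever $c \notin S$, and $\alpha_c := 0$ if $c \in S$; the square $s$ receives the remaining $(1-\alpha_c)l(e)$. Since every circle has $\deg(c) \geq k$ and $\log_n m \leq (1-\eps)k/2$, we get $\alpha_c \leq 1/2 - \eps/4$, and the circle demands are met with equality. A useful preliminary observation is that any $v \in U_\tau \cap V_\tau$ is forced into $S$ by the trivial length-$0$ path from $v$ to itself, so such squares carry only default slack $\deg(v)/2$, which is easily covered. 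A direct case check then shows that the resulting square supply
\[
(1/2 + \eps/4)\deg(s) + (1/2 - \eps/4)\deg_S(s),
\]
where $\deg_S(s)$ is the total edge weight from $s$ to circles in $S$, meets the demand for every square in $S$ and for every square $s \in V_\square \setminus (U_\tau \cup V_\tau)$; in particular, the extra slack $\eps/4$ for squares in $S\setminus(U_\tau\cap V_\tau)$ and $\eps\deg(s)/8$ for middle squares are absorbed by the $\eps/4$-factors in the supply.

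The only delicate case, and the main obstacle I anticipate, is $s \in (U_\tau \triangle V_\tau) \setminus S$. Here the demand $(\deg(s)+1)/2$ can exceed the naive supply by $\tfrac{1}{2} - \tfrac{\eps \deg(s)}{4} - (1/2-\eps/4)\deg_S(s)$ when both $\deg(s)$ and $\deg_S(s)$ are small. To resolve this, I plan to invoke \Cref{lem:menger}: since $U_\tau$ and $V_\tau$ are \emph{minimum square} separators, $s$ lies on a square-disjoint path from $U_\tau$ to $V_\tau$, and since $s \notin S$ this path must pass through a vertex of $S$ very close to $s$. Using the freedom to pick $S$ among minimum-weight separators together with the leftmost/rightmost structure from \Cref{lem:leftmostseparator}, one either ensures $\deg_S(s) \geq 1$ (so the $(1/2-\eps/4)\deg_S(s)$ term closes the gap, yielding supply at least $1/2$) or performs a local exchange that reroutes a small amount of weight at the unique circle neighbor of $s$ through its other incident edges without violating that circle's demand. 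Combining this resolution with the preceding case analysis establishes the inequality and hence the lemma.
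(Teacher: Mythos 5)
Your reformulation of the inequality after taking $\log_{\sqrt n}$ is correct, the choice of $S$ maximizing $\extraslack(\tau,S)$ is the right one, and your case analysis for circles, for squares in $U_\tau\cap V_\tau$ (which you rightly observe lie in $S$), for squares in $S$, and for middle squares not in $S$ all check out. You also correctly identify the genuinely delicate case: $s\in (U_\tau\setminus V_\tau)\cup(V_\tau\setminus U_\tau)$ with $s\notin S$, $\deg(s)=1$, and $\deg_S(s)=0$, where the uniform supply $\tfrac12+\tfrac{\eps}{4}$ falls short of the demand $1$.

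However, the proposed resolution---``invoke \Cref{lem:menger}, ensure $\deg_S(s)\ge 1$ by choosing $S$ well, or do a local exchange at the unique circle neighbor of $s$''---does not go through, and in fact your uniform allocation cannot be locally repaired. Here is a concrete obstruction. Take $k=4$, $m$ close to $n^{2-2\eps}$, $U_\tau=\{s_1\}$, $V_\tau=\{s_3\}$, a path $s_1\text{--}c_1\text{--}s_2\text{--}c_2\text{--}s_3$ with two additional middle squares $t_1,t_2$ each adjacent to both $c_1$ and $c_2$ (so $\deg(c_1)=\deg(c_2)=4$, all square total degrees are $\ge 2$, and $\{s_1\},\{s_3\}$ are the two minimum square separators). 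Since $\log_n m>1$, the minimum weight separators are $\{s_1\}$ and $\{s_3\}$, and both give the same $\extraslack$. Fix $S=\{s_1\}$. Then $s_3\in V_\tau\setminus U_\tau$, $s_3\notin S$, $\deg(s_3)=1$, $\deg_S(s_3)=0$, and its supply is $\tfrac12+\tfrac{\eps}{4}<1$. One can check that the total supply $w(E(\tau))=8$ exactly equals the total demand $w(V(\tau))-w(S)+2\slack(\tau,S)=4+2\log_n m + 4\eps$ when $m=n^{2-2\eps}$; consequently any feasible charging must be tight on \emph{every} vertex. But $c_2$'s other three edges go to $s_2,t_1,t_2$, each of which is already met with equality under your uniform rule, so there is nowhere from which to borrow the missing $\tfrac12-\tfrac{\eps}{4}$ without breaking a constraint elsewhere. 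Choosing $S=\{s_3\}$ just moves the failure to $s_1$.

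This is precisely the difficulty that the paper's proof (and \Cref{def:edgefactorredistribution}) is engineered around. Instead of giving each circle a \emph{uniform} fraction of every incident edge, the paper orients each edge on the chosen square-disjoint paths from $U_\tau$ to $V_\tau$ according to whether the endpoints are left-connected or right-connected relative to $S$, and then the assignment is \emph{asymmetric per edge}: for the path edge incident to a square $s\in(U_\tau\triangle V_\tau)\setminus S$ of degree $1$, the orientation forces the edge to point circle$\to$square, so $s$ receives the full factor $n^{-1/2}$ from that edge. The circle $c$ is compensated, not locally from its other edges to $s_2,t_1,t_2$, but structurally: by \Cref{prop:edgedirections}, some other path edge at $c$ must point \emph{toward} $c$ (or both path edges are undirected), which is what makes the average factor at $c$ large enough. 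A uniform fractional split cannot encode this path/separator information, which is why the scheme fails on the example. To complete a proof along your lines you would need to replace the uniform $\alpha_c$ with something that depends on the path structure and on $S$ --- at which point you are essentially rederiving the paper's directed edge-factor assignment.
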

\begin{proof}
Let $S$ be a minimum weight vertex separator of $\tau$ which maximizes $\extraslack(\tau,S)$. If there are edges with a label $l > 1$, for the purposes of this argument we view each such edge $e$ as $l$ copies of $e$.

We can interpret $\lambda_{\tau}$ as giving a factor of $n^{-\frac{1}{2}}$ to each edge. Similarly, we can interpret $\Anorm(\tau)$ as giving a factor of $n^{\target(v)}$ to each vertex $v$ where $\target(v)$ is defined below. Our goal is to redistribute the factor for each edge to its two endpoints so that the total factor given to each vertex $v$ is at most $n^{-\slack_{\tau,S}(v)}$.
\begin{definition}[Target factors for a proper middle shape]\label{def:targetfactors}
Given a square vertex $v \in V(\tau)$, we define $\target(v) = \frac{1}{2}$ if $v \notin S$ and $\target(v) = 0$ if $v \in S$. Similarly, given a circle vertex $w \in V(\tau)$, we define $\target(w) = \frac{1}{2}\log_n(m) \leq \frac{(1-\eps)k}{4}$ if $w \notin S$ and $\target(w) = 0$ if $w \in S$.
\end{definition}

Since $\tau$ is a proper middle shape, there are $|U_{\tau}|$ square-vertex-disjoint paths (possibly of length $0$) from $U_{\tau}$ to $V_{\tau}$ and each vertex in $U_{\tau} \cup V_{\tau}$ is an endpoint for exactly one of these paths. For each edge $e \in E(\tau)$, we distribute the $n^{-\frac{1}{2}}$ factor on $e$ as follows. 

If $e = \{u,v\}$ is on one of the paths from $U_{\tau}$ to $V_{\tau}$ and goes from $u$ to $v$ on this path,
\begin{enumerate}
\item If $u$ and $v$ are left-connected and $u \notin S$, we have $e$ point from $v$ to $u$. Note that this is the opposite direction as the path containing $e$.
\item If $u$ and $v$ are right-connected and $v \notin S$, we have $e$ point from $u$ to $v$. Note that these two cases cannot happen simultaneously as this would imply that there is a path from $U_{\alpha}$ to $V_{\alpha}$ which does not go through any vertices of $S$.
\item If neither of the above cases holds then we do not assign a direction to $e$.
\end{enumerate}
If $e$ is not on one of the paths from $U_{\tau}$ to $V_{\tau}$ then we do not assign a direction to $e$.
{\color{black}
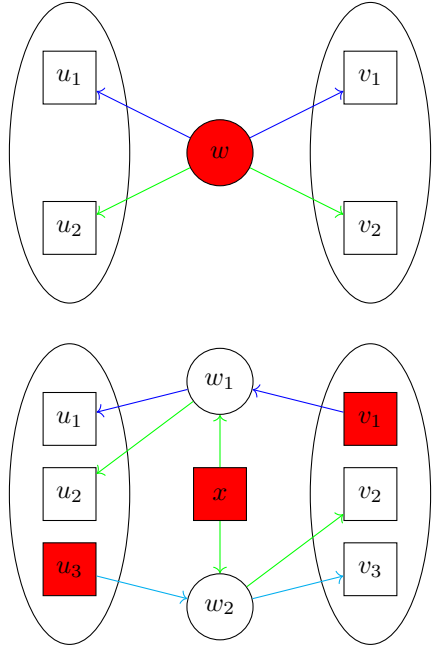
\begin{figure}[!hbt]
    \centering

    \begin{tikzpicture}[
      mycircle/.style={
         circle,
         draw=black,
         fill=white,
         fill opacity = 1,
         text opacity=1,
         inner sep=0pt,
         minimum size=25pt,
         font=\small},
      mysquare/.style={
         rectangle,
         draw=black,
         fill=white,
         fill opacity = 1,
         text opacity=1,
         inner sep=0pt,
         minimum height=20pt, 
         minimum width=20pt,
         font=\small},
      myarrow/.style={-Stealth},
      node distance=0.6cm and 1.2cm
      ]
      \draw (-2,0) ellipse (.8cm and 2cm);
      \draw (2,0) ellipse (.8cm and 2cm);
      \node[mysquare]  at (-2, 1) (u1) {$u_1$};
      \node[mysquare]  at (-2, -1) (u2) {$u_2$};
      \node[mycircle, fill = red]  at (0, 0) (w) {$w$};
      \node[mysquare]  at (2, 1) (v1) {$v_1$};
      \node[mysquare]  at (2, -1) (v2) {$v_2$};
      \draw[->, color = blue] (w) -- (u1);
      \draw[->, color = green] (w) -- (u2);
      \draw[->, color = blue] (w) -- (v1);
      \draw[->, color = green] (w) -- (v2);
    \end{tikzpicture}

\vspace{14pt}

    \begin{tikzpicture}[
      mycircle/.style={
         circle,
         draw=black,
         fill=white,
         fill opacity = 1,
         text opacity=1,
         inner sep=0pt,
         minimum size=25pt,
         font=\small},
      mysquare/.style={
         rectangle,
         draw=black,
         fill=white,
         fill opacity = 1,
         text opacity=1,
         inner sep=0pt,
         minimum height=20pt, 
         minimum width=20pt,
         font=\small},
      myarrow/.style={-Stealth},
      node distance=0.6cm and 1.2cm
      ]
      \draw (-2,0) ellipse (.8cm and 2cm);
      \draw (2,0) ellipse (.8cm and 2cm);
      \node[mysquare]  at (-2, 1) (u1) {$u_1$};
      \node[mysquare]  at (-2, 0) (u2) {$u_2$};
      \node[mysquare, fill = red]  at (-2, -1) (u3) {$u_3$};
      \node[mycircle]  at (0, 1.5) (w1) {$w_1$};
      \node[mysquare, fill=red]  at (0, 0) (x) {$x$};
      \node[mycircle]  at (0, -1.5) (w2) {$w_2$};
      \node[mysquare, fill = red]  at (2, 1) (v1) {$v_1$};
      \node[mysquare]  at (2, 0) (v2) {$v_2$};
      \node[mysquare]  at (2, -1) (v3) {$v_3$};
      \draw[->, color = blue] (w1) -- (u1);
      \draw[->, color = green] (w1) -- (u2);
      \draw[->, color = cyan] (u3) -- (w2);
      \draw[->, color = green] (x) -- (w1);
      \draw[->, color = green] (x) -- (w2);
      \draw[->, color = blue] (v1) -- (w1);
      \draw[->, color = green] (w2) -- (v2);
      \draw[->, color = cyan] (w2) -- (v3);
      \end{tikzpicture}
    \caption{Edge factor assignments in middle shapes. The top shape is a dominant shape, while the bottom one is not. Vertices in red form a minimum weight separator.}
    \label{fig:middle_assignment}
\end{figure}
}

Roughly speaking, we assign the $n^{-\frac{1}{2}}$ factor for each edge to the endpoint which it points to if it has a direction and we split the factor evenly between the endpoints if it does not have a direction. However, we modify this idea slightly in order to ensure each vertex has the desired slack.
\begin{definition}\label{def:edgefactorredistribution}
For each edge, we redistribute its factor of $n^{-\frac{1}{2}}$ between its endpoints as follows.
\begin{enumerate}
    \item For each edge pointing from a circle vertex to a square vertex, we assign a factor of $n^{-\frac{1}{2}}$ to the square vertex and a factor of $1$ to the circle vertex. 
    \item For each edge pointing from a square vertex to a circle vertex, we assign a factor of $n^{-\frac{\eps}{4}}$ to the square vertex and a factor of $n^{-\frac{1}{2} + \frac{\eps}{4}}$ to the circle vertex. 
    \item For edges which don't point in either direction, we assign a factor of $n^{-\frac{1}{4} - \frac{\eps}{8}}$ to the square endpoint of $e$ and a factor of $n^{-\frac{1}{4} + \frac{\eps}{8}}$ to the circle endpoint.
\end{enumerate}
For each vertex $v \in V(\tau)$, we define $\assigned(v)$ so that the product of the edge factors assigned to $v$ is $n^{-\assigned(v)}$.
\end{definition}
\begin{lemma}\label{lem:propoermiddleshapeslackanalysis}
For all vertices $v \in V(\tau)$, $\assigned(v) - \target(v) \geq \slack_{\tau,S}(v)$.
\end{lemma}
\begin{proof}
The key observation is that whenever a path passes through a vertex $v$ which is not in $S$, either one of the two edges on the path incident to $v$ points to $v$ or neither edge is given a direction.
\begin{proposition}\label{prop:edgedirections}
If $v \notin S$ and $\{u,v\}$ and $\{v,w\}$ are two edges on one of the paths from $U_{\tau}$ to $V_{\tau}$ then one of the following cases holds:
\begin{enumerate}
    \item $\{u,v\}$ points from $u$ to $v$.
    \item $\{v,w\}$ points from $w$ to $v$.
    \item Neither $\{u,v\}$ nor $\{v,w\}$ is given a direction.
\end{enumerate}
\end{proposition}
\begin{proof}
If $v$ is left-connected, then $w$ is left-connected as well so the edge $\{v,w\}$ points from $w$ to $v$. Similarly, if $v$ is right-connected, then $u$ is right-connected so the edge $\{u,v\}$ points from $u$ to $v$. If $v$ is neither left-connected nor right-connected, then neither $\{u,v\}$ nor $\{v,w\}$ is given a direction.
\end{proof}
We now make the following observations:
\begin{enumerate}
    \item If $v$ is a circle vertex and $v \notin S$, $\assigned(v) \geq \deg(v)(\frac{1}{4} - \frac{\eps}{8})$. To see this, observe that each edge which points away from $v$ contributes $0$ to $\assigned(v)$, each edge which points towards $v$ contributes $\frac{1}{2} + \frac{\eps}{4}$ to $\assigned(v)$, and each edge incident to $v$ which does not have a direction contributes $\frac{1}{4} + \frac{\eps}{8}$ to $\assigned(v)$. By \Cref{prop:edgedirections}, for each edge which points away from $v$, there must be another edge which points towards $v$. Thus, on average, each edge incident to $v$ contributes at least $\frac{1}{4} + \frac{\eps}{8}$ to $\assigned(v)$ which implies that $\assigned(v) \geq \deg(v)(\frac{1}{4} - \frac{\eps}{8})$. Since $\target(v) = \frac{(1-\eps)k}{4}$ and $\deg(v) \geq k$, $\assigned(v) - \target(v) \geq\slack_{\tau,S}(v) = \frac{k\eps}{8}$.
    \item If $v$ is a circle vertex and $v \in S$ then $\assigned(v) \geq 0$, $\target(v) = 0$, and $\slack_{\tau,P}(v) = 0$ so $\assigned(v) - \target(v) \geq\slack_{\tau,S}(v)$.
    \item If $v$ is a square vertex and $v \notin S$ then $\target(v) = \frac{1}{2}$. We observe that if there is a path $P$ passing through $v$, the two edges incident to $v$ contribute $\frac{1}{2} + \frac{\eps}{4}$ to $\assigned(v)$. If $v$ is an endpoint of this path, the edge incident to $v$ will point to $v$ and thus contribute $\frac{1}{2}$ to $\assigned(v)$. Each other edge incident to $v$ contributes $\frac{1}{4} + \frac{\eps}{8}$ to $\assigned(v)$. We have the following cases:
    \begin{enumerate}
        \item If $v \in (U_{\tau} \setminus V_{\tau}) \cup (V_{\tau} \setminus U_{\tau})$ then $\assigned(v) \geq \frac{1}{2} + (\frac{1}{4} + \frac{\eps}{8})(\deg(v) - 1)$ so 
        \[
        \assigned(v) - \target(v) \geq (\frac{1}{4} + \frac{\eps}{8})(\deg(v) - 1) \geq \frac{\deg(v) - 1}{4}
        \]
        \item If $v \notin U_{\tau} \cup V_{\tau}$ then $\assigned(v) \geq \deg(v)(\frac{1}{4} + \frac{\eps}{8})$ so 
        \[
        \assigned(v) - \target(v) \geq \frac{\deg(v) - 2}{4} + \frac{{\eps}\deg(v)}{8}.
        \]
    \end{enumerate}
    \item If $v \in S$ then $\target(v) = 0$. If there is a path passing through $v$, the two edges of this path which are incident to $v$ contribute at least $\frac{\eps}{4} + \frac{\eps}{4}$ to $\assigned(v)$. If $v$ is an endpoint of a path then the edge of this path incident to $v$ contributes at least $\frac{\eps}{4}$ to $\assigned(v)$. Other edges incident to $v$ contribute $\frac{1}{4} + \frac{\eps}{8}$ to $\assigned(v)$. We have the following cases:
    \begin{enumerate}
        \item If $v \in U_{\tau} \cap V_{\tau}$ then $\assigned(v) \geq (\frac{1}{4} + \frac{\eps}{8})\deg(v)$ so $\assigned(v) - \target(v) \geq \frac{\deg(v)}{4}$.
        \item If $v \in (U_{\tau} \setminus V_{\tau}) \cup (V_{\tau} \setminus U_{\tau})$, then $\assigned(v) \geq \frac{\eps}{4} + (\frac{1}{4} + \frac{\eps}{8})(\deg(v) - 1)$ so 
        \[
        \assigned(v) - \target(v) \geq \frac{\eps}{4} + (\frac{1}{4} + \frac{\eps}{8})(\deg(v) - 1) \geq \frac{\deg(v) - 1}{4} + \frac{\eps}{4}.
        \]
        \item If $v \notin U_{\tau} \cup V_{\tau}$, then $\assigned(v) \geq \frac{\eps}{2} + (\deg(v) - 2)(\frac{1}{4} + \frac{\eps}{8})$ so 
        \[
        \assigned(v) - \target(v) \geq \frac{\deg(v) - 2}{4} + \frac{{\eps}\deg(v)}{8} + \frac{\eps}{4}.
        \]
    \end{enumerate}
\end{enumerate}
\Cref{lem:propoermiddleshapeslackanalysis} is proved.
\end{proof}
Putting everything together,
\[
\lambda_{\tau}\Anorm(\tau) = \prod_{v \in V(\tau)}{n^{\target(v) - \assigned(v)}} \leq n^{-\sum_{v \in V(\tau)}{\slack_{\tau,S}(v)}} = n^{-\slack(\tau)}.
\]
\Cref{lem:error_analysis_mid} is proved.
\end{proof}
\subsection{Analyzing Intersection Configurations}\label{sec:intersection}
In this subsection, we study the quantitative properties of intersection configurations. As described in \Cref{sec:productconfiganalysis}, our techniques can also be used to analyze product configurations $\mathcal{P} \in \mathcal{P}_{\alpha_1,\alpha_2}$ (see \Cref{def:SSD-product-config}) where both $\alpha_1$ and $\alpha_2$ are good. 

We start by recasting the definition of intersections (\Cref{def:intersconfig}), with a slightly expanded scope. We introduce some additional terminology (such as vertex equivalence relations) which will be convenient for our later discussions. One modification from the original \Cref{def:intersconfig} is that we will assign the label for each edge specified by $\mathcal{P}$ post-linearization to be the minimal possible value. This specification will be adequate for the purpose of upper bounding the norm of the resulting term.

\begin{definition}[Intersection configuration]\label{def:configuration}
An {\it intersection configuration} $\mathcal{P}$ is a sequence of shapes $(\gamma_{j},\ldots,\gamma_1,\tau,{\gamma'}^{\top}_1,\ldots,{\gamma'}^{\top}_j)$ where $j\geq 1$, together with a vertex {\it equivalence relation} $\equiv$ such that:
\begin{enumerate}
    \item\label{Condition:config-degree]} 
    In every shape in the sequence, each square vertex has total degree at least $2$, and each circle vertex has degree at least $k$.
    \item\label{Condition:config-middle}
    $\tau$ is a proper middle shape in $Q_0$. 
    \item\label{Condition:config-left-right} 
    For each $i\in[1,j]$, $\gamma_i$ is a left shape in $L$ and ${\gamma'}^{\top}_i$ is a right shape in $L^{\top}$. Moreover, for each $i \in [j]$, at least one of $\gamma_i$ and ${\gamma'}^{\top}_i$ is non-trivial.
    \item\label{Condition:config-intersection}
    $U_{\gamma_{i}}=V_{\gamma_{i+1}}$ and $V_{{\gamma'}^{\top}_{i}}=U_{{\gamma'}^{\top}_{i+1}}$ for all $i\in[0,j-1]$ where we take $\gamma_0:={\gamma'}^{\top}_0:=\tau$. There are no other vertex identifications between the shapes.
    \item\label{Condition:config-equiv}
    The equivalence relation $\equiv$ on $\bigcup\limits_{i=1}^j \left(V(\gamma_i)\cup V({\gamma_i'}^{\top})\right)\cup V(\tau)$ satisfies the following conditions. 

    \begin{enumerate}[leftmargin=0pt]
        \item If $v \equiv w$ then either $v$ and $w$ are both circle vertices or $v$ and $w$ are both square vertices.
        \item For each $\alpha \in \mathcal{P}$, the restriction of $\equiv$ to $\alpha$ is trivial.
        \item $\equiv$ satisfies the following \emph{intersection-separation condition}. For each $i \in [j]$, if we take $Int(\gamma_i)$ to be the set of vertices in $V(\gamma_i) \setminus V_{\gamma_i}$ which are equivalent to a vertex in $\left(\cup_{i' \in [i-1]}{V(\gamma_{i'})}\right) \cup V(\tau) \cup \left(\cup_{i'' \in [i]}{V({\gamma_{i''}}^{\top})}\right)$ then $U_{\gamma_i}$ is a minimum square separator between $U_{\gamma_i}$ and $Int(\gamma_i) \cup V_{\gamma_i}$. Similarly, if we take $Int({\gamma'_{i}}^{\top})$ to be the set of vertices in $V({\gamma'_{i}}^{\top}) \setminus U_{{\gamma'_{i}}^{\top}}$ which are equivalent to a vertex in $\left(\cup_{i' \in [i]}{V(\gamma_{i})}\right) \cup V(\tau) \cup \left(\cup_{i'' \in [i-1]}{V({\gamma'_{i''}}^{\top})}\right)$ then $V_{{\gamma'_{i}}^{\top}}$ is a minimum square separator between $U_{{\gamma'_{i}}^{\top}} \cup Int({\gamma'_{i}}^{\top})$ and $V_{{\gamma'_{i}}^{\top}}$.
    \end{enumerate} 
\end{enumerate}
We call $j$ the length of $\mathcal{P}$. We call $\gamma_j,\dots,\tau,\dots,\gamma'^{\top}_j$ the shapes in $\mathcal{P}$, which are distinguished from each other. The vertex set of $\mathcal{P}$ is $V(\mathcal{P}):=\bigcup\limits_{i=1}^j \left(V(\gamma_i)\cup V({\gamma_i'}^{\top})\right)\cup V(\tau)$, the edge set of $\mathcal{P}$ is $E(\mathcal{P}):=\bigsqcup\limits_{i=1}^j \left(E(\gamma_i)\sqcup E({\gamma_i'}^{\top})\right)\sqcup E(\tau)$ where $\sqcup$ means disjoint union and each edge carries the label from the corresponding shape. We define the total size of $\mathcal{P}$ to be $\totalsize(\mathcal{P}):=|V(\mathcal{P})|+w(E(\mathcal{P}))$.

Finally, we define the \emph{result of $\mathcal{P}$} to be the following shape $\resultP$. First we take the quotient of the union of shapes in $\mathcal{P}$ by $\equiv$, which can have multi-edges. Then we replace all edges between each pair of vertex classes with a single edge, where if the edge labels are $i_1,...,i_t$, then the label of this edge is the minimal $i$ such that $h_i$ appears in the linear expansion of $\Prod_{l=1}^t h_{i_l}$ in the basis $\{h_l\mid l\in \N\}$. Note that $\resultP$ may have isolated vertices but no multi-edges.
\end{definition}

Note that vertices in each shape in $\mathcal{P}$ always stay distinct under $\equiv$. Also, for any fixed equivalence class, there are at most $2j+1$ many vertices in that class.

{\bf Notation.} 
We will call an intersection configuration just a configuration. Given a configuration, we use $[v]$ to denote the equivalence class of $v$ under $\equiv$, which is also viewed as a vertex in $\resultP$. We let $[U]:=\cup_{v\in U}[v]$ if $U\subseteq V(\mathcal{P})$. We use symbols like $v,w,v'$ to denote vertices in $V(\mathcal{P})$ and use square-bracketed symbols like $[v],[w],[v']$ to denote vertex classes, or vertices in $\resultP$. We write $v\in[w]$ to mean that vertex $v\in V(\mathcal{P})$ is in class $[w]$ under $\equiv$.

For each configuration $\mathcal{P}$, we fix an arbitrary minimum weight separator $S$ in the result $\resultP$. For $v\in V(\mathcal{P})$, we let $\deg_{\mathcal{P}}(v)$ be the sum of the degrees of $v$ over all shapes in $\mathcal{P}$ that contains $v$. For a class $[v]$, $\deg_{\mathcal{P}}([v]):=\Sum_{v'\in [v]}\deg_{\mathcal{P}}(v')$.

As in \Cref{def:configcoeff}, we let $\intset$ denote the set of all distinct intersection configurations on $\gamma_j,\ldots,\gamma_1,\tau,{\gamma'}^{\top}_1,\ldots,{\gamma'}^{\top}_j$. We also recall the definitions of the scaling coefficient $\lambda_{\mathcal{P}}$ and the Hermite coefficient $\eta_{\mathcal{P}}$. In particular, recall that $\lambda_P:=\lambda_{\gamma_{j}}\cdots\lambda_{\tau}\cdots\lambda_{{\gamma'}^{\top}_j}=n^{-\sum_{e\in E(\mathcal{P})}l(e)/2}$.

\begin{definition}[Side size of $\mathcal{P}$] 
The side size of a configuration $\mathcal{P}=\gamma_{j},\ldots,\gamma_1,\tau,{\gamma'}^{\top}_1,\ldots,{\gamma'}^{\top}_j$ is $\dside:=\max\{|U_{\gamma_j}|,\dots,|U_\tau|,\dots,|U_{\gamma'^{\top}_j}|,|V_{\gamma'^{\top}_j}|\}$.  
\end{definition}

\begin{remark}
To visualize a configuration $\mathcal{P}$, we can draw the sequence of shapes $(\gamma_j,\dots,\tau,\dots,\gamma'^{\top}_j)$ from left to right, one stacked after another, identifying $V_{\gamma_{i}}=U_{\gamma_{i+1}}$, $V_{{\gamma'}^{\top}_{i}}=U_{{\gamma'}^{\top}_{i+1}}$ for all $i\in[0,j-1]$ where $\gamma_0:={\gamma'}^{\top}_0:=\tau$. 
We then draw dotted lines between different vertices identified under $\equiv$. See \Cref{fig:config} for an example.
\end{remark}

{\color{black}
\begin{figure}
    \centering
    \begin{tikzpicture}[
      mycircle/.style={
         circle,
         draw=black,
         fill=white,
         fill opacity = 1,
         text opacity=1,
         inner sep=0pt,
         minimum size=25pt,
         font=\small},
      mysquare/.style={
         rectangle,
         draw=black,
         fill=white,
         fill opacity = 1,
         text opacity=1,
         inner sep=0pt,
         minimum height=20pt, 
         minimum width=20pt,
         font=\small},
      myarrow/.style={-Stealth},
      node distance=0.6cm and 1.2cm
      ]
      \draw (-7,0) ellipse (.8cm and 2cm);
      \draw (7,0) ellipse (.8cm and 2cm);
      \draw (-1,0) ellipse (.8cm and 1.5cm);
      \draw (1,0) ellipse (.8cm and 1.5cm);
      \node[mysquare]  at (-7, 1) (u1) {$u_1$};
      \node[mysquare]  at (-7, 0) (u2) {$u_2$};
      \node[mysquare]  at (-7, -1) (u3) {$u_3$};
      \node[mycircle]  at (-5.5, 0) (w1) {$w_1$};
      \node[mysquare]  at (-4, 1) (x1) {$x_1$};
      \node[mysquare]  at (-4, 0) (x2) {$x_2$};
      \node[mysquare]  at (-4, -1) (x3) {$x_3$};
      \node[mycircle]  at (-2.5, 0) (w2) {$w_2$};
      \node[mysquare]  at (-1, 0) (v) {$v$};
      \node at (0,0) {$\times$};
      \node[mysquare]  at (1, 0) (up) {$v$};
      \node[mycircle]  at (2.5, 0) (w3) {$w_3$};
      \node[mysquare]  at (4, 1) (y1) {$y_1$};
      \node[mysquare]  at (4, 0) (y2) {$y_2$};
      \node[mysquare]  at (4, -1) (y3) {$y_3$};
      \node[mycircle]  at (5.5, 0) (w4) {$w_4$};
      \node[mysquare]  at (7, 1) (vp1) {$v'_1$};
      \node[mysquare]  at (7, 0) (vp2) {$v'_2$};
      \node[mysquare]  at (7, -1) (vp4) {$v'_3$};
      \draw[-] (u1) -- (w1);
      \draw[-] (u2) -- (w1);
      \draw[-] (u3) -- (w1);
      \draw[-] (w1) -- (x1);
      \draw[-] (w1) -- (x2);
      \draw[-] (w1) -- (x3);
      \draw[-] (x1) -- (w2);
      \draw[-] (x2) -- (w2);
      \draw[-] (x3) -- (w2);
      \draw[-] (w2) -- (v);
      \draw[-] (up) -- (w3);
      \draw[-] (w3) -- (y1);
      \draw[-] (w3) -- (y2);
      \draw[-] (w3) -- (y3);
      \draw[-] (y1) -- (w4);
      \draw[-] (y2) -- (w4);
      \draw[-] (y3) -- (w4);
      \draw[-] (w4) -- (vp1);
      \draw[-] (w4) -- (vp2);
      \draw[-] (w4) -- (vp3);
      \draw[color=orange, dotted, line width=0.4mm] (x1) .. controls (-1,2) and (1,2) .. (y1);
      \draw[color=orange, dotted, line width=0.4mm] (x2) .. controls (-1,1.5) and (1,1.5) .. (y2);
      \draw[color=orange, dotted, line width=0.4mm] (x3) .. controls (-1,-2) and (1,-2) .. (y3);
      \draw[color=orange, dotted, line width=0.4mm] (w1) .. controls (-4.5,3) and (4.5,3) .. (w4);
      \draw[color=orange, dotted, line width=0.4mm] (w2) .. controls (-0.5,1) and (0.5,1) .. (w3);
      \end{tikzpicture}
    \caption{An intersection configuration $\mathcal{P}$ of length 1. The trivial shape in the middle is omitted.}
    \label{fig:config}
\end{figure}
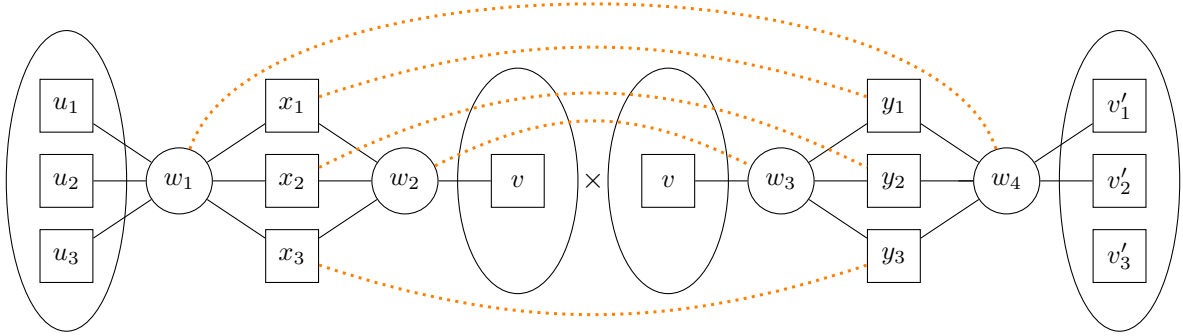
}

We extend the notion of a vertex being left-connected (see \Cref{def:connected-one-shape}) to configurations.

\begin{definition}\label{def:left-connected-P}
       We call a vertex $v \in V(\mathcal{P})$ left-connected if there is a path in $\resultP$ from $[U_{\gamma_j}]$ to $[v]$ which does not intersect any vertex of $S \setminus [v]$. Similarly, we call $v$ right-connected if there is a path from $[v]$ to $[V_{\gamma'^{\top}_j}]$ in $\resultP$ which does not intersect any vertex of $S \setminus [v]$.
\end{definition}

\subsubsection{Slack for intersection configurations}
For this section, we assume that we have an intersection configuration $\mathcal{P}$ and a minimum weight vertex separator $S$ for $\resultP$.bFor our analysis, it is very useful to consider the first and last times that each vertex appears non-trivially.
\begin{definition}[Non-trivial appearances]\label{def:nontrivial}
Given $\alpha, \alpha' \in \mathcal{P}$ and a vertex $v$ such that $v \in V(\alpha)$, we say that $v$ appears in $\alpha'$ if there is a vertex $v' \in V(\alpha')$ in the same equivalence class as $v$. If $v' \notin U_{\alpha'} \cap V_{\alpha'}$ then we say $v$ appears non-trivially in $\alpha'$. If $v' \in U_{\alpha'} \cap V_{\alpha'}$ then we say that $v$ appears trivially in $\alpha'$.

Similarly, we say a vertex $[v] \in V(\resultP)$ appears non-trivially in $\alpha$ if there is a vertex $v' \in V(\alpha)$ such that $v' \in [v]$. If $v' \notin U_{\alpha'} \cap V_{\alpha'}$ then we say $[v]$ appears non-trivially in $\alpha'$. If $v' \in U_{\alpha'} \cap V_{\alpha'}$ then we say that $[v]$ appears trivially in $\alpha'$.
\end{definition}
Note that if $v \in U_{\alpha} \cap V_{\alpha}$ then we say that $v$ appears trivially in $\alpha$ even if $v$ is incident to edges in $E(\alpha)$.
\begin{fact}
If $v$ is a circle vertex and $v$ appears in $\alpha$ then $v$ appears non-trivially in $\alpha$.
\end{fact}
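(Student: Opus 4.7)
The plan is to observe that this fact is essentially immediate once two standing conventions are combined. First, throughout the paper (see the last bullet of \Cref{def:graph_matrix} and the parity-labelled analogue in \Cref{def:graphmatrixparity}) every shape $\alpha$ used satisfies $U_\alpha \cup V_\alpha \subseteq V_\square(\alpha)$, i.e., only square vertices appear in the left and right indices. Second, condition 5(a) of \Cref{def:configuration} insists that the equivalence relation $\equiv$ only identifies vertices of the same sort.

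Given these, I would argue as follows. Suppose $v$ is a circle vertex that appears in $\alpha$; by \Cref{def:nontrivial} this means there is some $v' \in V(\alpha)$ with $v' \equiv v$. By condition 5(a) of \Cref{def:configuration}, $v'$ is then also a circle vertex. Since $U_\alpha \cap V_\alpha \subseteq U_\alpha \cup V_\alpha$ contains only square vertices, $v'$ cannot lie in $U_\alpha \cap V_\alpha$. Hence, again by \Cref{def:nontrivial}, $v$ appears non-trivially in $\alpha$.

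There is really no main obstacle here: the statement is a sanity check that justifies the subsequent use of ``non-trivial appearances'' for circle vertices without case distinction. The only point worth flagging in the write-up is to explicitly cite the convention that indices consist only of square vertices, since that convention is what rules out the case $v' \in U_\alpha \cap V_\alpha$ for circle $v'$. Once this is spelled out, the proof is a single line.
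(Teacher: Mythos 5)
Your proof is correct, and since the paper states this fact without proof (treating it as immediate from the definitions), your one-line argument is exactly the intended justification: circle vertices can only be $\equiv$-identified with circle vertices, and circle vertices never lie in $U_\alpha \cap V_\alpha$ because indices consist solely of square vertices.
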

\begin{fact}
If $v \in U_{\alpha} \setminus V_{\alpha}$ then either $v$ appears non-trivially in an earlier shape or there is a vertex $v'$ in the same equivalence class as $v$ such that $v' \in U_{\gamma_j}$. Similarly, if $v \in V_{\alpha} \setminus U_{\alpha}$ then either $v$ appears non-trivially in a later shape or there is a vertex $v'$ in the same equivalence class as $v$ such that $v' \in V_{{\gamma'}^{\top}_{j}}$.
\end{fact}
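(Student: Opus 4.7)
The plan is a direct trace-back argument using only the boundary identifications forced by condition \ref{Condition:config-intersection} of \Cref{def:configuration}. To streamline notation, I would order the shapes of $\mathcal{P}$ from left to right as $\alpha_{-j},\alpha_{-j+1},\ldots,\alpha_{-1},\alpha_0,\alpha_1,\ldots,\alpha_j$, where $\alpha_{-i}=\gamma_i$ for $i\in[j]$, $\alpha_0=\tau$, and $\alpha_i={\gamma'}^{\top}_i$ for $i\in[j]$. Condition \ref{Condition:config-intersection} then reads $V_{\alpha_t}=U_{\alpha_{t+1}}$ for every $-j\leq t\leq j-1$, so any vertex lying in the boundary between two adjacent shapes is literally one element of $V(\mathcal{P})$ and belongs to the same equivalence class as itself.

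For the first half, I would prove by downward induction on $t$ the slightly strengthened claim: if $v\in U_{\alpha_t}$, then either $v$ appears non-trivially in some $\alpha_s$ with $s<t$, or there exists $v'\in U_{\gamma_j}$ with $v\equiv v'$. The base case $t=-j$ is immediate with $v'=v$. For the inductive step $t>-j$, the identification $U_{\alpha_t}=V_{\alpha_{t-1}}$ gives $v\in V_{\alpha_{t-1}}$. If $v\notin U_{\alpha_{t-1}}$ then $v\in V_{\alpha_{t-1}}\setminus U_{\alpha_{t-1}}$ is a non-trivial appearance in the earlier shape $\alpha_{t-1}$ and we are done. Otherwise $v\in U_{\alpha_{t-1}}\cap V_{\alpha_{t-1}}$, which is a trivial appearance but still satisfies $v\in U_{\alpha_{t-1}}$, so the inductive hypothesis applied at position $t-1$ closes the case. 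Applying this strengthened claim to the hypothesis $v\in U_{\alpha}\setminus V_{\alpha}\subseteq U_{\alpha}$ yields exactly the first half of the fact. The second half is symmetric: one traces rightward using $V_{\alpha_t}=U_{\alpha_{t+1}}$ and proves by upward induction that $v\in V_{\alpha_t}$ implies either a non-trivial appearance in some later $\alpha_s$ with $s>t$, or $v'\in V_{{\gamma'}^{\top}_j}$ with $v\equiv v'$.

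I do not expect any real technical obstacle. The only subtlety worth flagging is the trivial-appearance case $v\in U_{\alpha_{t-1}}\cap V_{\alpha_{t-1}}$: it does not immediately produce a non-trivial appearance, but it does preserve the inductive hypothesis $v\in U_{\alpha_{t-1}}$ and therefore allows the recursion to advance one shape further to the left, which is why the strengthened claim (hypothesis $v\in U_{\alpha_t}$ rather than $v\in U_{\alpha_t}\setminus V_{\alpha_t}$) is the right object to induct on. No appeal to the equivalence relation $\equiv$ beyond the identifications already built into \Cref{def:configuration} is required, and termination is guaranteed because each inductive step strictly decreases (respectively, increases) $t$.
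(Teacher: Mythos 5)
Your proof is correct and is essentially the argument the paper implicitly has in mind: the paper states this as a bare \emph{Fact} without a written proof, and your downward/upward induction along the chain of boundary identifications $V_{\alpha_t}=U_{\alpha_{t+1}}$ forced by condition 4 of \Cref{def:configuration} is the natural and complete justification. Your flagged subtlety --- strengthening the induction hypothesis to $v\in U_{\alpha_t}$ rather than $v\in U_{\alpha_t}\setminus V_{\alpha_t}$ so that trivial appearances $v\in U_{\alpha_{t-1}}\cap V_{\alpha_{t-1}}$ let the recursion advance --- is exactly the point that makes the induction go through cleanly.
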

We now define the slack for the vertices in $\gamma_{j},\ldots,\gamma_1,\tau,{\gamma'}^{\top}_1,\ldots,{\gamma'}^{\top}_j$.
\begin{definition}[Default slack]
We define $\defaultslack(\resultP) = \sum_{\alpha \in \mathcal{P}}{\sum_{v \in V(\alpha)}{\defaultslack_{\alpha}(v)}}$. 
\end{definition}
\begin{definition}[Extra slack]
\label{def:intersectionvertexextraslack}
For each $\alpha \in \mathcal{P}$, we define the extra slack for vertices $v \in V(\alpha)$ as follows.
\begin{enumerate}
    \item\label{item:circle-extra-alpha-S} If $v$ is a circle vertex, we take $\extraslack_{\alpha,S}(v) = 0$ if $[v] \in S$ or $v$ appears non-trivially in both an earlier and a later shape. Otherwise, we take $\extraslack_{\alpha,S}(v) = \frac{k{\eps}}{16}$.%$\frac{{\eps}\deg(v)}{16}$.
    \item \label{item:square-extra-alpha-S} If $v$ is a square vertex, we take $\extraslack_{\alpha,S}(v) = \frac{\eps}{8}$ if $v \notin U_{\alpha} \cap V_{\alpha}$ and at least one of the following holds:
    \begin{enumerate}
        \item \label{item:square-extra-alpha-S-1} $[v] \in S$ or $v$ appears non-trivially in both an earlier and a later shape.
        \item \label{item:square-extra-alpha-S-2} $v \in U_{\alpha} \setminus V_{\alpha}$ and $v$ appears in a later shape (note that at least one such appearance must be non-trivial).
        \item \label{item:square-extra-alpha-S-3} $v \in V_{\alpha} \setminus U_{\alpha}$ and $v$ appears in an earlier shape (note that at least one such appearance must be non-trivial).
        \item \label{item:square-extra-alpha-S-4} $v \in U_{\alpha} \setminus V_{\alpha}$, $v$ appears non-trivially in an earlier shape, and $[v]$ is not isolated in $\resultP$.
        \item \label{item:square-extra-alpha-S-5} $v \in V_{\alpha} \setminus U_{\alpha}$, $v$ appears non-trivially in a later shape, and $[v]$ is not isolated in $\resultP$.
    \end{enumerate}
Otherwise, we take $\extraslack_{\alpha,S}(v) = 0$.
\end{enumerate}
We define $\extraslack_{S}(\resultP) := \sum_{\alpha \in \mathcal{P}}{\sum_{v \in V(\alpha)}{\extraslack_{\alpha,S}(v)}}$. Then $\extraslack(\resultP)$ is defined to be the maximum of $\extraslack_{S}(\resultP)$ over all minimum weight vertex separators $S$ of $\resultP$.
\end{definition}
\begin{remark}
Note that conditions \ref{item:circle-extra-alpha-S} and \eqref{item:square-extra-alpha-S-1} are similar to the extra slack for proper middle shapes except that the condition ``$v \in S$'' is replaced by the condition that $[v] \in S$ or $v$ appears non-trivially in both an earlier and a later shape and we divide the extra slack by a factor of $2$ in order to avoid double counting. This is necessary as there are a few cases where a vertex has slack in two different shapes $\alpha, \alpha' \in \mathcal{P}$ but only obtains extra edge factor(s) from the edges in one of these shapes.

We include conditions \eqref{item:square-extra-alpha-S-2}, \eqref{item:square-extra-alpha-S-3}, \eqref{item:square-extra-alpha-S-4}, and \eqref{item:square-extra-alpha-S-5} in order to show that the only configurations with zero slack are well-behaved intersection configurations.
\end{remark}
\begin{definition}
For each $\alpha \in \mathcal{P}$ and $v \in V(\alpha)$, we define $\slack_{\alpha,S}(v) = \defaultslack_{\alpha}(v) + \extraslack_{\alpha,S}(v)$.
\end{definition}
\begin{definition}[Slack of $\resultP$]\label{eq:slack-resultP}
We define $\slack(\resultP) := \defaultslack(\resultP) + \extraslack(\resultP)$. Note that $\slack(\resultP) = \max\limits_{\text{Minimum weight separators $S$ of $\resultP$}}{\{\Sum\limits_{\alpha \in \mathcal{P}}{\Sum\limits_{v \in V(\alpha)}{\slack_{\alpha,S}(v)}}\}}$.
\end{definition}
\begin{lemma}[Key lemma for error analysis]\label{lem:error_key}
For all intersection configurations $\mathcal{P}$, 
\[
\lambda_{\mathcal{P}}\Anorm(\resultP) \leq n^{-\slack(\resultP)}.
\]
\end{lemma}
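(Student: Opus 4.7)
My plan is to extend the edge-factor accounting scheme from \Cref{lem:error_analysis_mid} to the configuration setting. I fix a minimum weight separator $S$ of $\resultP$ that realizes $\extraslack_S(\resultP) = \extraslack(\resultP)$, interpret $\lambda_{\mathcal{P}}$ as placing a factor of $n^{-1/2}$ on each edge of $E(\mathcal{P})$ (splitting edges of label $l(e)>1$ into $l(e)$ copies as in the proof of \Cref{lem:error_analysis_mid}), and interpret $\Anorm(\resultP)$ as placing on each vertex class $[v] \in V(\resultP)$ a target factor defined exactly as in \Cref{def:targetfactors} applied to $\resultP$; isolated classes absorb the extra $n^{1/2}$ coming from $w(\Iso(\resultP))$ into their targets. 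The goal is to redistribute these edge factors within each individual shape $\alpha \in \mathcal{P}$ so that the overall surplus matches $n^{-\slack(\resultP)}$.

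Before starting the redistribution I pick, for each class $[v] \in V(\resultP)$, a single representative shape $\alpha_{[v]} \in \mathcal{P}$ that will be the only shape responsible for paying $\target([v])$. For a circle class I take any shape containing $[v]$, since all its appearances are automatically non-trivial. For a square class I take the earliest shape containing a non-trivial appearance of $[v]$ if one exists, and otherwise any shape containing $[v]$. Any further appearance of $[v]$ in another shape then becomes a source of extra slack: items \ref{item:square-extra-alpha-S-2}--\ref{item:square-extra-alpha-S-5} of \Cref{def:intersectionvertexextraslack} record precisely the situations in which such a repeated appearance forces an edge incident to $v$ to contribute a surplus factor of $n^{-1/2}$ on top of a target already paid inside $\alpha_{[v]}$.

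Within each shape $\alpha \in \mathcal{P}$ I then reuse the local scheme of \Cref{def:edgefactorredistribution}, using the inherited notion of left- and right-connectedness via $\equiv$ given in \Cref{def:left-connected-P}. The intersection-separation condition in \Cref{Condition:config-equiv}, together with \Cref{cor:gammapaths}, guarantees that within each $\alpha$ I can select square-disjoint paths running through $U_\alpha$, $V_\alpha$ and the appearances in $\alpha$ of classes of $S$; in particular every internal square vertex of $\alpha$ not on $S$ lies on at least one such path. With this in hand, the vertex-by-vertex bookkeeping of \Cref{lem:propoermiddleshapeslackanalysis} carries over locally and produces a factor of $n^{-\slack_{\alpha,S}(v)}$ at each $v \in V(\alpha)$. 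The $\frac{\eps}{4}$ slack from the middle-shape case is halved to $\frac{\eps}{8}$ here to avoid double counting whenever $[v] \in S$ and $v$ appears non-trivially in more than one shape, which is precisely the motivation for the factor $\tfrac{1}{2}$ in item \ref{item:square-extra-alpha-S-1} of \Cref{def:intersectionvertexextraslack}.

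The main obstacle is the path-finding step inside each $\alpha$: one needs square-disjoint paths that simultaneously respect the globally chosen separator $S$, terminate only at $U_\alpha \cup V_\alpha$ or at appearances in $\alpha$ of classes of $S$, and cover every internal square vertex of $\alpha$ not on $S$. This is where the intersection-separation condition is used in full strength, via \Cref{cor:gammapaths} applied to appropriately restricted subgraphs of $\alpha$, and it is also what forces the non-isolated-class clauses \ref{item:square-extra-alpha-S-4} and \ref{item:square-extra-alpha-S-5} of \Cref{def:intersectionvertexextraslack} to contribute slack (a non-isolated class in $\resultP$ sends an extra edge into $\alpha$ that cannot be absorbed by the target). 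A secondary and tedious obstacle is going through the five sub-cases of item \ref{item:square-extra-alpha-S} of \Cref{def:intersectionvertexextraslack}: each case encodes a subtly different geometric situation, and verifying each requires identifying precisely which edge at $v$ pays the surplus. Once these case-checks are in place, summing the per-vertex bounds over all $v \in V(\alpha)$ and all $\alpha \in \mathcal{P}$ gives $\lambda_{\mathcal{P}}\Anorm(\resultP) \leq n^{-\slack(\resultP)}$ as required.
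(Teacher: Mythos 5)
You correctly identify the overall strategy (edge-factor accounting, using \Cref{cor:gammapaths}, redistribution via \Cref{def:edgefactorredistribution}), but your proposal departs from the paper's proof at a point where the departure breaks the argument. You propose to charge the entire target factor for each class $[v]$ to a \emph{single representative shape}, namely the earliest shape where $[v]$ appears non-trivially. The paper instead distributes the target between the \emph{first and last} shapes $\alpha,\alpha'$ where $v$ appears non-trivially, and moreover the split is not uniform: it can be $\big(0,\tfrac12\big)$, $\big(\tfrac12,0\big)$, or $\big(\tfrac14,\tfrac14\big)$ depending on whether $v$ is a left or right \emph{anchor vertex} of a chosen path, whether it is left-connected, and whether $[v]$ is isolated in $\resultP$ (Definition \ref{def:intersectiontargetfactors}). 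This case-dependent split is essential: if $v$ sits in (say) $V_\alpha\setminus U_\alpha$ with degree $1$ in $\alpha$ and the unique edge at $v$ in $\alpha$ points \emph{away} from $v$ under the direction scheme (e.g.\ because $v$ is a right anchor that is left-connected), then $\assigned_{\alpha,S}(v) = \tfrac{\eps}{4}$ is far below the target $\tfrac12$. Your scheme has no way to move the unpaid target to $\alpha'$, because you have set $\target_{\alpha',S}(v)=0$. The paper's Lemma \ref{lem:assigned[v]} handles exactly this by letting $\alpha'$ carry part (or all) of the target, and by observing that whenever $v$ earns too little in one shape it must earn a compensating surplus in the other (this is precisely what the anchor/left-connectedness case analysis in Definition \ref{def:intersectiontargetfactors} and the two bulleted lists in the proof of Lemma \ref{lem:assigned[v]} are for).

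A second, related gap: you describe the edge-direction scheme inside each $\alpha\in\mathcal{P}$ as ``reuse the local scheme of \Cref{def:edgefactorredistribution} using the inherited left/right-connectedness,'' but that description matches the middle-shape proof (\Cref{lem:error_analysis_mid}), not the configuration proof. The paper introduces left and right \emph{anchor vertices} $u_P,v_P$ on each chosen path (Definition \ref{def:anchor}) and then directs edges in three regimes — to the left of $u_P$, to the right of $v_P$, and between the anchors. The anchors encode where along the path the ``external'' appearances of vertices begin and end; edges outside the anchor interval are forced to point outward regardless of $S$, which is what lets vertices that appear non-trivially in earlier/later shapes shift their target payment to where they actually collect edge factors. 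Without introducing anchors and the tripartite edge-direction rule, the vertex-by-vertex bound in \Cref{lem:satisfied} (and ultimately Lemma \ref{lem:assigned[v]}) does not go through; in particular the sub-cases \ref{item:square-extra-alpha-S-2}--\ref{item:square-extra-alpha-S-5} of Definition \ref{def:intersectionvertexextraslash} that you cite require exactly this machinery to be paid off, and you cannot substitute a flat ``surplus factor of $n^{-1/2}$'' for the $\tfrac{\eps}{8}$ slack they actually record.
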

\begin{proof}
We prove this lemma using a similar analysis as we used to prove Lemma \ref{lem:error_analysis_mid}. In particular, we show that we can assign directions to the edges of $\gamma_j,\ldots,\gamma_1,\tau,\gamma'^{\top}_1,\ldots, \gamma'^{\top}_j$ so that if we use the redistribution scheme of Definition \ref{def:edgefactorredistribution}, the total factor for each vertex $v$ is at most $n^{-\sum_{\alpha \in \mathcal{P}}(\defaultslack_{\alpha}(v)+\extraslack_{\alpha,S}(v))}$.
In order to assign directions to the edges of $\gamma_j,\ldots,\gamma_1,\tau,\gamma'^{\top}_1,\ldots, \gamma'^{\top}_j$, we need a few more definitions and observations.

In this proof, we fix an arbitrary minimum weight vertex separator $S$ for $\resultP$. We fix sets of paths for $\gamma_{j},\ldots,\gamma_1$ and ${\gamma'}^{\top}_1,\ldots,{\gamma'}^{\top}_j$ in $\mathcal{P}$ given by Corollary \ref{cor:gammapaths} as well as $|U_{\tau}|$ many square-disjoint paths from $U_{\tau}$ to $V_{\tau}$. We orient all these paths from left to right. We define {\it anchor vertices} for these paths as follows.
\begin{definition}[Anchor vertices]\label{def:anchor}
For each chosen path $P$ in $\gamma_i$, we define the \emph{left anchor vertex} $u_P$ for $P$ to be the \emph{last} vertex on the path that appears in $U_{\gamma_i}$ or in an earlier shape. We then define the \emph{right anchor vertex} $v_P$ for $P$ to be the \emph{first} vertex equal to or after $u_P$ that appears in $V_{\gamma_i}$ or a later shape.

Similarly, for each path $P$ in $\tau$, we define the left anchor vertex $u_P$ for $P$ to be the last vertex on the path that appears in $U_{\tau}$ or in an earlier shape. We then define the right anchor vertex $v_P$ for $P$ to be the first vertex equal to or after $u_P$ that appears in $V_{\tau}$ or a later shape.

For each path $P$ in ${\gamma'}^{\top}_i$, we define the right anchor vertex $v_P$ for $P$ to be the \emph{first} vertex on the path (going from left to right) that appears in $V_{{\gamma'}^{\top}_i}$ or in a later shape. We then define the left anchor vertex $u_P$ for $P$ to be the \emph{last} vertex equal to or before $v_P$ that appears in $U_{{\gamma'}^{\top}_i}$ or an earlier shape.
\end{definition}

We now assign directions to the edges of $\gamma_j,\ldots,\gamma_1,\tau,\gamma'^{\top}_1,\ldots, \gamma'^{\top}_j$. For each shape $\alpha \in \resultP$ and each chosen path $P$ in $\alpha$ (going from left to right):
\begin{enumerate}
    \item For each edge $e = (u,v)$ to the left of $u_P$, have $e$ point to $u$.
    \item For each edge $e = (u,v)$ to the right of $v_P$, have $e$ point to $v$.
    \item For each edge $e = (u,v)$ between $u_P$ and $v_P$, if one of its endpoints is in $S$, we have $e$ point to the other endpoint (if both endpoints are in $S$, the choice is arbitrary). Otherwise, we have $e$ point to $u$ if both $u$ and $v$ are left-connected in $\resultP$ and we have $e$ point to $v$ if neither $u$ nor $v$ is left-connected in $\resultP$.
\end{enumerate}
Notice that the last item is well-defined. This is because each edge $e = (u,v)$ on $P$ between $u_P$ and $v_P$ gives rise to a single edge in $\resultP$ (in particular the edge won't disappear). Thus, if neither $u$ nor $v$ is in $S$ then either both $u$ and $v$ are left-connected or neither $u$ nor $v$ are left-connected.

Edges which are not on one of the chosen paths are not given a direction.

{\color{black}
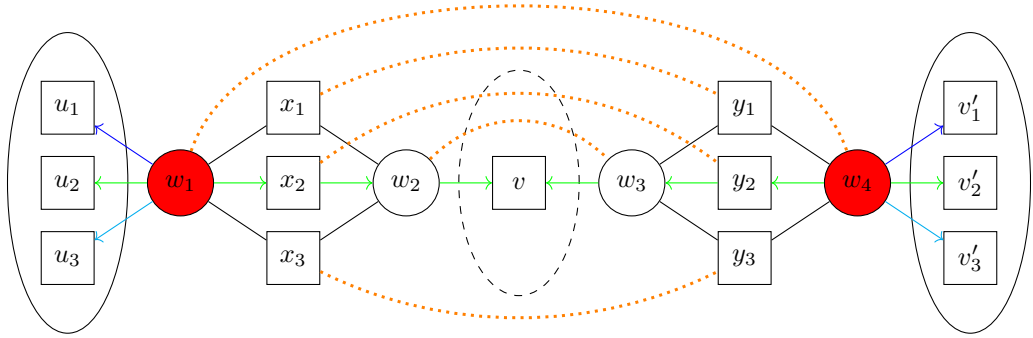
\begin{figure}
    \centering
    \begin{tikzpicture}[
      mycircle/.style={
         circle,
         draw=black,
         fill=white,
         fill opacity = 1,
         text opacity=1,
         inner sep=0pt,
         minimum size=25pt,
         font=\small},
      mysquare/.style={
         rectangle,
         draw=black,
         fill=white,
         fill opacity = 1,
         text opacity=1,
         inner sep=0pt,
         minimum height=20pt, 
         minimum width=20pt,
         font=\small},
      myarrow/.style={-Stealth},
      node distance=0.6cm and 1.2cm
      ]
      \draw (-6,0) ellipse (.8cm and 2cm);
      \draw (6,0) ellipse (.8cm and 2cm);
      \draw[dashed] (0,0) ellipse (.8cm and 1.5cm);
      \node[mysquare]  at (-6, 1) (u1) {$u_1$};
      \node[mysquare]  at (-6, 0) (u2) {$u_2$};
      \node[mysquare]  at (-6, -1) (u3) {$u_3$};
      \node[mycircle, fill = red]  at (-4.5, 0) (w1) {$w_1$};
      \node[mysquare]  at (-3, 1) (x1) {$x_1$};
      \node[mysquare]  at (-3, 0) (x2) {$x_2$};
      \node[mysquare]  at (-3, -1) (x3) {$x_3$};
      \node[mycircle]  at (-1.5, 0) (w2) {$w_2$};
      \node[mysquare]  at (0, 0) (v) {$v$};
      \node[mycircle]  at (1.5, 0) (w3) {$w_3$};
      \node[mysquare]  at (3, 1) (y1) {$y_1$};
      \node[mysquare]  at (3, 0) (y2) {$y_2$};
      \node[mysquare]  at (3, -1) (y3) {$y_3$};
      \node[mycircle, fill = red]  at (4.5, 0) (w4) {$w_4$};
      \node[mysquare]  at (6, 1) (vp1) {$v'_1$};
      \node[mysquare]  at (6, 0) (vp2) {$v'_2$};
      \node[mysquare]  at (6, -1) (vp3) {$v'_3$};
      \draw[->, color = blue] (w1) -- (u1);
      \draw[->, color = green] (w1) -- (u2);
      \draw[->, color = cyan] (w1) -- (u3);
      \draw[-] (w1) -- (x1);
      \draw[->, color = green] (w1) -- (x2);
      \draw[-] (w1) -- (x3);
      \draw[-] (x1) -- (w2);
      \draw[->, color = green] (x2) -- (w2);
      \draw[-] (x3) -- (w2);
      \draw[->, color = green] (w2) -- (v);
      \draw[->, color = green] (w3) -- (v);
      \draw[-] (y1) -- (w3);
      \draw[->, color = green] (y2) -- (w3);
      \draw[-] (y3) -- (w3);
      \draw[-] (w4) -- (y1);
      \draw[->, color = green] (w4) -- (y2);
      \draw[-] (w4) -- (y3);
      \draw[->, color = blue] (w4) -- (vp1);
      \draw[->, color = green] (w4) -- (vp2);
      \draw[->, color = cyan] (w4) -- (vp3);
      \draw[color=orange, dotted, line width=0.4mm] (x1) .. controls (-1,2) and (1,2) .. (y1);
      \draw[color=orange, dotted, line width=0.4mm] (x2) .. controls (-1,1.5) and (1,1.5) .. (y2);
      \draw[color=orange, dotted, line width=0.4mm] (x3) .. controls (-1,-2) and (1,-2) .. (y3);
      \draw[color=orange, dotted, line width=0.4mm] (w1) .. controls (-3.5,3) and (3.5,3) .. (w4);
      \draw[color=orange, dotted, line width=0.4mm] (w2) .. controls (-0.5,1) and (0.5,1) .. (w3);
      \end{tikzpicture}
    \caption{This figure shows the directions of the edges for the configuration $\mathcal{P}$ in \Cref{fig:config}. The vertices in the minimum weight vertex separator $S$ of $\resultP$ are shown in red.}
    \label{fig:config_assignment_before}
\end{figure}
}
There are two key features of this scheme.
\begin{enumerate}
    \item For each shape $\alpha$, all of the vertices $v \in V(\alpha)$ except for the anchor vertices obtain the needed number of edge factors from the edges in $E(\alpha)$. 
    \item For each path $P$ such that the left anchor vertex $u_P$ is not equal to the right anchor vertex $v_P$, neither $u_P$ nor $v_P$ is isolated in $\resultP$. In fact, none of the vertices between $u_P$ and $v_P$ are isolated in $\resultP$ but we only need this fact for $u_P$ and $v_P$.
\end{enumerate}

\begin{definition}[Guarded]
We call a vertex $v \in V(\alpha)$ guarded if it appears between $u_P$ and $v_P$, including $u_P$ and $v_P$, on some chosen path $P$ where $u_P \neq v_P$.  
\end{definition}

\begin{observation}
If $v$ is guarded, then $[v]$ is not isolated in $\resultP$.
\end{observation}

\begin{definition}[Satisfied]\label{def:satisfied}
For each shape $\alpha$ in $\mathcal{P}$ and each vertex $v\in V(\alpha)$ such that $[v]\notin S$, we say that $v$ is satisfied in $\alpha$ if one of the following conditions is satisfied:
\begin{enumerate}
    \item $v \notin U_{\alpha} \cup V_{\alpha}$ and there are at least as many edges pointing towards $v$ as there are edges pointing away from $v$. Note if we split the factor for each edge using the scheme in Definition \ref{def:edgefactorredistribution}, $v$ is guaranteed to obtain roughly $\frac{\deg_\alpha(v)}{2}$ edge factors from the edges in $E(\alpha)$ as edges which do not have a direction split their factor roughly evenly between their endpoints.
    \item $v \in (U_{\alpha} \setminus V_{\alpha}) \cup (V_{\alpha} \setminus U_{\alpha})$ and there are more edges pointing towards $v$ than there are edges pointing away from $v$. $v$. Note that if we split the factor for each edge using the scheme in Definition \ref{def:edgefactorredistribution}, $v$ is guaranteed to obtain roughly $\frac{\deg_\alpha(v) + 1}{2}$ edge factors from the edges in $E(\alpha)$.
\end{enumerate}
Here, $\deg_\alpha(v)$ means the degree of $v$ in $\alpha$.
\end{definition}

We always assume $k\geq 2$ so the degree of every circle vertex in every shape is at least $2$. 
\begin{lemma}\label{lem:satisfied}
Let $\alpha$ be a shape in $\mathcal{P}$ and $v \in V(\alpha) \setminus (U_{\alpha} \cap V_{\alpha})$. If $[v]\notin S$ and $v$ is not satisfied in $\alpha$, then one of the following cases holds. 
\begin{enumerate}
    \item\label{item:notleftconnected} $v$ is the left anchor vertex for some path and is not left-connected.
    \item\label{item:leftconnected} $v$ is the right anchor vertex for some path and is left-connected.
    \item\label{item:both} $v$ is both the left and the right anchor vertex for some path.
\end{enumerate}
\end{lemma}
\begin{proof}
We have the following cases:
\begin{enumerate}
    \item If $v \in U_\alpha \cap V_\alpha$ then one of the paths $P$ for $\alpha$ will be the degenerate path $P = \{v\}$ and we will have that $v = u_P = v_P$.
    \item If $v \in U_{\alpha} \setminus V_{\alpha}$ then there will be a path $P$ starting from $v$. Unless $v$ is the left anchor vertex of $P$ and is not left-connected, the first edge of $P$ will point to $v$ so $v$ will be satisfied.
    \item If $v \in V_{\alpha} \setminus U_{\alpha}$ then there will be a path $P$ ending at $v$. Unless $v$ is the right anchor vertex of $P$ and is left-connected, the last edge of $P$ will point to $v$ so $v$ will be satisfied.
    \item If $v \in V(\alpha) \setminus (U_{\alpha} \cup V_{\alpha})$ then for each path $P$ containing $v$, letting $u$ be the vertex before $v$ on $P$ and letting $w$ be the vertex after $v$ on $P$, we observe that
    \begin{enumerate}
        \item The edge $(u,v)$ will point to $v$ if $v$ comes after $v_P$ or $v$ comes after $u_P$ and is not left-connected.
        \item The edge $(v,w)$ will point to $v$ if $v$ comes before $u_P$ or $v$ comes before $v_P$ and is left-connected. 
    \end{enumerate}
    This implies that if neither of the edges $(u,v)$ and $(v,w)$ points to $v$ then either $v = u_P$ and $v$ is not left-connected, $v = v_P$ and $v$ is left-connected, or $v = u_P = v_P$. Thus, if none of the cases of Lemma \ref{lem:satisfied} hold for $v$ then there will be at least as many edges pointing to $v$ as there are edges pointing away from $v$ so $v$ will be satisfied.
\end{enumerate}
\Cref{lem:satisfied} is proved.
\end{proof}

We now give the slack analysis for \Cref{lem:error_key}. 
For this analysis, we need to divide the factor for each vertex $v \in V(\resultP)$ among all of the different shapes $\alpha \in \mathcal{P}$ such that $v$ appears in $\alpha$. We do this as follows.

\begin{definition}[Target factors for an intersection configuration]\label{def:intersectiontargetfactors}
For each $\alpha \in \mathcal{P}$ and $v \in V(\alpha)$,
\begin{enumerate}
    \item If $[v]\in S$ then we set $\target_{\alpha,S}(v) = 0$ for all $\alpha \in \mathcal{P}$ such that $v \in V(\alpha)$.
    \item If $[v]\notin S$ and $v$ only appears non-trivially in one shape $\alpha \in \mathcal{P}$ then we set $\target_{\alpha,S}(v) = \frac{1}{2}$ and $\target_{\alpha,S}(v) = \frac{\log_n(m)}{2} \leq \frac{(1-\eps)k}{4}$ if $v$ is a circle vertex. Note that in this case, $v$ cannot be isolated in $\resultP$.
    \item If $[v]\notin S$ and $v$ appears non-trivially in at least two different shapes in $\mathcal{P}$, let $\alpha$ and $\alpha'$ be the first and last shapes that $v$ appears in non-trivially. We have the following cases:
    \begin{enumerate}
        \item If $v$ is the left anchor vertex of a chosen path $P$ for $\alpha$ and $v$ is not left-connected then we set $\target_{\alpha,S}(v) = 0$ and we set $\target_{\alpha',S}(v) = \frac{1}{2}$ if $v$ is a square vertex and $\target_{\alpha',S}(v) = \frac{\log_n(m)}{2} \leq \frac{(1-\eps)k}{4}$ if $v$ is a circle vertex.
        \item Similarly, if $v$ is the right anchor vertex of a chosen path $P$ for $\alpha'$ and $v$ is left-connected then we set $\target_{\alpha',S}(v) = 0$ and we set $\target_{\alpha,S}(v) = \frac{1}{2}$ if $v$ is a square vertex and $\target_{\alpha,S}(v) = \frac{\log_n(m)}{2} \leq \frac{(1-\eps)k}{4}$ if $v$ is a circle vertex.
        \item If neither of the above cases holds then we set $\target_{\alpha,S}(v) = \target_{\alpha',S}(v) = \frac{1}{4}$ if $v$ is a square vertex which is not isolated in $\resultP$, we set $\target_{\alpha,S}(v) = \target_{\alpha',S}(v) = \frac{1}{2}$ if $v$ is a square vertex which is isolated in $\resultP$, we set $\target_{\alpha,S}(v) = \target_{\alpha',S}(v) = \frac{1}{4}\log_n(m) \leq \frac{(1-\eps)k}{8}$ if $v$ is a circle vertex which is not isolated in $\resultP$, and we set $\target_{\alpha,S}(v) = \target_{\alpha',S}(v) = \frac{1}{2}\log_n(m) \leq \frac{(1-\eps)k}{4}$ if $v$ is a circle vertex which is isolated in $\resultP$.
    \end{enumerate}
    In all of these cases, we set $\target_{\alpha'',S}(v) = 0$ for all $\alpha'' \in \mathcal{P}$ such that $\alpha''$ is not $\alpha$ or $\alpha'$ and $v \in V(\alpha'')$.
\end{enumerate}
\end{definition}
The following proposition follows from a direct inspection of \Cref{def:intersectiontargetfactors}.
\begin{proposition}
For all $[v] \in V(\resultP)$, letting $\target_{S}([v]) := \sum_{\alpha \in \mathcal{P}}{\sum_{v \in V(\alpha): v \in [v]}{\target_{\alpha,S}(v)}}$, we have that $\target_{S}([v]) = 0$ if $[v] \in S$, $\target_{S}([v]) = \frac{1}{2}$ if $[v] \notin S$ and $[v]$ is a square vertex which is not isolated in $\resultP$, $\target_{S}([v]) = 1$ if $[v] \notin S$ and $[v]$ is a square vertex which is isolated in $\resultP$, $\target_{S}([v]) = \frac{\log_n(m)}{2}$ if $[v] \notin S$ and $[v]$ is a circle vertex which is not isolated in $\resultP$, and $\target_{S}([v]) = \log_n(m)$ if $[v] \notin S$ and $[v]$ is a circle vertex which is not isolated in $\resultP$. 
\end{proposition}
We now consider the edge factors assigned to each vertex.
\begin{definition}[Assigned factors]\label{def:assigned_alpha_S_v}
For each vertex $\alpha \in \mathcal{P}$ and each $v \in V(\alpha)$, we define $\assigned_{\alpha,S}(v)$ so that when we split the factor for each edge $e$ based on its direction (if any) using the scheme in Definition \ref{def:edgefactorredistribution}, the product of the edge factors assigned to $v$ from the edges in $E(\alpha)$ is $n^{-\assigned_{\alpha,S}(v)}$. 

\end{definition}
\begin{lemma}\label{lem:assigned[v]}
For each vertex $[v] \in V(\resultP)$, we have $\assigned_{S}([v]) \geq \target_{S}([v]) + \slack_{S}([v])$, where $\assigned_{S}([v]):=\Sum_{\alpha\in\mathcal{P}}\Sum_{v\in[v]}\assigned_{\alpha,S}(v)$ and $\slack([v]):=\Sum_{\alpha\in\mathcal{P}}\Sum_{v\in[v]}\slack_{\alpha,S}(v)$.
\end{lemma}
\begin{proof}
We prove this lemma using the same ideas we used to prove Lemma \ref{lem:propoermiddleshapeslackanalysis}. The main modification is that for each shape $\alpha \in \mathcal{P}$, we treat the set of vertices $v \in V(\alpha)$ such that $\target_{\alpha,S}(v) = 0$ as in the ``separator'' $S_{\alpha}$ for $\alpha$, defined as below. 
\begin{definition}[$S_\alpha$]\label{def:Salpha}
For each $\alpha \in \mathcal{P}$, we define $S_{\alpha}$ to be the set of vertices $v \in V(\alpha)$ such that at least one of the following holds:
\begin{enumerate}
    \item $[v] \in S$ or $v$ appears non-trivally in both an earlier and a later shape in $\mathcal{P}$.
    \item $v$ is the left anchor vertex of some chosen path $P$ and $v$ is not left-connected.
    \item $v$ is the right anchor vertex of some chosen path $P$ and $v$ is left connected.
\end{enumerate}
\end{definition}
\begin{remark}
While every chosen path for $\alpha$ will have at least one vertex in $S_{\alpha}$, $S_{\alpha}$ may not be a vertex separator of $\alpha$.
\end{remark}
We now make the following observations, which can be shown similarly to the proof of Lemma \ref{lem:propoermiddleshapeslackanalysis}: 
For each $\alpha \in \mathcal{P}$ and each $v \in V(\alpha)$, $\assigned_{\alpha,S}(v) - \target_{\alpha,S}(v) \geq \defaultslack_{\alpha}(v)$. Moreover,
\begin{enumerate}
    \item If $v$ is a circle vertex and $v \notin S_{\alpha}$ then $\assigned_{\alpha,S}(v) - \target_{\alpha,S}(v) \geq \defaultslack_{\alpha}(v) + \frac{k\eps}{8}$.
    \item If $v$ is a square vertex and $v \in S_{\alpha}$ then $\assigned_{\alpha,S}(v) - \target_{\alpha,S}(v) \geq \defaultslack_{\alpha}(v) + \frac{\eps}{4}$.
\end{enumerate}
If $v$ is a circle vertex, the only ways that we can have $\assigned_{\alpha,S}(v) - \target_{\alpha,S}(v) < \defaultslack_{\alpha}(v) + \extraslack_{\alpha,S}(v)$ are as follows:
\begin{enumerate}
    \item $v$ is a left anchor vertex for some chosen path $P$ for $\alpha$, $v$ is not left-connected, and $v$ does not appear non-trivially in an earlier shape. In this case, letting $\alpha'$ be the last shape where $v$ appears non-trivially, $v \notin S_{\alpha'}$ so $\assigned_{\alpha',S}(v) - \target_{\alpha',S}(v) \geq \defaultslack_{\alpha'}(v) + \frac{k\eps}{8}$.
    \item $v$ is a right anchor vertex for some chosen path $P$ for $\alpha$, $v$ is left-connected, and $v$ does not appear non-trivially in a later shape. In this case, letting $\alpha'$ be the first shape where $v$ appears non-trivially, $v \notin S_{\alpha'}$ so $\assigned_{\alpha',S}(v) - \target_{\alpha',S}(v) \geq \defaultslack_{\alpha'}(v) + \frac{k\eps}{8}$.
\end{enumerate}
If $v$ is a square vertex, the only ways that we can have $\assigned_{\alpha,S}(v) - \target_{\alpha,S}(v) < \defaultslack_{\alpha}(v) + \extraslack_{\alpha,S}(v)$ are as follows:
\begin{enumerate}
    \item $v \in V_{\alpha} \setminus U_{\alpha}$, $[v]$ is not isolated in $\resultP$, $\alpha$ is the first shape in which $v$ appears non-trivially, and $\target_{\alpha,S}(v) = \frac{1}{2}$. In this case, letting $\alpha'$ be the last shape in which $v$ appears non-trivially, we must have that $v \in U_{\alpha'} \setminus V_{\alpha'}$, $v$ is the left anchor vertex of a chosen path $P$ for $\alpha'$, and $v$ is not left-connected as otherwise we would have had that $\target_{\alpha,S}(v) = \frac{1}{4}$. This implies that $v \in S_{\alpha'}$ so $\assigned_{\alpha',S}(v) - \target_{\alpha',S}(v) \geq \defaultslack_{\alpha'}(v) + \frac{\eps}{4}$.
    \item $v \in U_{\alpha} \setminus V_{\alpha}$, $[v]$ is not isolated in $\resultP$, $\alpha$ is the last shape in which $v$ appears non-trivially, and $\target_{\alpha,S}(v) = \frac{1}{2}$. In this case, letting $\alpha'$ be the first shape in which $v$ appears non-trivially, we must have that $v \in V_{\alpha'} \setminus U_{\alpha'}$, $v$ is the right anchor vertex of a chosen path $P$ for $\alpha'$, and $v$ is left-connected as otherwise we would have had that $\target_{\alpha,S}(v) = \frac{1}{4}$. This implies that $v \in S_{\alpha'}$ so $\assigned_{\alpha',S}(v) - \target_{\alpha',S}(v) \geq \defaultslack_{\alpha'}(v) + \frac{\eps}{4}$. 
\end{enumerate}
Summing up over shapes $\alpha\in\mathcal{P}$ and vertices in the class $[v]$, we get the conclusion of \Cref{lem:assigned[v]}.
\end{proof}
Putting everything together, we get the following which completes the proof of \Cref{lem:error_key}.
\[
\lambda_{\mathcal{P}}\Anorm(\resultP) = \prod_{[v] \in V(\resultP)}{n^{\target([v]) - \assigned([v])}} \leq n^{-\sum_{[v] \in V(\resultP)}{\slack([v])}} = n^{-\slack(\resultP)}. \qedhere
\]
\end{proof}

\subsubsection{Slack for configurations which are not well-behaved}\label{subsec:characterization}
We now lower bound the slack for configurations which are not well-behaved.
\begin{proposition}\label{prop:zeroslackvertices}
If $\alpha \in \mathcal{P}$, $v \in V(\alpha)$, and $\slack_{\alpha,S}(v) = 0$, then one of the following must hold:
\begin{enumerate}
    \item $v$ is a circle vertex and either $[v] \in S$ or $v$ appears non-trivially in both an earlier and a later shape.
    \item $v$ is a square vertex, $v \in U_{\alpha} \cap V_{\alpha}$, and $\deg_{\alpha}(v) = 0$.
    \item $v$ is a square vertex, $v \in U_{\alpha} \setminus V_{\alpha}$, $\deg_{\alpha}(v) = 1$, $[v] \notin S$, $v$ does not appear in any later shape, and either $[v] \in U_{\resultP}$ or $[v]$ is isolated in $\resultP$.
    \item $v$ is a square vertex, $v \in V_{\alpha} \setminus U_{\alpha}$, $\deg_{\alpha}(v) = 1$, $[v] \notin S$, $v$ does not appear in any earlier shape, and either $[v] \in V_{\resultP}$ or $[v]$ is isolated in $\resultP$.
\end{enumerate}
If none of these conditions hold, then $\slack_{\alpha,S}(v) \geq \frac{\eps}{8}$.
\end{proposition}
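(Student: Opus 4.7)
The plan is to prove both the characterization and the quantitative bound simultaneously by a case analysis on the role of $v$ in $\alpha$, since $\slack_{\alpha,S}(v) = \defaultslack_{\alpha}(v) + \extraslack_{\alpha,S}(v)$ is a sum of two non-negative terms and each depends on a disjoint set of features of $v$. First I would handle the circle case: here $\defaultslack_{\alpha}(v)=0$ automatically, so $\slack_{\alpha,S}(v)=0$ forces $\extraslack_{\alpha,S}(v)=0$, which (unpacking \Cref{def:intersectionvertexextraslack}, part~\ref{item:circle-extra-alpha-S}) is precisely condition~1 of the proposition; conversely, if condition~1 fails, $\extraslack_{\alpha,S}(v) = k\eps/16 \geq \eps/8$ since $k \geq 2$.

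Next I would treat square vertices by splitting on which of $U_{\alpha},V_{\alpha}$ contain $v$. If $v \in U_{\alpha}\cap V_{\alpha}$, $\defaultslack_{\alpha}(v) = \deg_{\alpha}(v)/4$ and $\extraslack_{\alpha,S}(v) = 0$, so $\slack = 0$ iff $\deg_\alpha(v)=0$ (condition~2), and otherwise $\slack \geq 1/4 \geq \eps/8$. The case $v \in V(\alpha)\setminus (U_\alpha\cup V_\alpha)$ is immediate: Condition~\ref{Condition:config-degree]} forces $\deg_\alpha(v)\geq 2$, giving $\defaultslack_\alpha(v)\geq \eps\deg_\alpha(v)/8 \geq \eps/4$, so no ``zero slack" case arises. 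The two sub-cases $v\in U_\alpha\setminus V_\alpha$ and $v\in V_\alpha\setminus U_\alpha$ are symmetric and yield conditions~3 and~4 respectively; I write out only the former.

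For $v\in U_\alpha\setminus V_\alpha$, $\defaultslack_\alpha(v)=(\deg_\alpha(v)-1)/4$ is zero exactly when $\deg_\alpha(v)=1$, and otherwise exceeds $\eps/8$. Assuming $\deg_\alpha(v)=1$, I need $\extraslack_{\alpha,S}(v)=0$, i.e., that none of the five sub-conditions in \Cref{def:intersectionvertexextraslack} part~\ref{item:square-extra-alpha-S} holds; sub-conditions (c) and (e) are vacuous for this side. Failing (a) and (b) gives $[v]\notin S$, $v$ does not appear in any later shape, and $v$ does not appear non-trivially in both an earlier and a later shape; failing (d) gives that either $v$ does not appear non-trivially in any earlier shape, or $[v]$ is isolated in $\resultP$. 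The key observation, which I view as the main obstacle to making the statement clean, is: if $v\in U_\alpha\setminus V_\alpha$ and $v$ never appears non-trivially in any earlier shape, then following the chain of identifications $U_{\gamma_i}=V_{\gamma_{i+1}}$ (condition~4 of \Cref{def:configuration}), $v$ must land in $U_{\gamma_j}$ itself, so $[v]\in [U_{\gamma_j}] = U_{\resultP}$. This is what converts the negative condition ``no earlier non-trivial appearance" into the positive alternative ``$[v] \in U_{\resultP}$" of the proposition.

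Finally, for the ``otherwise $\geq \eps/8$" clause, I verify the contrapositive in each case: assuming the relevant square-vertex case of the proposition fails, at least one of the five sub-conditions of extra slack must hold, yielding $\extraslack_{\alpha,S}(v) = \eps/8$. The only subtle sub-case is when $v\in U_\alpha\setminus V_\alpha$, $\deg_\alpha(v)=1$, $[v]\notin S$, $v$ does not appear in a later shape, but $[v]\notin U_{\resultP}$ and $[v]$ is not isolated in $\resultP$; here the chain argument above forces $v$ to appear non-trivially in some earlier shape, and combined with $[v]$ not being isolated this triggers sub-condition (d), again giving $\extraslack_{\alpha,S}(v) = \eps/8$. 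This chain argument is the only non-formal ingredient; everything else is bookkeeping on \Cref{def:slack-one-shape} and \Cref{def:intersectionvertexextraslack}.
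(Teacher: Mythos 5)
Your proposal is correct and follows essentially the same route as the paper's proof: the same case split on the type of $v$ and its membership in $U_\alpha$, $V_\alpha$, unpacking $\defaultslack$ and $\extraslack$ term by term. The one thing you spell out more fully than the paper is the "chain of identifications" step converting the negative condition ``$v$ does not appear non-trivially in any earlier shape'' into the positive alternative ``$[v]\in U_{\resultP}$''; the paper states this as a standalone Fact just before \Cref{def:intersectionvertexextraslack} (``If $v \in U_\alpha \setminus V_\alpha$ then either $v$ appears non-trivially in an earlier shape or there is a vertex in $[v]$ lying in $U_{\gamma_j}$'') and relies on it implicitly in the closing sentence of the proof, whereas you re-derive it — correctly — from item~4 of \Cref{def:configuration}. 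So the two arguments are the same in substance, and your version is, if anything, a bit more self-contained.
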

\begin{proof}
We make the following observations. Given a shape $\alpha \in \mathcal{P}$ and a vertex $v \in V(\alpha)$,
\begin{enumerate}
    \item If $v$ is a circle vertex then by statement 1 of Definition \ref{def:intersectionvertexextraslack}, we only have that $\slack_{\alpha,S}(v) = 0$ if $[v] \in S$ or $v$ appears non-trivially in both an earlier and a later shape. Otherwise, $\extraslack_{\alpha,S}(v) = \frac{k\eps}{16} \geq \frac{\eps}{8}$.
    \item If $v$ is a square vertex and $v \notin U_{\alpha} \cup V_{\alpha}$ then $\defaultslack_{\alpha}(v) = \frac{(\deg_{\alpha}(v) - 2)}{4} + \frac{{\eps}\deg_{\alpha}(v)}{8} \geq \frac{\eps}{4}$.
    \item If $v$ is a square vertex and $v \in U_{\alpha} \cap V_{\alpha}$ then $\defaultslack_{\alpha}(v) = \frac{\deg_{\alpha}(v)}{4}$ so we can only have $\slack_{\alpha,S}(v) = 0$ if $\deg_{\alpha}(v) = 0$. If $\deg_{\alpha}(v) > 0$ then $\slack_{\alpha,S}(v) \geq \frac{1}{4} > \frac{\eps}{8}$.
    \item If $v \in U_{\alpha} \setminus V_{\alpha}$ or $v \in V_{\alpha} \setminus U_{\alpha}$ then $\defaultslack_{\alpha}(v) := \frac{\deg_{\alpha}(v) - 1}{4}$ so we can only have $\slack_{\alpha,S}(v) = 0$ if $\deg_{\alpha}(v) = 1$. If $\deg_{\alpha}(v) > 1$ then $\slack_{\alpha,S}(v) \geq \frac{\eps}{8}$.
    \item If $v \in U_{\alpha} \setminus V_{\alpha}$ then by statements 2a, 2b, and 2d of Definition \ref{def:intersectionvertexextraslack}, if $[v] \in S$, $v$ appears in a later shape, or $v$ appears non-trivially in an earlier shape and $[v]$ is not isolated in $\resultP$ then $\slack_{\alpha,S}(v) \geq \frac{\eps}{8}$.
    \item If $v \in V_{\alpha} \setminus U_{\alpha}$ then by statements 2a, 2c, and 2e of Definition \ref{def:intersectionvertexextraslack}, if $[v] \in S$, $v$ appears in an earlier shape, or $v$ appears non-trivially in a later shape and $[v]$ is not isolated in $\resultP$ then $\slack_{\alpha,S}(v) \geq \frac{\eps}{8}$.
\end{enumerate}
Putting these observations together, the only cases where $\slack_{\alpha,S}(v) = 0$ are the cases stated in Proposition \ref{prop:zeroslackvertices}.
\end{proof}
\begin{corollary}\label{cor:zeroslack_means_wellbehaved}
If the intersection configuration $\mathcal{P}$ is not well-behaved, then $\slack(\resultP) \geq \frac{\eps}{8}$. 
\end{corollary}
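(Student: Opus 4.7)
The plan is to prove the contrapositive: if $\slack(\resultP) < \eps/8$, then $\mathcal{P}$ is well-behaved. Fix a minimum weight separator $S$ of $\resultP$ achieving the maximum in the definition of $\slack(\resultP)$. Since $\slack(\resultP)$ is a sum of non-negative terms $\slack_{\alpha,S}(v) = \defaultslack_\alpha(v) + \extraslack_{\alpha,S}(v)$, and \Cref{prop:zeroslackvertices} shows that each such term is either $0$ or at least $\eps/8$, the hypothesis forces every pair $(\alpha,v)$ with $\alpha \in \mathcal{P}$ and $v \in V(\alpha)$ to fall into one of the four enumerated zero-slack cases of \Cref{prop:zeroslackvertices}. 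I will then verify each of the three conditions in \Cref{def:wellbehavedconfig} defining well-behavedness.

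For condition~(1) (each shape is a disjoint union of simple spiders), the enumerated cases require that every square in $U_\alpha \cap V_\alpha$ has $\deg_\alpha(v) = 0$, every square in the symmetric difference $(U_\alpha\setminus V_\alpha) \cup (V_\alpha\setminus U_\alpha)$ has $\deg_\alpha(v) = 1$, and no square vertex lies outside $U_\alpha \cup V_\alpha$ (since middle squares carry $\defaultslack_\alpha(v) \geq \eps/4$). Combined with the standing hypotheses that every circle in any shape has degree at least $k \geq 2$ and that edges connect only a circle and a square, these constraints force each connected component of $\alpha$ to be either an isolated index square from $U_\alpha \cap V_\alpha$ or a one-circle subgraph whose neighbors are symmetric-difference squares joined by label-$1$ edges. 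This is exactly the definition of a simple spider, so every $\alpha \in \mathcal{P}$ is an SSD.

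For condition~(2) (no non-trivial square intersections), suppose $v_1 \in V(\alpha_1)$ and $v_2 \in V(\alpha_2)$ are two non-trivial appearances of the same equivalence class with $\alpha_2$ to the left of $\alpha_1$. By Step~1 both must lie in the symmetric difference indices of their shapes, and cases (3)--(4) of \Cref{prop:zeroslackvertices} then force $v_1 \in U_{\alpha_1}\setminus V_{\alpha_1}$ (because $v_1$ cannot appear in any later shape) and $v_2 \in V_{\alpha_2}\setminus U_{\alpha_2}$ (because $v_2$ cannot appear in any earlier shape). Moreover $[v_1] = [v_2]$ must either lie in $U_{\resultP}\cap V_{\resultP}$ or be isolated in $\resultP$. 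In the first case the intersection-separating condition of \Cref{def:configuration} is directly violated; in the second case, the single label-$1$ edge incident to $v_1$ must be cancelled by being part of a multi-edge with an even number of label-$1$ edges linearizing to $l_e = 0$, which forces the adjacent circle $w_1$ of $v_1$ in $\alpha_1$ to lie in the same equivalence class as the circle $w_2$ of $v_2$ in $\alpha_2$. Applying case~(1) of \Cref{prop:zeroslackvertices} to this newly intersected circle and iterating along the simple-spider components produces a chain of identifications that ultimately violates the intersection-separating condition of \Cref{def:configuration} or the degree lower bound on circles. Condition~(3) is then a short bookkeeping exercise: by Step~1 the only squares outside $U_\alpha \cup V_\alpha$ in any shape are ruled out, and condition~(3) merely records the only surviving mechanism by which a square can leave $U_{\tau_\mathcal{P}}\cup V_{\tau_\mathcal{P}}$ with zero slack, namely through a vanishing double edge.

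The main technical obstacle is the isolated-class subcase of Step~2. Tracking the chain of forced equivalences (each cancelled label-$1$ double edge introduces new identifications of both squares and circles) and then exploiting the minimum square separator structure of each $\gamma_i$ and $\gamma'^{\top}_i$ to derive the final contradiction is the delicate step; the remainder of the argument is a systematic, if somewhat lengthy, case check.
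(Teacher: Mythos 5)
Your contrapositive framing is valid, and your argument for Condition (1) of well-behavedness is correct (the default slack bounds in \Cref{prop:zeroslackvertices} force every square vertex to be an index vertex with the right degree, which for SSD shapes is exactly what is needed). The gap is in Condition (2). After deriving $v_1 \in U_{\alpha_1}\setminus V_{\alpha_1}$, $v_2 \in V_{\alpha_2}\setminus U_{\alpha_2}$, you split on whether $[v_1]$ lies in $U_{\resultP}\cap V_{\resultP}$ or is isolated. Neither branch is handled. The first claim, that $[v_1]\in U_{\resultP}\cap V_{\resultP}$ ``directly violates'' the intersection-separating condition, is false: take $\gamma_1 = S(2,1;0)$, a trivial $\tau$, ${\gamma'}^{\top}_1 = S(1,2;0)$, and $\equiv$-identify a left leg $a\in U_{\gamma_1}\setminus V_{\gamma_1}$ of $\gamma_1$ with a right leg of ${\gamma'}^{\top}_1$; then $a\in U_{\gamma_1}$ itself, so $U_{\gamma_1}$ trivially is a minimum square separator between $U_{\gamma_1}$ and $\{a\}\cup V_{\gamma_1}$, yet the intersection is non-trivial and $[a]\in U_{\resultP}\cap V_{\resultP}$. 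The second branch — the ``chain of forced equivalences'' — is explicitly left open, and it is not clear the iteration ever terminates in the contradiction you gesture at.

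The underlying problem is that you are working at the wrong endpoints of the chains. Since $v_1\neq v_2$ in $V(\mathcal{P})$ and $\equiv$ restricted to any single shape is trivial (\Cref{def:configuration}), $v_1$ and $v_2$ never co-occur in a shape, so the interval of shapes in which $v_2$ appears via the chain identifications $V_{\gamma_i}=U_{\gamma_{i-1}}$, etc., is disjoint from and lies entirely to the left of the interval for $v_1$. Let $\alpha_2^+$ be the rightmost shape of $v_2$'s interval. Because $\alpha_2^+$ is strictly to the left of $\alpha_1$, it is not ${\gamma'}^{\top}_j$, hence $v_2\notin V_{\alpha_2^+}$ and thus $v_2\in U_{\alpha_2^+}\setminus V_{\alpha_2^+}$; moreover $v_2$ appears in the later shape $\alpha_1$ through $v_1$. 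Item 2(b) of \Cref{def:intersectionvertexextraslack} then gives $\extraslack_{\alpha_2^+,S}(v_2)=\eps/8>0$ immediately, contradicting your standing assumption before any discussion of isolation. This is precisely the paper's one-line observation: a non-trivial square intersection always yields a vertex in some $U_\alpha\setminus V_\alpha$ appearing later. You instead looked at $v_2$ at the leftmost shape of its interval (where 2(c) cannot fire since nothing is earlier) and at $v_1$ at the rightmost shape of its interval (where 2(b) cannot fire since nothing is later), landing in a case that in fact never arises. Your Condition (3) remark is only a gesture, but once Conditions (1)--(2) are settled it is routine to complete along the lines of the paper's argument.
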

\begin{proof}
Recall that $\mathcal{P}$ is well-behaved if the following conditions are satisfied:
\begin{enumerate}
    \item\label{Condition:well-behaved-1} $\gamma_j,\ldots,\gamma_1,\tau,{\gamma'}^{\top}_1,\ldots,{\gamma'}^{\top}_j$ are all disjoint unions of simple spiders.
    \item\label{Condition:well-behaved-2} $\mathcal{P}$ has no non-trivial intersections between square vertices.
    \item\label{Condition:well-behaved-3} Whenever there is a square vertex $v$ which is not in $U_{\resultP} \cup V_{\resultP}$, there is an intersection between the two neighbors of $v$ which results in the two edges incident to $v$ combining into a double edge which vanishes.
\end{enumerate}
We show that if any of these conditions are not satisfied, then $\slack(\resultP) \geq \frac{\eps}{8}$:
\begin{enumerate}
    \item If there is a shape $\alpha \in \mathcal{P}$ which is not a disjoint union of simple spiders then there is a square vertex $v \in V(\alpha)$ such that $\defaultslack_{\alpha}(v) \geq \frac{\eps}{8}$.
    \item If all shapes $\alpha \in \mathcal{P}$ are disjoint unions of simple spiders and there is a non-trivial intersection between square vertices then there must be a shape $\alpha \in \mathcal{P}$ and a square vertex $v \in V(\alpha)$ such that $v \in U_{\alpha} \setminus V_{\alpha}$ and $v$ appears in a later shape. By statement 2b of Definition \ref{def:intersectionvertexextraslack}, $\extraslack_{\alpha,S}(v) = \frac{\eps}{8}$.
    \item If the first two conditions are satisfied then each square vertex $v \in V(\mathcal{P})$ such that $v \notin U_{\resultP} \cup V_{\resultP}$, letting $\alpha$ and $\alpha'$ be the first and last shapes that $v$ appears in, $v$ will be adjacent to a circle vertex $u \in V(\alpha)$ and a circle vertex $u' \in V(\alpha')$. For all shapes $\alpha'' \in \mathcal{P}$ between $\alpha$ and $\alpha'$, $v \in U_{\alpha''} \cap V_{\alpha''}$ and $v$ is not incident to any edges of $\alpha''$. If there is no intersection between $u$ and $u'$ (i.e., $u$ and $u'$ are not equivalent) then $[v]$ is not isolated in $\resultP$ so by statements 2d and 2e of Definition \ref{def:intersectionvertexextraslack}, $\extraslack_{\alpha',S}(v) = \extraslack_{\alpha,S}(v) = \frac{\eps}{8}$. \qedhere
\end{enumerate}
\end{proof}

\begin{lemma}\label{lem:non-well-behaved}
If $\mathcal{P}$ is an intersection configuration which is not well-behaved such that $|U_{\resultP}| \leq t$ and $|V_{\resultP}| \leq t$ where $t\geq 1$, then $\slack(\resultP) \geq \max{\{\frac{\eps}{8},\frac{\eps}{12}(\totalsize(\mathcal{P}) - 12{t}^2)\}}$.
\end{lemma}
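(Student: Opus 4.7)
The first bound $\slack(\resultP) \geq \eps/8$ is immediate from \Cref{cor:zeroslack_means_wellbehaved}, so the substantive task is to establish the linear bound $\slack(\resultP) \geq \frac{\eps}{12}(\totalsize(\mathcal{P}) - 12t^2)$. My plan is a charging argument that assigns $\eps/12$ of slack to every unit of $\totalsize(\mathcal{P})$, with a fixed ``free budget'' of $12t^2$ accounting for boundary-related contributions that do not produce slack. If $\totalsize(\mathcal{P}) \leq 12t^2$ the inequality is trivial, so I will focus on the regime $\totalsize(\mathcal{P}) > 12t^2$.

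First, I will establish the ambient boundary bounds. Since each $\gamma_i$ is a left shape, $V_{\gamma_i}$ is the unique minimum square separator of $\gamma_i$, which forces $|V_{\gamma_i}| \leq |U_{\gamma_i}|$. Using the chain of identifications $U_{\gamma_{i-1}} = V_{\gamma_i}$ together with $|U_{\gamma_j}| = |U_{\resultP}| \leq t$, these inequalities cascade to give $|U_{\gamma_i}|, |V_{\gamma_i}| \leq t$ for every $i$, with symmetric bounds on the right chain and $\tau$. This is the only geometric ingredient needed beyond the machinery already developed.

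Next comes the charging scheme, built on the observations already extracted from \Cref{def:slack-one-shape} and \Cref{def:intersectionvertexextraslack}. For each edge $e \in E(\alpha)$ of label $l$, the square endpoint contributes at least $\eps l/8$ of default slack to $\defaultslack_{\alpha}$ unless $v \in U_\alpha \triangle V_\alpha$ with $\deg_\alpha(v)=1$; each interior circle vertex contributes $\geq k\eps/16 \geq \eps/8$ of extra slack unless it is in $S$ or appears non-trivially both earlier and later. I will charge $\eps/12$ per unit of $w(E(\mathcal{P}))$ (using the default slack with surplus $\eps l/24$ per edge) and $\eps/12$ per vertex of $V(\mathcal{P})$ (using extra slack of circles, default slack of interior squares, and the surplus from incident edges for other squares). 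The unpaid contributions to $\totalsize(\mathcal{P})$ are then: (i) ``bad'' boundary legs, at most $|U_\alpha \triangle V_\alpha| \leq 2t$ per shape; (ii) circles in $S$, at most $t$ total since $S$ has weight $\leq t$; (iii) $U_\alpha \cap V_\alpha$ squares with $\deg = 0$, at most $t$ per shape; (iv) shared circles appearing non-trivially on both sides.

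Finally, I bound the number of shapes so that the per-shape exceptions total $\leq 12t^2$. Each non-trivial left shape $\gamma_i$ falls into one of two cases: either $|V_{\gamma_i}| < |U_{\gamma_i}|$, contributing a strict decrease in the telescoping sequence $|U_{\gamma_j}| \geq |V_{\gamma_j}| \geq \ldots \geq |U_\tau|$ (giving at most $t$ such shapes on the left and $t$ on the right), or $V_{\gamma_i} = U_{\gamma_i}$ as vertex sets and the shape carries at least one middle vertex or edge, contributing $\geq 1$ to $\totalsize(\mathcal{P})$. Since item~3 of \Cref{def:configuration} forces one of $\{\gamma_i, \gamma'^\top_i\}$ to be non-trivial per index $i$, the count of shapes in $\mathcal{P}$ is controlled by the number of non-trivial shapes, hence by $O(t) + O(\totalsize(\mathcal{P}))$. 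The key obstacle will be showing that the Case~2 shapes, which simultaneously swell the exception budget and the total size, can be accounted for self-consistently: in particular, the per-shape exception count of order $t$ combined with at most $O(t)$ shapes in Case~1 and the surplus of $\eps/24$ per unit edge weight from Paragraph~3 must suffice to absorb the Case~2 overhead, yielding the claimed $12t^2$ budget. The explicit constants $1/12$ and $12$ are chosen so that this absorption goes through cleanly.
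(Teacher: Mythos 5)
Your strategy — starting from the $\eps/8$ bound via \Cref{cor:zeroslack_means_wellbehaved} and then charging slack against quantities proportional to $\totalsize(\mathcal{P})$ with an $O(t^2)$ boundary budget — is the right one and is what the paper's proof does. But your execution does not close: the final paragraph explicitly leaves the accounting for the Case~2 shapes as an unresolved ``key obstacle,'' and the constants $12$ and $1/12$ are postulated rather than derived, so what you have is a plan, not a proof. The accounting is also shaky: exception~(iii) charges ``$t$ per shape'' for degree-$0$ vertices in $U_\alpha\cap V_\alpha$, but such vertices are shared between consecutive shapes through boundary identification and contribute only once to $|V(\mathcal{P})|$, so this over-counts; and a Case~2 shape can carry $\Theta(t)$ such vertices while contributing as little as $O(1)$ to $\totalsize(\mathcal{P})$, so the proposed absorption of the Case~2 overhead by the $\eps/24$ edge surplus would need a genuine argument (if it works at all). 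Exceptions~(ii) and~(iv) are artifacts of charging vertices separately and relying on the extra slack of circle vertices; verifying the surplus transfer from incident edges for circles reappearing across many shapes is gestured at but never carried out.

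The simplification you are missing, which removes per-vertex charging entirely, is to bound $\totalsize(\mathcal{P})$ directly against the total degree sum. Every vertex outside $U_\alpha\cap V_\alpha$ has $\deg_\alpha(v)\geq 1$ (circles have degree $\geq k\geq 2$; interior squares have degree $\geq 2$; squares in $U_\alpha\triangle V_\alpha$ have total degree $\geq 2$, hence degree $\geq 1$), so $|V(\alpha)|+w(E(\alpha))\leq|U_\alpha\cap V_\alpha|+\tfrac{3}{2}\sum_{v\in V(\alpha)}\deg_\alpha(v)$ per shape, which gives $\sum_\alpha\sum_v\deg_\alpha(v)\geq\tfrac{2}{3}\totalsize(\mathcal{P})-O(t^2)$ once $\sum_\alpha|U_\alpha\cap V_\alpha|$ is bounded by $O(t^2)$ (at most $O(t)$ shapes, each contributing $\leq t$). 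Combining this with the pointwise default-slack bound $\defaultslack_\alpha(v)\geq\tfrac{\eps}{8}(\deg_\alpha(v)-1_{v\in U_\alpha\triangle V_\alpha})$ and with $\sum_\alpha|U_\alpha\triangle V_\alpha|\leq 2t(2t+1)$ (only the $\leq 2t+1$ non-trivial shapes with $U_\alpha\neq V_\alpha$ contribute, each at most $2t$) yields the claimed bound on $\slack(\resultP)\geq\defaultslack(\resultP)$ in a few lines, with no extra slack, no circle tracking, and no shape dichotomy.
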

\begin{proof}
Since $\mathcal{P}$ is not well-behaved, $\slack(\resultP) \geq \frac{\eps}{8}$ by \Cref{cor:zeroslack_means_wellbehaved}. To show that $\slack(\resultP) \geq \frac{\eps}{12}(\totalsize(\mathcal{P}) - 12{t}^2)$, we observe: 
\begin{enumerate}
    \item $\Sum_{\alpha \in \mathcal{P}}{\Sum_{v \in V(\alpha)}{\defaultslack_{\alpha}(v)}} \geq \Sum_{\alpha \in \mathcal{P}}{\Sum_{v \in V(\alpha)}{\frac{\eps}{8}(\deg_{\alpha}(v) - 1_{\alpha \in (U_{\alpha} \setminus V_{\alpha}) \cup (V_{\alpha} \setminus U_{\alpha})})}}$.
    \item $\Sum_{\alpha \in \mathcal{P}}{\Sum_{v \in V(\alpha)}{1_{\alpha \in (U_{\alpha} \setminus V_{\alpha}) \cup (V_{\alpha} \setminus U_{\alpha})}}} \leq 2t(2t+1)$. 
    \item $\totalsize(\mathcal{P}) \leq \Sum_{\alpha \in \mathcal{P}} (|V(\alpha)|+w(E(\alpha)))\leq \Sum_{\alpha \in \mathcal{P}} \left(|U_\alpha\cap V_\alpha|+\frac{3}{2}{\Sum_{v \in V(\alpha)}{\deg_{\alpha}(v)}}\right)$, where in the second inequality we used $\Sum_{v\in V(\alpha)} \deg_\alpha(v) = 2w(E(\alpha))$ and $w(E(\alpha))\geq \frac{1}{2}|V(\alpha)\backslash(U_\alpha \cup V_\alpha)|$. Thus, $\Sum_{\alpha \in \mathcal{P}}{\Sum_{v \in V(\alpha)}{\deg_{\alpha}(v)}} \geq \frac{2}{3}\totalsize(\mathcal{P}) - \frac{2}{3}t(2t+1)\geq \frac{2}{3}\totalsize(\mathcal{P}) - 2t^2$.
\end{enumerate}
Putting these observations together, 
\[
\slack(\resultP) \geq \Sum_{\alpha \in \mathcal{P}}{\Sum_{v \in V(\alpha)}{\defaultslack_{\alpha}(v)}} \geq 
\frac{\eps}{8}(\frac{2}{3}\totalsize(\mathcal{P}) - 8t^2) = \frac{\eps}{12}(\totalsize(\mathcal{P}) - 12t^2). \qedhere
\]
\end{proof}

\subsection{Main Error Analysis Results for Intersection and Product Configurations}
\label{sec:boundingterms}
Combining \Cref{lem:error_key} and \Cref{lem:non-well-behaved}, we get the following main result of \Cref{sec:error_analysis}:

\begin{theorem}[Main result for error analysis for intersection configurations]\label{thm:erroranalysis}
For any intersection configuration $\mathcal{P}$, we have the following dichotomy.
\begin{enumerate}
    \item \label{item:dichotomy_SSD} If $\mathcal{P}$ is a well-behaved configuration, $\lambda_{\mathcal{P}}\Anorm(\resultP)=1$;  
    \item \label{item:dichotomy_nonSSD} If $\mathcal{P}$ is not a well-behaved configuration, 
    \[
    \lambda_{\mathcal{P}}\Anorm(\resultP)\leq n^{-\max\big\{\frac{\eps}{8},\frac{\eps}{12}\left(\totalsize(\mathcal{P}) - 12\dside^2\right)\big\}}
    \] 
    where $\dside:=\max\{\left\vert U_{\resultP} \right\vert, \left\vert V_{\resultP}\right\vert\}$.
\end{enumerate}
\end{theorem}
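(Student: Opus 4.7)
The theorem is a dichotomy, and the two halves have very different flavors. Part~2 (non-well-behaved) will follow immediately by combining Lemma~\ref{lem:error_key}, which gives $\lambda_{\mathcal{P}}\Anorm(\resultP)\leq n^{-\slack(\resultP)}$, with the slack lower bound of Lemma~\ref{lem:non-well-behaved} applied with $t=\dside$. The substance of the theorem is Part~1, which I plan to prove in two steps: a structural step identifying the precise form of $\resultP$, followed by a direct computation of $\lambda_{\mathcal{P}}\Anorm(\resultP)$.

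For the structural step (promised in Remark~\ref{rmk:wellbehaved_and_SSD}), I will show by induction on the length $j$ of $\mathcal{P}$ that $\resultP$ decomposes as a simple spider disjoint union $\beta$ together with a set $I$ of additional isolated square vertices. Condition~1 of Definition~\ref{def:wellbehavedconfig} gives that each shape in $\mathcal{P}$ is already an SSD. Condition~2 (no non-trivial square intersections) restricts all genuine mergers to circle vertices. Condition~3 (vanishing middle double edges) ensures that every square forced outside $U_{\resultP}\cup V_{\resultP}$ has its two incident edges reduced to a label-$0$ double edge, thereby becoming isolated in $\resultP$. The inductive step checks that grafting the next pair $(\gamma_{i+1},{\gamma'}^{\top}_{i+1})$ preserves this structure, with the intersection-separation condition in Definition~\ref{def:configuration} ruling out pathological mergers.

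For the computational step, I will exploit the decomposition $\resultP=\beta\sqcup I$ via the bookkeeping identities
\[
w(E(\mathcal{P}))=w(E(\beta))+2|I|,\quad w(V(\resultP))=w(V(\beta))+|I|,\quad w(I_s(\resultP))=|I|,\quad w(S_{\resultP})=w(S_\beta),
\]
where the first identity reflects that each isolated square in $I$ arose from two label-$1$ edges collapsing into a vanishing double edge, and the last reflects that isolated vertices are irrelevant to any separator. Plugging these into the definitions of $\lambda_{\mathcal{P}}$ and $\Anorm(\resultP)$, the $|I|$ contributions telescope to yield $\lambda_{\mathcal{P}}\Anorm(\resultP)=\lambda_\beta\Anorm(\beta)$. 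For each simple spider $S(a,b;u)$ appearing in $\beta$, the circle together with the $u$ overlap squares forms a weight-$(u+\log_n m)$ separator; in the parameter regime relevant to well-behaved products this is a minimum weight separator, so $\Anorm=n^{(a+b)/2}=\lambda^{-1}$, and the product over components gives $\lambda_\beta\Anorm(\beta)=1$. The hard part will be the structural step: in long configurations that interleave many circle intersections with changes in the index matching, the induction requires careful case analysis to ensure that the intersection-separation condition together with conditions~2 and~3 of well-behavedness really do rule out every unwanted square merger, and that every non-index square does become isolated via a vanishing double edge at some stage of the quotient.
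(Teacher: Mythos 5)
Your Part~2 is exactly the paper's argument (cite \Cref{lem:error_key} and \Cref{lem:non-well-behaved} with $t=\dside$), and your structural/computational decomposition of Part~1 matches the paper's overall strategy, including the telescoping bookkeeping across the isolated-vertex set $I$. But there is a real gap in the computational step, and it is precisely the point where you wave your hands with ``in the parameter regime relevant to well-behaved products this is a minimum weight separator.''

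Knowing that $\resultP=\beta\sqcup I$ with $\beta$ an SSD is \emph{not} enough to identify the minimum weight separator of each spider component. For a spider $S(a,b;u)$ the candidate separators have weights $a+u$, $b+u$, and $u+\log_n m$; the circle-plus-intersection set is minimum only when $\log_n m < \min(a,b)$, i.e., when the spider is \emph{good} ($a,b\geq\lceil k/2\rceil$). The SSD shapes appearing inside $L$ and $L^\top$ are generally not good (a left spider $S(a,b;u)$ in $L$ only needs $b\leq a$ and $a+b\geq k$, so $b$ can be as small as $1$), and your structural induction by itself does not tell you that the spiders surviving into $\resultP$ are good. If one of them were not, $\lambda_\beta\Anorm(\beta)$ would exceed $1$ — e.g.\ for $a<\log_n m\leq b$ the component contributes $m^{1/2}n^{-a/2}>1$ — and Part~1 would fail.

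The missing ingredient is a ``first appearance'' argument: for each circle class $[v]$ in $\resultP$, the first shape in the sequence $\gamma_j,\dots,\tau,\dots,\gamma'^\top_j$ (scanned from the middle outward) containing a representative of $[v]$ cannot be a strict right shape (symmetrically, not a strict left shape when scanning for the right degree). This uses the intersection-separation condition in \Cref{def:configuration} together with conditions~2 and~3 of \Cref{def:wellbehavedconfig}: a circle of $[v]$ lying in $\gamma'^\top_{j'}$ with $j'>0$ has a left leg to a square in $U_{\gamma'^\top_{j'}}$, and for that square to not survive as an index vertex, condition~3 forces a vanishing double edge, which in turn forces $[v]$ to contain a circle of $\gamma'^\top_{j'-1}$ or earlier — contradiction. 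Once goodness of every circle in $\resultP$ is established, your identification of the minimum weight separator, and hence $\lambda_\beta\Anorm(\beta)=1$, goes through. You should add this lemma explicitly rather than folding it into ``the structural step.''
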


\begin{proof}
\Cref{item:dichotomy_nonSSD} follows from \Cref{lem:error_key} and \Cref{lem:non-well-behaved}. 
We show \Cref{item:dichotomy_SSD} below. 

Assume $\mathcal{P}$ is a well-behaved configuration. 
For convenience, we denote it by $\mathcal{P}=(\gamma_{-j},\ldots,\gamma_j)$ where $j\geq 1$. 
We first prove two properties about $\resultP$: 
\begin{enumerate}[label={(\arabic*)}]
    \item \label{item:well-behaved-result-1}$\resultP$ is an SSD shape plus isolated square vertices.
    \item \label{item:well-behaved-result-2} The (unique) minimum weight separator of $\resultP$ consists of $U_{\resultP}\cap V_{\resultP}$ plus all circle vertices. 
\end{enumerate}
Property \ref{item:well-behaved-result-1} follows directly from conditions \ref{Condition:well-behaved-1}, \ref{Condition:well-behaved-2} and \ref{Condition:well-behaved-3} of well-behaved configurations. For property \ref{item:well-behaved-result-2}, having shown that $\resultP$ is an SSD shape plus isolated squares, it suffices to show that every circle vertex $[v]\in V(\resultP)$ has left- and right- degrees both at least $\ceil{\frac{k}{2}}$ in $\resultP$, i.e., $[v]$ is good. For this, consider the first shape $\gamma_{j'}\in\mathcal{P}$ in which some circle vertex $v_1$ of class $[v]$ appears. We claim that $j'\leq 0$. For otherwise, $\gamma_{j'}$ is a (proper) right SSD shape so the right degree of $v_1$ is positive, then by the intersection-separating condition \ref{Condition:config-intersection}(c) and condition \ref{Condition:well-behaved-2} of well-behaved configurations, the circle vertex $v_1$ must be equivalent to a $v'_1$ in one of $\gamma_{-(j'-1)},\ldots,\gamma_{j'-1}$, which is a contradiction to the assumption on $j'$. Therefore, the left degree of $[v]$ in $\resultP$ is at least $\ceil{\frac{k}{2}}$. The right degree of $[v]$ in $\resultP$ can be lower bounded by $\ceil{\frac{k}{2}}$ similarly by considering the last shape in $\mathcal{P}$ in which some circle vertex of class $[v]$ appears. Property \ref{item:well-behaved-result-2} follows.

From properties \ref{item:well-behaved-result-1} and \ref{item:well-behaved-result-2}, we have $\log_n(\lambda_{\mathcal{P}}\cdot A_{\resultP}) = -\frac{w(E(\mathcal{P}))}{2} + |\Iso(\resultP)|+\frac{|U_{\gamma_j}\backslash V_{\gamma'^\top_j}|+|V_{\gamma'^\top_j}\backslash U_{\gamma_j}|}{2}$. By conditions \ref{Condition:well-behaved-1} and \ref{Condition:well-behaved-2} of well-behaved configurations, all vertices in $(U_{\gamma_j}\backslash V_{\gamma'^\top_j})\cup (V_{\gamma'^\top_j}\backslash U_{\gamma_j})$ have degree 1 in $\mathcal{P}$, and all vertices in $U_{\gamma_j}\cap V_{\gamma'^\top_j}$ has degree 0 in $\mathcal{P}$. By condition \ref{Condition:well-behaved-3} of well-behaved configurations, all other square vertices in $\mathcal{P}$, i.e., those in $V_\square(\mathcal{P})\backslash(U_{\gamma_j}\cup V_{\gamma'^\top_j})$, are singleton classes, have degree $2$ in $\mathcal{P}$, and 1-1 correspond to the isolated vertices $\Iso(\resultP)$. Thus, $w(E(\mathcal{P}))=\sum_{v\in V_{\square(\mathcal{P})}} \deg_{\mathcal{P}}(v) = 2|\Iso(\resultP)| + |U_{\gamma_j}\backslash V_{\gamma'^\top_j}|+|V_{\gamma'^\top_j}\backslash U_{\gamma_j}|$. 
This means $\log_n(\lambda_{\mathcal{P}}\cdot A_{\resultP})=0$.
\end{proof}

In the applications of \Cref{thm:erroranalysis} in \Cref{sec:psdness-qSS} and \Cref{sec:psdness-qSSD}, the length $j$ of the configuration is at most $2\dsos$, and the side size $\dside$ is at most $\dsos$.

\subsubsection{Analyzing Product Configurations}\label{sec:productconfiganalysis}
We now describe how to modify our techniques to analyze product configurations and give analogous results for product configurations.

We can think of product configurations $\mathcal{P} \in \mathcal{P}_{\alpha_1,\alpha_2}$ as intersection configurations of length $1$ where $\tau$ is the identity, $\gamma_1 = \alpha_1$, and ${\gamma'_1}^T = \alpha_2$. Keeping the same definitions for default slack and extra slack, we have 
\begin{lemma}[Key lemma for error analysis for product configurations]\label{lem:producterror_key}
For all SSD $\alpha_1$ and $\alpha_2$ which are good, for all product configurations $\mathcal{P} \in \mathcal{P}_{\alpha_1,\alpha_2}$, $slack(\alpha_{\mathcal{P}}) \geq 0$ and 
\[
\lambda_{\mathcal{P}}\Anorm(\alpha_{\mathcal{P}}) \leq n^{-\slack(\alpha_{\mathcal{P}})}.
\]
\end{lemma}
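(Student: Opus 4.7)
The plan is to reduce the product configuration setting to the intersection configuration analysis of \Cref{lem:error_key} by viewing $\mathcal{P} \in \mathcal{P}_{\alpha_1,\alpha_2}$ as an intersection configuration of length $1$ with trivial middle shape $\tau$, $\gamma_1 = \alpha_1$, and ${\gamma'_1}^\top = \alpha_2$, so that $\alpha_{\mathcal{P}}$ plays the role of $\tau_{\mathcal{P}}$. Under this correspondence, the coefficients $\lambda_{\mathcal{P}}$ and $\Anorm(\cdot)$ are identical, and the definitions of default and extra slack (\Cref{def:intersectionvertexextraslack}) translate unchanged. The bulk of the proof of \Cref{lem:error_key}, in particular the edge-direction assignment and factor redistribution (\Cref{def:edgefactorredistribution}), will carry over essentially verbatim; the task is to check that the SSD-plus-goodness hypothesis supplies the structural ingredients that, in the intersection setting, came from $\gamma_i$ being a left/right shape.

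First I would verify the degree hypotheses underlying non-negativity of all default slacks. Every circle vertex in a good simple spider has degree at least $2\lceil k/2\rceil \geq k$, and every square vertex in an SSD has total degree at least $2$: a leg vertex has degree $1$ in its spider and lies in at least one index, while a vertex of $U_\alpha \cap V_\alpha$ has total degree $0+1+1=2$. Thus each $\defaultslack_{\alpha_i}(v)$ and each $\extraslack_{\alpha_i,S}(v)$ is $\geq 0$, which immediately gives $\slack(\alpha_{\mathcal{P}}) \geq 0$.

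Next I would fix a minimum weight vertex separator $S$ of $\alpha_{\mathcal{P}}$ and construct the square-disjoint paths in each $\alpha_i$ by hand, bypassing \Cref{cor:gammapaths}. The explicit structure of SSDs makes this direct: for each simple spider component $S(a,b;u)$ of $\alpha_i$, take $u$ degenerate paths on the vertices of $U\cap V$, pair $\min(a,b)$ of the $U\setminus V$ legs with $V\setminus U$ legs through the circle vertex, and leave the remaining $|a-b|$ unpaired legs as degenerate length-zero paths. Anchor vertices are then assigned exactly as in \Cref{def:anchor}, and edge directions and the redistribution scheme of \Cref{def:edgefactorredistribution} are applied as before. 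The final step is the case analysis used to prove \Cref{lem:propoermiddleshapeslackanalysis}: for each $v\in V(\alpha_i)$, show $\assigned_{\alpha_i,S}(v) \geq \target_{\alpha_i,S}(v) + \slack_{\alpha_i,S}(v)$, then sum over $v$ using \Cref{def:intersectiontargetfactors} to conclude $\lambda_{\mathcal{P}}\Anorm(\alpha_{\mathcal{P}}) \leq n^{-\slack(\alpha_{\mathcal{P}})}$.

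The point I expect to be most delicate is the accounting at circle vertices that are intersected between $\alpha_1$ and $\alpha_2$. Such a vertex $v$ may be a target of two anchor paths (one on each side), and must absorb $\target_{\alpha_i,S}(v) \leq \tfrac{(1-\eps)k}{4}$ on each side while still retaining the extra slack $\tfrac{k\eps}{16}$ demanded by statement~\ref{item:circle-extra-alpha-S} of \Cref{def:intersectionvertexextraslack}. This is precisely where goodness is invoked: the hypothesis $\deg_{\alpha_i}(v) \geq k$ for every non-trivial circle vertex in every simple spider component gives exactly the margin needed. Adapting the argument to drop the goodness hypothesis would fail here, since a low-degree circle vertex intersected across the product could not simultaneously meet both targets and the extra slack, matching the phenomenon flagged in \Cref{rmk:approx} that $\wbp$ only well-approximates matrix products for good simple spiders.
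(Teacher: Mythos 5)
There is a genuine gap: you propose to apply "the redistribution scheme of \Cref{def:edgefactorredistribution} \dots as before," but the paper's proof explicitly modifies this scheme for product configurations, and that modification is necessary.

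The difficulty is that a good SSD $\alpha_1$ need not be a left shape: a component $S(a,b;u)$ with $a<b$ has fewer square-disjoint $U$-to-$V$ paths than $|U_{\alpha_1}|$, so some legs of the spider are not on any chosen path at all. (Incidentally, your ``degenerate length-zero paths'' on these unpaired legs are not paths from $U_{\alpha_1}$ to $V_{\alpha_1}$, since an unpaired leg sits in $U_{\alpha_1}\setminus V_{\alpha_1}$ or $V_{\alpha_1}\setminus U_{\alpha_1}$, not both.) Under the original scheme, the single edge at such a leg $u$ is undirected, so $u$ receives only $\tfrac14+\tfrac{\eps}{8}$ from $\alpha_1$. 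Now take $\alpha_1=S(2,3;0)$, $\alpha_2=S(3,2;0)$, $k=4$, with the circles intersected and all three middle double-edges vanishing: the leftover right leg $u$ of $\alpha_1$ becomes isolated in $\alpha_{\mathcal{P}}$, so $\target([u])=1$ and $\slack_{\alpha_i,S}(u)=0$ (case 3c of \Cref{def:intersectiontargetfactors}, and conditions 2(a)--(e) of \Cref{def:intersectionvertexextraslack} all fail). But $\assigned_{\alpha_1,S}(u)+\assigned_{\alpha_2,S}(u)\leq \tfrac34+\tfrac{\eps}{8}<1$, so the vertex-wise inequality $\assigned\geq\target+\slack$ fails at $u$, and your planned case analysis along the lines of \Cref{lem:propoermiddleshapeslackanalysis} breaks down at this step. (The global identity $\lambda_{\mathcal{P}}\Anorm(\alpha_{\mathcal{P}})=\prod_{[v]}n^{\target([v])-\assigned([v])}$ still holds, because the intersected circle in $S$ picks up a compensating surplus from the undirected edge, but the vertex-local argument you intend to run does not see this.)

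The paper fixes this by having every edge \emph{not} on a chosen path point to its square endpoint, so every uncovered leg receives a full $\tfrac12$ from each side. This in turn forces the companion modification that a circle vertex is declared satisfied when at least $\lceil k/2\rceil$ edges point to it (rather than ``at least as many toward as away''): once the non-path edges all point away from the circle, the original balance condition can fail even for well-behaved configurations. Your writeup omits both modifications. On the other hand, your observation that goodness is what guarantees each circle vertex carries enough paths is correct, and is indeed the reason the paper takes $\lceil k/2\rceil$ square-disjoint paths through each circle vertex rather than paths tied to $|U_{\alpha_1}|$ as in \Cref{cor:gammapaths}; the diagnosis in your last paragraph is slightly off numerically (an intersected non-isolated circle gets target $\tfrac{\log_n m}{4}\leq\tfrac{(1-\eps)k}{8}$ per side, not $\tfrac{(1-\eps)k}{4}$), but the role of goodness there is right.
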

\begin{proof}[Proof sketch]
This lemma can be shown using the same techniques used to prove \Cref{lem:error_key}, though a few modifications are needed.
\begin{enumerate}
    \item Different choice of paths: Instead of considering paths from $U_{\gamma_1}$ to the union of the set of intersected vertices and $V_{\gamma_1}$, we do the following. Since $\alpha_1$ is good and is an SSD, for each circle vertex $v \in V_{\bigcirc}(\alpha_1)$, we can take $\lceil{\frac{k}{2}}\rceil$ square-vertex disjoint paths from $U_{\alpha_1}$ to $V_{\alpha_1}$ passing through $v$.
    \item Modified scheme for assigning directions: For each edge which is not on a path, instead of not giving it a direction, we have it point to its square endpoint.

    As noted in the next point, we can afford to do this because each circle vertex in a shape only needs $\frac{k}{2}$ edge factors and the $\lceil{\frac{k}{2}}\rceil$ paths through the vertex are sufficient to guarantee this.
    \item Modified definition of being satisfied: Instead of having circle vertices be satisfied if the number of edges pointing towards the vertex is greater than or equal to the number of edges pointing away from the vertex, we say that circle vertices are satisfied in a shape if at least $\lceil{\frac{k}{2}}\rceil$ edges in the shape point to the vertex.

    With this modification, \Cref{lem:satisfied} holds and can be proved using the same logic as before.
\end{enumerate}
\end{proof}
Our main error analysis result for product configurations is as follows.

\begin{theorem}\label{thm:producterroranalysis}
For all SSD $\alpha_1,\alpha_2$ such that $\alpha_1$ and $\alpha_2$ are good, $|U_{\alpha_1}| \leq \dsos$, $|V_{\alpha_1}| = |U_{\alpha_2}| \leq \dsos$, and $|V_{\alpha_2}| \leq \dsos$, for all product configurations $\mathcal{P} \in \mathcal{P}_{\alpha_1,\alpha_2}$, we have the following dichotomy.
\begin{enumerate}
    \item \label{item:productconfig_dichotomy_SSD} If $\mathcal{P}$ is a well-behaved configuration, $\lambda_{\mathcal{P}}\Anorm(\alpha_{\mathcal{P}})=1$;  
    \item \label{item:productconfig_dichotomy_nonSSD} If $\mathcal{P}$ is not a well-behaved configuration then $\lambda_{\mathcal{P}}\Anorm(\alpha_{\mathcal{P}}) \leq n^{-\frac{\eps}{12}}$.
\end{enumerate}
\end{theorem}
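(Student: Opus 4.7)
The plan is to prove \Cref{thm:producterroranalysis} in parallel with \Cref{thm:erroranalysis}, viewing a product configuration $\mathcal{P} \in \mathcal{P}_{\alpha_1,\alpha_2}$ as a degenerate intersection configuration of length $1$ with trivial middle shape $\tau$ (so $\gamma_1 = \alpha_1$ and ${\gamma'_1}^\top = \alpha_2$). The two items then correspond exactly to the two branches of the dichotomy in \Cref{thm:erroranalysis}, and the proof reduces to checking that the structural lemmas from \Cref{sec:intersection} go through under the weaker hypothesis that $\alpha_1,\alpha_2$ are good SSDs (rather than being proper left/right shapes in $L, L^\top$).

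For Case 1 (well-behaved $\mathcal{P}$), I would mirror the computation in item \ref{item:dichotomy_SSD} of \Cref{thm:erroranalysis}. First, the two conditions of \Cref{def:wb-product-config} force $\alpha_{\mathcal{P}}$ to be an SSD plus isolated square vertices; and because $\alpha_1,\alpha_2$ are both good, every circle vertex in $\alpha_{\mathcal{P}}$ has at least $\lceil k/2 \rceil$ left-legs (inherited from its appearance in the first of $\alpha_1,\alpha_2$ in which its class appears non-trivially) and symmetrically at least $\lceil k/2 \rceil$ right-legs. This identifies the unique minimum-weight vertex separator of $\alpha_{\mathcal{P}}$ as $(U_{\alpha_{\mathcal{P}}} \cap V_{\alpha_{\mathcal{P}}}) \cup V_{\bigcirc}(\alpha_{\mathcal{P}})$. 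One then counts edges: vertices in $(U_{\alpha_1}\setminus V_{\alpha_2}) \cup (V_{\alpha_2}\setminus U_{\alpha_1})$ contribute degree $1$, vertices in $U_{\alpha_1}\cap V_{\alpha_2}$ contribute degree $0$, and the remaining square vertices are singleton classes of degree $2$ in $\mathcal{P}$ corresponding bijectively to $\Iso(\alpha_{\mathcal{P}})$. Substituting into $\log_n(\lambda_{\mathcal{P}}\Anorm(\alpha_{\mathcal{P}})) = -\tfrac{w(E(\mathcal{P}))}{2} + |\Iso(\alpha_{\mathcal{P}})| + \tfrac{|U_{\alpha_1}\setminus V_{\alpha_2}| + |V_{\alpha_2}\setminus U_{\alpha_1}|}{2}$ yields $0$, as desired.

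For Case 2 (non-well-behaved $\mathcal{P}$), I would invoke \Cref{lem:producterror_key}, which reduces the claim to showing $\slack(\alpha_{\mathcal{P}}) \geq \eps/12$. In fact the analog of \Cref{cor:zeroslack_means_wellbehaved} yields the stronger bound $\eps/8$: if there is a non-trivial intersection between square vertices, then some $v \in U_{\alpha_2}\setminus V_{\alpha_2}$ appears in $\alpha_1$, so condition (\ref{item:square-extra-alpha-S-3}) of \Cref{def:intersectionvertexextraslack} contributes $\extraslack = \eps/8$; alternatively, if some middle square vertex $v \notin U_{\alpha_{\mathcal{P}}} \cup V_{\alpha_{\mathcal{P}}}$ fails the ``two neighbors intersect, doubled edge vanishes'' requirement, then either its two adjacent circle vertices are not identified (so $[v]$ is non-isolated and conditions (\ref{item:square-extra-alpha-S-4})/(\ref{item:square-extra-alpha-S-5}) apply) or the doubled edge survives linearization with some label $l \geq 2$, contributing $\defaultslack \geq \eps/4$ from the degree-$\geq 3$ square vertex it produces (or from the extra degree on the adjacent circle vertex). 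Either way $\slack(\alpha_{\mathcal{P}}) \geq \eps/8 \geq \eps/12$.

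The main obstacle is the bookkeeping for \Cref{lem:producterror_key} itself, which requires adapting the anchor-path construction of \Cref{def:anchor}: the original construction used \Cref{cor:gammapaths}, which needed $\gamma_i$ to be a proper left shape so that $U_{\gamma_i}$ was the leftmost minimum square separator. Here $\alpha_1$ is only assumed to be a good SSD, but goodness directly supplies, for each circle vertex $v \in V_{\bigcirc}(\alpha_1)$, a set of $\lceil k/2 \rceil$ square-disjoint paths through $v$ from $U_{\alpha_1}$ to $V_{\alpha_1}$, which is strictly stronger than what the anchor analysis needs. With this substitution the edge-direction assignment of \Cref{def:edgefactorredistribution} and the satisfied-vertex accounting of \Cref{lem:satisfied} transfer verbatim, and the classification in \Cref{prop:zeroslackvertices} still shows that a zero-slack vertex forces a well-behaved local structure. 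Combining this with the counting above completes Case 2.
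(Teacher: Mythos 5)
Your proposal takes essentially the same route as the paper: the paper's proof of Theorem~\ref{thm:producterroranalysis} is a one-line reference to the machinery of Proposition~\ref{prop:zeroslackvertices}, Corollary~\ref{cor:zeroslack_means_wellbehaved} and Theorem~\ref{thm:erroranalysis}, and you correctly unpack it by treating the product configuration as a degenerate length-$1$ intersection configuration and then reproducing the edge-counting of item~\ref{item:dichotomy_SSD} and the slack classification of item~\ref{item:dichotomy_nonSSD}, with the path construction adapted via the $\lceil k/2\rceil$ square-disjoint paths supplied by goodness exactly as in Lemma~\ref{lem:producterror_key}.

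A couple of bookkeeping errors in your Case~2, however, don't hold up as written. For a non-trivial square intersection the merged pair is $v\in U_{\alpha_1}\setminus V_{\alpha_1}$ with $w\in V_{\alpha_2}\setminus U_{\alpha_2}$ (goodness of the SSDs rules out every other combination, and the ``$\equiv$ is trivial within each shape'' condition rules out $w\in U_{\alpha_2}$); the slack comes from condition~(\ref{item:square-extra-alpha-S-2}) for $v$ in $\alpha_1$ or condition~(\ref{item:square-extra-alpha-S-3}) for $w$ in $\alpha_2$, not from ``$v\in U_{\alpha_2}\setminus V_{\alpha_2}$'' as you wrote. More substantively, your alternative mechanism for the surviving doubled edge is wrong: the square vertex $v$ has degree exactly~$1$ in each of $\alpha_1$ and $\alpha_2$ regardless of the post-linearization label, so $\defaultslack_{\alpha_i}(v)=0$; there is no ``degree-$\geq 3$ square vertex'' in any shape of $\mathcal{P}$, and the adjacent circle vertices likewise see no extra degree inside either $\alpha_1$ or $\alpha_2$. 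The actual source of slack in that sub-case is the same as when the two circle neighbors fail to intersect: a surviving label-$2$ edge makes $[v]$ non-isolated in $\alpha_{\mathcal{P}}$, so conditions~(\ref{item:square-extra-alpha-S-4})/(\ref{item:square-extra-alpha-S-5}) fire and give $\eps/8$. The conclusion $\slack(\alpha_{\mathcal{P}})\geq\eps/8\geq\eps/12$ survives, but the stated mechanism should be corrected.
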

\begin{proof}[Proof sketch]
This can be proved by the same logic we used to prove \Cref{prop:zeroslackvertices}, \Cref{cor:zeroslack_means_wellbehaved}, and \Cref{thm:producterroranalysis}.
\end{proof}

\subsection{Useful Bounds}\label{sec:usefulbounds}
Before giving our estimates for the error terms, we show a few useful bounds. 
\subsubsection{Bound on the contribution from a given shape}
\begin{lemma}\label{lem:generalalphanormbound}
Assume $\truncation\geq 2k\log n$. For all shapes $\alpha$ such that $\totalsize(\alpha) \leq 3\truncation$,
\begin{equation}\label{eq:generalalphanormbound}
\norm{\eta_{\alpha} \lambda_{\alpha} M_\alpha} \leq \cu^{\totalsize(\alpha)}(3\truncation)^{3\Cuniv\totalsize(\alpha)}n^{2\dsos - \frac{\eps}{8}\totalsize(\alpha)}.
\end{equation}
\end{lemma}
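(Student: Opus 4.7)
The plan is to factor $\norm{\eta_\alpha \lambda_\alpha M_\alpha}$ as $|\eta_\alpha|\cdot\lambda_\alpha \norm{M_\alpha}$ and bound the three pieces separately. For $|\eta_\alpha|$, observe that every edge of $\alpha$ goes between a square and a circle vertex, so $\sum_{v\in V_\bigcirc(\alpha)} \deg_\alpha(v) = w(E(\alpha))$. Using the definition of $\cu$ in \Cref{def:cucl} to bound each $|\E_A[h_{\deg_\alpha(v)}]|$ by $\cu^{\deg_\alpha(v)}$, and noting $\prod_{e\in E(\alpha)} l(e)! \geq 1$, I get $|\eta_\alpha| \leq \cu^{w(E(\alpha))} \leq \cu^{\totalsize(\alpha)}$, which supplies the first factor on the right-hand side.

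For $\norm{M_\alpha}$, I will apply \Cref{cor:expansionbounds} (the extension of the norm bound of \Cref{thm:norm_control} to shapes with multi-edges), which gives
$\norm{M_\alpha} \leq [(1+w(E(\alpha)))\cdot|V_\alpha|\cdot\log(mn)]^{\Cuniv(|V(\alpha)\setminus(U_\alpha\cap V_\alpha)|+w(E(\alpha)))} \cdot \Anorm(\alpha).$
Under the hypothesis $\totalsize(\alpha) \leq 3\truncation$, and using $|V_\alpha| \leq \dsos \leq \truncation$ (via \eqref{eq:main_condition}) and $\log(mn) \leq (k+1)\log n \leq \truncation$ (via $\truncation \geq 2k\log n$ together with $m \leq n^{k}$), the polylog base is at most $(3\truncation)^{3}$ and its exponent is at most $\Cuniv \totalsize(\alpha)$. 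This yields $\norm{M_\alpha} \leq (3\truncation)^{3\Cuniv\totalsize(\alpha)}\cdot \Anorm(\alpha)$, supplying the second factor.

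The remaining and most delicate step is to show that $\lambda_\alpha \Anorm(\alpha) \leq n^{2\dsos - \frac{\eps}{8}\totalsize(\alpha)}$. My plan is a slack analysis in the style of \Cref{lem:error_analysis_mid}, but adapted from proper middle shapes to an arbitrary shape $\alpha$. I interpret the $n^{-1/2}$ factor from $\lambda_\alpha$ on each edge as an ``edge budget'' that is redistributed between its two endpoints by a scheme like \Cref{def:edgefactorredistribution}; the ``target'' each vertex must absorb from $\Anorm(\alpha)$ is its contribution to $(w(V_\alpha)+w(\Iso(\alpha))-w(\Sminwt))/2$. The core claim is that after redistribution each non-isolated vertex $v$ has at least $\defaultslack_\alpha(v)$ of unused budget, while the index vertices in $V_\alpha$ contribute at most $\dsos$ to the $\Anorm$-exponent and hence at most the $n^{2\dsos}$ slack budget on the right-hand side. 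The bound is trivial when $\eta_\alpha = 0$, so I may assume that every non-isolated circle vertex has degree at least $k$ (by the moment-matching hypothesis on $A$), which guarantees sufficient incident edge budget at each circle. Summing $\defaultslack_\alpha(v)$ over $v$ then yields at least $\frac{\eps}{8}\totalsize(\alpha)$ of total slack, producing the $n^{-\frac{\eps}{8}\totalsize(\alpha)}$ savings. The hard part will be the bookkeeping: the argument must simultaneously handle index versus middle vertices, square versus circle vertices, and isolated versus non-isolated vertices, and the redistribution scheme must line up precisely with the definitions of $\Anorm(\alpha)$ and $\defaultslack_\alpha(v)$. Once this is in place, multiplying the three bounds gives the lemma.
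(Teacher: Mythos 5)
Your high-level decomposition into $|\eta_\alpha|$, the polylog factor, and $\lambda_\alpha\Anorm(\alpha)$ is essentially the paper's, and the first two bounds ($|\eta_\alpha|\le\cu^{\totalsize(\alpha)}$ and the $(3\truncation)^{3\Cuniv\totalsize(\alpha)}$ estimate) are correct. The gap is in the third step.

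You propose to show $\lambda_\alpha\Anorm(\alpha)\le n^{2\dsos-\sum_v\defaultslack_\alpha(v)}$ by adapting the path-based slack analysis of \Cref{lem:error_analysis_mid} and then closing the proof via the claim that $\sum_v\defaultslack_\alpha(v)\ge\frac{\eps}{8}\totalsize(\alpha)$. That inequality is false. Take $\alpha$ to be a simple spider with $\dsos$ legs on each side: one circle of degree $2\dsos$ and $2\dsos$ index squares each of degree $1$. Every square sits in $U_\alpha\setminus V_\alpha$ or $V_\alpha\setminus U_\alpha$ with degree $1$, so $\defaultslack_\alpha(v)=\frac{1-1}{4}=0$; the circle has default slack $0$ as well. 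Thus $\sum_v\defaultslack_\alpha(v)=0$ while $\totalsize(\alpha)=4\dsos+1$. The $\eps$-dependent part of default slack lives only at middle square vertices, so shapes dominated by index vertices and circles never accumulate the savings you claim. There is also a structural obstacle to importing the \Cref{lem:error_analysis_mid} machinery as-is: the directed-edge scheme there relies on $|U_\tau|$ square-disjoint paths, which exist because $U_\tau,V_\tau$ are minimum square separators of $\tau$; for an arbitrary $\alpha$ this path structure is unavailable.

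The paper sidesteps all of this with a much simpler, direction-free assignment: each edge's $n^{-1/2}$ budget is split as $\frac{1-\eps/2}{2}$ to its circle endpoint and $\frac{1+\eps/2}{2}$ to its square endpoint. Because every non-isolated circle has degree $\ge k$ (else $\eta_\alpha=0$) and every middle square has degree $\ge 2$, every vertex outside $U_\alpha\cup V_\alpha$ receives at least its own weight plus $\frac{\eps}{4}\deg_\alpha(v)$, while vertices in $U_\alpha\cup V_\alpha$ receive $(\tfrac12+\tfrac{\eps}{4})\deg_\alpha(v)$. Summing gives
$w(E(\alpha))-w(V(\alpha))\ge \frac{\eps}{4}\sum_v\deg_\alpha(v)-|U_\alpha\cup V_\alpha|\ge\frac{\eps}{4}\sum_v\deg_\alpha(v)-2\dsos$, and then $\sum_v\deg_\alpha(v)\ge\totalsize(\alpha)-2\dsos$ follows from $\sum_v\deg_\alpha(v)=2w(E(\alpha))$ together with the fact that each middle vertex has degree $\ge 2$. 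Combining yields $w(E(\alpha))-w(V(\alpha))+w(S)\ge\frac{\eps}{4}\totalsize(\alpha)-4\dsos$, which is exactly what the bound requires. The $n^{2\dsos}$ slop is what absorbs the index vertices; no per-vertex slack accounting is needed, and the simple-spider example above is handled automatically. I'd recommend replacing your planned adaptation of \Cref{lem:error_analysis_mid} with this short, symmetric argument.
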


\begin{proof}
By \Cref{thm:norm_control}, 
\begin{equation*}
\norm{\eta_\alpha\lambda_\alpha M_\alpha}\leq (2k\cdot \totalsize(\alpha)^2\log n)^{\Cuniv\totalsize(\alpha)} \cdot {(\cu)}^{\totalsize(\alpha)}\cdot n^{-\left(w(E(\alpha)) - w(V(\alpha)) + w(S)\right)/2}
\end{equation*}
where $S$ is a minimum weight separator of $\alpha$ and $C$ is an absolute constant. Since $\totalsize(\alpha) \leq 3\truncation$ and $\truncation \geq 2k\log n$, $(2k\cdot\totalsize(\alpha)^2\log n)^{\Cuniv\totalsize(\alpha)} \leq (3\truncation)^{3\Cuniv\totalsize(\alpha)}$.

We estimate the exponent over $n$ using the following edge weight assignment. Let each edge in $\alpha$ have a factor of $1$ which we split between its endpoints as follows (an edge with label $t>1$ is viewed as $t$ parallel edges for this argument). For each edge $e \in E(\alpha)$, we assign $(1-\eps/2)/2$ to the circle and $(1+\eps/2)/2$ to the square. With this assignment, each circle vertex receives a factor $(1-\eps/2)/2$ times its degree, and each square vertex receives a factor $(1+\eps/2)/2$ times its degree. We now make the following observations:
\begin{enumerate}
    \item Since each circle vertex has degree at least $k$ and each square vertex outside of $U_{\alpha} \cup V_{\alpha}$ has degree at least $2$, each vertex not in $U_\alpha \cup V_\alpha$ receives a factor of at least its weight plus $\eps/4$ times its degree.
    \item Each vertex $v \in U_{\alpha} \cup V_{\alpha}$ receives a factor of $\frac{1}{2} + \frac{\eps}{4}$ times its degree.
\end{enumerate}

Using these observations, 
\begin{equation}\label{eq:wE-wV}
w(E(\alpha)) - w(V(\alpha)) \geq \frac{\eps}{4}\sum_{v \in V(\alpha)}{\deg_{\alpha}(v)} - |U_{\alpha} \cup V_{\alpha}| \geq \frac{\eps}{4}\sum_{v \in V(\alpha)}{\deg_{\alpha}(v)} - 2\dsos.
\end{equation}
To lower bound $w(E(\alpha)) - w(V(\alpha))$ in terms of $\totalsize(\alpha)$, we make the following further observations:

\begin{enumerate}
    \item $\deg(v)\geq 2$ for each $v \in V(\alpha) \setminus (U_{\alpha} \cup V_{\alpha})$, so $\frac{1}{2}\Sum_{v \in V(\alpha)}{\deg_{\alpha}(v)} \geq |V(\alpha)| - |U_\alpha\cup V_\alpha| \geq |V(\alpha)| - 2\dsos$.
    \item $\sum_{v \in V(\alpha)}{\deg_{\alpha}(v)} = 2w(E(\alpha))$.
\end{enumerate}
Putting these observations together, 
\[
\sum_{v \in V(\alpha)}{\deg_{\alpha}(v)} = \frac{1}{2}\sum_{v \in V(\alpha)}{\deg_{\alpha}(v)} + \frac{1}{2}\sum_{v \in V(\alpha)}{\deg_{\alpha}(v)} \geq w(E(\alpha)) + |V(\alpha)| - 2D = \totalsize(\alpha) - 2D.
\] 
By this and \eqref{eq:wE-wV},
\[
w(E(\alpha)) - w(V(\alpha)) + w(S) \geq \frac{\eps}{4}\totalsize(\alpha) - 4\dsos
\]
Thus, for each shape $\alpha$ such that $\totalsize(\alpha) \leq 3\truncation$,
\begin{align*}
\norm{\eta_\alpha\lambda_\alpha M_\alpha} &\leq \cu^{\totalsize(\alpha)}(3\truncation)^{3\Cuniv\totalsize(\alpha)}n^{-\left(w(E(\alpha)) - w(V(\alpha)) + w(S)\right)/2} \\
&\leq \cu^{\totalsize(\alpha)}(3\truncation)^{3\Cuniv\totalsize(\alpha)}n^{2\dsos - \frac{\eps}{8}\totalsize(\alpha)}. \qedhere
\end{align*}
\end{proof}

\subsubsection{Bounds on the number of shapes and configurations}

\begin{lemma}[Number of configurations]\label{lem:configcount}
For all $a,b, \dsos, \truncation \in \mathbb{N} \cup \{0\}$, there are at most 
\[
(48{\dsos}^2\truncation)^{a}(36{\dsos}\truncation^2)
\]
configurations $\mathcal{P}$ such that $|U_{\resultP}| \leq \dsos$, $|V_{\resultP}| \leq \dsos$, $|V(\mathcal{P})|=a$, $w(E(\mathcal{P}))=b$, and $\totalsize(\mathcal{P}) \leq 3\truncation$.
\end{lemma}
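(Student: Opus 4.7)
The plan is to count configurations by an explicit encoding, piece by piece, and then multiply the partial counts. The overall decomposition is: (i) bound the length $j$ and the "global" index sizes, which will produce the $(36\dsos\truncation^2)$ factor; (ii) encode the remaining data by a per-vertex description whose cost is $\leq 48\dsos^2\truncation$ per vertex, producing the $(48\dsos^2\truncation)^a$ factor. Step (i) begins with the observation that condition \ref{Condition:config-left-right} of \Cref{def:configuration} requires at least one of $\gamma_i,{\gamma'}_i^\top$ to be non-trivial for each $i\in[j]$, and any non-trivial left or right shape contributes at least one vertex or one edge to $\mathcal{P}$. Hence $j\leq\totalsize(\mathcal{P})\leq 3\truncation$. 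Combined with the $\leq \dsos$ choices for each of $|U_{\resultP}|$ and $|V_{\resultP}|$, and modest further overhead for the matching boundary data, the total "global" overhead fits into a factor of $36\dsos\truncation^2$.

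For step (ii), once $j$ and the boundary sizes are fixed, I label the vertices of $\mathcal{P}$ as $v_1,\ldots,v_a$ in a canonical order (e.g.\ by the shape of first appearance from left to right, then by role within that shape). For each $v_\ell$ I record: its type, square or circle ($2$ options); the shape in $\mathcal{P}$ in which it first appears ($\leq 2j+1\leq 6\truncation+1$ options); its placement inside that shape, i.e.\ whether it lies in $U\setminus V$, $V\setminus U$, $U\cap V$, or the interior, together with the exact slot within the relevant index set of size $\leq \dsos$ ($\leq 4\dsos$ options); and a pointer identifying its equivalence class under $\equiv$, which in view of the matching condition \ref{Condition:config-intersection} and the intersection-separating condition \ref{Condition:config-equiv}(c) is forced except for at most $O(\dsos)$ free merges. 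Multiplying these bounds and absorbing constants gives at most $48\dsos^2\truncation$ choices per vertex.

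The edges do not need a separate budget: once the per-vertex data is fixed, every edge of $\mathcal{P}$ lives inside a unique shape and is specified by the degree information already recorded. The total edge weight $b$ is implicitly constrained by $a+b=\totalsize(\mathcal{P})\leq 3\truncation$, so the cumulative edge-label information is absorbed by the per-vertex encoding within the stated constants. Raising the per-vertex cost to the $a$-th power and multiplying by the global overhead yields the claimed upper bound $(48\dsos^2\truncation)^a(36\dsos\truncation^2)$.

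The main obstacle will be keeping the per-vertex cost linear in $\truncation$ (rather than quadratic). A naive encoding of the equivalence relation assigns each vertex a back-reference to any earlier vertex, costing $a\leq 3\truncation$ per vertex and blowing the bound up by a factor of $\truncation$. The delicate step is to use condition \ref{Condition:config-intersection}, which matches $V_{\gamma_i}=U_{\gamma_{i+1}}$ verbatim, and the intersection-separating condition \ref{Condition:config-equiv}(c), which prevents arbitrary cross-shape identifications of interior vertices, to show that once a vertex's shape and role are fixed, its equivalence class is determined up to at most $O(\dsos)$ free choices. Carrying out this bookkeeping carefully --- and separately treating the (few) vertices whose class pointer is genuinely free --- is the heart of the proof.
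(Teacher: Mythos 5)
There is a genuine gap, and it begins with a misreading of the target. The lemma as printed contains a typo: the factor $36\dsos\truncation^2$ should appear raised to the power $b$, as can be seen from how \Cref{cor:simplerconfigbound} invokes it (the sum there reads $\sum_{a=0}^{s}(48\dsos^2\truncation)^a(36\dsos\truncation^2)^{s-a}$, i.e.\ $(36\dsos\truncation^2)^b$ with $b=s-a$). In the paper's proof this factor is a \emph{per-edge} cost: for each of the $b$ (unit-weight) edges one records the shape it lives in ($\leq 4\dsos-1$ choices), its two endpoints out of the $\leq 3\truncation$ vertices ($\leq \binom{3\truncation}{2}$ choices), and whether the edge survives linearization ($\leq 2$ choices), giving $36\dsos\truncation^2$ choices per edge. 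You instead interpret the stated formula literally and try to absorb all edge information into the per-vertex encoding plus a single global overhead factor, which is a much stronger (and in fact false) claim.

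The proposal then breaks down at the sentence asserting that the edges ``are specified by the degree information already recorded.'' No degree information is recorded: your per-vertex data is only the vertex type, the shape of first appearance, the placement role within that shape, and an equivalence-class pointer. For a fixed choice of all of this, the edge structure inside each shape (which circle is joined to which square, and with what label) is genuinely free data, and its count grows with $b$, not $a$. A second gap is the claim that the equivalence class pointer ``is forced except for at most $O(\dsos)$ free merges.'' The intersection-separation condition constrains intersections within a single $\gamma_i$ but does not collapse the number of legal equivalence pointers to $O(\dsos)$; the paper budgets $\leq 3\truncation$ choices per vertex for exactly this datum (``what is the most recent vertex we've seen which has an intersection with $v$''), and this $3\truncation$-sized freedom is where the $\truncation$ in $48\dsos^2\truncation$ actually comes from. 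To repair the argument you need to (i) correct the target to $(48\dsos^2\truncation)^a(36\dsos\truncation^2)^b$, (ii) replace the $O(\dsos)$ claim with a per-vertex backward pointer into the set of all $\leq 3\truncation$ previously-listed vertices, and (iii) add a genuine per-edge encoding along the lines above rather than trying to derive the edges from the vertices.
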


Note that \Cref{lem:configcount} applies to both intersection configurations and product configurations.

\begin{proof}
We first observe that there are at most $2\dsos-1$ rounds of intersections for $\mathcal{P}$. To see this, observe that if $\tau_{P_1}$ is the result after the first round of intersection, $|U_{\tau_{P_1}}| \geq 1$ and $|V_{\tau_{P_1}}| \geq 1$ so $|U_{\tau_{P_1}}| + |V_{\tau_{P_1}}| \geq 2$. Each additional round of intersections increases the size of $U \cup V$ by at least one so there can be at most $2\dsos-2$ additional rounds of intersection. 

We can specify $\mathcal{P}$ by describing the vertices and edges of $\mathcal{P}$ and how they are connected. For each vertex $v \in V(\mathcal{P})$, we specify the following data:
\begin{enumerate}
    \item Is $v$ is a circle vertex or a square vertex? There are $2$ choices for this.
    \item What are the first and last shapes that $v$ appears in (before taking intersections into account)? There are at most $\binom{4\dsos - 1}{2} + 4\dsos -1 \leq 8\dsos^2$ choices for this.
    \item Is there an intersection between $v$ and a previous vertex? If so, what is the most recent vertex we've seen which has an intersection with $v$? There are at most $3\truncation$ choices for this.
\end{enumerate}
For each edge $e \in E(\mathcal{P})$, we specify the following data:
\begin{enumerate}
    \item Which shape $e$ appears in.
    \item The two endpoints of $e$. There are at most $\binom{3\truncation}{2}$ choices for this.
    \item Whether $e$ remains after the edge $e$ and edges parallel to it are linearized (if the result is an edge with label $l$ then we choose $l$ of these edges to remain).
\end{enumerate}
Thus, the total number of choices for each $e \in E(\mathcal{P})$ is at most $36\dsos\truncation^2$. Note that for this argument, we treat an edge with label $l$ as $l$ separate edges.

Putting these bounds together, the total number of choices is at most $(48{\dsos}^2\truncation)^{a}(36{\dsos}\truncation^2)$.
\end{proof}

\begin{remark}
Note that for each $v \notin U_{\gamma_j} \cap V_{{\gamma'_j}^{\top}}$, the parity labels for $v$ (if any) are determined by the parity of $\deg_{\alpha}(v)$ for the shapes $\alpha$ such that $v \in V(\alpha)$ (before intersections are taken into account). If $v \in U_{\gamma_j} \cap V_{{\gamma'_j}^{\top}}$ then there are two choices for the parity labels of $v$. This factor can be absorbed into specifying the first and last shapes $v$ appears in. If $j = 0$ then we take $\gamma_j = {\gamma'_j}^{\top} = \tau$.
\end{remark}
\begin{corollary}\label{cor:simplerconfigbound}
Assume $\truncation \geq 100\dsos$. For all $s \in [3\truncation]$, there are at most $(\truncation)^{3s}$ configurations $\mathcal{P}$ such that $\totalsize(\mathcal{P}) = s$. As a special case, there are at most $(\truncation)^{3s}$ proper shapes $\alpha$ such that $\totalsize(\alpha) = s$.
\end{corollary}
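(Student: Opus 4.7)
}

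The plan is to derive the corollary directly from \Cref{lem:configcount} by summing the per-shape-size bound over all decompositions of $s$ into a vertex count $a$ and an edge-weight count $b$, and then using the hypothesis $\truncation \geq 100\dsos$ to absorb the combinatorial prefactors. Note first that if $\totalsize(\mathcal{P}) = s$ then $|V(\mathcal{P})| = a$ and $w(E(\mathcal{P})) = b$ for some nonnegative integers with $a + b = s$, and there are at most $s + 1 \leq 3\truncation + 1$ such pairs.

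I would proceed as follows. First, apply \Cref{lem:configcount} (reading its per-edge factor with the intended exponent $b$, as is evident from its proof) to bound the number of configurations of a fixed $(a,b)$ by $(48\dsos^2\truncation)^{a}(36\dsos\truncation^2)^{b}$. Next, use $\dsos \leq \truncation/100$ to estimate
\begin{align*}
48\dsos^2\truncation &\leq \tfrac{48}{10^4}\truncation^3 \leq \tfrac{1}{2}\truncation^3,\\
36\dsos\truncation^2 &\leq \tfrac{36}{100}\truncation^3 \leq \tfrac{1}{2}\truncation^3,
\end{align*}
so that the per-$(a,b)$ count is at most $(\truncation^3/2)^{a+b} = 2^{-s}\truncation^{3s}$. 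Summing over the $\leq s+1$ pairs $(a,b)$ with $a+b=s$ gives at most $(s+1)2^{-s}\truncation^{3s}$, and since $(s+1)2^{-s} \leq 1$ for all $s \geq 1$, this is at most $\truncation^{3s}$, as desired.

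For the proper-shape statement, I would observe that a proper shape $\alpha$ with $\totalsize(\alpha) = s$ is counted by the same vertex-and-edge procedure used inside the proof of \Cref{lem:configcount}, with no rounds of intersection and no multi-edge linearization choices. Concretely, each vertex requires specifying its sort and its membership in $(U_\alpha, V_\alpha)$ (at most $8$ options), and each unit of edge weight requires specifying its two endpoints (at most $\binom{a}{2}$ options), which is dominated by the same $(48\dsos^2\truncation)^a(36\dsos\truncation^2)^b$ bound; the same sum then yields the claim.

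The argument is essentially bookkeeping and the only genuine thing to be careful about is that the per-$(a,b)$ bound from \Cref{lem:configcount} is applied with the correct per-edge exponent $b$ (the statement as typeset omits it, but the proof clearly derives $(36\dsos\truncation^2)^b$ by multiplying the per-edge bound across the $b$ edges). As long as that is read correctly, the slack from $\truncation \geq 100\dsos$ is more than enough to absorb both the prefactor coming from the $s+1$ decompositions and the constants $48$ and $36$, so no further refinement of \Cref{lem:configcount} is needed.
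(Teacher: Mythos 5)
Your proof is correct and takes essentially the same route as the paper: both apply \Cref{lem:configcount} per decomposition $s = a + b$ and absorb the constants via $\truncation \geq 100\dsos$, with only a cosmetic difference in the final bookkeeping (the paper collapses $\sum_a x^a y^{s-a} \le (x+y)^s$ and checks $x + y \le \truncation^3$, while you bound each factor by $\truncation^3/2$ and multiply by the $s+1$ decompositions). You are also right that \Cref{lem:configcount} as typeset omits the exponent $b$ on $(36\dsos\truncation^2)$ — the proof there clearly derives $(36\dsos\truncation^2)^b$, and the paper's own application in this corollary uses the exponent $s-a$.
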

\begin{proof}
Since $\totalsize(\mathcal{P}) = |V(\mathcal{P})| + w(E(\mathcal{P}))$ and $\truncation \geq 100\dsos
$, by \Cref{lem:configcount}, there are at most 
\[
\sum_{a = 0}^{s}{(48{\dsos}^2\truncation)^{a}(36{\dsos}\truncation^2)^{s-a}} = \left(48{\dsos}^2\truncation + 36{\dsos}\truncation^2\right)^{s} \leq (\truncation)^{3s}
\]
configurations with total size $s$.
\end{proof}

\subsubsection{Bounds on $\norm{L_{\leq l}}$ and $\norm{L - L_{\leq l}}$}
\begin{corollary}\label{cor:roughLnormbound}
Assume $\truncation\geq 2k\log n$ and $\cu (3\truncation)^{6\Cuniv} \leq \frac{1}{2} n^{\frac{\eps}{8}}$. Then for all $l \leq \truncation$, 
\[
\norm{L_{\leq l}} \leq n^{2\dsos}.
\] 
Note that this bound holds for $L$ since $L = L_{\leq \truncation}$.
\end{corollary}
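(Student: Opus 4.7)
The plan is to apply the triangle inequality to bound $\norm{L_{\leq l}}$ by the sum of norms of its graph-matrix summands, bundle those summands by total size, and then combine the per-shape norm estimate from \Cref{lem:generalalphanormbound} with the shape-counting bound from \Cref{cor:simplerconfigbound}. Concretely, writing
\[
\norm{L_{\leq l}}\ \leq\ \sum_{s=0}^{l}\ \sum_{\substack{\sigma\text{ proper left shape in }L\\ \totalsize(\sigma)=s}}\norm{\eta_\sigma\lambda_\sigma M_\sigma},
\]
the $s=0$ contribution is just the trivial shape and contributes at most $1$. For $s\geq 1$, the hypothesis $\truncation\geq 2k\log n$ puts us in a regime where both \Cref{cor:simplerconfigbound} and \Cref{lem:generalalphanormbound} apply, so the number of proper shapes of total size $s$ is at most $\truncation^{3s}$, and each such shape satisfies $\norm{\eta_\sigma\lambda_\sigma M_\sigma}\leq \cu^{s}(3\truncation)^{3\Cuniv s}n^{2\dsos-\eps s/8}$.

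The next step is to extract the common factor $n^{2\dsos}$ and check that the residual behaves like a convergent geometric series. Combining the two bounds,
\[
\norm{L_{\leq l}}\ \leq\ 1 + n^{2\dsos}\sum_{s=1}^{l}\Big(\cu\,\truncation^{3}(3\truncation)^{3\Cuniv}\,n^{-\eps/8}\Big)^{s}.
\]
Since $\Cuniv\geq 1$, we have $\truncation^{3}(3\truncation)^{3\Cuniv}\leq (3\truncation)^{3\Cuniv+3}\leq (3\truncation)^{6\Cuniv}$, so the geometric ratio is at most $\cu(3\truncation)^{6\Cuniv}n^{-\eps/8}\leq 1/2$ by the standing hypothesis. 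The tail sum is therefore bounded by $1$, yielding $\norm{L_{\leq l}}\leq 1+n^{2\dsos}$, which we absorb into the advertised $n^{2\dsos}$ (for $n$ sufficiently large, or by mild adjustment of the leading constant inside $n^{2\dsos}$).

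I do not expect a real obstacle here: all of the substantive combinatorics was done in the earlier lemmas. The only thing that requires a moment's care is checking that the $\truncation^{3s}$ counting factor and the $(3\truncation)^{3\Cuniv s}$ prefactor can both be absorbed by the single hypothesis $\cu(3\truncation)^{6\Cuniv}\leq \tfrac{1}{2}n^{\eps/8}$, which is exactly what the exponent comparison $3\Cuniv+3\leq 6\Cuniv$ is designed to accomplish.
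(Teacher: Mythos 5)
Your proof is correct and follows essentially the same route as the paper: decompose by total size, bound the number of shapes of each size via \Cref{cor:simplerconfigbound}, bound each summand via \Cref{lem:generalalphanormbound}, and sum the resulting geometric series whose ratio is controlled by the hypothesis $\cu(3\truncation)^{6\Cuniv}\le \tfrac12 n^{\eps/8}$. Your explicit exponent comparison $3\Cuniv+3\le 6\Cuniv$ (using $\Cuniv\ge 1$) is the step the paper leaves implicit under ``the assumptions on parameters,'' and your honest acknowledgment that the final $1+n^{2\dsos}$ requires absorbing a $+1$ into $n^{2\dsos}$ matches the (equally mild) slack in the paper's stated bound.
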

\begin{proof}
Recall that $L_{\leq l} = \sum_{\sigma: \sigma \text{ is a left shape}, \totalsize(\sigma) \leq l}{\eta_{\sigma}\lambda_{\sigma} M_{\sigma}}$. By \Cref{cor:simplerconfigbound}, for all $s \leq 3\truncation$, there are at most $(\truncation)^{3s}$ many proper shapes of total size $s$. Using this, \Cref{lem:generalalphanormbound} and the assumptions on parameters, we have:
\[
\norm{L_{\leq l}} \leq 1 + \sum_{s=1}^{\truncation}{(\truncation)^{3s}} {\cu^{s}} (3\truncation)^{3\Cuniv s} n^{2\dsos - \frac{\eps}{8}s} \leq n^{2\dsos}. \qedhere
\] 
\end{proof}
\begin{corollary}\label{cor:roughlargeLnormbound}
Assume $\truncation\geq \max\{\frac{100}{\eps}\dsos,\ 2k\log n\}$ and $\cu (3\truncation)^{6\Cuniv} \leq n^{\frac{\eps}{20}}$.Then for all $l \in [3\truncation]$, 
\[
\norm{L - L_{\leq l}} \leq n^{2\dsos - \frac{\eps}{16}l}.
\]
\end{corollary}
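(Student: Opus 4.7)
The plan is to prove this tail-bound by the same triangle-inequality/counting argument used for Corollary~\ref{cor:roughLnormbound}, but retaining the dependence on the lower cutoff $l$. First I would write
\[
L - L_{\leq l} \;=\; \sum_{\substack{\sigma\text{ proper left shape in }L \\ l < \totalsize(\sigma) \leq \truncation}} \eta_{\sigma}\lambda_{\sigma} M_{\sigma},
\]
and apply the triangle inequality together with Lemma~\ref{lem:generalalphanormbound} (which applies since $\totalsize(\sigma)\leq\truncation\leq 3\truncation$) to bound, for each fixed $s$, the norm of the contribution of any single shape of total size $s$ by $\cu^{s}(3\truncation)^{3\Cuniv s} n^{2\dsos - \eps s/8}$.

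Next I would count shapes. By Corollary~\ref{cor:simplerconfigbound}, the number of proper shapes of total size exactly $s$ is at most $\truncation^{3s}$, so
\[
\norm{L - L_{\leq l}} \;\leq\; \sum_{s = l+1}^{\truncation} \truncation^{3s}\cdot \cu^{s}(3\truncation)^{3\Cuniv s}\, n^{2\dsos - \eps s/8}.
\]
Since $\Cuniv \geq 1$, we have $\truncation^{3s}\leq (3\truncation)^{3\Cuniv s}$, so the combinatorial-and-coefficient prefactor is at most $\bigl(\cu(3\truncation)^{6\Cuniv}\bigr)^{s}$. The hypothesis $\cu(3\truncation)^{6\Cuniv}\leq n^{\eps/20}$ then gives each summand at most $n^{2\dsos - \eps s/8 + \eps s/20} = n^{2\dsos - 3\eps s/40}$.

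Finally I would sum the geometric series. Since $n$ is sufficiently large we have $n^{3\eps/40} \geq 2$, so
\[
\sum_{s=l+1}^{\truncation} n^{2\dsos - 3\eps s/40} \;\leq\; 2\, n^{2\dsos - 3\eps(l+1)/40} \;\leq\; n^{2\dsos - 3\eps l/40},
\]
and the elementary inequality $3/40 \geq 1/16 + 1/80$ (equivalently $(l+6)/80 \geq 0$) together with the fact $n$ is sufficiently large ensures $n^{2\dsos - 3\eps l/40} \leq n^{2\dsos - \eps l/16}$. There is no real obstacle here, just bookkeeping; the main thing to check carefully is that the gap between the exponent $-\eps/8$ from the per-shape norm bound and the exponent $+\eps/20$ absorbed by the counting and coefficient factors is large enough to leave $-\eps/16$ after the geometric sum, which is why the hypothesis $\cu(3\truncation)^{6\Cuniv}\leq n^{\eps/20}$ (as opposed to $n^{\eps/8}/2$) is the right one.
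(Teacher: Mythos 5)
Your proof is correct and follows essentially the same route as the paper: the same decomposition of $L - L_{\leq l}$ into left shapes of total size in $(l,\truncation]$, the same per-shape bound from Lemma~\ref{lem:generalalphanormbound}, and the same shape count from Corollary~\ref{cor:simplerconfigbound}, arriving at the identical summation $\sum_{s=l+1}^{\truncation}\truncation^{3s}\cu^{s}(3\truncation)^{3\Cuniv s}n^{2\dsos-\eps s/8}$. The only difference is that you spell out the final geometric-sum bookkeeping (collapsing $\truncation^{3s}(3\truncation)^{3\Cuniv s}$ into $(3\truncation)^{6\Cuniv s}$, invoking the hypothesis to get exponent $-3\eps/40$ per shape, and checking $3/40 \geq 1/16$) that the paper compresses into a single inequality.
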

\begin{proof}
Recall that $L - L_{\leq l} = \sum_{\sigma: \sigma \text{ is a left shape, } l < \totalsize(\sigma) \leq \truncation}{\eta_{\sigma}\lambda_{\sigma} M_{\sigma}}$. By \Cref{cor:simplerconfigbound}, for all $s \leq 3\truncation$, there are at most $(\truncation)^{3s}$ many proper shapes of total size $s$. 
Using this, %\Cref{thm:norm_control}, 
\Cref{lem:generalalphanormbound}, the fact that $\Cuniv\geq 1$ and the assumptions on parameters, we have that
\[
\norm{L - L_{\leq l}} \leq \sum_{s=l+1}^{\truncation}
%\sum_{s=\lceil{\frac{\truncation}{3}}\rceil}^{\truncation} 
{(\truncation)^{3s}} \cu^{s} (3\truncation)^{3\Cuniv s} n^{2\dsos - \frac{\eps}{8}s} \leq n^{2\dsos - \frac{\eps}{16}l}.\qedhere
\]
\end{proof}

\subsubsection{Bounds on the contributions from intersection and product configurations}
\begin{lemma}[Combinatorial factor $N(\mathcal{P})$ in  configurations]\label{lem:NP}
For every intersection or product configuration $\mathcal{P}$,
\begin{equation}\label{eq:bound_NP}
N(\mathcal{P})\leq (2\totalsize(\mathcal{P}))^{2\totalsize(\mathcal{P})}.
\end{equation}
\end{lemma}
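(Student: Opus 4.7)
The plan is to prove this as a straightforward counting bound on the data in Definition \ref{def:configcoeff}. Write $s := \totalsize(\mathcal{P}) = |V(\mathcal{P})| + w(E(\mathcal{P}))$, and fix a ribbon $R$ of shape $\tau_{\mathcal{P}}$. I would bound $N(\mathcal{P})$ by independently counting the number of ways to specify: (a) the ribbon tuple $(R_{-j},\ldots,R_j)$, and (b) the multi-edge linearization map on $R' = \prod_i R_i$.

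For (a): specifying the tuple amounts to specifying, for each vertex $v \in V(\mathcal{P})$ and for each unit of edge weight in $E(\mathcal{P})$ (that is, each copy of each label-$l$ edge counted $l$ times), where it lands inside $R'$ (and thus inside $R$). The total number of such ``atoms'' in $\mathcal{P}$ is $|V(\mathcal{P})| + w(E(\mathcal{P})) = s$, and for each atom the number of possible images in $R'$ is bounded by $|V(R')| + w(E(R')) \leq |V(\mathcal{P})| + w(E(\mathcal{P})) = s$ (using that composition does not create new vertices or edge weight beyond what is present in $\mathcal{P}$). Thus the number of tuples consistent with $\mathcal{P}$ yielding $R$ is at most $s^{s}$.

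For (b): given the tuple, the composition $R'$ has at most $|E(\mathcal{P})| \leq w(E(\mathcal{P})) \leq s$ multi-edges. For each multi-edge with labels summing to $\ell$, the choice of which Hermite term in the linearization is taken is one of at most $\ell + 1 \leq w(E(\mathcal{P})) + 1 \leq 2s$ options. Multiplying over multi-edges gives at most $(2s)^{s}$ choices.

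Combining the two bounds, $N(\mathcal{P}) \leq s^{s} \cdot (2s)^{s} = 2^{s} s^{2s} \leq (2s)^{2s} = (2\totalsize(\mathcal{P}))^{2\totalsize(\mathcal{P})}$, as desired. The argument is entirely routine, so there is no real obstacle; the only mild care is to verify that ``each atom has at most $s$ images'' does not double-count relative to the linearization step, which is why the two counts are done on separate objects ($V(\mathcal{P})$ and edge atoms for (a); multi-edges of $R'$ for (b)).
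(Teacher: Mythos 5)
Your proposal is correct and takes essentially the same approach as the paper: both bound $N(\mathcal{P})$ by counting a map from the vertices of $V(\mathcal{P})$ into $V(R)$ (roughly $s^s$ choices, with the edges then forced) and multiplying by a bound on the number of linearization choices per multi-edge of $R'$. Your treatment of the linearization step (directly counting the $\le \ell+1 \le 2s$ possible Hermite terms per multi-edge) is arguably a cleaner reading of what $N(\mathcal{P})$ counts than the paper's appeal to the coefficient-magnitude bound of \Cref{prop:expansion}, but it yields the same $(2s)^{2s}$ estimate.
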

\begin{proof}
Fix a ribbon $R$ whose shape is the shape resulting from $\mathcal{P}$ ($\tau_{\mathcal{P}}$ if $\mathcal{P}$ is an intersection configuration and $\alpha_{\mathcal{P}}$ if $\mathcal{P}$ is a product configuration). To bound $N(\mathcal{P})$, we only need to specify:
\begin{itemize}
    \item A set of isolated vertices in $V(\mathcal{P})$. This costs at most $|V(\mathcal{P})|$ many bits. 
    \item For each non-isolated vertex $v\in V(\mathcal{P})$, which vertex in $V(R)$ it goes to. This costs at most $|V(\mathcal{P})| \cdot \log(|V(R)|)$ many bits. Notice that this induces a complete specification for each edge $e\in E(\mathcal{P})$ as of where it goes in $E(R)$.
    \item The coefficient of $R$ in the linearization process of the improper shape resulting from $\mathcal{P}$. By \Cref{prop:expansion}, this can be bounded by $(w\left(E(\mathcal{P})\right) \cdot |V(\mathcal{P})|)^{w\left(E(\mathcal{P})\right)}$ in absolute value, where the bound on the exponent is from the fact that there are at most that many steps of expanding multi-edges. 
\end{itemize}
Together, we have $|N(\mathcal{P})|\leq 2^{|V(\mathcal{P})|} |V(R)|^{V(\mathcal{P})} \left( w\left( E(\mathcal{P})\right) \cdot |V(\mathcal{P})|\right)^{w\left(E(\mathcal{P})\right)} \leq (2\totalsize(\mathcal{P}))^{2\totalsize(\mathcal{P})}$.
\end{proof}

\begin{lemma}\label{lem:intersectionconfigurationcoefficientbound}
Assume $\truncation\geq 2k\log n$. For every intersection configuration $\mathcal{P}$ such that $\totalsize(\mathcal{P}) \leq 3\truncation$, 
\begin{equation}\label{eq:intersectionconfigurationcoefficientbound}
\abs{\eta_{\mathcal{P}}} N(\mathcal{P})
\frac{\norm{M_{\tau_{\mathcal{P}}}}
}{\Anorm(\resultP)} \leq \cu^{\totalsize(\mathcal{P})}(6\truncation)^{5\Cuniv\totalsize(\mathcal{P})}.
\end{equation}
\end{lemma}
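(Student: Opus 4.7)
\emph{Proof plan.} The plan is to bound each of the three factors $|\eta_{\mathcal{P}}|$, $N(\mathcal{P})$, and $\|M_{\tau_{\mathcal{P}}}\|/\Anorm(\resultP)$ separately and then multiply. Throughout, write $s:=\totalsize(\mathcal{P})$ for brevity. The first two are routine; the third requires a small amount of care in relating the shape $\tau_{\mathcal{P}}$ to the shape $\resultP$, since these may differ in edge labels (the configuration specifies a label $l_e$ for each multi-edge, while $\resultP$ uses the minimum label appearing in the Hermite expansion).

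For $|\eta_{\mathcal{P}}|$, recall from \Cref{def:configcoeff} that it is the product of Hermite coefficients $\eta_\alpha$ over shapes $\alpha\in\mathcal{P}$, times the linearization coefficients for each multi-edge in $E(\tau_{\mathcal{P}_j})$. For each $\alpha\in\mathcal{P}$, discarding the factorial denominators (which are $\geq 1$), one has $|\eta_\alpha| \leq \prod_{v \in V_\circ(\alpha)} |\E_A[h_{\deg_\alpha(v)}(x)]|$; since $\sum_{v \in V_\circ(\alpha)} \deg_\alpha(v) = w(E(\alpha))$ and each individual degree is at most $3\truncation$, the definition of $\cu$ yields $|\eta_\alpha| \leq \cu^{w(E(\alpha))}$. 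Multiplying over all shapes gives $\cu^{w(E(\mathcal{P}))} \leq \cu^{s}$. The linearization coefficients are bounded by \Cref{lem:expansionbounds}, and their product is at most $(2w(E(\mathcal{P})))^{w(E(\mathcal{P}))} \leq (2s)^s$. For $N(\mathcal{P})$, invoke \Cref{lem:NP} to get $N(\mathcal{P}) \leq (2s)^{2s}$.

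For the third factor, the key observation is that $\tau_\mathcal{P}$ and $\resultP$ share the same vertex set and edge structure, differing only in a choice of edge label per multi-edge class; in particular $V(\tau_\mathcal{P})=V(\resultP)$, the minimum weight square separator has the same underlying vertex set, and $\Iso(\tau_\mathcal{P})\supseteq\Iso(\resultP)$ (since minimal labeling can only produce \emph{more} vanishing edges, hence more isolated vertices). A direct inspection of \Cref{def:Anorm} then gives $\Anorm(\tau_\mathcal{P}) \leq \Anorm(\resultP)$. Applying \Cref{thm:norm_control}—or, more precisely, the multi-edge extension \Cref{cor:expansionbounds}—to $\tau_\mathcal{P}$, we get $\|M_{\tau_\mathcal{P}}\| \leq P\cdot\Anorm(\tau_\mathcal{P})$ where the prefactor $P$ is bounded exactly as in the proof of \Cref{lem:generalalphanormbound}: since $s\leq 3\truncation$ and $\truncation\geq 2k\log n$, it satisfies $P \leq (3\truncation)^{3\Cuniv s}$. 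Combining with the previous display, $\|M_{\tau_\mathcal{P}}\|/\Anorm(\resultP) \leq (3\truncation)^{3\Cuniv s}$.

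Multiplying the three bounds gives
\[
|\eta_{\mathcal{P}}| N(\mathcal{P}) \frac{\|M_{\tau_\mathcal{P}}\|}{\Anorm(\resultP)} \leq \cu^s (2s)^{3s} (3\truncation)^{3\Cuniv s} \leq \cu^s (6\truncation)^{5\Cuniv s},
\]
using $2s\leq 6\truncation$ and $\Cuniv\geq 1$. The only non-mechanical step is the third: one needs to confirm the comparison $\Anorm(\tau_\mathcal{P})\leq\Anorm(\resultP)$ via careful bookkeeping of how label-$0$ edges turn into isolated vertices. Once this is done, everything else reduces to previously proved estimates.
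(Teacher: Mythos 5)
The paper proves this by exactly the same three-factor split you describe, so the overall plan is sound. However, there are two issues worth flagging.

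First, the comparison $\Anorm(\tau_{\mathcal{P}})\leq \Anorm(\resultP)$ is asserted from premises that, taken as written, actually imply the \emph{opposite} inequality. You state $\Iso(\tau_{\mathcal{P}})\supseteq\Iso(\resultP)$, while your parenthetical explanation (``minimal labeling can only produce more vanishing edges, hence more isolated vertices'') correctly argues that $\resultP$ acquires \emph{at least as many} isolated vertices as $\tau_{\mathcal{P}}$ --- i.e.\ $\Iso(\tau_{\mathcal{P}})\subseteq\Iso(\resultP)$. With the inclusion reversed as you wrote it, and with your separate claim that the separator is ``the same,'' a direct inspection of \Cref{def:Anorm} would give $\Anorm(\tau_{\mathcal{P}})\geq\Anorm(\resultP)$. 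You also need to justify the separator term: the minimum \emph{weight} separator (not square separator --- that is a different notion used elsewhere) need not be the same set in the two shapes, but since the non-vanishing edges of $\resultP$ form a subset of those of $\tau_{\mathcal{P}}$, one has $w(S_{\resultP})\leq w(S_{\tau_{\mathcal{P}}})$. Both corrected observations push $\Anorm$ the right way, so your conclusion is true, but the written argument does not support it.

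Second, there is a quantitative mismatch with the paper's constant. The paper bounds the product (norm ratio)$\times$(linearization coefficients) together by $(3\truncation)^{3\Cuniv s}$ using \Cref{cor:expansionbounds} applied to the improper shape $\tau_{\mathcal{P}_j}$ (so the coefficients are absorbed into the norm prefactor), and then multiplies by the $N(\mathcal{P})\leq(2s)^{2s}$ bound, closing the arithmetic with $3\Cuniv+2\leq 5\Cuniv$, i.e.\ $\Cuniv\geq 1$. By estimating the linearization coefficients separately as $(2s)^s$ \emph{and} still charging $(2s)^{2s}$ to $N(\mathcal{P})$ (whose proof in \Cref{lem:NP} already incorporates a linearization-coefficient factor) and $(3\truncation)^{3\Cuniv s}$ to the norm ratio, you end up needing $3\Cuniv+3\leq 5\Cuniv$, i.e.\ $\Cuniv\geq 3/2$. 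This does not contradict ``$\Cuniv$ chosen sufficiently large,'' but it does mean your write-up does not close under the paper's stated $\Cuniv\geq 1$ without a remark. The cleanest fix is to follow the paper and let \Cref{cor:expansionbounds} absorb the coefficients, rather than counting them twice.
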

\begin{proof}
By definition \ref{def:configcoeff},
\begin{equation}\label{eq:eta_tau_P}
\eta_{\mathcal{P}} = \left(\prod_{i=1}^{j}{\eta_{\gamma_i}}\right)\eta_{\tau}\left(\prod_{i=1}^{j}{\eta_{{\gamma'}^{\top}_i}}\right)\prod_{\text{multi-edges } e \in E(\tau_{\mathcal{P}})}{(\text{coefficient of } l_e \text{ when linearizing } e)}.
\end{equation}
We now make the following observations:
\begin{enumerate}
    \item We have that $\left(\prod_{i=1}^{j}{\eta_{\gamma_i}}\right)\eta_{\tau}\left(\prod_{i=1}^{j}{\eta_{{\gamma'}^{\top}_i}}\right) \leq \cu^{\totalsize(\mathcal{P})}$. 
    \item To handle the factor $\prod_{\text{multi-edges } e \in E(\tau_{\mathcal{P}})}{(\text{coefficient of } l_e \text{ when linearizing } e)}$ in \eqref{eq:eta_tau_P}, we use \Cref{cor:expansionbounds}, which in particular implies that this factor can be absorbed into the norm bound of \Cref{thm:norm_control}. Following similar reasoning as in \Cref{lem:generalalphanormbound},
    \begin{align*}
    \frac{\norm{M_{\tau_{\mathcal{P}}}}}{\Anorm(\resultP)}\prod_{\text{multi-edges } e \in E(\tau_{\mathcal{P}})}{\abs{(\text{coefficient of } l_e \text{ when linearizing } e)}} &\leq (2k\cdot\totalsize(\alpha)^2\log n)^{\Cuniv\totalsize(\alpha)}\\ 
    &\leq (3\truncation)^{3\Cuniv\totalsize(\mathcal{P})}.
    \end{align*}
    \item By \Cref{lem:NP}, $N(\mathcal{P}) \leq (2\totalsize(\mathcal{P}))^{2\totalsize(\mathcal{P})}$.
\end{enumerate}
Multiplying the three bounds, we get \eqref{eq:intersectionconfigurationcoefficientbound}. 
\end{proof}

We have a similar bound for product configurations.

\begin{lemma}\label{lem:productconfigurationnormadjustment}
Assume $\truncation\geq 2k\log n$. For every product configuration $\mathcal{P}$ such that $\totalsize(\mathcal{P}) \leq 2\truncation$, 
\begin{equation}\label{eq:productconfigurationnormadjustment}
\abs{\eta_{\mathcal{P}}} N(\mathcal{P})
\frac{\norm{M_{\alpha_{\mathcal{P}}}}}{\Anorm(\resultP)} \leq (4\truncation)^{5\Cuniv\totalsize(\mathcal{P})}.
\end{equation}
\end{lemma}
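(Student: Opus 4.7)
The plan is to follow the template of \Cref{lem:intersectionconfigurationcoefficientbound} essentially verbatim, with the observation that a product configuration's Hermite coefficient $\eta_{\mathcal{P}}$ is \emph{structurally simpler} than that of an intersection configuration: by \Cref{def:SSD-product-config-coeff}, $\eta_{\mathcal{P}}$ collects only the coefficients produced by linearizing the multi-edges in $E(\alpha_{\mathcal{P}})$, with no $\prod\E_A[h_{\deg}]$-factors from the input SSD shapes $\alpha_1,\alpha_2$. Consequently there is no $\cu^{\totalsize(\mathcal{P})}$ term in the final bound, which is why the stated estimate replaces the $\cu^{\totalsize(\mathcal{P})}(6\truncation)^{5\Cuniv\totalsize(\mathcal{P})}$ of the intersection case by $(4\truncation)^{5\Cuniv\totalsize(\mathcal{P})}$.

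The proof will split $|\eta_{\mathcal{P}}| N(\mathcal{P}) \norm{M_{\alpha_{\mathcal{P}}}}/\Anorm(\alpha_{\mathcal{P}})$ into three factors and bound each one. First I would apply \Cref{cor:expansionbounds} to absorb the linearization coefficients $\prod_e|(\text{coef. of }l_e)|$ into the norm-bound factor: concretely, the product of these coefficients times $\norm{M_{\alpha_{\mathcal{P}}}}/\Anorm(\alpha_{\mathcal{P}})$ is controlled by the same $[(1+w(E(\alpha)))|V_\alpha|\log(mn)]^{\Cuniv(\cdot)}$ expression used in \Cref{lem:generalalphanormbound}, which under the hypotheses $\truncation\geq 2k\log n$ and $\totalsize(\mathcal{P})\leq 2\truncation$ is bounded by $(3\truncation)^{3\Cuniv\totalsize(\mathcal{P})}$. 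Second, \Cref{lem:NP} gives $N(\mathcal{P})\leq (2\totalsize(\mathcal{P}))^{2\totalsize(\mathcal{P})}\leq (4\truncation)^{2\totalsize(\mathcal{P})}$. Multiplying these two factors and using $\Cuniv\geq 1$ consolidates everything into $(4\truncation)^{5\Cuniv\totalsize(\mathcal{P})}$, the claimed bound.

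No step is really the obstacle here: the essential work has already been done in \Cref{lem:generalalphanormbound}, \Cref{cor:expansionbounds}, \Cref{prop:expansion}, and \Cref{lem:NP}. The only subtlety is the bookkeeping verification that $\eta_{\mathcal{P}}$ in the product setting contains no $\E_A[h_{\deg}]$ contributions — this is where I would be most careful, so that the bound is stated correctly \emph{without} a $\cu^{\totalsize(\mathcal{P})}$ factor. I would also double-check the parameter regime: the hypothesis $\totalsize(\mathcal{P})\leq 2\truncation$ (rather than $3\truncation$) is mild but needs to be tracked through the chain of inequalities to ensure every $(3\truncation)$ and $(4\truncation)$ base is valid. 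Putting the pieces together linearly then yields \eqref{eq:productconfigurationnormadjustment}.
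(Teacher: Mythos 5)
Your proposal is correct and mirrors the paper's own proof: both observe that for product configurations, $\eta_{\mathcal{P}}$ consists only of the linearization coefficients (no $\E_A[h_{\deg}]$ factors from the input SSDs, hence no $\cu^{\totalsize(\mathcal{P})}$), absorb those coefficients into the norm-bound factor via \Cref{cor:expansionbounds} to obtain $(3\truncation)^{3\Cuniv\totalsize(\mathcal{P})}$, and combine with the $N(\mathcal{P})\le(2\totalsize(\mathcal{P}))^{2\totalsize(\mathcal{P})}$ bound from \Cref{lem:NP} under the hypothesis $\totalsize(\mathcal{P})\le 2\truncation$. No gaps.
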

\begin{proof}
Following the same reasoning we used to prove \Cref{lem:intersectionconfigurationcoefficientbound}, we have: 
\begin{equation}
\begin{aligned}
\abs{\eta_{\mathcal{P}}}
\frac{\norm{M_{\alpha_{\mathcal{P}}}}}{\Anorm(\resultP)} &= \frac{\norm{M_{\alpha_{\mathcal{P}}}}}{\Anorm(\resultP)}\prod_{\text{multi-edges } e \in E(\alpha_{\mathcal{P}})}{\abs{(\text{coefficient of } l_e \text{ when linearizing } e)}} \\
&\leq (3\truncation)^{3\Cuniv\totalsize(\mathcal{P})}  
\end{aligned}
\end{equation}
and
\begin{equation}
N(\mathcal{P}) \leq (2\totalsize(\mathcal{P}))^{2\totalsize(\mathcal{P})}.
\end{equation}
Multiplying these two bounds, we get \eqref{eq:productconfigurationnormadjustment}.
\end{proof}

\subsection{Bounding Error Terms}\label{sec:error_terms}

We now bound all the error terms. 

\begin{lemma}[Truncation error in canonical product]\label{lem:Mcan}
Assume $\truncation\geq \max\{\frac{100}{\eps}\dsos,\ 2k\log n\}$ and $\cu (3\truncation)^{6\Cuniv} \leq \frac{1}{3} n^{\frac{\eps}{8}}$. Then
\begin{equation}\label{eq:err_in_can}
\norm{M - [L,Q_0,L^{\top}]_{\can}}\leq {n}^{-\frac{\eps}{50}\truncation}.
\end{equation}
\end{lemma}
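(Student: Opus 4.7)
\textbf{Proof plan for \Cref{lem:Mcan}.}

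My plan is to first characterize precisely which shapes contribute to the difference $M - [L,Q_0,L^{\top}]_{\can}$, and then bound their combined contribution using the tools already established in \Cref{sec:usefulbounds}. Recall from \Cref{cor:graph_matrix_expansion} that $M$ is a sum of $\eta_\alpha \lambda_\alpha M_\alpha$ over proper shapes with even total square-degree and $\totalsize(\alpha)\le \truncation$, while $[L,Q_0,L^{\top}]_{\can}$ is the sum of the same terms ranging over triples $(\sigma,\tau,\sigma'^{\top})$ in $L\times Q_0\times L^{\top}$ such that each component has total size $\le\truncation$. Since the square canonical decomposition of \Cref{def:canonicaldecomposition} is a bijection between proper shapes $\alpha$ and such triples, and since $\totalsize(\alpha_l),\totalsize(\alpha_m),\totalsize(\alpha_r) \le \totalsize(\alpha)$, every shape counted in $M$ is also counted in $[L,Q_0,L^{\top}]_{\can}$. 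Therefore
\[
M - [L,Q_0,L^{\top}]_{\can} \;=\; -\sum_{\alpha \in \mathcal{E}} \eta_{\alpha}\lambda_{\alpha}M_{\alpha},
\]
where $\mathcal{E}$ is the set of proper shapes with even total square-degree whose canonical components all satisfy $\totalsize \le \truncation$ but $\totalsize(\alpha) > \truncation$. In particular every $\alpha \in \mathcal{E}$ satisfies $\truncation < \totalsize(\alpha) \le 3\truncation$, and $|U_\alpha|,|V_\alpha|\le \dsos$.

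Next I apply the triangle inequality and stratify by $s = \totalsize(\alpha)$:
\[
\norm{M - [L,Q_0,L^{\top}]_{\can}} \;\le\; \sum_{s=\truncation+1}^{3\truncation}\;\sum_{\substack{\alpha:\ \totalsize(\alpha)=s,\\\alpha\text{ proper, etc.}}} \norm{\eta_\alpha\lambda_\alpha M_\alpha}.
\]
For each fixed $s$, I use two ingredients. First, \Cref{lem:generalalphanormbound} (which applies since $\totalsize(\alpha)\le 3\truncation$ and $\truncation\ge 2k\log n$) gives $\norm{\eta_\alpha\lambda_\alpha M_\alpha} \le \cu^{s}(3\truncation)^{3\Cuniv s}\,n^{2\dsos - \eps s/8}$. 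Second, \Cref{cor:simplerconfigbound} (which applies since $\truncation \ge 100\dsos$, consequence of $\truncation\ge 100\dsos/\eps$) bounds the number of proper shapes of total size $s$ by $\truncation^{3s}$.

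Finally, I combine these bounds into a geometric series. Since $\Cuniv \ge 1$, the product of the shape count and the coefficient factor satisfies
\[
\truncation^{3s}\cu^{s}(3\truncation)^{3\Cuniv s} \;\le\; \bigl(\cu(3\truncation)^{6\Cuniv}\bigr)^{s} \;\le\; \bigl(\tfrac{1}{3}\,n^{\eps/8}\bigr)^{s},
\]
by the hypothesis on $\cu (3\truncation)^{6\Cuniv}$. Substituting this into the stratified bound collapses the $n^{\eps s/8}$ factor against $n^{-\eps s/8}$, so each level contributes at most $3^{-s}\,n^{2\dsos}$, and the geometric tail from $s=\truncation+1$ upwards is dominated by its leading term. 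Using $\truncation \ge 100\dsos/\eps$ (so that $2\dsos \le \eps\truncation/50$) to absorb the $n^{2\dsos}$ overhead into $n^{\eps\truncation/50}$, I arrive at $\norm{M - [L,Q_0,L^{\top}]_{\can}} \le n^{-\eps\truncation/50}$.

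The main conceptual obstacle in this lemma is not in the analytic estimate but rather in the first paragraph: establishing the clean characterization of the difference as coming only from oversized compositions whose parts are each small. This relies on the uniqueness of the square canonical decomposition from \Cref{def:canonicaldecomposition} together with the monotonicity $\totalsize(\alpha_l),\totalsize(\alpha_m),\totalsize(\alpha_r)\le\totalsize(\alpha)$; once this is in hand, the rest is routine norm bookkeeping using the infrastructure already developed.
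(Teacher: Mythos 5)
Your proof follows the paper's approach exactly: characterize the truncation error $M-[L,Q_0,L^{\top}]_{\can}$ as the sum over oversized proper shapes whose canonical components remain below the cutoff (hence $\truncation<\totalsize(\alpha)\le 3\truncation$), then sum the per-shape norm bound from \Cref{lem:generalalphanormbound} against the shape count from \Cref{cor:simplerconfigbound}. Your first paragraph spells out the bijection behind this characterization (including the multiplicativity of $\eta_\alpha\lambda_\alpha$ along the square canonical decomposition) more explicitly than the paper's one-line assertion, which is a nice touch but not a different route.

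One point worth flagging, inherited from the paper's own terse writeup: the very last step does not actually close from the stated hypotheses. After the geometric collapse you are left bounding $\tfrac12\cdot 3^{-\truncation}n^{2\dsos}$ by $n^{-\eps\truncation/50}$. The condition $\truncation\ge 100\dsos/\eps$ gives $n^{2\dsos}\le n^{\eps\truncation/50}$, but you then still need $3^{-\truncation}\lesssim n^{-2\eps\truncation/50}$, which forces $\eps\log n\lesssim 25\log 3$ and is not guaranteed (take $\eps$ a fixed constant and $n$ arbitrarily large). The trouble is that the hypothesis $\cu(3\truncation)^{6\Cuniv}\le\tfrac13 n^{\eps/8}$ spends the entire $n^{-\eps s/8}$ decay and leaves only the powerless $3^{-s}$ to compete with $n^{2\dsos}$. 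The robust version of the argument, as executed in \Cref{cor:roughlargeLnormbound}, assumes $\cu(3\truncation)^{6\Cuniv}\le n^{\eps/c}$ for some $c>8$ (e.g.\ $c=16$ or $20$), so that each level retains a residual $n^{-(1/8-1/c)\eps s}$ that swamps the $n^{2\dsos}$ overhead once $s>\truncation$. In the intended application this extra slack is in fact available, since \eqref{eq:main_condition} implies $\cu(3\truncation)^{6\Cuniv}<n^{\eps/30}$, so the lemma is invoked safely downstream even though its stated hypothesis here is marginally too weak. You should either strengthen the hypothesis along the lines of \Cref{cor:roughlargeLnormbound} or carry a surviving power of $n^{-\eps s}$ through the estimate rather than cashing it all in for a $3^{-s}$.
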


\begin{proof} 
By definition, $M$ is $[L,Q_0,L^{\top}]_{\can}$ minus the terms $\eta_{\alpha}\lambda_{\alpha}M_{\alpha}$ where $\totalsize(\alpha) > \truncation$. Note that $\totalsize(\alpha) \leq 3\truncation$ for all such terms. By \Cref{lem:generalalphanormbound}, for each such $\alpha$,
\[
\norm{\eta_{\alpha} \lambda_{\alpha} M_\alpha} \leq \cu^{\totalsize(\alpha)}(3\truncation)^{3\Cuniv\totalsize(\alpha)}n^{2\dsos - \frac{\eps}{8}\totalsize(\alpha)}.
\] 
By \Cref{cor:simplerconfigbound}, for all $s \leq 3\truncation$, there are at most $(\truncation)^{3s}$ many proper shapes having total size $s$. Combining these bounds and using the assumptions on the parameters, similarly to the proof of \Cref{cor:roughlargeLnormbound}, we have: 
\[
\norm{M - [L,Q_0,L^{\top}]_{\can}} \leq \sum_{s = \truncation + 1}^{3\truncation}{\cu^{s}(3\truncation)^{3\Cuniv s + 3s}n^{2\dsos - \frac{\eps}{8}s}} \leq n^{-\frac{\eps}{50}\truncation}.  
\qedhere
\]
\end{proof}

\begin{lemma}[Truncation error in factorization]\label{lem:LQLT_truncation}
Assume $\truncation \geq \max\{50{\dsos}^2, \frac{500}{\eps}\dsos, 2k\log n\}$ and $\cu (6\truncation)^{8\Cuniv} \leq \frac{1}{2} n^{\frac{\eps}{24}}$. Then
\begin{equation}\label{eq:lem:LQLT_truncation}
\norm{[L,Q_0,L^{\top}]_{\can} - LQL^{\top}} \leq 2n^{-\frac{\eps}{100}\truncation}.
\end{equation}
\end{lemma}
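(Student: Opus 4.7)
Starting from equation \eqref{eq:truncationerror}, the plan is to bound the operator norm configuration-by-configuration. For each $\mathcal{P}$ appearing in the sum, the triangle inequality gives
\[
\Norm{L_{\leq l_{\mathcal{P}}}M_{\tau_{\mathcal{P}}}L^{\top}_{\leq r_{\mathcal{P}}} - L M_{\tau_{\mathcal{P}}} L^{\top}} \leq \Norm{L - L_{\leq l_{\mathcal{P}}}}\norm{M_{\tau_\mathcal{P}}}\norm{L^{\top}_{\leq r_{\mathcal{P}}}} + \norm{L}\norm{M_{\tau_\mathcal{P}}}\Norm{L^{\top} - L^{\top}_{\leq r_{\mathcal{P}}}},
\]
and applying \Cref{cor:roughLnormbound} for $\norm{L}, \norm{L_{\leq r}} \leq n^{2\dsos}$ and \Cref{cor:roughlargeLnormbound} for $\norm{L - L_{\leq l}} \leq n^{2\dsos - \eps l/16}$ bounds the right-hand side by $2n^{4\dsos - \frac{\eps}{16}\min(l_\mathcal{P}, r_\mathcal{P})}\norm{M_{\tau_\mathcal{P}}}$. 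Combining this with \Cref{lem:intersectionconfigurationcoefficientbound}, which controls $|\eta_\mathcal{P}|N(\mathcal{P})\norm{M_{\tau_\mathcal{P}}}/\Anorm(\tau_\mathcal{P})$ by $\cu^{\totalsize(\mathcal{P})}(6\truncation)^{5\Cuniv\totalsize(\mathcal{P})}$, and with \Cref{thm:erroranalysis} controlling $\lambda_\mathcal{P}\Anorm(\tau_\mathcal{P}) \leq n^{-\slack(\tau_\mathcal{P})}$, the contribution of a single configuration of total size $s$ becomes
\[
2\cdot \cu^s(6\truncation)^{5\Cuniv s}\cdot n^{-\slack(\tau_\mathcal{P})}\cdot n^{4\dsos - \frac{\eps}{16}\min(l_\mathcal{P}, r_\mathcal{P})},
\]
where the key structural observation is that $l_\mathcal{P} = \truncation - \sum_i(\totalsize(\gamma_i)-|U_{\gamma_i}|) \geq \truncation - s$ (and similarly for $r_\mathcal{P}$), so the truncation factor contributes $n^{-\eps(\truncation-s)/16}$ worth of decay.

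Next I would sum over all configurations. By \Cref{cor:simplerconfigbound} there are at most $\truncation^{3s}$ configurations of total size $s$, and the parameter hypothesis $\cu(6\truncation)^{8\Cuniv}\leq \frac{1}{2}n^{\eps/24}$ together with $5\Cuniv+3\leq 8\Cuniv$ (valid since $\Cuniv\geq 1$) yields $\truncation^{3s}\cu^s(6\truncation)^{5\Cuniv s} \leq \bigl(\cu(6\truncation)^{8\Cuniv}\bigr)^s \leq (1/2)^s n^{\eps s/24}$, which absorbs counting and coefficient growth into a single factor. Splitting on well-behavedness, for well-behaved $\mathcal{P}$ one has $\slack=0$ but by \Cref{rmk:wellbehaved_no_truncation_err} and $\truncation\geq 50\dsos^2$ the size satisfies $s\leq 20\dsos^2 \leq \truncation/2$, so $\min(l_\mathcal{P},r_\mathcal{P})\geq \truncation/2$ and the truncation factor alone gives ample decay. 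For non-well-behaved $\mathcal{P}$, \Cref{lem:non-well-behaved} gives $\slack(\tau_\mathcal{P}) \geq \frac{\eps}{12}(s - 12\dsos^2)$; combining all the factors of $s$, the per-$s$ summand takes the form
\[
2(1/2)^s\, n^{\eps s(-1/12 + 1/16 + 1/24) + \eps\dsos^2 + 4\dsos - \eps\truncation/16} = 2(1/2)^s\, n^{\eps s/48 + \eps\dsos^2 + 4\dsos - \eps\truncation/16}.
\]
With $n^{\eps/48}/2 > 1$ for $n$ large, the geometric-type series in $s\leq \truncation$ is dominated by its $s=\truncation$ endpoint, yielding a bound of order $2^{-\truncation}n^{-\eps\truncation/24 + \eps\dsos^2 + 4\dsos}$.

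The main obstacle is the delicate three-way interplay between the counting factor $\truncation^{3s}$, the coefficient factor $(\cu(6\truncation)^{5\Cuniv})^s$, and the competing decays from the slack ($\propto \eps s/12$) and the truncation factor ($\propto \eps(\truncation-s)/16$); neither source of decay alone beats the combined counting and coefficient growth, and even after combining them a residual $+\eps s/48$ remains in the exponent, so the argument must rely on the extra factor of $(1/2)^s$ extracted from the strengthened hypothesis $\cu(6\truncation)^{8\Cuniv}\leq \tfrac{1}{2}n^{\eps/24}$ to control the geometric series in the regime $s \approx \truncation$. Finally, under the parameter hypotheses $\truncation\geq \max\{50\dsos^2,\, 500\dsos/\eps,\, 2k\log n\}$, the polynomial corrections $n^{\eps\dsos^2}$ and $n^{4\dsos}$ are each a small fraction of $n^{\eps\truncation/24}$ (since $\eps\dsos^2 \leq \eps\truncation/50$ and $4\dsos \leq \eps\truncation/125$), and together with an analogous---easier---estimate for the well-behaved case, one concludes the stated bound $\norm{[L,Q_0,L^\top]_{\can} - LQL^\top}\leq 2n^{-\eps\truncation/100}$.
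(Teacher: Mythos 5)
Your overall strategy matches the paper's: start from \eqref{eq:truncationerror}, apply the triangle inequality to $L_{\leq l_{\mathcal{P}}}M_{\tau_{\mathcal{P}}}L^{\top}_{\leq r_{\mathcal{P}}} - LM_{\tau_{\mathcal{P}}}L^{\top}$, use \Cref{lem:intersectionconfigurationcoefficientbound}, \Cref{thm:erroranalysis}, \Cref{cor:roughLnormbound}, \Cref{cor:roughlargeLnormbound}, \Cref{cor:simplerconfigbound}, and observation $l_{\mathcal{P}} \geq \truncation - s$, then sum over $s$. However, there is a genuine gap in the exponent bookkeeping.

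You keep the full-strength truncation decay $n^{-\frac{\eps}{16}l_{\mathcal{P}}}$ from \Cref{cor:roughlargeLnormbound}, leading to a per-$s$ exponent of $\tfrac{\eps s}{48} + \eps\dsos^2 + 4\dsos - \tfrac{\eps\truncation}{16}$ after absorbing counting and coefficient growth via the $(1/2)^s n^{\eps s/24}$ device. The residual $+\tfrac{\eps s}{48}$ is a problem because the sum actually runs over $s$ up to roughly $3\truncation$ (each of $\tau$, the $\gamma_i$-chain, and the $\gamma'^\top_i$-chain can contribute up to $\truncation$ in total size), not $s \leq \truncation$ as you assert. At $s = 3\truncation$ the residual exactly cancels $-\tfrac{\eps\truncation}{16}$, leaving exponent $\eps\dsos^2 + 4\dsos \geq 0$, and the only remaining suppression is the $(1/2)^{3\truncation}$ factor. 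But this cannot overcome $n^{\eps\dsos^2 + 4\dsos + \frac{\eps}{100}\truncation}$ under the stated hypotheses: $\truncation \geq 2k\log n$ only gives $\log n \leq \truncation/4$, so one would need $\eps\dsos^2 + 4\dsos + \tfrac{\eps\truncation}{100} = O(1)$, which fails for large $\dsos$ or $\truncation$. The paper's proof sidesteps this exactly by deliberately \emph{weakening} the truncation bound from $n^{-\frac{\eps}{16}l_{\mathcal{P}}}$ to $n^{-\frac{\eps}{24}l_{\mathcal{P}}}$ (note the line ``$\leq n^{2\dsos - \frac{\eps}{16}l_{\mathcal{P}}} \leq n^{2\dsos - \frac{\eps}{24}l_{\mathcal{P}}}$'' in the proof). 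With $\tfrac{1}{24}$, the $s$-coefficients in the exponent sum to $\tfrac{\eps s}{24} + \tfrac{\eps s}{24} - \tfrac{\eps s}{12} = 0$, so the exponent becomes $-\tfrac{\eps\truncation}{24} + \eps\dsos^2$ \emph{independent of $s$}, and the $(1/2)^s$ factor is used only to make the geometric sum $\sum_s (1/2)^s$ converge, not to fight a growing exponent. Your argument as written would need this weakening (or some other device to handle $s$ near $3\truncation$) to close.
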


\begin{proof}
Recall that
\begin{align*}
&[L,Q_0,L^{\top}]_{\can} - LQL^{\top} \\
&= \sum_{j=1}^{2D}{(-1)^{j}
\sum_{\mathcal{P} \in \intsetoflength{j}:\textrm{ $l_{\mathcal{P} } \geq |U_{\resultP}|$, $r_{\mathcal{P} } \geq |V_{\resultP}|$}}
{\eta_{\mathcal{P}}{\lambda_{\mathcal{P}}}N(\mathcal{P})\left(L_{\leq l_{\mathcal{P} }}M_{\resultP}L^{\top}_{\leq r_{\mathcal{P} }} - {L}M_{\resultP}L^{\top}\right)}}
\end{align*} 
where 
\[
L_{\leq l_{\mathcal{P} }}M_{\resultP}L^{\top}_{\leq r_{\mathcal{P} }} - {L}M_{\resultP}L^{\top} = (L_{\leq l_{\mathcal{P} }} - L)M_{\resultP}L^{\top}_{\leq r_{\mathcal{P} }} + LM_{\resultP}(L^{\top}_{\leq r_{\mathcal{P} }} - L^{\top}).
\]
For these terms, we have that
\begin{enumerate}
    \item By \Cref{lem:intersectionconfigurationcoefficientbound} and \Cref{thm:erroranalysis}, noting that if $\totalsize(\mathcal{P}) \geq 12\dsos^2$ then $\mathcal{P}$ cannot be well-behaved,
    \begin{equation}\label{eq:eta-lambda-N-M}
    \begin{aligned}
    \norm{\eta_{\mathcal{P}}{\lambda_{\mathcal{P}}}N(\mathcal{P})M_{\resultP}} &= \left(|\eta_{\mathcal{P}}|N(\mathcal{P})\frac{\norm{M_{\resultP}}}{\Anorm(\resultP)}\right)\lambda_{\mathcal{P}}\Anorm(\resultP) \\
    &\leq \cu^{\totalsize(\mathcal{P})}(6\truncation)^{5\Cuniv\totalsize(\mathcal{P})}n^{\frac{\eps}{12}\cdot\min\{0,\ -\left(\totalsize(\mathcal{P})-12\dsos^2\right)\}}.
    \end{aligned}
    \end{equation}
    Moreover, by \Cref{lem:configcount}, for all $s \in [3\truncation]$, there are at most $(\truncation)^{3s}$ intersection configurations $\mathcal{P}$ of total size $s$.
    \item By \Cref{cor:roughLnormbound}, $\norm{L} \leq n^{2\dsos}$ and $\norm{L^{\top}_{\leq r_{\mathcal{P} }}} \leq n^{2\dsos}$.
    \item By \Cref{cor:roughlargeLnormbound}, $\norm{L_{\leq l_{\mathcal{P} }} - L} \leq n^{2\dsos - \frac{\eps}{16}l_{\mathcal{P}}} \leq n^{2\dsos - \frac{\eps}{24}l_{\mathcal{P}}}$ and $\norm{L^{\top}_{\leq r_{\mathcal{P} }} - L^{\top}} \leq n^{2\dsos - \frac{\eps}{16}r_{\mathcal{P}}} \leq n^{2\dsos - \frac{\eps}{24}r_{\mathcal{P}}}$.

    \item By the definitions of $l_{\mathcal{P}}$ and $r_{\mathcal{P}}$, $l_{\mathcal{P}} + \totalsize(\mathcal{P}) \geq \truncation$ and $r_{\mathcal{P}} + \totalsize(\mathcal{P}) \geq \truncation$.
\end{enumerate}
Combining these bounds, 
\begin{align*}
[L,Q_0,L^{\top}]_{\can} - LQL^{\top} 
&\leq 2n^{4\dsos}\sum_{s=1}^{3\truncation}{(\truncation)^{3s}}\cu^{s}(6\truncation)^{5{\Cuniv}s}n^{-\frac{\eps}{24}(\truncation - s) - \frac{\eps}{12}(s- 12\dsos^2)} \\
&\leq 2n^{4\dsos + {\eps}{\dsos}^2 - \frac{\eps}{24}\truncation} \leq 2n^{-\frac{\eps}{100}\truncation}.\qedhere
\end{align*}
\end{proof}

We now analyze the errors in our approximation of $Q$. For this, the following bound is useful.

\begin{lemma}\label{lemma:reductioneffect}
Assume $8\truncation^2\leq n$. Given $\gamma_j,\ldots,\gamma_1,\tau,{\gamma'}^{\top}_1,\ldots,{\gamma'}^{\top}_j$ and a well-behaved $\mathcal{P} \in \intset$ such that $\totalsize(\mathcal{P}) \leq 3\truncation$, 
\[
\norm{{\lambda_{\mathcal{P}}}M_{\tau_{\mathcal{P}}} - \lambda_{\mathcal{P},\reduced}\frac{M_{\tau_{\mathcal{P},\reduced}}}{|\Iso(\tau_{\mathcal{P}})|!}} \leq \frac{10\truncation^2}{n}\norm{{\lambda_{\mathcal{P}}}M_{\tau_{\mathcal{P}}}},
\]
where $\Iso(\tau_{\mathcal{P}})$ is the number of isolated vertices in $\tau_{\mathcal{P}}$. Similarly, for all good SSDs $\alpha_1$ and $\alpha_2$ which having both left and right indices at most $\dsos$, and all product configurations $\mathcal{P} \in \mathcal{P}_{\alpha_1,\alpha_2}$,
\[
\norm{{\lambda_{\mathcal{P}}}M_{\alpha_{\mathcal{P}}} - \lambda_{\mathcal{P},\reduced}\frac{M_{\alpha_{\mathcal{P},\reduced}}}{|\Iso(\alpha_{\mathcal{P}})|!}} \leq \frac{10\truncation^2}{n}\norm{{\lambda_{\mathcal{P}}}M_{\alpha_{\mathcal{P}}}}.
\]
\end{lemma}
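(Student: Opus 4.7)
My plan is to leverage the exact identity already recorded in \Cref{prop:reduction}, which expresses $\lambda_{\mathcal{P}} M_{\tau_{\mathcal{P}}}$ as a scalar multiple of $\lambda_{\mathcal{P},\reduced} M_{\tau_{\mathcal{P},\reduced}}/|\Iso(\tau_{\mathcal{P}})|!$. Setting $s := |\Iso(\tau_{\mathcal{P}})|$ and $V := |V_{\square}(\tau_{\mathcal{P},\reduced})|$, that proposition gives
\[
\lambda_{\mathcal{P}} M_{\tau_{\mathcal{P}}} \;=\; C \cdot \lambda_{\mathcal{P},\reduced}\frac{M_{\tau_{\mathcal{P},\reduced}}}{s!}, \qquad\text{where}\qquad C \;=\; \prod_{i=0}^{s-1}\left(1-\frac{V+i}{n}\right).
\]
Consequently, the difference $\lambda_{\mathcal{P}}M_{\tau_{\mathcal{P}}} - \lambda_{\mathcal{P},\reduced}M_{\tau_{\mathcal{P},\reduced}}/s!$ equals $(1-1/C)\,\lambda_{\mathcal{P}}M_{\tau_{\mathcal{P}}}$, so the entire lemma reduces to the scalar estimate $|1-1/C|=|C-1|/C \leq 10\truncation^2/n$.

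Next I would control $|1-C|$ by the union-bound style inequality $1-\prod_i(1-x_i) \leq \sum_i x_i$ for $x_i\in[0,1]$, which is an easy induction. This yields
\[
|1-C| \;\leq\; \sum_{i=0}^{s-1}\frac{V+i}{n} \;=\; \frac{sV + s(s-1)/2}{n}.
\]
To turn this into a $\truncation^2/n$ bound, I would use that $s + V \leq |V_{\square}(\tau_{\mathcal{P}})| \leq |V(\tau_{\mathcal{P}})| \leq \totalsize(\tau_{\mathcal{P}}) \leq \totalsize(\mathcal{P}) \leq 3\truncation$, and observe that under this constraint the quadratic $sV + s(s-1)/2$ is maximized at approximately $(s+V)^2/2$, giving $|1-C| \leq (3\truncation)^2/(2n) = 4.5\truncation^2/n$. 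The hypothesis $8\truncation^2 \leq n$ then forces $|1-C|\leq 9/16$, so in particular $C \geq 7/16$, and combining, $|C-1|/C \leq (4.5\truncation^2/n)/(7/16) \leq 10\truncation^2/n$.

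Finally, I would handle the product-configuration case by the same argument. The analogue of \Cref{prop:reduction} for product configurations (which one verifies by the same counting: a well-behaved $\mathcal{P}$ contributes a shape consisting of $\alpha_{\mathcal{P},\reduced}$ together with $|\Iso(\alpha_{\mathcal{P}})|$ free isolated square vertices chosen out of $n - |V_{\square}(\alpha_{\mathcal{P},\reduced})|$) yields the identity $\lambda_{\mathcal{P}} M_{\alpha_{\mathcal{P}}} = C' \cdot \lambda_{\mathcal{P},\reduced} M_{\alpha_{\mathcal{P},\reduced}}/|\Iso(\alpha_{\mathcal{P}})|!$ with $C'$ of exactly the same form. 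The constraint $\totalsize(\mathcal{P}) \leq 3\truncation$ is replaced by a similar constraint that still controls $s+V$ by a multiple of $\truncation$ (we only need that $s+V$ is bounded by the number of square vertices of the result, itself bounded by $\totalsize(\mathcal{P})$, which from $|U_{\alpha_i}|,|V_{\alpha_i}|\leq\dsos$ and the totalsize hypothesis is $\lesssim \truncation$). The identical calculation then gives the same $10\truncation^2/n$ bound.

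The main obstacle here is purely bookkeeping: ensuring that the constants line up so that the bound $|C-1|/C \leq 10\truncation^2/n$ is actually obtained under the clean hypothesis $8\truncation^2 \leq n$. Conceptually, nothing subtle is happening beyond recognizing that the reduction from $\tau_{\mathcal{P}}$ to $\tau_{\mathcal{P},\reduced}$ loses only a scalar factor $C$ of the form $\prod(1 - O(\truncation)/n)$ over $O(\truncation)$ terms, so the multiplicative error is $O(\truncation^2/n)$.
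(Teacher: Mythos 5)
Your proposal is correct and takes essentially the same route as the paper: both invoke \Cref{prop:reduction} to reduce the matrix inequality to a scalar estimate on $C = \prod_{i=0}^{s-1}\bigl(1 - (V+i)/n\bigr)$, bound $1-C$ by the union-bound inequality, and then convert this to a bound on $C^{-1}-1$. One small note: the final constant check is a hair looser than you state. Your bound $|1-C|\le 4.5\truncation^2/n$ together with $C\ge 7/16$ gives $|1-1/C|\le (72/7)\truncation^2/n\approx 10.29\,\truncation^2/n$, which overshoots $10\truncation^2/n$ slightly; the paper writes $|1-C|\le 4\truncation^2/n$ at the corresponding step (which lands at $\le 8\truncation^2/n$ cleanly), but as you correctly computed, the tight value of the maximum of $sV+s(s-1)/2$ under $s+V\le 3\truncation$ is $4.5\truncation^2$ rather than $4\truncation^2$. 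In practice this is immaterial because for well-behaved $\mathcal{P}$ all shapes are good SSDs, making $\totalsize(\mathcal{P})$ much smaller than $3\truncation$, so both you and the paper are being generous; but if one insists on the literal hypothesis $\totalsize(\mathcal{P})\le 3\truncation$ and $8\truncation^2\le n$, the clean constant $10$ requires either tightening $8\truncation^2\le n$ or replacing $10$ by, say, $11$. This is a cosmetic bookkeeping matter and not a conceptual gap.
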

\begin{proof}
We only prove the first statement as the second statement can be proved in the same way. By \Cref{prop:reduction}, given $\gamma_j,\ldots,\gamma_1,\tau,{\gamma'}^{\top}_1,\ldots,{\gamma'}^{\top}_j$ and a well-behaved $\mathcal{P} \in \intset$, letting $t := |\Iso(\tau_{\mathcal{P}})|$, 
\[
{\lambda_{\mathcal{P}}}M_{\tau_{\mathcal{P}}} = {\lambda_{\mathcal{P},\reduced}}\left(\frac{\Prod_{i=0}^{t-1}{(n - |V_{\square}(\tau_{\mathcal{P},\reduced})| - i)}}{n^{t}}\right)\frac{M_{\tau_{\mathcal{P},\reduced}}}{t!}
\]
which implies 
\[
\lambda_{\mathcal{P},\reduced}\frac{M_{\tau_{\mathcal{P},\reduced}}}{|\Iso(\tau_{\mathcal{P}})|!} - {\lambda_{\mathcal{P}}}M_{\tau_{\mathcal{P}}} = \left(\frac{n^{t}}{\Prod_{i=0}^{t-1}{(n - |V_{\square}(\tau_{\mathcal{P},\reduced})| - i)}} - 1\right){\lambda_{\mathcal{P}}}M_{\tau_{\mathcal{P}}}.
\]
Since $|V_{\square}(\tau_{\mathcal{P},\reduced})| + t \leq \totalsize(\mathcal{P}) \leq 3\truncation$, it holds $\Prod_{i=0}^{t-1}(\frac{n-|V_{\square}(\tau_{\mathcal{P},\reduced})| - i}{n})\geq 1-\frac{4\truncation^2}{n}$ and so
\[
\frac{n^{t}}{\Prod_{i=0}^{t-1}{n - |V_{\square}(\tau_{\mathcal{P},\reduced})| - i}} - 1 
\leq (1-\frac{4\truncation^2}{n})^{-1} -1 \leq \frac{10\truncation^2}{n}.\qedhere
\]
\end{proof}

\begin{lemma}[Approximation of $Q$ by well-behaved part]\label{lem:error_QSSD}
Assume $\truncation \geq \max\{100\dsos,\ 2k\log n\}$ and  
\begin{equation}\label{eq:Qwellbehaved_condition}
\cu^{18\dsos^2} (10\truncation)^{200 \Cuniv \dsos^2} < n^{\frac{\eps}{30}}.
\end{equation}
Then 
\begin{equation}\label{eq:err_in_wbp}
\norm{Q - [Q]_{\wellbehaved}} < n^{-\frac{\eps}{20}}.
\end{equation}
\end{lemma}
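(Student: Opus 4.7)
The plan is to decompose $Q - [Q]_{\wellbehaved}$ into two explicit error pieces and bound each one using the tools developed earlier in this section. From \eqref{eq:Qj}--\eqref{eq:Q} and \Cref{def:Qwell-behaved} (interpreting $[Q]_{\wellbehaved}$ so that it includes the $j=0$ well-behaved piece $\SSD{Q_0}$, consistent with the characterization in \Cref{cor:Q_wellbehaved_characterization}), the difference equals $E_1 + E_2$, where $E_1$ is the expression \eqref{eq:nonwell-behavedissmall} running over non-well-behaved intersection configurations and $E_2$ is the reduction error \eqref{eq:differencewithQwellbehavedissmall} coming from well-behaved configurations. Both are signed sums, and I apply the triangle inequality term-by-term.

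For $\norm{E_1}$, I combine \Cref{lem:intersectionconfigurationcoefficientbound}, which bounds the coefficient factors per configuration by $\cu^{s}(6\truncation)^{5\Cuniv s}$ with $s := \totalsize(\mathcal{P}) \leq 3\truncation$, together with the non-well-behaved case of \Cref{thm:erroranalysis}, which yields $\lambda_{\mathcal{P}}\Anorm(\tau_{\mathcal{P}}) \leq n^{-\max\{\eps/8,\,\eps(s-12\dsos^2)/12\}}$. After counting configurations of total size $s$ by \Cref{cor:simplerconfigbound} (giving at most $\truncation^{3s}$ of them), I split the resulting sum into the ranges $s \leq 12\dsos^2$ and $s > 12\dsos^2$. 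In the first range the hypothesis $\cu^{18\dsos^2}(10\truncation)^{200\Cuniv\dsos^2} < n^{\eps/30}$ directly absorbs the combinatorial prefactor $\truncation^{3s}\cu^s(6\truncation)^{5\Cuniv s}$ (since the corresponding exponents are all dominated by $18\dsos^2$ and $200\Cuniv\dsos^2$), leaving a contribution of size $\leq n^{\eps/30-\eps/8}$. In the second range I show that the per-step growth factor $\truncation^{3}\cu(6\truncation)^{5\Cuniv}$ is at most $n^{\eps/(540\dsos^2)}$ (obtained by extracting the $18\dsos^2$-th root of the hypothesis), which for every $\dsos \geq 1$ is strictly smaller than $n^{\eps/12}$ thanks to $540\dsos^2 \geq 540 > 12$. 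Hence the tail is geometric with ratio at most $1/2$ for $n$ large, dominated by its leading term at $s = 12\dsos^2 + 1$, itself of size at most $n^{\eps/30 - \eps/12} = n^{-\eps/20}$ by the hypothesis.

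For $\norm{E_2}$, \Cref{lemma:reductioneffect} replaces each well-behaved $\mathcal{P}$-term by one carrying an extra factor of $10\truncation^{2}/n$. By \Cref{rmk:wellbehaved_no_truncation_err}, every well-behaved configuration satisfies $\totalsize(\mathcal{P}) \leq 12\dsos^2$, and by the well-behaved case of \Cref{thm:erroranalysis} it satisfies $\lambda_{\mathcal{P}}\Anorm(\tau_{\mathcal{P}}) = 1$. Combining this with \Cref{lem:intersectionconfigurationcoefficientbound} and \Cref{cor:simplerconfigbound}, the total aggregated well-behaved norm is at most $\cu^{18\dsos^2}(10\truncation)^{200\Cuniv\dsos^2} < n^{\eps/30}$. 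Hence $\norm{E_2} \leq (10\truncation^{2}/n)\cdot n^{\eps/30}$, which is vastly smaller than $n^{-\eps/20}$ because the hypothesis also forces $\truncation$ to be at most a very small power of $n$.

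The main obstacle I anticipate is the geometric-tail bound inside $\norm{E_1}$ for $s > 12\dsos^2$, where one must show uniformly in $\dsos$ that the combinatorial explosion $\truncation^{3s}\cu^s(6\truncation)^{5\Cuniv s}$ is defeated by the slack factor $n^{-\eps s/12}$. The crucial observation is that the $200\Cuniv\dsos^2$ exponent in the hypothesis, after distributing over the $18\dsos^2$-fold root, still leaves enough budget---by a factor of $n^{\eps(1/12 - 1/540)}$---to make the per-step ratio strictly less than one. Beyond this, the remaining estimates are routine bookkeeping enabled by the dichotomy of \Cref{thm:erroranalysis}; minor slack in constants can be absorbed by refining the degree-of-size inequalities between $s$, $\dsos^2$, and $\Cuniv$.
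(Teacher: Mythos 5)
Your proposal is essentially correct and follows the same approach as the paper: the same two-piece decomposition (non-well-behaved configurations plus the reduction error from well-behaved ones), the same invocations of \Cref{lem:intersectionconfigurationcoefficientbound}, the dichotomy in \Cref{thm:erroranalysis}, the counting bound \Cref{cor:simplerconfigbound}, and \Cref{lemma:reductioneffect}, and the same split of the non-well-behaved sum at $s = 12\dsos^2$. The only (harmless) discrepancies are that you describe the tail $s > 12\dsos^2$ as a cleanly dominated geometric series whereas the paper bounds it more crudely by twice the first term, and you state the well-behaved total size bound as $12\dsos^2$ where \Cref{rmk:wellbehaved_no_truncation_err} actually gives $(2\dsos+1)\cdot 6\dsos \le 18\dsos^2$---a slip that does not change the conclusion since the hypothesis budgets $18\dsos^2$ anyway.
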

\begin{proof}
Recall that 
\begin{align}\label{eq:Q_and_QSSD}
Q &= Q_0 + \sum_{j=1}^{2D}(-1)^{j}
\sum_{\mathcal{P} \in \intsetoflength{j}:\text{ $l_{\mathcal{P} } \geq |U_{\resultP}|$, $r_{\mathcal{P} } \geq |V_{\resultP}|$}}{\eta_{\mathcal{P}}{\lambda_{\mathcal{P}}}M_{\resultP}}\\
[Q]_{\wellbehaved} &= Q_0 + \sum_{j=1}^{2D}(-1)^{j}\sum_{\substack{\mathcal{P} \in \intsetoflength{j}:\\\text{$\mathcal{P}$ is well-behaved, $l_{\mathcal{P} } \geq |U_{\resultP}|$, $r_{\mathcal{P} } \geq |V_{\resultP}|$}}}{\eta_{\mathcal{P}}{\lambda_{\mathcal{P},\reduced}}N(\mathcal{P})\frac{M_{\tau_{\mathcal{P},\reduced}}}{|\Iso(\resultP)|!}}
\end{align}
Then $\norm{Q-[Q]_{\wellbehaved}}\leq A + B$ where 
\begin{align}
A&:=\Big\Vert{\Sum_{j=1}^{2D}(-1)^{j}\Sum_{\substack{\mathcal{P} \in \intsetoflength{j}:\\ \text{$\mathcal{P}$ is not well-behaved,}\\ l_{\mathcal{P}} \geq |U_{\resultP}|,\ r_{\mathcal{P}} \geq |V_{\resultP}|}}{\eta_{\mathcal{P}}{\lambda_{\mathcal{P}}}N(\mathcal{P})M_{\resultP}}}\Big\Vert,\label{eq:wellbehaved,A}\\ 
B&:=\Big\Vert{\Sum_{j=1}^{2D}(-1)^{j}\Sum_{\substack{\mathcal{P} \in \intsetoflength{j}:\\ \text{$\mathcal{P}$ is well-behaved,}\\ l_{\mathcal{P} } \geq |U_{\resultP}|,\ r_{\mathcal{P} } \geq |V_{\resultP}|}}{\eta_{\mathcal{P}}N(\mathcal{P})\left({\lambda_{\mathcal{P}}}M_{\resultP} - {\lambda_{\mathcal{P},\reduced}}\frac{M_{\tau_{\mathcal{P},\reduced}}}{|\Iso(\resultP)|!}\right)}}\Big\Vert.
\label{eq:wellbehaved,B}
\end{align}
Below, we show that $A < \frac{1}{2} n^{-\frac{\eps}{20}}$ and $B < \frac{1}{2} n^{-\frac{\eps}{20}}$, and the lemma follows by adding the two.

To upper bound $A$, we observe that for each $\mathcal{P}$ that appears in $A$, denoting $s=\totalsize(\mathcal{P})$: 
\begin{enumerate}
    \item $s\leq (2\dsos+1)\truncation \leq 3\dsos\truncation$.
    \item $\abs{\eta_{\mathcal{P}}} N(\mathcal{P})\frac{\norm{M_{\tau_{\mathcal{P}}}}}{\Anorm(\resultP)} \leq {\cu^s} (6\truncation)^{5\Cuniv s}$ by \Cref{lem:intersectionconfigurationcoefficientbound}.
    \item $\Vert\lambda_{\mathcal P}\Anorm(\resultP)\Vert\leq n^{-\max\left\{\frac{\eps}{8},\ \frac{\eps}{12}(s-\dside^2)\right\}}$ by \Cref{thm:erroranalysis} as $\mathcal P$ is not well-behaved. 
\end{enumerate}
Let $s_0:=12\dsos^2$. Using the above observations and the fact that there are at most $s^{3s}$ intersection configurations with total size $s$ (\Cref{cor:simplerconfigbound}), we have:
\begin{equation}\label{eq:bound_A,sum}
\begin{aligned}
A&\leq\Sum_{\substack{\text{$\mathcal{P}$ appearing in \eqref{eq:wellbehaved,A}},\\ 
\totalsize(\mathcal{P})\leq s_0}}\norm{\eta_{\mathcal{P}} N(\mathcal{P}) \lambda_{\mathcal{P}} M_{\resultP}} + \Sum_{\substack{\text{$\mathcal{P}$ appearing in \eqref{eq:wellbehaved,A},}\\ \totalsize(\mathcal{P}) > s_0}} \norm{\eta_{\mathcal{P}} N(\mathcal{P}) \lambda_{\mathcal{P}} M_{\resultP}}\\
& \leq \Sum_{s=1}^{s_0} s^{3s} \cu^{s} (6\truncation)^{5\Cuniv s} n^{-\frac{\eps}{8}} + \Sum_{s=s_0+1}^{3\dsos\truncation} s^{3s} {\cu^s} (6\truncation)^{5\Cuniv s} n^{-\eps\cdot\frac{s-s_0}{12}}\\
&\leq 2{s_0}^{3s_0} \cu^{s_0} (6\truncation)^{5\Cuniv s_0} n^{-\frac{\eps}{8}} + 2\left((3{\dsos}\truncation)^{3} {\cu} (6\truncation)^{5\Cuniv} \right)^{s_0 + 1}n^{-\frac{\eps}{12}}\\
&<\frac{1}{2}n^{-\frac{\eps}{20}}
\end{aligned}
\end{equation}
where in the last step we used $\dsos<\frac{\truncation}{100}$ and $\cu^{18\dsos^2} (10\truncation)^{200 \Cuniv \dsos^2} < n^{\frac{\eps}{30}}$ by \eqref{eq:Qwellbehaved_condition}. 

To upper bound $B$, given $\mathcal{P}$ in \eqref{eq:wellbehaved,B}, denote again $s:=\totalsize(\mathcal{P})$. By \Cref{lemma:reductioneffect} and the three observations above (where we note that $\Anorm(\mathcal{P})=1$ by \Cref{thm:erroranalysis}), 
\begin{equation}\label{eq:bound_wellbehaved,individual}
\Norm{{\eta_{\mathcal{P}}N(\mathcal{P})\left({\lambda_{\mathcal{P}}}M_{\resultP} - {\lambda_{\mathcal{P},\reduced}}\frac{M_{\tau_{\mathcal{P},\reduced}}}{|\Iso(\resultP)|!}\right)}}\leq \frac{10\truncation^2}{n} \cu^s (6\truncation)^{5\Cuniv s}.
\end{equation}
Note that in every well-behaved intersection configuration $\mathcal{P}$ appearing in $B$, each SSD shape is good and so has total size at most $6\dsos$, so $\totalsize(\mathcal P)\leq(2\dsos+1)\cdot 6\dsos\leq 18\dsos^2$. Summing over all $\mathcal{P}$ in \eqref{eq:wellbehaved,B} and using \Cref{cor:simplerconfigbound}, we have:
\begin{equation}\label{eq:bound_B,sum}
B \leq \Big(\Sum_{s=1}^{18\dsos^2} s^{3s}\Big) \cu^{18\dsos^2} (10\truncation)^{92 \Cuniv \dsos^2}n^{-1} < \cu^{18\dsos^2}(10\truncation)^{200 \Cuniv \dsos^2}n^{-1} < \frac{1}{2}n^{-\frac{\eps}{20}}
\end{equation}
where we used $\Cuniv\geq 1$ and $\truncation\geq 100\dsos$ for the second to last step and \eqref{eq:Qwellbehaved_condition} for the last step.
\end{proof}

\begin{lemma}[Approximation of good SSD products]\label{lem:alphazeroerror}
Assume $\truncation\geq\max\{100\dsos,\ 2k\log n\}$. If $\alpha,\beta$ are linear combinations of (scaled) good SSD shapes having both left and right indices of size at most $\dsos$, and $\linfty{\alpha}\leq A$ and  $\linfty{\beta}\leq B$ for some $A,B>0$, then
\begin{equation}\label{eq:alphazeroerror,SSD}
\norm{\alpha \cdot \beta - \alpha\wbp\beta} \leq AB\truncation^{20 \Cuniv \dsos} n^{-\frac{\eps}{12}}. 
\end{equation}
Similarly, if $\alpha,\beta\in\SALD$ are good and $\linfty{\alpha}\leq A$ and  $\linfty{\beta}\leq B$, then
\begin{equation}\label{eq:alphazeroerror,SS}
\norm{\SS{(\alpha \cdot \beta)} - \alpha\star\beta} \leq AB\truncation^{20 \Cuniv \dsos} n^{-\frac{\eps}{12}}. 
\end{equation}
\end{lemma}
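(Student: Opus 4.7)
The plan is to expand both $\alpha\cdot\beta$ and $\alpha\wbp\beta$ as sums over product configurations and then bound the resulting difference termwise using the error machinery of this section. Since $\alpha$ and $\beta$ are linear combinations of (scaled) good SSD shapes with coefficients bounded by $A$ and $B$ respectively, and since the number of distinct good SSDs of side size at most $\dsos$ is at most $5^{2\dsos}$ by \Cref{claim:numberofSSD}, it suffices to bound $\norm{\alpha_1\cdot\alpha_2 - \alpha_1\wbp\alpha_2}$ for individual good SSD shapes $\alpha_1,\alpha_2$ and then pay an overall factor of $AB\cdot 5^{4\dsos}$. For such a pair, every product configuration $\mathcal{P}\in\mathcal{P}_{\alpha_1,\alpha_2}$ has $\totalsize(\mathcal{P})\leq\totalsize(\alpha_1)+\totalsize(\alpha_2)\leq 12\dsos$.

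Next I would split $\alpha_1\cdot\alpha_2 - \alpha_1\wbp\alpha_2$ into two pieces according to whether the product configuration is well-behaved or not:
\begin{align*}
\alpha_1\cdot\alpha_2-\alpha_1\wbp\alpha_2
= &\sum_{\mathcal{P}\in\mathcal{P}_{\alpha_1,\alpha_2}:\ \mathcal{P}\text{ not well-behaved}}\eta_{\mathcal{P}}\lambda_{\mathcal{P}}N(\mathcal{P})M_{\alpha_{\mathcal{P}}}\\
&+\sum_{\mathcal{P}\in\mathcal{P}_{\alpha_1,\alpha_2}:\ \mathcal{P}\text{ well-behaved}}\eta_{\mathcal{P}}N(\mathcal{P})\left(\lambda_{\mathcal{P}}M_{\alpha_{\mathcal{P}}}-\lambda_{\mathcal{P},\reduced}\frac{M_{\alpha_{\mathcal{P},\reduced}}}{|\Iso(\alpha_{\mathcal{P}})|!}\right).
\end{align*}
For the first sum, combining \Cref{lem:productconfigurationnormadjustment} with \Cref{thm:producterroranalysis}\eqref{item:productconfig_dichotomy_nonSSD} gives $\norm{\eta_{\mathcal{P}}\lambda_{\mathcal{P}}N(\mathcal{P})M_{\alpha_{\mathcal{P}}}}\leq (4\truncation)^{5\Cuniv\totalsize(\mathcal{P})}n^{-\eps/12}$. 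For the second sum, \Cref{lemma:reductioneffect} together with the well-behaved case of \Cref{thm:producterroranalysis} (which gives $\lambda_{\mathcal{P}}\Anorm(\alpha_{\mathcal{P}})=1$) yields a bound of $(4\truncation)^{5\Cuniv\totalsize(\mathcal{P})}\cdot\frac{10\truncation^2}{n}$ per term. Both are dominated by $(4\truncation)^{5\Cuniv\totalsize(\mathcal{P})}n^{-\eps/12}$ under the parameter assumptions. Using \Cref{cor:simplerconfigbound} to count at most $\truncation^{3s}$ product configurations of total size $s$ and summing $s$ from $1$ to $12\dsos$, I obtain
\[
\norm{\alpha_1\cdot\alpha_2-\alpha_1\wbp\alpha_2}\leq \sum_{s=1}^{12\dsos}\truncation^{3s}(4\truncation)^{5\Cuniv s}n^{-\eps/12}\leq \truncation^{15\Cuniv\dsos}n^{-\eps/12}.
\]
Multiplying by $AB\cdot 5^{4\dsos}$ and absorbing $5^{4\dsos}$ into $\truncation^{O(\Cuniv\dsos)}$ (using $\truncation\geq 100\dsos$) yields \eqref{eq:alphazeroerror,SSD}.

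For \eqref{eq:alphazeroerror,SS}, the key observation is that $\alpha\star\beta = \SS{(\alpha\wbp\beta)}$ by the sanity check following \Cref{def:wbp}, so $\SS{(\alpha\cdot\beta)}-\alpha\star\beta$ is exactly the simple spider part of $\alpha\cdot\beta-\alpha\wbp\beta$. Since every element of $\SALD$ is in particular an SSD combination (good SS implying good SSD), the argument above applies unchanged; restricting the sums to those product configurations whose reduced result is a simple spider yields a subset of the same bounds, giving \eqref{eq:alphazeroerror,SS} with the same constants. The main technical obstacle is only in carefully tracking constants so that the $\truncation^{O(\Cuniv\dsos)}$ factor does not grow beyond $\truncation^{20\Cuniv\dsos}$; this reduces to bookkeeping using the hypothesis $\truncation\geq\max\{100\dsos,\ 2k\log n\}$.
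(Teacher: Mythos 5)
Your proposal takes essentially the same approach as the paper's proof: expand $\alpha\cdot\beta-\alpha\wbp\beta$ as a sum over product configurations $\mathcal{P}\in\mathcal{P}_{\gamma_1,\gamma_2}$, split into the non-well-behaved part (bounded via \Cref{thm:producterroranalysis} together with \Cref{lem:productconfigurationnormadjustment}) and the reduction-error part on well-behaved configurations (bounded via \Cref{lemma:reductioneffect}), then count and sum. Your handling of \eqref{eq:alphazeroerror,SS}, by noting that $\alpha\star\beta=\SS{(\alpha\wbp\beta)}$ and that the SS-part of the error is a sub-collection of the same configuration terms, is also what the paper means by ``similar.''

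One arithmetic flag. Your intermediate claim $\sum_{s=1}^{12\dsos}\truncation^{3s}(4\truncation)^{5\Cuniv s}\leq\truncation^{15\Cuniv\dsos}$ does not hold as written: the $s=12\dsos$ summand alone has $\truncation$-exponent of order $36\dsos+60\Cuniv\dsos$, which dwarfs $15\Cuniv\dsos$ even for $\Cuniv=1$. The paper counts differently---at most $5^{4\dsos}$ pairs $(\gamma_1,\gamma_2)$ by \Cref{claim:numberofSSD} and at most $(2\dsos)^{6\dsos}$ configurations per fixed pair, rather than the much looser $\truncation^{3s}$ of \Cref{cor:simplerconfigbound} over all pairs---which avoids the $\truncation^{3s}$ blow-up. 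Either way the final exponent is $O(\Cuniv\dsos)$, which is all that downstream uses (e.g.\ \Cref{lem:gammanorm}, \Cref{lem:PDQSSD}) require, but you should redo the bookkeeping if the exact constant $20$ is to be kept.
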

\begin{proof}
We only prove \eqref{eq:alphazeroerror,SSD} as the proof of \eqref{eq:alphazeroerror,SS} is similar. Denote by $c_{\alpha}(\cdot)$, $c_\beta(\cdot)$ the coefficient of SSD shapes in $\alpha$, $\beta$ respectively. Then  
\begin{equation}\label{eq:approx_good_SSD}
\begin{aligned}
\alpha \cdot \beta - \alpha\wbp\beta &= \Sum_{\substack{\gamma_1,\gamma_2:\ \\ \gamma_1\in\alpha,\ \gamma_2\in\beta}} c_{\alpha}(\gamma_1)\cdot c_{\beta}(\gamma_2)\Sum_{\substack{\mathcal P\in\mathcal{P}_{\gamma_1,\gamma_2}:\ \\\text{$\mathcal P$ is not well-behaved}}} \eta_{\mathcal{P}}N(\mathcal{P})\lambda_{\mathcal{P}}M_{\resultP} \\
&+ \Sum_{\substack{\gamma_1,\gamma_2:\ \\ \gamma_1\in\alpha,\ \gamma_2\in\beta}} c_{\alpha}(\gamma_1)\cdot c_{\beta}(\gamma_2)\Sum_{\substack{\mathcal P\in\mathcal{P}_{\gamma_1,\gamma_2}:\ \\\text{$\mathcal P$ is well-behaved}}} {\left(\eta_{\mathcal{P}}N(\mathcal{P})\left({\lambda_{\mathcal{P}}}M_{\resultP} - {\lambda_{\mathcal{P},\reduced}}\frac{M_{\tau_{\mathcal{P},\reduced}}}{|\Iso(\resultP)|!}\right)\right)}.
\end{aligned}
\end{equation}
Since all $\gamma_1,\gamma_2$ are good, by \Cref{thm:producterroranalysis}, \Cref{lem:productconfigurationnormadjustment}, and \Cref{lemma:reductioneffect} we have that 
\begin{enumerate}
    \item For each $\mathcal{P}$ in the above sum which is not well-behaved, $\totalsize(\mathcal{P})\leq 2\truncation$ and 
    \[
    \norm{\eta_{\mathcal{P}}N(\mathcal{P})\lambda_{\mathcal{P}}M_{\resultP}} \leq (4\truncation)^{5\Cuniv\totalsize(\mathcal{P})} n^{-\frac{\eps}{12}}.
    \]
    \item For each $\mathcal{P}$ in the above sum which is well-behaved, $\totalsize(\mathcal{P})\leq 2\truncation$ and 
    \[
    \norm{\eta_{\mathcal{P}}N(\mathcal{P})\left({\lambda_{\mathcal{P}}}M_{\resultP} - {\lambda_{\mathcal{P},\reduced}}\frac{M_{\tau_{\mathcal{P},\reduced}}}{|\Iso(\resultP)|!}\right)} \leq \frac{10{\truncation}^2}{n}(4\truncation)^{5\Cuniv\totalsize(\mathcal{P})}.
    \]
\end{enumerate}
The number of pairs $(\gamma_1,\gamma_2)$ is at most $5^{4\dsos}$ as there can be at most $5^{2\dsos}$ many SSD shapes in $\alpha,\beta$ (\Cref{claim:numberofSSD}). By \Cref{cor:simplerconfigbound}, for each pair $(\gamma_1,\gamma_2)$, The size of the set $\mathcal{P}_{\gamma_1,\gamma_2}$ is at most $(2\dsos)^{6\dsos}$. Plugging these bounds and $|c_\alpha(\gamma_1)\cdot c_{\beta}(\gamma_2)|\leq AB$ into \eqref{eq:approx_good_SSD}, we have that 
\[
\norm{\alpha \cdot \beta - \alpha\wbp\beta} \leq 
2AB (5^{4\dsos}) (2\dsos)^{6\dsos} (4\truncation)^{10 \Cuniv \dsos} n^{-\frac{\eps}{12}} < AB\truncation^{20\Cuniv \dsos} n^{-\frac{\eps}{12}},
\]
where we used that $\Cuniv\geq 1$ and $\truncation\geq 100\dsos$. 
\end{proof}
        \section{Putting Things Together: Proof of Theorem~\ref{thm:main-formal}}
\label{sec:puttogether}

The last ingredient for proving \Cref{thm:main-formal} is that $LL^{\top}$ is not too close to being singular. 

\begin{lemma}\label{lem:LLT}
Suppose $\truncation\geq \log n$ and $C_U(3\dsos)^{6\Cuniv}<n^{\frac{\eps}{30}}$. Then 
\begin{equation}\label{eq:Lnorm}
LL^\top \succ (9\truncation^2n)^{-\dsos}\Id.
\end{equation}
\end{lemma}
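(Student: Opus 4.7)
The plan is to exploit the block lower triangular structure of $L$. When we group rows and columns by index size, letting $L_{k,j}$ denote the block indexed by $|I|=k$, $|J|=j$, we have $L_{k,j}=0$ whenever $j>k$. This is because every proper left shape $\sigma$ appearing in $L$ has $V_\sigma$ as its unique minimum square vertex separator, and since $U_\sigma$ is itself a square separator, $|V_\sigma|\leq|U_\sigma|$. Write $L=D+N$, with $D$ the block diagonal part (collecting the blocks $L_{k,k}$) and $N$ strictly block lower triangular.

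The first step would be to show that each diagonal block $L_{k,k}$ is close to the identity. For a left shape with $|U_\sigma|=|V_\sigma|=k$, uniqueness of the minimum square separator leaves no room for a distinct same-size candidate, forcing $U_\sigma=V_\sigma$ as vertex sets. Consequently every such shape's graph matrix is supported only on the ``diagonal'' entries $(I,I)$ of $L_{k,k}$: the trivial shape contributes $\Id$, while every non-trivial $U_\sigma=V_\sigma$ shape has at least one circle vertex of degree $\geq k$ (needed for $\eta_\sigma\neq 0$, since $A$ matches the first $k-1$ Hermite moments of $\cN(0,1)$). Each such shape therefore carries a scaling factor $\lambda_\sigma\leq n^{-k/2}$ that dominates the $m^{c/2}$ growth coming from \Cref{thm:norm_control} (here $c$ is the number of circle vertices, and $w(S_{\min\mathrm{wt}})=k$ because every square of $U_\sigma=V_\sigma$ must lie in any separator to block length-zero paths). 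Summing over such shapes using the count bound from \Cref{cor:simplerconfigbound} and the hypothesis $\cu(3\dsos)^{6\Cuniv}<n^{\eps/30}$ should yield $\|L_{k,k}-\Id\|\leq 1/2$, so $\sigma_{\min}(L_{k,k})\geq 1/2$.

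The second step is to bound the off-diagonal blocks by $\|L_{k,j}\|\leq (3\truncation\sqrt{n})^{k-j}$ for $k>j$. I would sum the per-shape bound $\|\eta_\sigma\lambda_\sigma M_\sigma\|\leq\poly(\truncation,\cu)\cdot n^{(k-j)/2}$ from \Cref{thm:norm_control}: for left shapes with $|U|=k,|V|=j$, the relevant exponent $w(V(\sigma))-w(S_{\min\mathrm{wt}})$ is at most $k-j$ because the ``extra'' vertices beyond the minimum weight separator are controlled by the drop in index size. Since $D^{-1}N$ is strictly block lower triangular with $\dsos+1$ levels, it is nilpotent and $L^{-1}=\sum_{i=0}^{\dsos}(-D^{-1}N)^iD^{-1}$; telescoping path-product estimates (each path contributes $\prod (3\truncation) n^{(i_l-i_{l-1})/2}=(3\truncation)^p n^{(k-j)/2}$) would give $\|L^{-1}\|\leq (3\truncation\sqrt{n})^{\dsos}$, so $\sigma_{\min}(L)\geq(3\truncation\sqrt{n})^{-\dsos}$ and squaring delivers $LL^\top\succ(9\truncation^2n)^{-\dsos}\Id$.

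The main technical hurdle will be the refined off-diagonal bound $\|L_{k,j}\|\leq O(\truncation\sqrt{n})^{k-j}$: the crude bound $\|L\|\leq n^{2\dsos}$ from \Cref{cor:roughLnormbound} is far too weak, and one must exploit both the precise form of the norm exponent in \Cref{thm:norm_control} (not merely the universal $n^{2\dsos}$ factor from $w(V_\sigma)\leq 2\dsos$) and the structural fact that for left shapes, each unit drop in index size contributes only a single factor of $\sqrt{n}$ per shape. Handling the parity labels correctly when defining the block $L_{k,k}$ also requires a minor book-keeping argument, but should not affect the quantitative estimates.
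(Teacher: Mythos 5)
Your approach---exploit the block lower triangular structure, bound per-block norms via \Cref{thm:norm_control} together with a per-shape edge-factor scheme, and invert via nilpotency---is essentially the paper's. One step you made harder than necessary: when $|U_\sigma|=|V_\sigma|$ for a left shape $\sigma$, you correctly deduce $U_\sigma=V_\sigma$, but the left-shape reachability requirement (every vertex in $V(\sigma)\setminus V_\sigma$ is reachable from $U_\sigma$ by a path avoiding $V_\sigma$) then forces $V(\sigma)=U_\sigma=V_\sigma$ and $E(\sigma)=\emptyset$, i.e.\ $\sigma$ is trivial; so $L_{k,k}=\Id$ \emph{exactly}, and the circle-vertex case you bound is vacuous. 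The paper exploits this directly: rescaling $L':=\diag(n^{-|I|/2})\,L\,\diag(n^{|I|/2})$ puts literal $\Id$ blocks on the diagonal, and the recursion $\norm{A_{i+1}^{-1}}< 2\truncation\norm{A_i^{-1}}+1$ then yields $\norm{(L')^{-1}}<(3\truncation)^{\dsos}$, hence $\norm{L^{-1}}<n^{\dsos/2}(3\truncation)^{\dsos}$; your $L=D+N$ path-product estimate gives the same qualitative conclusion but lands on a somewhat weaker constant than $(9\truncation^2 n)^{-\dsos}$ unless you also use the exact identity diagonal.
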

\begin{proof}
Define a scaled matrix $L':=\diag\left(n^{-|I|/2}\right)_{I\in\binom{[n]}{\leq D}}\cdot L \cdot \diag\left(n^{|I|/2}\right)_{I\in\binom{[n]}{\leq D}}$. By definition, 
$L'=
\begin{pmatrix}
L'_{0,0}  &          &    &   \\
L'_{1,0}  & L'_{1,1} &    & 0 \\
\vdots    &  \ddots  &    &   \\
          &  \dots   &    & L'_{D,D}
\end{pmatrix}$ 
which is a block-lower-triangular matrix where $L'_{i,j}$ is supported on $\binom{[n]}{i}\times\binom{[n]}{j}$. Since $L'$ contains only left shapes, $L_{i,i}=L'_{i,i}=\Id_{\binom{[n]}{i}}$ for all $i$. 

We first upper bound $\norm{L_{i,j}}$ and $\norm{L'_{i,j}}$ for all $0\leq j\leq i\leq \dsos$. For each shape $\alpha$ in $L_{i,j}$ of total size $s$, since $\alpha$ is left and proper, we can use the same edge-factor assignment as for middle shapes (\Cref{def:edgefactorredistribution}) and \Cref{thm:norm_control} to get: 
\begin{equation}\label{eq:left_shape_bound}
    \norm{\eta_{\alpha} \lambda_{\alpha} M_\alpha}\leq \cu^{s'}(3\truncation)^{\Cuniv\cdot s'} n^{\frac{(i-j)}{2}-\frac{\eps}{8}\cdot w(E(\alpha))},
\end{equation}
where we have used $\truncation\geq\log n$ and have taken $s':=\totalsize(\alpha)-|\{v \in U_\alpha\cap V_\alpha: \deg(v) = 0\}|$, which we call the {\it essential size} of $\alpha$. Since $\alpha$ has no degree $0$ vertices outside of $U_\alpha \cap V_\alpha$, it holds that $s'  \leq w(E(\alpha)) + 2w(E(\alpha)) = 3w(E(\alpha))$. This and \eqref{eq:left_shape_bound} imply that $\norm{\eta_{\alpha} \lambda_{\alpha} M_\alpha}\leq \cu^{s'} (3\truncation)^{3\Cuniv s'} n^{\frac{(i-j)}{2}-\frac{\eps}{24}s'}$. As there are at most $(s')^{3s'}$ shapes in $L$ having essential size $s'$ and total size $s$ (by the same proof of \Cref{lem:configcount}), we have: 
\begin{equation}\label{eq:bound_Lij}
\norm{L_{i,j}}\leq \Sum_{s=1}^{\truncation}\Sum_{s'=0}^s s^{3s'}\cu^{s'} (3\truncation)^{3\Cuniv s'} n^{\frac{i-j}{2}-\frac{\eps}{24}s'}<n^{\frac{i-j}{2}}\Sum_{s=1}^{\truncation} 2 = 2\truncation n^{\frac{i-j}{2}}, 
\end{equation}
where we used $\Cuniv\geq 1$ and $C_U(3\dsos)^{6\Cuniv}<n^{\frac{\eps}{30}}$. 
Recall that $L'_{i,j}= n^{-\frac{i-j}{2}}L_{i,j}$, so by \eqref{eq:bound_Lij}, 
\begin{equation}\label{eq:boudnd_L'ij}
    \norm{L'_{i,j}}\leq 2\truncation.
\end{equation}

Next, we upper bound $\norm{(L')^{-1}}$ and $\norm{L^{-1}}$. 
Denote $A_i:= L'\upharpoonright_{\text{blocks $\{(a,b)\mid a,b\leq i\}$}}$. 
Then $\norm{A_0^{-1}}=1$, $A_{i+1}\!=\!
\begin{pmatrix}
    A_i         &   0    \\
    L'_{i+1,i}  &   \Id  
\end{pmatrix}$,   
$A^{-1}_{i+1}\!=\!
\begin{pmatrix}
    A^{-1}_i             &   0    \\
    -L'_{i+1,i}A_i^{-1}  &   \Id  
\end{pmatrix}$, 
and by \eqref{eq:boudnd_L'ij} we have
\begin{equation}\label{eq:L'inv}
\norm{A^{-1}_{i+1}}\!\leq\! \norm{A^{-1}_i}\cdot (1+\norm{L'_{i+1,i}})+1\!< 2\truncation\norm{A^{-1}_i}+1.
\end{equation}
By \eqref{eq:L'inv} and induction, we have the bound $\norm{(L')^{-1}}=\norm{A_D^{-1}} < (3\truncation)^{\dsos}$. We now recall that $L^{-1}=\diag\left(n^{\frac{|I|}{2}}\right)_{I\in\binom{[n]}{\leq D}} (L')^{-1} \diag\left(n^{-\frac{|I|}{2}}\right)_{I\in\binom{[n]}{\leq D}}$, so $\norm{L^{-1}} \leq n^{\frac{\dsos}{2}}\cdot \norm{(L')^{-1}}\cdot 1 < n^{\frac{\dsos}{2}}(3\truncation)^{\dsos}$. This means $\sigma_{\min}(L)>n^{-\frac{\dsos}{2}}(3\truncation)^{-\dsos}$ and thus $LL^{\top} \succ (9\truncation^2n)^{-\dsos}\Id$.
\end{proof}

We can now prove \Cref{thm:main-formal}. For convenience, we restate \Cref{thm:main-formal} below. 

\mainformal*

\begin{proof} 
For any $\delta>0$, if $n$ is large enough then with probability $> 1-\delta$, the norm bounds in \Cref{thm:norm_control}, \Cref{cor:expansionbounds} hold for all relevant shapes. Below we assume this is the case. 

By assumption \eqref{eq:main_condition}, we have:
\begin{itemize}
    \item $M=LQL^{\top}+E$ where $\Vert E\Vert\leq 3n^{-\frac{\eps}{100}\truncation}$ by \Cref{lem:Mcan} and \Cref{lem:LQLT_truncation}. Since $\truncation\geq \frac{500}{\eps}\dsos$, it follows that $\norm{E} \leq 3n^{-5\dsos}$.

    \item $Q \succ \left( n^{-\frac{\eps}{30}} - n^{-\frac{\eps}{20}}\right)\Id \succ \frac{1}{2}n^{-\frac{\eps}{30}} \Id$ by \Cref{lem:PDQSSD} and \Cref{lem:error_QSSD}. Here, we note that the conditions required in these two lemmas follow from condition \eqref{eq:main_condition}.

    \item $LQL^{\top} \succ (9\truncation^2 n)^{-\dsos} \left(\frac{1}{2}n^{-\frac{\eps}{30}}\right)\Id \succ n^{-1.1\dsos}\Id$ by the item above and \Cref{lem:LLT}, where for the last step we use $9\truncation^2<n^{\frac{1}{300}}$ from \eqref{eq:main_condition}.

\end{itemize}
Therefore, $M \succeq LQL^{\top} - \norm{E}\Id \succ \left(n^{-1.1\dsos} - 3n^{-5\dsos}\right)\Id \succ n^{-2\dsos}\Id$ for all large $n$.
\end{proof}

        \section{Applications to Learning Mixture Models and Robust Statistics}
\label{sec:applications}

In this section, we show that \Cref{thm:main-formal} implies strong SoS lower bounds for a range of problems in learning theory and algorithmic robust statistics. The high-level idea is as follows. We will construct specific distributions $A$ in \Cref{def:ngca_distinguishing} such that the resulting NGCA instance is a valid instance of the problem being considered, and then we apply the lower bound \Cref{thm:main-formal}. 

For \Cref{thm:main-formal} to apply, we need to check that $A$ matches $(k-1)$ moments with that of a Gaussian (where $k$ depends on the application) and to bound the quantities $\cu$ and $\cl$. It will be simpler to bound the measures $U_A(t)$ and $L_A(t)$ for all $t$, where we recall that 
\begin{align}
U_A(t)&:=\max\limits_{i\leq t}\left\vert\E\limits_A\ [h_i(x)]\right\vert \label{cond:A<'},\\
L_A(t)&:=\min\limits_{\substack{p(x):\ \deg(p)\leq t\\
\text{and } \int_{\cN(0,1)}p^2(x)=1}}\E\limits_A\ [p^2(x)]. \label{cond:A>'}
\end{align}
\noindent For this reason, we will use the following corollary of \Cref{thm:main-formal}, which follows by a simple inspection of the definitions of $U_A(t),L_A(t)$ and $\cu,\cl$ (\Cref{def:cucl}).

\begin{corollary}[SoS lower bounds in terms of $U_A(t),L_A(t)$]\label{cor:main_UALA} 
For any $\delta\in(0,1)$, if $n$ is sufficiently large then the following holds. 
    
Suppose $\eps\in(0,1)$, $k\geq 2$, and $A$ is a 1-dimensional distribution (where $\eps$, $k$, and $A$ may all depend on $n$) such that: 
\begin{align}
&\text{$A$ matches the first $k-1$ moments with $\cN(0,1)$.}\label{eq:main_moment_match_app}\\ 
&U_A(t)\leq \poly(\log n, k, t)^t\text{ and }L_A(t)\geq \poly(\log n, k, t)^{-t}\text{ for all }t\geq 1.\label{eq:main_condition_app} 
\end{align}
Set $\dsos=o(\sqrt{\frac{\eps\log n}{\log\log n}})$ and let $\truncation$ be a suitably large polynomial in $k$ and $\log n$ such that $\truncation\geq\max\{50{\dsos}^2, \frac{500}{\eps}\dsos, 2k\log n\}$. Then, if we draw $m < n^{(1-\eps)k/2}$ many i.i.d. samples from $\cN(0,\Id_n)$, with probability greater than $1-\delta$, the degree-$\dsos$ pseudo-calibration moment matrix \eqref{eq:ME} with truncation threshold $\truncation$ is positive-definite. 
\end{corollary}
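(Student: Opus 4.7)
The plan is to derive the corollary as a straightforward quantitative translation of \Cref{thm:main-formal}: the hypotheses \eqref{eq:main_condition_app} give bounds on $U_A(t)$ and $L_A(t)$ that translate directly into bounds on $\cu$ and $\cl$, after which we verify that the technical condition \eqref{eq:main_condition} of the main theorem is satisfied for the stated choice of parameters.

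First I would bound $\cu$ and $\cl$. By \Cref{def:UALA} and \Cref{def:cucl}, we have $|\E_A[h_t]| \leq U_A(t) \leq \poly(\log n, k, t)^t$ for all $t \leq 3\truncation$, so $\cu$ is at most the maximum of $U_A(t)^{1/t}$ over $t \in [1, 3\truncation]$, which is $\poly(\log n, k, \truncation)$. Similarly, for any polynomial $p$ with $\deg(p) \leq \dsos$ and $\int_{\cN(0,1)} p^2 = 1$, the hypothesis gives $\E_A[p^2] \geq L_A(\deg(p)) \geq \poly(\log n, k, \deg(p))^{-\deg(p)}$, from which $\cl \leq \poly(\log n, k, \dsos)$. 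In particular, both $\log \cu$ and $\log \cl$ are $O(\log \log n + \log k + \log \truncation)$, which is $O(\log \log n)$ once $k \leq \polylog(n)$ and $\truncation$ is polynomial in $k$ and $\log n$.

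Next I would verify condition \eqref{eq:main_condition}. Since $\dsos = o(\sqrt{\eps \log n / \log \log n})$, $\truncation = \poly(k, \log n)$, and both $\log \cu$ and $\log \cl$ are $O(\log \log n)$ (under the mild convention $k \leq \polylog(n)$, or more generally absorbing $\log k$ into the polylog factor), we can estimate
\[
\log\!\bigl[(5\cu)^{20\dsos^2}\, \cl^{2\dsos}\, (10\truncation)^{256\Cuniv \dsos^2}\bigr]
\;=\; O\!\bigl(\dsos^2 \cdot \log\log n\bigr)
\;=\; o(\eps \log n),
\]
where we used $\dsos^2 \cdot \log\log n = o(\eps \log n)$ directly from the choice of $\dsos$. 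Hence for all sufficiently large $n$, $(5\cu)^{20\dsos^2} \cl^{2\dsos} (10\truncation)^{256\Cuniv \dsos^2} < n^{\eps/30}$, so \eqref{eq:main_condition} holds. The lower bound $\truncation \geq \max\{50\dsos^2, 500\dsos/\eps, 2k\log n\}$ is satisfied by choosing $\truncation$ to be a sufficiently large polynomial in $k$ and $\log n$, as permitted by the corollary statement. Applying \Cref{thm:main-formal} with the established bounds then yields the conclusion with probability at least $1-\delta$.

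There is no real obstacle: the entire argument is a routine bookkeeping of how the ``per-degree'' measures $U_A(t), L_A(t)$ control the ``base'' measures $\cu, \cl$, followed by checking that the slack between $n^{\eps/30}$ and the polylogarithmic blow-up survives the choice $\dsos = o(\sqrt{\eps \log n / \log \log n})$. The only point requiring a small amount of care is in the $\cu$ bound: because $\cu$ must control $|\E_A[h_t]|$ uniformly up to $t = 3\truncation$, one must take the maximum of $U_A(t)^{1/t}$ across this range rather than plugging in a single $t$, but the polynomial form of \eqref{eq:main_condition_app} makes this harmless.
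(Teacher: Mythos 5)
Your proof is correct and matches the paper's intended argument: the paper itself justifies \Cref{cor:main_UALA} only by saying it "follows by a simple inspection of the definitions of $U_A(t),L_A(t)$ and $\cu,\cl$," and your bookkeeping ($\cu \leq \max_{1\leq t\leq 3\truncation} U_A(t)^{1/t} = \poly(\log n, k)$, $\cl \leq \poly(\log n, k)$, hence $\dsos^2(\log\cu + \log\cl + \log\truncation) = O(\dsos^2\log\log n) = o(\eps\log n)$) is exactly that inspection. You are right to flag the $\cu$ subtlety and the implicit regime $k\leq\polylog(n)$; the latter is not spelled out in the corollary but is forced by the requirement $\truncation\geq 2k\log n$ together with $\truncation^{\dsos^2}<n^{\eps/30}$, and is consistent with the illustration given after \Cref{thm:main-formal}.
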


In \Cref{sec:applications_toolkit}, we state and prove lemmas helpful for bounding $U_A(t)$ and $L_A(t)$ for various distributions $A$. Since $U_A(0)=L_A(0)=1$, we will focus on $t\geq 1$. 
In \Cref{sec:concrete_apps}, we apply \Cref{cor:main_UALA} to different problems in learning theory and algorithmic robust statistics. 

In this section, we denote $\hn_i(x) := h_i(x)/\sqrt{i!}$ where $h_i(x)$ is the $i$th  Probabilist's Hermite polynomial. We call a function $f(x)$ {\bf unit-norm} if $\E_{x \sim \cN(0, 1)}[f^2(x)] = 1$. 

\subsection{Technical Toolkit} \label{sec:applications_toolkit}

In this section, we state some technical lemmas required to prove bounds on $U_A(t)$ and $L_A(t)$.

The first lemma is a generic lower bound on the $l_2$-norm of polynomials under $A$.

\begin{lemma}[Generic lower bound]
\label{fact:unit_norm_lower_bound_gen}
Let $A$ be a distribution with pdf $q(x)$, and let $g(x)$ denote the pdf of $\cN(0, 1)$. Let $t$ be any natural number. Suppose $\R$ can be partitioned into two measurable sets $I_1$ and $I_2$ such that: (1) $\int_{I_1} p^2(x) g(x) dx \leq 1-\delta$ for every degree $\leq t$ unit-norm polynomial $p$, and (2) $\frac{q(x)}{g(x)} \geq \eps$ for all $x\in I_2$.  
Then 
\[
\min\limits_{\substack{p(x):\ \deg(p)\leq t\\ \Vert p(x)\Vert_{\mathcal{N}(0,1),l_2}=1}}\ \E_{x\sim A}[p^2(x)] \geq \delta \eps.\]
\end{lemma}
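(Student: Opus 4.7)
The plan is to give a direct two-line calculation using the two hypotheses exactly as stated. Let $p(x)$ be any polynomial of degree at most $t$ with $\int p^2(x) g(x)\,dx = 1$. First I would lower bound $\E_{x\sim A}[p^2(x)]$ by restricting the integral to the subset $I_2$: since $p^2 \geq 0$,
\[
\E_{x\sim A}[p^2(x)] \;=\; \int_{\R} p^2(x)\,q(x)\,dx \;\geq\; \int_{I_2} p^2(x)\,q(x)\,dx.
\]
Next, I would use hypothesis (2), namely $q(x)/g(x) \geq \eps$ on $I_2$, to replace $q$ by $g$ on that region, giving
\[
\int_{I_2} p^2(x)\,q(x)\,dx \;\geq\; \eps\int_{I_2} p^2(x)\,g(x)\,dx.
\]

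To finish, I would use hypothesis (1) in combination with the unit-norm constraint on $p$ under $g$. Since $I_1$ and $I_2$ partition $\R$,
\[
\int_{I_2} p^2(x)\,g(x)\,dx \;=\; \int_{\R} p^2(x)\,g(x)\,dx \;-\; \int_{I_1} p^2(x)\,g(x)\,dx \;\geq\; 1 - (1-\delta) \;=\; \delta.
\]
Chaining the three inequalities gives $\E_{x\sim A}[p^2(x)] \geq \eps\delta$, and since $p$ was an arbitrary unit-norm polynomial of degree at most $t$, taking the minimum yields the claim.

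There is no real obstacle here: the statement is essentially a mass-transfer inequality packaged to fit the $L_A(t)$ lower bound needed in \Cref{cor:main_UALA}. The only mild subtlety is measurability of $I_1$ and $I_2$, which is assumed in the statement and justifies splitting the Lebesgue integrals. In later applications, the work will be in producing the partition $(I_1,I_2)$ for a given distribution $A$: typically one takes $I_2$ to be a set where $q$ is not too much smaller than $g$, and then controls the $\cN(0,1)$-mass of polynomials concentrated on $I_1$ using anti-concentration-type bounds for low-degree polynomials under the standard Gaussian.
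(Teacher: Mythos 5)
Your proposal is correct and matches the paper's proof almost verbatim: restrict to $I_2$, replace $q$ by $\eps g$ there, and use the partition plus unit-norm constraint to lower bound $\int_{I_2} p^2 g$ by $\delta$. Nothing further to note.
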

\begin{proof}
The lemma follows from a simple calculation: 
\begin{align}
\int_{\R} p^2(x) q(x) ~dx
&=\int_{\R} p^2(x) g(x)\frac{q(x)}{g(x)} ~dx\\
&\geq\int_{I_2} p^2(x) g(x)\frac{q(x)}{g(x)}~dx\\
&\geq\eps \int_{I_2} p^2(x) g(x) ~dx\\
&=\eps \Paren{1-\int_{I_1} p^2(x) \exp(-x^2/2) ~dx}\\
&\geq\eps \delta. \qedhere
\end{align}
\end{proof}

The following corollary of \Cref{fact:unit_norm_lower_bound_gen}, focuses on the lower bound in terms of the likelihood ratio in a large interval around 0. 

\begin{restatable}[Lower bound by density ratio around $0$]{corollary}{aroundzero}\label{cor:aroundzero}
Let $A$ be a distribution with pdf $q(x)$ and $g(x)$ be the pdf of $\cN(0, 1)$. Then for any $t\geq 1$, 
\begin{equation}\label{eq:Laroundzero}
L_A(t) \geq \min_{|x| \leq 10\sqrt{t \log (t+1)}} \left \{ \frac{q(x)}{2g(x)} \right \}.
\end{equation}
\end{restatable}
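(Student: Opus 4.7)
The plan is to apply \Cref{fact:unit_norm_lower_bound_gen} with the partition
\[
I_2 = \Bigl\{\, x \in \R : |x| \leq R \,\Bigr\}, \qquad I_1 = \R \setminus I_2, \qquad R := 10\sqrt{t \log(t+1)},
\]
and with the obvious choice $\eps := \min_{x\in I_2} q(x)/g(x)$. If I can establish that every degree-$\leq t$ unit-norm polynomial $p$ satisfies $\int_{I_1} p^2(x)g(x)\,dx \leq 1/2$, then \Cref{fact:unit_norm_lower_bound_gen} delivers $L_A(t) \geq (1/2)\cdot\eps$, which is exactly the claimed bound.

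The only nontrivial step is therefore the tail bound on $\int_{|x|>R} p^2 g$ that is uniform over all degree-$\leq t$ unit-norm polynomials $p$. The cleanest route is via Gaussian hypercontractivity: for any $p$ of Hermite degree at most $t$ one has $\lVert p\rVert_4 \leq 3^{t/2}\lVert p\rVert_2$, so $\E_{x\sim\cN(0,1)}[p^4(x)] \leq 3^{2t}$. Combining this with Cauchy--Schwarz and the standard Gaussian tail bound $\Pr_{x\sim\cN(0,1)}[|x|>R] \leq 2e^{-R^2/2}$ yields
\[
\int_{|x|>R} p^2(x)g(x)\,dx \;\leq\; \sqrt{\E[p^4]\cdot\Pr[|x|>R]} \;\leq\; 3^{t}\cdot\sqrt{2}\,e^{-R^2/4}.
\]
Plugging in $R = 10\sqrt{t\log(t+1)}$ gives $e^{-R^2/4} = (t+1)^{-25t}$, so the right-hand side is at most $\sqrt{2}\,(3/2^{25})^t$, which is far less than $1/2$ for every $t\geq 1$.

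I expect the mildly technical step to be the appeal to hypercontractivity (one could replace it by a direct computation expanding $p = \sum c_i \hn_i$ with $\sum c_i^2 = 1$, bounding $p^2(x)\leq \sum_{i\leq t}\hn_i^2(x)$ by Cauchy--Schwarz, and then estimating $\int_{|x|>R}\hn_i^2 g$ using the growth of the Hermite polynomials, but this gives a weaker constant and a messier calculation). With the tail bound in hand, \Cref{fact:unit_norm_lower_bound_gen} immediately gives the corollary, so no further work is needed.
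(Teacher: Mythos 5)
Your proof is correct, and at the top level it follows the same strategy as the paper: choose the partition $I_2 = [-R, R]$ with $R = 10\sqrt{t\log(t+1)}$, apply \Cref{fact:unit_norm_lower_bound_gen}, and devote the remaining work to a uniform tail bound on $\int_{|x|>R}p^2g$ over degree-$\leq t$ unit-norm polynomials $p$.

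Where you diverge is in how that tail bound is established. The paper uses the elementary pointwise estimate $p^2(x)\leq 2(t+1)!\max\{1,x^{2t}\}$ from \Cref{fact:poly_bounds} (derived by expanding $p$ in the Hermite basis and applying Cauchy--Schwarz) together with the Gaussian moment tail bound \Cref{fact:tail_bound}, then optimizes explicitly to get $\int_{I_1}p^2g < e^{-4}$. You instead invoke Gaussian hypercontractivity, $\|p\|_4\leq 3^{t/2}\|p\|_2$ for Hermite-degree-$t$ polynomials, and then Cauchy--Schwarz plus the standard Gaussian tail bound give $\int_{|x|>R}p^2g\leq 3^t\sqrt{2}\,e^{-R^2/4}$, which at $R=10\sqrt{t\log(t+1)}$ is far below $1/2$. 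This is cleaner and gives a much sharper constant, at the cost of importing hypercontractivity; the paper's route is more self-contained since \Cref{fact:poly_bounds} is a small elementary lemma that is reused in \Cref{cor:generic_U_A}. The alternative you mention in parentheses (direct Hermite expansion and pointwise bound) is in fact exactly the paper's approach. Either version is fine; yours buys a tighter tail estimate and a shorter calculation at the price of a stronger imported tool.
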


We also need the following generic upper bound on $U_A(t)$, which we will later specialize to specific distributions $A$. 

\begin{restatable}[Generic upper bound on $U_A$]{corollary}{genericUA}\label{cor:generic_U_A}
Let $A$ be a distribution whose pdf outside the region $[-B,B]$ is upper bounded by $\exp(-(|x|-C)^2/(2\sigma^2))$, where $B\geq C\geq 0$ and $B>1$. Then:   
\[
U_A(t) \leq (8t)^{t}\cdot(\sigma C^{t} + 2B^{2t} + \sigma^{t+1}t^{t/2}) \text{ for all } t \geq 1.
\]
\end{restatable}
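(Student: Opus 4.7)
The plan is to bound $|\E_A[h_i(x)]|$ for each $i \leq t$ by splitting the integral at $|x|=B$, handling the bulk and the tail separately, and then taking the maximum over $i$. I would first write
\[
\bigl|\E_A[h_i(x)]\bigr| \;\leq\; \int_{-B}^{B} |h_i(x)|\,q(x)\,dx \;+\; \int_{|x|>B} |h_i(x)| \exp\!\Bigl(-\tfrac{(|x|-C)^2}{2\sigma^2}\Bigr) dx,
\]
where the second inequality uses the hypothesis on $q$ outside $[-B,B]$.

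The next step is to establish a crude but uniform pointwise bound of the form $|h_i(x)| \leq (4t)^t (|x|^2+1)^t$ valid for all $i \leq t$ and all $x \in \R$. This follows from the explicit expansion $h_i(x) = \sum_{j=0}^{\lfloor i/2\rfloor}(-1)^j\binom{i}{2j}(2j-1)!!\,x^{i-2j}$ together with the crude coefficient estimates $\binom{i}{2j} \leq 2^i$ and $(2j-1)!! \leq i^i$, after separating the regimes $|x|^2 \geq i$ and $|x|^2 < i$. Applying this inside $[-B,B]$ with $B>1$ gives $|h_i(x)| \leq (8t)^t B^{2t}$, so the inner integral contributes at most $(8t)^t \cdot 2 B^{2t}$ once we absorb constants and use $\int q \leq 1$.

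For the tail integral, I would change variables to $y = |x|-C \geq 0$, which is valid because $B \geq C$. Substituting $|x| = y+C$ in the pointwise bound and applying $(a+b)^{2t} \leq 2^{2t}(a^{2t}+b^{2t})$ reduces the outer integral to terms of the shape $C^{2t}\int_0^\infty e^{-y^2/(2\sigma^2)}\,dy$ and $\int_0^\infty y^{2t} e^{-y^2/(2\sigma^2)}\,dy$. Both are standard Gaussian moment computations, giving bounds of order $\sigma C^t$ and $\sigma^{t+1} t^{t/2}$ respectively (after a further square root is absorbed into the $(8t)^t$ prefactor). Summing the inner and outer contributions and loosening constants then yields the stated bound $(8t)^t(\sigma C^t + 2B^{2t} + \sigma^{t+1}t^{t/2})$, and taking the maximum over $i \leq t$ is trivial since the estimate was uniform in $i$.

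The main obstacle is choosing the pointwise estimate on $|h_i|$ so that it is simultaneously tight enough on $[-B,B]$ to produce only a $B^{2t}$ factor (rather than a larger $B^{ct}$) and mild enough to be integrable against the Gaussian-type tail; any of several Hermite coefficient-sum bounds suffices, but tracking the constants through the Stirling-type inequalities and through the change of variables is the only piece that requires care. Everything else is routine integration.
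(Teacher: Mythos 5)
Your high-level plan—split at $|x|=B$, bound $|h_i(x)|$ pointwise, then change variables $y=|x|-C$ and use Gaussian moments—is sound, and it is genuinely different from the paper's proof: the paper first applies Cauchy--Schwarz to write $|\E_A[h_i]|\le\E_A[h_i^2]^{1/2}$ and then bounds the integral of $h_i^2$, whereas you try to integrate $|h_i|q$ directly. A direct argument like yours can in principle be cleaner (no loss from Cauchy--Schwarz), but it requires a pointwise bound on $|h_i(x)|$ of degree $t$ in $|x|$.

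This is where your argument breaks. Your stated pointwise bound $|h_i(x)|\le(4t)^t(|x|^2+1)^t$ has degree $2t$ in $|x|$, not $t$. The Hermite expansion you cite has degree $i\le t$, so the estimate it actually yields is of the shape $|h_i(x)|\le C(t)\max\{1,|x|^t\}$—degree $t$. With a degree-$2t$ bound, the tail computation does not close: after substituting $|x|=y+C$ and expanding, you correctly get integrals like $C^{2t}\int_0^\infty e^{-y^2/(2\sigma^2)}\,dy \approx \sigma C^{2t}$ and $\int_0^\infty y^{2t}e^{-y^2/(2\sigma^2)}\,dy \approx \sigma^{2t+1}t^t$. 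These do not match the target terms $\sigma C^t$ and $\sigma^{t+1}t^{t/2}$, and the ``further square root absorbed into the $(8t)^t$ prefactor'' you invoke to fix this is not a legitimate step: there is no square root available in a direct integral bound. The square root that turns $C^{2t}$ into $C^t$ and $\sigma^{2t+1}t^t$ into $\sigma^{t+1/2}t^{t/2}$ is precisely the one supplied by the paper's Cauchy--Schwarz step $|\E_A[h_i]|\le\E_A[h_i^2]^{1/2}$.

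Two repairs are possible. Either replace the degree-$2t$ estimate by the degree-$t$ one, $|h_i(x)|\le \mathrm{poly}(t)^{O(t)}\max\{1,|x|^t\}$ (which your coefficient estimates $\binom{i}{2j}\le 2^i$, $(2j-1)!!\le i^i$ actually support, since the degree of $h_i$ is $i\le t$), and redo the tail change of variables with $(y+C)^t$ rather than $(y+C)^{2t}$; this makes the direct approach go through and recovers $\sigma C^t$ and $\sigma^{t+1}t^{t/2}$ without any mysterious square root. Or, as the paper does, apply Cauchy--Schwarz first, after which the degree-$2t$ pointwise bound on $h_i^2$ is the right object and the square root falls out automatically.
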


The final corollary provide bounds on $U_A$ and $L_A$ for $A$ which a mixture of a small number of Gaussians with bounded means and variances.

\begin{restatable}[Gaussian mixture bounds]{corollary}{gaussianlb}\label{cor:gaussian_lb} 
Suppose $A = \alpha \cN(\mu, \sigma^2) + (1-\alpha) E$, where $\alpha\in (0,1)$ and $E$ is a distribution whose pdf outside $[-B, B]$ is upper bounded by twice that of $\cN(\mu', (\sigma')^2)$, where $1\leq B$ and $|\mu'|\leq B$. Then for any $t\geq 1$,
\begin{align} 
U_A(t) &\leq O\Big(t\cdot(|\mu|+|\mu'|+1)\cdot(\sigma+\sigma'+1)\cdot B\Big)^{2t},\\
L_A(t) &\geq\ \frac{\alpha}{2\sigma}
\exp\left( - \ \frac{O(t\log (t+1))+\mu^2}{\sigma^2}\right).
\end{align}
Here, the constants in the $O$ notation are independent of $t,\alpha,\mu,\mu',\sigma,\sigma',B$.
\end{restatable}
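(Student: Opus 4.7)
\textbf{Proof plan for \Cref{cor:gaussian_lb}.} The two bounds are essentially independent, and each follows by reducing to the appropriate toolkit lemma (\Cref{cor:aroundzero} for $L_A$ and \Cref{cor:generic_U_A} for $U_A$). Throughout, I will write $g(x)$ for the density of $\cN(0,1)$ and $\phi_{\mu,\sigma^2}(x)$ for the density of $\cN(\mu,\sigma^2)$.

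For the lower bound on $L_A(t)$, the mixture representation gives $q(x)\geq \alpha\,\phi_{\mu,\sigma^2}(x)$ pointwise, so by \Cref{cor:aroundzero}
\[
L_A(t)\ \geq\ \min_{|x|\leq 10\sqrt{t\log(t+1)}}\frac{q(x)}{2g(x)}\ \geq\ \frac{\alpha}{2}\min_{|x|\leq 10\sqrt{t\log(t+1)}}\frac{\phi_{\mu,\sigma^2}(x)}{g(x)}.
\]
The ratio of the two Gaussian densities is $\tfrac{1}{\sigma}\exp\!\big(\tfrac{x^2}{2}-\tfrac{(x-\mu)^2}{2\sigma^2}\big)$, which I would lower bound by dropping the positive $x^2/2$ term and using $(x-\mu)^2\leq 2x^2+2\mu^2$. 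Substituting the range $|x|\leq 10\sqrt{t\log(t+1)}$ yields the stated bound $\frac{\alpha}{2\sigma}\exp\!\big(-\frac{O(t\log(t+1))+\mu^2}{\sigma^2}\big)$.

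For the upper bound on $U_A(t)$, I would use linearity of expectation together with the triangle inequality to write $U_A(t)\leq \alpha\,U_{\cN(\mu,\sigma^2)}(t)+(1-\alpha)\,U_E(t)$, and bound the two contributions separately via \Cref{cor:generic_U_A}. For the Gaussian component, outside $[-|\mu|,|\mu|]$ we have $\phi_{\mu,\sigma^2}(x)\leq \tfrac{1}{\sqrt{2\pi}\sigma}\exp(-(|x|-|\mu|)^2/(2\sigma^2))$, and to match the ``no prefactor'' hypothesis of \Cref{cor:generic_U_A} I would absorb the normalizing factor $\tfrac{1}{\sqrt{2\pi}\sigma}$ into either a slightly larger variance or a slightly larger threshold $B$ (whichever is convenient) so that the envelope becomes $\exp(-(|x|-|\mu|)^2/(2(\sigma+1)^2))$ outside an interval of radius at most $O(|\mu|+\sigma+1)\cdot O(1)$. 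This gives $U_{\cN(\mu,\sigma^2)}(t)\leq O\big(t\,(|\mu|+1)(\sigma+1)\big)^{2t}$ via \Cref{cor:generic_U_A}. For $E$, the factor of $2$ in the envelope bound is absorbed in the same manner, producing $U_E(t)\leq O\big(t\,(|\mu'|+1)(\sigma'+1)B\big)^{2t}$. Combining the two estimates and upper bounding uniformly by the worst parameters gives the stated bound $O\big(t(|\mu|+|\mu'|+1)(\sigma+\sigma'+1)B\big)^{2t}$.

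The only technical nuisance is the prefactor-absorption step in applying \Cref{cor:generic_U_A}, since a Gaussian density carries a factor of $1/(\sqrt{2\pi}\sigma)$ that can be large when $\sigma$ is small; the remedy is straightforward (pass to a mildly inflated variance or a larger threshold $B$), but care is needed so the final parameters $B,C,\sigma$ plugged into \Cref{cor:generic_U_A} are all $O((|\mu|+|\mu'|+1)(\sigma+\sigma'+1)B)$ rather than something worse that would spoil the clean bound.
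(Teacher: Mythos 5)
Your proposal is correct and follows the same route as the paper's proof: decompose $U_A(t)\leq U_{\cN(\mu,\sigma^2)}(t)+U_E(t)$ via linearity of $\E_A[h_i]$ and the triangle inequality, feed each summand into \Cref{cor:generic_U_A}, and for $L_A(t)$ use the pointwise lower bound $q(x)\geq\alpha$ times the $\cN(\mu,\sigma^2)$ density together with \Cref{cor:aroundzero}. In fact your write-up is slightly more careful than the paper's at one point: the paper plugs the Gaussian component into \Cref{cor:generic_U_A} with the unchanged $\sigma$ and $C\leftarrow|\mu|$, $B\leftarrow\max\{1,|\mu|\}$, silently dropping the normalization prefactor $1/(\sqrt{2\pi}\sigma)$ (which exceeds $1$ when $\sigma<1/\sqrt{2\pi}$ and so strictly violates the corollary's hypothesis of a prefactor-free envelope), whereas your variance/threshold-inflation step addresses this explicitly and, as you observe, costs nothing in the final $O(\cdot)^{2t}$ bound.
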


Proofs of the above corollaries are deferred to \Cref{app:applications_toolkit}. 

\subsection{Applications to Other Learning Tasks}
\label{sec:concrete_apps}

In this section, we will use \Cref{cor:main_UALA} to show that distinguishing versions of many fundamental problems in learning theory and robust statistics are hard for SoS programs. For each problem we consider, there are known SQ lower bounds (see, e.g.,~\cite{DK23-book})  and low-degree polynomial testing lower bounds (via the near-equivalence between the two models \cite{brennan2020statistical}). Our SoS lower bounds strengthen these prior results.

It is worth noting that with the exception of a single application --- learning $k$-GMMs, where reduction-based hardness was recently established (assuming subexponential hardness of LWE) ---  
the prior SQ and low-degree lower bounds were the only known evidence of hardness.  

See \Cref{tab:sample_complexity} for a summary of the problems we examine and the guarantees we obtain. We prove lower bounds for the hypothesis testing version of the problems, which are known to be efficiently reducible to the corresponding learning problem. 

Specifically, our SoS lower bounds for robust statistics tasks apply under the ``TV-corruption'' model or the ``Huber contamination'' model. In the TV-corruption model, the adversary may draw samples from any other distribution $D'$ of its choice, as long as the distribution satisfies  $d_{TV}(D, D') \leq \tau$, for some $\tau>0$ which is the proportion of contamination. 

\begin{definition}[TV-Corruption Model]\label{def:tv_corruption}
Let $\mathcal{D}$ be a set of
distributions. We define $\cB_{TV}(\tau, \cN(0, \Id_n), \cD)$ to be the following hypothesis testing problem: Given $m$ i.i.d.\ samples $ \{ x_1, \dots, x_m  \} \subseteq \R^n$ drawn from one of the following two distributions, the goal is to determine which one: (a) $\cN(0, \Id_n)$; and (b) $D'$ such that $d_{TV}(D', D) \leq \tau$ for $D$ drawn uniformly at random from $\cD$.
\end{definition}

In the Huber contamination model, the adversary cannot delete samples but may add samples of their choice. In the definition below and throughout the remaining section, we use a convex combination of distributions $c_1 D_1+\ldots + c_r D_r$ to denote the distribution where we first draw $x\in\{1,\ldots,r\}$ with probability $c_i$ for $x=i$, and then draw a sample according to $D_x$.

\begin{definition}[Huber Contamination Model]\label{def:huber_contamination}
Let $\mathcal{D}$ be a family of distributions. We define $\cB_{\huber}(\tau, \cN(0, \Id_n), \cD)$ to be the following hypothesis testing problem: Given $m$ i.i.d.\ samples $ \{ x_1, \dots, x_m  \} \subseteq \R^n$ drawn from one of the following two distributions, determine which one: (a) $\cN(0, \Id_n)$; (b) $D'$, which is $(1-\tau) D + \tau B$, where $D$ is drawn uniformly at random from $\cD$ and $B$ is an arbitrary distribution possibly dependent on $D$. 
\end{definition}

We now demonstrate SoS lower bounds for the problems considered in \Cref{tab:sample_complexity} using \Cref{cor:main_UALA}. Our list is by no means exhaustive, and there are likely several other problems for which similar lower bounds may be obtained. 

\subsubsection{Robust Mean Estimation of Bounded Covariance Gausians} 
The problem of Gaussian Robust Mean Estimation with bounded covariance is the problem of robust mean estimation, where samples are drawn from $\cN(\mu, \Sigma)$ for some $\Sigma \preceq \Id$, and up to a $\tau$ fraction of these samples may be arbitrarily corrupted. 

In this context, it is information-theoretically possible to estimate $\mu$ to an $\ell_2$-error of $O(\tau)$ using $O(n/\tau^2)$ samples; see, e.g.,~\cite{Huber09}. Lower bounds against SQ algorithms suggest that achieving an error $o(\tau^{1/2})$ in polynomial time requires $\Omega(n^2)$ samples. Such an SQ lower bound is implicit in~\cite{DKS19} and can be directly deduced from \cite{diakonikolas2022robust}. 

We show a degree $\widetilde{\Omega}_{\eps}(\sqrt{\log n})$ 
SoS lower bound against a hypothesis testing problem known to be efficiently reducible to Gaussian RME with bounded covariance, defined below: 

\begin{problem}[Hypothesis-Testing-RME with Bounded Covariance]
\label{prob:RME_bdd_cov_learn}
Hypothesis-Testing-RME with bounded covariance is the problem $\cB_\huber(\tau, \cN(0, \Id_n), \cD)$ (\Cref{def:huber_contamination}), where every $D \in \cD$ has the form $\cN(\mu_D, \Sigma_D)$ such that $\Sigma_D \preceq \Id_n$ and $\norm{\mu_D}\geq \Omega(\sqrt \tau)$ for some constant independent of $\tau,n$. 
\end{problem}
To prove the lower bound for \Cref{prob:RME_bdd_cov_learn}, we will apply \Cref{cor:main_UALA} to an instance of \Cref{def:ngca_distinguishing} where the distribution in the hidden direction is the following $A_{\text{RME}}$. 
\begin{lemma}[Lemma F.2. from \cite{DKS19}]
\label{lem:F2}
For any constant $0 < \tau < 1/2$ and $\mu = \sqrt{\tau/{10001}}$, there is a distribution $A_{\text{RME}} = (1 - \tau) \cN(\mu, 2/3) + \tau \tau_1 \cN(\mu_1, \sigma_1) + \tau (1 - \tau_1) \cN(\mu_2, \sigma_2)$ which matches its first 3 moments with $\cN(0, 1)$, where $|\mu_1|, |\mu_2| < \frac{2}{\sqrt{\tau}}$, $0.9 < \sigma_1, \sigma_2 < 1.1$ and $\tau_1=\Theta(1)$. 
\end{lemma}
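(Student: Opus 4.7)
The plan is to parameterize the corrupting mixture $B = \tau_1 \cN(\mu_1, \sigma_1^2) + (1-\tau_1)\cN(\mu_2, \sigma_2^2)$ and solve the three moment-matching equations arising from requiring the first three raw moments of $A_{\text{RME}} = (1-\tau)\cN(\mu, 2/3) + \tau B$ to coincide with those of $\cN(0,1)$. Using $\E[Y] = m$, $\E[Y^2]=m^2+s^2$, $\E[Y^3]=m^3+3ms^2$ for $Y\sim \cN(m,s^2)$, these conditions read
\begin{align*}
\tau_1 \mu_1 + (1-\tau_1)\mu_2 &= -\tfrac{(1-\tau)\mu}{\tau},\\
\tau_1(\mu_1^2 + \sigma_1^2) + (1-\tau_1)(\mu_2^2 + \sigma_2^2) &= \tfrac{1 - (1-\tau)(\mu^2 + 2/3)}{\tau},\\
\tau_1(\mu_1^3 + 3\mu_1 \sigma_1^2) + (1-\tau_1)(\mu_2^3 + 3\mu_2 \sigma_2^2) &= -\tfrac{(1-\tau)(\mu^3 + 2\mu)}{\tau}.
\end{align*}
Five parameters and three constraints leave a two-parameter family of candidates; the goal is to find one satisfying $|\mu_i|<2/\sqrt{\tau}$, $\sigma_i \in (0.9,1.1)$, and $\tau_1 = \Theta(1)$.

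I would first solve the system in a clean asymptotic regime. Freezing $\sigma_1=\sigma_2=1$, rescaling $\mu_i = \nu_i/\sqrt{\tau}$, substituting $\mu=\sqrt{\tau/10001}$, and dropping $O(\tau)$ terms collapses the system to three polynomial equations in $(\tau_1,\nu_1,\nu_2)$:
\[
\tau_1\nu_1 + (1-\tau_1)\nu_2 = -1/\sqrt{10001},\qquad \tau_1\nu_1^2 + (1-\tau_1)\nu_2^2 = 1/3,\qquad \tau_1\nu_1^3 + (1-\tau_1)\nu_2^3 = 0.
\]
The third equation forces $\nu_1 = -t^{-1}\nu_2$ with $t := (\tau_1/(1-\tau_1))^{1/3}$; eliminating $\nu_2$ via the second and substituting into the first gives the scalar identity $10001(1-t)^2 = 3(1-t+t^2)$, with positive root $t \approx 1.0175$. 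This yields $\tau_1 \approx 0.513$, $\nu_2 \approx 0.58$, and $\nu_1 \approx -0.57$, so the asymptotic solution obeys all required bounds with substantial slack. The specific constant $10001$ in the definition of $\mu$ is engineered precisely to place this scalar root very close to $1$, which in turn keeps $\tau_1$ comfortably away from $\{0,1\}$ and $|\nu_i|$ well below $2$.

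Finally, I would extend the asymptotic solution to an exact one valid for $\tau\in(0,1/2)$ via the implicit function theorem. A direct calculation of the Jacobian of the three moment-matching equations with respect to $(\tau_1,\nu_1,\nu_2)$ at the asymptotic solution checks it is nonsingular, yielding an exact solution $(\tau_1(\tau),\nu_1(\tau),\nu_2(\tau))$ depending continuously on $\tau$ and coinciding with the asymptotic one at $\tau = 0$. Continuity then preserves the bounds $\tau_1 = \Theta(1)$, $|\mu_i|<2/\sqrt{\tau}$, $\sigma_i \in (0.9,1.1)$ for all sufficiently small $\tau$. The main obstacle is showing the construction remains feasible all the way up to $\tau$ near $1/2$: if the $\sigma_1=\sigma_2=1$ branch threatens to leave the feasible region, one exploits the two spare degrees of freedom by letting $\sigma_i$ drift slightly away from $1$ (the Jacobian in the five-parameter space has rank $3$, so such a deformation is always available), with the asymptotic slack margins guaranteeing this perturbation can be kept within $[0.9,1.1]$.
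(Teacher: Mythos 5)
The asymptotic calculation (set $\sigma_1=\sigma_2=1$, rescale $\mu_i = \nu_i/\sqrt{\tau}$, solve the limiting system) is correct: your reduction to $3(1-t+t^2)=10001(1-t)^2$ checks out, and the root $t\approx 1.0175$ gives $\tau_1\approx 0.513$, $\nu_1\approx -0.57$, $\nu_2\approx 0.58$. But there is a real gap in going from this to the lemma: the implicit function theorem only gives you a solution branch on a neighborhood of $\tau=0$, while the lemma asserts existence with uniform bounds for \emph{every} constant $\tau\in(0,1/2)$. Your closing paragraph acknowledges this and waves at "spare degrees of freedom" in $\sigma_1,\sigma_2$, but that is not an argument: nonsingularity of the Jacobian at one point says nothing about whether the branch stays inside $\sigma_i\in(0.9,1.1)$, $\tau_1=\Theta(1)$, $|\nu_i|<2$ as $\tau$ sweeps to $1/2$, nor even that the branch extends that far.

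The gap is easy to close, and the cleanest route bypasses the IFT entirely. With $\sigma_1=\sigma_2=1$ fixed, the three exact moment equations in $(\tau_1,\nu_1,\nu_2)$ (after your rescaling and after absorbing the $3\mu_i\sigma_i^2$ term using the first equation) become exactly a two-point moment problem: find weights $\tau_1,1-\tau_1$ and atoms $\nu_1,\nu_2$ matching prescribed $m_1,m_2,m_3$ with $m_0=1$. Concretely, $m_1=-\tfrac{1-\tau}{\sqrt{10001}}$, $m_2=(1-\tau)(\tfrac13-\tfrac{\tau}{10001})$, $m_3=\tfrac{(1-\tau)\tau}{\sqrt{10001}}(1-\tfrac{\tau}{10001})$. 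The atoms are the roots of $x^2+ax+b$ with $a=\tfrac{m_1 m_2 - m_3}{m_2-m_1^2}$, $b=\tfrac{m_1 m_3 - m_2^2}{m_2-m_1^2}$. For every $\tau\in(0,1/2)$ one has $|m_1|\le 10^{-2}$, $m_2\in[0.16,0.34]$, and $0\le m_3\le 2.5\cdot 10^{-3}$, which forces $|a|\le 0.04$, $b\approx -m_2\in[-0.34,-0.16]$, discriminant $a^2-4b\ge 0.6>0$, atoms $|\nu_i|\le 0.6<2$, and weight $\tau_1=\tfrac{m_1-\nu_2}{\nu_1-\nu_2}\in[0.45,0.55]$, uniformly in $\tau$. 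This gives the full range directly and shows, incidentally, that the $\sigma_i\ne 1$ escape hatch you invoked is never needed. I'd replace the IFT-plus-continuation step with this explicit solve, which is shorter and actually proves the uniform bounds the lemma claims.
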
 

Setting $A=A_{\text{RME}}$ in \Cref{def:ngca_distinguishing}, we get an instance of \Cref{prob:RME_bdd_cov_learn} where the set of distributions is $\cD:=\{\cN(0, \Id_{n-1})_{v^\perp} \times A_v \mid v\in\{\pm1/\sqrt{n}\}^n\}$. Note that  
\begin{align*} 
\label{eq:planted_mean_dist}
\cN(0, \Id_{n-1})_{v^\perp} \times A_v 
&= (1-\tau) \cN(0, \Id_{n-1})_{v^\perp} \times \cN(\mu, 2/3)_v \\
& \qquad + \tau (1-\tau_1) \cN(0, \Id_{n-1})_{v^\perp} \times \cN(\mu_1, \sigma_1)_{v}+ \tau \tau_1 \cN(0, \Id_{n-1})_{v^\perp} \times \cN(\mu_2, \sigma_2)_{v} \\
&= (1-\tau) \cN(\mu v, \Id_{n} - \frac{1}{3} vv^T) \\
&\qquad + \tau \cdot \left( (1-\tau_1) \cN(0, \Id_{n-1})_{v^\perp} \times \cN(\mu_1, \sigma_1)_{v} + \tau_1 \cN(0, \Id_{n-1})_{v^\perp} \times \cN(\mu_2, \sigma_2)_{v} \right).
\end{align*} 
In other words, each distribution in $\cD$ has the form $(1-\tau)\cdot\cN(\sqrt{\frac{\tau}{10001}}~v, \Id_{n}\! - \! \frac{1}{3} vv^T) + \tau\cdot B$ for some $B$, thus satisfying the requirement in the model $\cB_\huber(\tau, \cN(0, \Id_n), \cD)$.

\begin{corollary}[SoS Lower Bound for Bounded Covariance Gaussian Robust Mean Estimation]
\label{thm:RME_bdd_cov}
Fix $\tau\in(0,\frac{1}{2})$. For any $\eps=\eps(n)\in (0,1)$, as the dimension $n$ increases, the SoS program described in \Cref{sec:prob_statement} with degree $o(\sqrt{\frac{\eps\log n}{\log\log n}})$ cannot solve \Cref{def:ngca_distinguishing} given fewer than $n^{2(1-\eps)}$ samples, where $A: = A_{\text{RME}}$ with $k:=4$.

In particular, if the samples are drawn from $\cN(0, \Id)$, the program cannot rule out the existence of a hidden direction $v$ along which the input distribution is $A_{\text{RME}}$.
\end{corollary}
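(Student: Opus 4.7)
The plan is to invoke \Cref{cor:main_UALA} with $A := A_{\mathrm{RME}}$ and $k:=4$, and then verify that the resulting planted distribution from \Cref{def:ngca_distinguishing} is a legitimate instance of \Cref{prob:RME_bdd_cov_learn} (the reduction being essentially the identity up to the Huber form worked out in the display before the corollary). The moment matching hypothesis \eqref{eq:main_moment_match_app} with $k=4$ is immediate from \Cref{lem:F2}, which asserts that $A_{\mathrm{RME}}$ matches the first three moments of $\cN(0,1)$. Thus the entire statement reduces to verifying \eqref{eq:main_condition_app}, i.e., the polylogarithmic bounds $U_A(t)\le \mathrm{poly}(\log n,k,t)^t$ and $L_A(t)\ge \mathrm{poly}(\log n,k,t)^{-t}$.

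Since $\tau$ is a fixed constant, all parameters $\mu,\mu_1,\mu_2,\sigma_1,\sigma_2,\tau_1$ in \Cref{lem:F2} are constants (depending only on $\tau$). To bound $U_A(t),L_A(t)$ I would write $A_{\mathrm{RME}}=\alpha\,\cN(\mu,2/3)+(1-\alpha)E$ with $\alpha=1-\tau$ and $E$ the mixture of the two remaining Gaussians. The pdf of $E$ outside the interval $[-B,B]$, for $B$ a sufficiently large absolute constant (depending on $\tau$), is dominated by twice the pdf of some $\cN(\mu',(\sigma')^2)$ with $|\mu'|\le B$ and $\sigma'=O(1)$, simply because each Gaussian component of $E$ has mean and variance bounded by constants. \Cref{cor:gaussian_lb} then applies with all of $\alpha,\mu,\mu',\sigma,\sigma',B$ being absolute constants, yielding
\[
U_A(t)\le O(t)^{2t},\qquad L_A(t)\ge c_\tau \exp(-O(t\log(t+1)))
\]
for some constant $c_\tau>0$. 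Both bounds are of the form $\mathrm{poly}(t)^{\pm t}$, comfortably inside the $\mathrm{poly}(\log n,k,t)^{\pm t}$ envelope required by \Cref{cor:main_UALA}.

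Now I invoke \Cref{cor:main_UALA} with $\dsos=o(\sqrt{\eps\log n/\log\log n})$, $k=4$, and a suitable $\truncation=\mathrm{poly}(k,\log n)$ satisfying the required lower bound. It yields the conclusion that with high probability over $m<n^{(1-\eps)k/2}=n^{2(1-\eps)}$ samples from $\cN(0,\Id_n)$, the degree-$\dsos$ pseudo-calibration moment matrix is positive-definite. Combined with \Cref{lem:lowdegtest} and \Cref{lem:pseudo-calib}, which verify the Booleanity and soft-NGCA constraints, this gives a valid degree-$\dsos$ pseudo-expectation whose existence refutes any successful SoS distinguisher in the NGCA instance parametrized by $A_{\mathrm{RME}}$.

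The only substantive (if routine) step is the verification that a planted NGCA sample $(x_u)$ with direction $v\in\{\pm 1/\sqrt n\}^n$ and univariate marginal $A_{\mathrm{RME}}$ in the $v$-direction equals a valid $\cB_\huber(\tau,\cN(0,\Id_n),\cD)$ instance; this is exactly the decomposition displayed immediately before the corollary, where the $(1-\tau)$-weight component is the ``clean'' Gaussian $\cN(\sqrt{\tau/10001}\,v,\Id_n-\tfrac13 vv^\top)$ with mean of norm $\Theta(\sqrt\tau)$, and the remaining $\tau$-weight component is absorbed into the adversarial distribution $B$. I expect no real obstacle in this application: all verifications are essentially bookkeeping against \Cref{cor:gaussian_lb}, the delicate technical work having been done inside \Cref{cor:main_UALA}.
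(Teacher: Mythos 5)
Your proposal is correct and follows essentially the same route as the paper: moment matching from \Cref{lem:F2}, bounds on $U_A$ and $L_A$ via \Cref{cor:gaussian_lb} (using that $A_{\mathrm{RME}}$ is a three-component Gaussian mixture with constant means and variances), and then an application of \Cref{cor:main_UALA}. The only thing you spell out in slightly more detail than the paper is the decomposition $A_{\mathrm{RME}}=(1-\tau)\cN(\mu,2/3)+\tau E$ and the Gaussian-tail domination of $E$'s density, but this is exactly the verification the paper implicitly performs.
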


\begin{proof}
Given that \Cref{lem:F2} shows that $A$ matches the first 3 moments of $\cN(0, 1)$, our lemma directly follows by applying \Cref{cor:main_UALA} after verifying that the hypotheses hold. Since $A$ consists of a three-component Gaussian mixture with bounded means and variances, satisfying the prerequisites for \Cref{cor:gaussian_lb}, $U_A$ and $L_A$ are appropriately bounded, and the conclusion of \Cref{cor:main_UALA} follows. 
\end{proof}

\begin{remark}
A natural modification to the SoS program would be to add indicator variables for whether samples are corrupted. Such indicator variables have been used in several SoS algorithms for robust estimation (see, for e.g. \cite{kothari2017better,KarKK19,diakonikolas2022robust}). While we expect that adding these indicator variables does not give SoS much additional power in solving this problem, this does not technically follow from our lower bound as these indicator variables are not low-degree polynomials in the existing solution variables and input variables (though they are often approximated by such low-degree polynomials).
\end{remark}

\subsubsection{Robust Mean Estimation for Identity Covariance Gaussians}

For the problem of Gaussian robust mean estimation, even if we further assume that the covariance of the clean Gaussian distribution is the identity, SQ lower bounds suggest that it is not possible to efficiently achieve the information-theoretic optimal error $O(\tau)$ without using significantly more samples \cite{diakonikolas2017statistical}. We show lower bounds for \Cref{prob:RME_id}, a hypothesis testing problem known to be efficiently reducible to the problem of Gaussian robust mean estimation with identity covariance (see, e.g., \cite{diakonikolas2017statistical} and Lemma 8.5 of \cite{DK23-book}). 

\begin{problem}[Hypothesis-Testing-RME with Identity Covariance]
\label{prob:RME_id}
Let $\tau > 0$ and $B = O(\log^{1/2}(1/\tau))$ be a parameter. Hypothesis-Testing-RME with identity covariance $\cB_{TV}(\tau, \cN(0, \Id_n), \cD)$ (\Cref{def:tv_corruption}), where every $D \in \cD$ is of the form $\cN(\mu_D, \Id_n)$ and $\norm{\mu_D}\geq \Omega(\tau \log(1/\tau)^{1/2}) / B^2)$. 
\end{problem}

To prove a lower bound against \Cref{prob:RME_id}, we will apply \Cref{cor:main_UALA} to the instance of \Cref{def:ngca_distinguishing} where the distribution $A$ is the following $A_{\text{RME}-\Id}$. 

\begin{lemma}[$A_{\text{RME}-\Id}$ in Proof of Proposition 5.2 in \cite{diakonikolas2017statistical}]
\label{lem:poly_perturb_A}
Let $\tau > 0$ be smaller than some absolute constant,  let $B: = \sqrt{\log(1/\tau)} - \tau > 1$, $\xi = c \tau \log^{1/2}(1/\tau)/B^2$ for a large enough constant $c>0$ and let $g(x)$ be the density of $\cN(0, 1)$. Then the following holds. Define $A_{\text{RME}-\Id}$ to have density function
\[A(x) := g(x-\xi) + q(x)\mathbf{1}(x \in [-B, B])\]
where $q(x)$ is the degree-$k$ polynomial uniquely determined by $\int_{-B}^B q(x) ~dx =0$ and $\int_{-B}^B q(x)x^i~dx = \int_{-B}^B (g(x) - g(x-\xi))x^i~dx$ for 
$1\leq i\leq\floor{\sqrt{B}}$. Then:
\begin{enumerate}
    \item $A$ matches $\cN(0,1)$ on the first $k$ moments. 
    \item $TV(A, \cN(\xi, 1)) \leq O(\xi ~k^2/\log(1/\xi))$.
    \item \label{cond:g-q} 
    For all $x \in [-B, B]$, it holds that $|q(x)|\leq \tau B^{-3/4}$ and $g(x-\xi) + q(x) \geq \sqrt{\frac{\xi}{2\pi}} - 3\xi\sqrt{\log \frac{1}{\xi}}$. 
\end{enumerate}
\end{lemma}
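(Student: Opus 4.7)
The plan is to reconstruct the polynomial-perturbation argument from \cite{diakonikolas2017statistical} in three stages: construction of $q$, pointwise control of $q$, and deducing positivity and the TV bound. First, I would set $k := \lfloor\sqrt{B}\rfloor$ and observe that the $k+1$ linear conditions defining $q$ pick it out uniquely in the $(k+1)$-dimensional space $\Pi_{\leq k}$ of polynomials of degree $\leq k$, because the Gram matrix of $\{1,x,\dots,x^k\}$ with respect to Lebesgue measure on $[-B,B]$ is positive definite. Equivalently, $q$ is the $L^2([-B,B])$-orthogonal projection of $x\mapsto g(x)-g(x-\xi)$ onto $\Pi_{\leq k}$ (after absorbing the zeroth-moment normalization). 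Property (1) then follows directly: for $1\leq i\leq k$,
\[
\int A(x)\,x^i\,dx = \int g(x-\xi)\,x^i\,dx + \int_{-B}^{B} q(x)\,x^i\,dx = \int g(x)\,x^i\,dx - \int_{|x|>B}(g(x)-g(x-\xi))\,x^i\,dx,
\]
and the tail term is $\exp(-\Omega(B^2))$-small and so can be compensated by an inverse-super-polynomial modification of $q$ that does not affect any of the ensuing bounds (one can also restate (1) as exact moment matching up to such super-polynomially small error, which is all that the SoS lower bound uses).

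Next, for the pointwise bound $|q(x)|\leq \tau B^{-3/4}$, I would expand $q$ in the orthonormal rescaled Legendre basis $\{P_i\}_{i=0}^{k}$ of $L^2([-B,B])$, so the coefficients are $c_i = \langle g(\cdot)-g(\cdot-\xi),\,P_i\rangle$ and $\sum_i c_i^2 = \|q\|_{L^2}^2 \leq \|g(\cdot)-g(\cdot-\xi)\|_{L^2([-B,B])}^2$. The mean value theorem together with $g'(t)=-tg(t)$ gives $|g(x)-g(x-\xi)|\leq C\xi(|x|+1)g(x)$ on $[-B,B]$, hence $\|g(\cdot)-g(\cdot-\xi)\|_{L^2}^2 = O(\xi^2)$. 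The standard Legendre sup-norm estimate, rescaled, yields $\|P_i\|_{L^\infty([-B,B])} = O(\sqrt{i/B})$, and Cauchy--Schwarz then gives
\[
\|q\|_{L^\infty([-B,B])} \leq \sqrt{k+1}\cdot\|q\|_{L^2}\cdot \max_{i}\|P_i\|_{L^\infty} = O\!\left(\xi\sqrt{k/B}\right) = O(\xi B^{-1/4}).
\]
Plugging in $\xi = c\tau\log^{1/2}(1/\tau)/B^2 = \Theta(\tau/B)$ gives $\|q\|_\infty = O(\tau B^{-5/4}) \leq \tau B^{-3/4}$ as required.

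Finally, for (3) and the TV bound, the lower bound on $g(x-\xi)$ on $[-B,B]$ comes from $g(x-\xi)\geq g(B+\xi) = (2\pi)^{-1/2}e^{-(B+\xi)^2/2}$; using $B = \sqrt{\log(1/\tau)}-\tau$ and expanding, one checks $g(B+\xi) \geq \sqrt{\xi/(2\pi)}$ for $\tau$ sufficiently small, while $|q(x)|\leq \tau B^{-3/4} \leq 3\xi\sqrt{\log(1/\xi)}$ gives the stated slack. For (2), the TV distance equals $\tfrac12\int_{-B}^{B}|q(x)|\,dx + \tfrac12\int_{|x|>B}|g(x-\xi)-g(x)|\,dx$; the first term is controlled by $\|q\|_{L^1}\leq \sqrt{2B}\,\|q\|_{L^2} = O(\xi\sqrt{B}) = O(\xi k^2/\log(1/\xi))$ after substituting $k^2=B$ and $\log(1/\xi) = \Theta(\log(1/\tau))$, and the tail term is negligible.

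The main obstacle is the sharp pointwise bound on $q$: matching fewer moments (small $k$) makes it easy to control $\|q\|_\infty$ but weakens the NGCA application, while matching many moments makes $\|P_i\|_\infty\cdot\sqrt{k+1}$ too large and inflates $q$. The choice $k=\lfloor\sqrt{B}\rfloor$ is precisely the balance point where the Legendre-basis Cauchy--Schwarz estimate still beats the trivial bound and is what propagates through \Cref{cor:main_UALA} to yield the $n^{k(1-\eps)/2}$ SoS sample-complexity threshold in the RME-with-identity-covariance application.
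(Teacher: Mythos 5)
The lemma is stated in the paper as a citation of Proposition~5.2 in \cite{diakonikolas2017statistical}; the paper itself supplies no proof, so there is no in-paper argument to compare against and your attempt must be judged as a reconstruction of the DKS17 construction. The Legendre-projection framework you use is the right idea in spirit, and your treatment of moment matching (including the observation that the paper's stated defining conditions, with the right-hand integrals truncated to $[-B,B]$, only give moment matching up to an $\exp(-\Omega(B^2))$ error rather than exactly) is correct and a useful clarification. However, two steps contain real problems.

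First, a minor arithmetic slip in the pointwise bound. After Cauchy--Schwarz you have $\|q\|_{\infty}\le\sqrt{k+1}\,\|q\|_{L^2}\cdot\max_i\|P_i\|_\infty$; with $\|P_i\|_\infty=O(\sqrt{i/B})$ and $k\approx\sqrt{B}$ the prefactor is $\sqrt{k+1}\cdot\sqrt{k/B}=O(k/\sqrt{B})=O(1)$, not $O(B^{-1/4})$ as written, so your argument actually yields $\|q\|_\infty=O(\xi)$. Fortuitously, $\xi=\Theta(\tau/B)$, so $O(\xi)\le\tau B^{-3/4}$ still holds for $\tau$ small enough because the residual slack $B^{-1/4}$ absorbs the implicit constant; the conclusion survives, but the displayed chain of estimates does not.

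Second, and more seriously, the TV bound is not established. Your $L^2\!\to\! L^1$ step gives $\tfrac12\int_{-B}^B|q|\le\tfrac12\sqrt{2B}\,\|q\|_{L^2}=O(\xi\sqrt{B})$, and you then assert this equals $O(\xi k^2/\log(1/\xi))$. But with $k^2\approx B$ and $\log(1/\xi)=\Theta(\log(1/\tau))=\Theta(B^2)$, the target is $O(\xi/B)$, which is smaller than your $O(\xi\sqrt{B})$ by a factor of $B^{3/2}$; the asserted equality is false. The Cauchy--Schwarz step discards the cancellation in the oscillating polynomial $q$ that the sharper bound exploits (in DKS17 the analyticity of $g-g(\cdot-\xi)$ forces rapid decay of its Legendre coefficients on $[-B,B]$, which is what yields the $k^2/\log(1/\xi)$ factor). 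Separately, your displayed TV formula includes a spurious tail term $\tfrac12\int_{|x|>B}|g(x-\xi)-g(x)|\,dx$; since $A-g(\cdot-\xi)=q\,\mathbf{1}_{[-B,B]}$ the TV distance is exactly $\tfrac12\int_{-B}^B|q|$ with no extra term. That said, the weaker bound $O(\xi\sqrt{B})=O(\tau/\sqrt{B})$ which you do establish is still $\le\tau$ for small $\tau$, and only this qualitative fact is used downstream (to certify the TV-corruption constraint in the problem definition), so the application is not at risk; but as a reconstruction of the cited lemma's quantitative statement, the TV step has a genuine gap.
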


Setting $A := A_{\text{RME}-\Id}$ in \Cref{def:ngca_distinguishing}, we get an instance of \Cref{prob:RME_id} with the set of distributions $\cD := \{ \cN(\xi v, \Id_n) \mid v \in \{ \pm1 \}^n \}$, which satisfies the total variance condition in \Cref{prob:RME_id} as shown in \cite{diakonikolas2017statistical}. We have the following bounds on $U_A(\cdot)$ and $L_A(\cdot)$, whose proof is deferred to \Cref{app:ulbounds}. 

\begin{restatable}[Bounds on $U_{A_{\text{RME}-\Id}}$ and $L_{A_{\text{RME}-\Id}}$]{lemma}{polyperturb}\label{cor:poly_perturb_lb}
For any $0<\xi<\frac{1}{26}$, 
\[
U_{A_{\text{RME}-\Id}}(t) \leq (32t\log(1/\tau))^t\ \ \text{and}\ \ L_{A_{\text{RME}-\Id}}(t) \geq \xi \cdot\exp\left(-10\xi\sqrt{t\log(t+1)}\right),\ \ \forall t\geq 1.
\] 
\end{restatable}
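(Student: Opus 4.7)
The plan is to apply the two generic tools from the preceding subsection---\Cref{cor:generic_U_A} for $U_A$ and \Cref{cor:aroundzero} for $L_A$---to the explicit density $A_{\text{RME-Id}}(x) = g(x-\xi) + q(x)\mathbf{1}(x \in [-B, B])$ of \Cref{lem:poly_perturb_A}, exploiting both the sup-norm bound $|q(x)| \leq \tau B^{-3/4}$ (Property~3 there) and the defining relation $\xi = c\tau\log^{1/2}(1/\tau)/B^2$ with the constant $c$ taken sufficiently large.

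For the upper bound $U_A(t)$, I would split the Hermite moment as
\[
\E_A[h_i(x)] \;=\; \E_{\cN(\xi,1)}[h_i(x)] \;+\; \int_{-B}^{B} q(x)\,h_i(x)\,dx.
\]
The first summand equals $\xi^i$ via the shift identity $h_i(x+\xi) = \sum_{j=0}^{i}\binom{i}{j}\xi^{i-j}h_j(x)$ followed by orthonormality of the Hermite polynomials under $g$. The second is controlled by combining the sup-norm estimate on $q$ with Cram\'er's inequality $|h_i(x)| \leq K\sqrt{i!}\,e^{x^2/4}$, which on $[-B,B]$ gives $|h_i(x)| \lesssim \sqrt{i!}\,\tau^{-1/4}$ since $e^{B^2/4} \asymp \tau^{-1/4}$. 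Putting the pieces together yields $|\E_A[h_i]| \leq \xi^i + O(B^{1/4}\tau^{3/4}\sqrt{i!})$, which for $i \geq 1$ is easily dominated by the target $(32i\log(1/\tau))^i$: the factor $\log(1/\tau)^i \asymp B^{2i}$ on the right dwarfs both $\xi^i \leq 1$ and $\tau^{3/4}\sqrt{i!}$.

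For the lower bound $L_A(t)$, \Cref{cor:aroundzero} reduces the task to lower bounding $A(x)/(2g(x))$ uniformly on $|x| \leq R := 10\sqrt{t\log(t+1)}$. The key pointwise inequality is $|q(x)| \leq g(x-\xi)/2$ on $[-B,B]$, which follows from $\min_{|x|\leq B} g(x-\xi) \geq g(B+\xi) \gtrsim \tau^{1/2}$ together with $\tau B^{-3/4} \ll \tau^{1/2}$ (which holds once $c$ is large enough, since $\tau^{1/2} \lesssim B^{3/4}$ for small $\tau$). Hence $A(x) \geq g(x-\xi)/2$ on $[-B,B]$, so for $|x| \leq \min(B,R)$,
\[
\frac{A(x)}{2g(x)} \;\geq\; \tfrac14\cdot \frac{g(x-\xi)}{g(x)} \;=\; \tfrac14\, e^{x\xi - \xi^2/2} \;\geq\; \tfrac14\, e^{-R\xi - \xi^2/2}.
\]
For $|x|\in(B,R]$ (only relevant when $R>B$), $A(x)=g(x-\xi)$ exactly, so the same ratio bound holds with a factor $\tfrac12$ in place of $\tfrac14$. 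The worse of the two is $\tfrac14 e^{-R\xi - \xi^2/2}$, and the final inequality $\tfrac14 e^{-\xi^2/2} \geq \xi$ (which holds comfortably for $\xi < 1/26$) converts this into the target $\xi\exp(-10\xi\sqrt{t\log(t+1)})$.

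The main technical obstacle is establishing the pointwise inequality $|q(x)| \leq g(x-\xi)/2$ on $[-B,B]$: this is where the explicit form of $\xi = c\tau\log^{1/2}(1/\tau)/B^2$ has to cooperate quantitatively with the Gaussian tail $g(B+\xi)$, and the careful tracking of constants here is what ultimately fixes the admissible range of $\xi$ and $\tau$. Everything else is routine bookkeeping with Hermite estimates and the two generic corollaries from \Cref{sec:applications_toolkit}.
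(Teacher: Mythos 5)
Your proof is correct, but it takes a genuinely different route from the paper's in both halves. For the upper bound $U_A(t)$, the paper simply instantiates the generic \Cref{cor:generic_U_A} with $C:=\tau$, $B:=\sqrt{\log(1/\tau)}-\tau$, $\sigma:=1$, whereas you compute the Hermite moment directly: the shift identity gives $\E_{\cN(\xi,1)}[h_i]=\xi^i$, and Cram\'er's inequality $|h_i(x)|\le K\sqrt{i!}\,e^{x^2/4}$ controls $\int_{-B}^{B}q\,h_i$. Both routes land under $(32t\log(1/\tau))^t$ with plenty of room; yours is more explicit about where each piece of the density contributes, while the paper's is shorter since the generic corollary absorbs all the Gaussian-tail bookkeeping. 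For the lower bound $L_A(t)$, both proofs invoke \Cref{cor:aroundzero}, but they differ inside $[-B,B]$: the paper uses the \emph{second} half of Property~3 of \Cref{lem:poly_perturb_A} (the packaged inequality $g(x-\xi)+q(x)\ge\sqrt{\xi/(2\pi)}-3\xi\sqrt{\log(1/\xi)}$) together with $\sqrt{\xi/(2\pi)}\ge 2\xi$ for $\xi<1/26$, whereas you re-derive a pointwise floor $A(x)\ge g(x-\xi)/2$ from the sup-norm bound $|q(x)|\le \tau B^{-3/4}$ and the observation $\min_{|x|\le B}g(x-\xi)=g(B+\xi)\gtrsim\tau^{1/2}$. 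The re-derivation is sound and in fact gives a slightly sharper pointwise floor near $\pm B$, at the cost of redoing work that \Cref{lem:poly_perturb_A} already packaged.

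One small point to fix: the inequality $\tau B^{-3/4}\ll\tau^{1/2}$ (equivalently $\tau^{1/2}\lesssim B^{3/4}$) does not ``hold once $c$ is large enough''---it is independent of $c$ and is really a smallness condition on $\tau$. If anything, enlarging $c$ enlarges $\xi$, which \emph{shrinks} $g(B+\xi)$ and works against you; what saves the comparison is that the $B\xi$ and $\xi^2$ corrections are $O(\tau)$ and hence negligible compared to the dominant $\tau^{1/2}$ once $\tau$ is below the absolute constant demanded by \Cref{lem:poly_perturb_A}. With that attribution corrected, the argument closes cleanly.
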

Since $\tau$ and $\xi$ are both constants, this yields the following lower bound.

\begin{corollary}[Gaussian Robust Mean Estimation with Identity Covariance]
Fix $\tau\in(0,\frac{1}{26})$. For any $\eps=\eps(n)\in (0,1)$, as the dimension $n$ increases, the SoS program described in \Cref{sec:prob_statement} with degree $o(\sqrt{\frac{\eps\log n}{\log\log n}})$ cannot solve \Cref{def:ngca_distinguishing} given fewer than $n^{\frac{k+1}{2}(1-\eps)}$ samples, where $A: = A_{\text{RME}-\Id}$ with $k: = \floor{\sqrt {\log (1/\tau)}}$.

In particular, if the samples are drawn from $\cN(0, \Id)$, then the SoS program cannot rule out the existence of a hidden direction $v$ along which the input distribution is $A_{\text{RME}-\Id}$.
\end{corollary}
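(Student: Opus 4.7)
The plan is to derive this corollary as a direct application of Corollary~\ref{cor:main_UALA} to the NGCA instance whose hidden-direction distribution is $A := A_{\text{RME}-\Id}$ from Lemma~\ref{lem:poly_perturb_A}, with moment-matching parameter $k' := k+1$, using the moment bounds stated in Lemma~\ref{cor:poly_perturb_lb}. Since $\tau$ is fixed, $\xi = c \tau \log^{1/2}(1/\tau)/B^2$ and $B = \sqrt{\log(1/\tau)} - \tau$ are absolute constants independent of $n$, so all dependences on $\tau$ collapse into constants in the verification below.

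First, I would check the moment-matching hypothesis. By Lemma~\ref{lem:poly_perturb_A}(1), $A_{\text{RME}-\Id}$ matches its first $k$ moments with $\cN(0,1)$, which is exactly the hypothesis of Corollary~\ref{cor:main_UALA} with the integer parameter there set to $k+1$; i.e., $A$ matches the first $(k+1)-1 = k$ moments of $\cN(0,1)$, which is condition \eqref{eq:main_moment_match_app}. Next, I would verify the quantitative bounds \eqref{eq:main_condition_app}. Plugging in Lemma~\ref{cor:poly_perturb_lb}, we have $U_A(t) \leq (32 t \log(1/\tau))^t \leq \poly(\log n, k, t)^t$ trivially. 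For the lower bound, $L_A(t) \geq \xi \exp(-10\xi \sqrt{t \log(t+1)})$; since $\xi$ is a constant, $\exp(-10\xi\sqrt{t\log(t+1)}) \geq t^{-O(\sqrt{t})} \geq \poly(\log n, k, t)^{-t}$ for $t \geq 1$. Hence both conditions of Corollary~\ref{cor:main_UALA} are satisfied.

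Having verified the hypotheses, I would choose $\dsos = o(\sqrt{\eps \log n / \log\log n})$ and a truncation threshold $\truncation$ that is a sufficiently large polynomial in $k$ and $\log n$ (this is permissible since $k$ is a constant depending only on $\tau$), and invoke Corollary~\ref{cor:main_UALA} with $k$ replaced by $k+1$. The conclusion is that for $m < n^{(1-\eps)(k+1)/2}$ samples drawn from $\cN(0,\Id_n)$, with high probability the degree-$\dsos$ pseudo-calibration moment matrix is positive-definite, which combined with Lemma~\ref{lem:lowdegtest} (the soft NGCA constraints are satisfied) and Lemma~\ref{lem:pseudo-calib} (Booleanity) shows that degree-$\dsos$ SoS cannot refute the existence of a hidden direction with the required distributional structure. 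This is exactly the claim of the corollary.

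There is no real obstacle here: the entire technical burden of the argument lies in (i) constructing $A_{\text{RME}-\Id}$ with matched moments (Lemma~\ref{lem:poly_perturb_A}, cited from prior work), (ii) the moment bounds of Lemma~\ref{cor:poly_perturb_lb}, and (iii) the main theorem repackaged as Corollary~\ref{cor:main_UALA}. The only subtlety worth double-checking is that the reduction from NGCA with planted distribution $A_{\text{RME}-\Id}$ in a hidden direction $v \in \{\pm 1/\sqrt n\}^n$ really produces a valid instance of the TV-corruption model in Problem~\ref{prob:RME_id}, i.e., that $d_{TV}(\cN(\xi v, \Id_n)_{v^{\perp}} \times A_v,\; \cN(\xi v, \Id_n)) \leq \tau$; this follows because in the $v$-coordinate $d_{TV}(A_{\text{RME}-\Id}, \cN(\xi,1)) = O(\xi k^2/\log(1/\xi)) \leq \tau$ by Lemma~\ref{lem:poly_perturb_A}(2), and tensorization in the orthogonal directions preserves the TV distance.
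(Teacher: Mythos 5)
Your proposal is correct and follows essentially the same route as the paper's proof, which simply cites Lemma~\ref{lem:poly_perturb_A} for the $k$-moment matching and Lemma~\ref{cor:poly_perturb_lb} for the $U_A,L_A$ bounds before invoking Corollary~\ref{cor:main_UALA} with its parameter set to $k+1$. Your more detailed verification (including the shift from ``matches $k$ moments'' to the $k'-1 = k$ hypothesis of Corollary~\ref{cor:main_UALA}, the collapse of $\xi$ and $B$ to constants, and the check that the planted distribution is TV-close to $\cN(\xi v,\Id)$) is a faithful unpacking of what the paper leaves implicit.
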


\begin{proof}
The conditions for \Cref{cor:main_UALA} to apply are verified by  \Cref{lem:poly_perturb_A} and \Cref{cor:poly_perturb_lb}. The conclusion follows. 
\end{proof}

\subsubsection{Mean Estimation with Bounded $t$-Moments}

If we relax the assumption in \Cref{prob:RME_bdd_cov_learn} to include subgaussian distributions rather than strictly Gaussian ones then SQ lower bounds suggest that it is not possible to efficiently recover the mean to an error of less than $O(\tau^{1-1/t})$ where $t$ is the number of moments that are bounded \cite{diakonikolas2022robust}. This can be shown via a lower bound for \Cref{prob:mean-bdd-mom} defined below, which is known to be efficiently reducible to the problem of robust mean estimation for subgaussian distributions \cite{diakonikolas2022robust}.

\begin{problem}[Hypothesis-Testing-RME-Bounded-$t$-Moments]
\label{prob:mean-bdd-mom}
Let $t$ be a positive integer and $\tau\in(0, 1)$. Hypothesis-Testing-RME-Bounded-$t$-Moments is the problem $\cB_{TV}(\tau, \cN(0, \Id_n), \cD)$ (\Cref{def:tv_corruption}) where each $D \in \cD$ satisfies the following: (1) the mean vector $\mu$ satisfies $\| \mu \| \geq \Omega(\frac{1}{t} \tau^{1-1/t})$; (2) $D$ has subgaussian tails, i.e., for all $v \in \R^n$ and $1 \leq i \leq t$, $\E_{x \sim D}[\abs{v^T (x - \mu)}^i]^{1/i} \leq O(\sqrt{i})$. Here, the constants in the $O$ and $\Omega$ notations are independent of $\tau, t, n$. 
\end{problem}

\Cref{prob:mean-bdd-mom} is known to be efficiently reducible to the problem of robustly estimating the mean of a distribution with bounded moments (see Section 6 and the discussion therin from \cite{diakonikolas2022robust}). 
To prove an SoS lower bound against \Cref{prob:mean-bdd-mom}, we will apply \Cref{cor:main_UALA} to the instance of \Cref{def:ngca_distinguishing} with $A:=A_{\text{RME-$t$-Mom}}$ defined as below. 

\begin{lemma}[Lemma 6.14 from \cite{diakonikolas2022robust}]
\label{lem:mean-bdd-def}
Assume $k \in \mathbb{Z}^+$ and $(c(k-1))^{-(k-1)} \leq \tau\leq\frac{1}{2}$ for some positive constant $c$ independent of $k$. 
Then there exists a distribution $A$ over $\mathbb{R}$ that satisfies the following.
\begin{enumerate}
    \item  $A = (1 - \tau)Q_1 + \tau Q_2$ where $Q_1(x) = g(x - \delta) + \frac{1}{1 - \tau} p(x) \mathbf{1}_{[-1,1]}(x)$, $Q_2(x) = g(x-\delta')$, and $p(\cdot)$ is a degree $k-1$ polynomial (chosen below). 
    \item $\delta: = \frac{1}{2000(k-1)}\tau^{1-\frac{1}{k-1}}$,  $\delta' := -\frac{(1 - \tau)}{\tau}\delta$. 
    \item $p(\cdot)$ is a univariate polynomial satisfying 
    \begin{enumerate}
        \item $\int_{-1}^{1} p(x) \, dx = 0,$
        \item $\int_{-1}^{1} p(x)x \, dx = 0,$
        \item $\max\limits_{x \in [-1,1]} |p(x)| \leq 0.1$.
    \end{enumerate}
    \item $A$ matches the first $k-1$ moments with $N(0, 1)$.
    \item For all $i \geq 1$, $\E\limits_{x \sim Q_1}[\abs{v^T (x - \mu)}^i]^{1/i} \leq O(\sqrt{i})$.
\end{enumerate}
We denote this distribution by $A_{\text{RME-$(k-1)$-Mom}}$. 
\end{lemma}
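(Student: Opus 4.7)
The plan is to construct $A_{\text{RME-$(k-1)$-Mom}}$ by picking the shifts $\delta, \delta'$ first to handle the first two moments, then choosing the polynomial $p$ on $[-1,1]$ to correct the remaining moments up to order $k-1$. The shifts $\delta' = -\frac{1-\tau}{\tau}\delta$ ensure $(1-\tau)\delta + \tau\delta' = 0$, so together with the constraints $\int_{-1}^{1} p = 0$ and $\int_{-1}^{1} xp = 0$ (which zero out the contribution of $p$ to the $0$th and $1$st moments), the zeroth and first moments of $A$ match $\mathcal{N}(0,1)$ automatically.

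For moments $i = 2, \ldots, k-1$, we need $\int_{-1}^{1} x^i p(x)\,dx = r_i$, where
\[
r_i := \E_{\mathcal{N}(0,1)}[X^i] - (1-\tau)\E[(X+\delta)^i] - \tau\E[(X+\delta')^i] = -\sum_{j=2}^{i}\binom{i}{j}\E[X^{i-j}]\bigl[(1-\tau)\delta^j + \tau(\delta')^j\bigr].
\]
Substituting $\delta' = -\frac{1-\tau}{\tau}\delta$ gives $|(1-\tau)\delta^j + \tau(\delta')^j| \le |\delta|^j/\tau^{j-1}$, and with $|\delta| = \Theta(\tau^{1-1/(k-1)}/(k-1))$ this bound becomes $O(\tau^{1-j/(k-1)}/(k-1)^j) \le O(1/(k-1)^j)$ for $j \le k-1$. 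Together with the Gaussian moments $\E[X^{i-j}] \le (i-j)^{(i-j)/2}$, this yields $|r_i| = O(1/k)$ with the implicit constant independent of $k$ and $\tau$.

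The polynomial $p$ solving these moment constraints is found by expanding in the Legendre basis $p = \sum_{j=2}^{k-1} c_j P_j$ (the constraints $\int p = \int xp = 0$ force $c_0 = c_1 = 0$) and inverting a triangular-structured system with entries $\int_{-1}^1 x^i P_j(x)\,dx$. These entries are bounded and the triangular inversion plus careful bookkeeping of Legendre-to-monomial conversion coefficients gives $|c_j| \le O(|r_j|)$ up to polynomial factors in $k$. Since $|r_j| = O(1/k)$ and there are fewer than $k$ coefficients, using $|P_j(x)| \le 1$ on $[-1,1]$ yields $\|p\|_{\infty} \le O(1)$, which can be pushed below $0.1$ by absorbing the factor into the constant $2000$ in the definition of $\delta$ (or by taking the constant $c$ in the hypothesis $(c(k-1))^{-(k-1)} \le \tau$ large enough). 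This also ensures $Q_1$ is a valid (nonnegative) density, since $g(x-\delta) \ge g(1) > 0.1/(1-\tau)$ on $[-1,1]$.

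For the final subgaussian bound on $Q_1$, we note that $Q_1$ differs from the shifted Gaussian $\mathcal{N}(\delta, 1)$ only by the polynomial perturbation $\frac{1}{1-\tau}p(x)\mathbf{1}_{[-1,1]}$, which has bounded mass and is compactly supported. Thus for any direction $v$ and any $i \ge 1$, the $i$-th absolute central moment under $Q_1$ is dominated by the Gaussian contribution, giving $\E_{Q_1}[|v^\top(x-\mu)|^i]^{1/i} = O(\sqrt{i})$ by standard Gaussian moment bounds. The main obstacle is the polynomial bound step: naive monomial-basis solving gives inverse Hilbert-matrix blow-ups, so working in the Legendre basis and carefully tracking how the $\tau^{1-1/(k-1)}$-sized shifts interact with the basis transformation is essential for the $0.1$ bound to hold uniformly in $k$.
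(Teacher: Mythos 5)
The statement you are proving is Lemma 6.14 of \cite{diakonikolas2022robust}, which the paper simply cites and does not re-prove, so there is no in-paper argument to compare against; your blind derivation is a genuine reconstruction attempt. Unfortunately, it has a concrete quantitative gap at the central step.

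The problem is the bound $|r_i| = O(1/k)$. After your (correct) reduction,
$$
|r_i| \le \sum_{j=2}^{i}\binom{i}{j}\,\E[X^{i-j}]\,\bigl|(1-\tau)\delta^j + \tau(\delta')^j\bigr|,
$$
and you correctly observe that $\binom{i}{j}\le (k-1)^j/j!$ and $|(1-\tau)\delta^j+\tau(\delta')^j| = O\bigl(1/(2000(k-1))^j\bigr)$, so $\binom{i}{j}\cdot|(1-\tau)\delta^j+\tau(\delta')^j| = O(1/(j!\,2000^j))$, a bounded geometric weight. But this leaves the factor $\E[X^{i-j}] = (i-j-1)!! \sim (i/e)^{(i-j)/2}$ uncontrolled, and for $i$ near $k-1$ and $j=2$ this factor is superexponentially large in $k$. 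The product is dominated by the $j=2$ term, which gives $|r_{k-1}| \approx \E[X^{k-3}] / (2\cdot 2000^2)$, and this grows without bound as $k$ increases. Concretely, at $\tau=1/2$ (where $\delta'=-\delta$) and $k \approx 50$, one gets $|r_{k-1}| \approx \binom{k-1}{2}\,(k-4)!!\,\delta^2$ which is on the order of $10^{22}$, while any polynomial with $\|p\|_\infty \le 0.1$ on $[-1,1]$ satisfies $|\int_{-1}^1 x^i p\,dx| \le 0.2/(i+1) < 0.01$. So the claim ``$|r_i| = O(1/k)$ with constant independent of $k$ and $\tau$'' is simply false, and the downstream conclusion $\|p\|_\infty = O(1)$ does not follow. (This also means the Legendre-basis step, which you invoke without estimates, cannot rescue the argument: the diagonal entries $\int_{-1}^1 x^j P_j\,dx = 2^{j+1}(j!)^2/(2j+1)!$ decay like $2^{-j}$, so inverting the triangular system picks up factors of $2^j$ that your ``up to polynomial factors in $k$'' sweeps under the rug. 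A correct proof must exhibit genuine cancellation, not just naive norm bounds on the inverse.)

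The take-away is that the smallness of $p$ is not a consequence of the crude inequality chain you wrote; it rests on a delicate quantitative matching of the moment-gap sequence $(r_i)$ to the size of the Legendre (or Hermite-weighted) coefficients, which is exactly the content of the cited lemma. Your item 5 (subgaussian moments of $Q_1$) and the verification that $Q_1 \ge 0$ given $\|p\|_\infty \le 0.1$ are fine in outline, and the reduction to moments $2,\ldots,k-1$ via the choice of $\delta'$ is correct. But the key estimate at the heart of the construction is missing, and without it the proof does not go through.
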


Setting $A = A_{\text{RME-$t$-Mom}}$ in \Cref{def:ngca_distinguishing} gives us an instance of \Cref{prob:mean-bdd-mom} with the set of distributions $\cD := \{ \cN({\frac{1}{2000t}\tau^{1-1/t}}\cdot v, \Id)\mid v \in \{ \pm 1/\sqrt n\}^n \}$, which satisfies the conditions required. We have the following bounds on $U_A(\cdot)$ and $L_A(\cdot)$ for $A = A_{\text{RME-$t$-Mom}}$, whose proof is deferred to \Cref{app:ulbounds}. 

\begin{restatable}[Bounds on $U_{A_{\text{RME-$t$-Mom}}}$ and $L_{A_{\text{RME-$t$-Mom}}}$]{lemma}{ulboundsmom}\label{lem:bounds_A_RME_t_mom}
For any $i\geq 1$, 
\[
U_{A_{\text{RME-$t$-Mom}}}(i)\leq O(i)^i,\ \ L_{A_{\text{RME-$t$-Mom}}}(i) \geq \exp\left(-O(\sqrt{i\log(i+1)})\right).
\] 
\end{restatable}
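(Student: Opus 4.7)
The plan is to verify the two bounds separately by exploiting the explicit decomposition of $A_{\text{RME-$t$-Mom}}$ provided by \Cref{lem:mean-bdd-def}. Write $A = (1-\tau) Q_1 + \tau Q_2$ with $Q_1(x) = g(x - \delta) + \tfrac{1}{1-\tau} p(x) \mathbf{1}_{[-1,1]}(x)$ and $Q_2(x) = g(x - \delta')$, where $\delta = \tfrac{1}{2000(k-1)} \tau^{1-1/(k-1)}$ and $\delta' = -\tfrac{1-\tau}{\tau}\delta$. The first observation I would record is that the admissibility hypothesis $\tau \geq (c(k-1))^{-(k-1)}$ gives $\tau^{-1/(k-1)} \leq c(k-1)$, so both shifts obey the uniform bound $|\delta|,\,|\delta'| \leq c/2000 = O(1)$, independently of $k$ and $\tau$. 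This is the crucial quantitative input that decouples the estimates from $k$ and $\tau$.

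For the upper bound on $U_A(i)$, I would split linearly: $\E_A[h_i(x)] = (1-\tau)\,\E_{Q_1}[h_i] + \tau\,\E_{Q_2}[h_i]$. Using the Hermite generating function identity $\E_{x\sim\cN(\mu,1)}[h_i(x)] = \mu^i$, the two shifted-Gaussian contributions are $(1-\tau)\delta^i$ and $\tau(\delta')^i$, both bounded by $O(1)^i$. The only remaining term is the polynomial correction $\int_{-1}^{1} p(x)\,h_i(x)\,dx$. Bounding $|p(x)| \leq 0.1$ pointwise and applying Cramer's bound $|h_i(x)| \leq \sqrt{i!}\,e^{x^2/4}$ on $[-1,1]$ gives an $O(\sqrt{i!}) = O(i^{i/2})$ estimate, which is absorbed into $O(i)^i$. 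Summing the three contributions yields $U_A(i) \leq O(i)^i$.

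For the lower bound on $L_A(i)$, I would invoke \Cref{cor:aroundzero}, which reduces the task to lower bounding $q(x)/g(x)$ on the interval $I = \{|x| \leq 10\sqrt{i\log(i+1)}\}$. Writing $q(x)/g(x) = (1-\tau)e^{\delta x - \delta^2/2} + \tau\,e^{\delta' x - (\delta')^2/2} + \tfrac{p(x)}{g(x)}\mathbf{1}_{[-1,1]}(x)$, for $|x|>1$ only the two exponentials are present; since $|\delta|,|\delta'| = O(1)$, each exponent obeys $|\delta x - \delta^2/2| = O(\sqrt{i\log(i+1)})$, giving $q(x)/g(x) \geq \exp(-O(\sqrt{i\log(i+1)}))$. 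For $|x| \leq 1$, the polynomial term is bounded in absolute value by $0.1/g(1) = O(1)$, while the first exponential is a universal constant $c_0 > 0$ larger than this absolute value (after choosing constants appropriately in \Cref{lem:mean-bdd-def}), so $q(x)/g(x) \geq \Omega(1)$ on $[-1,1]$. Combining both ranges gives the claimed bound $L_A(i) \geq \exp(-O(\sqrt{i\log(i+1)}))$.

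The main obstacle, and the part I would be most careful with, is not any single inequality but keeping the constants uniform in $\tau$ and $k$. In particular, one has to check that the sign and magnitude of the polynomial correction $p(x)$ never overwhelm the shifted Gaussian $(1-\tau)g(x-\delta)$ on $[-1,1]$; this is where the explicit normalization $|p(x)|\leq 0.1$ from \Cref{lem:mean-bdd-def} is used. Once these uniformity checks are done, both estimates follow from the routine calculations sketched above.
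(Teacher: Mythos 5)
Your proof is correct, and for the lower bound you take essentially the same route as the paper: invoke \Cref{cor:aroundzero} to reduce to bounding $A(x)/g(x)$ on $|x|\le 10\sqrt{i\log(i+1)}$, then split into $|x|>1$ (only the two shifted-Gaussian likelihood ratios are present, and both exponents are $O(\sqrt{i\log(i+1)})$ since $|\delta|,|\delta'|=O(1)$) and $|x|\le1$ (where one must check that $(1-\tau)e^{\delta x-\delta^2/2}$ dominates the potentially negative contribution $p(x)/g(x)$, using $|p|\le 0.1$). Your observation that $\tau\ge(c(k-1))^{-(k-1)}$ forces $|\delta'|=\frac{1-\tau}{2000(k-1)}\tau^{-1/(k-1)}\le c/2000$ is exactly the uniformity-in-$k,\tau$ input the paper uses implicitly.

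For the upper bound you take a genuinely different route. The paper applies its generic tail estimate \Cref{cor:generic_U_A} with $B=C=\max\{1,|\delta'|\}$ and $\sigma=1$, which bounds $U_A(i)$ by controlling $\E_A[h_i^2]$ via Cauchy--Schwarz and a Gaussian tail bound. You instead split $\E_A[h_i]=(1-\tau)\delta^i+\tau(\delta')^i+\int_{-1}^1 p(x)h_i(x)\,dx$ via the moment identity $\E_{\cN(\mu,1)}[h_i]=\mu^i$ and bound the polynomial-correction integral with Cram\'er's inequality $|h_i(x)|\le K\sqrt{i!}\,e^{x^2/4}$. This is more direct and gives the first two contributions as clean $O(1)^i$ terms; the paper's generic corollary is cruder but requires less case-by-case bookkeeping and is already in hand. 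One small caveat: Cram\'er's inequality carries a constant factor $K\approx 1.09$ which you drop, and it is not among the tools established in the paper. If you prefer to stay within the paper's toolbox, \Cref{fact:poly_bounds} gives $|h_i(x)|\le\sqrt{2}\,i!\max\{1,|x|^i\}$, which on $[-1,1]$ yields the slightly weaker $\left|\int_{-1}^1 p\,h_i\right|=O(i!)=O(i)^i$ -- still sufficient for the claim. Either way the argument closes.

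One minor looseness in your exposition: for the $|x|\le1$ case you write that the constants in \Cref{lem:mean-bdd-def} can be ``chosen appropriately,'' but that lemma is quoted verbatim with $|p|\le 0.1$ fixed; what you actually need (and what holds) is that $(1-\tau)e^{\delta x-\delta^2/2}\ge\tfrac12 e^{-0.001}\approx 0.50$ already exceeds $\sup_{|x|\le1}|p(x)|/g(x)\le 0.1\sqrt{2\pi}e^{1/2}\approx 0.41$ without any retuning, so the lower bound is a fixed positive constant. It would be cleaner to state the inequality explicitly rather than appeal to tuning a fixed external lemma.
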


By \Cref{lem:mean-bdd-def}, $A_\text{RME-$t$-Mom}$ matches the first $k-1$ moments with that of $\cN(0, 1)$. Additionally, \Cref{lem:bounds_A_RME_t_mom} verifies the conditions for \Cref{cor:main_UALA} to apply. By \Cref{cor:main_UALA}, we get the following.

\begin{restatable}[SoS Lower Bound for Mean Estimation with Bounded $t$-Moments]{corollary}{meanbddmom}\label{cor:mean-bdd-mom}
Fix $\tau\in(0,\frac{1}{2})$ and a positive integer $k\geq 2$. For any $\eps=\eps(n)\in (0,1)$, as the dimension $n$ increases, the SoS program described in \Cref{sec:prob_statement} with degree $o(\sqrt{\frac{\eps\log n}{\log\log n}})$ cannot solve \Cref{def:ngca_distinguishing} given fewer than $n^{k(1-\eps)/2}$ samples, where $A:= A_\text{RME-$(k-1)$-Mom}$. 

In particular, if the samples are drawn from $\cN(0, \Id)$, the program cannot rule out the existence of a hidden direction $v$ along which the input distribution is $A_\text{RME-$t$-Mom}$.
\end{restatable}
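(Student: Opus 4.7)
The plan is to verify directly that the distribution $A := A_{\text{RME-}(k-1)\text{-Mom}}$ from \Cref{lem:mean-bdd-def} satisfies the hypotheses of \Cref{cor:main_UALA}, at which point the conclusion is immediate. First I would check the moment-matching condition \eqref{eq:main_moment_match_app}: this is item~4 of \Cref{lem:mean-bdd-def}, which asserts exactly that $A$ matches the first $k-1$ moments of $\cN(0,1)$.

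Next I would verify the quantitative bounds \eqref{eq:main_condition_app} on $U_A(t)$ and $L_A(t)$. By \Cref{lem:bounds_A_RME_t_mom}, $U_A(t) \leq O(t)^t$ and $L_A(t) \geq \exp\bigl(-O(\sqrt{t\log(t+1)})\bigr)$. The upper bound on $U_A$ is immediately of the form $\poly(\log n, k, t)^t$. For $L_A$, we observe that $\exp(-O(\sqrt{t\log(t+1)})) \geq \exp(-O(t\log(t+1)))$ for $t\geq 1$, and therefore $L_A(t) \geq \poly(\log n, k, t)^{-t}$ as required. These two bounds are the heart of the verification; both rely on \Cref{lem:mean-bdd-def}'s structural description of $A$ as a mixture $(1-\tau)Q_1 + \tau Q_2$, combined with the generic Gaussian-mixture machinery of \Cref{cor:gaussian_lb} (the polynomial perturbation $p(x)\mathbf{1}_{[-1,1]}(x)$ inside $Q_1$ is bounded by $0.1$ in sup norm on $[-1,1]$, so it does not affect the tail behavior needed for \Cref{cor:generic_U_A} nor the density-ratio bound around $0$ needed for \Cref{cor:aroundzero}).

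Finally, one picks $\dsos = o(\sqrt{\eps\log n / \log\log n})$ and a truncation parameter $\truncation$ which is a suitable polynomial in $k$ and $\log n$ large enough to dominate $\max\{50\dsos^2, 500\dsos/\eps, 2k\log n\}$. With these parameter choices all the preconditions of \Cref{cor:main_UALA} are met, so with probability at least $1-\delta$ over $m < n^{(1-\eps)k/2}$ i.i.d.\ samples $x_1,\ldots,x_m \sim \cN(0,\Id_n)$, the degree-$\dsos$ pseudo-calibration moment matrix for \Cref{def:ngca_distinguishing} with this choice of $A$ is positive definite. Together with \Cref{lem:lowdegtest} (soft NGCA constraints) and the Booleanity guaranteed by \Cref{lem:pseudo-calib}, this produces a valid degree-$\dsos$ pseudo-expectation for NGCA, so degree-$\dsos$ SoS cannot refute the existence of a hidden direction $v$ along which the data is distributed as $A_{\text{RME-}(k-1)\text{-Mom}}$.

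I do not anticipate a genuine obstacle here: the corollary is essentially a plug-in consequence of \Cref{cor:main_UALA} combined with the previously established \Cref{lem:mean-bdd-def} and \Cref{lem:bounds_A_RME_t_mom}. The only subtle point is the verification that the $\exp(-O(\sqrt{t\log(t+1)}))$ lower bound on $L_A(t)$ is strong enough to fit into the $\poly(\log n, k, t)^{-t}$ slot of \eqref{eq:main_condition_app}, but this is immediate since $\sqrt{t\log(t+1)} = O(t\log(t+1))$.
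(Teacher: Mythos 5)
Your proposal is correct and takes essentially the same approach as the paper: the corollary is, as you say, a plug-in consequence of \Cref{cor:main_UALA} once \Cref{lem:mean-bdd-def} supplies the $(k-1)$-moment match and \Cref{lem:bounds_A_RME_t_mom} supplies the $U_A$, $L_A$ bounds, and your observation that $\exp(-O(\sqrt{t\log(t+1)})) \geq \poly(t)^{-t}$ is exactly the (trivial but necessary) bridge into condition \eqref{eq:main_condition_app}. The paper presents this verification in the two sentences preceding the corollary rather than as a displayed proof, but the content is identical; your additional appeal to \Cref{lem:lowdegtest} and \Cref{lem:pseudo-calib} for the ``in particular'' conclusion is a harmless elaboration of what the paper leaves implicit.
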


\subsubsection{Gaussian List-decodable Mean Estimation}

In the problem of Gaussian list-decodable mean estimation (LDME), the algorithm is given a set of samples a majority of which consist of arbitrary (and possibly adversarial) outliers, while a $\tau<\frac{1}{2}$ fraction are drawn from $\cN(\mu, \Id_n)$ with an unknown $\mu$. The goal is to return a set of $O(\frac{1}{\tau})$ candidates at least one of which is close to the true mean, $\mu$. 

Information-theoretically, under mild conditions, it is possible to achieve this goal within error $O(\log\frac{1}{\tau})$ using $O_\tau(n)$ many samples (see, e.g., Section 5.2 of \cite{DK23-book}). Algorithmically, however, \cite{DKS18-list} proved SQ lower bounds which suggest that it is impossible to efficiently achieve a smaller error than $O(\tau^{-1/k})$ using fewer than $n^{\Omega(k)}$ samples (see \cite{DK23-book} for a different exposition). 

We prove an $n^{k/2}$ sample lower bound for SoS (\Cref{cor:list_ME_SoS_lb}), 
for the natural hypothesis testing version of the problem as below.

\begin{problem}[Hypothesis-Testing-LDME]
\label{prob:LDME}
Given $\tau\in(0,\frac{1}{2})$ and positive integer $k\geq 2$, the hypothesis-testing-LDME is the problem $\cB_\huber(1-\tau, \cN(0, \Id_n), \cD)$ (\Cref{def:huber_contamination}), where every $D \in \cD$ has the form $\cN(\mu_D, \Id_n)$ for some $\mu_D\in\R^n$ whose $l_2$-norm is at least $\Omega(\tau^{-1/k})$. Here, the constant in the $\Omega$ notation is independent of $\tau,k,n$.
\end{problem}

\Cref{prob:LDME} is known to be efficiently reducible to the problem of Gaussian LDME~\cite{DKS18-list}. To prove an SoS lower bound against \Cref{prob:LDME}, we will apply \Cref{cor:main_UALA} to the instance of \Cref{def:ngca_distinguishing} with $A:=A_{\text{LDME}}$, defined below. 

\begin{lemma}[\cite{DKS18-list}; see also Lemma 8.21 in \cite{DK23-book}]
\label{lem:LDME_A}
For each $k \in \Z_+$, there exists a univariate distribution $A_{\text{LDME}} = \tau \cN(\mu, 1) + (1-\tau) E$ for some distribution $E$ and $\mu = 10 c_k \tau^{-1/k}$ where $c_k$ depends only on $k$, such that $A_{\text{LDME}}$ matches the first $k$ moments with $\cN(0, 1)$. Moreover, the pdf of $E$ is upper bounded by two times the pdf of $\cN(0, 1)$ pointwise. 
\end{lemma}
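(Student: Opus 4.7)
} The plan is to construct $A_{\text{LDME}}$ explicitly by writing the outlier component $E$ as a multiplicative perturbation of the standard Gaussian and then verifying pointwise non-negativity and the $\leq 2g$ upper bound via a careful choice of the scale factor $c_k$ in $\mu = 10 c_k \tau^{-1/k}$. First, I would reduce the moment-matching problem to a problem about $E$. Since $\E_{\cN(\mu,1)}[h_i(x)] = \mu^i$ (a standard identity obtainable from the translation formula $h_i(x+\mu)=\sum_{j\leq i}\binom{i}{j}\mu^{i-j}h_j(x)$), the condition that $A = \tau\cN(\mu,1)+(1-\tau)E$ matches the first $k$ moments of $\cN(0,1)$ is equivalent to $\E_E[h_i] = -\tfrac{\tau}{1-\tau}\mu^i$ for $i=1,\ldots,k$, together with $\int E = 1$.

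Next, I would propose $E(x) = g(x)\bigl(1 + p(x)\bigr)$ for a polynomial $p$ of degree $k$ with $\E_{\cN(0,1)}[p]=0$. Using that $\{h_i/\sqrt{i!}\}_{i\geq 0}$ is an orthonormal basis under the $\cN(0,1)$ inner product, writing $p(x)=\sum_{i=1}^{k} c_i\,h_i(x)$ makes the moment-matching equations diagonal: $c_i\,i! = -\tfrac{\tau}{1-\tau}\mu^i$, so $c_i = -\tfrac{\tau\mu^i}{(1-\tau)i!}$. Thus $p$ is fully determined and in particular $p(x) = -\tfrac{\tau}{1-\tau}\sum_{i=1}^{k}\tfrac{\mu^i h_i(x)}{i!}$. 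The conditions $E\geq 0$ and $E\leq 2g$ both reduce to $|p(x)|\leq 1$ pointwise on the support we care about.

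The main obstacle is that $p$ is a non-constant polynomial, so $|p(x)|\leq 1$ cannot hold on all of $\R$; the construction must be modified away from the ``bulk'' region. I would proceed as in \cite{DKS18-list}: fix a bounded interval $[-B,B]$ with $B=\Theta(\sqrt{k\log(1/\tau)})$, define $E(x):=g(x)(1+p(x))$ on $[-B,B]$ and define $E$ on $\R\setminus[-B,B]$ as a small correction (e.g.\ a combination of $g(x)$ and a bounded-support density) chosen so that total mass is $1$ and the first $k$ moments remain matched; since the Gaussian mass outside $[-B,B]$ is $\tau^{\omega(1)}$ the correction can be made negligible and still satisfy $E\leq 2g$. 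The quantitative heart of the argument is to bound $|p(x)|$ on $[-B,B]$. Here one uses $|h_i(x)|\leq (2i)^{i/2}\max(1,|x|)^i$ (a standard Hermite pointwise bound) and plugs in $\mu=10c_k\tau^{-1/k}$, giving
\[
|p(x)| \;\leq\; \frac{\tau}{1-\tau}\sum_{i=1}^{k}\frac{\mu^i (2i)^{i/2} B^i}{i!}
\;\leq\; \tau\cdot (C\, c_k\, \tau^{-1/k}\cdot B)^k\cdot\frac{1}{k!^{1/2}}
\]
for some absolute constant $C$. Since the dominant $i=k$ term scales like $\tau\cdot\mu^k\cdot B^k$ and $\tau\mu^k = (10 c_k)^k$, choosing $c_k$ small enough (depending on $k$ and the polylogarithmic factor $B$) makes this quantity at most $1/2$ on $[-B,B]$. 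This simultaneously ensures $E\geq 0$ and $E\leq 2g$ on $[-B,B]$, and the tail correction is designed to preserve both bounds. Combining these steps produces the desired $A_{\text{LDME}}$ and finishes the lemma; the moment-matching count is $k$ by construction, and the $\mu$-scaling $10 c_k\tau^{-1/k}$ is exactly what the $\tau\mu^k = O(1)$ threshold dictates.
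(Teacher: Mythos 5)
The paper does not prove this lemma; it cites it from~\cite{DKS18-list} and~\cite{DK23-book}, so there is no internal proof to compare against. Assessed on its own, your plan has the right overall structure: write $E$ as a multiplicative perturbation $g(1+p)$ of the Gaussian, solve for the degree-$k$ polynomial $p$ via Hermite moment matching, and bound $|p|$ pointwise on a truncated window so that $0 \le E \le 2g$. The gap is the choice $B = \Theta(\sqrt{k\log(1/\tau)})$, which is incompatible with the clause of the lemma that $c_k$ depend only on $k$. By your own estimate, the dominant term of $|p|$ on $[-B,B]$ is roughly $\tau\mu^k B^k/\sqrt{k!} = (10c_k)^k B^k/\sqrt{k!}$; with $B$ growing like $\sqrt{\log(1/\tau)}$ this diverges as $\tau\to 0$ unless $c_k$ decays with $\tau$. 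You even flag this by saying $c_k$ should depend ``on $k$ and the polylogarithmic factor $B$,'' but $B$ depends on $\tau$, so this is precisely what the statement rules out. The $\sqrt{\log(1/\tau)}$-sized window is appropriate for the $A_{\text{RME}-\Id}$ and $A_{\text{COV-close}}$ constructions because there the perturbation itself scales like $\tau$ or $\delta$; here the perturbation size $\tau\mu^k = (10c_k)^k$ is a constant, and the window must be as well.

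The repair is to take $B$ to be a constant depending only on $k$ (e.g.\ $B = 10\sqrt{k}$). Then each moment deficit $\tau\mu^i/(1-\tau)$ has magnitude at most a $k$-dependent constant times $c_k$ (for $c_k\le 1$), and solving for a degree-$k$ correction $\eta$ on $[-B,B]$ yields $\sup_{[-B,B]}|\eta|$ bounded by a $k$-dependent constant times $c_k$, which can be made smaller than $\min_{[-B,B]} g$ by choosing $c_k$ as a function of $k$ alone. Note, however, that with a constant-size window the ``full-line orthogonality plus negligible tail correction'' device also fails: $\int_{|x|>B}g(x)p(x)h_i(x)\,dx$ is a constant-sized quantity, not $\tau^{\omega(1)}$. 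The clean fix, in the style of $A_{\text{RME}-\Id}$ and $A_{\text{COV-close}}$ from this paper, is additive: set $E = g + \eta$ on $[-B,B]$ and $E = g$ on $\R\setminus[-B,B]$, with $\eta$ the unique degree-$k$ polynomial solving the $(k+1)\times(k+1)$ truncated linear system $\int_{-B}^{B}\eta(x)h_i(x)\,dx = -\tau\mu^i/(1-\tau)$ for $i=1,\dots,k$ together with $\int_{-B}^{B}\eta(x)\,dx = 0$. Since the truncated Gram matrix and its inverse have norms depending only on $k$, the construction closes with $c_k$ depending only on $k$, as required.
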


Setting $A = A_{\text{LDME}}$ in \Cref{def:ngca_distinguishing} gives an instance of \Cref{prob:LDME} with $\cD := \{ \cN(\mu v, \Id_n) \mid v \in \{\pm1/\sqrt n \}^n \}$, which satisfies the conditions on $\cD$ required by the problem definition. We now state and prove our lower bound. 
\begin{corollary}[SoS Lower Bound for Gaussian List-decodable Mean Estimation]
\label{cor:list_ME_SoS_lb}
Fix $\tau\in(0,\frac{1}{2})$ and positive integer $k\geq 2$. For any $\eps=\eps(n)\in (0,1)$, as the dimension $n$ increases, the SoS program described in \Cref{sec:prob_statement} with degree $o(\sqrt{\frac{\eps \log n}{\log\log n}})$ cannot solve \Cref{def:ngca_distinguishing} given fewer than $n^{\frac{k+1}{2}(1-\eps)}$ samples, where $A: = A_{\text{LDME}}$. 

In particular, if the samples are drawn from $\cN(0, \Id)$, the program cannot cannot rule out the existence of a hidden direction $v$ along which the input distribution is $A_{\text{LDME}}$.
\end{corollary}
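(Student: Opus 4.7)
The plan is to instantiate Corollary \ref{cor:main_UALA} with the one-dimensional distribution $A := A_{\text{LDME}}$ from Lemma \ref{lem:LDME_A}, and then read off the SoS lower bound on Problem \ref{def:ngca_distinguishing}. Since Lemma \ref{lem:LDME_A} guarantees that $A_{\text{LDME}}$ matches the first $k$ moments of $\mathcal{N}(0,1)$, we apply Corollary \ref{cor:main_UALA} with moment-matching parameter $k' := k+1$ (so that ``$A$ matches the first $k'-1 = k$ moments''), which yields the advertised sample threshold $n^{(1-\eps)(k+1)/2}$. The SoS degree and the truncation parameter $\truncation$ are chosen exactly as in the statement of Corollary \ref{cor:main_UALA}, which ensures condition \eqref{eq:main_condition} once the relevant bounds on $U_A$ and $L_A$ are in hand.

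The only nontrivial thing to verify is the hypothesis \eqref{eq:main_condition_app}, namely $U_A(t)\le \poly(\log n,k,t)^t$ and $L_A(t)\ge \poly(\log n,k,t)^{-t}$ for every $t\ge 1$. Since $\tau$ and $k$ are fixed constants, it actually suffices to obtain $U_A(t)\le \poly(t)^t$ and $L_A(t)^{-1}\le \poly(t)^t$, with constants depending only on $\tau$ and $k$. For this I will invoke Corollary \ref{cor:gaussian_lb}, whose hypotheses apply essentially by construction: Lemma \ref{lem:LDME_A} gives $A_{\text{LDME}} = \tau\, \mathcal{N}(\mu,1) + (1-\tau)E$ with $\mu = 10 c_k \tau^{-1/k}$, a constant independent of $n$, and the density of $E$ is bounded pointwise by twice the density of $\mathcal{N}(0,1)$. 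Taking $\mu'=0$, $\sigma'=1$, and any $B\ge 1$ in Corollary \ref{cor:gaussian_lb}, the hypotheses of that corollary are satisfied with constants depending only on $\tau$ and $k$, yielding
\[
U_{A_{\text{LDME}}}(t) \le O_{\tau,k}(t)^{2t},\qquad L_{A_{\text{LDME}}}(t) \ge \tfrac{\tau}{2}\exp\!\big(-O_{\tau,k}(t\log(t+1))\big),
\]
both of which are within the allowable range for Corollary \ref{cor:main_UALA}.

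Plugging these bounds into Corollary \ref{cor:main_UALA} produces, with probability greater than $1-\delta$ over the $m<n^{(1-\eps)(k+1)/2}$ i.i.d.\ samples drawn from $\mathcal{N}(0,\mathrm{Id}_n)$, a positive-definite pseudo-calibrated moment matrix at SoS degree $o(\sqrt{\eps\log n / \log\log n})$. By Lemma \ref{lem:pseudo-calib} and Lemma \ref{lem:lowdegtest} the pseudo-calibration also satisfies Booleanity and the soft NGCA constraints, so it is a valid pseudo-expectation operator of the required degree. Hence SoS at this degree cannot refute the existence of a hidden direction $v$ along which the input distribution is $A_{\text{LDME}}$, which is precisely the conclusion of the corollary.

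There is no serious obstacle here: the heavy lifting is done by the main theorem (Theorem \ref{thm:main-formal}, packaged as Corollary \ref{cor:main_UALA}) and by the generic mixture toolkit in Section \ref{sec:applications_toolkit}. The only point requiring any care is matching the moment-index conventions, namely remembering that ``matches first $k$ moments'' corresponds to the parameter $k+1$ in Corollary \ref{cor:main_UALA}, which is what gives the exponent $(k+1)/2$ rather than $k/2$ in the sample threshold.
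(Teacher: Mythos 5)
Your proposal is correct and takes essentially the same approach as the paper: apply \Cref{cor:main_UALA} with $A:=A_{\text{LDME}}$, using \Cref{lem:LDME_A} for the moment-matching (with the parameter shift $k\to k+1$ accounting for the exponent $(k+1)/2$) and \Cref{cor:gaussian_lb} with $\mu=10c_k\tau^{-1/k}$, $\mu'=0$, $\sigma=\sigma'=1$, $B\geq 1$ for the $U_A$, $L_A$ bounds. The extra remark about \Cref{lem:pseudo-calib} and \Cref{lem:lowdegtest} is a nice clarification of what ``cannot refute'' means, but the paper leaves it implicit since it is established once in \Cref{subsec:pEproperties}.
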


\begin{proof}
The conditions for \Cref{cor:main_UALA} to apply are verified by \Cref{lem:LDME_A} and \Cref{cor:gaussian_lb}, where we set $\mu = 10 c_k \tau^{-1/k}$, $\mu' = 0$, $\sigma = \sigma' = 1$ and recall that $k$ is a fixed constant beforehand. The conclusion follows. 
\end{proof}

\subsubsection{Gaussian Robust Covariance Estimation in Spectral Norm}

We now consider the problem of robustly estimating the covariance (RCE) of a Gaussian up to a constant multiplicative factor in spectral norm, given that a $\tau$ fraction of the samples may be arbitrarily corrupted.  
It is information-theoretically possible to solve this problem of  with $O_\tau(n)$ samples. The SQ lower bound shown in \cite{diakonikolas2017statistical} suggests that any algorithm using $o(n^2)$ samples requires super-polynomial runtime. 

\begin{problem}[Hypothesis-Testing-RCE in Spectral Norm]
\label{prob:RCE_spectral_learn}
Let $0 < \tau < \frac{1}{2}$. 
Hypothesis-Testing-RCE is the problem
$\cB_{\huber}(\tau, \cN(0, \Id_n), \cD)$ (\Cref{def:huber_contamination}), 
where each $D \in \cD$ has the form $\cN(0, \Sigma_D)$  =where either $\Sigma_D \prec \frac{1}{2}\Id_n$ or $\Sigma_D \succ 2 \Id_n$. 
\end{problem}

\Cref{prob:RCE_spectral_learn} is known to be efficiently reducible to the problem of Gaussian RCE in spectral norm~\cite{diakonikolas2017statistical}. 
To prove a SoS lower bound against \Cref{prob:RCE_spectral_learn}, we will apply \Cref{cor:main_UALA} to the instance of \Cref{def:ngca_distinguishing} with the distribution in the hidden direction set to $A_{\text{COV}}$, defined below. Given $c\in(0, \frac{1}{6})$ and $n>1$, let $\tau:= \frac{c}{\log n}$, and $A_{\text{COV}}$ is the following three-component mixture. 
\[ 
A_{\text{COV}} := (1-\tau)\cdot \cN(0, 1-\frac{4/5}{1-\tau}) + \frac{\tau}{2}\cdot \cN(\sqrt{4/(5\tau)}, 1) + \frac{\tau}{2}\cdot \cN(-\sqrt{4/(5\tau)}, 1). 
\] 
 
Setting $A = A_{\text{COV}}$ in \Cref{def:ngca_distinguishing} provides an instance of \Cref{prob:RCE_spectral_learn} with the set of distributions $\cD := \{ \cN(0, \Id_n - \frac{4/5}{1-\tau}~vv^T) \mid v \in \{\pm1/\sqrt n \}^n \}$. Here, note that $\cN(0, \Id_n - \frac{4/5}{1-\tau}~vv^T)$ is derived from the first component of $A_{\text{COV}}$, and the condition on $\cD$ is satisfied. We now state and prove our lower bound. 

\begin{corollary}[SoS Lower Bound for Robust Covariance Estimation, Multiplicative]\label{lem:RCE_spectral_SoS_lb}
Fix $c\in(0,\frac{1}{6})$. For any $\eps=\eps(n)\in (0,1)$, as the dimension $n$ increases, the SoS program described in \Cref{sec:prob_statement} with degree $o(\sqrt{\frac{\eps\log n}{\log\log n}})$ cannot solve \Cref{def:ngca_distinguishing} given fewer than $n^{2(1-\eps)}$ samples, where $A: = A_{\text{COV}}$ with $k:=4$ and $\tau:=\frac{c}{\log n}$.

In particular, if the samples are drawn from $\cN(0, \Id)$, the program cannot show that there is no hidden direction $v$ such that the input has distribution $A_{\text{COV}}$ in this direction.
\end{corollary}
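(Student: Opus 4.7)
The plan is to follow the same template used for the previous applications in this section: verify the hypotheses of \Cref{cor:main_UALA} for the specific distribution $A_{\text{COV}}$ (with $k=4$), and then read off the sample-size/degree lower bound. The only nontrivial steps are (i) checking that $A_{\text{COV}}$ matches the first three moments of $\cN(0,1)$ and (ii) getting $U_{A_{\text{COV}}}(t)$ and $L_{A_{\text{COV}}}(t)$ bounds that are polylog$(n)$-polynomial in $t$, which is delicate because here $\tau = c/\log n$ depends on $n$.

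For the moment matching, I would compute directly. The distribution $A_{\text{COV}}$ is symmetric around $0$, so all odd moments vanish, matching $\cN(0,1)$ on the first and third moments. For the second moment, I would simply sum
\[
(1-\tau)\left(1-\tfrac{4/5}{1-\tau}\right) + \tau\cdot\tfrac{4}{5\tau} + \tau = 1,
\]
using that $\cN(\pm\sqrt{4/(5\tau)},1)$ has second moment $1+4/(5\tau)$. The fourth moment will differ from $3$ because of the $16/(25\tau^2)$ contribution from $\mu_2^4$, so $k=4$ is the correct matching parameter.

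The main technical step is bounding $U_{A_{\text{COV}}}$ and $L_{A_{\text{COV}}}$ via \Cref{cor:gaussian_lb}. I would write $A_{\text{COV}} = \alpha\,\cN(\mu,\sigma^2) + (1-\alpha)E$ with $\alpha = 1-\tau$, $\mu = 0$, $\sigma^2 = 1 - \tfrac{4/5}{1-\tau}$, and $E = \tfrac12\cN(\mu_2,1)+\tfrac12\cN(-\mu_2,1)$ where $\mu_2 = \sqrt{4/(5\tau)}$. Since $\tau \le c/\log n$ with $c<1/6$, we have $\sigma^2 \ge 1/25$ (in particular $\sigma \ge 1/5$). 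To fit $E$ into the hypothesis of \Cref{cor:gaussian_lb}, I would take $B := 2|\mu_2| = O(\sqrt{\log n})$, $\mu' := 0$, $\sigma' := 2$. The key elementary inequality is that for $|x|\ge B$, one has $(x\pm\mu_2)^2 \ge x^2/4$, so each Gaussian component of $E$ has pdf at most $\tfrac{1}{\sqrt{2\pi}}\exp(-x^2/8)$, which is bounded by twice the pdf of $\cN(0,4)$. Plugging these into \Cref{cor:gaussian_lb} yields
\[
U_{A_{\text{COV}}}(t) \le O(t B)^{2t} = O(t\sqrt{\log n})^{2t},
\qquad
L_{A_{\text{COV}}}(t) \ge \tfrac{1-\tau}{2\sigma}\exp\!\big(-O(t\log(t+1)) / \sigma^2\big) \ge \exp(-O(t\log(t+1))),
\]
both of which lie in the $\poly(\log n,k,t)^{\pm t}$ regime required by \Cref{cor:main_UALA}.

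The main obstacle I anticipate is the $n$-dependence of $\tau$, which causes $\sigma^2$ to approach $1/25$ rather than $1$ and inflates $\mu_2$ to $\Theta(\sqrt{\log n})$; I would need to keep the constant $c<1/6$ throughout to guarantee $\sigma^2 \ge 1/25$ so that the exponent in the $L_A$ bound remains $O(t\log(t+1))$, independent of $n$. Once the moment matching and the $U_A$/$L_A$ bounds are established, \Cref{cor:main_UALA} with $k=4$ directly produces the $n^{2(1-\eps)}$ sample lower bound for degree $o(\sqrt{\eps\log n/\log\log n})$ SoS, which is the claimed statement; the ``in particular'' clause is immediate since the NGCA instance defined by $A_{\text{COV}}$ with $v\in\{\pm 1/\sqrt{n}\}^n$ is precisely an instance of \Cref{prob:RCE_spectral_learn} (as explained in the paragraph preceding the corollary).
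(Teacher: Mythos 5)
Your proposal is correct and follows the same template as the paper's own proof: verify moment matching, apply \Cref{cor:gaussian_lb} to bound $U_{A_{\text{COV}}}$ and $L_{A_{\text{COV}}}$, then invoke \Cref{cor:main_UALA}. However, your parameter choices when invoking \Cref{cor:gaussian_lb} are actually more careful than the paper's. The paper's printed proof sets $\mu':=\sqrt{4/(5\tau)}$, $\sigma':=1$, $B:=1$, but then the hypothesis $|\mu'|\leq B$ of \Cref{cor:gaussian_lb} is violated (since $\mu' \gg 1$), and moreover the bimodal tail $E=\tfrac12\cN(\mu_2,1)+\tfrac12\cN(-\mu_2,1)$ is \emph{not} dominated pointwise outside $[-1,1]$ by twice the pdf of $\cN(\mu',1)$ (near $x=-\mu_2$ the left bump of $E$ dwarfs the Gaussian centered at $+\mu'$). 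The paper also takes $\sigma=1$ when the true variance of the main component is $1-\frac{4/5}{1-\tau}\approx 1/5$, which does not affect the final order of the bound but is not the actual parameter. Your choices $\mu'=0$, $\sigma'=2$, $B=2\mu_2$, together with the elementary tail inequality $(x\pm\mu_2)^2\geq x^2/4$ for $|x|\geq 2\mu_2$, make the domination hypothesis genuinely hold, and your $\sigma^2\geq 1/25$ lower bound (using $c<1/6$, hence $\tau<1/6$) correctly absorbs the $1/\sigma^2$ factor into an $O(\cdot)$ constant in $L_A$. The resulting bounds $U_A(t)\leq O(t\sqrt{\log n})^{2t}$ and $L_A(t)\geq \exp(-O(t\log(t+1)))$ land in the regime required by \Cref{cor:main_UALA}, and the conclusion follows exactly as you write. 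In short: same route as the paper, but you supply a legitimate instantiation of \Cref{cor:gaussian_lb} where the paper's stated parameters do not literally satisfy its hypotheses.
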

\begin{proof}
By a direct calculation, we see that $A_{\text{COV}}$ matches the first 3 moments with $\cN(0, 1)$ (cf. Theorem 6.1 in  \cite{diakonikolas2017statistical}). By applying \Cref{cor:gaussian_lb} to $A_{\text{COV}}$ where we set $\alpha:=1-\tau\geq \frac{1}{2}$, $\mu:=0$, $\sigma:=1$, $\mu':=\sqrt{4/(5\tau)}=O(\sqrt{\frac{\log n}{c}})$, $\sigma':=1$, $B:=1$, we get
\[
U_{A_{\text{COV}}}(t)\leq O\left(\frac{\log n}{c}\right)^{t}\ \ \text{and}\ \ L_A(t)\geq (t+1)^{-O(t)},\ \ \forall t\geq 1.
\] 
Thus, $A_{\text{COV}}$ satisfies the conditions in \Cref{cor:main_UALA}, from which the conclusion follows. 
\end{proof}

\subsubsection{Gaussian Robust Covariance Estimation to Small Additive Error }

Here, we want to estimate the Gaussian covariance up to an additive error, given the guarantee that the covariance matrix $\Sigma$ is bounded by $\Id \preceq \Sigma \preceq 2\Id$. This is information theoretically possible 
up to an error of $O(\tau)$ using $O_\tau(n)$ samples. However, it is unclear how to achieve this with an efficient algorithm. SQ lower bounds in \cite{diakonikolas2017statistical} suggest that polynomial time algorithms cannot recover the unknown covariance to an error smaller than $o(\tau \log\frac{1}{\tau})$ without using $n^{\Omega\left((\log\frac{1}{\tau})^c\right)}$ samples, for some positive constant $c$. 

We demonstrate an SoS lower bound against the following hypothesis testing version of the problem of Gaussian RCE, where the goal is to recover the mean up to an error of $o(\tau \log\frac{1}{\tau})$. 

\begin{problem}[Hypothesis-Testing-RCE-Additive $\tau \log\frac{1}{\tau}$]
\label{prob:RCE2}
Given $\tau \in(0, \frac{1}{2})$, Hypothesis-Testing-RCE-Additive is the problem $\cB_{TV}(\tau, \cN(0, \Id_n), \cD)$ (\Cref{def:tv_corruption}), where every $D \in \cD$ has the form $\cN(0, \Sigma)$, $\| \Sigma - \Id \|_2 \geq\Omega(\tau \log \frac{1}{\tau})$, and the constant in the $\Omega$ notation is independent of $\tau,n$.
\end{problem}

\Cref{prob:RCE2} is known to be efficiently reducible to the problem of Gaussian RCE in spectral norm~\cite{diakonikolas2017statistical}. To prove an SoS lower bound against \Cref{prob:RCE2}, we will apply \Cref{cor:main_UALA} to the instance of \Cref{def:ngca_distinguishing} with the distribution in the hidden direction set to $A_{\text{COV-close}}$, defined in \Cref{lem:def_A_cov_close}. 

\begin{lemma}[$A_{\text{COV-close}}$ from Proposition 5.13. in \cite{diakonikolas2017statistical}]
\label{lem:def_A_cov_close}
Suppose $0 < \delta < \frac{1}{3}$, $1\ll k \ll (\log\frac{1}{\delta})^{\frac{1}{4}}$ and $k\ll B \ll (\log \frac{1}{\delta})^{1/2}$, where $Y \ll X$ means that $X>cY$ for some absolute positive constant $c$. Let $g(x)$ denote the density of $\cN(0, 1)$. Define $$A_{\text{COV-close}} := \frac{g(x/(1-\delta))} {1-\delta} - q(x) \mathbf{1}(x \in [-B, B]),$$ where $q(x)$ is the unique degree-$k$ polynomial for $A_{\text{COV-close}}$ to match the first $k$ moments of $\cN(0, 1)$. Then, $A_{\text{COV-close}}$ satisfies the following conditions:
\begin{enumerate}
    \item[(i)] $A_{\text{COV-close}}$ and $N(0, 1)$ agree on the first $k$ moments.
    \item[(ii)] $d_{\text{TV}}\Big(A_{\text{COV-close}},\ N\left(0,(1 - \delta)^2\right)\Big) \leq O(\delta k^4/\log(1/\delta)) \leq O(\delta)$.
    \item[(iii)] \label{cond:ACOV,q} 
    $|q(x)| \leq 10\delta k^4 B^{-3}$ in the interval $[-B, B]$. 
\end{enumerate}
\end{lemma}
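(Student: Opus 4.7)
The plan is to define $q(x)$ through the moment-matching conditions, solve for it using a well-conditioned linear system expressed in the (truncated) Hermite basis, and then push the resulting coefficient bounds to the pointwise estimate (iii), from which (ii) follows directly.

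First, I would set up the linear system. Requiring $A_{\text{COV-close}}$ to agree with $\cN(0,1)$ on moments $1,\ldots,k$ (or equivalently on $h_i(x)$ for $i=1,\ldots,k$) is equivalent to
\[
\int_{-B}^{B} q(x)\, h_i(x)\, dx \;=\; \int_{-\infty}^{\infty}\!\left(\tfrac{g(x/(1-\delta))}{1-\delta}-g(x)\right)h_i(x)\,dx \;=:\; d_i, \qquad 0\le i\le k.
\]
A direct calculation using the Hermite generating function shows $d_0=0$, $d_i=0$ for odd $i$, and $|d_i|\lesssim \delta\, c^i$ for an absolute constant $c$ (since the LHS is essentially $\E_{\cN(0,(1-\delta)^2)}[h_i]-\E_{\cN(0,1)}[h_i]$, whose Taylor expansion in $\delta$ starts at order one). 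This gives $k+1$ linear constraints on the $k+1$ coefficients of $q$.

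Next, to solve for $q$, I would write it in the Gaussian-reweighted Hermite basis $q(x) = g(x) \sum_{j=0}^k c_j\, h_j(x)/\sqrt{j!}$. The system becomes $M\vec{c} = \vec{d}$ with Gram matrix $M_{ij} = \int_{-B}^{B} g(x)\, h_i(x)h_j(x)/\sqrt{i!j!}\, dx$. Under the hypotheses $k\ll(\log 1/\delta)^{1/4}$ and $k\ll B\ll(\log 1/\delta)^{1/2}$, the tail $\int_{|x|>B} g(x) h_i(x) h_j(x)/\sqrt{i!j!}\, dx$ decays like $e^{-B^2/2}\cdot\mathrm{poly}(B,k)$, which is much smaller than any fixed constant. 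Hence $M = \Id + E$ with $\|E\|\ll 1$, so $M$ is invertible with $\|M^{-1}\|=O(1)$, which gives both existence and uniqueness of $q$, together with the coefficient bound $|c_j|\lesssim \delta c^j$.

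Finally, to obtain (iii), I would use the coefficient bound: for $x\in[-B,B]$,
\[
|q(x)| \;=\; g(x)\cdot\Big|\textstyle\sum_{j=0}^{k} c_j h_j(x)/\sqrt{j!}\,\Big| \;\lesssim\; \delta \cdot g(x)\cdot(CB)^{k}
\]
after bounding each Hermite polynomial on $[-B,B]$ by $(CB)^j$. The key point is that on $[-B,B]$, $g(x)\le 1$ and the exponent $k$ is so small (at most $(\log 1/\delta)^{1/4}$) that $(CB)^k$ is subpolynomial in $1/\delta$; together with the Gaussian factor this yields the claimed $|q(x)|\le 10\delta k^4 B^{-3}$. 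Claim (i) then holds by construction, and claim (ii) follows because
\[
d_{\mathrm{TV}}\!\left(A_{\text{COV-close}},\cN(0,(1-\delta)^2)\right) = \tfrac12\!\int_{-B}^{B}|q(x)|\,dx \;\le\; B\cdot\max|q| \;\lesssim\; \delta k^4 B^{-2} \;\le\; O\!\left(\delta k^4/\log(1/\delta)\right),
\]
using $B=\Theta(\sqrt{\log(1/\delta)})$ at the upper end of its allowed range.

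The main obstacle I expect is getting the pointwise bound in (iii) with the sharp $B^{-3}$ dependence rather than the weaker $B^{-2}$ that a naive argument produces. This likely requires a more careful Hermite-coefficient estimate that exploits cancellation near $x=\pm B$ (e.g., by using that $q$ must integrate to zero against $1, x, x^2$ forces $q$ to change sign and be small near the endpoints), or an integration-by-parts/Bernstein-type bound for polynomials on $[-B,B]$ combined with the stronger $L^2$-estimate from the Gaussian-weighted linear system. Managing the interplay between the polynomial degree $k$, the truncation radius $B$, and the small parameter $\delta$ in this tight regime is the delicate part of the argument.
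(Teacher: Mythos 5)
This lemma is not proved in the paper; it is quoted directly from Proposition~5.13 of~\cite{diakonikolas2017statistical}, so there is no ``paper's own proof'' to compare against. Judging the proposal on its own merits, there are two genuine gaps.

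\textbf{The ansatz for $q$ contradicts the statement.} The lemma requires $q$ to be ``the unique degree-$k$ polynomial'' satisfying the moment constraints, but you posit $q(x) = g(x)\sum_j c_j h_j(x)/\sqrt{j!}$, which is a Gaussian times a polynomial, not a polynomial. This is not a cosmetic change: the whole point of your Gram-matrix argument (that $\int_{-B}^B g(x)h_i h_j/\sqrt{i!j!}\,dx \approx \delta_{ij}$) works precisely because of that extra factor of $g(x)$. If you drop it and work with an honest polynomial in the plain Hermite or monomial basis, the relevant Gram matrix on $[-B,B]$ with Lebesgue measure is badly conditioned (Hilbert-matrix type), and the clean invertibility you rely on disappears. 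The proof in~\cite{diakonikolas2017statistical} instead writes $q$ in the \emph{Legendre} basis rescaled to $[-B,B]$, which is orthogonal on $[-B,B]$ under Lebesgue measure, so the linear system diagonalizes exactly.

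\textbf{The pointwise bound fails by a large margin.} Even accepting your ansatz, the bound $|q(x)| \lesssim \delta\cdot g(x)\cdot(CB)^k$ is nowhere near the claimed $10\delta k^4 B^{-3}$. At $x=0$ the $j=2$ term alone already gives $|q(0)| = \Theta(\delta)$ (since $g(0)=\Theta(1)$ and $c_2 h_2(0)/\sqrt{2!} = \Theta(\delta)$), while $10\delta k^4 B^{-3}$ is $o(\delta)$ in the stated parameter regime ($k^4/B^3 \to 0$ as $\delta\to 0$ when $k\asymp(\log\frac{1}{\delta})^{1/4}$, $B\asymp(\log\frac{1}{\delta})^{1/2}$). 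You flag at the end that the sharp $B^{-3}$ factor is the hard part, but what you have is not a lossy version of the target bound --- it is qualitatively too weak. The $B^{-3}$ in the true proof comes from a specific mechanism: the $i$-th moment mismatch $(i-1)!!((1-\delta)^i - 1) = O(\delta\, i!!)$, once normalized by the $B^{i+1}$ scaling of $\int_{-B}^B x^{i+j}\,dx$, contributes $O(\delta\, i!!\, B^{-(i+1)})$, and the dominant term is $i=2$, giving $\Theta(\delta B^{-3})$. Combined with the uniform bound $|P_j|\le 1$ for Legendre polynomials on $[-1,1]$ (and the $\sqrt{2j+1}$ $L^2$-to-$L^\infty$ factor), this yields the claimed pointwise estimate. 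Your Gaussian-weighted Hermite basis does not see this cancellation, because the low-degree Hermite modes carry the full $\Theta(\delta)$ amplitude all the way to $x=0$.

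To summarize: your plan of ``solve a well-conditioned linear system in an orthogonal basis, bound the coefficients, convert $L^2$ to $L^\infty$'' is the right skeleton, but the choice of basis is wrong, and the bounds it produces are off by a factor of $B^{k+3}/k^4$ rather than by the small constant you suggest. The Legendre-based argument is not a refinement of what you wrote; it is a different argument.
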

Setting $A = A_{\text{COV-close}}$ in \Cref{def:ngca_distinguishing} gives us an instance of \Cref{prob:RCE2} with the set of distributions $\cD := \{ \cN(0, \Id_n - (1-(1-\delta)^2) vv^T) \mid v \in \{ \pm 1/\sqrt n \}^n \}$, which satisfies the conditions in \Cref{prob:RCE2} with respect to the parameter $\tau$ where $\delta=2\tau(\log\frac{1}{\tau})/k^4$. We now state bounds on $U_A(\cdot)$ and $L_A(\cdot)$, whose proof is deferred to \Cref{app:ulbounds}. 

\begin{restatable}[Bounds on $U_{A_{\text{COV-close}}}$ and $L_{A_{\text{COV-close}}}$]{lemma}{acovclose}\label{lem:bounds_A_cov_close}
Assume $0<\delta<\frac{1}{4}$ and let the absolute constants be sufficiently large in the construction of $A_{\text{COV-close}}$ in \Cref{lem:def_A_cov_close}. Then:
\[
U_{A_{\text{COV-close}}}(t)\leq O(t^2\log\frac{1}{\delta})^{t}
\ \ \text{and}\ \ 
L_{A_{\text{COV-close}}}(t) \geq 
\exp\left(- O( \delta t\log (t+1) + \delta\log\frac{1}{\delta} )\right),
\ \ \forall t\geq 1.
\]
\end{restatable}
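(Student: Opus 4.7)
My plan is to establish the two bounds using the general toolkit from Section~\ref{sec:applications_toolkit}. For $U_{A_{\text{COV-close}}}$, I would apply Corollary~\ref{cor:generic_U_A} with $C = 0$ and $\sigma = 1 - \delta$. The required tail bound is satisfied since outside $[-B, B]$ the density of $A_{\text{COV-close}}$ coincides with the pdf of $A_1 := \cN(0,(1-\delta)^2)$, which is pointwise at most $\exp(-x^2/(2(1-\delta)^2))$ (because $(1-\delta)\sqrt{2\pi} \geq 1$ when $\delta \leq 1/4$). The corollary then gives $U_{A_{\text{COV-close}}}(t) \leq (8t)^t(2B^{2t} + t^{t/2})$; since the construction forces $B^2 \leq C\log(1/\delta)$ for a fixed constant $C$, this simplifies to the claimed $O(t^2\log(1/\delta))^t$.

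For the lower bound on $L_{A_{\text{COV-close}}}$, the key strategy is to compare $A_{\text{COV-close}}$ pointwise to $A_1$ and then reduce to the Gaussian case. Concretely, I first show the inequality
\[
A_{\text{COV-close}}(x) \;\geq\; \tfrac{1}{2}\, A_1(x) \qquad \text{for all } x \in \mathbb{R}.
\]
Outside $[-B, B]$ this is an equality, so all content lies inside $[-B, B]$. There the correction satisfies $|q(x)| \leq 10\delta k^4 B^{-3}$ by property \eqref{cond:ACOV,q}, while $A_1$ is bounded below on $[-B, B]$ by $A_1(B) \geq e^{-B^2}/3$ (using $\delta \leq 1/4$). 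Because the construction allows the absolute constants to be chosen so that $B^2 \leq c\log(1/\delta)$ for a sufficiently small $c$ and $k^4 \leq \log(1/\delta)$, a short calculation gives $A_1(B) \geq 20\delta k^4 B^{-3}$, which yields the pointwise comparison and hence $\E_{A_{\text{COV-close}}}[p^2] \geq \tfrac12 \E_{A_1}[p^2]$ for every polynomial $p$.

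To finish, I apply Corollary~\ref{cor:aroundzero} to the Gaussian $A_1$. With $T = 10\sqrt{t\log(t+1)}$, it gives $L_{A_1}(t) \geq \min_{|x|\leq T} A_1(x)/(2g(x)) = \tfrac{1}{2(1-\delta)} e^{-DT^2}$, where $D := \delta(2-\delta)/(2(1-\delta)^2) \leq 2\delta$. Combining with the previous step yields $L_{A_{\text{COV-close}}}(t) \geq e^{-200\delta t\log(t+1)}/4$, which is at least $\exp(-O(\delta t\log(t+1) + \delta\log(1/\delta)))$ since the extra $\delta\log(1/\delta)$ summand only relaxes the target. The main obstacle is the calibration of the absolute constants in the construction---specifically the inequality $A_1(B) \geq 20\delta k^4 B^{-3}$ for all sufficiently small $\delta$ under the stated hypotheses on $k$ and $B$---which is exactly what the clause ``let the absolute constants be sufficiently large'' in the lemma's hypotheses delivers.
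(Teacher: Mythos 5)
Your proposal is correct and uses the same key tools as the paper's proof: Corollary~\ref{cor:generic_U_A} for $U_A$ and Corollary~\ref{cor:aroundzero} plus the bound $|q(x)| \le 10\delta k^4 B^{-3}$ from Lemma~\ref{lem:def_A_cov_close} for $L_A$. The one reorganization is in the $L_A$ argument: you first prove the pointwise domination $A_{\text{COV-close}}(x) \ge \frac12 A_1(x)$ and then apply Corollary~\ref{cor:aroundzero} to the Gaussian $A_1 = \cN(0,(1-\delta)^2)$, whereas the paper applies Corollary~\ref{cor:aroundzero} directly to $A_{\text{COV-close}}$ and does the case split on $|x| \lessgtr B$ when lower-bounding the density ratio. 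Your factoring is a bit cleaner and in fact shows the $\delta\log\frac{1}{\delta}$ summand in the exponent is unnecessary; the two are otherwise equivalent in substance. One small caveat shared with the paper: the leading multiplicative constant (your $\tfrac14$) is not literally absorbable into $\exp(-O(\delta t\log(t+1)+\delta\log\frac1\delta))$ in the limit $\delta \to 0$, but since $t \ge 1$ it is trivially dominated by $(t+1)^{-O(t)}$, which is all that is needed for the downstream application via Corollary~\ref{cor:main_UALA}.
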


An application of \Cref{cor:main_UALA} now gives us the following.

\begin{corollary}[SoS Lower Bound for Covariance Estimation, Additive]
Fix $\tau\in(0,\frac{1}{20})$ and a positive integer $k\geq 2$. For any $\eps=\eps(n)\in (0,1)$, as the dimension $n$ increases, the SoS program described in \Cref{sec:prob_statement} with degree $o\left(\sqrt{\frac{\eps\log n}{\log\log n}}\right)$ cannot solve \Cref{def:ngca_distinguishing} given fewer than $n^{\frac{k+1}{2}(1-\eps)}$ samples, where $A:= A_{\text{COV-close}}$ with $\delta:=2\tau(\log\frac{1}{\tau})/k^4$.

In particular, if the samples are drawn from $\cN(0, \Id)$, the program cannot rule out the existence of a hidden direction $v$ along which the input distribution is $A_{\text{COV-close}}$.  
\end{corollary}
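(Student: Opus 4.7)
The plan is a direct application of Corollary~\ref{cor:main_UALA} to the NGCA instance in Definition~\ref{def:ngca_distinguishing} obtained by setting $A := A_{\text{COV-close}}$, following exactly the template used for the other applications in this section. Note the off-by-one convention: Corollary~\ref{cor:main_UALA} assumes $A$ matches the first $k'-1$ moments with $\cN(0,1)$ and yields an $n^{k'(1-\eps)/2}$-sample lower bound. Since Lemma~\ref{lem:def_A_cov_close}(i) asserts that $A_{\text{COV-close}}$ matches the first $k$ moments with $\cN(0,1)$, applying the corollary with parameter $k' := k+1$ produces the target sample bound $n^{(k+1)(1-\eps)/2}$. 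So the entire task reduces to verifying the hypotheses of Corollary~\ref{cor:main_UALA}, namely the moment-matching condition~\eqref{eq:main_moment_match_app} (already immediate from Lemma~\ref{lem:def_A_cov_close}(i)) and the polynomial growth condition~\eqref{eq:main_condition_app} on $U_A(t)$ and $L_A(t)$.

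The crucial observation for the growth condition is that, since $\tau \in (0, 1/20)$ and the integer $k \geq 2$ are both \emph{fixed constants independent of $n$}, the parameter $\delta := 2\tau\log(1/\tau)/k^4$ is itself a fixed positive constant. In particular $\delta < 1/4$ (indeed $\delta \leq \log(20)/(10 \cdot 16) < 0.02$), so Lemma~\ref{lem:bounds_A_cov_close} applies and gives
\[
U_{A_{\text{COV-close}}}(t) \leq O\!\left(t^2 \log(1/\delta)\right)^t
\qquad\text{and}\qquad
L_{A_{\text{COV-close}}}(t) \geq \exp\!\left( -O\!\left(\delta t\log(t+1) + \delta\log(1/\delta)\right)\right).
\]
With $\delta$ and $\log(1/\delta)$ both bounded by constants depending only on $k$ and $\tau$, the upper bound becomes $U_A(t) \leq C_{k,\tau}^{\,t}\, t^{2t} \leq \poly(\log n, k, t)^t$ and the lower bound becomes $L_A(t) \geq e^{-C'_{k,\tau}\, t\log(t+1)} \geq \poly(\log n, k, t)^{-t}$ for every $t \geq 1$, exactly matching the format required by~\eqref{eq:main_condition_app}.

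With both hypotheses of Corollary~\ref{cor:main_UALA} in hand, choose the SoS degree $\dsos = o(\sqrt{\eps\log n / \log\log n})$ as stated, and let $\truncation$ be a sufficiently large polynomial in $k$ and $\log n$ with $\truncation \geq \max\{50\dsos^2, 500\dsos/\eps, 2(k+1)\log n\}$. The corollary then concludes that, with high probability over $m < n^{(k+1)(1-\eps)/2}$ i.i.d.\ samples from $\cN(0, \Id_n)$, the pseudo-calibrated degree-$\dsos$ moment matrix with truncation $\truncation$ is positive definite, so degree-$\dsos$ SoS cannot refute the existence of a hidden direction $v$ in which the input has distribution $A_{\text{COV-close}}$. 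This is precisely the claim of the corollary, and the reduction to the RCE problem is already recorded immediately before the statement.

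There is no genuine technical obstacle here: all the work was done when constructing $A_{\text{COV-close}}$ (Lemma~\ref{lem:def_A_cov_close}) and when establishing the generic bounds (Lemma~\ref{lem:bounds_A_cov_close}), and the Gaussian-mixture style arguments of Section~\ref{sec:applications_toolkit} are already adapted to polynomially-perturbed Gaussians in the appendix. The only mild subtlety worth stating explicitly in the write-up is the bookkeeping for the off-by-one between the corollary's parameter and the $k$ in the statement, together with the verification $\delta < 1/4$ which justifies invoking Lemma~\ref{lem:bounds_A_cov_close}.
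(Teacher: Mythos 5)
Your proposal matches the paper's proof, which is simply the remark ``An application of \Cref{cor:main_UALA} now gives us the following'' immediately after \Cref{lem:bounds_A_cov_close}; you are merely spelling out the verification steps (the off-by-one between ``matches first $k$ moments'' in \Cref{lem:def_A_cov_close} and the $k'-1$ convention in \Cref{cor:main_UALA}, the check $\delta<1/4$ so \Cref{lem:bounds_A_cov_close} applies, and the translation of its $U_A,L_A$ bounds into condition~\eqref{eq:main_condition_app}) that the paper leaves implicit. The approach and all the ingredients are identical.
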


\subsubsection{Learning Mixtures of $k$ Gaussians}

\label{subsubsec:gaussian}
We now discuss the problem of learning a mixture of $k$ Gaussians ($k$-GMM): given samples drawn from an unknown mixture of $k$ Gaussians, the goal is to recover a mixture that is close in total variation to the original. This problem can be information-theoretically solved in $\poly(k, n)$ samples. The SQ lower bound in \cite{diakonikolas2017statistical} suggests that any algorithm that takes less than $n^{O(k)}$ samples runs in time $2^{\Omega(n^c)}$ for some constant $c$. Building on the SQ-hard instances, \cite{gupte2022continuous} (see also \cite{bruna2021continuous}) gave a reduction-based cryptographic lower bound assuming sub-exponential hardness of LWE, which shows that if $k=\log(n)$ and the number of samples is $\poly(n)$ then the runtime has to be quasi-polynomial in $n$. 

Here we prove an unconditional information-computation tradeoff for SoS algorithms, which is  somewhat stronger compared to the cryptographic lower bound mentioned above. Specifically, we show an SoS lower bound against a hypothesis testing version of the problem of $k$-GMM, defined in \Cref{prob:gmm_learning}. In particular, setting $k = \log(n)$ or even $k=\omega(1)$, our lower bound suggests that both the runtime and the sample complexity are quasi-polynomial in $n$.

\begin{problem}[Hypothesis-Testing-$k$-GMM]\label{prob:gmm_learning}
Let $0<\gamma <1$. Hypothesis-Testing-$k$-GMM is the problem $\cB_{\huber}(0, \cN(0, \Id_n), \cD)$ (\Cref{def:huber_contamination}), where every $D \in \cD$ is a mixture of $k$ Gaussians such that each pair of the Gaussians are $1-\gamma$ apart in total variation and $d_{TV}(D, \cN(0, \Id)) \geq \frac{1}{2}$. 
\end{problem}

\Cref{prob:gmm_learning} is known to be efficiently reducible to the problem of learning a $k$-GMM~\cite{diakonikolas2017statistical}. To prove an SoS lower bound against \Cref{prob:gmm_learning}, we will apply \Cref{cor:main_UALA} to the instance of \Cref{def:ngca_distinguishing} with the distribution in the hidden direction set to the ``parallel pancakes'' distribution $A_{\text{GMM}}$, introduced in~\cite{diakonikolas2017statistical}, 
defined below:
\[ 
A_{\text{GMM}} := \sum_{i=1}^k w_i \cN(\sqrt{1-\delta} ~\mu_i, \delta).
\]
Here, $\mu_i$s and $w_i$s are from the Gaussian quadrature, i.e.,  $\mu_i = \sqrt{2} x_i$ for $x_1<\cdots<x_k$ zeros of the $k$th Probabilist's Hermite polynomial and $w_i = \frac{k!}{k^2 He_{i-1}(x_i)^2}$, and $\delta = \Theta((k^2 \log^2(k + 1/\gamma))^{-1})$.

Setting $A = A_{\text{GMM}}$ in \Cref{def:ngca_distinguishing} gives an instance of \Cref{prob:gmm_learning} with the set of distributions $\cD := \{\sum_{i=1}^k w_i \cN(v \mu_i, \Id_{n} - (1-\delta)vv^T) \mid v \in \{\pm 1/\sqrt n \}^n \}$,  which satisfies the conditions on $\cD$ required by the problem definition (see \cite{diakonikolas2017statistical}). We now state and prove our lower bound. 

\begin{corollary}[SoS Lower Bound for learning GMMs]
\label{lem:gmm_SoS_lb}
Fix $\gamma>0$ and a positive integer $k\geq 2$. For any $\eps=\eps(n)\in (0,1)$, as the dimension $n$ increases, the SoS program described in \Cref{sec:prob_statement} with degree $o(\sqrt{\frac{\eps\log n}{\log\log n}})$ cannot solve \Cref{def:ngca_distinguishing} given fewer than $n^{k(1-\eps)}$ samples, where $A: = A_{\text{GMM}}$. 

In particular, if the samples are drawn from $\cN(0, \Id)$, the program cannot rule out the existence of a hidden direction $v$ along which the input distribution is $A_{\text{GMM}}$.
\end{corollary}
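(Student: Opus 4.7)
The plan is to apply \Cref{cor:main_UALA} to the one-dimensional distribution $A_{\text{GMM}}$ with the moment-matching parameter set to $2k$ (not $k$), since the Gauss--Hermite quadrature rule on $k$ nodes exactly integrates polynomials of degree up to $2k-1$ against $\cN(0,1)$. This will yield the sample lower bound $n^{(1-\eps)\cdot 2k/2} = n^{(1-\eps)k}$ stated in the corollary.

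First I would verify the moment-matching condition \eqref{eq:main_moment_match_app}. By construction, the nodes $\mu_i=\sqrt{2}x_i$ and weights $w_i$ form the Gauss--Hermite quadrature, which means $\sum_{i=1}^k w_i p(\mu_i) = \E_{\cN(0,1)}[p]$ for every polynomial $p$ of degree at most $2k-1$. The scaling $\sqrt{1-\delta}\mu_i$ together with variance $\delta$ is precisely the ``parallel pancakes'' reparameterization (Proposition~5.11 of \cite{diakonikolas2017statistical}): letting $X\sim A_{\text{GMM}}$, for any $j\leq 2k-1$ we have $\E[X^j]=\sum_{i}w_i\,\E_{Y\sim\cN(0,1)}[(\sqrt{1-\delta}\mu_i+\sqrt{\delta}Y)^j]$, and expanding by the binomial theorem and collecting powers of $\delta$ reduces to a sum of $k$-node quadrature integrals of polynomials of degree at most $2k-1$, each of which equals the corresponding moment of $\cN(0,1)$. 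Hence $A_{\text{GMM}}$ matches the first $2k-1$ moments with $\cN(0,1)$.

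Next I would bound $U_{A_{\text{GMM}}}(t)$ and $L_{A_{\text{GMM}}}(t)$ by reducing to \Cref{cor:gaussian_lb}. Since the zeros of the $k$th Probabilist's Hermite polynomial satisfy $|x_i|\leq O(\sqrt{k})$, we have $|\mu_i|\leq O(\sqrt{k})$, and $\delta=\Theta(1/(k^2\log^2(k+1/\gamma)))\leq 1$. Pick any component, say the one with largest weight $w_{i^\star}\geq 1/k$, and write $A_{\text{GMM}}=w_{i^\star}\cN(\sqrt{1-\delta}\mu_{i^\star},\delta)+(1-w_{i^\star})E$ where $E$ is the remaining mixture. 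The tail of $E$ is dominated by twice the density of $\cN(0,O(k))$ outside a bounded interval of radius $B=O(\sqrt{k})$. Applying \Cref{cor:gaussian_lb} (with $\alpha=w_{i^\star}$, $\sigma^2=\delta$, $|\mu|,|\mu'|,B=O(\sqrt{k})$) yields
\begin{align*}
U_{A_{\text{GMM}}}(t)&\leq\bigl(O(t\sqrt{k})\bigr)^{2t}=\poly(t,k)^t,\\
L_{A_{\text{GMM}}}(t)&\geq\frac{1}{2k\sqrt{\delta}}\exp\!\Bigl(-\frac{O(t\log(t+1))+O(k)}{\delta}\Bigr)\geq\poly(t,k,\log(1/\gamma))^{-t},
\end{align*}
where the last inequality uses $\delta^{-1}=O(k^2\log^2(k+1/\gamma))$. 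Since $k\leq(\log n)^{O(1)}$ is implicit in the degree regime considered, both bounds are $\poly(\log n, t)^{\pm t}$ as required by \eqref{eq:main_condition_app}.

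With these two ingredients in hand, applying \Cref{cor:main_UALA} to \Cref{def:ngca_distinguishing} with $A:=A_{\text{GMM}}$ and moment-matching parameter $2k$ immediately yields: for $m<n^{(1-\eps)\cdot 2k/2}=n^{(1-\eps)k}$ samples from $\cN(0,\Id_n)$, the degree-$o(\sqrt{\eps\log n/\log\log n})$ pseudo-calibration moment matrix is PSD w.h.p. In particular the SoS program cannot refute the existence of a hidden direction $v$ along which the data is distributed as $A_{\text{GMM}}$. The one routine bookkeeping point to handle carefully is the tail bound on $E$ in the decomposition above: for each component $\cN(\sqrt{1-\delta}\mu_j,\delta)$ with $j\neq i^\star$, its density at $x\in\R$ is bounded by $\frac{1}{\sqrt{2\pi\delta}}\exp(-(x-\sqrt{1-\delta}\mu_j)^2/(2\delta))$, and since $\delta\leq 1$ this is at most twice the density of $\cN(\mu_j,1)$ outside a small interval around $\mu_j$, which is the mild obstacle one needs to verify to invoke \Cref{cor:gaussian_lb} cleanly; beyond that, the argument is entirely a verification of hypotheses.
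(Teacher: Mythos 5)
Your proof takes essentially the same route as the paper's: verify that $A_{\text{GMM}}$ matches the first $2k-1$ moments of $\cN(0,1)$ (the paper cites Proposition 4.2 of \cite{diakonikolas2017statistical} while you re-derive it via Gauss--Hermite quadrature exactness plus the binomial expansion), then bound $U_A$ and $L_A$ through \Cref{cor:gaussian_lb}, and finish by applying \Cref{cor:main_UALA} with moment parameter $2k$ to obtain the $n^{k(1-\eps)}$ sample bound. The one place you over-reach is the remark that ``$k\leq(\log n)^{O(1)}$ is implicit in the degree regime'' and that the $L_A$ bound is therefore $\poly(\log n,t)^{-t}$: with $\delta=\Theta(1/(k^2\log^2(k+1/\gamma)))$, the factor $\exp\bigl(-O(k)/\delta\bigr)=\exp\bigl(-O(k^3\log^2(k+1/\gamma))\bigr)$ from \Cref{cor:gaussian_lb} is \emph{not} $\poly(\log n,k,t)^{-t}$ once $k$ grows polylogarithmically with $n$ (for, e.g., $t=1$ it is doubly-exponential in $\log k$), so the conditions of \Cref{cor:main_UALA} would fail in that regime. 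The corollary as stated fixes $k$ and $\gamma$ as constants, so $\delta$ is a constant and the hypotheses hold trivially, which is exactly what the paper's terse proof relies on. Your argument is correct for the corollary as stated, but the parenthetical claim of validity for growing $k$ should be dropped.
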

\begin{proof}
\cite[Proposition 4.2]{diakonikolas2017statistical} 
shows that $A$ matches the first $2k-1$ moments of $\cN(0, 1)$ and $|\mu_i| \leq \sqrt{k}$. What remains to be shown is that $U_A$ and $L_A$ are appropriately bounded for these distributions. Since $\delta < 1$ and $\gamma$ are constants, \Cref{cor:gaussian_lb} verifies the conditions for \Cref{cor:main_UALA}, an application of which gives the conclusion. 
\end{proof}

\subsubsection{Learning Mixtures of Two Separated Gaussians With Common Covariance}
Here we study the task of learning $2$-GMMs with an unknown common covariance whose components are separated. This is a classical special case of the GMM learning problem: a simple (computationally efficient) spectral algorithm succeeds using $O(n^2)$ samples. On the other hand, $O(n)$ samples suffice information-theoretically. \cite{Davis2021ClusteringAM} gave evidence that this gap is inherent by establishing an $\Omega(n^2)$ sample lower bound for low-degree tests. Moreover, by leveraging the SoS lower bound technology of~\cite{ghosh2020sum}, \cite{Davis2021ClusteringAM} also showed an SoS lower bound of $\Omega(n^{3/2})$ on the sample size.

It turns out that the hard family of instances of \cite{Davis2021ClusteringAM} is an instance of NGCA where the univariate distribution $A$ is discrete. While such instances (with discrete $A$) can be solved efficiently via lattice-basis reduction with $O(n)$ samples~\cite{DK22LLL,ZSWB22LLL}, one can appropriately add ``noise'' to the instance so that the lower bound still applies while LLL-type algorithms fail. We construct such a modified instance and establish an SoS lower bound against a hypothesis testing version of the problem (defined in \Cref{prob:2gaussians}), suggesting more than $n^{2(1-\eps)}$ samples are required. 

\begin{problem}[Hypothesis-Testing-2-GMM (Common Covariance)]\label{prob:2gaussians}
Let $\delta > 0$. Hypothesis-Testing-2-GMM with a common covariance is the problem $\cB_{\huber}(0, \cN(0, \Id_n), \cD)$ where every $D \in \cD$ is of the form $\frac 1 2 \cN(-\mu_D, \Sigma_D) + \frac 1 2 \cN(\mu_D, \Sigma_D)$, where $0 \preceq \Sigma_D$ and $\mu_D^T \Sigma_D^{-1} \mu_D > 1/\delta^2$. 
\end{problem}

\Cref{prob:2gaussians} is known to be efficiently reducible to the problem of estimating a $2$-GMM with unknown common covariance (see Lemma 8.5 of \cite{DK23-book}). To prove an SoS lower bound against \Cref{prob:2gaussians}, we will apply \Cref{cor:main_UALA} to the instance of \Cref{def:ngca_distinguishing} with the distribution in the hidden direction set to $A_{\text{MIX}} = \frac{1}{2} \mathcal{N}(-\mu, \delta^2\mu^2) + \frac{1}{2} \mathcal{N}(\mu, \delta^2\mu^2)$, where $\delta,\mu > 0$ and $\delta\mu\neq 1$. 

This gives us an instance of \Cref{prob:2gaussians} with 
\[
\cD := \{ (1/2) \cN(-\mu v, \Id_n + (\delta^2 \mu^2 -1) vv^T) + (1/2) \cN(\mu v, \Id_n + (\delta^2 \mu^2 -1) vv^T) \mid v \in \{\pm 1 / \sqrt n \}^n \} \;.
\]
This satisfies the conditions on $\cD$ required by the problem definition, since 
(1) $0 \preceq \Id_n + (\delta^2 \mu^2 -1) vv^T$ and (2) $\mu^2 (v^T (\Id_n + (\delta^2 \mu^2 -1) vv^T)^{-1} v) = \mu^2 \|v\|^2 ( 1 + \delta^2\mu^2 -1 )^{-1} = 1/\delta^2$. 

We now state and prove our lower bound. 

\begin{corollary}[SoS Lower Bound for \Cref{prob:2gaussians}]
Fix $\delta \in (0, 1)$ and let $\mu = 1/\sqrt{1+\delta^2}$. For any $\eps=\eps(n)\in (0,1)$, as the dimension $n$ increases, the SoS program described in \Cref{sec:prob_statement} with degree $o(\sqrt{\frac{\eps\log n}{\log\log n}})$ cannot solve \Cref{def:ngca_distinguishing} given fewer than $n^{2(1-\eps)}$ samples, where $A:= A_{\text{MIX}}$. 

In particular, if the samples are drawn from $\cN(0, \Id)$, the program cannot rule out the existence of a hidden direction $v$ along which the input distribution is $A_{\text{MIX}}$.
\end{corollary}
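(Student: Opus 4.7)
The strategy is to verify the three hypotheses of \Cref{cor:main_UALA} with $A = A_{\text{MIX}}$ and $k = 4$, after which the claimed degree-$o(\sqrt{\eps\log n/\log\log n})$ SoS lower bound with sample threshold $n^{k(1-\eps)/2} = n^{2(1-\eps)}$ follows immediately. The three items to check are moment matching up to order $k-1 = 3$, an upper bound $U_{A_{\text{MIX}}}(t)\leq\poly(\log n,k,t)^t$, and a matching lower bound $L_{A_{\text{MIX}}}(t)\geq\poly(\log n,k,t)^{-t}$. The verification that the induced planted family $\cD$ is a valid instance of \Cref{prob:2gaussians} has already been carried out in the paragraph preceding the corollary.

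For moment matching, I note that $A_{\text{MIX}}$ is symmetric about $0$, so all its odd moments vanish and match those of $\cN(0,1)$; its second moment equals $\mu^2+\delta^2\mu^2 = \mu^2(1+\delta^2)$, which by the choice $\mu = 1/\sqrt{1+\delta^2}$ equals $1$. Hence the first three moments of $A_{\text{MIX}}$ agree with $\cN(0,1)$, justifying $k = 4$. A short direct computation gives the fourth moment as $(1+6\delta^2+3\delta^4)/(1+\delta^2)^2 = 3 - 2/(1+\delta^2)^2 \neq 3$, confirming that $k = 4$ is tight. The side condition $\delta\mu\neq 1$ in the definition of $A_{\text{MIX}}$ also holds automatically, since $\delta\mu = \delta/\sqrt{1+\delta^2} < 1$ for $\delta\in(0,1)$.

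For the bounds on $U_A$ and $L_A$, I write $A_{\text{MIX}} = \tfrac{1}{2}\cN(-\mu,\delta^2\mu^2) + \tfrac{1}{2}\cN(\mu,\delta^2\mu^2)$ and apply \Cref{cor:gaussian_lb} with $\alpha = 1/2$, center $-\mu$, variance $\delta^2\mu^2$, residual distribution $E = \cN(\mu,\delta^2\mu^2)$ (so $\mu' = \mu$ and $\sigma' = \delta\mu$), and $B = 1$. This choice of $B$ is valid since $|\mu|,|\mu'| = 1/\sqrt{1+\delta^2} \leq 1$ and since the density of $E$ is pointwise at most twice itself. Substituting into the corollary yields $U_{A_{\text{MIX}}}(t) \leq O(t)^{2t}$ and
\[
L_{A_{\text{MIX}}}(t) \;\geq\; \tfrac{1}{4\delta\mu}\exp\!\left(-\,\tfrac{C\bigl(t\log(t+1) + 1\bigr)}{(\delta\mu)^{2}}\right)
\]
for a universal constant $C$. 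Because $\delta$ is a fixed constant and $\mu$ depends only on $\delta$, both bounds are of the form $\poly(\log n,k,t)^{\pm t}$ required by \Cref{cor:main_UALA}, and invoking that corollary gives the claimed lower bound.

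The most delicate step is the $L_A$ lower bound: a priori, the prefactor $1/(\delta\mu)^2$ in the exponent could threaten the polynomial-base requirement of \Cref{cor:main_UALA}, since that hypothesis demands $L_A(t)^{-1}\leq \poly(\log n,k,t)^{t}$. However, since $\delta$ is fixed and $\mu$ is bounded below by a $\delta$-dependent constant, $1/(\delta\mu)^2$ is an $O(1)$ multiplier of $t\log(t+1)$, so $L_{A_{\text{MIX}}}(t)^{-1}$ is at most $t^{C'(\delta)\,t}$ for some finite $C'(\delta)$, which comfortably fits the required form. Apart from this, every step reduces to a direct substitution into the generic bounds already established in the toolkit, so no new structural idea is needed beyond packaging $A_{\text{MIX}}$ as a two-Gaussian mixture with bounded parameters.
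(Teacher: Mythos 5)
Your proposal is correct and follows the same route as the paper: verify moment matching (odd moments vanish by symmetry, second moment is $\mu^2(1+\delta^2)=1$, fourth moment deviates), bound $U_A$ and $L_A$ via \Cref{cor:gaussian_lb} applied to the two-component Gaussian mixture with bounded means and variances, and then invoke \Cref{cor:main_UALA}. You simply spell out the parameter substitution into \Cref{cor:gaussian_lb} and the $k=4$ sharpness explicitly, while the paper compresses this into a single sentence.
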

\begin{proof}
$A_{\text{MIX}}$ matches the first 3 moments of $\cN(0, 1)$, which can be verified by noting that the variance is $1$ and the odd moments are $0$. Also, $A_{\text{MIX}}$ is a mixture of $2$ Gaussians with means and variances bounded by a constant so \Cref{cor:gaussian_lb} applies.  This verifies the conditions for \Cref{cor:main_UALA} to apply, and the conclusion follows. 
\end{proof}

\begin{remark}
A natural question is whether our SoS lower bound for approximate parallel pancakes can be used to derive an SoS lower bound for the Sherrington-Kirkpatrick problem. We give a sketch of how this can be done in \Cref{app:sos-sherrington-kirkpatrick}. 
\end{remark}
	
	\bibliographystyle{alphaabbrv}
	\bibliography{allrefs}
	
\newpage

 \appendix
    % !TeX root = arxiv.tex

\section*{Appendix}

\section{Pseudo-Calibration Calculation}\label{app:pseud-calib}
In this section we compute the pseudo-calibration when the planted vector $v$ is drawn uniformly at random from $\{\pm {1\over\sqrt{n}}\}^n$. We let the planted distribution $\pl$ be as follows: first choose $v\sim \{\pm{1\over\sqrt{n}}\}^n$ uniformly at random, then choose i.i.d. samples $x_1,\ldots, x_m$ 
\[
x_i=\big((x_i)_{v^\perp},\ (x_i)_v)\sim N(0,\Id_{n-1}\big)_{v^\perp}\times A_v
\]
where $A$ is the one-dimensional distribution of interest in direction $v$. 

For $a=(a_1,\dots,a_m)\in(\N^n)^m$ and $I\in \N^n$, recall that the pseudo-calibration is then given by
\begin{equation}\label{eq:calib2}
\forall I\in\binom{[n]}{\leq \dsos}\quad \pE(\vsos^I):=\Sum_{a\in(\N^n)^m:\ \totalsize^I(a)\leq \truncation}\ \E_{(x,v)\sim \Dpl}\left[ v^I {h_{a}\over\sqrt{a!}} \right]\cdot {h_{a}\over\sqrt{a!}},
\end{equation}
where $\totalsize^I(a):=\normo{a}+\left\vert I\cup \{i\in[n]:\ (\exists u\in[m])a_u(i)>0\}\right\vert + {(1-\eps)k\over 2}\cdot\left\vert\{u\in[m]:\ (\exists i\in[n]) a_u(i)>0\}\right\vert$. Note that $\pE(\vsos^I)$ is a function on the inputs $(x_1,...,x_m) \in (\R^n)^m$. 

\begin{lemma}\label{lem:calib}
	For any $I \subseteq [n]$, the pseudo-calibration value is:
	\begin{equation}\label{appeq:calib-calc}
		\begin{aligned}
			\pE(\vsos^I)=\Sum_{\substack{a\in(\N^n)^m:\ \totalsize^I(a)\leq 
                \truncation,\vspace{2pt}\\ 
                \text{and }(\forall i\in[n])\ I(i)+\sum_{u}a_u(i)\text{ is even}}}
                n^{-{|I|+|a|\over2}}\Prod_{u=1}^m \E_A [h_{|a_u|}] {h_{a_u}\over a_u!}
		\end{aligned}
	\end{equation}
\end{lemma}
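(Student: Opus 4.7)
The plan is to evaluate the inner expectation $\E_{(x,v)\sim\Dpl}[v^I h_a(x)/\sqrt{a!}]$ explicitly, show it decomposes into a product over the samples, and then average over the Boolean vector $v$. The cleanest approach is via the multivariate Hermite generating function, which linearizes the Gaussian marginals of $\Dpl$.

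First, I would condition on $v$ and use the independence of the samples $x_1,\dots,x_m$ given $v$ to factor
\[
\E_{x\mid v}\bigl[h_a(x)\bigr] \;=\; \prod_{u=1}^m \E_{x_u\mid v}\bigl[h_{a_u}(x_u)\bigr].
\]
Thus it suffices to compute $\E[h_J(x_u)\mid v]$ for a single sample $x_u$ and a multi-index $J\in\N^n$. For this, I would use the generating identity $\exp\!\big(\langle x,t\rangle-\|t\|^2/2\big) = \sum_{J} h_J(x)t^J/J!$ and parametrize $x_u = x' - \langle x',v\rangle v + a v$ with $x'\sim\cN(0,\Id_n)$ and $a\sim A$. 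Writing $\langle x_u,t\rangle = \langle x',\,t-\langle v,t\rangle v\rangle + a\langle v,t\rangle$ and using $\|t-\langle v,t\rangle v\|^2 = \|t\|^2 - \langle v,t\rangle^2$, a direct computation of the Gaussian expectation in $x'$ collapses the exponent to
\[
\E_{x_u\mid v}\!\bigl[\exp(\langle x_u,t\rangle-\|t\|^2/2)\bigr] \;=\; \E_{a\sim A}\!\bigl[\exp(a\langle v,t\rangle - \langle v,t\rangle^2/2)\bigr] \;=\; \sum_{j\geq 0}\frac{\E_A[h_j]}{j!}\langle v,t\rangle^j.
\]
Expanding $\langle v,t\rangle^j$ by the multinomial theorem and matching coefficients of $t^J/J!$ yields $\E[h_J(x_u)\mid v] = \E_A[h_{\|J\|_1}]\,v^J$, which is the key identity.

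Combining across samples gives $\E[h_a(x)\mid v] = \bigl(\prod_u \E_A[h_{\|a_u\|_1}]\bigr)\,v^{I_a}$ where $I_a := \sum_u a_u$. Then averaging $v^I\cdot v^{I_a}=v^{I+I_a}$ over $v\sim\{\pm 1/\sqrt n\}^n$: because the coordinates are independent and $\E[v_i^{k}]$ equals $n^{-k/2}$ when $k$ is even and $0$ otherwise, one obtains
\[
\E_v\!\bigl[v^{I+I_a}\bigr] \;=\; n^{-(|I|+|a|)/2}\cdot\mathbf 1\!\bigl[I(i)+\textstyle\sum_u a_u(i)\text{ even }\forall i\bigr].
\]
Substituting back into \eqref{eq:calib2} and using $h_a/a! = \prod_u h_{a_u}/a_u!$ together with $\sqrt{a!}\cdot\sqrt{a!}=a!$ gives exactly the formula in the lemma. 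The truncation condition $\totalsize^I(a)\leq \truncation$ carries over verbatim from the definition.

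The only technically subtle step is the generating-function calculation showing $\E[h_J(x_u)\mid v]=\E_A[h_{\|J\|_1}]v^J$; everything else is bookkeeping. I do not expect this to be a real obstacle since the separation of $x$ into its $v$- and $v^\perp$-components decouples the Gaussian and $A$ contributions cleanly, but care is needed to track the $\|t\|^2/2$ shifts so that the residual expectation over $a$ is precisely the Hermite generating function for $A$ evaluated at $\langle v,t\rangle$, rather than at some rescaled argument.
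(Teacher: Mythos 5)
Your proof is correct and follows essentially the same route as the paper's: factor the conditional expectation over samples by independence, use the Hermite generating function $\exp(\langle x,t\rangle-\|t\|^2/2)=\sum_J h_J(x)t^J/J!$ together with the $v$/$v^\perp$ decomposition to derive the key identity $\E[h_J(x_u)\mid v]=\E_A[h_{|J|_1}]\,v^J$, then average $v^{I+I_a}$ over $v\sim\{\pm 1/\sqrt n\}^n$. The only cosmetic difference is that you parametrize $x_u=x'-\langle x',v\rangle v+av$ and extract coefficients via the multinomial theorem, whereas the paper splits $y=y_{v^\perp}+y_v$, $t=t_{v^\perp}+t_v$ and takes partial derivatives at $t=0$; these are the same computation.
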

\begin{proof}
	We calculate 
	$\mathop{E}_{(x,v)\sim\Dpl} [v^I h_{a}] = \E_{v} \left[v^I \cdot \E_{y\sim (N_{v^\perp}\times A_v)^m} [h_{a}(y)]\right]$ in \eqref{eq:calib2} fixing $a\in(\N^n)^m$. By independence, $\E_{y\sim (N_{v^\perp}\times A_v)^m}[h_{a}(y)]=\Prod_{u=1}^m\E_{y\sim N_{v^\perp}\times A_v}[h_{a_u}(y)]$. 
	
	We claim that $\E_{y\sim N_{v^\perp}\times A_v}[h_{\beta}(y)]=\E_{y\sim A}[h_{|\beta|}(y)]\cdot v^\beta$. To see this, consider the generating function for $n$-dimensional Hermite polynomials: $\Sum_{\beta\in\N^n}h_{\beta}(y){t^\beta\over\beta!}=\exp\left(\langle y,t\rangle-{\Vert t\Vert^2\over2}\right)$, $\forall y,t\in\R^n$. Fixing a $v\in\R^n$, write $y=y_{v^\perp}+y_v$, $t=t_{v^\perp}+t_v$, then
	\begin{equation}
		\begin{aligned}
			\Sum_{\beta\in\N^n}h_{\beta}(y){t^\beta\over\beta!}=\exp\left(\langle y_{v^\perp},t_{v^\perp}\rangle-{\Vert t_{v^\perp}\Vert^2\over2}\right)\cdot\exp\left(\langle y_{v},t_{v}\rangle-{\Vert t_{v}\Vert^2\over2}\right).
		\end{aligned}
	\end{equation}
	Taking an expectation with respect to $y$ drawn from  $N_{v^\perp}\times A_v$, the left hand side above equals $\Sum_{\beta\in\N^n}\left(\E_{y\sim N_{v^\perp}\times A_v}[h_{\beta}(y)]\right){t^\beta\over\beta!}$, and the right hand side is given by, 
	\[
    \underbrace{\E_{y_{v^\perp}\sim N_{v^\perp}}\left[\exp\left(\langle y_{v^\perp},t_{v^\perp}\rangle-{\Vert t_{v^\perp}\Vert^2\over2}\right)\right]}_{\text{constant 1}}
	\E_{y_v\sim A}\left[{\exp\left(\langle y_v,t_v\rangle-{\Vert t_v\Vert^2\over2}\right)}\right]=\Sum_{k\geq 0}\E_{y\sim A}[h_{k}(y_v)]{t_v^k\over k!} \;, 
    \] 
    where $y_v=\langle v,y\rangle$, $t_v=\langle v,t\rangle$. Taking ${\partial\over\partial t^\beta}\mid_{t=0}$ on both sides gives the claim. 
	
	The above claim implies  
	\begin{equation}
	\mathop{E}_{(x,v)\sim\Dpl} [v^I h_{a}]=\E_v\left[v^{I+\Sum_{u}a_u}\right]\Prod_{u=1}^m \E_{A}[h_{|a_u|}]=1_{\substack{\forall i\in[n]\left(\Sum_{u}a_u(i)+I(i)\text{ even}\right)}}\cdot n^{-{|I|+|a|\over2}}\Prod_{u=1}^m \E_{A}[h_{|a_u|}].
	\end{equation}
	Plugging this to \eqref{eq:calib2} we get \eqref{appeq:calib-calc}, where recall that $a!$ means $\Prod_{u=1}^ma_u!$.
\end{proof}
    \section{Omitted Proofs from \Cref{sec:psdness-qSS}}
\label{app:psdness-qSS}
We prove the map $\rhopre$ constructed in \Cref{def:rhopre} is a representation of $\SALD$; in fact, it is an isomorphism.
\begin{lemma}\label{lem:iso_app}
	For any fixed $D$, $\rhopre:\ \SALD\to\rhopre(\SALD)$ is an $\R$-algebra isomorphism.
\end{lemma}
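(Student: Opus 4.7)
My plan is to verify the three properties defining an $\R$-algebra isomorphism: unit and multiplication preservation, and injectivity (linearity is built into the definition of $\rhopre$). For the unit, I first identify $1_\SALD = \sum_{k=0}^D S(0,0;k)$ via direct computation from \eqref{eq:SALDprod} ($S(0,0;p+w) \SALDprod S(p,q;w) = S(p,q;w)$, and symmetrically on the other side). Substituting $k_1=k_2=u=k$ into \eqref{eq:rhopre} gives $\rhopre(S(0,0;k)) = \Id_{B(k,k)}$, so $\rhopre(1_\SALD)$ is the identity on each diagonal block and thus the multiplicative unit of the block-supported image $\rhopre(\SALD)$. For injectivity, each $\rhopre(S(k_1-u, k_2-u; u))$ is supported on the single block $B(k_1, k_2)$, so linear independence reduces to a check within each block. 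Consider the $(\min\{k_1,k_2\}+1)$-square matrix $T$ with $(u, u')$-entry $\rhopre(S(k_1-u', k_2-u'; u'))|_{B(k_1,k_2)}(k_1-u, k_2-u)$; by \eqref{eq:rhopre}, $T_{u,u} = 1/(k_2-u)! \neq 0$ and $T_{u,u'} = 0$ for $u > u'$ (the constraint $i \geq k_1-u'$ fails at $i = k_1-u$), so $T$ is upper-triangular with nonzero diagonal, proving linear independence.

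The main step is multiplicativity: showing $\rhopre(S(k_1-u, k_2-u; u)) \cdot \rhopre(S(k_2-v, k_3-v; v)) = \rhopre\big(S(k_1-u, k_2-u; u) \SALDprod S(k_2-v, k_3-v; v)\big)$ by comparing entries in block $B(k_1, k_3)$. On the left, the diagonal-strip support from \eqref{eq:rhopre} forces the unique inner index $l = i + k_2 - k_1$, giving entry $\binom{i}{k_1-u}\binom{i+k_2-k_1}{k_2-v}/\big((k_2-u)!(k_3-v)!\big)$ under the conditions $j = i + k_3 - k_1$ and $i \geq k_1 - \min\{u, v\}$. On the right, applying \eqref{eq:SALDprod} followed by \eqref{eq:rhopre} term-by-term yields a sum over $i'\in[\max\{0,u+v-k_2,k_1-i\},\min\{u,v\}]$ of $\binom{k_1-i'}{k_1-u}\binom{k_3-i'}{k_3-v}\binom{i}{k_1-i'}/\big((k_2+i'-u-v)!(k_3-i')!\big)$.

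After substituting $s := k_1 - i$, rewriting the factorial ratios $\binom{k_3-i'}{k_3-v}/(k_3-i')! = 1/\big((k_3-v)!(v-i')!\big)$ and $(k_2-u)!/\big((v-i')!(k_2+i'-u-v)!\big) = \binom{k_2-u}{v-i'}$, and invoking the subset-of-subsets identity $\binom{k_1-s}{k_1-i'}\binom{k_1-i'}{k_1-u} = \binom{k_1-s}{k_1-u}\binom{u-s}{i'-s}$, the desired equality reduces (after clearing the common factor $\binom{k_1-s}{k_1-u}/\big((k_2-u)!(k_3-v)!\big)$ and substituting $j := i'-s$) to
\[
\binom{k_2-s}{k_2-v} = \sum_{j=0}^{v-s}\binom{u-s}{j}\binom{k_2-u}{v-s-j},
\]
which is the Vandermonde--Chu convolution with parameters $m = u-s$, $n = k_2-u$, $r = v-s$. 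The main obstacle is the combinatorial bookkeeping: correctly tracking the index constraints on $i'$ (the lower bound $u+v-k_2$ being automatically absorbed by vanishing binomials) and finding the substitution ($s = k_1 - i$) under which the identity reduces cleanly to Vandermonde; once these are in place, the entry-wise equality is immediate.
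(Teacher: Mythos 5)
Your proof is correct and takes essentially the same route as the paper's: verify multiplicativity entry-by-entry on the unique block $B(k_1,k_3)$, reducing to a Chu--Vandermonde identity (the paper derives an equivalent form after a slightly different change of variables and proves it by a direct double-counting argument), and establish injectivity via a triangular structure in the images of the simple spiders (the paper equivalently picks the lexicographically minimal spider in a purported nonzero kernel element and exhibits a nonzero matrix entry). Your explicit verification that $\rhopre$ maps $1_{\SALD}=\sum_{k=0}^D S(0,0;k)$ to the block identity matrix is a nice addition that the paper leaves implicit.
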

\begin{proof}
We show that $\rhopre$ is a homomorphism and is injective. Then by the dimension counting under \Cref{def:linfty}, $\dim_\R(\SALD)=\Sum_{i=1}^{D+1} i^2$ which is the same as $\dim_{\R}\left(M_\R(1)\oplus\dots\oplus M_\R(D)\right)$, showing that $\rhopre$ is an isomorphism.\smallskip
	
\paragraph{$\rhopre$ is a homomorphism.} Fix any $S(k_1-u,k_2-u;u)$ and $S(k_2-v,k_3-v;v)$, we need to show: 
	\begin{equation}\label{eq:hom}
		\rhopre\left(S(k_1-u,k_2-u;u)\SALDprod S(k_2-v,k_3-v;v)\right)=\reppre{k_1}{k_2}{u}\cdot\reppre{k_2}{k_3}{v}.
	\end{equation}
	Here, recall $(k_1,k_2,k_3,u,v)$ satisfies the conditions
	\begin{equation}\label{eq:rhocond}
		0\leq k_1,k_2,k_3\leq D,\ u\leq\min\{k_1,k_2\},\ v\leq\min\{k_2,k_3\}.
	\end{equation}
	
	By the definition of $\SALDprod$, $S(k_1-u,k_2-u;u)\SALDprod S(k_2-v,k_3-v;v)$ is a sum of simple spiders each having left-, right-set size $k_1$, $k_3$, so the LHS of \eqref{eq:hom} is supported on block $B(k_1,k_3)$ by definition of $\rhopre$. The RHS of \eqref{eq:hom} is a product of two matrices supported on $B(k_1,k_2)$ and $B(k_2,k_3)$, so it is also supported on block $B(k_1,k_3)$. Moreover both sides are diagonal from bottom-right up on this block, so we can use $LHS(a)$ [$RHS(a)$] to denote the $(a,a+k_3-k_1)$-entry in this block for the LHS [RHS] of \eqref{eq:hom}, where the parameter $a$ ranges in $[0,k_1]$.
	
	For the LHS of \eqref{eq:hom}, from the definition of $\star$-product \eqref{eq:SALDprod} and linearity of $\rhopre$,
	\begin{align*}
		&\rhopre\left(S(k_1-u,k_2-u;u)\SALDprod S(k_2-v,k_3-v;v)\right)\\
		=&\sum_{i=\max\{0,u+v-k_2\}}^{\min\{u,v\}}\binom{k_1-i}{k_1-u}\binom{k_3-i}{k_3-v}/(k_2+i-u-v)!\cdot \reppre{k_1}{k_3}{i}
	\end{align*}
	so 
	\begin{equation}\label{eq:rhoLHS1}
		LHS(a)=\sum_{i=\max\{0,u+v-k_2\}}^{\min\{u,v\}}\binom{k_1-i}{k_1-u}\binom{k_3-i}{k_3-v}/(k_2+i-u-v)!\cdot \reppre{k_1}{k_3}{i}.
	\end{equation}
	To expand this expression, note that if $a<k_1-u$ then $a<k_1-i$ for all $i\leq u$, meaning that row $a$ is 0 for all $\rep{k_1}{k_3}{i}$ in \eqref{eq:rhoLHS1}. If $a\geq k_1-u$, then for row $a$ to be nonzero in $\rep{k_1}{k_3}{i}$ we need $a\geq k_1-i$, or equivalently, $i\geq k_1-a$ ($\geq 0$). So we have 
	\begin{align*}
		LHS(a)=\sum_{i=\max\{k_1-a,u+v-k_2\}}^{\min\{u,v\}}\binom{k_1-i}{k_1-u}\binom{k_3-i}{k_3-v}/(k_2+i-u-v)!\cdot \binom{a}{k_1-i}/(k_3-i)!
	\end{align*}
	\begin{align}\label{eq:rhoLHS}
		={a!\over(k_1-u)!(k_3-v)!}\sum_{i=\max\{k_1-a,u+v-k_2\}}^{\min\{u,v\}}{1\over(u-i)!(v-i)!(i+k_2-u-v)!(a+i-k_1)!}.
	\end{align}
	
	Now we look at the RHS of \eqref{eq:hom}, $\reppre{k_1}{k_2}{u}\cdot\reppre{k_2}{k_3}{v}$. Again, if $a<k_1-u$ then row $a$ is zero in $\reppre{k_1}{k_2}{u}$ and hence zero in this product. Otherwise, $a\geq k_1-u$, in which case by the definition of $\rhopre$ \eqref{eq:rho} we have: 
	\begin{equation}\label{eq:rhoRHS}
		RHS(a)={\binom{a}{k_1-u}\over(k_2-u)!}\cdot{\binom{a+k_2-k_1}{k_2-v}\over(k_3-v)!}.
	\end{equation}
	
	Our task is to show that \eqref{eq:rhoLHS} and \eqref{eq:rhoRHS} are equal. First, let's look at the range of the index in summation \eqref{eq:rhoLHS}. If $\min\{u,v\}<\max\{k_1-a,u+v-k_2\}$ then \eqref{eq:rhoLHS} is 0, while at the same time we have $v<k_1-a$ (this is because $u,v\geq u+v-k_2$ always holds by \eqref{eq:rhocond} and we already assumed $a\geq k_1-u$), which implies $\binom{a+k_2-k_1}{k_2-v}=0$ and so \eqref{eq:rhoRHS} is also 0. Therefore, we only need to prove \eqref{eq:rhoLHS} equals \eqref{eq:rhoRHS} in the case where $u,v,k_1,k_2$ additionally satisfies:
	\begin{equation}
		\min\{u,v\}\geq \max\{k_1-a,u+v-k_2\}.
	\end{equation}
	Expanding the binomials in \eqref{eq:rhoRHS} and canceling out terms with \eqref{eq:rhoLHS}, we're tasked to show 
	\begin{align*}
		&\sum_{i=\max\{k_1-a,u+v-k_2\}}^{\min\{u,v\}}{1\over(u-i)!(v-i)!(i+k_2-u-v)!(a+i-k_1)!}=\\
            &\hspace{9cm} {(a+k_2-k_1)!\over(a+u-k_1)!(k_2-v)!(a+v-k_1)!(k_2-u)!}.
	\end{align*}
	Rearranging terms, this is equivalent to showing
	\begin{align}\label{eq:rhofinal}
		\sum_{i=\max\{k_1-a,u+v-k_2\}}^{\min\{u,v\}}\binom{k_2-v}{u-i}\binom{a+v-k_1}{i+k_2-u-v}=&\binom{a+k_2-k_1}{k_2-u}.
	\end{align}
	To see \eqref{eq:rhofinal}, consider choosing a size $(k_2-u)$ subset from a size $(a+k_2-k_1)$ set $Z$. There are $\binom{a+k_2-k_1}{k_2-u}$ many ways. Counting it differently, we can first fix a partition $Z=Z_1\coprod Z_2$ with $|Z_1|=k_2-v$, $|Z_2|=a+v-k_1$ and then choose subsets $Y_\theta\subseteq Z_\theta$ ($\theta=1,2$) such that $|Y_1|+|Y_2|=k_2-u$. For $Y_1$ and $Y_2$ to exist, $|Y_1|$ must be at least $\max\{(k_2-u)-|T_2|,0\}=\max\{(k_1-a)+k_2-u-v,0\}$ and at most $\min\{|T_1|,k_2-u\}=\min\{k_2-v,k_2-u\}$. So if we let $i:=|U_1|-(k_2-u-v)$, then $i$ ranges from $\max\{k_1-a,u+v-k_2\}$ to $\min\{u,v\}$, giving the counting $\sum_{i=\max\{k_1-a,u+v-k_2\}}^{\min\{u,v\}}\binom{k_2-v}{u-i}\binom{a+v-k_1}{i+k_2-u-v}$. Comparing the two counts gives \eqref{eq:rhofinal}, and thus \eqref{eq:hom}.
	
\paragraph{Proof that $\rhopre$ is injective.} Order all simple spiders $\{S(i,j;u)\}$ by the lexicographical order on tuples $\{(i+u,j+u,i)\}$. Given a nontrivial linear combination $\sum_{i}c_i\cdot S_i$ where $S_i$s are distinct simple spiders and all appearing $c_i$s are nonzero, consider the minimum simple spider in it, which we assume is $S_1$ and has form $S(a,b;u)$. Then the $(a,b)$th entry in block $B(a+u,b+u)$ of $\rhopre(x)$ comes only from $c_1\rhopre(S_1)$, so it is nonzero. This shows that $\rhopre$ is injective.
\end{proof}

    \section{Omitted Proofs from \Cref{sec:applications}}
\label{app:applications}

\subsection{Technical Toolkit} 
\label{app:applications_toolkit}

In this section, we establish some technical lemmas required to prove bounds on $U_A(t)$ and $L_A(t)$. We use the convention that $0!=0^0=1$. Also, recall that $U_A(0)=L_A(0)=1$ from the definition.

We will need the following upper bound on the normal expectation of $x^{2t}$ on a neighborhood around $\infty$. 

\begin{fact}[See e.g. Lemma A.4 from \cite{KarKK19}] 
For any $L\geq 0$ and positive integer $t$, 
\label{fact:tail_bound}
\begin{align*}
\int_{\R \setminus [-L, L]} x^{2t} \exp\left(-x^2/2\right) dx \leq 2 \exp\left(-L^2/2\right)\left( L^{4t} + (16t)^t \right).
\end{align*}
\end{fact}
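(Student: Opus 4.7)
The plan is to reduce the two-sided integral to a one-sided one by symmetry and then convert it into an upper incomplete gamma function via the substitution $y = x^2/2$. Concretely, $\int_{\R\setminus[-L,L]} x^{2t} e^{-x^2/2}\,dx = 2\int_L^\infty x^{2t} e^{-x^2/2}\,dx$, and the substitution gives $2\int_L^\infty x^{2t} e^{-x^2/2}\,dx = 2^{t+1/2}\,\Gamma(t+\tfrac{1}{2},\, L^2/2)$, where $\Gamma(s,x) := \int_x^\infty y^{s-1}e^{-y}\,dy$. Once we have this, the task is reduced to proving the standard tail bound $\Gamma(t+\tfrac{1}{2}, L^2/2) \leq 2^{-t-1/2}\,e^{-L^2/2}\bigl(L^{4t} + (16t)^t\bigr)$, up to constants I will absorb.

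To control the incomplete gamma, I would shift the variable via $u = y - L^2/2$ to extract the decisive exponential factor: $\Gamma(t+\tfrac{1}{2}, L^2/2) = e^{-L^2/2}\int_0^\infty (u+L^2/2)^{t-1/2} e^{-u}\,du$. The two target terms $L^{4t}$ and $(16t)^t$ correspond to the two natural pieces of the binomial expansion of $(u + L^2/2)^{t-1/2}$, so I would apply the elementary inequality $(a+b)^r \leq 2^r(a^r + b^r)$ (valid for $a,b \geq 0$, $r\geq 0$). This splits the integral into $\int_0^\infty u^{t-1/2}e^{-u}\,du = \Gamma(t+\tfrac{1}{2}) \leq t! \leq t^t$ and $(L^2/2)^{t-1/2} \int_0^\infty e^{-u}\,du = 2(L^2/2)^{t-1/2}$. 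Collecting all factors of $2$ and using $2^{2t-1}t^t \leq (16t)^t$, the first piece contributes the $(16t)^t$ term of the claimed bound, while the second produces a term of order $2^{O(t)}L^{2t-1} e^{-L^2/2}$.

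The only point requiring care is that the second piece is $O(L^{2t-1})$, not $O(L^{4t})$; this is harmless but needs a brief case split. For $L \geq \sqrt{2}$ one has $2^{t-1/2}L^{2t-1} \leq L^{4t}$ directly (since $L^{2t+1} \geq 2^{t+1/2}$), so the bound follows immediately. For $L < \sqrt{2}$, I would bypass the incomplete-gamma estimate entirely and use the crude bound $\int_L^\infty x^{2t}e^{-x^2/2}\,dx \leq \int_0^\infty x^{2t}e^{-x^2/2}\,dx = \tfrac{\sqrt{2\pi}}{2}(2t-1)!! \leq \sqrt{\pi/2}\,(2t)^t$ and check that this is at most $e^{-L^2/2}(16t)^t$ for all $t \geq 1$ (using $(2t)^t = 8^{-t}(16t)^t$ and $e^{-L^2/2} \geq e^{-1}$ on this range). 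I don't expect any step to be an obstacle here: the whole argument is a careful accounting of constants, and the only genuine choice is the substitution $y = x^2/2$ together with the shift $u = y - L^2/2$, which are standard tools for Gaussian tail bounds of this form.
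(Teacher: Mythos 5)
The paper states this as a cited fact (attributing it to Lemma A.4 of \cite{KarKK19}) and does not provide its own proof, so there is no in-paper argument to compare against. Your proof is correct and self-contained: the substitution $y=x^2/2$ followed by the shift $u=y-L^2/2$ isolates the $e^{-L^2/2}$ factor cleanly, and the split via $(a+b)^r\leq 2^r(a^r+b^r)$ produces exactly the two desired terms, with the $L\geq\sqrt 2$ versus $L<\sqrt 2$ dichotomy patching the fact that the polynomial piece is of order $L^{2t-1}$ rather than $L^{4t}$. (Two harmless slips in the exposition, neither affecting correctness: the incomplete-gamma target you state in the first paragraph should read $2^{1/2-t}$ rather than $2^{-t-1/2}$, and the second piece of the binomial split integrates to $(L^2/2)^{t-1/2}$ rather than $2(L^2/2)^{t-1/2}$ --- the factors of $2$ you actually track in the final accounting are right.)
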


We have a pointwise upper bound on $p(x)^2$ and $\hn_i(x)$ as follows.

\begin{lemma}
\label{fact:poly_bounds}
For any $t\geq 0$, we have:  
\begin{enumerate}
    \item \label{item:hnipointwise} For all $i \leq t$ and all $x$, $\hn_i(x)^2 \leq 2t! \max\{1, x^{2t}\}$. 
    \item \label{item:ppointwise} If $p(x)$ is a degree $t$ unit-norm polynomial, then $p^2(x)\leq 2(t+1)! \max\{1,x^{2t}\}$ for all $x$. 
\end{enumerate}
\end{lemma}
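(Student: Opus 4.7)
The plan is a direct explicit calculation via the standard formula for the probabilist's Hermite polynomials. Recall that
\[
h_i(x) \;=\; i!\sum_{j=0}^{\lfloor i/2\rfloor} \frac{(-1)^j}{2^j\,j!\,(i-2j)!}\, x^{i-2j}.
\]
For item \ref{item:hnipointwise}, I would bound the magnitude of $h_i(x)$ by passing absolute values into the sum: since $|x|^{i-2j}\leq \max\{1,|x|^i\}$ for every $j\in[0,\lfloor i/2\rfloor]$, we get
\[
|h_i(x)| \;\leq\; \max\{1,|x|^i\}\cdot i!\sum_{j=0}^{\lfloor i/2\rfloor}\frac{1}{2^j\, j!\,(i-2j)!} \;=\; T(i)\,\max\{1,|x|^i\},
\]
where $T(i)$ is the $i$th involution (telephone) number. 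Dividing by $\sqrt{i!}$ and squaring gives $\hn_i(x)^2\leq (T(i)^2/i!)\,\max\{1,x^{2i}\}$. The key combinatorial observation is that every involution is a permutation, so $T(i)\leq i!$; therefore $T(i)^2/i!\leq i!\leq t!$ when $i\leq t$. Combined with $\max\{1,x^{2i}\}\leq\max\{1,x^{2t}\}$ for $i\leq t$, this yields $\hn_i(x)^2\leq t!\max\{1,x^{2t}\}$, which is (slightly) stronger than the stated bound.

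For item \ref{item:ppointwise}, I would use that the normalized Hermite polynomials $\{\hn_0,\dots,\hn_t\}$ form an orthonormal basis of the space of polynomials of degree $\leq t$ with respect to the inner product $\langle f,g\rangle = \E_{x\sim\cN(0,1)}[f(x)g(x)]$. Writing $p(x)=\sum_{i=0}^t a_i\,\hn_i(x)$, the unit-norm hypothesis gives $\sum_{i=0}^t a_i^2 = 1$. By Cauchy--Schwarz,
\[
p(x)^2 \;\leq\; \Bigl(\sum_{i=0}^t a_i^2\Bigr)\Bigl(\sum_{i=0}^t \hn_i(x)^2\Bigr) \;=\; \sum_{i=0}^t \hn_i(x)^2,
\]
and applying item \ref{item:hnipointwise} to each summand yields $p(x)^2\leq (t+1)\cdot 2\,t!\,\max\{1,x^{2t}\} = 2(t+1)!\,\max\{1,x^{2t}\}$.

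There is no real obstacle here: the argument is a routine combinatorial/coefficient estimate plus an orthonormal basis expansion. The only small point worth being careful about is the bound $T(i)\leq i!$ (to keep the constants clean), which follows trivially from the fact that involutions are permutations. The factor of $2$ on the right-hand sides is slack, which makes the inequalities a bit easier to state without tracking whether $|x|\leq 1$ or $|x|\geq 1$.
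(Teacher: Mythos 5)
Your proof is correct. For item~\ref{item:ppointwise} your argument is identical to the paper's: expand $p$ in the orthonormal basis $\{\hn_0,\dots,\hn_t\}$, apply Cauchy--Schwarz with $\sum_i a_i^2 = 1$, and invoke item~\ref{item:hnipointwise}. For item~\ref{item:hnipointwise} you take a slightly different route: the paper applies Cauchy--Schwarz to the coefficient expansion of $h_i$ (splitting the summand as $a_j = (-1)^j 2^{-j} x^{i-2j}$ and $b_j = 1/(j!(i-2j)!)$), whereas you take absolute values directly and recognize the total magnitude of the coefficients of $h_i$ as the $i$th telephone number $T(i) = i!\sum_j 2^{-j}/(j!(i-2j)!)$, then use the clean combinatorial bound $T(i)\leq i!$ (involutions are permutations). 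Both are coefficient-magnitude estimates and land the same $O(t!)$ factor, but yours is arguably tidier: it sidesteps the constant bookkeeping in the paper's Cauchy--Schwarz step (the paper's side claim that $\sum_j 1/(j!^2(i-2j)!^2)\leq 1$ actually fails for $i=2,3$, though the stated conclusion still holds with room to spare), and your route delivers the marginally sharper constant $t!$ in place of the stated $2t!$.
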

\begin{proof}
Recall that $\hn_i(x) = \sqrt{i!}\Sum_{j=0}^{\lfloor i/2 \rfloor} \frac{(-1)^j}{2^j j! (i-2j)!} x^{i-2j}$, so we have:
\begin{align}
\hn_i(x)^2 &= i!\left(\Sum_{j=0}^{\lfloor i/2 \rfloor} \frac{(-1)^j}{2^j j! (i-2j)!} x^{i-2j}\right)^2\\
&\leq i! \left( \Sum_{j=0}^{\lfloor i/2 \rfloor}(2^{-j})^2\cdot x^{2(i-2j)} \right) \left( \Sum_{j=0}^{\lfloor i/2 \rfloor} \frac{1}{j!^2 (i-2j)!^2}\right)\label{eq:C-S}\\
&\leq i! \Sum_{j=0}^{\lfloor i/2 \rfloor}2^{-2j}\max\{1,x^{2i}\}\\
&\leq i!\cdot 2\max\{1,x^{2i}\}\\
&\leq 2t!\max\{1,x^{2t}\}
\end{align}
where \eqref{eq:C-S} is again by Cauchy-Schwarz and the fact that $\frac{1}{j!(i-2j)!}\leq \lfloor 2/i \rfloor$. \Cref{item:hnipointwise} follows. 

For \Cref{item:ppointwise}, $p(x)=\Sum_{i=0}^t c_i \hn_i(x)$ where $\Sum_i c_i^2 = 1$, as $\{\hn_i\}$ is a $l_2$-orthonormal basis. By Cauchy-Schwarz, $p^2(x)= \left(\Sum_{i} c_i \hn_i(x)\right)^2\leq (\Sum_i c_i^2)(\Sum_{i} {\hn_i}(x)^2)\leq (t+1)~\max\limits_{0\leq i\leq t} \hn_i(x)^2$. The conclusion follows from \Cref{item:hnipointwise}.
\end{proof}

Below, we use \Cref{fact:tail_bound}, \Cref{fact:poly_bounds} and \Cref{fact:unit_norm_lower_bound_gen} to prove bounds on $L_A(t)$ and $U_A(t)$. The first one is a lower bound on $L_A(t)$ in terms of the likelihood ratio on a large interval around 0.

\aroundzero*

\begin{proof}
By \Cref{fact:poly_bounds} and \Cref{fact:tail_bound}, for any $L\geq 0$ we have  
\begin{align*}
\int_{\R\backslash[-L,L]} p^2(x) g(x) dx &\leq 2(t+1)!~\int_{\R\backslash[-L,L]} \max \{1, x^{2t} \} g(x) dx\\
&\leq 8 t^t~\exp(-L^2/2) (L^{4t} + (16t)^t),
\end{align*}
where the last inequality used $(t+1)! \leq 2t^t$ for all $t\geq 1$. Then we can apply \Cref{fact:unit_norm_lower_bound_gen} with $I_1:=\{x\in\R\mid |x|>L\}$ and $I_2:=\R\backslash I_1$, to obtain that for any $L\geq 0$, 
\begin{align}\label{eq:around0}
\E_{x\sim A}[p^2(x)] &\geq \min_{|x| \leq L}\left\{\frac{q(x)}{g(x)}\right\}
\Big(1 - 8\exp\left(-L^2/2\right)\cdot\left(L^{4t}t^t +16^t t^{2t}\right)\Big).
\end{align}
Now set 
\begin{equation}\label{eq:set-L}
L:=10\sqrt{t\ln (t+1)}.
\end{equation}
Since $L>\sqrt{t}$ and $L>1$, we have $L^{4t}t^t + 16^t t^{2t} \leq L^{6t} + 16^tL^{4t} \leq 17^t L^{6t}$, so in \eqref{eq:around0}, 
\begin{equation}\label{eq:L-estimate}
\exp\left(-L^2/2\right)\cdot\left(L^{4t}t^t +16^t t^{2t}\right)\leq \exp\big(\! - \! L^2/2 + (6\ln L + \ln 17)t\big) < \exp(-4),
\end{equation}
where the last inequality follows by plugging in $L=10\sqrt{t\ln(t+1)}$. 
Since $\exp(-4)<1/16$, plug the bound \eqref{eq:L-estimate} into \eqref{eq:around0} and we get the conclusion.
\end{proof}

The next one is an upper bound on $U_A(t)$. 

\genericUA*
\begin{proof}
Recall that $U_A(t):=\max\limits_{i\leq t}\left\vert\E\limits_A\ [h_i(x)]\right\vert$. An application of Cauchy-Schwarz implies 
\[
|\E_{A}[h_i(x)]| \leq \E_{A}[h_i(x)^2]^{1/2}.
\]
Using $\sqrt{a + b} \leq \sqrt a + \sqrt b$ for all $a, b \in \R_+$, we can continue the estimate by 
\[
|\E_{A}[h_i(x)]| \leq \E_{A}[h_i(x)^2 \mathbf{1}(x \in [-B, B])]^{1/2} +\E_{A}[h_i(x)^2 \mathbf{1}(x \notin [-B, B])]^{1/2}.
\]
Applying \Cref{fact:poly_bounds} to $h_i(x)=\sqrt{i!}\hn_i$, we further have 
\begin{align} 
|\E_{A}[h_i(x)]| &\leq 2t^t~\left( \E_{A}[\max \{1, x^{2t} \} \mathbf{1}(x \in [-B, B])]^{1/2} + \E_{A}[\max \{1, x^{2t} \}  \mathbf{1}(x \notin [-B, B])]^{1/2} \right)\nonumber\\
&\leq 2t^t~\left( B^t +  \E_{A}[x^{2t}\mathbf{1}(x \notin [-B, B])]^{1/2} \right)
\label{eq:UAt_split}
\end{align}
where the last step used $B>1$. We now focus on $\E_{A}[x^{2t} \mathbf{1}(x \notin [-B, B])]^{1/2}$:
\begin{align*}
&\E_{A}[x^{2t} \mathbf{1}(x \notin [-B, B])]& \\
\leq & 2 \int_{x \geq B} x^{2t} \exp(-(x-C)^2/2\sigma^2)~dx
&\tag*{(Hypothesis)}\\
= &2 \sigma \int_{\sigma x+C \geq B} (\sigma x+C)^{2t} \exp(-x^2/2)~dx
&\tag*{(Change of variables)}\\
\leq  
& \sigma \cdot (2C)^{2t} + \sigma \cdot (2\sigma)^{2t}\int_{x \geq (B-C)/\sigma} x^{2t} \exp(-x^2/2)~dx 
&\tag*{($ (a+b)^{2t} \leq 2^{2t-1}(a^{2t} + b^{2t})$)}\\
\leq & \sigma \cdot (2C)^{2t} + \sigma \cdot (2\sigma)^{2t} \exp(-(B-C)^2/2\sigma^2) \left((\frac{B-C}{\sigma})^{4t} + (16t)^t\right)
&\tag*{(\Cref{fact:tail_bound})}\\
\leq & 16^t~( \sigma C^{2t} + (B-C)^{4t} / \sigma^{2t-1} + \sigma^{2t+1} t^{t})
&\tag*{($\exp(-|x|)\leq 1$)}\\
\leq & 16^{t}~(\sigma C^{2t} + B^{4t} + \sigma^{2t+1}t^{t}).
&\tag*{($\sigma>1$, $B\geq C > 0$)}
\end{align*}
Taking term-wise square roots, then plugging it into \eqref{eq:UAt_split}, we get
\[ 
U_A(t) \leq (8t)^t \left( \sigma C^t + 2 B^{2t} + \sigma^{t+1} t^{t/2}\right).\qedhere
\]
\end{proof}

Finally, we record the following bounds on $U_A(t)$ and $L_A(t)$, for distributions $A$ that is typically a mixture of a small number of Gaussians with bounded means and variances.

\gaussianlb*

\begin{proof}
    For $U_A(t)$, by the definition of $A$ we have $U_A(t)\leq U_{\cN(\mu,\sigma^2)}(t) + U_{E}(t)$. To each term, we apply \Cref{cor:generic_U_A}: 
    for $\cN(\mu,\sigma)$, we let $C\leftarrow |\mu|$ and $B\leftarrow \max\{1,|\mu|\}$; 
    for $E$, we let $C\leftarrow |\mu'|$, $B\leftarrow B$, and note that its pdf on $\R\backslash[-B,B]$ incurs an additional factor of $2$. As a result, 
    \begin{align*}
    U_A(t) &\leq (8t)^t \cdot \left(\left(\sigma |\mu|^t + 2 \max\{1,|\mu|\}^{2t} + \sigma^{t+1}t^{t/2}\right) + 2\cdot \left(\sigma' |\mu'|^t + 2 B^{2t} + \sigma^{t+1}t^{t/2}\right)\right)\\
    &\leq O\Big(t\cdot(|\mu|+|\mu'|+1)\cdot(\sigma+\sigma'+1)\cdot B\Big)^{2t},\ \ \forall t\geq 1.
    \end{align*}

    For $L_A(t)$, from the assumption we have that $\E\limits_{x\sim A} [p^2(x)]\geq \alpha \E\limits_{x\sim\cN(\mu,\sigma^2)} [p^2(x)]$, so we only need to lower bound the latter. Denote by $q(x)$ the pdf of $\cN(\mu,\sigma^2)$, then for any $L\geq 0$, 
\begin{equation}\label{eq:q-over-g}
\begin{aligned}
\min_{|x| \leq L} \frac{q(x)}{g(x)} &= \frac{1}{\sigma} \exp\left(-(L+|\mu|)^2/2\sigma^2 + L^2/2\right) \\
&\geq \frac{1}{\sigma} \exp\left(-(L^2+\mu^2)/\sigma^2\right).
\end{aligned}
\end{equation}
We choose $L: = 10\sqrt{t\ln (t+1)}$ in \eqref{eq:q-over-g} and then use \Cref{cor:aroundzero}, which gives:  
\[
L_A(t)\geq \frac{\alpha}{2\sigma}\cdot 
\exp\left(-\ \frac{O(t\ln (t+1))+\mu^2}{\sigma^2}\right),\ \ \forall t\geq 1. \qedhere
\]
\end{proof}

\subsection{Bounds on $U_A$ and $L_A$ for Specific Distributions}
\label{app:ulbounds} 
In this section, we prove bounds on $U_A$ and $L_A$ for the specific choices of $A$ from \Cref{sec:applications}. We start with $A_{\text{RME}}$.

\polyperturb*

\begin{proof}
For the upper bound on $U_{A_{\text{RME}-\Id}}(t)$, we apply \Cref{cor:generic_U_A} with $C: = \tau$, $B: = \sqrt{\ln(1/\tau)} - \tau$ and $\sigma: = 1$, which gives that 
\[
U_{A_{\text{RME}-\Id}}(t) \leq (8t)^t(B^t + 2B^{2t} + t^{t/2}) \leq (8t^t)(4B^2t)^t = (32B^2t)^t < (32t\ln\frac{1}{\tau})^t,\ \ \forall t\geq 1.
\] 

For the lower bound on $L_{A_{\text{RME}-\Id}}(t)$ where $t\geq 1$, we apply \Cref{cor:aroundzero} with $L: = 10\sqrt{t\ln (t + 1)}$. This gives $L_{A_{\text{RME}-\Id}}(t) \geq \min\limits_{|x| \leq L} \frac{A(x)}{2g(x)}$. 
We further lower bound $\frac{A(x)}{g(x)}$ for $x\in[-L,L]$ as follows. 
\begin{enumerate}
    \item If $x \notin [-B, B]$, then $\frac{A(x)}{g(x)} 
    = \frac{g(x-\xi)}{g(x)}=\exp(\frac{x^2}{2} - \frac{(x-\xi)^2}{2}) 
    = \exp( x\xi - \frac{\xi^2}{2})$. Since $x\in [-L,L]$, this expression is at least $\exp(- L\xi - \frac{\xi^2}{2}) \geq 
    \exp\left( -10\tau\sqrt{t\ln(t+1)} -1 \right)$. 
    \item If $x \in [-B, B]$, then $\frac{A(x)}{g(x)}
    = \frac{g(x-\xi) + q(x)}{g(x)}
    \geq \sqrt{\frac{\xi}{2\pi}} - 3\xi \sqrt{\ln\frac{1}{\xi}}$ by property \ref{cond:g-q} of $q(x)$ in \Cref{lem:poly_perturb_A}. 
    Here, $\sqrt{\xi/(2\pi)}\geq 2\xi$ if $\xi\leq \frac{1}{8\pi}\approx \frac{1}{25.13}$, so $\xi < \frac{1}{26} \Rightarrow \frac{A(x)}{g(x)}\geq\xi$ for all $x\in[-B,B]$.
\end{enumerate} 
In both cases, if $\tau < \xi <\frac{1}{26}$ then $\min\limits_{|x|\leq L} \frac{A(x)}{2g(x)} \geq \tau\cdot\exp\left(-10\xi\sqrt{t\ln(t+1)}\right)$. The conclusion follows.
\end{proof}

\ulboundsmom*
\begin{proof}
Let $g(x)$ be the pdf of $\cN(0,1)$. Recall that $A(x)=(1-\tau) g(x-\delta)+\tau g(x-\delta') + p(x)\pmb{1}_{[-1,1]}(x)$, where $\tau = O(k-1)^{-(k-1)} \leq \frac{1}{2}$, $\delta=\frac{1}{2000(k-1)}\tau^{1-\frac{1}{k-1}}$ and $\delta'=-\frac{1-\tau}{\tau}\delta = O(1)$. (Although it will be irrelevant for the proof, we also recall that $k$ is a constant fixed beforehand.) 

For $U_{A_\text{RME-$t$-Mom}}$, we apply \Cref{cor:generic_U_A} with $B=C=\max\{1,\delta'\}=O(1)$, $\sigma= 1$. As a result, $U_{A_\text{RME-$t$-Mom}}(i)=O(i)^i$.

For $L_{A_\text{RME-$t$-Mom}}$, by \Cref{cor:aroundzero} we have $L_{A_\text{RME-$t$-Mom}} \geq \min\limits_{|x| \leq L}\frac{A(x)}{2g(x)}$ for $L:=10\sqrt{i\ln(i+1)}\geq 1$. We now lower bound $\frac{A(x)}{g(x)}$ in the range $[-L,L]$: 
\begin{enumerate}
    \item If $x \notin [-1, 1]$, then $\frac{A(x)}{g(x)}=\frac{(1-\tau)g(x-\delta) + \tau g(x-\delta')}{g(x)} = (1-\tau)\exp(x\delta - \frac{\delta^2}{2}) + \tau \exp(x\delta' - \frac{\delta'^2}{2})$. Since $x\in[-L,L]$, this is lower bounded by $\exp(-O(L))$. 
    \item If $x \in [-1, 1]$, then $\frac{A(x)}{g(x)} = \frac{(1-\tau)g(x-\delta) + \tau g(x-\delta') + p(x)}{g(x)} 
    \geq (1-\tau)\exp(x\delta-\frac{\delta^2}{2}) - \frac{0.1}{\sqrt{2\pi}}e$. Here, $(1-\tau)\exp(x\delta-\frac{\delta^2}{2})\geq \frac{1}{2}\exp(-2\delta) \geq \exp(-0.001)/2$, so $\frac{A(x)}{g(x)} > \frac{1}{3}$. 
\end{enumerate}
Together, we have that $L_{A_\text{RME-$t$-Mom}}(i) \geq \exp\left(-O(\sqrt{i\ln(i+1)})\right)$.
\end{proof}

\acovclose*

\begin{proof}
By definition, outside the interval $[-B, B]$ it holds $A_{\text{COV-close}}(x)\leq O(1) \exp(-\frac{x^2}{2(1-\delta)^2})$. So by an application of \Cref{cor:generic_U_A} and using $1\ll B \ll (\log\frac{1}{\delta})^{1/2}$, we have 
\[
U_A(t) \leq O\left((8Bt)^{2t}\right)\leq O(t^2\ln\frac{1}{\delta})^{t},\ \ \forall t\geq 1.
\]

For the lower bound on $L_A(t)$, \Cref{cor:aroundzero} implies that $L_A(t) \geq \frac{1}{2}\min_{|x| \leq L} \frac{A(x)}{g(x)}$, where $L:= 10\sqrt{t \ln (t + 1)}$. We now lower bound $\frac{A(x)}{g(x)}$ for $x\in[-L,L]$. 
\begin{enumerate}
    \item If $x \notin [-B, B]$, then the ratio is $\frac{g(x/(1-\delta)}{(1-\delta)g(x)} \geq \exp(\frac {x^2}{2} 
    (1-\frac{1}{(1-\delta)^2})) = \exp( - O(\delta x^2) )$. In the range $[-L,L]$ this is lower bounded by $\exp\left(-O\big(\delta t\ln(t+1)\big)\right)$.
    \item If $x \in [-B, B]$, then the ratio is $\frac{g(x/(1-\delta))/(1-\delta)-q(x)}{g(x)}$. Recall that $|q(x)|\leq 10\delta k^4 B^{-3}$ for $x\in[-B,B]$ and $\delta<1/4$. 
    By choosing in $k\ll B \ll (\log \frac{1}{\delta})^{1/2}$ the absolute constants to be sufficiently large, we have that for $x\in[-B,B]$, $\frac{g(x/(1-\delta))}{(1-\delta)} \geq 
    \frac{1}{(1-\delta)\sqrt{2\pi}}\exp(\frac{\log\delta}{2}) > 20\delta k^4 B^{-3}$. Consequently, $\frac{A(x)}{g(x)}\geq\frac{1}{2} \frac{g(x/(1-\delta)}{(1-\delta)g(x)}$ for $x\in[-B,B]$, and this is further lower bounded by $\exp(-O(\delta B^2)) = \exp(-O(\delta\ln\frac{1}{\delta}))$ by the same calculus as in the case above.
\end{enumerate}
The lower bound on $L_A(t)$ then follows by taking the minimum of the two, divided by two. 
\end{proof}

\begin{section}{Sketch for deriving an SoS lower bound for Sherrington-Kirkpatrick from approximate parallel pancakes}
\label{app:sos-sherrington-kirkpatrick}
A natural question is whether an SoS lower bound for approximate parallel pancakes can be used to derive an SoS lower bound for Sherrington-Kirkpatrick. There are two reasons why this is desirable. The main reason is that even though there is an SoS lower bound, the exact parallel pancakes problem is actually easy. A second reason is that we only have an SoS lower bound for the exact parallel pancakes when $m \leq n^{\frac{3}{2} - \epsilon}$ whereas our lower bound for approximate parallel pancakes applies when $m \leq n^{2 - \epsilon}$.

While intuitively, the answer is yes, obtaining such a derivation is subtle. In this appendix, we sketch an approach for such a derivation. For this approach, we use the same setup as \cite{ghosh2020sum} which in turn was based on the reduction from the Boolean vector in a random subspace problem to Sherrington-Kirkpatrick by \cite{MRX2020Lifting}. The setup is as follows.
\begin{enumerate}
    \item Observe that the span of the top $d$ eigenvectors of $M$ is a random $d$-dimensional subspace of $\mathbb{R}^n$. This means that we can represent the span of the top $d$ eigenvectors of $M_{SK}$ as the rows of an $d \times n$ matrix $U$ whose entries are independently drawn from $N(0,1)$. For each $i \in [n]$, we take $u_i \in \mathbb{R}^n$ to be the vector with coordinates $(u_i)_j = U_{ij}$.
    \item We take the problem variables to be the coefficients $v_1,\ldots,v_d$ such that $x = \sum_{i=1}^{d}{{v_i}u_i}$. Equivalently, $x_j = \sum_{i=1}^{d}{{v_i}{U_{ij}}}$.
    \item We observe that if $x \in \mathbb{R}^n$ is in the span of the top $d$ eigenvectors of $M$ then $x^{T}M_{SK}x \geq \lambda_{d}||x||^2$ where $\lambda_d$ is the $d$th largest eigenvalue of $M_{SK}$. Moreover, SoS can prove this fact.
\end{enumerate}
To relate this to NGCA, we observe that if we take $e'_j \in \mathbb{R}^d$ to be the $j$th column of $U$ then $x_j = v \cdot e'_j$.

When $d \geq n^{\frac{2}{3} + \epsilon}$ where $\epsilon > 0$ and $n$ is sufficiently large, the SoS lower bound for parallel affine planes \cite{ghosh2020sum} gave pseudo-expectation values for $v$ satisfying the equations that for all $i \in [d]$, $v_i^2 = \frac{1}{d}$ and for all $j \in [n]$, $x_j^2 = (v \cdot e'_j)^2 = 1$. Thus, this gave pseudo-expectation values for $x$ satisfying the equation $x_j^2 = 1$ for all $j \in [n]$ such that $\tilde{E}[{x^T}Mx] \geq {\lambda_d}n \geq (2 - \delta(d,n))n^{\frac{3}{2}}$ where $\delta(d,n)$ is $o(1)$ (as a function of $n$) as long as $d$ is $o(n)$.

When $d \geq n^{\frac{1}{2} + \epsilon}$ where $\epsilon > 0$ and $n$ is sufficiently large, our SoS lower bound for NGCA gives pseudo-expectation values $\tilde{E}$ for $v$ satisfying the equations that for all $i \in [d]$, $v_i^2 = \frac{1}{d}$ and approximately satisfying the equations that for all $j \in [n]$, $x_j^2 = (v \cdot e'_j)^2 = 1$. In particular, if we take $A$ to be a mixture of two Gaussian distributions with standard deviation $\delta$ and means $-1$ and $1$ respectively then $\tilde{E}[(x_j^2 - 1)^{2k}]$ will be $O(\delta^{2k})$ where the constant is a function of $k$.

One way to handle this is as follows:
\begin{enumerate}
    \item Construct variables $x'_j$ which are polynomials in $v$ and the entries of $U$ such that ${x'}_j^2$ is even closer to $1$ than $x_j^2$. For example, if we take $x'_j = \frac{3x_j - x_j^3}{2}$ then if $x_j = \pm{1} + \Delta$ then 
    \[
    x'_j = \frac{3x_j - x_j^3}{2} = \frac{\pm{3} + 3\Delta \mp{1} - 3\Delta \mp{3}\Delta^2 - \Delta^3}{2} = \pm{1} - \frac{\pm{3}\Delta^2 + \Delta^3}{2}
    \]
    \item Observe that $||x' - x||^2$ is small so $|\tilde{E}[{x'}^T{M_{SK}}x'] - \tilde{E}[{x}^T{M_{SK}}x]|$ is small and thus $\tilde{E}[{x'}^T{M_{SK}}x'] \geq (2 - \delta'(d,n))n^{\frac{3}{2}}$ where $\delta'(d,n)$ is $o(1)$ (as a function of $n$) as long as $d$ is $o(n)$ and $\delta$ is $o(1)$.
    \item Letting $M$ be the moment matrix with the variables $\{x'_j: j \in [n]\}$, $M \succeq 0$ as these variables are polynomials of the solution variables $\{v_i: i \in [d]\}$ and the entries of $U$. That said, we need to divide the SoS degree by the degree of these polynomials. 
    \item To satisfy the constraints that ${x'_j}^2 = 1$ exactly, we need to adjust the moment matrix $M$. However, since we are only guaranteed that $M$ is PSD rather than positive definite, there is a danger that this adjustment will break the PSDness of $M$. To handle this, we can modify $M$ to obtain a moment matrix $M'$ which is positive definite rather than just positive semidefinite. Intuitively, we obtain this modification by randomizing each coordinate $x'_j$ with probability $\epsilon$. Note that this reduces the objective function by a factor of roughly $(1 - \epsilon)^2$

    After obtaining the modified moment matrix $M'$, we take our final moment matrix $M''$ so that $M''_{AB} = M'_{(A \setminus (A \cap B))(B \setminus (A \cap B))}$ so that the equations ${x'_j}^2 = 1$ are satisfied.

    We describe and analyze this adjustment in more detail in Section \ref{sec:momentmatrixadjustment} below.
\end{enumerate}
\begin{remark}
If $\delta \leq n^{-C\dsos}$ for a sufficiently large constant $C$ then we can skip the first step. That said, the first step allows us to take $\delta = \frac{1}{n^{c}}$ for any constant $c > 0$. 
\end{remark}
\subsection{Adjusting the moment matrix}\label{sec:momentmatrixadjustment}
\begin{definition}
Given $A,B \subseteq [n]$, the disjoint union $A \Delta B$ of $A$ and $B$ is $A \Delta B = (A \setminus B) \cup (B \setminus A)$.
\end{definition}
\begin{definition}
Given $S \subseteq [n]$ and $k \geq |S|$, we define $\Id_{S,k}$ to be the diagonal matrix such that $(\Id_{S,k})_{AA} = 1$ if $S \subseteq A$ 
and $|A| = k$ and $0$ otherwise. We define $\Id_k = \Id_{\emptyset,k}$.
\end{definition}
\begin{lemma}\label{lem:randomnessgivesslack}
If $M$ is a matrix indexed by sets $A,B \subseteq [n]$ of size at most $\dsos$ such that $M \succeq 0$, the diagonal entries of $M$ are at least $\frac{1}{2}$, and all entries of $M$ have magnitude at most $2$ then if we take $M'$ to be the matrix with entries $M'_{AB} = (1 - \epsilon)^{|A \Delta B|}M_{AB}$ for some $\epsilon > 0$ (which may depend on $\dsos$ and $n$), $M' \succeq \frac{\epsilon^{\dsos}}{8}\sum_{k=0}^{\dsos}{\frac{1}{(32{\dsos}n)^{\dsos - k}}\Id_k}$.
\end{lemma}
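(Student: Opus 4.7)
The plan is to decouple $M$ from the Schur kernel $H_{AB}:=(1-\epsilon)^{|A\Delta B|}$, reducing the problem to an intrinsic spectral lower bound on $H$ alone, and then to establish that bound via a Fourier-analytic factorization.

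First, observe that $M' = H \circ M$ (entry-wise Schur product), and $H$ admits the tensor product decomposition $H = \bigotimes_{j=1}^{n} h^{(j)}$ with each $h^{(j)} = \begin{pmatrix} 1 & 1-\epsilon \\ 1-\epsilon & 1 \end{pmatrix}$ having eigenvalues $\epsilon$ and $2-\epsilon$. Equivalently, letting $r_j \in \{\pm 1\}$ be biased random signs with $\E[r_j] = 1-\epsilon$, we have $M' = \E_r[D_r M D_r]$ where $(D_r)_{AA} = r^A := \prod_{j \in A} r_j$, from which $M' \succeq 0$ follows immediately.

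The key reduction is that it suffices to prove the matrix inequality $H \succeq 2L$, a statement that does not involve $M$ at all. Indeed, if $H - 2L \succeq 0$, then the Schur product theorem yields $(H - 2L) \circ M \succeq 0$, so $H \circ M \succeq 2L \circ M$. Since $L$ is diagonal, $2L \circ M = \diag(2 L_{AA} M_{AA}) \succeq \diag(L_{AA}) = L$ (using $M_{AA} \geq 1/2$), and hence $M' = H \circ M \succeq L$.

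To prove $H \succeq 2L$, I would use the biased Fourier factorization $H = F^\top F$, where $F_{T,A} = (\epsilon(2-\epsilon))^{|T|/2}(1-\epsilon)^{|A|-|T|}$ when $T \subseteq A$ and $0$ otherwise. This arises from expanding $r^A = \sum_{T \subseteq A}(\epsilon(2-\epsilon))^{|T|/2}(1-\epsilon)^{|A|-|T|} g^T$ in the biased Fourier basis $\{g^T\}$, which is orthonormal under the biased product measure, so that $y^\top H y = \E_r\bigl[(\sum_A y_A r^A)^2\bigr] = \|F y\|^2$. Restricted to $|T| \leq \dsos$, $F$ is upper triangular (in the inclusion order) with diagonal entries $(\epsilon(2-\epsilon))^{|A|/2} \geq \epsilon^{|A|/2}$, and a Neumann-series estimate on its triangular inverse, together with a crude count of subsets containing a given $T$, yields per-level norm bounds of the form $\sigma_{\min}(F_{\leq k}) \gtrsim \epsilon^{k/2}/n^{k/2}$ for the row-restriction of $F$ to $|T| \leq k$. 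The main obstacle will be converting these per-level spectral estimates into the single matrix inequality $H \succeq 2L = \frac{\epsilon^\dsos}{4}\sum_{k=0}^{\dsos}\frac{1}{(32\dsos n)^{\dsos-k}}\Id_k$: the factor $32\dsos n$ per level is calibrated precisely to absorb the cross-level interactions that appear when one decomposes $\|Fy\|^2$ by levels of $y$ and must be handled by an inductive descent that tracks the triangular cancellations between $y_{\leq k}$ and $y_{>k}$.
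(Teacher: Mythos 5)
Your first step---the reduction to the $M$-free inequality $H \succeq 2L$---is correct and is a genuinely different route from the paper's. Writing $M' = H\circ M$ with $H_{AB}=(1-\epsilon)^{|A\Delta B|}$, the Schur product theorem gives $(H-2L)\circ M\succeq 0$ once $H-2L\succeq 0$, and since $L$ is diagonal, $2L\circ M=\diag(2L_{AA}M_{AA})\succeq L$ from $M_{AA}\geq 1/2$; note this never uses the entrywise bound $|M_{AB}|\leq 2$. The paper instead decomposes $M'=\sum_{S}(1-\alpha^2)^{|S|}M_S$ where $M_S=(u_S u_S^\top)\circ M$ with $(u_S)_A=\alpha^{|A\setminus S|}\mathbf{1}[S\subseteq A]$; since each $u_S u_S^\top$ is rank one, no spectral bound on it alone is informative, and the paper's rescaling trick (Lemma~\ref{lem:rescalingtrick}) necessarily leans on the entrywise bounds of $M$. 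Your decoupling is cleaner and, if closed, would prove a slightly stronger lemma.

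However, the proposal is not a proof: you have shifted the entire burden onto $H\succeq 2L$, and that is exactly the step you do not carry out. The obstacle you name is real and not routine. The biased Fourier factorization $H=F^\top F$ together with the crude Frobenius estimate $\norm{F^{-1}}_{\mathrm{Fr}}^2=\sum_{k=0}^{\dsos}\binom{n}{k}\big(\tfrac{1+(1-\epsilon)^2}{\epsilon(2-\epsilon)}\big)^k\lesssim \dsos(n/\epsilon)^{\dsos}$ only yields $\lambda_{\min}(H)\gtrsim(\epsilon/n)^{\dsos}$, which meets the level-$0$ diagonal of $2L$ but falls far short of the level-$\dsos$ diagonal $\epsilon^{\dsos}/4$. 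Indeed already for $\dsos=1$ one checks $\lambda_{\min}(H)\approx 2\epsilon/n \ll \epsilon/4$, so the naive route $H\succeq\lambda_{\min}(H)\Id\succeq 2L$ is false and the graded statement is genuinely needed. You gesture at the required level-by-level descent---$z=(\Id+N')y$ with $z_{\dsos}=y_{\dsos}$, absorbing the cross-level correction operators (of norm roughly $\sqrt{\dsos n}$ per level) into the geometric decay $(32\dsos n)^{-(\dsos-k)}$---and I expect this can be made to close, but you explicitly leave it as the main unresolved obstacle, so the key inequality remains unproved and the proposal as written does not establish the lemma.
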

\begin{proof}
Let $\alpha = 1 - \epsilon$. Given $S \subseteq [n]$, let $M_S$ be the matrix with entries $(M_S)_{AB} = {\alpha}^{|A \setminus S| + |B \setminus S|}M_{AB}$ if $S \subseteq A \cap B$ and $(M_S)_{AB} = 0$ otherwise. Observe that $M' = \sum_{S \subseteq [n]: |S| \leq \dsos}{(1 - \alpha^2)^{|S|}M_S}$. To see this, observe that for all $S' \subseteq [n]$, $1 = (\alpha^2 + (1 - \alpha^2))^{|S'|} = \sum_{S \subseteq S'}{{\alpha}^{2(|S'| - |S|)}(1 - \alpha^2)^{|S|}}$. Thus, for all $A,B \subseteq [n]$ of size at most $\dsos$, 
\begin{align*}
\sum_{S \subseteq [n]: |S| \leq \dsos}{(1 - \alpha^2)^{|S|}(M_S)_{AB}} &= \sum_{S \subseteq A \cap B}{{\alpha}^{|A \setminus S| + |B \setminus S|}(1 - \alpha^2)^{|S|}M_{AB}} \\
&= \alpha^{|A \Delta B|}\sum_{S \subseteq A \cap B}{{\alpha}^{2(|A \cap B| - |S|)}(1 - \alpha^2)^{|S|}M_{AB}} \\
&= \alpha^{|A \Delta B|}M_{AB} = M'_{AB}
\end{align*}
We need to extract a small multiple of the identity from $\sum_{S \subseteq [n]: |S| \leq \dsos}{(1 - \alpha^2)^{|S|}M_S}$. To do this, we use the following lemma.
\begin{lemma}\label{lem:rescalingtrick}
For all $S \subseteq [n]$ such that $|S| \leq \dsos$, $M_S  \succeq \frac{1}{4}\Id_{S,|S|} - \sum_{j = |S|+1}^{\dsos}{(4n)^{j}\Id_{S,|S| + j}}$.
\end{lemma}
\begin{proof}
We use the following trick. Let $M'_{S}$ be the matrix with entries $(M'_S)_{AB} = \frac{1}{2}(M_{S})_{AB}$ if $A = B = S$, $(M'_S)_{AB} = (M_{S})_{AB}$ if $S \subseteq A \cap B$ and either $A = S$ or $B = S$ but not both, $(M'_S)_{AB} = 2(M_{S})_{AB}$ if $S \subseteq A \cap B$, $A \neq S$, and $B \neq S$, and $(M'_S)_{AB} = 0$ otherwise.

Since $M_S \succeq 0$ and $M'_{S}$ is obtained from $M_S$ by multiplying the rows and columns with index $S$ by $\frac{1}{\sqrt{2}}$ and multiplying the rows and columns with index $A$ where $S \subsetneq A$ by $\sqrt{2}$, $M'_S \succeq 0$. Since $(M_{S})_{SS} \geq \frac{1}{2}$ and the entries of $M_S$ all have magnitude at most $2$, it is not hard to show that $M'_S - M_S - \frac{1}{4}\Id_{S,|S|} \preceq \sum_{j=1}^{\dsos - |S|}{(4n)^{j}\Id_{S,|S| + j}}$. Rearranging and using the fact that $M'_S \succeq 0$, $M_S  \succeq \frac{1}{4}\Id_{S,|S|} - \sum_{j = |S|+1}^{\dsos}{(4n)^{j}\Id_{S,|S| + j}}$, as needed.
\end{proof}
We now observe that 
\begin{align*}
\sum_{S \subseteq [n]: |S| \leq \dsos}{(1 - \alpha^2)^{|S|}M_S} &\succeq \epsilon^{\dsos}\sum_{S \subseteq [n]: |S| \leq \dsos}{\frac{1}{(32{\dsos}n)^{\dsos - |S|}}M_{S}}\\
&\succeq \epsilon^{\dsos}\sum_{S \subseteq [n]: |S| \leq D}{\frac{1}{(32{\dsos}n)^{\dsos - |S|}}\left(\frac{1}{4} - \sum_{S' \subsetneq S}{\left(\frac{4n}{32{\dsos}n}\right)^{|S| - |S'|}}\right)\Id_{S,|S|}}\\
&\succeq \epsilon^{\dsos}\sum_{S \subseteq [n]: |S| \leq D}{\frac{1}{8(32{\dsos}n)^{\dsos - |S|}}\Id_{S,|S|}} = \frac{\epsilon^{\dsos}}{8}\sum_{k=0}^{\dsos}{\frac{1}{(32{\dsos}n)^{\dsos - k}}\Id_k}. \qedhere
\end{align*}
\end{proof}
\begin{corollary}
Under the same setup as Lemma \ref{lem:randomnessgivesslack}, if $M''$ is a matrix indexed by subsets $A,B \subseteq [n]$ of size at most $\dsos$ such that for all $A,B \subseteq [n]$, $|M''_{AB} - M'_{AB}| \leq \frac{1}{16}\left(\frac{\epsilon}{32{\dsos}n}\right)^{D}$ then $M'' \succeq 0$.
\end{corollary}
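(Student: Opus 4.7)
The plan is to leverage the diagonal PSD lower bound $M' \succeq \frac{\epsilon^D}{8}\sum_{k=0}^D \frac{1}{(32Dn)^{D-k}}\Id_k$ from Lemma~\ref{lem:randomnessgivesslack}, crucially noting that the slack on rows/columns indexed by $A$ grows \emph{geometrically} with $|A|$. A naive operator-norm bound via $\norm{M''-M'}_{\op} \leq N \max_{A,B}\abs{M''_{AB}-M'_{AB}}$ is insufficient, because the matrix dimension $N = \sum_{k\le D}\binom{n}{k}$ can be as large as $\Theta(n^D)$, whereas the smallest eigenvalue of our lower bound---attained on the $|A|=0$ coordinate---is only $\frac{\epsilon^D}{8(32Dn)^D}$. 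There is a gap of roughly $n^D$ that needs to be closed.

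The key idea is a rescaling trick: define the diagonal matrix $W$ with $W_{AA} := \sqrt{8(32Dn)^{D-|A|}/\epsilon^D}$, chosen precisely so that $W M' W \succeq \Id$ by the lemma. The problem then reduces to showing $\norm{W(M''-M')W}_{\op} < 1$. A direct calculation gives
\[
\abs{(W(M''-M')W)_{AB}} \le W_{AA} W_{BB}\cdot\tfrac{1}{16}\bigl(\epsilon/(32Dn)\bigr)^D = \tfrac{1}{2}(32Dn)^{-(|A|+|B|)/2},
\]
in which the $\epsilon^D$ and $(32Dn)^D$ factors cancel on an entry-by-entry basis rather than globally. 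I then bound the Frobenius norm by
\[
\norm{W(M''-M')W}_F^2 \;\le\; \tfrac{1}{4}\Bigl(\sum_{A\subseteq [n],\,|A|\le D}(32Dn)^{-|A|}\Bigr)^{\!2} \;=\;\tfrac{1}{4}\Bigl(\sum_{k=0}^D \binom{n}{k}(32Dn)^{-k}\Bigr)^{\!2}.
\]
Since $\binom{n}{k}(32Dn)^{-k} \le (32D)^{-k}$, the inner sum is $1 + O(1/D) \le 33/32$, giving $\norm{W(M''-M')W}_F^2 < 1/3$. Hence $\norm{W(M''-M')W}_{\op} < 1$, so $WM''W \succeq WM'W - \norm{W(M''-M')W}_{\op}\Id \succ 0$, and therefore $M'' \succeq 0$.

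The main conceptual obstacle is recognizing that the uniform per-entry bound on $M''-M'$ hides a crucial inhomogeneity in the target PSD inequality. Directly attempting to absorb the perturbation into $\frac{\epsilon^D}{8(32Dn)^D}\Id$ fails by the factor $n^D$ coming from the matrix dimension; it is only after the $W$-rescaling that the slack available on larger-$|A|$ coordinates is ``spread'' across the matrix, so that the geometric decay $\binom{n}{k}/(32Dn)^k \le (32D)^{-k}$ tames the Frobenius sum independently of $n$. No other step is expected to be delicate: the lower bound from Lemma~\ref{lem:randomnessgivesslack} provides enough slack once it is used through the right coordinate change.
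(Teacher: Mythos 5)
Your proof is correct, and—since the paper states this corollary without proof—it supplies exactly the argument that the graded form of the lower bound in Lemma~\ref{lem:randomnessgivesslack} is asking for. The key observation is right: a naive operator-norm perturbation against the smallest eigenvalue $\frac{\epsilon^{\dsos}}{8(32{\dsos}n)^{\dsos}}$ loses a factor of order $n^{\dsos}$ because the matrix has $\Theta(n^{\dsos})$ rows, and the diagonal rescaling $W_{AA}=\sqrt{8(32{\dsos}n)^{\dsos-|A|}/\epsilon^{\dsos}}$ (equivalently, a weighted Cauchy--Schwarz on the quadratic form $v^{\top}(M''-M')v$) is what makes the entry-wise bound and the graded slack cancel coordinate by coordinate. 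After the rescaling, the Frobenius-norm sum $\sum_{k=0}^{\dsos}\binom{n}{k}(32{\dsos}n)^{-k}\le\sum_k(32{\dsos})^{-k}\le 33/32$ is indeed bounded independently of $n$ (the bound is tight at $\dsos=1$ and decreases for larger $\dsos$), giving $\Vert W(M''-M')W\Vert_{\mathrm{op}}\le\Vert W(M''-M')W\Vert_{F}<1$, so $WM''W\succ 0$ and hence $M''\succeq 0$ by congruence. One small implicit hypothesis worth making explicit is that $M''$ is symmetric (so that ``$\succeq 0$'' makes sense and the operator-norm comparison $WM''W\succeq \Id-\Vert W(M''-M')W\Vert_{\mathrm{op}}\Id$ is valid); this is automatic in the intended application where $M''_{AB}=M'_{(A\Delta B)\emptyset}$.
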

\end{section}
	
\end{document}